\algrenewcommand\algorithmicrequire{\textbf{Input:}}
\algrenewcommand\algorithmicensure{\textbf{Output:}}
\definecolor{light-gray1}{gray}{0.95}
\newcounter{examplecounter}
\newenvironment{example}{
    \refstepcounter{examplecounter}%
  
	\vspace{7pt}
	\noindent\textbf{Example \arabic{examplecounter}}%
  \quad
}{

\vspace{7pt}
%
}
\newcounter{remarkcounter}
\newenvironment{remark}{
    \refstepcounter{remarkcounter}%
  
	\vspace{7pt}
	\noindent\textbf{Remark \arabic{remarkcounter}}%
  \quad
}{

\vspace{7pt}
%
}
\newcommand{\tax}{\mathit{ax}}
\newcommand{\mo}{\mathcal{K}}
\newcommand{\mb}{\mathcal{B}}
\newcommand{\md}{\mathcal{D}}
\newcommand{\mc}{\mathcal{C}}
\newcommand{\Tp}{\mathit{P}}
\newcommand{\Tn}{\mathit{N}}
\newcommand{\tp}{\mathit{p}}
\newcommand{\tn}{\mathit{n}}
\newcommand{\ot}{\mathcal{K}^{*}}
\newcommand{\mD}{{\bf{D}}}
\newcommand{\allC}{{\bf{aC}}}
\newcommand{\minC}{{\bf{mC}}}
\newcommand{\minD}{{\bf{mD}}}
\newcommand{\allD}{{\bf{aD}}}
\newcommand{\dx}[1]{{\bf D}_{#1}^+}
\newcommand{\dnx}[1]{{\bf D}_{#1}^{-}}
\newcommand{\dz}[1]{{\bf D}_{#1}^0}
\newcommand{\qc}{\mathit{c}_q}
\newcommand{\uc}{c}
\newcommand{\ER}{{\mathit{ER}}}
\newcommand{\scHS}{{\textsc{HS}}}
\newcommand{\RQ}{{\mathit{R}}}
\newcommand{\SO}{{\mathbf{Sol}}}
\newcommand{\Queue}{{\mathsf{Queue}}}
\newcommand{\scQX}{{\textsc{QX}}}
\newcommand{\tr}{{\mathit{t}}}
\newcommand{\mQ}{{\bf{Q}}}
\newcommand{\all}{{\mathit{all}}}
\newcommand{\NHR}{{\mathit{NHR}}}
\newcommand{\LC}{{\mathit{LC}}}
\newcommand{\base}{{\mathit{B\! A\! S\! E}}}
\newcommand{\FP}{\mathbf{FP}}
\newcommand{\Pt}{\mathfrak{P}}
\newcommand{\Just}{\mathsf{Just}}
\newcommand{\DiscAx}{\mathsf{DiscAx}}
\newcommand{\adj}{\mathit{A\! F}}
\newtheorem{definition}{Definition}{}
\newtheorem{prob_def}{Problem Definition}{}
\newtheorem{proposition}{Proposition}{}
\newtheorem{lemma}{Lemma}{}
\newtheorem{corollary}{Corollary}{}
\newtheorem{property}{Property}
\newcommand{\subscript}[2]{$#1 _ #2$}
\begin{document}
\title{Towards Better Response Times and Higher-Quality Queries in Interactive Knowledge Base Debugging}

\author{Patrick Rodler \\
  \multicolumn{1}{p{.7\textwidth}}{\centering\emph{
  Alpen-Adria Universit\"at, Klagenfurt, 9020 Austria}} \\ 
	patrick.rodler@aau.at}







\maketitle

\begin{abstract}
Many artificial intelligence applications rely on knowledge about a relevant real-world domain that is encoded in a knowledge base (KB) by means of some logical knowledge representation language. The most essential benefit of such logical KBs is the opportunity to perform automatic reasoning to derive implicit knowledge or to answer complex queries about the modeled domain. The feasibility of meaningful reasoning requires a KB to meet some minimal quality criteria such as consistency or coherency. Without adequate tool assistance, the task of resolving such violated quality criteria in a KB can be extremely hard, especially when the problematic KB is complex, large in size, developed by multiple people or generated by means of some automatic systems. 

To this end, interactive KB debuggers have been introduced which solve soundness, completeness and scalability problems of non-interactive debugging systems. User interaction takes place in the form of queries asked to a person, e.g.\ a domain expert. A query is a number of (logical) statements and the user is asked whether these statements must or must not hold in the intended domain that should be modeled by the KB. To construct a query, a minimal set of two solution candidates, i.e.\ possible KB repairs, must be available. After the answer to a query is known, the search space for solutions is pruned. Iteration of this process until there is only a single solution candidate left yields a repaired KB which features exactly the semantics desired and expected by the user.

Existing interactive debuggers rely on 
strategies for query computation which 
often lead to the generation of many unnecessary query candidates and thus to a high number of expensive calls to logical reasoning services. Such an overhead can have a severe impact on the response time of the interactive debugger, i.e.\ the computation time between two consecutive queries. The actual problems of these approaches are (1)~the \emph{quantitative} nature of the query quality functions used to assess the goodness of queries (such functions provide more or less only a black-box to use in a trial-and-error search for acceptable queries), (2)~the necessity to actually \emph{compute} a query candidate in order to asses its goodness, (3)~the very expensive verification whether a candidate is a query, (4)~their inability to recognize candidates that are no queries before verification, (5)~the possibility of the generation of query duplicates and (6)~the strong dependence on the output of used reasoning services regarding the completeness and quality of the set of queries generated. 

To tackle these issues, we conduct an in-depth mathematical analysis of diverse quantitative active learning query selection measures published in literature in order to determine \emph{qualitative} criteria that make a query favorable from the viewpoint of a given measure. These qualitative criteria are our key to devise an efficient heuristic query computation process that solves all the mentioned problems of existing approaches. This proposed process involves a three-staged optimization of a query.

For the first stage, we introduce a new, theoretically well-founded and sound method for query generation that works completely \emph{without the use of logical reasoners}. This method is based on the notion of \emph{canonical queries}. Together with the developed heuristics, it enables to compute an (arbitrarily near to) optimal canonical query w.r.t.\ a given quality measure, e.g.\ information gain. For one canonical query, in general, multiple alternative queries with the same quality w.r.t.\ the given measure exist. 

To this end, we show that a hitting set tree search (second stage) can be employed to extract the best query among these w.r.t.\ additional criteria such as minimum cardinality or best understandability for the user. This search \emph{does not rely on logical reasoners} either. With existing methods, the extraction of such queries is not (reasonably) possible. They can just calculate \emph{any} subset-minimal query.

\emph{Consequently, this work for the first time proposes algorithms that enable a completely reasoner-free query generation for interactive KB debugging while at the same time guaranteeing optimality conditions of the generated query that existing methods cannot realize.}

In the third query optimization stage, which is optional, the one already computed query which optimizes a given quality measure and some of the additional criteria, can be enriched by further logical statements of simple and easily conceivable form and afterwards be minimized again. The reason of these optional steps, involving altogether only a polynomial number of reasoner calls, can be the simplification of the statements comprised in the query. The new approach we propose for accomplishing this improves the existing algorithm for query minimization insofar as it guarantees the finding of the query that is easiest to answer for the interacting user under plausible assumptions.

Furthermore, we study different relations between diverse active learning measures, e.g.\ superiority and equivalence relations. The obtained picture gives a hint about which measures are more favorable in which situation or which measures always lead to the same outcomes, based on given types of queries. 
\end{abstract}

\newpage
\tableofcontents
\clearpage
\listoffigures
\listoftables
\listofalgorithms
\addtocontents{loa}{\def\string\figurename{Algorithm}}
\newpage
\section{Introduction}
\label{sec:Introduction}

As nowadays artificial intelligence applications become more and more ubiquitous and undertake ever more critical tasks, e.g.\ in the medical domain or in autonomous vehicles, 
the quality of the underlying logical knowledge bases (KBs) becomes absolutely crucial. 
A concrete example of a repository containing KBs that are partly extensively used in practical applications is the Bioportal,\footnote{http://bioportal.bioontology.org} which comprises vast biomedical ontologies with tens or even hundreds of thousands of terms each. One of these huge ontologies is SNOMED-CT, the most comprehensive, multilingual clinical healthcare terminology in the world with over 316.000 terms which is currently used as a basis for diverse eHealth applications in more than fifty countries.\footnote{http://www.ihtsdo.org/snomed-ct} Such KBs pose a significant challenge for people as well as tools involved in their evolution, maintenance, application and quality assurance. 

Similar to programming languages which allow people to use syntax (instructions) to express a desired semantics, i.e.\ what the program should do, a KB consists of syntactical expressions (logical formulas) that should have an intended semantics, i.e.\ describe a domain of interest in a correct way. The quality of a KB is then usually measured in terms of how well the KB models the application domain. In more concrete terms, the quality criteria are constituted by a set of test cases (assertions that must and assertions that must not hold in the intended domain) and a set of general requirements (e.g.\ logical consistency). This is again an analogon to familiar practices known from the area of software engineering. There, test cases are exploited to verify the semantics of the program code, i.e.\ to check whether the program exhibits the expected behavior. A general quality requirement imposed on software might be that the code must be compilable, otherwise it is completely useless, and so is an inconsistent KB, which prohibits meaningful automatic reasoning to derive implicit knowledge or to answer queries about the modeled domain.

To cope with such faulty KBs (violating test cases and/or requirements), diverse systems for KB debugging \cite{Schlobach2007,Kalyanpur.Just.ISWC07,friedrich2005gdm,Horridge2008} have been developed. However, due to the complexity of the subproblems that must be solved by a KB debugger, e.g.\ the Hitting Set Problem \cite{karp1972} for localization of possible KB repairs or the SAT Problem \cite{karp1972} for reasoning and consistency checking (in case of a Propositional Logic KB, worse for more expressive logics, e.g.\ the Web Ontology Language OWL~2, for which reasoning is 2-$\textsc{NExpTime}$-complete~\cite{Grau2008a,Kazakov2008}), 
the usage of such non-interactive systems is often problematic. This is substantiated by \cite{Stuckenschmidt2008} where several (non-interactive) debugging systems were put to the test using faulty real-world KBs. The result was that most of the investigated systems had serious \emph{performance problems}, ran \emph{out of memory}, were not able to locate all the existing faults in the KB (\emph{incompleteness}), reported parts of a KB as faulty which actually were not faulty (\emph{unsoundness}), produced \emph{only trivial solutions} or suggested non-minimal faults (\emph{non-minimality}). 

Apart from these problems, a general problem of abductive inference (of which KB debugging is an instance \cite[p.~8]{Rodler2015phd}) is that there might be a potentially huge number of competing solutions or, respectively, explanations for the faultiness of a KB in KB debugging terms. These explanations are called \emph{diagnoses}. For example, in~\cite{ksgf2010} a sample study of real-world KBs revealed that the number of different (set-)minimal diagnoses might exceed thousand by far (1782 minimal diagnoses for a KB with only 1300 formulas). Without the option for a human (e.g.\ a domain expert) to take part actively in the debugging process to provide the debugging system with some guidance towards the correct diagnosis, the system cannot do any better than outputting the best (subset of all) possible solution(s) w.r.t.\ some criterion. An example of such a criterion is the maximal probability regarding some given, e.g.\ statistical, meta information about faults in the KB. 

However, any two solution KBs resulting from the KB repair using two different minimal diagnoses have different semantics in terms of entailed and non-entailed formulas \cite[Sec.~7.6]{Rodler2015phd}. Additionally, the available fault meta information might be misleading \cite{Rodler2013,Shchekotykhin2012} resulting in completely unreasonable solutions proposed by the system. 
Selecting a wrong diagnosis can lead to unexpected entailments or non-entailments, lost desired entailments and surprising future faults when the KB is further developed. Manual inspection of a large set of (minimal) diagnoses is time-consuming, error-prone, and often simply practically infeasible or computationally infeasible due to the complexity of diagnosis computation.

As a remedy for this dilemma, interactive KB debugging systems \cite{ksgf2010,Shchekotykhin2012,Rodler2013,Rodler2015phd,Shchekotykhin2014} were proposed. These aim at the gradual reduction of compliant minimal diagnoses by means of user interaction, thereby seeking to prevent the search tree for minimal diagnoses from exploding in size by performing regular pruning operations. Throughout an interactive debugging session, the user, usually a (group of) KB author(s) and/or domain expert(s), is asked a set of automatically chosen queries about the intended domain that should be modeled by a given faulty KB. A query can be created by the system after a set $\mD$ comprising a minimum of two minimal diagnoses has been precomputed (we call $\mD$ the \emph{leading diagnoses}). 
Each query is a set (i.e.\ a conjunction) of logical formulas that are entailed by some non-faulty subset of the KB. 
In particular, a set of logical formulas is a query iff any answer to it enables a \emph{search space restriction} (i.e.\ some diagnoses are ruled out) and guarantees the \emph{preservation of some solution} (i.e.\ not all diagnoses must be ruled out).
With regard to one particular query $Q$, any set of minimal diagnoses for the KB, in particular the set $\mD$ which has been utilized to generate $Q$, can be partitioned into three sets, the first one ($\dx{}$) including all diagnoses in $\mD$ compliant only with a positive answer to $Q$, the second ($\dnx{}$) including all diagnoses in $\mD$ compliant only with a negative answer to $Q$, and the third ($\dz{}$) including all diagnoses in $\mD$ compliant with both answers. A positive answer to $Q$ signalizes that the conjunction of formulas in $Q$ must be entailed by the correct KB why $Q$ is added to the set of positive test cases. Likewise, if the user negates $Q$, this is an indication that at least one formula in $Q$ must not be entailed by the correct KB. As a consequence, $Q$ is added to the set of negative test cases.

Assignment of a query $Q$ to either set of test cases results in a new debugging scenario. In this new scenario, all elements of $\dnx{}$ (and usually further diagnoses that have not yet been computed) are no longer minimal diagnoses given that $Q$ has been classified as a positive test case. Otherwise, all diagnoses in $\dx{}$ (and usually further diagnoses that have not yet been computed) are invalidated. In this vein, the successive reply to queries generated by the system will lead the user to the single minimal solution diagnosis that perfectly reflects their intended semantics. In other words, after deletion of all formulas in the solution diagnosis from the KB and the addition of the conjunction of all formulas in the specified positive test cases to the KB, the resulting KB meets all requirements and positive as well as negative test cases. In that, the added formulas contained in the positive test cases serve to replace the desired entailments that are broken due to the deletion of the solution diagnosis from the KB. 

Thence, in the interactive KB debugging scenario the user is not required to cope with the understanding of which faults (e.g.\ sources of inconsistency or implications of negative test cases) occur in the faulty initial KB, why these are faults (i.e.\ why particular entailments are given and others not) and how to repair them. All these tasks are undertaken by the interactive debugging system. The user is just required to answer queries whether certain assertions should or should not hold \emph{in the intended domain}.

The schema of an interactive debugging system is pictured by Figure~\ref{fig:interactive_debugging_workflow}. The system receives as input a \emph{diagnosis problem instance (DPI)} consisting of 
\begin{itemize}
	\item some KB $\mo$ formulated using some (monotonic) logical language $\mathcal{L}$ (every formula in $\mo$ might be correct or faulty),
	\item (optionally) some KB $\mb$ (over $\mathcal{L}$) formalizing some background knowledge relevant for the domain modeled by $\mo$ (such that $\mb$ and $\mo$ do not share any formulas; all formulas in $\mb$ are considered correct)
	\item a set of requirements $\RQ$ to the correct KB,
	\item sets of positive ($\Tp$) and negative ($\Tn$) test cases (over $\mathcal{L}$) asserting desired semantic properties of the correct KB
\end{itemize}
and (optionally) some fault information, e.g.\ in terms of fault probabilities of logical formulas in $\mo$.
Further on, a range of additional parameters $\mathsf{Tuning\;Params}$ might be provided to the system. These serve as a means to fine-tune the system's behavior in various aspects. For an explanation of these parameters the reader is referred to \cite{Rodler2015phd}. The system outputs a repaired solution KB which is built from a minimal diagnosis which has a sufficiently high probability where some predefined parameter in $\mathsf{Tuning\;Params}$ determines what ``sufficiently'' means. Note that we address debugging systems in this work which use the reasoning services provided by a logical reasoner in a black-box manner. That is, it is assumed that no influence can be taken on the way reasoning is performed which means in particular that no operations instrumental to the debugging task as such must be conducted by the reasoner (cf.\ \cite[p.~6]{Rodler2015phd} \cite{kalyanpur2005}).

The workflow implemented by existing interactive KB debugging systems illustrated by Figure~\ref{fig:interactive_debugging_workflow} is the following:
\begin{enumerate}
	\item A set of leading diagnoses $\mD$ is computed by the diagnosis engine (by means of the fault information, if available) using the logical reasoner and passed to the query generation module.
	\item The query generation module computes queries exploiting the set of leading diagnoses and delivers them to the query selection module.
	\item The query selection module is responsible for filtering out the best or a sufficiently good query according to some query quality measure (QQM) $m$ (often by means of the fault information, if available). This ``best query'' is shown to the interacting user.
	\item The user submits an answer to the query. (Alternatively, the user might reject the query and request surrogate queries as long as there are alternative queries left.)
	\item The query along with the given answer is used to formulate a new test case.
	\item This new test case is transferred back to the diagnosis engine and taken into account in prospective iterations. If some predefined (stop) criterion (e.g.\ sufficiently high probability of one leading diagnosis) is not met, another iteration starts at step 1. Otherwise, the solution KB $\ot$ constructed from the currently best minimal diagnosis w.r.t.\ the given criterion is output.
\end{enumerate}

\begin{figure*}[t]
	\centering
		\includegraphics[width=0.85\textwidth]{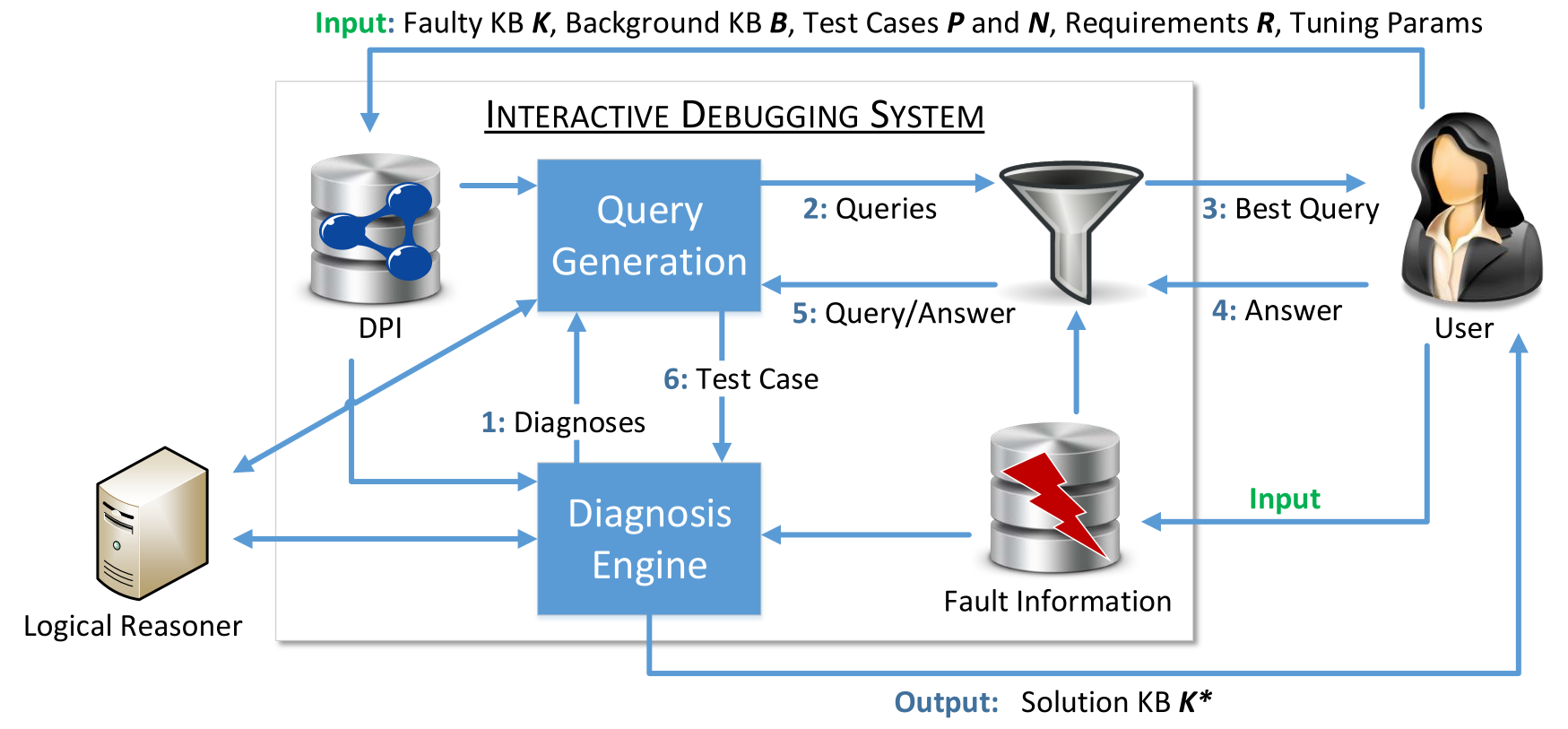}
	\caption[The Principle of Interactive KB Debugging]{The principle of interactive KB debugging \cite{Rodler2015phd}.}
	\label{fig:interactive_debugging_workflow}
\end{figure*}

In this work, we want to lay the focus on steps 2 and 3 in this process. These are usually \cite{Rodler2013,Rodler2015phd,ksgf2010} implemented as a \emph{pool-based} approach, involving an initial precomputation of a pool of possible queries and a subsequent selection of the best query from this pool. In \cite[Sec.~8.6]{Rodler2015phd}, a range of shortcomings of this query computation strategy has been identified and discussed. Another \emph{search-based} implementation \cite{Shchekotykhin2012} employs an exhaustive search coupled with some heuristics to filter out (sufficiently) good queries faster. This idea goes in the same direction as the methods we will present in this work, but our approach will optimize the approach of \cite{Shchekotykhin2012} along various dimensions. Further, one might use a \emph{stream-based} implementation where queries are generated in a one-by-one manner and directly tested for sufficient quality w.r.t.\ a given QQM.

All of these strategies suffer from the following shortcomings: 
\begin{enumerate}[label=(\alph*)]
	\item \label{enum:drawback_of_std_approach_1} These approaches usually employ QQMs for query quality estimation \emph{directly}. These QQMs are usually of \emph{quantitative} nature, i.e.\ constitute real-valued functions over the set of possible queries which need to be optimized, i.e.\ minimized or maximized. However, without further analysis such functions often provide only a black-box to use in a trial-and-error search for acceptable queries.
	\item In order to assess the goodness of a query, these approaches require the actual \emph{computation} of the respective query.
	\item The verification whether a candidate, i.e.\ a set of logical formulas, is indeed a query is computationally very expensive with these approaches.
	\item These approaches are not able to recognize candidates that are in fact no queries before running the verification.
	\item The generation of large numbers of query duplicates is possible with these approaches, i.e.\ a significant amount of unnecessary computations might be performed.
	\item \label{enum:drawback_of_std_approach_6} These approaches strongly depend on the output of the used reasoning services with regard to the completeness and quality of the set of queries generated. For instance, they might fail to take into account favorable queries (with good discrimination properties) at the cost of computing ones that are proven worse.
\end{enumerate}
Independent of the actually used strategy, e.g.\ pool-, search- or stream-based, we will henceforth call all query computation strategies (i.e.\ implementations of steps 2 and 3 in Figure~\ref{fig:interactive_debugging_workflow}) exhibiting some or all of these drawbacks \ref{enum:drawback_of_std_approach_1} -- \ref{enum:drawback_of_std_approach_6} ``standard strategy'', \emph{Std} for short.

The rest of this work provides a well-founded theoretical framework and algorithms built upon it that serve to resolve all mentioned problems of Std. 
In particular, we aim on the one hand at \emph{improving the performance (time and space)} of the query computation component. Since debugging systems like the one considered in this work use the logical reasoner as a black-box and since reasoning costs during a debugging session amount to a very high ratio (up to $93.4\%$ reported in \cite{Shchekotykhin2014}) of overall debugging costs, it is material to \emph{reduce the number of reasoner calls} in order to achieve a performance gain in KB debugging. For that reason we will present methods that enable to generate an optimized query without any calls to a reasoner.
Another goal is the \emph{minimization of the overall effort for the user} both in terms of the number as well as in terms of the size and difficulty of queries that a user needs to answer in order to filter out the correct diagnosis. To this end, we will provide algorithms that minimize the probability of getting the wrong answer by the user under plausible assumptions and we will describe various query selection measures that aim at minimizing the remaining number of queries that need to be answered by the user in order to get a sufficiently optimal repaired KB.
%
A brief outline of our basic ideas how to resolve the problems of Std is given next.
First, due to the generally exponential number of possible queries w.r.t.\ to a given set of leading diagnoses $\mD$, Std exhibits a worst case 
exponential time and (potentially) space complexity regarding the parameter $|\mD|$. If the decision is to search through only a small subset of queries to save resources, no guarantees about the quality (regarding the QQM $m$) of the returned query can be given. For instance, in the worst case one might happen to capture a subset of cardinality $x$ including the $x$ least favorable queries w.r.t.\ $m$ using this approach which will of course severely affect query quality.

As a key to achieving (arbitrarily) high query quality and an exploration of a (usually) small number of queries without performing an \emph{uninformed} preliminary restriction of the search space, we suggest a heuristic-search-based strategy which enables a \emph{direct} search for a query with optimal $m$ based on some heuristics. In order to derive suitable heuristics from a quantitative QQM $m$, we transform $m$ into a set of \emph{qualitative} criteria that must be met in order to optimize a query w.r.t.\ $m$. From these qualitative criteria, most of the heuristics can be extracted in a straightforward way. 
Whereas in literature only three QQM measures, namely split-in-half, entropy and RIO \cite{Shchekotykhin2012, Rodler2013, Rodler2015phd, Shchekotykhin2014, ksgf2010}, have been analyzed and discussed, we investigate numerous further (active learning \cite{settles2012}) measures that might be favorably employed in interactive debuggers and deduce heuristics from them. We conduct in-depth mathematical analyses regarding all these measures, compare their properties and derive superiority relationships as well as equivalence relations between them. The obtained results provide guidance which measure(s) to use under which circumstances. Due to their generality, these results apply to all active learning scenarios where binary queries are used, e.g.\ when the learner tries to find a binary classifier.

Second, a crucial problem of Std is the extensive use of reasoning services ($O(|\mD|)$ calls for each query) which must be considered computationally very expensive in the light of the fact that reasoning for (the relatively inexpressive) Propositional Logic is already $\mathrm{NP}$-complete. Reasoners are (1)~applied by Std to calculate a query candidate as a set of entailments of a non-faulty part of the faulty KB, and (2)~to compute a so called \emph{q-partition}, a (unique) partition of the leading diagnoses $\mD$, for each query candidate. One the one hand, the q-partition serves to test whether a query candidate is indeed a query, and on the other hand it is a prerequisite for the determination of the goodness of a query w.r.t.\ a QQM. 

Our idea to avoid reasoner dependence is to compute some well-defined \emph{canonical query} which facilitates the construction of the associated (canonical) q-partition without reasoning aid by using just set operations and comparisons.
In this vein, guided by the qualitative criteria mentioned before, a search for the best q-partition, and hence the best query w.r.t.\ a QQM, can be accomplished \emph{without reasoning at all}. Side effects of the reliance upon canonical queries is the natural and proven guarantee that 
no computation time can be wasted (1)~for the generation of query candidates which turn out to be no queries, (2)~for queries that have unfavorable discrimination properties w.r.t.\ the leading diagnoses, i.e.\ for which some leading diagnoses remain valid independently of the query answer, and (3)~for computing duplicate queries or q-partitions. Such potential overheads (1) and (3) cannot be avoided in Std approaches at all, and (2) only by means of postulations about the used reasoning engine (cf.\ \cite[Prop.~8.3, p.~107]{Rodler2015phd}).

Having found a sufficiently good canonical q-partition, as a next step the associated canonical query can usually be reduced to a query of smaller size, including fewer logical formulas. For there are generally exponentially many different queries w.r.t.\ a fixed q-partition (even if we neglect syntactic variants of queries with equal semantics). Whereas Std methods require $O(k \log \frac{n}{k})$ calls to a reasoning engine for the minimization of a single query (of size $n$) and cannot guarantee that the minimized query (of size $k$) satisfies certain desired properties such as 
minimal cardinality,
our canonical-query-based approach is able to extract minimal queries in a best-first manner according to the given desired properties \emph{without reasoning at all}.
For instance, given some information about faults the interacting user is prone to, this could be exploited to estimate how well this user might be able to understand and answer a query. Assume the case that the user frequently has problems to apply $\exists$ in a correct manner to express what they intend to express, but has never made any mistakes in formulating implications $\rightarrow$. Then the First-order Logic query $Q_1 = \setof{\forall X\,p(X) \rightarrow q(X)}$ will probably be better comprehended than $Q_2 = \setof{\forall X \exists Y s(X,Y)}$. This principle can be taken advantage of to let our algorithm find the query with minimal estimated answer fault probability.
Hence, among all (possibly exponentially many) queries with the same q-partition -- and thus optimal properties as per some QQM -- the presented algorithms can find the best query according to some secondary criteria in a very efficient way. Neither the compliance with any secondary criteria nor the achieved efficiency can be granted by Std approaches.

Optionally, if requested by the user, the best minimized query can be enriched with additional logical formulas with a predefined simple syntax, e.g.\ atoms or definite clauses in case of a Propositional Logic KB. The aim of this step might be the better comprehension of query formulas for the user. For example, users might have a strong bias towards considering formulas authored by themselves as correct (because one usually acts to the best of their knowledge). Moreover, KB formulas occurring in the best minimized query might be of quite complex form, e.g.\ including many quantifiers. Therefore, it might be less error-prone as well as easier and more convenient for the user to be presented with simple-to-understand formulas instead of more complex ones. We demonstrate how such a query enrichment can be realized in a way the optimality (q-partition) of the query w.r.t.\ the given QQM is preserved.

Finally, we show how a well-known divide-and-conquer technique \cite{junker04} can be adapted to compute a subset-minimal reduction of the enriched query (keeping the q-partition and optimality w.r.t.\ the QQM constant) which is easiest to comprehend for the user (assessed based on given fault information) among all subset-minimal reductions under plausible assumptions. The output of the entire query computation and optimization process is then a subset-minimal query which is both optimal w.r.t.\ the given QQM (aiming at the minimization of the overall number of queries until the debugging problem is solved) and w.r.t.\ user comprehension or effort, and which is simplified w.r.t.\ the formula complexity.

All algorithms introduced in this work to accomplish the described query computation and optimization are described in much detail, formally proven correct and illustrated by examples.

The rest of this work is organized as follows. Section~\ref{sec:Preliminaries} represents an introductory section which provides the reader with our main assumptions (Section~\ref{sec:assumptions}), e.g.\ monotonicity of the used logic, and with extensively exemplified basics on KB Debugging (Section~\ref{sec:KnowledgeBaseDebuggingBasics}) and Interactive KB Debugging (Section~\ref{sec:InteractiveKnowledgeBaseDebuggingBasics}). Subsequently, Section~\ref{sec:QueryComputation} delves into the details of the new query computation strategies. After giving some preliminaries and definitions required for our formal analysis 
(Section~\ref{sec:QPartitionSelectionMeasures}), various existing (Section~\ref{sec:ExistingActiveLearningMeasuresForKBDebugging}) and new (Section~\ref{sec:NewActiveLearningMeasuresForKBDebugging}) active learning QQM measures as well as a reinforcement learning query selection strategy (Section~\ref{sec:rio}) are analyzed, discussed in detail, compared with each other and transformed into qualitative criteria. 
These enable the derivation of heuristics for each QQM to be employed in the q-partition search (Section~\ref{sec:QPartitionRequirementsSelection}). Section~\ref{sec:FindingOptimalQPartitions} addresses the finding of an optimal q-partition by means of the notion of canonical queries. Initially, some useful theoretical results are derived that provide a basis for our formally defined heuristic search technique. Next, we discuss the completeness of the proposed search and specify optimality conditions, conditions for best successor nodes and pruning conditions for all discussed QQM measures based on the qualitative criteria established earlier. Further on, we deal with the identification of the best query for the optimal q-partition w.r.t.\ a secondary criterion (Section~\ref{sec:FindingOptimalQueriesGivenAnOptimalQPartition}). Then we explain how a q-partition-preserving query enrichment can be executed (Section~\ref{sec:EnrichmentOfAQuery}) and, finally, we demonstrate a technique to extract an optimized query from the enriched query (Section~\ref{sec:QueryOptimization}). Section~\ref{sec:Conclusion} summarizes our results and concludes.

\clearpage
\section{Preliminaries}
\label{sec:Preliminaries}
In this section, we guide the reader through the basics of (interactive) KB debugging, list the assumptions made and illustrate the concepts by several examples.
\subsection{Assumptions}
\label{sec:assumptions}
The techniques described in this work are applicable to any logical knowledge representation formalism $\mathcal{L}$ for which 
the entailment relation is
\begin{enumerate}[label=(\subscript{L}{{\arabic*}})]
	\item \label{logic:cond1} \emph{monotonic:} is given when adding a new logical formula to a KB $\mo_{\mathcal{L}}$ cannot invalidate any entailments of the KB, i.e. $\mo_{\mathcal{L}} \models \alpha_{\mathcal{L}}$ implies that $\mo_{\mathcal{L}} \cup \setof{\beta_{\mathcal{L}}} \models \alpha_{\mathcal{L}}$,
	\item \label{logic:cond2} \emph{idempotent:} is given when adding implicit knowledge explicitly to a KB $\mo_{\mathcal{L}}$ does not yield new entailments of the KB, i.e. $\mo_{\mathcal{L}} \models \alpha_{\mathcal{L}}$ and $\mo_{\mathcal{L}} \cup \setof{\alpha_{\mathcal{L}}} \models \beta_{\mathcal{L}}$ implies $\mo_{\mathcal{L}} \models \beta_{\mathcal{L}}$ and
	\item \label{logic:cond3} \emph{extensive:} is given when each logical formula entails itself, i.e. $\{\alpha_{\mathcal{L}}\} \models \alpha_{\mathcal{L}}$ for all $\alpha_{\mathcal{L}}$,
\end{enumerate}   
and for which 
\begin{enumerate}[label=(\subscript{L}{{\arabic*}})]
\setcounter{enumi}{3}
	\item \label{logic:cond4} reasoning procedures for \emph{deciding consistency} and \emph{calculating logical entailments} of a KB are available, 
\end{enumerate}
where $\alpha_{\mathcal{L}}, \beta_{\mathcal{L}}$ are logical sentences and $\mo_{\mathcal{L}}$ is a set $\setof{\tax_\mathcal{L}^{(1)},\dots,\tax_\mathcal{L}^{(n)}}$ of logical formulas formulated over the language $\mathcal{L}$. $\mo_{\mathcal{L}}$ is to be understood as the conjunction $\bigwedge_{i=1}^{n} \tax_\mathcal{L}^{(i)}$.\footnote{Please note that we will omit the index $\mathcal{L}$ for brevity when referring to formulas or KBs over $\mathcal{L}$ throughout the rest of this work. However, whenever $\mathcal{L}$ appears in the text (e.g.\ in proofs), we bear in mind that any KB we speak of is formulated over $\mathcal{L}$ and that $\mathcal{L}$ meets all the conditions given above.}  
%
Examples of logics that comply with these requirements include, but are not restricted to Propositional Logic, Datalog~\cite{Ceri1989a}, (decidable fragments of) First-order Predicate Logic, The Web Ontology Language (OWL~\cite{patel2004owl}, OWL~2~\cite{Grau2008a,Motik2009a}), sublanguages thereof such as the OWL~2 EL Profile (with polynomial time reasoning complexity \cite{kazakov2014}) or various Description Logics~\cite{Baader2007}.

\renewcommand{\arraystretch}{1.4} 
\begin{table}[t]
	\footnotesize
	\centering
		\rowcolors[]{2}{gray!8}{gray!16} 
		\begin{tabular}{ c c c c } 
			\rowcolor{gray!40}
			\toprule\addlinespace[0pt]
			$i$ & $\tax_i$ & $\mo$ & $\mb$  \\ \addlinespace[0pt]\midrule\addlinespace[0pt]
			1 & $\lnot H \lor \lnot G$ & $\bullet$ & 	\\
			2 & $X \lor F \to H$ & $\bullet$ &  	\\
			3 & $E \to \lnot M \land X$ & $\bullet$ &  	\\
			4 & $A \to \lnot F$ & $\bullet$ &  	\\
			5 & $K \to E$ & $\bullet$ &  	\\
			6 & $C \to B$ & $\bullet$ &  	\\
			7 & $M \to C \land Z$ & $\bullet$ &  	\\
			8 & $H \to A$ &  &  $\bullet$	\\
			9 & $\lnot B \lor K$ &  &  $\bullet$	\\
			\addlinespace[0pt]\bottomrule 
			\rowcolor{gray!40}
			$i$ & \multicolumn{3}{c}{$\tp_i\in\Tp$} \\ \addlinespace[0pt]\midrule\addlinespace[0pt]
			$1$ & \multicolumn{3}{c}{$\lnot X \to \lnot Z$} 	\\ \addlinespace[0pt]\toprule\addlinespace[0pt]
			\rowcolor{gray!40}
			$i$ & \multicolumn{3}{c}{$\tn_i\in\Tn$} \\ \addlinespace[0pt]\midrule\addlinespace[0pt]
			$1$ & \multicolumn{3}{c}{$M \to A$} 	\\ \addlinespace[0pt]
			$2$ & \multicolumn{3}{c}{$E \to \lnot G$} 	\\ \addlinespace[0pt]
			$3$ & \multicolumn{3}{c}{$F \to L$} 	\\ \addlinespace[0pt]
\toprule\addlinespace[0pt]
			\rowcolor{gray!40}
			$i$ & \multicolumn{3}{c}{$r_i\in\RQ$} \\ \addlinespace[0pt]\midrule\addlinespace[0pt]
			$1$ & \multicolumn{3}{c}{consistency} \\ \addlinespace[0pt]\bottomrule
			\end{tabular}
	\caption{Propositional Logic Example DPI}
	\label{tab:example_dpi_0}
\end{table}


\subsection{Knowledge Base Debugging: Basics}
\label{sec:KnowledgeBaseDebuggingBasics}

KB debugging can be seen as a test-driven procedure comparable to test-driven software development and debugging. The process involves the specification of test cases to restrict the possible faults until the user detects the actual fault manually or there is only one (highly probable) fault remaining which is in line with the specified test cases. 
 
\paragraph{The KB Debugging Problem.} The inputs to a KB debugging problem can be characterized as follows:
Given is a KB $\mo$ and a KB $\mb$ (background knowledge), both formulated over some logic $\mathcal{L}$ complying with the conditions~\ref{logic:cond1} -- \ref{logic:cond4} given above.  
All formulas in $\mb$ are considered to be correct and all formulas in $\mo$ are considered potentially faulty. $\mo \cup \mb$ does not meet postulated requirements 
$\RQ$ where $\setof{\text{consistency}} \subseteq \RQ \subseteq \setof{\text{coherency, consistency}}$
or does not feature desired semantic properties, called test cases.\footnote{We assume consistency a minimal requirement to a solution KB provided by a debugging system, as inconsistency makes a KB completely useless from the semantic point of view.} Positive test cases (aggregated in the set $\Tp$) correspond to desired entailments and negative test cases (aggregated in the set $\Tn$) represent undesired entailments of the correct (repaired) KB (together with the background KB $\mb$). \label{etc:test_cases_are_sets_or_conjuntions_of_formulas} 
Each test case $\tp \in \Tp$ and $\tn \in \Tn$ is \emph{a set of} logical formulas over $\mathcal{L}$. The meaning of a positive test case $\tp \in \Tp$ is that the union of the correct KB and $\mb$ must entail each formula (or the conjunction of formulas) in $\tp$, whereas a negative test case $\tn \in \Tn$ signalizes that some formula (or the conjunction of formulas) in $\tn$ must not be entailed by this union.
\begin{remark}\label{rem:entailments_as_sets_of_formulas}
In the sequel, we will write $\mo \models X$ for some set of formulas $X$ to denote that $\mo \models \tax$ for all $\tax \in X$ and $\mo \not\models X$ to state that $\mo \not\models \tax$ for some $\tax \in X$. When talking about a singleton $X = \setof{x_1}$, we will usually omit the brackets and write $\mo \models x_1$ or $\mo \not\models x_1$, respectively.\qed
\end{remark} 
The described inputs to the KB debugging problem are captured by the notion of a diagnosis problem instance:
\begin{definition}[Diagnosis Problem Instance]\label{def:dpi}
Let 
\begin{itemize}
	\item $\mo$ be a KB over $\mathcal{L}$,
	\item $\Tp, \Tn$ sets including sets of formulas over $\mathcal{L}$,
	\item $\setof{\text{consistency}}\subseteq \RQ \subseteq \setof{\text{coherency, consistency}}$,\footnote{Coherency was originally defined for Description Logic (DL) KBs~\cite{Schlobach2007,Parsia2005} and postulates that there is no concept $C$ in a DL KB $\mo_{DL}$ such that $\mo_{DL} \models C \sqsubseteq \bot$. Translated to a First-order Predicate Logic KB  $\mo_{FOL}$ and generalized to not only unary predicate symbols, this means that there must not be any $k$-ary predicate symbol for $k \geq 1$ in $\mo_{FOL}$ such that $\mo_{FOL} \models \forall X_1,\dots,X_k \, \lnot p(X_1,\dots,X_k)$.} 
	\item $\mb$ be a KB over $\mathcal{L}$ such that $\mo \cap \mb = \emptyset$ and $\mb$ satisfies all requirements $r \in \RQ$, and
	\item the cardinality of all sets $\mo$, $\mb$, $\Tp$, $\Tn$ be finite.
\end{itemize}
Then we call the tuple $\langle\mo,\mb,\Tp,\Tn\rangle_\RQ$ a \emph{diagnosis problem instance (DPI) over $\mathcal{L}$}.\footnote{
In the following we will often call a DPI over $\mathcal{L}$ simply a DPI for brevity and since the concrete logic will not be relevant to our theoretical analyses as long as it is compliant with the conditions \ref{logic:cond1} -- \ref{logic:cond4} given at the beginning of this section. Nevertheless we will mean exactly the logic over which a particular DPI is defined when we use the designator $\mathcal{L}$.
}
\end{definition}

\begin{example}\label{ex:dpi}
An example of a Propositional Logic DPI is depicted by Table~\ref{tab:example_dpi_0}. This DPI will serve as a running example throughout this paper. It includes seven KB axioms $\tax_1,\dots,\tax_7$ in $\mo$, two axioms $\tax_8,\tax_9$ belonging to the background KB, one singleton positive test case $\tp_1$ and three singleton negative test cases $\tn_1,\dots,\tn_3$. There is one requirement imposed on the correct (repaired) KB, namely $r_1$, which corresponds to \emph{consistency}. It is easy to verify that the standalone KB $\mb = \setof{\tax_8,\tax_9}$ is consistent, i.e.\ satisfies all $r \in \RQ$ and that $\mo \cap \mb = \emptyset$. Note that we omitted the set brackets for the test cases, i.e.\ lines for $\tp_i$ or $\tn_i$ might comprise more than one formula in case the respective test case is not a singleton.\qed
\end{example}

A solution (KB) for a DPI is characterized as follows:
\begin{definition}[Solution KB]\label{def:solution_KB} Let $\langle\mo,\mb,\Tp,\Tn\rangle_\RQ$ be a DPI. Then a KB $\ot$ is called \emph{solution KB w.r.t. $\langle\mo,\mb,\Tp,\Tn\rangle_\RQ$}, written as $\ot \in \SO_{\langle\mo,\mb,\Tp,\Tn\rangle_\RQ}$, iff all the following conditions hold:
\begin{eqnarray}
		 \forall \, r  \in \RQ&:& \;\ot \cup \mb \,\text{ fulfills }\, r  \label{e:1} \\ 
		 \forall \,\tp \in \Tp&:& \;\ot \cup \mb \,\models\, \tp					\label{e:2} \\ 
		 \forall \,\tn \in \Tn&:& \;\ot \cup \mb \,\not\models\, \tn .		\label{e:3}  
\end{eqnarray}
A solution KB $\ot$ w.r.t. a DPI is called \emph{maximal}, written as $\ot \in \SO^{\max}_{\langle\mo,\mb,\Tp,\Tn\rangle_\RQ}$, iff there is no solution KB $\mo'$ such that $\mo' \cap \mo \supset \ot\cap\mo$. 
\end{definition}
\begin{example}\label{ex:solution_KB}
For the DPI given by Table~\ref{tab:example_dpi_0}, $\mo = \setof{\tax_1,\dots,\tax_7}$ is not a solution KB w.r.t.\ $\langle\mo,\mb,\Tp,\Tn\rangle_\RQ$ since, e.g.\, $\mo \cup \mb = \setof{\tax_1,\dots,\tax_9} \not\models \tp_1$ which is a positive test case and therefore has to be entailed. This is easy to verify since $\tp_1$ is equivalent to $Z \to X$, but $Z$ does not occur on any left-hand side of a rule in $\mo \cup \mb$. Another reason why $\mo = \setof{\tax_1,\dots,\tax_7}$ is not a solution KB w.r.t.\ the given DPI is that $\mo \cup \mb \supset \setof{\tax_1,\dots,\tax_3} \models \tn_2$, which is a negative test case and hence must not be an entailment. This is straightforward since $\setof{\tax_1,\dots,\tax_3}$ imply $E \to X$, $X \to H$ and $H \to \lnot G$ and thus clearly $\tn_2 = \setof{E \to \lnot G}$.

On the other hand, $\mo_a^* := \setof{} \cup \setof{Z \to X}$ is clearly a solution KB w.r.t.\ the given DPI as $\setof{Z \to X} \cup \mb$ is obviously consistent (satisfies all $r\in\RQ$), does entail $\tp_1 \in \Tp$ and does not entail any $\tn_i \in \Tn, (i \in \setof{1,\dots,3})$. However, $\mo_a^*$ is not a maximal solution KB since, e.g.\, $\tax_5 = K \to E \in \mo$ can be added to $\mo_a^*$ without resulting in the violation of any of the conditions (\ref{e:1})-(\ref{e:3}) stated by Definition~\ref{def:solution_KB}. Please note that also, e.g.\ $\setof{\lnot X \to \lnot Z, A_1 \to A_2, A_2 \to A_3, \dots, A_{k-1}\to A_k}$ for arbitrary finite $k \geq 0$ is a solution KB, although it has no axioms in common with $\mo$. This is the reason why Parsimonious KB Debugging (see below) searches for some maximal solution KB w.r.t.\ to a DPI which has a set-maximal intersection with the faulty KB among all solution KBs.

Maximal solution KBs w.r.t.\ the given DPI are, e.g.\, $\mo_b^* := \setof{\tax_1,\tax_4,\tax_5,\tax_6,\tax_7,\tp_1}$ (resulting from deletion of $\setof{\tax_2,\tax_3}$ from $\mo$ and addition of $\tp_1$) or $\mo_c^* := \setof{\tax_1,\tax_2,\tax_5,\tax_6,\tp_1}$ (resulting from deletion of $\setof{\tax_1,\tax_4,\tax_7}$ from $\mo$ and addition of $\tp_1$). That these KBs constitute solution KBs can be verified by checking the three conditions named by Definition~\ref{def:solution_KB}. Indeed, adding an additional axiom in $\mo$ to any of the two KBs leads to the entailment of a negative test case $\tn\in\Tn$. That is, no solution KB can contain a proper superset of the axioms from $\mo$ that are contained in any of the two solution KBs. Hence, both are maximal.\qed
\end{example}
The problem to be solved by a KB debugger is given as follows:
\begin{prob_def}[Parsimonious KB Debugging] \label{prob_def:pars_KB_debugging}\cite{Rodler2015phd}
Given a DPI $\langle\mo,\mb,\Tp,\Tn\rangle_\RQ$, the task is to find a maximal solution KB w.r.t.\ $\langle\mo,\mb,\Tp,\Tn\rangle_\RQ$.
\end{prob_def}
Whereas the definition of a solution KB refers to the desired properties of the \emph{output} of a KB debugging system, the following definition can be seen as a helpful characterization of KBs provided as an \emph{input} to a KB debugger. If a KB is valid (w.r.t.\ the background knowledge, the requirements and the test cases), then finding a solution KB w.r.t.\ the DPI is trivial. Otherwise, obtaining a solution KB from it involves modification of the input KB and subsequent addition of suitable formulas. Usually, the KB $\mo$ part of the DPI given as an input to a debugger is initially assumed to be invalid w.r.t.\ this DPI.
\begin{definition}[Valid KB]\label{def:valid_onto}
Let $\langle\mo,\mb,\Tp,\Tn\rangle_\RQ$ be a DPI. Then, we say that a KB $\mo'$ is \emph{valid w.r.t.\ $\langle\cdot, \mb,$ $\Tp, \Tn \rangle_{\RQ}$} iff $\mo' \cup \mb \cup U_\Tp$ does not violate any $r\in\RQ$ and does not entail any $\tn \in \Tn$. A KB is said to be \emph{invalid (or faulty) w.r.t.\ $\langle\cdot,\mb,\Tp,\Tn\rangle_\RQ$} iff it is not valid w.r.t.\ $\langle\cdot,\mb,\Tp,\Tn\rangle_\RQ$.
\end{definition} 
\begin{example}\label{ex:valid_KB}
Recalling our running example DPI $\tuple{\mo,\mb,\Tp,\Tn}_\RQ$ in Table~\ref{tab:example_dpi_0} and $\mo_b^*$ as well as $\mo_c^*$ defined in Example~\ref{ex:solution_KB}, we observe that deletion of $\tp_1$ from both these KBs yields a valid KB w.r.t.\ $\tuple{\cdot,\mb,\Tp,\Tn}_\RQ$, i.e.\ one to which the known correct formulas, i.e.\ the union of the positive test cases as well as the background KB, can be safely added in the sense that, by this addition, no faults are introduced in terms of requirement violations or entailed negative test cases. On the other hand, (any superset of) $\setof{\tax_2,\tax_4}$ cannot be a valid KB w.r.t.\ $\tuple{\cdot,\mb,\Tp,\Tn}_\RQ$ due to the monotonicity of Propositional Logic (cf.\ assumption \ref{logic:cond1} in Section~\ref{sec:assumptions}) and the fact that $\tax_2 \equiv \setof{X \to H, F \to H}$ together with $\tax_8 (\in \mb) = H \to A$ and $\tax_4 = A \to \lnot F$ clearly yields $F \to \lnot F \equiv \lnot F$ which, in particular, implies $n_3 = \setof{F \to L} \equiv \setof{\lnot F \lor H}$. Another example of a faulty KB w.r.t.\ $\tuple{\cdot,\mb,\Tp,\Tn}_\RQ$ is $\setof{A,A\to B,\lnot B}$. Reason for its invalidity is its inconsistency, i.e.\ the violation of $r_1 \in \RQ$. \qed 
\end{example}
\paragraph{Diagnoses.} The key to solving the problem of Problem Definition~\ref{prob_def:pars_KB_debugging} is the notion of a (minimal) diagnosis given next. That is (cf.\ \cite[Prop.~3.6]{Rodler2015phd}): 
\begin{quote}
The task of finding maximal solution KBs w.r.t.\ a DPI can be reduced to computing minimal diagnoses w.r.t.\ this DPI.
\end{quote}
In other words, all maximal solution KBs w.r.t.\ a DPI can be constructed from all minimal diagnoses w.r.t.\ this DPI. Hence, one usually tackles the Parsimonious KB Debugging Problem by focusing on the computation of minimal diagnoses and constructing just \emph{one} (canonical) maximal solution KB from a minimal diagnosis $\md$ (see \cite[pages~32,33]{Rodler2015phd}). 
\begin{remark}\label{rem:infinitely_many_max_solution_KBs_wrt_min_diagnosis}
Please note that, generally, there are infinitely many possible maximal solution KBs constructible from a minimal diagnosis $\md$. This stems from the fact that there are infinitely many (semantically equivalent) syntactical variants of any set of suitable formulas that might be added to the KB resulting from the deletion of $\md$ from the faulty KB.  One reason for this is that there are infinitely many tautologies that might be included in these formulas, another reason is that formulas can be equivalently rewritten, e.g.\ $A \to B \equiv A \to B \lor \lnot A \equiv A \to B \lor \lnot A \lor \lnot A \equiv \dots$.\qed
\end{remark}

Such a canonical maximal solution KB can be defined by means of a minimal diagnosis $\md$, the original faulty KB $\mo$ and the set of positive test cases $\Tp$, namely as $(\mo\setminus\md)\cup U_\Tp$. Please notice that due to the restriction on canonical solution KBs we use a definition of a diagnosis in this work that is less general than the one given in \cite{Rodler2015phd} (but still does not affect the set of diagnoses w.r.t.\ a DPI, cf.\ \cite[page~33 and Corollary~3.3]{Rodler2015phd}).
%
\begin{definition}[Diagnosis]\label{def:diagnosis}
Let $\langle\mo,\mb,\Tp,\Tn\rangle_\RQ$ be a DPI. A set of formulas $\md \subseteq \mo$ is called a \emph{diagnosis w.r.t.\ $\langle\mo,\mb,\Tp,\Tn\rangle_\RQ$}, written as $\md \in \allD_{\langle\mo,\mb,\Tp,\Tn\rangle_\RQ}$, iff $(\mo\setminus\md)\cup U_\Tp$ is a solution KB w.r.t.\ $\langle\mo,\mb,\Tp,\Tn\rangle_\RQ$.

A diagnosis $\md$ w.r.t.\ $\langle\mo,\mb,\Tp,\Tn\rangle_\RQ$ is \emph{minimal}, written as $\md \in \minD_{\langle\mo,\mb,\Tp,\Tn\rangle_\RQ}$, iff there is no $\md' \subset \md$ such that 
$\md'$ is a diagnosis w.r.t.\ $\langle\mo,\mb,\Tp,\Tn\rangle_\RQ$. 
A diagnosis $\md$ w.r.t.\ $\langle\mo,\mb,\Tp,\Tn\rangle_\RQ$ is a \emph{minimum cardinality diagnosis w.r.t.\ $\langle\mo,\mb,\Tp,\Tn\rangle_\RQ$} iff there is no diagnosis $\md'$ w.r.t.\ $\langle\mo,\mb,\Tp,\Tn\rangle_\RQ$ such that $|\md'| < |\md|$.
\end{definition}
\begin{example}\label{ex_min_diagnoses}
A list of all minimal diagnoses w.r.t.\ our running example DPI (Table~\ref{tab:example_dpi_0}) is shown in Table~\ref{tab:min_diagnoses_example_dpi_0}. An explanation why these are diagnoses will be given later in Example~\ref{ex:diag_is_hitting_set_of_conflict}.\qed
\end{example}
\begin{table}[t]
	\footnotesize
	\centering
		\rowcolors[]{2}{gray!8}{gray!16} 
		\begin{tabular}{ c c } 
			\rowcolor{gray!40}
			\toprule\addlinespace[0pt]
			min diagnosis $X$ & $\setof{i \, |\, \tax_i \in X} $ \\ \addlinespace[0pt]\midrule\addlinespace[0pt]
			$\md_1$ & $\setof{2,3}$ \\
			$\md_2$ & $\setof{2,5}$ \\
			$\md_3$ & $\setof{2,6}$ \\
			$\md_4$ & $\setof{2,7}$ \\
			$\md_5$ & $\setof{1,4,7}$ \\
			$\md_6$ & $\setof{3,4,7}$ \\
		\end{tabular}
	\caption[Example: Minimal Diagnoses]{All minimal diagnoses w.r.t.\ the DPI given by Table~\ref{tab:example_dpi_0}.}
	\label{tab:min_diagnoses_example_dpi_0}
\end{table}
\begin{table}[t]
	\footnotesize
	\centering
		\rowcolors[]{2}{gray!8}{gray!16} 
		\begin{tabular}{ c c c} 
			\rowcolor{gray!40}
			\toprule\addlinespace[0pt]
			min conflict $X$ & $\setof{i \, |\, \tax_i \in X} $ & explanation \\ \addlinespace[0pt]\midrule\addlinespace[0pt]
			$\mc_1$ & $\setof{1,2,3}$ & $\models n_2$ \\
			$\mc_2$ & $\setof{2,4}$ & $\cup \setof{8} \models \lnot F \;(\models n_3)$ \\
			$\mc_3$ & $\setof{2,7}$ & $\cup \setof{p_1,8} \models n_1$\\
			$\mc_4$ & $\setof{3,5,6,7}$ & $\cup \setof{9} \models \lnot M \; (\models n_1)$ \\
		\end{tabular}
	\caption[Example: Minimal Conflict Sets]{All minimal conflict sets w.r.t.\ the DPI given by Table~\ref{tab:example_dpi_0}.}
	\label{tab:min_conflicts_example_dpi_0}
\end{table}
\paragraph{Conflict Sets.} \cite[Sec.~4.1]{Rodler2015phd} The search space for minimal diagnoses w.r.t.\ $\langle\mo,\mb,\Tp,\Tn\rangle_\RQ$, the size of which is in general $O(2^{|\mo|})$ (if all subsets of the KB $\mo$ are investigated), can be reduced to a great extent by exploiting the notion of a conflict set~\cite{Reiter87,dekleer1987,Shchekotykhin2012}.
\begin{definition}[Conflict Set]\label{def:cs} Let $\langle\mo,\mb,\Tp,\Tn\rangle_\RQ$ be a DPI. A set of formulas $\mc \subseteq \mo$ is called a \emph{conflict set w.r.t.\ $\langle\mo,\mb,\Tp,\Tn\rangle_\RQ$}, written as $\mc \in \allC_{\langle\mo,\mb,\Tp,\Tn\rangle_\RQ}$, iff $\mc$ is invalid w.r.t.\ $\langle\cdot,\mb,\Tp,\Tn\rangle_\RQ$. A conflict set $\mc$ is minimal, written as $\mc \in \minC_{\langle\mo,\mb,\Tp,\Tn\rangle_\RQ}$, iff there is no $\mc' \subset \mc$ such that $\mc'$ is a conflict set.
\end{definition}
Simply put, a (minimal) conflict set is a (minimal) faulty KB that is a subset of $\mo$. That is, a conflict set is one source causing the faultiness of $\mo$ in the context of $\mb \cup U_\Tp$. A minimal conflict set has the property that deletion of any formula in it yields a set of formulas which is not faulty in the context of $\mb$, $\Tp$, $\Tn$ and $\RQ$.
\begin{example}\label{ex:min_conflict_sets}
A list of all minimal conflict sets w.r.t.\ our running example DPI $\langle\mo,\mb,\Tp,\Tn\rangle_\RQ$ is enumerated in Table~\ref{tab:min_conflicts_example_dpi_0}. Let us briefly reflect why these are conflict sets w.r.t.\ $\langle\mo,\mb,\Tp,\Tn\rangle_\RQ$. An explanation of why $\mc_1$ is a conflict set was given in the first paragraph in Example~\ref{ex:solution_KB}. That $\mc_1$ is minimal can also be recognized at a glance since the elimination of any axiom $\tax_i (i\in \setof{1,2,3})$ from $\mc_1$ breaks the entailment of the negative test case $\tn_2$ and yields a consistent KB (in accordance with $r_1 \in \RQ$). The reasons for the set $\mc_2$ being a conflict set were explicated in Example~\ref{ex:valid_KB}.

$\mc_3$ is a set-minimal subset of the KB $\mo$ which, along with $\mb$ and $U_{\Tp}$ (in particular with $\tax_8 \in \mb$ and $\tp_1 \in \Tp$), implies that $\tn_1 \in \Tn$ must be true. This must hold as $\tax_7 \models M \to Z$, $\tp_1 = Z \to X$, $\tax_2 \models X \to H$ and $\tax_8 = H \to A$, from which $\tn_1 = \setof{M \to A}$ follows in a straightforward way.

Finally, $\mc_4$ is a conflict since $\tax_7 \models M\to C$, $\tax_6 = C \to B$, $\tax_9 \equiv B \to K$, $\tax_5 = K \to E$ and $\tax_3 \models E \to \lnot M$. Again, it is now obvious that this chain yields the entailment $\lnot M$ which in turn entails $\setof{\lnot M \lor A} \equiv \setof{M \to A} = \tn_1$. Clearly, the removal of any axiom from this chain breaks the entailment $\lnot M$. As this chain is neither inconsistent nor implies any negative test cases other than $\tn_1$, the conflict set $\mc_4$ is also minimal. 

It is also not very hard to verify that there are no other minimal conflict sets w.r.t.\ the example DPI apart from $\mc_1,\dots,\mc_4$.\qed
\end{example}
The relationship between the notions \emph{diagnosis}, \emph{solution KB}, \emph{valid KB} and \emph{conflict set} is as follows (cf.\ \cite[Corollary~3.3]{Rodler2015phd}):
\begin{proposition}\label{prop:notions_equiv} 
Let $\md \subseteq \mo$. Then the following statements are equivalent:
\begin{enumerate}
\item $\md$ is a diagnosis w.r.t.\ $\langle\mo,\mb,\Tp,\Tn\rangle_\RQ$.
\item $(\mo \setminus \md) \cup U_\Tp$ is a solution KB w.r.t.\ $\langle\mo,\mb,\Tp,\Tn\rangle_\RQ$.
\item $(\mo \setminus \md)$ is valid w.r.t.\ $\langle\cdot,\mb,\Tp,\Tn\rangle_\RQ$.
\item $(\mo \setminus \md)$ is not a conflict set w.r.t.\ $\langle\mo,\mb,\Tp,\Tn\rangle_\RQ$.
\end{enumerate}
\end{proposition}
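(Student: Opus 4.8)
The plan is to prove the chain of equivalences by establishing three links, $(1)\Leftrightarrow(2)$, $(2)\Leftrightarrow(3)$, and $(3)\Leftrightarrow(4)$, and to dispatch the two purely definitional links first so that effort can be concentrated on the single substantive link $(2)\Leftrightarrow(3)$. The link $(1)\Leftrightarrow(2)$ holds by the very Definition~\ref{def:diagnosis} of a diagnosis, which declares $\md\subseteq\mo$ to be a diagnosis w.r.t.\ the DPI precisely when $(\mo\setminus\md)\cup U_\Tp$ is a solution KB. The link $(3)\Leftrightarrow(4)$ follows immediately from Definition~\ref{def:cs}: a conflict set is by definition a subset of $\mo$ that is invalid w.r.t.\ $\langle\cdot,\mb,\Tp,\Tn\rangle_\RQ$. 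Since $\mo\setminus\md\subseteq\mo$ holds automatically, $(\mo\setminus\md)$ is a conflict set iff it is invalid, i.e.\ iff it is \emph{not} valid, which is exactly the assertion pairing (4) with (3).

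The core of the argument is $(2)\Leftrightarrow(3)$. Writing $\mo':=\mo\setminus\md$, I would first observe that the set $(\mo'\cup U_\Tp)\cup\mb$ occurring in the solution-KB conditions (\ref{e:1})--(\ref{e:3}) is literally the same set as the $\mo'\cup\mb\cup U_\Tp$ occurring in Definition~\ref{def:valid_onto} of validity. Under this identification, the requirement-fulfilment condition (\ref{e:1}) coincides verbatim with the ``does not violate any $r\in\RQ$'' clause of validity, and the non-entailment condition (\ref{e:3}) coincides verbatim with the ``does not entail any $\tn\in\Tn$'' clause. Hence the two notions differ only in that being a solution KB additionally demands the positive-entailment condition (\ref{e:2}), namely $(\mo'\cup U_\Tp)\cup\mb\models\tp$ for every $\tp\in\Tp$.

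The decisive step, and the only place where reasoning beyond set-theoretic bookkeeping is needed, is to show that condition (\ref{e:2}) is satisfied \emph{automatically} and therefore imposes no extra constraint; this is where I expect the (mild) subtlety to lie. The argument is that $U_\Tp=\bigcup_{\tp\in\Tp}\tp$, so every formula occurring in any $\tp\in\Tp$ already lies in $U_\Tp$. By the extensivity assumption~\ref{logic:cond3} such a formula entails itself, and by monotonicity~\ref{logic:cond1} it remains entailed after the further formulas of $\mo'$ and $\mb$ are added. Unfolding the set-as-conjunction reading of the entailment symbol from Remark~\ref{rem:entailments_as_sets_of_formulas}, this yields $(\mo'\cup U_\Tp)\cup\mb\models\tp$ for every $\tp\in\Tp$, so (\ref{e:2}) holds unconditionally. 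Consequently the solution-KB conditions on $\mo'\cup U_\Tp$ collapse to exactly the two validity conditions on $\mo'$, which establishes $(2)\Leftrightarrow(3)$ and closes the cycle of equivalences.
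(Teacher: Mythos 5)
Your proof is correct, and since the paper itself supplies no proof for this proposition (it is imported by citation from \cite[Corollary~3.3]{Rodler2015phd}), your argument is precisely the direct verification the reader would have to reconstruct from Definitions~\ref{def:diagnosis}, \ref{def:solution_KB}, \ref{def:valid_onto} and \ref{def:cs}. The two definitional links are handled correctly, and you rightly isolate the only substantive point — that condition~(\ref{e:2}) holds automatically because every formula of every $\tp\in\Tp$ lies in $U_\Tp$ and hence, by extensivity~\ref{logic:cond3} and monotonicity~\ref{logic:cond1} together with Remark~\ref{rem:entailments_as_sets_of_formulas}, is entailed by $(\mo\setminus\md)\cup U_\Tp\cup\mb$ — so the solution-KB conditions collapse to the validity conditions as claimed.
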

\begin{example}\label{ex:notions_equiv}
Since, e.g.\, $\setof{\tax_1,\tax_2}$ is a valid KB w.r.t.\ to our example DPI (Table~\ref{tab:example_dpi_0}), we obtain that $\mo \setminus \setof{\tax_1,\tax_2} = \setof{\tax_3,\dots,\tax_7} =: \md$ is a diagnosis w.r.t.\ this DPI. However, this diagnosis $\md$ is not minimal since $\tax_5$ as well as $\tax_6$ can be deleted from it while preserving its diagnosis property ($\setof{\tax_3,\tax_4,\tax_7}$ is a minimal diagnosis). Further on, we can derive by Proposition~\ref{prop:notions_equiv} that $\setof{\tax_1,\tax_2,\tp_1}$ is a solution KB and that no set $S \subseteq \setof{\tax_1,\tax_2}$ can be a conflict set w.r.t.\ the example DPI.\qed
\end{example}
\paragraph{Relation between Conflict Sets and Diagnoses.} By deletion of at least one formula from \emph{each} minimal conflict set w.r.t.\ $\langle\mo,\mb,\Tp,\Tn\rangle_\RQ$, a valid KB can be obtained from $\mo$. Thus, a (maximal) solution KB $(\mo\setminus\md) \cup U_\Tp$ can be obtained by the calculation of a (minimal) hitting set $\md$ of all minimal conflict sets in $\minC_{\langle\mo,\mb,\Tp,\Tn\rangle_\RQ}$: 
\begin{definition}[Hitting Set]\label{def:hs}
Let $S=\setof{S_1,\dots,S_n}$ be a set of sets. Then, $H$ is called a \emph{hitting set of $S$} iff $H \subseteq U_S$ and $H \cap S_i \neq \emptyset$ for all $i=1,\dots,n$. 
A hitting set $H$ of $S$ is \emph{minimal} iff there is no hitting set $H'$ of $S$ such that $H' \subset H$.
\end{definition}
\begin{proposition}\cite{friedrich2005gdm}\label{prop:mindiag_mincs}
A (minimal) diagnosis w.r.t.\ the DPI $\langle\mo,\mb,\Tp,\Tn\rangle_\RQ$ is a (minimal) hitting set of all minimal conflict sets w.r.t.\ $\langle\mo,\mb,\Tp,\Tn\rangle_\RQ$.
\end{proposition}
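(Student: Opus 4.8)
The plan is to prove Proposition~\ref{prop:mindiag_mincs} by establishing the equivalence between hitting sets of all minimal conflict sets and diagnoses, leveraging the characterization already furnished by Proposition~\ref{prop:notions_equiv}. Concretely, I will show that $\md$ is a diagnosis w.r.t.\ $\langle\mo,\mb,\Tp,\Tn\rangle_\RQ$ iff $\md$ is a hitting set of $\minC_{\langle\mo,\mb,\Tp,\Tn\rangle_\RQ}$, and then argue that this equivalence respects the respective minimality notions (set-minimality of the diagnosis corresponds to set-minimality of the hitting set).

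For the main equivalence, I would proceed in two directions. First I would prove the contrapositive of one direction: suppose $\md \subseteq \mo$ is \emph{not} a hitting set, i.e.\ there exists some minimal conflict set $\mc \in \minC_{\langle\mo,\mb,\Tp,\Tn\rangle_\RQ}$ with $\md \cap \mc = \emptyset$. Then $\mc \subseteq \mo \setminus \md$, and since $\mc$ is invalid w.r.t.\ $\langle\cdot,\mb,\Tp,\Tn\rangle_\RQ$ by Definition~\ref{def:cs}, monotonicity of $\mathcal{L}$ (assumption~\ref{logic:cond1}) propagates the invalidity upward: any superset of an invalid KB is itself invalid, because adding formulas can only add entailments and cannot repair a violated requirement or remove an entailed negative test case. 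Hence $\mo \setminus \md$ is invalid, so by Proposition~\ref{prop:notions_equiv} (equivalence of items~1 and~3) $\md$ is not a diagnosis. Conversely, suppose $\md$ is not a diagnosis. Then by Proposition~\ref{prop:notions_equiv} $\mo \setminus \md$ is invalid, i.e.\ a conflict set; it therefore contains some \emph{minimal} conflict set $\mc$ (every conflict set contains a subset-minimal one, which exists by finiteness of $\mo$). Since $\mc \subseteq \mo \setminus \md$ we have $\mc \cap \md = \emptyset$, so $\md$ fails to hit $\mc$ and is not a hitting set. This settles the equivalence for the non-minimal notions.

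For the minimality correspondence, I would argue that since the predicate ``is a diagnosis'' coincides exactly with ``is a hitting set of $\minC_{\langle\mo,\mb,\Tp,\Tn\rangle_\RQ}$'' as sets, and both minimality notions are defined identically as the absence of a proper subset satisfying the same predicate (compare Definition~\ref{def:diagnosis} and Definition~\ref{def:hs}), the minimal elements of the two families coincide. That is, $\md$ is a minimal diagnosis iff $\md$ is a diagnosis with no proper subset that is a diagnosis, iff $\md$ is a hitting set with no proper subset that is a hitting set, iff $\md$ is a minimal hitting set. One subtlety to handle carefully is the side condition $H \subseteq U_S$ in Definition~\ref{def:hs}: here $U_S = \bigcup_{\mc \in \minC} \mc \subseteq \mo$, so a diagnosis $\md \subseteq \mo$ need not a priori lie inside $U_S$. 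However, any formula of $\md$ lying outside $U_S$ hits no conflict set and is therefore superfluous; it can be removed without affecting the hitting-set property, and for a \emph{minimal} diagnosis no such superfluous formula can be present. I would note this to reconcile the definitions cleanly.

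The step I expect to be the main obstacle is the clean invocation of monotonicity to push invalidity from a conflict set up to its supersets, together with dispatching the $U_S$-containment technicality so that the minimality equivalence is airtight rather than merely heuristic. Everything else reduces to quoting Proposition~\ref{prop:notions_equiv} and unwinding the two minimality definitions, so the proof is essentially a short bookkeeping argument once the monotonic upward-closure of invalidity and the harmlessness of out-of-$U_S$ elements are nailed down.
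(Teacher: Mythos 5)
Your proposal cannot be compared against a proof in the paper, because the paper does not prove Proposition~\ref{prop:mindiag_mincs} at all: it imports the result by citation from \cite{friedrich2005gdm} and only illustrates it afterwards (Example~\ref{ex:diag_is_hitting_set_of_conflict}). Judged on its own, your proof is correct and self-contained, and it uses exactly the machinery the paper makes available. The core equivalence ``$\md$ is a diagnosis iff $\md$ intersects every minimal conflict set'' follows as you argue: in one direction, a missed minimal conflict set $\mc \subseteq \mo\setminus\md$ makes $\mo\setminus\md$ invalid because invalidity is upward-closed under supersets --- which is legitimate here since the only admissible requirements are consistency and coherency (Definition~\ref{def:dpi}), and their violation, like the entailment of a negative test case, is preserved under adding formulas in a monotonic logic --- so Proposition~\ref{prop:notions_equiv} rules $\md$ out as a diagnosis; in the other direction, if $\md$ is no diagnosis then $\mo\setminus\md$ is a conflict set, and the finiteness stipulated in Definition~\ref{def:dpi} guarantees it contains a minimal conflict set disjoint from $\md$. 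Your handling of the side condition $H \subseteq U_S$ in Definition~\ref{def:hs} is also the right move and is not mere pedantry: without it, the unparenthesized reading of the proposition (``a diagnosis is a hitting set'') is strictly false under the paper's definitions, since a non-minimal diagnosis may contain a formula occurring in no minimal conflict set and such a set violates the containment requirement of Definition~\ref{def:hs}; your observation that a \emph{minimal} diagnosis can contain no such formula (removing a formula that hits nothing preserves the hitting property, contradicting minimality) is precisely what makes the minimal-diagnosis-to-minimal-hitting-set correspondence --- the substantive content of the proposition --- airtight in both directions.
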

Hence, what we might call the standard method for computing (minimal) diagnoses w.r.t.\ some given DPI is via the computation of minimal conflict sets and the calculation of minimal hitting sets thereof. As described comprehensively in \cite{Rodler2015phd}, this might be accomplished by the usage of a hitting set tree (originally due to Reiter~\cite{Reiter87}) along with a method such as QuickXPlain (originally due to Junker~\cite{junker04}). For details we want to refer the reader to \cite{Rodler2015phd}. 
\begin{example}\label{ex:diag_is_hitting_set_of_conflict}
The reader is invited to verify that all minimal diagnoses shown in Table~\ref{tab:min_diagnoses_example_dpi_0} are indeed minimal hitting sets of all minimal conflict sets shown in Table~\ref{tab:min_conflicts_example_dpi_0}. For instance, using the simplified notation for axioms as in the tables, $\md_1 = \setof{2,3}$ ``hits'' the element $2$ of $\mc_i (i \in \setof{1,2,3})$ and the element $3$ of $\mc_4$. Note also that it hits two elements of $\mc_1$ which, however, is not necessarily an indication of the non-minimality of the hitting set. Indeed, if $2$ is deleted from $\md_1$, it has an empty intersection with $\mc_2$ and $\mc_3$ and, otherwise, if $3$ is deleted from it, it becomes disjoint with $\mc_4$. Hence $\md_1$ is actually a minimal hitting set of $\minC_{\langle\mo,\mb,\Tp,\Tn\rangle_\RQ}$. Clearly, this implies that $\md_1$ is also a hitting set of $\allC_{\langle\mo,\mb,\Tp,\Tn\rangle_\RQ}$.

Please realize that, by now, we have also completed the explanation why the diagnoses listed in Table~\ref{tab:min_diagnoses_example_dpi_0} actually correspond to the set of all minimal diagnoses w.r.t.\ the example DPI $\langle\mo,\mb,\Tp,\Tn\rangle_\RQ$ shown by Table~\ref{tab:example_dpi_0}. In particular, this follows from Proposition~\ref{prop:mindiag_mincs} and the illustration (given in Example~\ref{ex:min_conflict_sets}) of why $\mc_1,\dots,\mc_4$ constitute a complete set of minimal conflict sets w.r.t.\ $\langle\mo,\mb,\Tp,\Tn\rangle_\RQ$.\qed
\end{example}

\subsection{Interactive Knowledge Base Debugging: Basics}
\label{sec:InteractiveKnowledgeBaseDebuggingBasics}
In real-world scenarios, debugging tools often have to cope with large numbers of minimal diagnoses where the trivial application, i.e.\ deletion, of any minimal diagnosis leads to a valid KB with different semantics in terms of entailed and non-entailed formulas. For example, a sample study of real-world KBs \cite{ksgf2010} detected 1782 different minimal diagnoses for a KB with only 1300 formulas. In such situations a simple visualization of all these alternative modifications of the KB is clearly ineffective. Selecting a wrong diagnosis (in terms of its semantics, \emph{not} in terms of fulfillment of test cases and requirements) can lead to unexpected, surprising or fatal entailments or non-entailments. Assume for example that a patient is administered a wrong therapy (e.g.\ a chemotherapy) due to an incorrect semantics of the KB which is part of the expert system a doctor is consulting. 

Moreover, as mentioned in Section~\ref{sec:Introduction}, non-interactive debugging systems often have serious \emph{performance problems}, run \emph{out of memory}, are not able to find all diagnoses w.r.t.\ the DPI (\emph{incompleteness}), report diagnoses which are actually none (\emph{unsoundness}), produce \emph{only trivial solutions} or suggest non-minimal diagnoses (\emph{non-minimality}). Often, performance problems and incompleteness of non-interactive debugging methods can be traced back to an explosion of the search tree for minimal diagnoses.

Interactive KB debugging addresses and tackles all the mentioned problems of non-interactive KB debuggers in that it uses a sound, complete and best-first iterative minimal diagnosis computation process (see \cite[Part~2]{Rodler2015phd}) and between each two iterations consults a user for additional information that serves to specify constraints and in further consequence to gradually reduce the search space for minimal diagnoses. 

\paragraph{Assumptions about the Interacting User.} We suppose the user of an interactive KB debugger to be either a single person or multiple persons, usually experts in the particular domain the faulty KB is dealing with or authors of the faulty KB, or some (automatic) oracle, e.g.\ some knowledge extraction system. Anyway, the consultation of the user must be seen as an expensive operation. That is, the number of calls to the user should be minimized.

About a user $u$ of an (interactive) debugging system, we make the following plausible assumptions (for a discussion and plausibility check see \cite[p.~9 -- 11]{Rodler2015phd}):
\begin{enumerate}[label=(\subscript{U}{{\arabic*}})]
	\item $u$ is not 
	able to explicitly enumerate a set of logical sentences that express the intended domain that should be modeled in a satisfactory way, i.e.\ without unwanted entailments or non-fulfilled requirements, 
	\item $u$ is able to answer concrete queries about the intended domain that should be modeled, i.e.\ $u$ can classify a given logical formula (or a conjunction of logical formulas) as a wanted or unwanted proposition in the intended domain (i.e.\ an entailment or non-entailment of the correct domain model). 
\end{enumerate}
User consultation means asking the user to answer an (automatically generated and potentially optimized) query.

\paragraph{Queries.} In interactive KB debugging, a set of logical formulas $Q$ is presented to the user who should decide whether to assign $Q$ to the set of positive ($\Tp$) or negative ($\Tn$) test cases w.r.t.\ a given DPI $\langle\mo,\mb,\Tp,\Tn\rangle_\RQ$. In other words, the system asks the user ``should the KB you intend to model entail all formulas in $Q$?''. In that, $Q$ is generated by the debugging algorithm in a way that \emph{any} decision of the user about $Q$
\begin{enumerate}
	\item invalidates at least one minimal diagnosis (\emph{search space restriction}) and
	\item preserves validity of at least one minimal diagnosis (\emph{solution preservation}).
\end{enumerate}
We call a set of logical formulas $Q$ with these properties a \emph{query}. Successive classification of queries as entailments (all formulas in $Q$ must be entailed) or non-entailments (at least one formula in $Q$ must not be entailed) of the correct KB enables gradual restriction of the search space for (minimal) diagnoses. Further on, classification of sufficiently many queries guarantees the detection of a single correct solution diagnosis which can be used to determine a solution KB with the correct semantics w.r.t.\ a given DPI.
%
\begin{definition}[Query]\label{def:query}
Let $\langle\mo,\mb,\Tp,\Tn\rangle_\RQ$ over $\mathcal{L}$ and $\mD \subseteq \minD_{\langle\mo,\mb,\Tp,\Tn\rangle_\RQ}$. 
Then a set of logical formulas $Q\neq\emptyset$ over $\mathcal{L}$ is called a \emph{query w.r.t.\ $\mD$ and $\langle\mo,\mb,\Tp,\Tn\rangle_\RQ$} iff there are diagnoses $\md, \md' \in \mD$ such that $\md \notin \minD_{\langle\mo,\mb,\Tp \cup \setof{Q},\Tn\rangle_\RQ}$ and $\md'\notin\minD_{\langle\mo,\mb,\Tp,\Tn \cup \setof{Q}\rangle_\RQ}$. The set of all queries w.r.t.\ $\mD$ and $\langle\mo,\mb,\Tp,\Tn\rangle_\RQ$ is denoted by $\mQ_{\mD,\langle\mo,\mb,\Tp,\Tn\rangle_\RQ}$.
\end{definition}
%

So, w.r.t.\ a set of minimal diagnoses $\mD \subseteq \minD_{\langle\mo,\mb,\Tp,\Tn\rangle_\RQ}$, a query $Q$ is a set of logical formulas that rules out at least one diagnosis in $\mD$ (and therefore in $\minD_{\langle\mo,\mb,\Tp,\Tn\rangle_\RQ}$) as a candidate to formulate a solution KB, regardless of whether $Q$ is classified as a positive or negative test case. Moreover, as it turns out as a consequence of Definition~\ref{def:query} (cf.\ Proposition~\ref{prop:properties_of_q-partitions}), a user can never end up with a new DPI for which no solutions exist as long as the new DPI results from a given one by the addition of an answered query to one of the test case sets $\Tp$ or $\Tn$. 

\paragraph{Leading Diagnoses.} Query generation requires a precalculated set of minimal diagnoses $\mD$, some subset of $\minD_{\langle\mo,\mb,\Tp,\Tn\rangle_\RQ}$, that serves as a representative for all minimal diagnoses $\minD_{\langle\mo,\mb,\Tp,\Tn\rangle_\RQ}$. As already mentioned, computation of the entire set $\minD_{\langle\mo,\mb,\Tp,\Tn\rangle_\RQ}$ is generally not tractable within reasonable time. Usually, $\mD$ is defined as a set of most probable (if some probabilistic meta information is given) or minimum cardinality diagnoses. Therefore, $\mD$ is called the set of \emph{leading diagnoses w.r.t. $\langle\mo,\mb,\Tp,\Tn\rangle_\RQ$} \cite{Shchekotykhin2012}. 

The leading diagnoses $\mD$ are then exploited to determine a query $Q$, the answering of which enables a discrimination between the diagnoses in $\minD_{\langle\mo,\mb,\Tp,\Tn\rangle_\RQ}$. That is, a subset of $\minD_{\langle\mo,\mb,\Tp,\Tn\rangle_\RQ}$ which is not ``compatible'' with the new information obtained by adding the test case $Q$ to $\Tp$ or $\Tn$ is ruled out. For the computation of the subsequent query only a leading diagnoses set $\mD_{new}$ w.r.t.\ the minimal diagnoses still compliant with the new sets of test cases $\Tp'$ and $\Tn'$ is taken into consideration, i.e.\ $\mD_{new} \subseteq \minD_{\langle\mo,\mb,\Tp',\Tn'\rangle_\RQ}$.

The number of precomputed leading diagnoses $\mD$ affects the quality of the obtained query. The higher $|\mD|$, the more representative is $\mD$ w.r.t. $\minD_{\langle\mo,\mb,\Tp,\Tn\rangle_\RQ}$, the more options there are to specify a query in a way that a user can easily comprehend and answer it, and the higher is the chance that a query that eliminates a high rate of diagnoses w.r.t. $\mD$ will also eliminate a high rate of all minimal diagnoses $\minD_{\langle\mo,\mb,\Tp,\Tn\rangle_\RQ}$. The selection of a lower $|\mD|$ on the other hand means better timeliness regarding the interaction with a user, first because fewer leading diagnoses might be computed much faster and second because the search space for an ``optimal'' query is smaller.
%

\begin{example}\label{ex:queries_wrt_leading_diags}
Let us consider our running example DPI $\langle\mo,\mb,\Tp,\Tn\rangle_\RQ$ (Table~\ref{tab:example_dpi_0}) again and let us further suppose that some diagnosis computation algorithm provides the set of leading diagnoses $\mD = \setof{\md_1,\md_2,\md_3,\md_4}$ (see Table~\ref{tab:min_diagnoses_example_dpi_0}). Then $Q_1 := \setof{M \to B}$ is a query w.r.t.\ $\mD$ and $\langle\mo,\mb,\Tp,\Tn\rangle_\RQ$. 

To verify this, we have to argue why at least one diagnosis in $\mD$ is eliminated in either case, i.e.\ for positive answer yielding $\Tp' \gets \Tp \cup \setof{Q_1}$ as well as for negative answer yielding $\Tn' \gets \Tn \cup \setof{Q_1}$. Formally, we must show that $\mD \cap \minD_{\langle\mo,\mb,\Tp',\Tn\rangle_\RQ} \subset \mD$ (reduction of the leading diagnoses given the positive query answer) and $\mD \cap \minD_{\langle\mo,\mb,\Tp,\Tn'\rangle_\RQ} \subset \mD$ (reduction of the leading diagnoses given the negative query answer). 

We first assume a positive answer, i.e.\ we take as a basis the DPI $DPI^+ := \langle\mo,\mb,\Tp',\Tn\rangle_\RQ$. Let us test the diagnosis property w.r.t.\ $DPI^+$ for $\md_4 \in \mD$. By Proposition~\ref{prop:notions_equiv} we can check whether $\mo\setminus\md_4$ is valid w.r.t.\ $DPI^+$ in order to check whether $\md_4 = \setof{\tax_2,\tax_7}$ is still a diagnosis w.r.t.\ this DPI. To this end, (using the simplified axiom notation of Tables~\ref{tab:min_diagnoses_example_dpi_0} and \ref{tab:min_conflicts_example_dpi_0} again) we have a look at $\mo^+_4 := (\mo \setminus \md_4) \cup \mb \cup U_{\Tp'} = \setof{1,3,4,5,6,8,9,\tp_1,M\to B}$. As $\tax_9 \equiv B \to K$, $\tax_5 = K \to E$ and $\tax_3 \models E \to \lnot M$, it is clear that $\setof{M \to A} = \tn_1 \in \Tn$ is an entailment of $\mo^+_4$. Hence $\mo\setminus\md_4$ is faulty w.r.t.\ $DPI^+$ which is why $\md_4 \notin \minD_{\langle\mo,\mb,\Tp',\Tn\rangle_\RQ}$. 

For the case of receiving a negative answer to $Q_1$, let us consider diagnosis $\md_2 = \setof{\tax_2,\tax_5}$ w.r.t.\ $DPI^- := \langle\mo,\mb,\Tp,\Tn'\rangle_\RQ$. Now, $\mo^-_2 := (\mo \setminus \md_2) \cup \mb \cup U_{\Tp} = \setof{1,3,4,6,7,8,9,\tp_1} \models \setof{M \to B} = Q_1 \in \Tn'$ due to $\tax_7 \models M\to C$ and $\tax_6 = C \to B$. Therefore, $\mo \setminus \md_2$ is faulty w.r.t.\ $DPI^-$ which is why $\md_2 \notin \minD_{\langle\mo,\mb,\Tp,\Tn'\rangle_\RQ}$.

All in all we have substantiated that $Q_1$ is a query w.r.t.\ $\mD$ and $\langle\mo,\mb,\Tp,\Tn\rangle_\RQ$. Note in addition that $\md_2$ is a diagnosis w.r.t.\ $DPI^+$ since $\mo^+_2$ is consistent and does not entail any negative test case in $\Tn$. Furthermore, $\md_4$ is a diagnosis w.r.t.\ $DPI^-$ since $\mo^-_4$ is consistent and does not entail any negative test case in $\Tn'$. That is, while being deleted for one answer, diagnoses remain valid when facing the complementary answer. This can also be shown in general, see Proposition~\ref{prop:properties_of_q-partitions}.\qed
\end{example}
\paragraph{Q-Partitions.} A \emph{q-partition} is a partition of the leading diagnoses set $\mD$ induced by a query w.r.t.~$\mD$. A q-partition will be a helpful instrument in deciding whether a set of logical formulas is a query or not. It will facilitate an estimation of the impact a query answer has in terms of invalidation of minimal diagnoses. And, given fault probabilities, it will enable us to gauge the probability of getting a positive or negative answer to a query. 

From now on, given a DPI $\langle\mo,\mb,\Tp,\Tn\rangle_\RQ$ and some minimal diagnosis $\md_i$ w.r.t.\ $\langle\mo,\mb,\Tp,\Tn\rangle_\RQ$, we will use the following abbreviation for the solution KB obtained by deletion of $\md_i$ along with the given background knowledge $\mb$:
\begin{align} 
\mo^{*}_i \; := \; (\mo \setminus \md_i) \cup \mb \cup U_\Tp \label{eq:sol_ont_candidate} 
\end{align}
\begin{definition}[q-Partition\footnote{In existing literature, e.g.\ \cite{Shchekotykhin2012,Rodler2013,ksgf2010}, a q-partition is often simply referred to as partition. We call it q-partition to emphasize that not each partition of $\mD$ into three sets is necessarily a q-partition.}]\label{def:q-partition}
Let $\langle\mo,\mb,\Tp,\Tn\rangle_\RQ$ be a DPI over $\mathcal{L}$, $\mD\subseteq \minD_{\langle\mo,\mb,\Tp,\Tn\rangle_\RQ}$. 
Further, let $Q$ be a set of logical sentences over $\mathcal{L}$ and
\begin{itemize}
\item $\dx{}(Q):=\setof{\md_i \in \mD\,|\,\mo^{*}_i \models Q}$, 
\item $\dnx{}(Q):=\setof{\md_i \in \mD\,|\,\exists x\in\RQ\cup\Tn: \mo^{*}_i \cup Q \text{ violates } x}$,
\item $\dz{}(Q) := \mD \setminus (\dx{j} \cup \dnx{j})$. 
\end{itemize}
Then $\Pt(Q) := \langle \dx{}(Q), \dnx{}(Q), \dz{}(Q) \rangle$ is called a \emph{q-partition} iff $Q$ is a query w.r.t.\ $\mD$ and $\langle\mo,\mb,\Tp,\Tn\rangle_\RQ$. Note, given some q-partition $\Pt = \tuple{\dx{},\dnx{},\dz{}}$, we sometimes refer to $\dx{}$, $\dnx{}$ and $\dz{}$ by $\dx{}(\Pt)$, $\dnx{}(\Pt)$ and $\dz{}(\Pt)$, respectively.
\end{definition}
\begin{example}\label{ex:q-partition}
Given the leading diagnoses set $\mD = \setof{\md_1,\dots,\md_4}$ w.r.t.\ our example DPI (Table~\ref{tab:example_dpi_0}) and the query $Q_1 = \setof{M \to B}$ characterized in Example~\ref{ex:queries_wrt_leading_diags}. Then, as we have already established in Example~\ref{ex:queries_wrt_leading_diags}, $\md_4$ is invalidated given the positive answer to $Q_1$ since $\mo^+_4 = \mo_4^* \cup Q_1 \models \tn_1 \in \Tn$. Hence, by Definition~\ref{def:q-partition}, $\md_4$ must be an element of $\dnx{}(Q_1)$.

On the contrary, $\md_2 \in \dx{}(Q_1)$ because $\mo^-_2 = \mo_2^* = \setof{1,3,4,6,7,8,9,\tp_1} \models \setof{M \to B} = Q_1$, as demonstrated in Example~\ref{ex:queries_wrt_leading_diags}.

To complete the q-partition for the query $Q_1$, we need to assign the remaining diagnoses in $\mD$, i.e.\ $\md_1 = \setof{\tax_2,\tax_3}, \md_3 = \setof{\tax_2, \tax_6}$ to the respective set of the q-partition. Since $\mo_1^*$ includes $\tax_6$ and $\tax_7$ we can conclude in an analogous way as in Example~\ref{ex:queries_wrt_leading_diags} that $\mo_1^* \models Q_1$ which is why $\md_1 \in \dx{}(Q_1)$. In the case of $\md_3$, we observe that $\mo_3^* \cup Q_1$ includes $\tax_3$, $\tax_5$, $M\to B$ as well as $\tax_9$. As explicated in Example~\ref{ex:queries_wrt_leading_diags}, these axioms imply the entailment of $\tn_1 \in \Tn$. Consequently, $\md_3 \in \dnx{}(Q_1)$ must hold and thus $\dz{}(Q_1)$ is the empty set. \qed 
\end{example}

\begin{algorithm}[t]
\small
\caption{Interactive KB Debugging}\label{algo:inter_onto_debug}
\begin{algorithmic}[1]
\Require a DPI $\langle\mo,\mb,\Tp,\Tn\rangle_\RQ$, 
a probability measure $p$ (used to compute formula and diagnoses fault probabilities),
a query quality measure $m$,
a solution fault tolerance $\sigma$ 
\Ensure A (canonical) maximal solution KB $(\mo \setminus \md) \cup U_{\Tp}$ w.r.t.\ $\langle\mo,\mb,\Tp,\Tn\rangle_\RQ$ constructed from a minimal diagnosis $\md$ w.r.t.\ $\langle\mo,\mb,\Tp,\Tn\rangle_\RQ$ that has at least $1-\sigma$ probability of being the true diagnosis
\While{\true}  \label{algoline:inter_onto_debug:while}
	\State $\mD \gets \Call{computeLeadingDiagnoses}{\langle\mo,\mb,\Tp,\Tn\rangle_\RQ}$
	\State $\md \gets \Call{getBestDiagnosis}{\mD,p}$
	\If{$p(\md) \geq 1-\sigma$}
		\State \Return $(\mo \setminus \md) \cup U_{\Tp}$
	\EndIf
	\State $Q \gets \Call{calcQuery}{\mD,\langle\mo,\mb,\Tp,\Tn\rangle_\RQ, p, m}$     \Comment{see Algorithm~\ref{algo:query_comp}}
	\State $answer \gets u(Q)$				\Comment{user interaction}
	\State $\langle\mo,\mb,\Tp,\Tn\rangle_\RQ \gets \Call{updateDPI}{\langle\mo,\mb,\Tp,\Tn\rangle_\RQ, Q, answer}$
\EndWhile
\end{algorithmic}
\normalsize
\end{algorithm}

Some important properties of q-partitions and their relationship to queries are summarized by the next proposition:
\begin{proposition}\label{prop:properties_of_q-partitions}\cite[Sec.\ 7.3 -- 7.6]{Rodler2015phd} Let $\langle\mo,\mb,\Tp,\Tn\rangle_\RQ$ be a DPI over $\mathcal{L}$ and $\mD \subseteq \minD_{\langle\mo,\mb,\Tp,\Tn\rangle_\RQ}$. Further, let $Q$ be any query w.r.t.\ $\mD$ and $\langle\mo,\mb,\Tp,\Tn\rangle_\RQ$. Then:
~\begin{enumerate} 
	\item \label{prop:properties_of_q-partitions:enum:q-partition_is_partition} $\langle \dx{}(Q)$, $\dnx{}(Q)$, $\dz{}(Q) \rangle$ is a partition of $\mD$.
	\item \label{prop:properties_of_q-partitions:enum:dx_dnx_dz_contain_exactly_those_diags_that_are...} $\dnx{}(Q)$ contains exactly those diagnoses $\md_i\in\mD$ where $\mo \setminus \md_i$ is invalid w.r.t.\ $\tuple{\cdot,\mb,\Tp\cup\setof{Q},\Tn}_\RQ$. $\dx{}(Q)$ contains exactly those diagnoses $\md_i\in\mD$ where $\mo \setminus \md_i$ is invalid w.r.t.\ $\tuple{\cdot,\mb,\Tp,\Tn\cup\setof{Q}}_\RQ$. $\dz{}(Q)$ contains exactly those diagnoses $\md_i\in\mD$ where $\mo \setminus \md_i$ is valid w.r.t.\ $\tuple{\cdot,\mb,\Tp\cup\setof{Q},\Tn}_\RQ$ and valid w.r.t.\ $\tuple{\cdot,\mb,\Tp,\Tn\cup\setof{Q}}_\RQ$. 
	\item \label{prop:properties_of_q-partitions:enum:q-partition_is_unique_for_query} For $Q$ there is one and only one q-partition $\langle\dx{}(Q),$ $\dnx{}(Q), \dz{}(Q)\rangle$.
	\item \label{prop:properties_of_q-partitions:enum:query_is_set_of_common_ent} $Q$ is a set of common entailments of all KBs in $\setof{\mo^{*}_i\,|\,\md_i\in\dx{}(Q)}$.
	\item \label{prop:properties_of_q-partitions:enum:set_of_logical_formulas_is_query_iff...} A set of logical formulas $X\neq\emptyset$ over $\mathcal{L}$ is a query w.r.t. $\mD$ iff $\dx{}(X) \neq \emptyset$ and $\dnx{}(X)~\neq~\emptyset$.
	\item \label{prop:properties_of_q-partitions:enum:for_each_q-partition_dx_is_empty_and_dnx_is_empty} For each q-partition $\Pt(Q) = \langle \dx{}(Q), \dnx{}(Q), \dz{}(Q)\rangle$ w.r.t.\ $\mD$ it holds that $\dx{}(Q) \neq \emptyset$ and $\dnx{}(Q)~\neq~\emptyset$.
	\item \label{prop:properties_of_q-partitions:enum:D+=d_i_is_q-partition_and_lower_bound_of_queries} If $|\mD|\geq 2$, then 
	\begin{enumerate}
		\item $Q:=U_\mD\setminus\md_i$ is a query w.r.t.\ $\mD$ and $\langle\mo,\mb,\Tp,\Tn\rangle_\RQ$ for all $\md_i\in\mD$,
		\item $\langle \setof{\md_i}, \mD \setminus \setof{\md_i}, \emptyset\rangle$ is the q-partition associated with $Q$, and
		\item a lower bound for the number of queries w.r.t.\ $\mD$ and $\langle\mo,\mb,\Tp,\Tn\rangle_\RQ$ is $|\mD|$.
	\end{enumerate}
\end{enumerate}
\end{proposition}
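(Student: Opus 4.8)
The plan is to derive most items from a single reformulation (item~2) together with one minimality-preservation lemma, and to treat item~7 as the only genuinely constructive statement. I would begin with item~2, which just unfolds the definitions. Writing $DPI^{+}:=\tuple{\mo,\mb,\Tp\cup\setof{Q},\Tn}_\RQ$ and $DPI^{-}:=\tuple{\mo,\mb,\Tp,\Tn\cup\setof{Q}}_\RQ$, and using $U_{\Tp\cup\setof{Q}}=U_\Tp\cup Q$, the KB whose validity decides membership of $\mo\setminus\md_i$ is $\mo^{*}_i\cup Q$ for $DPI^{+}$ and $\mo^{*}_i$ for $DPI^{-}$. Since each $\md_i\in\mD$ is a minimal diagnosis w.r.t.\ the original DPI, Proposition~\ref{prop:notions_equiv} guarantees that $\mo^{*}_i$ already respects $\RQ$ and entails no $\tn\in\Tn$; hence $\mo\setminus\md_i$ is invalid w.r.t.\ $DPI^{-}$ precisely when $\mo^{*}_i\models Q$ (the only newly added constraint), i.e.\ precisely when $\md_i\in\dx{}(Q)$, and invalid w.r.t.\ $DPI^{+}$ precisely when $\mo^{*}_i\cup Q$ violates some $x\in\RQ\cup\Tn$, i.e.\ precisely when $\md_i\in\dnx{}(Q)$. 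Complementing yields the $\dz{}$ statement, so item~2 holds.

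Item~1 then needs only $\dx{}(Q)\cap\dnx{}(Q)=\emptyset$, since $\dz{}(Q)$ is disjoint from both blocks and the three cover $\mD$ by construction. If some $\md_i$ lay in both, then $\mo^{*}_i\models Q$ together with idempotence~\ref{logic:cond2} (iterated over the formulas of $Q$) makes $\mo^{*}_i\cup Q$ entailment-equivalent to $\mo^{*}_i$, so any violation of an $x\in\RQ\cup\Tn$ by $\mo^{*}_i\cup Q$ would already hold for $\mo^{*}_i$, contradicting that $\mo^{*}_i$ is a solution KB. Items~3 and~4 are then immediate: $\dx{}(Q),\dnx{}(Q),\dz{}(Q)$ are functionally determined by $Q$, giving uniqueness; and $\dx{}(Q)=\setof{\md_i\in\mD\mid\mo^{*}_i\models Q}$ literally says that every formula of $Q$ is a common entailment of all $\mo^{*}_i$ with $\md_i\in\dx{}(Q)$.

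The bridge to items~5 and~6 is a minimality-preservation lemma: if $\md_i\in\minD_{\tuple{\mo,\mb,\Tp,\Tn}_\RQ}$ is still a diagnosis w.r.t.\ $DPI^{+}$ (resp.\ $DPI^{-}$), then it is still minimal there. I would argue by contradiction: a diagnosis $\md'\subsetneq\md_i$ w.r.t.\ $DPI^{\pm}$ makes $(\mo\setminus\md')\cup\mb\cup U_\Tp$ respect $\RQ$ and entail no $\tn\in\Tn$ --- directly in the $DPI^{-}$ case, and by monotonicity~\ref{logic:cond1} from the witnessing superset that additionally contains $Q$ in the $DPI^{+}$ case --- so $\md'$ would already be a diagnosis of the original DPI, contradicting minimality of $\md_i$. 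Combined with item~2 this gives, for every $\md_i\in\mD$, the equivalences $\md_i\notin\minD_{DPI^{+}}\Leftrightarrow\md_i\in\dnx{}(Q)$ and $\md_i\notin\minD_{DPI^{-}}\Leftrightarrow\md_i\in\dx{}(Q)$. Reading Definition~\ref{def:query} through these equivalences shows that $X$ is a query iff $\dx{}(X)\neq\emptyset$ and $\dnx{}(X)\neq\emptyset$ (item~5), and item~6 is the specialization to an actual query.

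For item~7 with $Q:=U_\mD\setminus\md_i$ I would compute the q-partition directly. Each formula of $Q$ lies in $\mo\setminus\md_i\subseteq\mo^{*}_i$, so extensivity~\ref{logic:cond3} and monotonicity give $\mo^{*}_i\models Q$, i.e.\ $\md_i\in\dx{}(Q)$. For $j\neq i$ the key is $(\mo\setminus\md_j)\cup Q\supseteq\mo\setminus(\md_i\cap\md_j)$; since distinct minimal diagnoses are mutually non-inclusive, $\md_i\cap\md_j$ is a proper subset of $\md_j$ and thus not a diagnosis, so $\mo\setminus(\md_i\cap\md_j)$ is invalid, and monotonicity propagates the violation to $\mo^{*}_j\cup Q$, placing $\md_j\in\dnx{}(Q)$. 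With item~1 the q-partition is forced to be exactly $\tuple{\setof{\md_i},\mD\setminus\setof{\md_i},\emptyset}$ (item~7(b)); both nonempty blocks follow from $|\mD|\geq2$, and $Q\neq\emptyset$ because $\md_j\not\subseteq\md_i$, so item~5 certifies $Q$ as a query (item~7(a)); finally the $|\mD|$ choices of $\md_i$ produce $|\mD|$ distinct q-partitions, each realized by a query, and since item~3 makes $Q\mapsto\Pt(Q)$ a function, distinct q-partitions force distinct queries, yielding the lower bound $|\mD|$ (item~7(c)). I expect the minimality-preservation lemma to be the main obstacle: it is the one spot where the gap between Definition~\ref{def:query}'s ``ceases to be a \emph{minimal} diagnosis'' and the q-partition blocks' ``$\mo\setminus\md_i$ becomes invalid'' must be closed, and it is easy to overlook that a diagnosis might survive yet lose minimality; getting the monotonicity direction right there, and making the idempotence step of item~1 rigorous for a multi-formula $Q$, is where the care lies, the remainder being definition-chasing.
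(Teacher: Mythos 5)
The paper offers no proof of this proposition to compare against: it is imported, with citation, from \cite[Sec.~7.3 -- 7.6]{Rodler2015phd}. Your proposal therefore has to stand on its own, and it does --- the argument is correct. Reducing everything to item~2 plus a minimality-preservation lemma is exactly the right organization: the only genuinely delicate content here is the bridge between Definition~\ref{def:query} (query-ness phrased as loss of \emph{minimal}-diagnosis status in the two updated DPIs) and Definition~\ref{def:q-partition} (the blocks $\dx{}(Q)$ and $\dnx{}(Q)$ phrased via entailment and violation), and your lemma --- that a minimal diagnosis of the original DPI which survives as a diagnosis of $DPI^{\pm}$ also survives as a \emph{minimal} one, since by monotonicity any smaller diagnosis of $DPI^{\pm}$ would already have been a diagnosis of the original DPI --- closes precisely that gap; this is the step most easily fumbled, and you identify and prove it cleanly. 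The remaining steps (idempotence for disjointness in item~1, Proposition~\ref{prop:notions_equiv} to shuttle between diagnosis-hood and validity, the extensivity/monotonicity computation of the q-partition of $U_\mD\setminus\md_i$, and functionality of $Q\mapsto\Pt(Q)$ for 7(c)) all check out. One presentational wrinkle: in item~7 you invoke item~1 before $Q:=U_\mD\setminus\md_i$ has been certified as a query, whereas item~1 as stated presupposes a query; this is harmless because your disjointness argument never uses query-ness (alternatively, first note both blocks are nonempty, certify query-ness via item~5, and only then apply item~1 literally), but the final write-up should make that ordering explicit.
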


Algorithm~\ref{algo:inter_onto_debug} describes the overall interactive debugging process that has already been sketched and discussed in Section~\ref{sec:Introduction}. It explicates in formal terms what is depicted by Figure~\ref{fig:interactive_debugging_workflow}. It first computes a set of leading diagnoses (\textsc{computeLeadingDiagnoses}), then uses some probability measure $p$ to extract the most probable leading diagnosis (\textsc{getBestDiagnosis}). If there are no probabilities given, then $p$ is specified so as to assign the maximal probability to diagnoses of minimal cardinality. If the probability of the best diagnosis $\md$ is already above the threshold $1-\sigma$ (a common setting for $\sigma$ is $0.05$), i.e.\ the stop criterion is met, then the maximal (canonical) solution KB according to $\md$ is returned and the debugging session terminates. Otherwise, a sufficiently good query, where goodness is measured in terms of some query quality measure $m$, is computed (\textsc{calcQuery}) and shown to the user. The latter is modeled as a function $Q \mapsto u(Q) \in \setof{t,f}$ that maps any query $Q$ to a positive ($t$) or negative ($f$) answer. After incorporating the answered query as a new test case into the DPI (\textsc{updateDPI}), the next iteration starts and uses the new DPI instead of the original one. Note that the \textsc{updateDPI} function subsumes further operations which we do not consider in detail such as the (Bayesian) update \cite{dekleer1987} of the probability measure $p$ based on the new information given by the answered query (see \cite[Sec.~9.2.4]{Rodler2015phd}). Further information on this algorithm and a much more detailed description including proofs of correctness and optimality can be found in \cite[Chap.~9]{Rodler2015phd}.

Finally, the major advantage of interactive debugging techniques can be summarized as follows:\vspace{3pt}

\noindent\fcolorbox{black}{light-gray1}{\parbox[c][8.8em][c]{0.975\linewidth}{\vspace{-1pt}
A user can debug their faulty KB \textbf{without} needing to analyze 
\vspace{-1pt}
\begin{itemize}[itemsep=0pt]
	\item \emph{which} entailments do or do not hold \emph{in the faulty KB}
	\item \emph{why} certain entailments do or do not hold \emph{in the faulty KB} or 
	\item \emph{why} exactly \emph{the faulty KB} does not meet the requirements $\RQ$ and test cases $\Tp, \Tn$
\end{itemize}
\vspace{-1pt}
\textbf{by just} answering queries
\emph{whether} certain entailments should or should not hold \emph{in the intended domain}, \textbf{under the guarantee} that the resulting solution KB will feature exactly the desired semantics.
\vspace{-9pt}
}}

\clearpage
\section{Query Computation}
\label{sec:QueryComputation}
In this section we will present the implementation of the $\textsc{calcQuery}$ function in Algorithm~\ref{algo:inter_onto_debug}. 
In particular, we will
\begin{enumerate} 
\item discuss and study existing quantitative measures $m$ already used in works on KB debugging \cite{ksgf2010, Shchekotykhin2012, Rodler2013} that can be employed in the \textsc{calcQuery} function in order to establish the goodness of queries (Section~\ref{sec:ExistingActiveLearningMeasuresForKBDebugging}), 
\item adapt various general active learning measures \cite{settles2012} for use within the function \textsc{calcQuery} in the KB debugging framework and provide a comprehensive mathematical analysis of them (Section~\ref{sec:NewActiveLearningMeasuresForKBDebugging}), 
\item study in depth a recently proposed measure $\mathsf{RIO}$ (presented in \cite{Rodler2013}) that is embedded in a reinforcement learning scheme and particularly tackles the problem that the performance (i.e.\ the number of queries a user must answer) of existing measures depends strongly on the quality of the initially given fault information $p$ (Section~\ref{sec:rio}),
\item study all discussed measures (i.e.\ functions) with regard to which input yields their \emph{optimal value}, define a plausible \emph{preference order on queries}, called the discrimination-preference relation, and analyze for all discussed measures how far they preserve this order, characterize a \emph{superiority relation on query quality measures} based on the (degree of) their compliance with the discrimination-preference relation and figure out superiority relationships between the discussed measures, formalize the notions of \emph{equivalence between two measures} and examine situations or describe subsets of all queries, respectively, where different measures are equivalent to one another (Sections~\ref{sec:QPartitionSelectionMeasures} -- \ref{sec:rio}), 
\item derive improved versions from some query quality measures -- including those used in \cite{Shchekotykhin2012,Rodler2013,ksgf2010} -- to overcome shortcomings unveiled in the conducted in-depth theoretical evaluations (Sections~\ref{sec:ExistingActiveLearningMeasuresForKBDebugging} -- \ref{sec:rio}),
%
\item exploit the carried out optimality analyses to derive from all the discussed (quantitative) query quality measures \emph{qualitative} goodness criteria, i.e.\ properties, of queries which allow us to derive \emph{heuristics} to design efficient search strategies to find an optimal query (Sections \ref{sec:ExistingActiveLearningMeasuresForKBDebugging} -- \ref{sec:QPartitionRequirementsSelection}), 
\item detail a new, generic and efficient way of generating q-partitions \emph{without relying on (expensive) reasoning services at all} in a best-first order which facilitates the finding of q-partitions that are 
arbitrarily close to the optimal value w.r.t.\ a given query quality measure
  (Section~\ref{sec:FindingOptimalQPartitions}). Further on, we will
\item demonstrate how set-minimal queries can be found in a best-first order (e.g.\ minimum-size or minimal-answer-fault-probability queries first), which is not possible with existing methods (Section~\ref{sec:FindingOptimalQueriesGivenAnOptimalQPartition}), we will
\item show that that minimization of a query is equivalent to solving a (small and restricted) hitting set problem (Section~\ref{sec:FindingOptimalQueriesGivenAnOptimalQPartition}), and we will 
\item explain how a query can be enriched with additional logical formulas that have a simple predefined (syntactical) structure (e.g.\ formulas of the form $A \to B$ for propositional symbols $A,B$) in a way that the optimality of the query w.r.t.\ the goodness measure $m$ is preserved (Section~\ref{sec:EnrichmentOfAQuery}). This should grant a larger pool of formulas to choose from when reducing the query to a set-minimal set of logical formulas and increase the query's likeliness of being well-understood and answered correctly by the interacting user. Finally, we will
\item explicate how such an enriched query can be optimized, i.e.\ minimized in size while giving some nice guarantees such as the inclusion of only formulas with a simple syntactical structure in the query, if such a query exists (Section~\ref{sec:QueryOptimization}).
\end{enumerate}

The overall procedure $\textsc{calcQuery}$ to compute a query is depicted by Algorithm~\ref{algo:query_comp}. The algorithm takes as an input amongst others some query quality measure $m$ that serves as a preference criterion for the selection of a query. More concretely, $m$ maps a query $Q$ to some real number $m(Q)$ that quantifies the goodness of $Q$ in terms of $m$.
\begin{definition}\label{def:query_quality_measure}
Let $\langle\mo,\mb,\Tp,\Tn\rangle_\RQ$ be a DPI and $\mD \in \minD_{\langle\mo,\mb,\Tp,\Tn\rangle_\RQ}$ be a set of leading diagnoses. Then we call a function $m: \mQ_{\mD,\langle\mo,\mb,\Tp,\Tn\rangle_\RQ} \to \mathbb{R}$ that maps a query $Q \in \mQ_{\mD,\langle\mo,\mb,\Tp,\Tn\rangle_\RQ}$ to some real number $m(Q)$ a \emph{query quality measure}. Depending on the choice of $m$, optimality of $Q$ is signalized by $m(Q) \rightarrow \max$ or $m(Q) \rightarrow \min$.
\end{definition}
So, the problem to be solved by the function \textsc{calcQuery} is:
\begin{prob_def}\label{prob_def:calc_query}
Given some measure $m$ and a set of leading diagnoses $\mD$ w.r.t.\ some DPI $\langle\mo,\mb,\Tp,\Tn\rangle_\RQ$, the task is to find a query $Q$ with optimal (or sufficiently good) value $m(Q)$ among all queries $Q\in\mQ_{\mD,\langle\mo,\mb,\Tp,\Tn\rangle_\RQ}$.
\end{prob_def}

\begin{algorithm}[t]
\small
\caption[Query Computation]{Query Computation (\textsc{calcQuery})}\label{algo:query_comp}
\begin{algorithmic}[1]
\Require DPI $\langle\mo,\mb,\Tp,\Tn\rangle_\RQ$, leading diagnoses $\mD \subseteq \minD_{\langle\mo,\mb,\Tp,\Tn\rangle_\RQ}$, probability measure $p$ (used to compute formula and diagnoses fault probabilities), 
query quality measure $m$, threshold $t_m$ specifying the region of optimality around the theoretically optimal value $m_{opt}$ of $m$ (queries $Q$ with $|m(Q) - m_{opt}| \leq t_m$ are considered optimal),
parameters $qSelParams = \tuple{strat,time,n_{\min},n_{\max}}$ to steer query selection w.r.t.\ a given q-partition 
(explicated in Algorithm~\ref{algo:select_query_for_q-partition} and Section~\ref{sec:FindingOptimalQueriesGivenAnOptimalQPartition}),
sound and complete reasoner $Rsnr$ (w.r.t.\ the used logic $\mathcal{L}$), set $ET$ of entailment types (w.r.t.\ the used logic $\mathcal{L}$)
\Ensure an optimal query $Q$ w.r.t.\ $\mD$ and $\langle\mo,\mb,\Tp,\Tn\rangle_\RQ$ as per $m$ and $t_m$, if existent; the best found query, otherwise
\State $r_m \gets \Call{qPartitionReqSelection}{m}$ 
\State $\Pt \gets \Call{findQPartition}{\mD,p,t_m,r_m}$
\State $Q \gets \Call{selectQueryForQPartition}{\Pt,p,qSelParams}$
\State $Q' \gets \Call{enrichQuery}{\langle\mo,\mb,\Tp,\Tn\rangle_\RQ, Q, \Pt, Rsnr, ET}$    \Comment{optional}
\State $Q^* \gets \Call{optimizeQuery}{Q',Q,\Pt,\langle\mo,\mb,\Tp,\Tn\rangle_\RQ,p}$												 		\Comment{optional}	
\State \Return $Q^*$
\end{algorithmic}
\normalsize
\end{algorithm}

\label{etc:discussion_quantitative_vs_qualitative_requirements} However, such a quantitative query quality measure $m$ alone helps only to a limited extent when it comes to devising an efficient method of detecting an optimal query w.r.t.\ $m$. More helpful is some set of \emph{qualitative} criteria in terms of ``properties'' of an optimal query that provides a means of guiding a systematic search. 

To realize this, imagine a game where the player must place $k$ tokens into $b$ buckets and afterwards gets a number of points for it. Assume that the player does not know anything about the game except that the goal is to maximize their points. What is the best the player can do?
This game can be compared to a function that must be optimized, about whose optimal input values one does not know anything. This function can be compared to a black-box. Provided with an input value, the function returns an output value. No more, no less. If the given output value is not satisfactory, all one can do is try another input value and check the result. Because one does not know in which way to alter the input, or which properties of it must be changed, in order to improve the output. In a word, one finds themself doing a brute force search. But unfortunately even worse than that. Even if one happens to come across the optimum, they would not be aware of this and stop the search. Hence, it is a brute force search with an unknown stop condition. 
Of course, one could try to learn, i.e.\ remember which token positions yielded the highest reward and in this way try to figure out the optimal token positions. However, given a huge number of possible token configurations, in the case of our game $b^k$ many,
this will anyway take substantial time given non-trivial values of $b$ and $k$. Similar is the situation concerning queries (or better:\ the associated q-partitions), where we have available $O(b^k)$ different possible input values given $b = 3$ and $k$ leading diagnoses (an explanation of this will be given in Section~\ref{sec:RelatedWork}), and must filter out one which yields a (nearly) optimal output. Consequently, one must study and analyze the functions representing the query quality measures in order to extract properties of optimal queries and be able to do better than a brute force search or, equivalently, overcome shortcoming~\ref{enum:drawback_of_std_approach_1} of the standard approaches Std (cf.\ Section~\ref{sec:Introduction}).

As we know from Section~\ref{sec:InteractiveKnowledgeBaseDebuggingBasics}, the ``properties'' of a query $Q$ are determined by the associated q-partition $\Pt(Q)$. On the one hand, $\Pt(Q)$ gives a prior estimate which and how many diagnoses might be invalidated after classification of $Q$ as a positive or negative test case (\emph{elimination rate}) and, on the other hand, together with the leading diagnoses probability distribution $p$, it provides a way to compute the likeliness of positive and negative query answers and thereby gauge the uncertainty (\emph{information gain}) of $Q$. Therefore, in Section~\ref{sec:QPartitionRequirementsSelection}, we will 
focus on a translation of measures $m$ into explicit requirements $r_m$ (function \textsc{qPartitionReqSelection}) that a q-partition $\Pt$ must meet in order for a query $Q$ with associated q-partition $\Pt(Q) = \Pt$ to optimize $m(Q)$.
So, the idea is to first search for a q-partition with desired properties $r_m$, and then compute a query for this q-partition.

In Section~\ref{sec:FindingOptimalQPartitions} we will show how the usage of requirements $r_m$ instead of $m$ can be exploited to steer the search for an optimal q-partition and present strategies facilitating to detect 
arbitrarily close-to-optimum q-partitions
completely without reasoner support (\textsc{findQPartition}). After a suitable partition $\Pt$ has been identified, a query $Q$ with q-partition $\Pt(Q) = \Pt$ is calculated such that $Q$ is optimal as to some criterion such as minimum cardinality or maximum likeliness 
of being answered correctly (\textsc{selectQueryForQPartition}). Then, in order to come up with a query that is as simple and easy to answer as possible for the respective user $U$, this query $Q$ can optionally be enriched by additional logical sentences (\textsc{enrichQuery}) by invoking a reasoner for entailments calculation. This causes a larger pool of sentences to select from in the query optimization step (\textsc{optimizeQuery}). The latter constructs a set-minimal query where most complex sentences in terms of given fault estimates w.r.t.\ logical symbols (operators) and non-logical symbols (elements of the signature, e.g.\ predicate and constant names) are eliminated from $Q$ and the most simple ones retained under preservation of $\Pt(Q) = \Pt$.\footnote{We assume for simplicity that the given fault estimates be subsumed by the probability measure $p$. For an in-depth treatment of different types of fault estimates and methods how to compute these estimates the reader is referred to \cite[Sec.~4.6]{Rodler2015phd}.} 

Note that in Algorithm~\ref{algo:query_comp} only the functions \textsc{enrichQuery} and \textsc{optimizeQuery} rely on a reasoner, and none of the functions involves user interaction. Particularly, this means that the algorithm can optionally output a query $Q$ with 
an
optimal q-partition without the need to use a reasoning service.
This is because reasoning is only required for query optimization, which is optional. If query optimization is activated, it is only employed for this \emph{single} query $Q$ with optimal q-partition. Moreover, query optimization (functions \textsc{enrichQuery} and \textsc{optimizeQuery}) overall requires only a polynomial number of reasoner calls.

\subsection{Related Work}
\label{sec:RelatedWork}

Existing works~\cite{ksgf2010,Rodler2013}, on the other hand, do not exploit qualitative requirements $r_m$ but rely on the measure $m$ using more or less a brute-force method for query selection. In the worst case, this method requires the precalculation of one (enriched) query (by use of a reasoner) for $O(2^{|\mD|})$ (q-)partitions w.r.t.\ $\mD$ for each of which the $m$ measure must be evaluated in order to identify one of sufficient quality. What could make the time complexity issues even more serious in this case is that not each partition of $\mD$ is necessarily a q-partition, and deciding this for a single partition requires $O(|\mD|)$ potentially expensive calls to a reasoner. More concretely, the method applied by these works is the following: 

\begin{enumerate}
	\item Start from a set $\dx{}(Q) \subset \mD$, use a reasoner to compute an (enriched) query candidate w.r.t.\ $\dx{}(Q)$, i.e.\ a set of common entailments $Q$ of all $\mo^*_i := (\mo \setminus \md_i) \cup \mb \cup U_\Tp$ where $\md_i\in\dx{}(Q)$, and, if $Q\neq \emptyset$ (first criterion of Definition~\ref{def:query}), use a reasoner again for the completion of the (q-)partition, i.e.\ the allocation of all diagnoses $\md \in \mD\setminus\dx{}(Q)$ to the respective set among $\dx{}(Q)$, $\dnx{}(Q)$ and $\dz{}(Q)$. The latter is necessary for the verification of whether $Q$ leads to a q-partition indeed (equivalent to the second criterion of Definition~\ref{def:query}), i.e.\ whether $\dnx{}(Q) \neq \emptyset$ (cf.\ Proposition~\ref{prop:properties_of_q-partitions},(\ref{prop:properties_of_q-partitions:enum:for_each_q-partition_dx_is_empty_and_dnx_is_empty}.)). Another reason why the completion of the q-partition is required is that query quality measures use exactly the q-partition for the assessment of the query.
	\item Continue this procedure for different sets $\dx{}(Q) \subset \mD$ until $Q$ evaluates to a (nearly) optimal $m$ measure.
	\item Then minimize $Q$ under preservation of its q-partition $\Pt(Q)$, again by means of an expensive reasoning service. The query minimization costs $O\left(|Q_{\min}| \log_2\left(\frac{|Q|}{|Q_{\min}|}\right)\right)$ calls to a reasoner and yields just \emph{any} subset-minimal query $Q_{\min}$ with q-partition $\Pt(Q)$, i.e.\ no guarantees whatsoever can be made about the output of the minimization.
\end{enumerate}

In \cite{Shchekotykhin2012}, qualitative requirements $r_m$ were already exploited together with the CKK algorithm \cite{Korf1998} for the Two-Way Number Partitioning Problem\footnote{Given a set of numbers, the \emph{Two-Way Number Partitioning Problem} is to divide them into two subsets, so that the sum of the numbers in each subset are as nearly equal as possible \cite{Korf1998}.} to accelerate the search for a (nearly) optimal query w.r.t.\ $m\in\setof{\text{entropy},\text{split-in-half}}$. However, the potential overhead of computing partitions that do not turn out to be q-partitions, and still an extensive use of the reasoner, is not solved by this approach. We tackle these problems in Section~\ref{sec:FindingOptimalQPartitions}. Apart from that, we derive qualitative requirements for numerous other measures which are not addressed in \cite{Shchekotykhin2012}. 

Furthermore, a maximum of $2^{|\mD|}$ (the number of subsets $\dx{}(Q)$ of $\mD$) of all theoretically possible $3^{|\mD|} - 2^{|\mD|+1} + 1$ q-partitions w.r.t.\ $\mD$ are considered in the existing approaches.\footnote{The number of theoretically possible q-partitions w.r.t.\ $\mD$ results from taking all partitions of a $|\mD|$-element set into 3 parts (labeled as $\dx{},\dnx{}$ and $\dz{}$) neglecting all those assigning the empty set to $\dx{}$ and/or to $\dnx{}$.} This incompleteness is basically no problem as $O(2^{|\mD|})$ is still a large number for practicable cardinalities of leading diagnoses sets, e.g.\ $|\mD|\approx 10$, that usually enables the detection of (nearly) optimal q-partitions. More problematic is the fact that no guarantee is given that no optimal q-partitions are missed while suboptimal ones are captured. Which q-partition is found for some $\dx{}(Q)$, i.e.\ to which of the sets $\dx{}(Q), \dnx{}(Q), \dz{}(Q)$ each diagnosis in $\mD \setminus \dx{}(Q)$ is assigned, depends on the (types of) common entailments calculated for the diagnoses in $\dx{}(Q)$ (cf.\ \cite[Sec.~8.2]{Rodler2015phd}). 
For example, for $|\mD| = 10$ and the set $\dx{}(Q) := \setof{\md_1,\dots,\md_5}$ there may exist the q-partitions $\Pt_1:= \langle\setof{\md_1,\dots,\md_5},\setof{\md_6,\md_7},\setof{\md_8,\dots,\md_{10}}\rangle$ and $\Pt_2 :=\langle\setof{\md_1,\dots,\md_5},\setof{\md_6,\dots,\md_{10}},\emptyset\rangle$. With existing approaches, $\Pt_1$ might be found and $\Pt_2$ might not, depending on the query $Q$ (i.e.\ the set of common entailments) calculated from $\dx{}(Q)$ that is utilized to complete the (q-)partition. However, $\Pt_2$ is strictly better even in theory than $\Pt_1$ since a query $Q_2$ associated with $\Pt_2$ is more restrictive in terms of diagnosis elimination than a query $Q_1$ for $\Pt_1$ since $\dz{}(Q_2) =\emptyset$ (cf.\ Proposition~\ref{prop:properties_of_q-partitions},(\ref{prop:properties_of_q-partitions:enum:dx_dnx_dz_contain_exactly_those_diags_that_are...}.)). This can be seen by investigating the set of invalidated leading diagnoses for positive and negative answers. That is, for positive answer, $Q_2$ eliminates $\setof{\md_6,\dots,\md_{10}}$ whereas $Q_1$ eliminates only $\setof{\md_6,\md_7}$, and for negative answer, both eliminate $\setof{\md_1,\dots,\md_5}$. 

Addressing i.a.\ this problem, \cite[Chap.~8]{Rodler2015phd} proposes and analyzes an improved variant of the basic query computation methods employed by \cite{ksgf2010,Rodler2013}. Some results shown in this work are the following: 
\begin{enumerate}
	\item \emph{Empty $\dz{}$:} Postulating the computation of (at least) a particular well-defined set of entailments when computing a set of common entailments of $\dx{}(Q)$, it can be guaranteed that only queries with empty $\dz{}$ will be computed (cf.\ \cite[Prop.~8.3]{Rodler2015phd}).
	\item \emph{No Duplicates:} The improved variant guarantees that any query (and q-partition) can be computed only once (cf.\ \cite[Prop.~8.2]{Rodler2015phd}).
	\item \emph{Completeness w.r.t.\ $\dx{}$:} The improved variant is complete w.r.t.\ the set $\dx{}$, i.e.\ if there is a query $Q$ with $\dx{}(Q) = Y$, then the improved variant will return a query $Q'$ with $\dx{}(Q') = Y$ (cf.\ \cite[Prop.~8.4]{Rodler2015phd}). 
	\item \emph{Optimal Partition Extraction w.r.t.\ Entailment Function:} If a query $Q$ with $\dx{}(Q) = Y$ is returned by the improved variant, then $Q$ is a query with minimal $|\dz{}(Q)|$ among all queries $Q$ with $\dx{}(Q) = Y$ computable by the used entailment-computation function (cf.\ \cite[Prop.~8.6]{Rodler2015phd}). 
\end{enumerate}
Note that from these properties one can derive that suboptimal q-partitions (with non-empty $\dz{}$) will (automatically) be ignored while a strictly better q-partition for each suboptimal one will be included in the query pool.

These are clearly nice properties of this improved variant. However, some issues persist (cf.\ \cite[Sec.~8.6]{Rodler2015phd}), e.g.\ the still extensive reliance upon reasoning services. Another aspect subject to improvement is the query minimization which does not enable to extract a set-minimal query with desired properties, as discussed above. However, from the point of view of how well a query might be understood by an interacting user, of course not all minimized queries can be assumed equally good in general. 

Building on the ideas of \cite{Rodler2015phd}, e.g.\ bullet (1) above, we will provide a definition of a canonical query and a canonical q-partition that will be central to our proposed heuristic query search. Roughly, a canonical query constitutes a well-defined set of common entailments that needs to be computed for some $\dx{}$ in order to guarantee the fulfillment of all properties (1)--(4) above and to guarantee that no reasoning aid is necessary during the entire query computation process. Further on, we will devise a method that guarantees that the best minimized query w.r.t.\ some desirable and plausible criterion is found. Importantly, our definitions of a canonical query and q-partition serve to overcome the non-determinism given by the dependence upon the \emph{particular} entailments or entailment types used for query computation that existing approaches suffer from.

\subsection{Active Learning in Interactive Debugging}
\label{sec:ActiveLearningInInteractiveOntologyDebugging}
In this section we discuss various (active learning) criteria that can act as a query quality measure which is given as an input $m$ to Algorithm~\ref{algo:query_comp}.

A supervised learning scenario where the learning algorithm is allowed to choose the training data from which it learns is referred to as \emph{Active Learning} \cite{settles2012}.
%
In active learning, an oracle can be queried to label any (unlabeled) instance 
belonging to a predefined input space.
The set of labeled instances is then used to build a set of most probable hypotheses. Refinement of these hypotheses is accomplished by sequential queries to the oracle and by some update operator that takes a set of hypotheses and a new labeled instance as input and returns a new set of most probable hypotheses. 
%

A first necessary step towards using active learning in our interactive debugging scenario is to think of what ``input space'', ``(un)labeled instance'', ``oracle'', ``hypothesis'' and ``update operator'' mean in this case. In terms of a query as defined in Section~\ref{sec:InteractiveKnowledgeBaseDebuggingBasics}, the input space (w.r.t.\ a set of leading diagnoses $\mD$) is the set $\mQ_{\mD,\tuple{\mo,\mb,\Tp,\Tn}_{\RQ}} := \setof{Q\,|\,Q \text{ is a query w.r.t. }\mD \text{ and }\tuple{\mo,\mb,\Tp,\Tn}_{\RQ}}$. An unlabeled instance can be seen as an (unanswered) query, denoted by $(Q,?)$, and a labeled instance as a query together with its answer, written as $(Q,u(Q))$.
The oracle in this case corresponds to the user interacting with the debugging system. Hypothesis refers to a model learned by the algorithm that is consistent with all instances labeled so far; in the debugging case a hypothesis corresponds to a diagnosis and the most likely hypotheses correspond to the set of leading diagnoses $\mD$. In addition to the most likely hypotheses there is often an associated probability distribution. In our debugging scenario this is the probability distribution $p$ assigning a probability to each leading diagnosis in $\mD$. The Bayesian update of $p$ together with the computation of new leading diagnoses 
(cf.\ Algorithm~\ref{algo:inter_onto_debug})
act as update operator.

An active learning strategy can be characterized by two parameters, the \emph{active learning mode} and the \emph{selection criterion}~\cite{settles2012}. The mode refers to the way how the learning algorithm can access instances in the input space. The selection criterion, usually given as some quantitative measure, specifies whether an accessed instance should be shown to the oracle or not. So, the selection criterion corresponds to the measure $m$ described above that is used for partition assessment in Algorithm~\ref{algo:query_comp}.

In~\cite{settles2012}, different active learning modes are distinguished. 
These are \emph{membership query synthesis~(QS)}, \emph{stream-based selective sampling~(SS)} and \emph{pool-based sampling~(PS)}. QS assumes that the learner at each step generates any unlabeled instance within the input space as a query without taking into account the distribution of the input space. In this vein, lots of meaningless queries may be created, which is a major downside of this approach.
In SS mode, the learner is provided with single instances sampled from the input space as per the underlying distribution, one-by-one, and for each instance decides whether to use it as a query or not. 
PS, in contrast, assumes availability of a pool of unlabeled instances, from which the best instance according to some measure is chosen as a query. 
Contrary to the works~\cite{ksgf2010, Rodler2013, Rodler2015phd} which rely on a PS scenario, thus requiring the precomputation of a pool of queries (or q-partitions), we implement an approach that unites characteristics of QS and PS. Namely, we generate q-partitions one-by-one in a best-first manner, 
quasi on demand, and proceed until a (nearly) optimal q-partition is found. This is similar to what is done for the entropy ($\mathsf{ENT}$) measure (see below) in~\cite[Sec.~4]{Shchekotykhin2012}, but we will optimize and generalize this approach in various aspects.

In the subsequent subsections we will describe and analyze diverse active learning query quality measures and adapt them to our heuristic-based approach. 

\begin{remark}\label{rem:query_quality_measures_called_q-partition_quality_measures}
Although these measures serve the purpose of the determination of the best \emph{query} w.r.t.\ the diagnosis elimination rate (cf.\ \cite{Rodler2013} and Section~\ref{sec:rio}), the information gain (cf.\ \cite{dekleer1987,Shchekotykhin2012}) or other criteria (cf.\ \cite{settles2012}), a more appropriate name would be \emph{q-partition quality measures}.
The first reason for this is that we will, in a two step optimization approach, first optimize the q-partition w.r.t.\ such a measure and second compute an optimal query (amongst exponentially many, in general) for this fixed q-partition w.r.t.\ different criteria such as the answer fault probability of the query. 
The second reason is that the (adapted) active learning measures will only be relevant to the primary optimization step w.r.t.\ the q-partition in the course of this approach. This means, all properties of an optimal query w.r.t.\ any of the measures discussed in the following subsections \ref{sec:QPartitionSelectionMeasures} and \ref{sec:rio} will solely depend on the q-partition associated with this query and not on the query itself. Nevertheless, since we defined these measures to map a \emph{query} to some value, we will stick to the denotation \emph{query quality measure} instead of \emph{q-partition quality measure} throughout the rest of this work.\qed
\end{remark} 

\subsubsection{Query Quality Measures: Preliminaries and Definitions}
\label{sec:QPartitionSelectionMeasures}
In the following we discuss various measures for query selection. In doing so, we draw on general active learning strategies, demonstrate relationships between diverse strategies and adapt them for employment in our debugging scenario, in particular within the function \textsc{findQPartition} in Algorithm~\ref{algo:query_comp}. Note that these active learning measures 
\emph{aim at minimizing the overall number of queries} until the correct solution KB is found. The minimization of a query in terms of number of logical formulas asked to the user is done in the \textsc{selectQueryForQPartition} and \textsc{optimizeQuery} functions in Algorithm~\ref{algo:query_comp}.

Concerning the notation, we will, for brevity, use $\mQ$ to refer to $\mQ_{\mD,\tuple{\mo,\mb,\Tp,\Tn}_{\RQ}}$ throughout the rest of Section~\ref{sec:ActiveLearningInInteractiveOntologyDebugging} unless stated otherwise. Moreover, we define according to \cite{dekleer1987,Shchekotykhin2012,Rodler2013,Rodler2015phd} 
\begin{align}
p(\mathbf{X}) := \sum_{\md \in \mathbf{X}} p(\md)      \label{eq:sum_of_probs_of_diagnoses_set}
\end{align}
for some set of diagnoses $\mathbf{X}$, with assuming that for $\mathbf{X} := \mD$ this sum is equal to 1 \cite{Rodler2015phd} (this can always be achieved by normalizing each diagnosis probability $p(\md)$ for $\md\in\mD$, i.e.\ by multiplying it with $\frac{1}{p(\mD)}$). Further, it must hold for each $\md \in \mD$ that $p(\md) > 0$,\label{etc:prob_of_each_diag_must_be_greater_zero} i.e.\ each diagnosis w.r.t.\ a given DPI is possible, howsoever improbable it might be (cf.\ \cite{Rodler2015phd}). For some $Q\in\mQ$ we define
\begin{align}
p(Q = t) := p(\dx{}(Q))+\frac{1}{2}p(\dz{}(Q))  \label{eq:p(Q=t)}
\end{align}
which is why 
\begin{align}
p(Q = f) = 1-p(Q=t) = p(\dnx{}(Q))+\frac{1}{2}p(\dz{}(Q))  \label{eq:p(Q=f)}
\end{align}
Further on, the posterior probability of a diagnosis can be computed by the Bayesian Theorem as
\begin{align*} 
p(\md\,|\,Q=a) = \frac{p(Q=a\,|\,\md)\;\,p(\md)}{p(Q=a)} 
\end{align*}
where 
\begin{equation}\label{eq:cond_query_prob}
p(Q=t\,|\,\md) := 
\begin{cases}
1, 						& \mbox{if } \md \in \dx{}(Q) \\   
0, 						& \mbox{if } \md \in \dnx{}(Q) \\
\frac{1}{2}, 	& \mbox{if } \md \in \dz{}(Q) 
\end{cases} 
\end{equation}
For further information on these definitions have a look at \cite[Chap.~4 and 9]{Rodler2015phd} and \cite{dekleer1987}. Finally, we note that we will treat the expression $0 \log_2 0$\label{convention:0log0_is_0} and as being equal to zero in the rest of this section.\footnote{This assumption is plausible since it is easy to show by means of the rule of de l'Hospital that $x \log_2 x$ converges to zero for $x \rightarrow 0^+$.}

In the following, we define some new notions about query quality measures and queries which shall prove helpful to formalize and state precisely what we want to mean when calling 
\begin{enumerate}
	\item a query $Q$ \emph{preferred} to another query $Q'$ by some measure $m$,
	\item a query $Q$ \emph{discrimination-preferred} to another query $Q'$,
	\item a query $Q$ \emph{theoretically optimal} w.r.t.\ some measure $m$,
	\item two measures \emph{equivalent} to each other,
	\item a measure $m$ \emph{consistent with the discrimination-preference relation} 
	\item a measure $m$ \emph{discrimination-preference relation preserving}, and
	\item one measure $m_1$ \emph{superior} to another measure $m_2$.
\end{enumerate}
Before giving formal definitions of the notions, we state roughly for all notions in turn what they mean:
\begin{enumerate}
	\item Means that the measure $m$ would select $Q$ as the best query among $\setof{Q,Q'}$.
	\item Means that in the worst case at least as many leading diagnoses (or: known hypotheses) are eliminated using $Q$ as a query than by using $Q'$ as a query, and in the best case more leading diagnoses (or: known hypotheses) are eliminated using $Q$ as a query than by using $Q'$ as a query.
	\item Means that no modification of the properties of $Q$ (or more precisely: the q-partition of $Q$, cf.\ Remark~\ref{rem:query_quality_measures_called_q-partition_quality_measures}) could yield a query that would be preferred to $Q$ by $m$.
	\item Means that both measures impose exactly the same preference order on any set of queries.
	\item Means that if $m$ prefers $Q$ to $Q'$, then $Q'$ cannot be discrimination-preferred to $Q$.
	\item Means that if $Q$ is discrimination-preferred to $Q'$, then $m$ also prefers $Q$ to $Q'$.
	\item Means that the set of query pairs for which $m_1$ satisfies the discrimination-preference relation is a proper superset of the set of query pairs for which $m_2$ satisfies the discrimination-preference relation.
\end{enumerate}

 Every query quality measure imposes a (preference) order on a given set of queries:
\begin{definition}\label{def:precedence_order_defined_by_q-partition_quality_measure} 
Let $m$ be a query quality measure, i.e.\ $m$ assigns a real number $m(Q)$ to every query $Q$. Then we want to say that \emph{$Q$ is preferred to $Q'$ by $m$}, formally $Q \prec_m Q'$, iff 
\begin{itemize}
	\item $m(Q) < m(Q')$ in case lower $m(Q)$ means higher preference of $Q$ by $m$,
	\item $m(Q) > m(Q')$ in case higher $m(Q)$ means higher preference of $Q$ by $m$.
\end{itemize}
\end{definition}
It can be easily verified that the preference order imposed on a set of queries by a measure $m$ is a strict order:
\begin{proposition}\label{prop:precedence_order_is_strict_order}
Let $m$ be a query quality measure. Then $\prec_m$ is a strict order, i.e.\ it is an irreflexive, asymmetric and transitive relation over queries.
\end{proposition}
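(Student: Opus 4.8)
The plan is to observe that $\prec_m$ is nothing but the pullback of the standard strict order on $\mathbb{R}$ along the function $m\colon \mQ \to \mathbb{R}$: depending on the measure, $Q \prec_m Q'$ abbreviates either $m(Q) < m(Q')$ (when lower values signal higher preference) or $m(Q) > m(Q')$ (when higher values do). The crucial preliminary point I would stress is that this direction is a fixed property of the given measure $m$ and therefore does not change between comparisons; once this is pinned down, the two cases are entirely symmetric, being interchanged by swapping the roles of $<$ and $>$ (equivalently, by replacing $m$ with $-m$). Hence it suffices to carry out the argument in, say, the ``lower-is-better'' reading, where $Q \prec_m Q' \iff m(Q) < m(Q')$, and to remark that the ``higher-is-better'' reading follows verbatim with $<$ replaced by $>$.

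With that reduction in place, each of the three defining properties of a strict order follows immediately from the corresponding property of $<$ on $\mathbb{R}$. For \emph{irreflexivity}, $Q \prec_m Q$ would require $m(Q) < m(Q)$, which contradicts the irreflexivity of $<$. For \emph{asymmetry}, if $Q \prec_m Q'$ then $m(Q) < m(Q')$; were $Q' \prec_m Q$ to hold as well, we would also have $m(Q') < m(Q)$, whence $m(Q) < m(Q)$ by transitivity of $<$, a contradiction. For \emph{transitivity}, from $Q \prec_m Q'$ and $Q' \prec_m Q''$ we read off $m(Q) < m(Q')$ and $m(Q') < m(Q'')$, so $m(Q) < m(Q'')$ by transitivity of $<$, i.e.\ $Q \prec_m Q''$.

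There is essentially no obstacle here: the entire content is that $m$ takes values in the totally ordered set $\mathbb{R}$ and that $<$ (respectively $>$) is already irreflexive, asymmetric, and transitive, all of which are preserved under composition with an arbitrary function. The only step deserving an explicit word is the one flagged above, namely that the preference direction attached to $m$ is uniform across all comparisons, so that one never mixes the ``$<$'' and ``$>$'' readings within a single chain; this guarantees that the three verifications above each stay within one fixed order relation on $\mathbb{R}$ and makes the conclusion immediate.
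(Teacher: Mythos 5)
Your proof is correct: the paper itself omits the argument (introducing the proposition with ``It can be easily verified that\dots''), and your observation that $\prec_m$ is just the pullback of the strict order $<$ (or $>$) on $\mathbb{R}$ along $m$, with the preference direction fixed once and for all by the measure, is precisely the intended verification. Nothing is missing; in particular you correctly do not claim totality, which indeed fails when $m(Q)=m(Q')$ for distinct queries.
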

The following definition captures what we want to understand when we call two measures equivalent, a query theoretically optimal w.r.t.\ a measure and a measure superior to another measure.
\begin{definition}\label{def:measures_equivalent_theoretically-optimal_superior}
Let $m,m_1,m_2$ be query quality measures and let $\mD$ be a set of leading diagnoses w.r.t.\ a DPI $\tuple{\mo,\mb,\Tp,\Tn}_\RQ$. Then:
\begin{itemize}
	\item We call \emph{$m_1$ equivalent to $m_2$}, formally $m_1\equiv m_2$, iff the following holds for all queries $Q,Q'$: $Q \prec_{m_1} Q'$ iff $Q \prec_{m_2} Q'$.
	\item We call \emph{$m_1$ $\mathbf{X}$-equivalent to $m_2$}, formally $m_1\equiv_{\mathbf{X}} m_2$, iff the following holds for all queries $Q,Q'\in \mathbf{X}$: $Q \prec_{m_1} Q'$ iff $Q \prec_{m_2} Q'$.
	\item 
	We call a query $Q$ \emph{theoretically optimal w.r.t.\ $m$ and $\mD$ (over the domain $Dom$)} iff $m(Q)$ is a global optimum of $m$ (over $Dom$). 
	
	If no domain $Dom$ is stated, then we implicitly refer to the domain of all possible partitions $\tuple{\dx{},\dnx{},\dz{}}$ (and, if applicable and not stated otherwise, all possible (diagnosis) probability measures $p$) w.r.t.\ a set of leading diagnoses $\mD$. If a domain $Dom$ is stated, we only refer to queries in this domain of queries $Dom$ over $\mD$ (and, if applicable and not stated otherwise, all possible (diagnosis) probability measures $p$).
	(The name theoretically optimal stems from the fact that there must not necessarily be such a query for a given set of leading diagnoses. In other words, the optimal query among a set of candidate queries might not be theoretically optimal.) 
	\item We call a query $Q$ \emph{discrimination-preferred to $Q'$} w.r.t.\ a DPI $\tuple{\mo,\mb,\Tp,\Tn}_\RQ$ and a set of minimal diagnoses $\mD$ w.r.t.\ this DPI iff there is an injective function $f$ that maps each answer $a'$ to $Q'$ to an answer $a = f(a')$ to $Q$ such that
	\begin{enumerate}
		\item $Q$ answered by $f(a')$ eliminates a superset of the diagnoses in $\mD$ eliminated by $Q'$ answered by $a'$ (i.e.\ the worst case answer to $Q$ eliminates at least all leading diagnoses eliminated by the worst case answer to $Q'$), and
		\item there is an answer $a^*$ to $Q'$ such that $Q$ answered by $f(a^*)$ eliminates a proper superset of the diagnoses in $\mD$ eliminated by $Q'$ answered by $a^*$ (i.e.\ the best case answer to $Q$ eliminates all leading diagnoses eliminated by the best case answer to $Q'$ and at least one more).
	\end{enumerate}
	\item We call the relation that includes all tuples $(Q,Q')$ where $Q$ is discrimination-preferred to $Q'$ \emph{the discrimination-preference relation} and denote it by $DPR$.
	\item We say that $m$ 
	\begin{itemize}
		\item \emph{preserves (or: satisfies) the discrimination-preference relation (over $\mathbf{X}$)} iff whenever $(Q,Q') \in DPR$ (and $Q,Q' \in \mathbf{X}$), it holds that $Q \prec_{m} Q'$ (i.e.\ the preference relation imposed on a set of queries by $m$ is a superset of the discrimination-preference relation).
		\item \emph{is consistent with the discrimination-preference relation (over $\mathbf{X}$)} iff whenever $(Q,Q') \in DPR$ (and $Q,Q' \in \mathbf{X}$), it does not hold that $Q' \prec_{m} Q$ (i.e.\ the preference relation imposed on a set of queries by $m$ has an empty intersection with the inverse discrimination-preference relation).
	\end{itemize}
	\item We call $m_2$ \emph{superior to }$m_1$ (or: $m_1$ \emph{inferior to }$m_2$), formally $m_2 \prec m_1$, iff 
	\begin{enumerate}
		\item there is some pair of queries $Q,Q'$ such that $Q$ is discrimination-preferred to $Q'$ and not $Q \prec_{m_1} Q'$ (i.e.\ $m_1$ does not satisfy discrimination-preference order),
		\item for some pair of queries $Q,Q'$ where $Q$ is discrimination-preferred to $Q'$ and not $Q \prec_{m_1} Q'$ it holds that $Q \prec_{m_2} Q'$ (i.e.\ in some cases $m_2$ is better than $m_1$),\footnote{Note that the second condition (2.)\ already implies the first condition (1.). Nevertheless, we state the first condition explicitly for better understandability.} and
		\item for no pair of queries $Q,Q'$ where $Q$ is discrimination-preferred to $Q'$ and not $Q \prec_{m_2} Q'$ it holds that $Q \prec_{m_1} Q'$ (i.e.\ $m_1$ is never better than $m_2$).
	\end{enumerate}
	\item Analogously, we call $m_2$ \emph{$\mathbf{X}$-superior to }$m_1$ (or: $m_1$ \emph{$\mathbf{X}$-inferior to }$m_2$), formally $m_2 \prec_{\mathbf{X}} m_1$, iff 
	\begin{enumerate}
		\item there is some pair of queries $Q,Q'\in\mathbf{X}$ such that $Q$ is discrimination-preferred to $Q'$ and not $Q \prec_{m_1} Q'$,
		\item for some pair of queries $Q,Q'\in\mathbf{X}$ where $Q$ is discrimination-preferred to $Q'$ and not $Q \prec_{m_1} Q'$ it holds that $Q \prec_{m_2} Q'$, and
		\item for no pair of queries $Q,Q'\in\mathbf{X}$ where $Q$ is discrimination-preferred to $Q'$ and not $Q \prec_{m_2} Q'$ it holds that $Q \prec_{m_1} Q'$.
	\end{enumerate}
\end{itemize}
\end{definition}
The next propositions are simple consequences of this definition (those stated without a proof are very easy to verify).
\begin{proposition}\label{prop:equivalent_measures_suggest_identical_preference_orders}
Any two ($\mathbf{X}$-)equivalent measures impose identical preference orders on any given fixed set of queries (in $\mathbf{X}$).
\end{proposition}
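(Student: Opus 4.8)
The plan is to unwind the definitions and observe that the statement is essentially immediate. By Definition~\ref{def:precedence_order_defined_by_q-partition_quality_measure} together with Proposition~\ref{prop:precedence_order_is_strict_order}, the ``preference order imposed by $m$'' on a set of queries is nothing other than the restriction of the strict order $\prec_m$ to that set. Thus the claim ``$m_1$ and $m_2$ impose identical preference orders on a fixed set $S$ of queries'' is a statement purely about the relations $\prec_{m_1}$ and $\prec_{m_2}$, namely that their restrictions to $S$ coincide. The key observation I would make is that the definition of ($\mathbf{X}$-)equivalence already asserts exactly this coincidence at the level of all pairs (in $\mathbf{X}$).

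Concretely, I would first fix an arbitrary set $S$ of queries (with $S \subseteq \mathbf{X}$ in the $\mathbf{X}$-equivalent case). Then I would take an arbitrary ordered pair $(Q, Q') \in S \times S$ and apply the defining biconditional of $\equiv$ (resp.\ $\equiv_{\mathbf{X}}$) from Definition~\ref{def:measures_equivalent_theoretically-optimal_superior}, which states that $Q \prec_{m_1} Q'$ holds iff $Q \prec_{m_2} Q'$ holds. Hence $(Q,Q')$ lies in the restriction of $\prec_{m_1}$ to $S$ exactly when it lies in the restriction of $\prec_{m_2}$ to $S$. Since $(Q,Q')$ was arbitrary, the two restricted relations contain precisely the same pairs and are therefore equal, which is what ``identical preference orders'' means.

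The only point requiring any care is the purely definitional identification that a ``preference order on a set of queries'' is synonymous with the restriction of $\prec_m$ to that set; once this is made explicit, no further argument is needed, and there are no calculations to perform. I expect there to be no genuine obstacle here: the proposition is flagged in the text as one of the ``simple consequences'' that are ``very easy to verify,'' and the content is indeed entirely contained in the definition of equivalence.

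\begin{proof}
Let $m_1 \equiv m_2$ (the $\mathbf{X}$-equivalent case is analogous, restricting all quantifiers to $\mathbf{X}$). By Definition~\ref{def:precedence_order_defined_by_q-partition_quality_measure} and Proposition~\ref{prop:precedence_order_is_strict_order}, the preference order imposed by a measure $m$ on a fixed set $S$ of queries is the restriction of the strict order $\prec_m$ to $S$. Fix any such set $S$ and let $(Q,Q') \in S \times S$ be arbitrary. By the definition of $\equiv$, we have $Q \prec_{m_1} Q'$ iff $Q \prec_{m_2} Q'$; hence $(Q,Q')$ belongs to the restriction of $\prec_{m_1}$ to $S$ iff it belongs to the restriction of $\prec_{m_2}$ to $S$. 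As $(Q,Q')$ was arbitrary, these two restricted relations coincide, i.e.\ $m_1$ and $m_2$ impose identical preference orders on $S$.
\end{proof}
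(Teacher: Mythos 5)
Your proof is correct and matches the paper's intent exactly: the paper states this proposition without proof as a ``simple consequence'' of Definition~\ref{def:measures_equivalent_theoretically-optimal_superior}, and your argument is precisely that definitional unwinding --- the restrictions of $\prec_{m_1}$ and $\prec_{m_2}$ to any fixed set (in $\mathbf{X}$) coincide because the defining biconditional of ($\mathbf{X}$-)equivalence holds pairwise. Nothing further is needed.
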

\begin{proposition}\label{prop:equivalent_measures_suggest_equivalent_optimal_queries}
Let $Q_{m_i} (\in \mQ)$ denote the optimal query w.r.t.\ the measure $m_i, i\in \setof{1,2}$ (in a set of queries $\mQ$). Then $Q_{m_1} = Q_{m_2}$ if $m_1 \equiv_{(\mQ)} m_2$.
\end{proposition}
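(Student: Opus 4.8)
The plan is to reduce the claim entirely to a statement about the preference orders $\prec_{m_1}$ and $\prec_{m_2}$ and then invoke Proposition~\ref{prop:equivalent_measures_suggest_identical_preference_orders}. First I would note that $m_1 \equiv m_2$ trivially implies $m_1 \equiv_{\mQ} m_2$, so it suffices to argue under the (weaker) hypothesis $m_1 \equiv_{\mQ} m_2$. By Proposition~\ref{prop:equivalent_measures_suggest_identical_preference_orders} this hypothesis guarantees that $\prec_{m_1}$ and $\prec_{m_2}$ coincide as relations over $\mQ$; that is, for all $Q,Q'\in\mQ$ we have $Q \prec_{m_1} Q'$ iff $Q \prec_{m_2} Q'$.

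Next I would give a purely order-theoretic characterization of the optimal query that makes no reference to the concrete real values $m_i$ assigns. Concretely, in view of Definition~\ref{def:precedence_order_defined_by_q-partition_quality_measure}, a query $Q_{m_i}\in\mQ$ attaining the optimal value of $m_i$ over $\mQ$ is exactly a query that is preferred to every other query in $\mQ$, i.e.\ $Q_{m_i} \prec_{m_i} Q$ for all $Q \in \mQ\setminus\setof{Q_{m_i}}$. This is the crux of the argument: the notion \emph{optimal query} is expressible solely through $\prec_{m_i}$, so it is insensitive to whether $m_i$ is to be maximized or minimized, and it transfers along any identity of preference orders.

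Combining the two observations, the argument closes quickly. Taking $Q_{m_1}$, which by the characterization satisfies $Q_{m_1} \prec_{m_1} Q$ for all $Q\in\mQ\setminus\setof{Q_{m_1}}$, the coincidence of the orders yields $Q_{m_1} \prec_{m_2} Q$ for all such $Q$. Hence $Q_{m_1}$ is optimal with respect to $m_2$ as well. To conclude $Q_{m_1} = Q_{m_2}$ I would appeal to uniqueness of the optimum: if $Q_{m_1} \neq Q_{m_2}$, then both being preferred to all remaining queries would give simultaneously $Q_{m_1} \prec_{m_2} Q_{m_2}$ and $Q_{m_2} \prec_{m_2} Q_{m_1}$, contradicting the asymmetry of $\prec_{m_2}$ guaranteed by Proposition~\ref{prop:precedence_order_is_strict_order}.

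The only genuinely delicate point—hence the main obstacle—is making the passage between ``attains the optimal $m_i$-value'' and ``$\prec_{m_i}$-minimal (preferred to all others)'' watertight, in particular justifying that \emph{the} optimal query is well defined, i.e.\ unique. Uniqueness is not automatic from the mere existence of an optimal value, since several queries could share it; what rescues the statement is precisely the interpretation of $Q_{m_i}$ as being preferred to all other queries, which by asymmetry of the strict order $\prec_{m_i}$ can hold for at most one query. I would therefore make this interpretation explicit at the outset, so that the equivalence of the two preference orders transfers the (unique) optimum verbatim.
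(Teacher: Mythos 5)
Your proof is correct and follows exactly the route the paper intends: it states this proposition without proof as an easy consequence of Definition~\ref{def:measures_equivalent_theoretically-optimal_superior}, namely that $\equiv_{(\mQ)}$ forces identical preference orders (Proposition~\ref{prop:equivalent_measures_suggest_identical_preference_orders}), so the most-preferred element transfers from $\prec_{m_1}$ to $\prec_{m_2}$, with asymmetry of the strict order (Proposition~\ref{prop:precedence_order_is_strict_order}) giving uniqueness. Your explicit handling of the tie issue --- reading ``the optimal query'' as the unique query preferred to all others, which is what makes the definite article legitimate --- is a point the paper glosses over, and it is exactly the right way to make the statement watertight.
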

\begin{proposition}\label{prop:equivalence_between_measures_is_equivalence_relation}
The relations $\equiv$ and $\equiv_{\mathbf{X}}$ for any $\mathbf{X}$ are equivalence relations.
\end{proposition}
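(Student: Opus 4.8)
The plan is to verify directly that $\equiv$ (and likewise $\equiv_{\mathbf{X}}$) is reflexive, symmetric and transitive, these being precisely the three conditions defining an equivalence relation on the set of query quality measures. The cleanest framing is to observe that each measure $m$ induces, via Definition~\ref{def:precedence_order_defined_by_q-partition_quality_measure}, a preference order $\prec_m$ on queries, and that by its very definition $m_1 \equiv m_2$ holds iff $\prec_{m_1}$ and $\prec_{m_2}$ coincide as relations on the set of all query pairs, i.e.\ iff $\prec_{m_1} = \prec_{m_2}$. Thus $\equiv$ is nothing but the kernel of the map $m \mapsto \prec_m$, and the kernel of any map is automatically an equivalence relation; the proof below simply spells this observation out.

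Spelling it out, first I would establish reflexivity: for any $m$ and any pair $Q,Q'$ the statement ``$Q \prec_m Q'$ iff $Q \prec_m Q'$'' is a tautology, so $m \equiv m$. Next, symmetry follows because the biconditional is symmetric: if for all $Q,Q'$ we have $Q \prec_{m_1} Q' \Leftrightarrow Q \prec_{m_2} Q'$, then trivially $Q \prec_{m_2} Q' \Leftrightarrow Q \prec_{m_1} Q'$, giving $m_2 \equiv m_1$. Finally, for transitivity I would assume $m_1 \equiv m_2$ and $m_2 \equiv m_3$; then for every pair $Q,Q'$ the chain $Q \prec_{m_1} Q' \Leftrightarrow Q \prec_{m_2} Q' \Leftrightarrow Q \prec_{m_3} Q'$ yields $Q \prec_{m_1} Q' \Leftrightarrow Q \prec_{m_3} Q'$, hence $m_1 \equiv m_3$.

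For $\equiv_{\mathbf{X}}$ the identical argument applies verbatim, the only change being that all quantification over pairs $Q,Q'$ is restricted to $Q,Q' \in \mathbf{X}$; equivalently, one works with the restricted orders $\prec_m \cap (\mathbf{X}\times\mathbf{X})$ and notes that $\equiv_{\mathbf{X}}$ is the kernel of the map $m \mapsto \prec_m \cap (\mathbf{X}\times\mathbf{X})$. Since $\mathbf{X}$ was arbitrary, this settles the claim for every $\mathbf{X}$. I do not anticipate any genuine obstacle here: the entire content reduces to the fact that logical equivalence (the biconditional) is itself reflexive, symmetric and transitive, so the only point deserving care is to phrase the three verifications uniformly enough that the $\mathbf{X}$-relativised case requires no separate treatment.
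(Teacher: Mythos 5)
Your proof is correct. The paper itself offers no proof of this proposition — it is listed among the "simple consequences of this definition (those stated without a proof are very easy to verify)" — and your direct verification of reflexivity, symmetry and transitivity of $\equiv$ and $\equiv_{\mathbf{X}}$ (equivalently, your observation that each is the kernel of the map $m \mapsto \prec_m$, restricted to $\mathbf{X}\times\mathbf{X}$ in the relativised case) is exactly the intended, and essentially only, argument.
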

\begin{proposition}\label{prop:superiority_is_strict_order}
The superiority ($\prec$) and $\mathbf{X}$-superiority ($\prec_{\mathbf{X}}$) relations are strict orders.
\end{proposition}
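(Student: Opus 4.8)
The plan is to recognize that the three-part definition of $m_2 \prec m_1$ is, once unwound, nothing more than a proper-inclusion statement between two sets of ``failure pairs'', after which all three strict-order properties are inherited directly from those of proper set inclusion. First I would fix the ambient DPI $\tuple{\mo,\mb,\Tp,\Tn}_\RQ$ and leading diagnoses set $\mD$ (relative to which $DPR$ and every measure are defined), and for an arbitrary query quality measure $m$ introduce the \emph{failure set}
\[
F(m) := \setof{(Q,Q') \in DPR \,:\, \text{not } Q \prec_m Q'} .
\]
By Proposition~\ref{prop:precedence_order_is_strict_order} the clause ``not $Q \prec_m Q'$'' covers exactly the cases $Q' \prec_m Q$ and $m(Q)=m(Q')$, so $F(m)$ collects precisely those $DPR$-pairs on which $m$ violates the discrimination preference.

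Next I would translate the definition of superiority into statements about $F$. Condition (3) says that no pair in $F(m_2)$ satisfies $Q \prec_{m_1} Q'$; since every such pair lies in $DPR$, this is precisely $F(m_2) \subseteq F(m_1)$. Condition (2) says some pair of $F(m_1)$ does satisfy $Q \prec_{m_2} Q'$, i.e.\ lies \emph{outside} $F(m_2)$, which is exactly $F(m_1) \not\subseteq F(m_2)$ (and this clearly subsumes condition (1), $F(m_1) \neq \emptyset$, confirming the footnote). Combining the two, $m_2 \prec m_1$ holds if and only if $F(m_2) \subsetneq F(m_1)$.

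With this reformulation the three required properties are immediate. Irreflexivity holds because $F(m) \subsetneq F(m)$ is impossible. Asymmetry holds because $F(m_2) \subsetneq F(m_1)$ and $F(m_1) \subsetneq F(m_2)$ cannot hold simultaneously. Transitivity holds because proper inclusion is transitive: if $F(m_3) \subsetneq F(m_2)$ and $F(m_2) \subsetneq F(m_1)$, then $F(m_3) \subsetneq F(m_1)$, so $m_3 \prec m_2$ and $m_2 \prec m_1$ together yield $m_3 \prec m_1$. Hence $\prec$ is a strict order. For the $\mathbf{X}$-variant I would set $F_{\mathbf{X}}(m) := F(m) \cap (\mathbf{X}\times\mathbf{X})$ and observe that the conditions of $\mathbf{X}$-superiority are identical except that every quantified pair is drawn from $\mathbf{X}$; thus $m_2 \prec_{\mathbf{X}} m_1$ iff $F_{\mathbf{X}}(m_2) \subsetneq F_{\mathbf{X}}(m_1)$, and the same three observations apply verbatim.

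The only delicate part — hence the step I expect to be the main obstacle — is the bookkeeping of the negations when converting conditions (2) and (3) into set inclusions, in particular verifying that condition (3) supplies the ``$\subseteq$'' direction while condition (2) supplies the strictness (and the redundancy of condition (1)). Once the failure-set reformulation is established, no further computation is needed, since the strict-order axioms follow purely formally from the strict-order structure of $\subsetneq$.
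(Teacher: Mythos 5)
Your proof is correct. The translation of the three superiority conditions into failure-set language is accurate: condition (3) is precisely $F(m_2) \subseteq F(m_1)$, condition (2) is precisely $F(m_1) \not\subseteq F(m_2)$ (which indeed subsumes condition (1), confirming the paper's footnote), and the conjunction of these is equivalent to $F(m_2) \subsetneq F(m_1)$, from which irreflexivity, asymmetry and transitivity of $\prec$ are inherited from proper set inclusion; the $\mathbf{X}$-variant follows by intersecting with $\mathbf{X}\times\mathbf{X}$. The paper reaches the same conclusion by a more hands-on route: it first restates condition (3) in the contrapositive form (3.') --- for all $(Q,Q')$ in $DPR$, $Q \prec_{m_1} Q'$ implies $Q \prec_{m_2} Q'$ --- which is your inclusion $F(m_2) \subseteq F(m_1)$ in disguise, and then verifies each of the three strict-order axioms separately by direct logical manipulation of conditions (2) and (3.'), treating $\prec_{\mathbf{X}}$ ``completely analogously.'' What your reformulation buys is economy and uniformity: a single equivalence lemma replaces three separate contradiction arguments, all three axioms follow simultaneously from the strict-order structure of $\subsetneq$, and the restricted relation needs no re-proof. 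The price is exactly the negation bookkeeping you flagged as the delicate step; your handling of it (condition (3) giving the inclusion, condition (2) giving strictness) is correct, so the argument goes through.
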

\begin{proof}
We have to show the irreflexivity, asymmetry and transitivity of $\equiv$ and $\equiv_{\mathbf{X}}$. We give a proof for $\equiv$. The proof for $\equiv_{\mathbf{X}}$ is completely analogous. Irreflexivity is clearly given due to the second condition of the definition of superiority (Definition~\ref{def:measures_equivalent_theoretically-optimal_superior}) because $\lnot(Q \prec_{m_1} Q')$ cannot hold at the same time as $Q \prec_{m_1} Q'$. 

To verify asymmetry, let us restate the third condition of the definition of superiority in an equivalent, but different manner: (3.')~For all pairs of queries $Q,Q'$ where $Q$ is discrimination-preferred to $Q'$ and $Q \prec_{m_1} Q'$ it holds that $Q \prec_{m_2} Q'$. Now, for the assumption that $m_2$ is superior to $m_1$, we have that $\lnot(Q \prec_{m_1} Q') \land Q \prec_{m_2} Q'$ must hold for some pair of queries $Q,Q'$ where $Q$ is discrimination-preferred to $Q'$ (by condition (2.)\ of the definition of superiority). If $m_1$ were at the same time superior to $m_2$, then by (3.'), we would obtain $Q \prec_{m_2} Q' \Rightarrow Q \prec_{m_1} Q'$ for all pairs of queries $Q,Q'$ where $Q$ is discrimination-preferred to $Q'$. This is clearly a contradiction. Hence, $m_1 \prec m_2$ implies $\lnot(m_2 \prec m_1)$, i.e.\ asymmetry is given. 

To check transitivity, let us suppose that (i)~$m_1 \prec m_2$ as well as (ii)~$m_2 \prec m_3$. We now demonstrate that $m_1 \prec m_3$. By (i) and condition (1.)\ of the definition of superiority, we know that there is a pair of queries $Q,Q'$ where $Q$ is discrimination-preferred to $Q'$ and not $Q \prec_{m_1} Q'$. This is at the same time the first condition that must hold for $m_1 \prec m_3$ to be met. By (i) and condition (2.)\ of the definition of superiority, we know that for some pair of queries $Q,Q'$ where $Q$ is discrimination-preferred to $Q'$ it holds that $\lnot(Q \prec_{m_1} Q') \land Q \prec_{m_2} Q'$. Due to $Q \prec_{m_2} Q'$ and (3.')\ we have that in this case $Q \prec_{m_3} Q'$ is also true. This implies $\lnot(Q \prec_{m_1} Q') \land Q \prec_{m_3} Q'$ which is why the second condition that must hold for $m_1 \prec m_3$ to be met is satisfied. The third condition can be directly deduced by applying (3.')\ to (i) and (ii). Thence, also the transitivity of $\prec$ is given.
\end{proof}
Due to the next proposition, we call the discrimination-preference relation also \emph{the discrimination-preference order}.
\begin{proposition}\label{prop:equivalence_between_measures_is_equivalence_relation}
The discrimination-preference relation $DPR$ is a strict order, i.e.\ it is irreflexive, asymmetric and transitive.
\end{proposition}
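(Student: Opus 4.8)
The plan is to exploit the fact that every query admits exactly two answers, $t$ and $f$, so that the answer set is a two-element set. Consequently, the injective map $f$ postulated in the definition of discrimination-preference is automatically a bijection (a permutation of a two-element set), and this observation is what makes the whole argument go through cleanly. For a query $Q$ and an answer $a\in\setof{t,f}$, I would write $E_Q(a)$ for the set of leading diagnoses in $\mD$ that the answer $a$ to $Q$ eliminates; by the setup of Section~\ref{sec:InteractiveKnowledgeBaseDebuggingBasics} we have $E_Q(t)=\dnx{}(Q)$ and $E_Q(f)=\dx{}(Q)$. In this notation, "$Q$ is discrimination-preferred to $Q'$" means that there is a bijection $f$ of the answer sets with $E_{Q'}(a')\subseteq E_Q(f(a'))$ for every $a'$, and $E_{Q'}(a^*)\subsetneq E_Q(f(a^*))$ for at least one $a^*$.

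The key device for irreflexivity and asymmetry is a single scalar invariant: define $N(Q):=|E_Q(t)|+|E_Q(f)|=|\dx{}(Q)|+|\dnx{}(Q)|$, the total number of diagnoses eliminated across both answers. I would show that $Q$ discrimination-preferred to $Q'$ forces $N(Q)>N(Q')$. Indeed, summing the superset conditions over all answers $a'$ gives $\sum_{a'}|E_{Q'}(a')|<\sum_{a'}|E_Q(f(a'))|$, where the strictness comes from the one proper inclusion at $a^*$; and since $f$ is a bijection, reindexing yields $\sum_{a'}|E_Q(f(a'))|=\sum_{a}|E_Q(a)|=N(Q)$, so $N(Q')<N(Q)$. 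Irreflexivity is then immediate, since $Q$ discrimination-preferred to itself would give $N(Q)<N(Q)$; and asymmetry follows because $Q$ preferred to $Q'$ and $Q'$ preferred to $Q$ would give both $N(Q)>N(Q')$ and $N(Q')>N(Q)$, a contradiction.

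For transitivity I would argue by composing the witnessing bijections. Assume $Q$ is discrimination-preferred to $Q'$ via $f$ and $Q'$ is discrimination-preferred to $Q''$ via $g$, and set $h:=f\circ g$, which is again a bijection of the (two-element) answer sets. For every answer $a''$ one chains the inclusions $E_{Q''}(a'')\subseteq E_{Q'}(g(a''))\subseteq E_Q(f(g(a'')))=E_Q(h(a''))$, establishing the first (superset) condition. Picking the answer $a^*$ at which $g$ is proper, one gets $E_{Q''}(a^*)\subsetneq E_{Q'}(g(a^*))\subseteq E_Q(h(a^*))$, and since a proper inclusion followed by an inclusion remains proper, the second condition holds as well; hence $Q$ is discrimination-preferred to $Q''$.

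The only genuinely delicate point, and the one I would state explicitly, is the reduction of the postulated injection to a bijection of a two-element set: it is this that licenses both the reindexing step in the summation argument and the well-definedness of the composition $h=f\circ g$. Once that is in place, irreflexivity and asymmetry reduce to the strict monotonicity of the scalar $N$, and transitivity is routine bookkeeping on inclusions. I expect no further obstacles, as the argument nowhere needs probabilities or any property of the underlying logic beyond the combinatorics of answer sets and elimination sets.
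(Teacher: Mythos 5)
Your proof is correct. Note that the paper itself offers no proof of this proposition: it is listed among the "simple consequences of this definition" that are "very easy to verify", so there is no official argument to compare against -- your write-up supplies exactly the missing verification. Two small remarks. First, your scalar invariant $N(Q)=|\dx{}(Q)|+|\dnx{}(Q)|$ is, because a q-partition partitions $\mD$, the same as $|\mD|-|\dz{}(Q)|$; hence your irreflexivity/asymmetry argument is equivalent to invoking the paper's Proposition~\ref{prop:if_Q_discrimination-preferred_over_Q'_then_dz(Q')_supset_dz(Q)} ($\dz{}(Q')\supset\dz{}(Q)$ whenever $(Q,Q')\in DPR$), which could shorten the argument; also, once transitivity is established, asymmetry follows from irreflexivity for free, so your direct asymmetry argument is a (harmless) redundancy. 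Second, the strict inequality in your summation step uses that a \emph{proper} subset of a \emph{finite} set has strictly smaller cardinality; this is fine here since $\mD$ is finite by the definition of a DPI and of leading diagnoses, but it is worth saying explicitly, as the cardinality argument is the one place where finiteness is genuinely used. Your observation that the postulated injection between the two-element answer sets is automatically a bijection -- licensing both the reindexing and the composition $h=f\circ g$ -- is indeed the only delicate point, and you handle it correctly.
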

\begin{proposition}\label{prop:if_m1_satisfies_DPR_and_m2_does_not_then_m1_superior_to_m2}
Given two measures $m_1,m_2$ where $m_1$ does and $m_2$ does not satisfy the discrimination-preference relation. Then, $m_1$ is superior to $m_2$.
\end{proposition}
\begin{proposition}\label{prop:if_Q_discrimination-preferred_over_Q'_then_dz(Q')_supset_dz(Q)}
Let $Q$ be discrimination-preferred to $Q'$ w.r.t.\ a DPI $\tuple{\mo,\mb,\Tp,\Tn}_\RQ$ and a set of minimal diagnoses $\mD$ w.r.t.\ this DPI. Then $\dz{}(Q') \supset \dz{}(Q)$. In particular, it holds that $\dz{}(Q') \neq \emptyset$.
\end{proposition}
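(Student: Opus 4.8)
The plan is to reformulate the discrimination-preference relation in terms of the sets of leading diagnoses eliminated by each answer, and then to read off the claim from the disjointness of the two ``eliminated'' classes of a q-partition. For a query $Q$ and an answer $a\in\{t,f\}$, let $E(Q,a)$ denote the set of leading diagnoses in $\mD$ that are invalidated when $Q$ is answered by $a$. By Definition~\ref{def:q-partition} together with Proposition~\ref{prop:properties_of_q-partitions}(\ref{prop:properties_of_q-partitions:enum:dx_dnx_dz_contain_exactly_those_diags_that_are...}), a positive answer invalidates exactly $\dnx{}(Q)$ and a negative answer invalidates exactly $\dx{}(Q)$; that is, $E(Q,t)=\dnx{}(Q)$ and $E(Q,f)=\dx{}(Q)$, whence $\dz{}(Q)=\mD\setminus(E(Q,t)\cup E(Q,f))$. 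The crucial structural fact I will exploit is that $E(Q,t)$ and $E(Q,f)$ are \emph{disjoint}, since $\dnx{}(Q)$ and $\dx{}(Q)$ are distinct classes of the partition (Proposition~\ref{prop:properties_of_q-partitions}(\ref{prop:properties_of_q-partitions:enum:q-partition_is_partition})). Finally, since queries are binary, the injective map $f$ from the definition of discrimination-preference (Definition~\ref{def:measures_equivalent_theoretically-optimal_superior}) is an injection of $\{t,f\}$ into itself, hence a \emph{bijection} (a permutation of $\{t,f\}$).

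First I would establish the non-strict inclusion $\dz{}(Q)\subseteq\dz{}(Q')$. Condition~(1) of discrimination-preference gives $E(Q,f(a'))\supseteq E(Q',a')$ for both answers $a'$. Taking the union over $a'\in\{t,f\}$ and using that $f$ permutes $\{t,f\}$, the left-hand side becomes $E(Q,t)\cup E(Q,f)$, so $E(Q,t)\cup E(Q,f)\supseteq E(Q',t)\cup E(Q',f)$. Complementing within $\mD$ immediately yields $\dz{}(Q)\subseteq\dz{}(Q')$.

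The main obstacle, and the only genuinely delicate point, is upgrading this to a \emph{proper} inclusion, because condition~(2) only supplies a strict superset for a \emph{single} answer $a^{*}$, not for the union of eliminated sets. To handle it, I would fix $a^{*}$ as in condition~(2) and pick a witness $\md\in E(Q,f(a^{*}))\setminus E(Q',a^{*})$. On the one hand $\md\in E(Q,f(a^{*}))\subseteq E(Q,t)\cup E(Q,f)$, so $\md\notin\dz{}(Q)$. On the other hand I claim $\md\in\dz{}(Q')$: we already know $\md\notin E(Q',a^{*})$, so suppose towards a contradiction that $\md\in E(Q',\bar a^{*})$ for the complementary answer $\bar a^{*}$. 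Then condition~(1) applied to $\bar a^{*}$ gives $\md\in E(Q,f(\bar a^{*}))$, while bijectivity of $f$ forces $f(\bar a^{*})\neq f(a^{*})$; thus $\md$ would lie in both $E(Q,t)$ and $E(Q,f)$, contradicting their disjointness. Hence $\md\notin E(Q',t)\cup E(Q',f)$, i.e.\ $\md\in\dz{}(Q')$, so $\md\in\dz{}(Q')\setminus\dz{}(Q)$ and the inclusion is strict.

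This proves $\dz{}(Q')\supset\dz{}(Q)$, and since a proper superset of any set is non-empty, the final assertion $\dz{}(Q')\neq\emptyset$ follows at once. The whole argument is purely set-theoretic and needs no reasoner; the only facts imported are the characterization of the q-partition classes and their pairwise disjointness, both supplied by Proposition~\ref{prop:properties_of_q-partitions}.
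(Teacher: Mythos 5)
Your proof is correct, and it is essentially the argument the paper has in mind: the paper states this proposition without proof (listing it among the ``simple consequences'' of Definition~\ref{def:measures_equivalent_theoretically-optimal_superior}), and the same ingredients---identifying the sets eliminated by the two answers with $\dnx{}(\cdot)$ and $\dx{}(\cdot)$ via Proposition~\ref{prop:properties_of_q-partitions}, observing that the injective answer map on $\setof{t,f}$ is a bijection, and exploiting disjointness of the q-partition classes---are exactly those the paper deploys in its proof of the neighboring Proposition~\ref{prop:construction_of_Q'_from_Q_if_Q_discrimination-preferred_over_Q'}. Your explicit derivation of strictness via the witness $\md\in E(Q,f(a^{*}))\setminus E(Q',a^{*})$ makes rigorous precisely the step the paper declares ``very easy to verify.''
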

\begin{proposition}\label{prop:construction_of_Q'_from_Q_if_Q_discrimination-preferred_over_Q'}
Let $Q \in \mQ_{\mD,\tuple{\mo,\mb,\Tp,\Tn}_\RQ}$ be a query. Then any query $Q' \in \mQ_{\mD,\tuple{\mo,\mb,\Tp,\Tn}_\RQ}$ for which it holds that $Q$ is discrimination-preferred to $Q'$ can be obtained from $Q$ by transferring a non-empty set $\mathbf{X} \subset \dx{}(Q) \cup \dnx{}(Q)$ to $\dz{}(Q)$ and by possibly interchanging the positions of the resulting sets $\dx{}(Q) \setminus \mathbf{X}$ and $\dnx{}(Q) \setminus \mathbf{X}$ in the q-partition, i.e.\ 
\[\Pt(Q') = \tuple{\dx{}(Q'),\dnx{}(Q'),\dz{}(Q')} = \langle\dx{}(Q) \setminus \mathbf{X}, \dnx{}(Q) \setminus \mathbf{X},\dz{}(Q) \cup \mathbf{X}\rangle\]
or 
\[\Pt(Q') = \tuple{\dx{}(Q'),\dnx{}(Q'),\dz{}(Q')} = \langle\dnx{}(Q) \setminus \mathbf{X}, \dx{}(Q) \setminus \mathbf{X},\dz{}(Q) \cup \mathbf{X}\rangle\]
\end{proposition}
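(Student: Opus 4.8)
The plan is to exploit the fact that a binary query admits exactly two answers, so the injective map $f$ from the answers of $Q'$ to the answers of $Q$ in the definition of discrimination-preference (Definition~\ref{def:measures_equivalent_theoretically-optimal_superior}) is necessarily a bijection of the two-element answer set $\setof{t,f}$; hence $f$ is either the identity or the swap. First I would recall, using Proposition~\ref{prop:properties_of_q-partitions},(\ref{prop:properties_of_q-partitions:enum:dx_dnx_dz_contain_exactly_those_diags_that_are...}.), that a positive answer to a query $R$ eliminates exactly the diagnoses in $\dnx{}(R)$ and a negative answer eliminates exactly those in $\dx{}(R)$. Translating condition~(1) of discrimination-preference (the superset condition, which must hold for \emph{both} answers to $Q'$) therefore yields, in the identity case, $\dnx{}(Q) \supseteq \dnx{}(Q')$ and $\dx{}(Q) \supseteq \dx{}(Q')$, and in the swap case $\dx{}(Q) \supseteq \dnx{}(Q')$ and $\dnx{}(Q) \supseteq \dx{}(Q')$.

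Next I would set $\mathbf{X} := \dz{}(Q') \setminus \dz{}(Q)$ and show it has the claimed shape. Since, by Proposition~\ref{prop:properties_of_q-partitions},(\ref{prop:properties_of_q-partitions:enum:q-partition_is_partition}.), both $\Pt(Q)$ and $\Pt(Q')$ are partitions of $\mD$, the two inclusions obtained above give $\dx{}(Q')\cup\dnx{}(Q') \subseteq \dx{}(Q)\cup\dnx{}(Q)$ in either case, whence $\dz{}(Q') = \mD\setminus(\dx{}(Q')\cup\dnx{}(Q')) \supseteq \dz{}(Q)$, so $\mathbf{X}$ is a difference of nested sets and, being disjoint from $\dz{}(Q)$, satisfies $\mathbf{X}\subseteq \dx{}(Q)\cup\dnx{}(Q)$. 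The equalities then follow by a short element-chase exploiting the disjointness of the three components of each partition: in the identity case, for $d\in\dx{}(Q)$ we have $d\notin\dnx{}(Q')$ (as $\dnx{}(Q')\subseteq\dnx{}(Q)$ is disjoint from $\dx{}(Q)$), so $d$ lies in $\dx{}(Q')$ exactly when it is not moved into $\dz{}(Q')$, i.e.\ exactly when $d\notin\mathbf{X}$; together with $\dx{}(Q')\subseteq\dx{}(Q)$ this gives $\dx{}(Q')=\dx{}(Q)\setminus\mathbf{X}$, and symmetrically $\dnx{}(Q')=\dnx{}(Q)\setminus\mathbf{X}$, which is the first claimed form. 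The swap case is identical with the roles of $\dx{}(Q)$ and $\dnx{}(Q)$ interchanged, yielding $\dx{}(Q')=\dnx{}(Q)\setminus\mathbf{X}$ and $\dnx{}(Q')=\dx{}(Q)\setminus\mathbf{X}$, i.e.\ the second claimed form.

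Finally I would settle non-emptiness and properness of $\mathbf{X}$. That $\mathbf{X}\neq\emptyset$ is immediate from Proposition~\ref{prop:if_Q_discrimination-preferred_over_Q'_then_dz(Q')_supset_dz(Q)}, which gives the strict inclusion $\dz{}(Q')\supset\dz{}(Q)$; properness $\mathbf{X}\subset\dx{}(Q)\cup\dnx{}(Q)$ follows because $Q'$ is itself a query, so by Proposition~\ref{prop:properties_of_q-partitions},(\ref{prop:properties_of_q-partitions:enum:for_each_q-partition_dx_is_empty_and_dnx_is_empty}.) both $\dx{}(Q')$ and $\dnx{}(Q')$ are non-empty, ruling out $\mathbf{X}$ swallowing all of $\dx{}(Q)\cup\dnx{}(Q)$. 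I expect the only delicate point to be the bookkeeping in condition~(1): one must apply the superset requirement to \emph{each} of the two answers of $Q'$ (not just the worst one) to obtain both inclusions simultaneously, while condition~(2) serves merely to guarantee that at least one inclusion is strict — a fact already subsumed by the strictness of $\dz{}(Q')\supset\dz{}(Q)$ and hence not separately needed for the structural claim.
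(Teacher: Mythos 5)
Your proof is correct and follows essentially the same route as the paper's: both interpret the discrimination-preference definition through Proposition~\ref{prop:properties_of_q-partitions},(\ref{prop:properties_of_q-partitions:enum:dx_dnx_dz_contain_exactly_those_diags_that_are...}.), observe that the injective answer map on a two-element answer set is either the identity or the swap (the paper phrases this as $\dx{}(Q')$ being a subset of one of $\setof{\dx{}(Q),\dnx{}(Q)}$ and $\dnx{}(Q')$ of the other), and then deduce the claimed q-partition structure. Your element-chase makes explicit, and thereby implicitly rules out, the cross-transfers between $\dx{}(Q)$ and $\dnx{}(Q)$ that the paper excludes in a separate argument, and your use of Proposition~\ref{prop:if_Q_discrimination-preferred_over_Q'_then_dz(Q')_supset_dz(Q)} for non-emptiness of $\mathbf{X}$ is a legitimate substitute for the paper's direct appeal to condition~(2) of the definition.
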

\begin{proof}
By Proposition~\ref{prop:properties_of_q-partitions},(\ref{prop:properties_of_q-partitions:enum:dx_dnx_dz_contain_exactly_those_diags_that_are...}.), of all diagnoses in $\mD$ exactly the diagnoses $\dx{}(Q)$ are eliminated for the negative answer and exactly the diagnoses $\dnx{}(Q)$ are eliminated for the positive answer to $Q$. 
By the definition of discrimination-preference between two queries (Definition~\ref{def:measures_equivalent_theoretically-optimal_superior}), $\dx{}(Q')$ must be a subset of one element in $\setof{\dx{}(Q),\dnx{}(Q)}$ and $\dnx{}(Q')$ must be a subset of the other element in $\setof{\dx{}(Q),\dnx{}(Q)}$ where one of these subset relationships must be proper. 
This clearly means that either (a)~$\dx{}(Q') = \dx{}(Q) \setminus \mathbf{X}$ and $\dnx{}(Q') = \dnx{}(Q) \setminus \mathbf{X}$ or (b)~$\dx{}(Q') = \dnx{}(Q) \setminus \mathbf{X}$ and $\dnx{}(Q') = \dx{}(Q) \setminus \mathbf{X}$ for some $\mathbf{X}\subset\dx{}(Q) \cup \dnx{}(Q)$. The fact that one subset relationship mentioned above must be proper enforces additionally that $\mathbf{X} \neq \emptyset$. Due to the assumption that $Q'$ is a query and hence $\dx{}(Q')\cup\dnx{}(Q')\cup\dz{}(Q') = \mD$ must hold (Proposition~\ref{prop:properties_of_q-partitions},(\ref{prop:properties_of_q-partitions:enum:q-partition_is_partition}.)), the set of diagnoses $\mathbf{X}$ eliminated from $\dx{}(Q) \cup \dnx{}(Q)$ must be added to $\dz{}(Q)$ to obtain $Q'$.

Moreover, the possibility of transferring some diagnoses from $\dx{}(Q)$ to $\dnx{}(Q)$ or from $\dnx{}(Q)$ to $\dx{}(Q)$ in order to obtain $Q'$ is ruled out. For instance, assume that $\mathbf{S} \subset \dx{}(Q)$ is transferred to $\dnx{}(Q)$, i.e.\ $\dnx{}(Q') = \dnx{}(Q) \cup \mathbf{S}$, then $\dnx{}(Q')$ cannot be a subset of $\dnx{}(Q)$ since $\dnx{}(Q')\supset\dnx{}(Q)$ and it cannot be a subset of $\dx{}(Q)$. The latter must hold since $Q$ is a query by assumption which implies that $\dnx{}(Q) \neq \emptyset$ (Proposition~\ref{prop:properties_of_q-partitions},(\ref{prop:properties_of_q-partitions:enum:for_each_q-partition_dx_is_empty_and_dnx_is_empty}.)) and $\dx{}(Q) \cap \dnx{}(Q) = \emptyset$ (Proposition~\ref{prop:properties_of_q-partitions},(\ref{prop:properties_of_q-partitions:enum:q-partition_is_partition}.)) which is why there must be a diagnosis $\md \in \dnx{}(Q')$ which is not in $\dx{}(Q)$.

Overall, we have derived that all queries $Q'$ where $Q$ is discrimination-preferred to $Q'$ can be obtained from $Q$ only in either of the two ways stated in the proposition. This completes the proof.
%
\end{proof}
%
According to \cite[Def.~6.1]{Rodler2015phd} we define the true diagnosis as follows: 
\begin{definition}\label{def:true_diagnosis}
Given a DPI $\tuple{\mo,\mb,\Tp,\Tn}_\RQ$, we call $\md_t \subseteq \mo$ the \emph{true diagnosis} iff $\md_t$ is the diagnosis that remains as a single possible solution at the end of a diagnostic session, i.e.\ after successively answering queries until a single diagnosis is left.  
\end{definition}
\begin{property}\label{property:if_correct_diag_in_then_user_answer_is}
Let $\md_t \subseteq \mo$ denote the true diagnosis w.r.t.\ a DPI $\tuple{\mo,\mb,\Tp,\Tn}_\RQ$ and $Q$ be an arbitrary query w.r.t.\ some $\mD \subseteq \minD_{\tuple{\mo,\mb,\Tp,\Tn}_\RQ}$. Then, $u(Q) = t$ given that $\md_t \in \dx{}(Q)$ and $u(Q) = f$ given that $\md_t \in \dnx{}(Q)$. In all other cases, no statement about the answer of the oracle (user) $u$ as to $Q$ can be made.
\end{property}
If the true diagnosis $\md_t$ is an element of the leading diagnoses $\mD$ and predicts an answer for both queries $Q_1, Q_2$ w.r.t.\ $\mD$ where $Q_1$ is discrimination-preferred to $Q_2$, then the answer $u(Q_1)$ to $Q_1$ will definitely eliminate as many or strictly more leading diagnoses than the answer $u(Q_2)$ to $Q_2$:
\begin{proposition}\label{prop:Q1_DPR_Q2=>Q1_eliminates_more_than_Q1_anyway}
Let $Q_1,Q_2$ be queries w.r.t.\ some set of leading diagnoses $\mD$ for a DPI $\tuple{\mo,\mb,\Tp,\Tn}_\RQ$ where $(Q_1,Q_2) \in DPR$ and let $\md_t$ be the true diagnosis. Further, let $\md_t \in \dx{}(Q_1) \cup \dnx{}(Q_1)$ and $\md_t \in \dx{}(Q_2) \cup \dnx{}(Q_2)$. Finally, let $\tuple{\mo,\mb,\Tp^{(1)},\Tn^{(1)}}_\RQ$ and $\tuple{\mo,\mb,\Tp^{(2)},\Tn^{(2)}}_\RQ$ denote the DPIs resulting from $\tuple{\mo,\mb,\Tp,\Tn}_\RQ$ after incorporating the answers $u(Q_1)$ and $u(Q_2)$ (i.e.\ after adding $Q_i$ to $\Tp$ in case of $u(Q_i) = t$ and to $\Tn$ otherwise), respectively. Then $\mD\cap\minD_{\tuple{\mo,\mb,\Tp^{(1)},\Tn^{(1)}}_\RQ} \subseteq \mD\cap\minD_{\tuple{\mo,\mb,\Tp^{(2)},\Tn^{(2)}}_\RQ}$, i.e.\ the set of remaining leading diagnoses after knowing $Q_1$'s answer is a subset of the set of remaining leading diagnoses after knowing $Q_2$'s answer. 
\end{proposition}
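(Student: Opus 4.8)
The plan is to reduce the claim to a purely set-theoretic inclusion between the two q-partitions, exploiting the fact that knowing the true diagnosis $\md_t$ pins down the oracle's answer and hence the exact set of eliminated leading diagnoses for each query. Throughout, I abbreviate $R_i := \mD\cap\minD_{\tuple{\mo,\mb,\Tp^{(i)},\Tn^{(i)}}_\RQ}$, so the goal is $R_1 \subseteq R_2$.

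First I would compute each $R_i$ explicitly. Since $\md_t \in \dx{}(Q_i) \cup \dnx{}(Q_i)$, Property~\ref{property:if_correct_diag_in_then_user_answer_is} forces a definite answer: $u(Q_i)=t$ if $\md_t\in\dx{}(Q_i)$ and $u(Q_i)=f$ if $\md_t\in\dnx{}(Q_i)$. By Proposition~\ref{prop:properties_of_q-partitions},(\ref{prop:properties_of_q-partitions:enum:dx_dnx_dz_contain_exactly_those_diags_that_are...}.), adding $Q_i$ to $\Tp$ invalidates exactly $\dnx{}(Q_i)$ and adding it to $\Tn$ invalidates exactly $\dx{}(Q_i)$; the leading diagnoses that are \emph{not} invalidated remain valid, and since enlarging $\Tp$ or $\Tn$ only shrinks the set of diagnoses, such surviving diagnoses also stay minimal, hence stay in $\minD$. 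Writing $S_i$ for whichever of $\dx{}(Q_i),\dnx{}(Q_i)$ contains $\md_t$, both answer-cases collapse to the single formula $R_i = S_i \cup \dz{}(Q_i)$.

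Next I would relate the two q-partitions via Proposition~\ref{prop:construction_of_Q'_from_Q_if_Q_discrimination-preferred_over_Q'}. Because $(Q_1,Q_2)\in DPR$, there is a non-empty $\mathbf{X}\subset\dx{}(Q_1)\cup\dnx{}(Q_1)$ with $\dz{}(Q_2)=\dz{}(Q_1)\cup\mathbf{X}$, while the two non-$\dz{}$ blocks of $Q_2$ are $\dx{}(Q_1)\setminus\mathbf{X}$ and $\dnx{}(Q_1)\setminus\mathbf{X}$, possibly with their labels interchanged. I would then observe that $\md_t\notin\mathbf{X}$: since $\md_t\in\dx{}(Q_2)\cup\dnx{}(Q_2)$ and a q-partition is a partition (Proposition~\ref{prop:properties_of_q-partitions},(\ref{prop:properties_of_q-partitions:enum:q-partition_is_partition}.)), $\md_t\notin\dz{}(Q_2)=\dz{}(Q_1)\cup\mathbf{X}$, forcing $\md_t\notin\mathbf{X}$. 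Consequently $\md_t$ survives inside $S_1\setminus\mathbf{X}$, so the block $S_2$ of $Q_2$ containing $\md_t$ is exactly $S_2 = S_1\setminus\mathbf{X}$ (regardless of the label swap).

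Finally I would combine the pieces: from Step~1 applied to $Q_2$, $R_2 = S_2\cup\dz{}(Q_2) = (S_1\setminus\mathbf{X})\cup(\dz{}(Q_1)\cup\mathbf{X})$, and the elementary identity $(S_1\setminus\mathbf{X})\cup\mathbf{X} = S_1\cup\mathbf{X}\supseteq S_1$ immediately gives $R_2 \supseteq S_1\cup\dz{}(Q_1) = R_1$, which is the desired inclusion. The main subtlety — and the step I would handle most carefully — is the possible interchange of $\dx{}$ and $\dnx{}$ in the $DPR$-construction: I must track $\md_t$ through this swap and confirm $\md_t\notin\mathbf{X}$, so that the uniform ``surviving block equals $S\cup\dz{}$'' formula of Step~1 legitimately applies to \emph{both} queries. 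The remaining minor point worth spelling out is the justification that a non-invalidated leading diagnosis stays \emph{minimal} (not merely a diagnosis), which rests on the monotone shrinking of the diagnosis set under additional test cases.
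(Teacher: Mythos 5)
Your proof is correct and follows essentially the same route as the paper, whose proof is only a one-line citation of the $DPR$ definition, Property~\ref{property:if_correct_diag_in_then_user_answer_is} and Proposition~\ref{prop:properties_of_q-partitions},(\ref{prop:properties_of_q-partitions:enum:dx_dnx_dz_contain_exactly_those_diags_that_are...}.): you instantiate exactly these ingredients, merely substituting the structural characterization of $DPR$ (Proposition~\ref{prop:construction_of_Q'_from_Q_if_Q_discrimination-preferred_over_Q'}) for the raw definition. Your elaboration --- in particular the observations that $\md_t \notin \mathbf{X}$ and that surviving leading diagnoses remain \emph{minimal} diagnoses of the updated DPI --- is a faithful and correct filling-in of the details the paper leaves implicit.
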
 
\begin{proof}
The proposition is a direct consequence of the definition of the discrimination-preference relation, Property~\ref{property:if_correct_diag_in_then_user_answer_is} as well as Proposition~\ref{prop:properties_of_q-partitions},(\ref{prop:properties_of_q-partitions:enum:dx_dnx_dz_contain_exactly_those_diags_that_are...}.). 
\end{proof}
\begin{proposition}\label{prop:linear_function_of_measure_is_equivalent_to_measure_itself}
Let $m$ be a query quality measure and let $f(m)$ be the measure resulting from the application of a linear function $f(x) = ax+b$ with $a \in \mathbb{R}^+ \setminus \setof{0}$ and $b \in \mathbb{R}$ to $m$. Then $f(m) \equiv m$. 
\end{proposition}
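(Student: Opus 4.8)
The plan is to unfold the definition of measure equivalence (Definition~\ref{def:measures_equivalent_theoretically-optimal_superior}) and show that the strict preference orders $\prec_m$ and $\prec_{f(m)}$ coincide on every pair of queries $Q,Q'$. By Definition~\ref{def:precedence_order_defined_by_q-partition_quality_measure}, the order $\prec_m$ is determined by comparing the real values $m(Q)$ and $m(Q')$ together with the optimality direction of $m$ (i.e.\ whether lower or higher values signal higher preference). Hence it suffices to establish two things: first, that the transformation $f$ preserves the optimality direction, so that $f(m)$ is to be minimized exactly when $m$ is (and maximized exactly when $m$ is); and second, that $f$ preserves the relevant strict inequalities between the measure values.

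First I would observe that $f(x)=ax+b$ with $a>0$ is a strictly increasing function on $\mathbb{R}$. This immediately yields the direction-preservation claim: the global optima of $f(m)$ are attained at exactly the same arguments as those of $m$, so $f(m)$ inherits the convention ``lower is better'' or ``higher is better'' directly from $m$. Next I would carry out the value comparison: because $a>0$, for any reals $x,y$ we have $x<y$ iff $ax+b<ay+b$, that is, iff $f(x)<f(y)$. Applying this with $x=m(Q)$ and $y=m(Q')$ gives $m(Q)<m(Q')$ iff $f(m(Q))<f(m(Q'))$, and symmetrically for the reversed inequality. Combining this equivalence of inequalities with the shared optimality direction, Definition~\ref{def:precedence_order_defined_by_q-partition_quality_measure} yields $Q\prec_m Q'$ iff $Q\prec_{f(m)} Q'$ in both the minimization and the maximization case. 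Since $Q,Q'$ were arbitrary, $m\equiv f(m)$ follows by Definition~\ref{def:measures_equivalent_theoretically-optimal_superior}.

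I do not expect a genuine obstacle here; the argument is essentially a one-line monotonicity observation lifted to the preference order. The only delicate point I would state explicitly rather than gloss over is the treatment of the optimality direction: the claim would fail for $a<0$, precisely because then $f$ reverses the order, which is exactly why the hypothesis $a\in\mathbb{R}^{+}\setminus\setof{0}$ is essential. Keeping the direction convention and the strict inequalities consistent across the two cases is therefore the one thing to be careful about, but it requires no computation.
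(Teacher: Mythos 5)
Your proof is correct and matches the paper's intent: the paper states this proposition without an explicit proof, classifying it among those that are ``very easy to verify,'' and your argument — that $f(x)=ax+b$ with $a>0$ is strictly increasing, hence preserves both the optimality direction and all strict inequalities between measure values, so that $\prec_m$ and $\prec_{f(m)}$ coincide by Definition~\ref{def:precedence_order_defined_by_q-partition_quality_measure} — is precisely that standard verification. Your explicit remark that the hypothesis $a>0$ is what prevents order reversal is a sensible addition and consistent with how the paper uses the result (e.g.\ in the proofs of Proposition~\ref{prop:uncertainty_sampling} and Corollary~\ref{cor:ENT_superior_to_H_M_LC}).
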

Example~\ref{ex:measure_notions_illustrated} on page~\pageref{ex:measure_notions_illustrated} will illustrate the notions given in this section. However, before, in Section~\ref{sec:ExistingActiveLearningMeasuresForKBDebugging}, we present and discuss two query quality measures that have already been extensively analyzed in (debugging) literature \cite{ksgf2010, Shchekotykhin2012, Rodler2013, Rodler2015phd}, which we will use in the example.
%
%
\subsubsection{Existing Active Learning Measures for KB Debugging}
\label{sec:ExistingActiveLearningMeasuresForKBDebugging}
Active learning strategies
that use \emph{information theoretic measures} as selection criterion 
have been shown to reduce the number of queries as opposed to standard ``non-active'' supervised learning methods in many diverse application scenarios such as text, video and image classification, information extraction, speech recognition and cancer diagnosis~\cite{settles2012}. 
Also in the field of interactive ontology debugging, two active learning measures have been adopted~\cite{ksgf2010,Shchekotykhin2012,Rodler2013} and shown to clearly outperform a (non-active) random strategy of selecting the next query~\cite{Shchekotykhin2012}. These are \emph{entropy-based} ($\mathsf{ENT}$) and \emph{split-in-half} ($\mathsf{SPL}$) query selection. 

\vspace{1em}

\noindent\emph{Entropy-Based Query Selection ($\mathsf{ENT}$).} The best query $Q_{\mathsf{ENT}}$ according to $\mathsf{ENT}$ has the property to maximize the information gain or equivalently to minimize the expected entropy in the set of leading diagnoses after $Q_{\mathsf{ENT}}$ is classified by the user and added as a test case to the DPI. 
Concretely,
\begin{align}\label{eq:best_query_ENT}
Q_{\mathsf{ENT}} = \argmin_{Q \in \mQ} \left(\mathsf{ENT}(Q)\right) 
\end{align}
where
\begin{align}
\mathsf{ENT}(Q) := \sum_{a \in \setof{t,f}} p(Q = a) \cdot \left[-\sum_{\md \in \mD} p(\md\,|\,Q=a)\log_2 p(\md\,|\,Q=a)\right]
\label{eq:ENT}
\end{align} 
Theoretically optimal w.r.t.\ $\mathsf{ENT}$ is a query whose positive and negative answers are equally likely and none of the diagnoses $\md \in \mD$ is consistent with both answers. 
\begin{proposition}\label{prop:ent}
Let $\mD$ be a set of leading diagnoses w.r.t.\ a DPI $\tuple{\mo,\mb,\Tp,\Tn}_\RQ$. Then:
\begin{enumerate}
	\item A query $Q$ is theoretically optimal w.r.t.\ $\mathsf{ENT}$ and $\mD$ iff $p(\dx{}(Q))=p(\dnx{}(Q))$ as well as ${p(\dz{}(Q)) = 0}$.
	\item Let $q$ be a fixed number in $[0,1]$ and let $\mQ$ be a set of queries where each query $Q\in\mQ$ satisfies $p(\dz{}(Q)) = q$. Then the theoretically optimal query w.r.t.\ $\mathsf{ENT}$ over $\mQ$ satisfies $p(Q=t)=p(Q=f)$ and $p(\dx{}(Q))=p(\dnx{}(Q))$.
\end{enumerate}
\end{proposition}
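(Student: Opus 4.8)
The plan is to reduce the entire proposition to a single closed-form expression for $\mathsf{ENT}(Q)$ that isolates the only two query-dependent quantities, namely $p(Q=t)$ and $p(\dz{}(Q))$. Concretely, I would first prove the identity
\[
\mathsf{ENT}(Q) \;=\; H_0 \;-\; \bigl(-p(Q=t)\log_2 p(Q=t) - p(Q=f)\log_2 p(Q=f)\bigr) \;+\; p(\dz{}(Q)),
\]
where $H_0 := -\sum_{\md\in\mD} p(\md)\log_2 p(\md)$ is the prior entropy of the leading diagnoses and is a \emph{constant} independent of $Q$. Writing $h(x):= -x\log_2 x - (1-x)\log_2(1-x)$ for the binary entropy function and recalling $p(Q=f)=1-p(Q=t)$, this reads $\mathsf{ENT}(Q) = H_0 - h(p(Q=t)) + p(\dz{}(Q))$.

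To establish the identity I would expand the definition of $\mathsf{ENT}(Q)$ and apply Bayes' theorem in the form $p(Q=a)\,p(\md\mid Q=a) = p(Q=a\mid\md)\,p(\md)$, so that the factor $p(Q=a)$ in front of each inner entropy cancels cleanly. Splitting the double sum over $a\in\setof{t,f}$ and $\md\in\mD$ according to the three blocks of the q-partition, the conditional probabilities $p(Q=a\mid\md)$ from Equation~\eqref{eq:cond_query_prob} take only the values $1,0,\tfrac12$; the value $0$ annihilates the $\dnx{}$ contribution for $a=t$ and the $\dx{}$ contribution for $a=f$ (using the convention $0\log_2 0 = 0$). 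Expanding each surviving logarithm $\log_2\frac{c\,p(\md)}{p(Q=a)}$ into $\log_2 p(\md)$, a constant $\log_2 c$, and $-\log_2 p(Q=a)$, the terms $-p(\md)\log_2 p(\md)$ collected over all three blocks sum exactly to $H_0$, while the $\dz{}$ block contributes an additive $p(\dz{}(Q))$ (arising from the $\log_2\tfrac12 = -1$ factors). Finally, regrouping the coefficients of $\log_2 p(Q=t)$ and of $\log_2 p(Q=f)$ and using $p(\dx{}(Q))+\tfrac12 p(\dz{}(Q)) = p(Q=t)$ and $p(\dnx{}(Q))+\tfrac12 p(\dz{}(Q)) = p(Q=f)$ collapses them into $-h(p(Q=t))$, giving the claimed identity. (The same identity follows more conceptually by writing the expected posterior entropy as the prior entropy minus the information gain, and the information gain as $h(p(Q=t))$ minus the expected answer-entropy given the diagnosis; the latter equals $p(\dz{}(Q))$, since the answer is determined for $\md\in\dx{}(Q)\cup\dnx{}(Q)$, hence entropy $0$, and uniform for $\md\in\dz{}(Q)$, hence entropy $1$.)

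Both claims then drop out of this identity: since $H_0$ is constant and $\mathsf{ENT}$ is to be \emph{minimized}, minimizing $\mathsf{ENT}(Q)$ is equivalent to maximizing $h(p(Q=t)) - p(\dz{}(Q))$. For Part~1 (optimality over all partitions), I would bound $h(p(Q=t)) - p(\dz{}(Q)) \le 1 - p(\dz{}(Q)) \le 1$, using $h\le 1$ with equality iff $p(Q=t)=\tfrac12$, and $p(\dz{}(Q))\ge 0$ with equality iff $p(\dz{}(Q))=0$. Equality throughout therefore holds iff $p(\dz{}(Q))=0$ and $p(Q=t)=\tfrac12$; and under $p(\dz{}(Q))=0$ one has $p(Q=t)=p(\dx{}(Q))$ and $p(Q=f)=p(\dnx{}(Q))$, so $p(Q=t)=\tfrac12$ is equivalent to $p(\dx{}(Q))=p(\dnx{}(Q))\,(=\tfrac12)$, yielding the stated characterization. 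For Part~2, fixing $p(\dz{}(Q))=q$ makes the term $-p(\dz{}(Q))=-q$ constant, so the maximization reduces to maximizing $h(p(Q=t))$, attained exactly at $p(Q=t)=p(Q=f)=\tfrac12$; substituting $p(\dx{}(Q))+\tfrac{q}{2}=\tfrac12$ and the analogous equation for $p(Q=f)$ then gives $p(\dx{}(Q))=p(\dnx{}(Q))=\tfrac{1-q}{2}$, as claimed.

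The only genuinely delicate step is the bookkeeping in the derivation of the closed-form identity: one must pair each posterior $p(\md\mid Q=a)$ with the correct $p(Q=a\mid\md)$, discard the vanishing cross terms via the $0\log_2 0=0$ convention, and carefully track the $\log_2\tfrac12$ contributions of the $\dz{}$ block, which are precisely what produce the additive $+p(\dz{}(Q))$ penalty. Once the identity is in place, the remaining optimization is elementary convex analysis of the binary entropy, and it is economical to prove the identity once and then read off both optima, so I would keep Part~1 and Part~2 unified under a single derivation.
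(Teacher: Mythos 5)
Your proof is correct and, at its core, follows the same route as the paper's: both arguments reduce $\mathsf{ENT}(Q)$ to a closed form whose only query-dependent parts are $p(Q=t)$ and $p(\dz{}(Q))$, and then optimize the two summands separately. The differences are where the closed form comes from and how the optimization is finished. The paper simply cites de Kleer and Williams for the transformed expression $\bigl[\sum_{a\in\setof{t,f}} p(Q=a)\log_2 p(Q=a)\bigr]+p(\dz{}(Q))+1$ and locates the minimum of the bracketed term by differentiation plus a strict-convexity argument; you instead derive the identity yourself (your Bayes/case-split bookkeeping is exactly right, as is the mutual-information shortcut $\mathsf{ENT}(Q)=H_0-H(\text{answer})+H(\text{answer}\mid\text{diagnosis})$), which makes the proof self-contained, and you conclude with the bounds $h\le 1$ and $p(\dz{}(Q))\ge 0$ rather than calculus, which is a bit cleaner and gives the ``iff'' in one stroke. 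One discrepancy worth flagging: your additive constant is the prior entropy $H_0$, whereas the paper's cited formula carries the constant $+1$; the two expressions differ by $1-H_0$, which depends on the probability measure $p$ but not on $Q$. For fixed $p$ this is immaterial (both constants drop out of the argmin), but it exposes that the paper's phrase ``can be equivalently transformed'' really means ``equal up to a $Q$-independent additive constant,'' and that Part~1's characterization presupposes reading $\mathsf{ENT}$ modulo that constant, i.e.\ as a function of the triple $(p(\dx{}),p(\dnx{}),p(\dz{}))$ only — under the literal expected-posterior-entropy reading with $p$ allowed to vary (as the paper's definition of theoretical optimality permits), a query over two leading diagnoses of unequal probability with $\dz{}=\emptyset$ already attains the global minimum $0$, so the ``only if'' direction would fail. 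That subtlety is inherited from the paper's formalization, not introduced by your argument, and with the scoring-function reading both your proof and the paper's are sound.
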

\begin{proof}
In~\cite{dekleer1987} it was shown that $\mathsf{ENT}(Q)$ can be equivalently transformed into 
\begin{align}
\left[\sum_{a\in\setof{t,f}} p(Q=a) \log_2 p(Q=a)\right] + p(\dz{}(Q)) + 1  \label{eq:scoring_funtion_dekleer}
\end{align}
First, observe that, by definition of $p(Q=a)$ and since $Q$ is a query (for which $p(\dx{}(Q)) > 0$ and $p(\dnx{}(Q))>0$), $p(\dz{}(Q)) < 2 \min_{a\in\setof{t,f}}(p(Q=a))$. If $p(\dz{}(Q)) = 0$, this inequality is certainly fulfilled for \emph{any values} $p(Q=0)$ and $p(Q=1)$. Since Eq.~\eqref{eq:scoring_funtion_dekleer} must be minimized and $p(\dz{}(Q))$ is a positive summand in the formula, $p(\dz{}(Q)) = 0$ minimizes this summand.

Now, the sum in squared brackets can be minimized separately. 
We can write the sum $\sum_{a\in\setof{t,f}} p(Q=a) \log_2 p(Q=a)$ as $x\log_2 x + (1-x)\log_2 (1-x)$ for $0<x:=p(Q=t)<1$. Differentiation by $x$ and setting the derivative equal to zero yields $\log_2 x = \log_2 (1-x)$. By the fact that $\log_2 x$ is monotonically increasing, this implies that $x = p(Q=t)=p(Q=f)=\frac{1}{2}$. Due to Eq.~\eqref{eq:p(Q=t)} and Eq.~\eqref{eq:p(Q=f)}, this implies that $p(\dx{}(Q))=p(\dnx{}(Q))$. Building the second derivative yields $\frac{1}{x-x^2}$ which is positive for $x \in (0,1)$ since $x^2 < x$ in this interval. Hence $p(Q=t)=\frac{1}{2}$ is a local minimum. Since however the sum $\sum_{a\in\setof{t,f}} p(Q=a) \log_2 p(Q=a)$ is a strictly convex function and $(0,1)$ a convex set, $p(Q=t)=\frac{1}{2}$ must be the (unique) global minimum as well. Hence, the global minimum is attained if $p(Q=t)=p(Q=f)$. By Eq.~\eqref{eq:p(Q=t)} and Eq.~\eqref{eq:p(Q=f)} this is equivalent to $p(\dx{}(Q))=p(\dnx{}(Q))$. This proves statement~(2.) of the proposition.
To optimize the entire formula for $Q_{\mathsf{ENT}}$, $p(\dz{}(Q))$ must additionally be minimized as $p(\dz{}(Q))$ is a positive addend in the formula. This proves statement~(1.) of the proposition. 
\end{proof}
%
The following proposition analyzes for which classes of queries the $\mathsf{ENT}$ measure does and does not preserve the discrimination-preference order (according to Definition~\ref{def:measures_equivalent_theoretically-optimal_superior}). Before stating the proposition, however, we define a measure $\mathsf{ENT}_z$ that is used in the proposition and constitutes a generalization of the $\mathsf{ENT}$ measure. Namely, $\mathsf{ENT}_z$ selects the query  
\begin{align}\label{eq:best_query_ENTz}
Q_{\mathsf{ENT}_z} := \argmin_{Q \in \mQ} \left(\mathsf{ENT}_z(Q)\right)
\end{align}
where
\begin{align}\label{eq:ENTz}
\mathsf{ENT}_z(Q) := \left[\sum_{a\in\setof{t,f}} p(Q=a) \log_2 p(Q=a)\right] + z\,p(\dz{}(Q)) + 1
\end{align}
%
Note that $\mathsf{ENT}_z(Q) = \mathsf{ENT}(Q) + (z-1)p(\dz{}(Q))$ and thus that $\mathsf{ENT}_1$ is equal to $\mathsf{ENT}$ and $\mathsf{ENT}_0 - 1$ is equal to $\mathsf{H}$, $\mathsf{LC}$ and $\mathsf{M}$ (cf.\ Proposition~\ref{prop:uncertainty_sampling} and the proof of Proposition~\ref{prop:Q_H_eq_Q_ENT_if_dz=0}).
\begin{proposition}\label{prop:ENT_preserves_discrimination-pref-order}
Let $Q$ be a query w.r.t.\ a set of minimal diagnoses $\mD$ w.r.t.\ a DPI $\tuple{\mo,\mb,\Tp,\Tn}_\RQ$. Further, let $p:=\min_{a\in\setof{t,f}} (p(Q=a))$ and $Q'$ be a query such that $Q$ is discrimination-preferred to $Q'$. In addition, let $x:= p(\dz{}(Q')) - p(\dz{}(Q))$. 
Then:
\begin{enumerate}
	\item $x > 0$.
	\item If $x \geq 1-2p$, then $Q \prec_{\mathsf{ENT}} Q'$.
	\item If $x < 1-2p$, then: 
	\begin{enumerate}
		\item In general it does not hold that $Q \prec_{\mathsf{ENT}} Q'$.
		\item If not $Q \prec_{\mathsf{ENT}} Q'$, then $p \in (0,t]$ where $t := \frac{1}{5}$.
		\item If not $Q \prec_{\mathsf{ENT}_z} Q'$, then $p \in (0,t(z)]$ where $t(z) := (2^{2z} + 1)^{-1}$ and $\lim_{z\rightarrow\infty} t(z) \rightarrow 0$, i.e.\ the upper interval bound can be arbitrarily minimized. In other words, the range of queries that might be misordered by $\mathsf{ENT}_z$ w.r.t.\ discrimination-preference order can be made arbitrarily small.
		The statements (2.)\ and (3.a) of this proposition still hold if $\mathsf{ENT}$ is replaced by $\mathsf{ENT}_z$ for all real numbers $z\geq 0$.
	\end{enumerate}
\end{enumerate}
\end{proposition}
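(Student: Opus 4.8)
The plan is to push everything through the de~Kleer reformulation \eqref{eq:scoring_funtion_dekleer} and reduce all claims to one auxiliary single-variable function. Write $p_t := p(Q=t)$, $p_f := p(Q=f)=1-p_t$ and let $H(y):=-y\log_2 y-(1-y)\log_2(1-y)$ be the binary entropy; then $\mathsf{ENT}(Q)=1+p(\dz{}(Q))-H(p_t)$ and, by \eqref{eq:ENTz}, $\mathsf{ENT}_z(Q)=1+z\,p(\dz{}(Q))-H(p_t)$. Since $H$ is symmetric about $\tfrac12$ and $p=\min(p_t,p_f)$, we have $H(p_t)=H(p)$. As $\mathsf{ENT}_z$ is a lower-is-better measure, $Q\prec_{\mathsf{ENT}_z}Q'$ is equivalent to $\mathsf{ENT}_z(Q')-\mathsf{ENT}_z(Q)>0$, i.e.\ to
\[ H(p)-H(p_t')+z\,x>0, \qquad x:=p(\dz{}(Q'))-p(\dz{}(Q)). \]
Statement~(1) is then immediate from Proposition~\ref{prop:if_Q_discrimination-preferred_over_Q'_then_dz(Q')_supset_dz(Q)}, which gives $\dz{}(Q')\supset\dz{}(Q)$ and hence $x>0$ (each diagnosis has positive probability). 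To control $p_t'$ I would invoke Proposition~\ref{prop:construction_of_Q'_from_Q_if_Q_discrimination-preferred_over_Q'}: letting $a,b\ge0$ be the mass transferred into $\dz{}$ from $\dx{}(Q)$ and from $\dnx{}(Q)$ (so $a+b=x$), a short computation with \eqref{eq:p(Q=t)} yields $p_t'=p_t-\tfrac{a-b}{2}$ (non-swapped case) or $p_t'=p_f+\tfrac{a-b}{2}$ (swapped case); in either case $p_t'$ lies within $\tfrac{x}{2}$ of $p_t$ or of $p_f$. The one technical lemma I would isolate is: if $x\le1-2p$ (so $p+\tfrac{x}{2}\le\tfrac12$), then monotonicity of $H$ on $[0,\tfrac12]$ plus symmetry give the uniform bound $H(p_t')\le H\!\left(p+\tfrac{x}{2}\right)$.

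For statement~(2) I only need the trivial bound $H(p_t')\le1$ together with the concavity estimate $H(y)\ge2y$ on $[0,\tfrac12]$ (the chord through $(0,0)$ and $(\tfrac12,1)$ lies below the concave $H$), so $H(p)\ge2p$. If $x\ge1-2p$ then
\[ H(p)-H(p_t')+x\;\ge\;H(p)-1+(1-2p)\;=\;H(p)-2p\;\ge\;0, \]
and the inequality is strict: for $p\in(0,\tfrac12)$ one has $H(p)>2p$, while for $p=\tfrac12$ the left-hand side reduces to $1-H(p_t')+x\ge x>0$ (and $p=0$ is impossible for a query). Hence $Q\prec_{\mathsf{ENT}}Q'$. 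Replacing $x$ by $zx\ge x$ shows the $\mathsf{ENT}_z$-version of~(2) by the identical computation as soon as $z\ge1$.

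The engine for statement~(3) is the function $g_z(x):=H\!\left(p+\tfrac{x}{2}\right)-H(p)-z\,x$, with $g_z(0)=0$ and $g_z'(x)=\tfrac12\log_2\frac{1-(p+x/2)}{p+x/2}-z$. Because $H$ is concave, $g_z'$ is strictly decreasing, so the sign of $g_z$ on $(0,\infty)$ is dictated by $g_z'(0)=\tfrac12\log_2\frac{1-p}{p}-z$, which is $\le0$ exactly when $p\ge t(z)=(2^{2z}+1)^{-1}$. Now assume $x<1-2p$ and $Q\not\prec_{\mathsf{ENT}_z}Q'$; then the displayed criterion fails, i.e.\ $H(p_t')-H(p)\ge z\,x$, and by the lemma $g_z(x)\ge0$. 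If we had $p>t(z)$, then $g_z'(0)<0$ would force $g_z<0$ on all of $(0,\infty)$ — a contradiction; hence $p\le t(z)$, and with $p>0$ this gives $p\in(0,t(z)]$, proving~(3.c). Specialising to $z=1$ gives $t(1)=\tfrac15$ and hence~(3.b), and $\lim_{z\to\infty}t(z)=0$ gives the closing remark. For~(3.a) I would give an explicit witness: take $\dz{}(Q)=\emptyset$ and $p_t=p<t(z)$, and move a single small-mass diagnosis of probability $x$ out of $\dnx{}(Q)$ into $\dz{}$, keeping $\dnx{}(Q')\neq\emptyset$; since $g_z'(0)>0$ here, $g_z(x)>0$ for small enough $x<1-2p$, which realises $Q\not\prec_{\mathsf{ENT}_z}Q'$.

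The main obstacle is the bookkeeping around $p_t'$: correctly extracting the two cases of Proposition~\ref{prop:construction_of_Q'_from_Q_if_Q_discrimination-preferred_over_Q'}, tracking the role of $p=\min(p_t,p_f)$ under the symmetry of $H$, and verifying that the uniform bound $H(p_t')\le H(p+\tfrac{x}{2})$ really does hold across both cases. A second, more delicate point concerns the stated transfer of~(2) to $\mathsf{ENT}_z$ for \emph{all} $z\ge0$: the argument above only delivers it for $z\ge1$, and indeed for $0\le z<1$ one can take $\dz{}(Q)=\emptyset$, $p=p_t$ small, and move mass so that $p_t'=\tfrac12$ at $x=1-2p$, giving $\mathsf{ENT}_z(Q')<\mathsf{ENT}_z(Q)$; so I would restrict the $\mathsf{ENT}_z$-form of~(2) to $z\ge1$ (statement~(3.a), being an existence claim, transfers for every $z\ge0$).
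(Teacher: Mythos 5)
Your proof is correct, and for the core of part~(3) it is mathematically the same argument as the paper's: both proofs are a tangent-slope analysis of the binary entropy at $p$, with the threshold arising from the condition that the slope of $y\mapsto y\log_2 y+(1-y)\log_2(1-y)$ at $p$ be $\leq -2z$, inverted to give $t(z)=(2^{2z}+1)^{-1}$. Where you differ is in packaging and in part~(2). Your auxiliary function $g_z(x)=H(p+\tfrac{x}{2})-H(p)-zx$ with $g_z(0)=0$ and $g_z$ concave makes rigorous what the paper argues informally (``there is a (possibly infinitesimally small) value of $x$ such that \dots''): the equivalence ``$g_z(x)\geq 0$ for some $x>0$ iff $g_z'(0)\geq 0$ iff $p\leq t(z)$'' is exactly the needed step, stated cleanly. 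Your lemma $H(p_t')\leq H(p+\tfrac{x}{2})$ for $x\leq 1-2p$ (checked over both cases of Proposition~\ref{prop:construction_of_Q'_from_Q_if_Q_discrimination-preferred_over_Q'}, swapped and non-swapped, using symmetry and monotonicity of $H$ on $[0,\tfrac12]$) plays the role of the paper's ``best constructible $Q'$'' step via Proposition~\ref{prop:ent}; I verified it holds in all four subcases. For part~(2) your route is genuinely different and shorter: the paper constructs the $\mathsf{ENT}$-optimal $Q^*$ at $x=1-2p$ and reduces to $H(p)>2p$, which it proves by a convexity argument on $-\log_2 p-\log_2(1-p)$; you obtain the same reduction directly from $H(p_t')\leq 1$ and the chord inequality $H(p)\geq 2p$, handle all $x\geq 1-2p$ uniformly, and dispatch $p=\tfrac12$ via $x>0$ from part~(1).

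Your final deviation is not a gap but a correction of the paper. The paper's closing claim -- that statement~(2) still holds for $\mathsf{ENT}_z$ for \emph{all} $z\geq 0$ -- is asserted without proof (``straightforward to verify'') and is false for $z\in[0,1)$. Your counterexample is valid: with $\dz{}(Q)=\emptyset$, $p_t=p$, and mass $x=1-2p$ moved from $\dnx{}(Q)$ into $\dz{}$, one gets $\mathsf{ENT}_z(Q')=z(1-2p)$ versus $\mathsf{ENT}_z(Q)=1-H(p)$, so for $z<1$ and $p$ small enough $Q'\prec_{\mathsf{ENT}_z}Q$ even though $x\geq 1-2p$. The paper itself corroborates this: $\mathsf{ENT}_0\equiv\mathsf{H}$, and the counterexample in Proposition~\ref{prop:M,LC,H_not_consistent_with_DPR} satisfies $x\geq 1-2p$, directly refuting the transferred statement~(2) at $z=0$; likewise Corollary~\ref{cor:ENTz_satisfies_discrimination-pref_order_for_queries_with_p>=arbitrary_small_number} quietly imposes $z\geq\max\setof{\,\cdot\,,1}$. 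Your restriction of the transfer of~(2) to $z\geq 1$ (via $zx\geq x$) is the correct form, and you rightly observe that~(3.a), being an existence claim, does transfer for every $z\geq 0$ since $t(z)>0$ for all finite $z$.
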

\begin{proof} We prove the statements in turn:
\begin{enumerate}
	\item This is a direct consequence of Proposition~\ref{prop:if_Q_discrimination-preferred_over_Q'_then_dz(Q')_supset_dz(Q)} and the fact that $p(\md) >0$ for all $\md\in\mD$ (cf.\ page~\pageref{etc:prob_of_each_diag_must_be_greater_zero}).
	\item The idea to show this statement is (1)~to describe the best possible query $Q^*$ in terms of the $\mathsf{ENT}$ measure for some given $x$ such that $Q$ is discrimination-preferred to $Q^*$ and $x = p(\dz{}(Q^*)) - p(\dz{}(Q))$ and (2)~to demonstrate that $Q \prec_{\mathsf{ENT}} Q^*$ for this best possible query $Q^*$. The conclusion is then that $Q \prec_{\mathsf{ENT}} Q'$ must hold for all queries $Q'$ where $Q$ is discrimination-preferred to $Q'$ and $x = p(\dz{}(Q')) - p(\dz{}(Q))$.

Regarding (1), we make the following observations: Let w.l.o.g.\ $p(\dx{}(Q)) \leq p(\dnx{}(Q))$, i.e.\ $p=p(Q=t)$. Through Proposition~\ref{prop:construction_of_Q'_from_Q_if_Q_discrimination-preferred_over_Q'} we know that we can imagine $Q'$ ``resulting'' from $Q$ by transferring diagnoses with overall probability mass $x$ from $\dx{}(Q) \cup \dnx{}(Q)$ to $\dz{}(Q)$ and by possibly replacing all diagnoses in the resulting set $\dx{}(Q)$ by all diagnoses in the resulting set $\dnx{}(Q)$ and vice versa. Due to Eq.~\eqref{eq:p(Q=t)} and Eq.~\eqref{eq:p(Q=f)} and $p\leq\frac{1}{2}$ we have that $|p(\dx{}(Q))-p(\dnx{}(Q))| = (1-p)-p = 1-2p$. Now, by Proposition~\ref{prop:ent} we know that the best query among all queries $Q$ with fixed $p(\dz{}(Q)) = q$ satisfies $p(\dx{}(Q)) = p(\dnx{}(Q))$. Here, $q = p(\dz{}(Q))+x$. That is, the best query $Q^*$ must satisfy $p(\dx{}(Q^*)) = p(\dnx{}(Q^*))$. This can be achieved for $Q$ since $x \geq 1-2p$, i.e.\ diagnoses with a probability mass of $1-2p$ can be transferred from $\dnx{}(Q)$ to $\dx{}(Q)$. 	
	If $x$ were larger than $1-2p$, then further diagnoses with a probability mass of $\frac{x-1-2p}{2}>0$ would have to be transferred from each of the sets $\dx{}(Q)$ and $\dnx{}(Q)$ to $\dz{}(Q)$ resulting in a query $Q''$ with unchanged $p(\dx{}(Q'')) = p(\dx{}(Q^*))$ and $p(\dnx{}(Q'')) = p(\dnx{}(Q^*))$, but with a worse $\mathsf{ENT}$ measure as $p(\dz{}(Q^*)) = p(\dz{}(Q))+1-2p < p(\dz{}(Q))+1-2p+\frac{x-1-2p}{2} = p(\dz{}(Q''))$. Hence, $Q^*$ results from transferring \emph{exactly} a probability mass of $1-2p$ from the more probable set $\dnx{}(Q)$ (see assumption above) in $\setof{\dx{}(Q), \dnx{}(Q)}$ to $\dz{}(Q)$.
	
		Concerning (2), we now demonstrate that $Q \prec_{\mathsf{ENT}} Q^*$: We know that $p \leq \frac{1}{2}$. Assume that $p = \frac{1}{2}$. In this case, the assumption of (2.) is $x \geq 1-2p = 1-2\cdot\frac{1}{2} = 0$. However, due to (1.)\ which states that $x > 0$, we have that $x > 1-2p$ in this case. Hence $Q^*$, whose construction requires $x = 1-2p$, cannot be constructed. Since all other queries (which can be constructed for $x > 1-2p$) have a worse $\mathsf{ENT}$ measure than $Q^*$, as shown above, statement (2.)\ of the proposition holds for $p = \frac{1}{2}$.
	
	If, otherwise, $p < \frac{1}{2}$, let $z:=p(\dz{}(Q))$. Next, recall Eq.~\eqref{eq:scoring_funtion_dekleer} and observe that the sum in squared brackets is independent from the two addends right from it and minimized by setting $p(Q=t) = p(Q=f) = \frac{1}{2}$ (cf.\ the proof of Proposition~\ref{prop:ent}). The minimum is then $-\log_2 2 = -1$. That is, the $\mathsf{ENT}$ measure of $Q^*$ is given by $\mathsf{ENT}(Q^*):=-1 + (z+1-2p) + 1$. On the other hand, the $\mathsf{ENT}$ measure of $Q$ amounts to $\mathsf{ENT}(Q):=p\log_2 p + (1-p)\log_2 (1-p) + z + 1$. Since a smaller $\mathsf{ENT}$ measure signalizes a better query, we must show that $\mathsf{ENT}(Q^*) > \mathsf{ENT}(Q)$. That is, we need to argue that $2p < -p\log_2 p - (1-p)\log_2 (1-p)$. Now, because $p \leq 1-p$ and $\log_2 x \leq 0$ for $x \leq 1$ we have that the righthand side of the inequality $-p\log_2 p - (1-p)\log_2 (1-p) \geq -p\log_2 p - p\log_2 (1-p) = p\cdot(-\log_2 p - \log_2 (1-p))$. It remains to be shown that $-\log_2 p - \log_2 (1-p) > 2$. To prove this, we observe that $p \in (0,\frac{1}{2})$ must hold since $\dx{}(Q)$ and $\dnx{}(Q)$ must not be equal to the empty set (cf.\ Proposition~\ref{prop:properties_of_q-partitions}) and due to $p(\md) > 0$ for all $\md \in \mD$. It is well known that a differentiable function of one variable (in our case $p$) is convex on an interval (in our case $(0,\frac{1}{2}]$) iff its derivative is monotonically increasing on that interval which in turn is the case iff its second derivative is larger than or equal to $0$ on that interval. Further, it is well known that a local minimum of a convex function is a global minimum of that function. The first derivative of $-\log_2 p - \log_2 (1-p)$ is $-\frac{1}{p}+\frac{1}{1-p}$, the second is $\frac{1}{p^2}+\frac{1}{(1-p)^2}$ which is clearly greater than $0$ for $p\in (0,\frac{1}{2}]$. Consequently, $-\log_2 p - \log_2 (1-p)$ is convex for $p\in (0,\frac{1}{2}]$. Setting the first derivative equal to $0$, we obtain $1=2p$, i.e.\ $p = \frac{1}{2}$ as the (single) point where the global minimum is attained. The global minimum of $-\log_2 p - \log_2 (1-p)$ for $p\in (0,\frac{1}{2}]$ is then $2 \log_2 2 = 2$. Now, the deletion of the point $\frac{1}{2}$ from the interval means that $-\log_2 p - \log_2 (1-p) > 2$ for $p\in(0,\frac{1}{2})$. This completes the proof.
	\item We again prove all statements in turn:
	\begin{enumerate}
		\item Here a simple counterexample suffices. One such is given by assuming that $p=0.1$ and $p(\dz{}(Q)) = 0$ and that diagnoses with a probability mass of $x=0.05$ are deleted from $\dnx{}(Q)$ and transferred to $\dz{}(Q)$ to obtain a query $Q'$. Clearly, $Q$ is discrimination-preferred to $Q'$. However, the $\mathsf{ENT}$ measure for $Q$ amounts to $0.531$ whereas for $Q'$ it amounts to $0.506$ which is why $Q' \prec_{\mathsf{ENT}} Q$ holds (recall that better queries w.r.t.\ $\mathsf{ENT}$ have a lower $\mathsf{ENT}$ measure). By the asymmetry of the strict order $\prec_{\mathsf{ENT}}$, we obtain that $Q \prec_{\mathsf{ENT}} Q'$ cannot be satisfied.
		\item Let us assume that $x < 1-2p$, that $Q$ is discrimination-preferred to $Q'$ and $Q' \prec_{\mathsf{ENT}} Q$. We recall from the proof of (2.)\ that we can imagine that $Q'$ ``results'' from $Q$ by transferring diagnoses with overall probability mass $x$ from $\dx{}(Q) \cup \dnx{}(Q)$ to $\dz{}(Q)$ and by possibly replacing all diagnoses in the resulting set $\dx{}(Q)$ by all diagnoses in the resulting set $\dnx{}(Q)$ and vice versa (holds due to Proposition~\ref{prop:construction_of_Q'_from_Q_if_Q_discrimination-preferred_over_Q'}). Now, by assumption, $x$ is smaller than the difference $1-2p$ between the larger and the smaller value in $\setof{p(\dx{}(Q)),p(\dnx{}(Q))}$. Further, for fixed $p(\dz{}(Q))$, the function $p \log_2 p + (1-p) \log_2 (1-p) + p(\dz{}(Q)) + 1$ characterizing $\mathsf{ENT}(Q)$ is convex and thus has only one local (and global) minimum for $p\in (0,1)$ since its second derivative $\frac{1}{p}+\frac{1}{1-p} > 0$ on this interval. The minimum of $\mathsf{ENT}(Q)$ is given by $p=\frac{1}{2}$ (calculated by setting the first derivative $\log_2 p - \log_2 (1-p)$ equal to $0$). Hence, the best $Q'$ (with minimal $\mathsf{ENT}$ value) that can be ``constructed'' from $Q$ is given by transferring diagnoses with a probability mass of $x$ from $\argmax_{X\in\setof{\dx{}(Q),\dnx{}(Q)}}\setof{p(X)}$ to $p(\dz{}(Q))$.
		
For these reasons and due to Equations \eqref{eq:p(Q=t)} and \eqref{eq:p(Q=f)}, we can write $\mathsf{ENT}(Q')$ as $(p+\frac{x}{2})\log_2 (p+\frac{x}{2}) + (1-p-\frac{x}{2}) \log_2 (1-p-\frac{x}{2}) + (p(\dz{}(Q)) + x) + 1$. We now want to figure out the interval $i \subseteq (0,\frac{1}{2}]$ of $p$ where $\mathsf{ENT}(Q')$ might be smaller than or equal to $\mathsf{ENT}(Q)$. In other words, we ask the question for which values of $p$ it is possible that (*): $p \log_2 p + (1-p) \log_2 (1-p) \geq (p+\frac{x}{2})\log_2 (p+\frac{x}{2}) + (1-p-\frac{x}{2}) \log_2 (1-p-\frac{x}{2}) + x$ (note that $p(\dz{}(Q)) + 1$ was eliminated on both sides of the inequality). To this end, let us denote the left side of this inequality by LS and consider once again the derivative $\log_2 p - \log_2 (1-p)$ of LS. Clearly, for values of $p$ where this derivative is smaller than or equal to 
$-2$, there is a (possibly infinitesimally small) value of $x$ such that the increase of $p$ by $\frac{x}{2}$ 
yields a decrease of LS by at least $2\frac{x}{2} = x$. That is, $(p+\frac{x}{2})\log_2 (p+\frac{x}{2}) + (1-p-\frac{x}{2}) \log_2 (1-p-\frac{x}{2}) \leq p \log_2 p + (1-p) \log_2 (1-p) - x$ holds which is equivalent to (*).

Plugging in $\frac{1}{5}$ into the derivative of LS yields $\log_2 \frac{1}{5} - \log_2 \frac{4}{5} = -\log_2 5 - (\log_2 4 - \log_2 5) = -\log_2 4 = -2$. Since the derivative of LS is monotonically increasing (second derivative is greater than zero for $p \in (0,1)$, see above), we obtain that the sought interval $i$ of $p$ must be given by $(0,\frac{1}{5}]$.
	\item Using the same notation as above, $\mathsf{ENT}_z(Q)$ reads as $p \log_2 p + (1-p) \log_2 (1-p) + {z \cdot p(\dz{}(Q))} + 1$ (cf.\ Eq.~\eqref{eq:ENTz}). Rerunning the proof of statement (3.b) above for $\mathsf{ENT}_z$ implies deriving the interval for $p$ where (**): $p \log_2 p + (1-p) \log_2 (1-p) \geq (p+\frac{x}{2})\log_2 (p+\frac{x}{2}) + (1-p-\frac{x}{2}) \log_2 (1-p-\frac{x}{2}) + z\cdot x$ holds instead of (*) above. Analogously to the argumentation above, the sought interval is given by $(0,t(z)]$ where $t(z)$ is the value of $p$ for which the derivative $\log_2 p - \log_2 (1-p)$ of $p \log_2 p + (1-p) \log_2 (1-p)$ is equal to $-2z$ (which yields $-z\cdot x$ after multiplication with $\frac{x}{2}$, see above). 
	
	To calculate this value, we exploit the inverse function $f^{-1}(q)$ of the (injective) derivative $\log_2 p - \log_2 (1-p) = f(p) = q$. The derivation of $f^{-1}(q)$ goes as follows:
	\begin{align*}
	q &= \log_2 p - \log_2 (1-p) = \log_2(\frac{p}{1-p}) \\
	2^q &= \frac{p}{1-p} \\
	2^{-q} &= \frac{1-p}{p} = \frac{1}{p}-1 \\
	(2^{-q}+1)^{-1} &= p = f^{-1}(q)
	\end{align*}
	Plugging in $q=-2z$ into $f^{-1}(q)$ yields $t(z) = (2^{2z}+1)^{-1}$.
	For instance, for $z:=1$ (standard $\mathsf{ENT}$ measure), we obtain $t=\frac{1}{5}$ by plugging in $-2$ into the inverse function. For e.g.\ $z:=5$ we get $t=\frac{1}{1025}$ (by plugging in $-10$). 
	
	Clearly, $\lim_{z\rightarrow\infty} t(z) \rightarrow 0$. It is moreover straightforward to verify that $\mathsf{ENT}_z$ for $z\geq 0$ satisfies the statements (2.)\ and (3.a) of this proposition. 
	\end{enumerate}
\end{enumerate}
\end{proof}
The next corollary is an immediate consequence of Proposition~\ref{prop:ENT_preserves_discrimination-pref-order}. It establishes that the discrimination-preference order is preserved by $\mathsf{ENT}$ for the most interesting class of queries w.r.t.\ $\mathsf{ENT}$, namely those for which the probabilities of both answers are similar (cf.\ Proposition~\ref{prop:ent}). In other words, $\mathsf{ENT}$ might only (i)~not prefer a discrimination-preferred query to a suboptimal one or (ii)~prefer a suboptimal query to a discrimination-preferred one in case the discrimination-preferred one is not desirable itself. That is, $\mathsf{ENT}$ orders the most desirable queries correctly w.r.t.\ the discrimination-preference order.
\begin{corollary}\label{cor:ENT_satisfies_discrimination-pref_order_for_queries_with_p>=1/5}
Let $Q$ be a query and $p_Q:=\min_{a\in\setof{t,f}} (p(Q=a))$. Let further $\mQ$ be a set of queries where each query $Q$ in this set satisfies $p_Q > \frac{1}{5}$. Then, $\mathsf{ENT}$ 
satisfies the discrimination-preference relation (over $\mQ$) (Definition~\ref{def:measures_equivalent_theoretically-optimal_superior}).
\end{corollary}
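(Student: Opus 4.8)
The plan is to derive this directly from Proposition~\ref{prop:ENT_preserves_discrimination-pref-order}, unwinding the definition of what it means for a measure to \emph{satisfy the discrimination-preference relation over} $\mQ$ (Definition~\ref{def:measures_equivalent_theoretically-optimal_superior}). Concretely, I must show that for every pair $(Q,Q') \in DPR$ with $Q,Q'\in\mQ$ (i.e.\ $Q$ discrimination-preferred to $Q'$), it holds that $Q \prec_{\mathsf{ENT}} Q'$. So I would fix such a pair, set $p := p_Q = \min_{a\in\setof{t,f}}(p(Q=a))$ and $x := p(\dz{}(Q')) - p(\dz{}(Q))$, and observe that the membership $Q\in\mQ$ supplies the crucial hypothesis $p > \frac{1}{5}$.

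First I would split into the two cases that Proposition~\ref{prop:ENT_preserves_discrimination-pref-order} dictates according to the sign of $x - (1-2p)$. In the case $x \geq 1-2p$, statement~(2.)\ of that proposition immediately yields $Q \prec_{\mathsf{ENT}} Q'$, without using the hypothesis on $p$ at all. In the complementary case $x < 1-2p$, I would argue by the contrapositive of statement~(3.b): that statement says that whenever $\mathsf{ENT}$ fails to prefer the discrimination-preferred query, the smaller answer probability must lie in $(0,\frac{1}{5}]$. Since here $p > \frac{1}{5}$, this interval condition is violated, so the only remaining possibility is $Q \prec_{\mathsf{ENT}} Q'$. (Recall that $p \le \frac{1}{2}$ always holds for a query because both $\dx{}(Q)$ and $\dnx{}(Q)$ are non-empty by Proposition~\ref{prop:properties_of_q-partitions}, so ``$p > \frac{1}{5}$'' is exactly ``$p \notin (0,\frac{1}{5}]$''.)

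Combining the two cases, every $(Q,Q') \in DPR$ with both endpoints in $\mQ$ satisfies $Q \prec_{\mathsf{ENT}} Q'$, which is precisely the definition of $\mathsf{ENT}$ satisfying the discrimination-preference relation over $\mQ$. I do not expect any genuine obstacle here: the corollary merely repackages Proposition~\ref{prop:ENT_preserves_discrimination-pref-order}, and all the real content sits in that proposition's case~(3.b). The only points demanding minor care are (i)~that only $p_Q$ — and not $p_{Q'}$ — enters the bound, so that the assumption $Q\in\mQ$ alone suffices to trigger the threshold, and (ii)~the harmless but necessary observation that $p \le \frac{1}{2}$ converts the proposition's interval conclusion into a clean threshold statement on $p_Q$.
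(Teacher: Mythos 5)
Your proof is correct and follows essentially the same route as the paper, whose entire argument is the single sentence that the corollary follows by contraposition from Proposition~\ref{prop:ENT_preserves_discrimination-pref-order},(3.b). In fact your version is slightly more complete: you explicitly dispatch the case $x \geq 1-2p$ via statement~(2.)\ of that proposition (and note that only $p_Q$, not $p_{Q'}$, is needed), details the paper's one-line proof leaves implicit.
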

\begin{proof}
This statement is a consequence of applying the law of contraposition to the implication stated by Proposition~\ref{prop:ENT_preserves_discrimination-pref-order},(3.b).
\end{proof}
Roughly, the next corollary states that the relation $\prec_{\mathsf{ENT}_z}$ 
with a higher value of $z$ includes at least as many correct query tuples w.r.t.\ the discrimination-preference order as the relation $\prec_{\mathsf{ENT}_z}$ for a lower value of $z$:
\begin{corollary}\label{cor:if_r<s_then_it_holds_that_if_not_Q_precENTs_Q'_then_not_Q_precENTr_Q'}
Let $Q,Q'$ be queries such that $Q$ is discrimination-preferred to $Q'$ and $0\leq r < s$. Then: If not $Q \prec_{\mathsf{ENT}_s} Q'$, then not $Q \prec_{\mathsf{ENT}_r} Q'$ either.
\end{corollary}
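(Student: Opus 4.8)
The plan is to reduce the statement to a single monotonicity observation about the difference of the two scores, exploiting that $\mathsf{ENT}_r$ and $\mathsf{ENT}_s$ differ only in the coefficient of the $p(\dz{}(Q))$ term. First I would abbreviate the bracketed summand of Eq.~\eqref{eq:ENTz} by $E(Q) := \sum_{a\in\setof{t,f}} p(Q=a)\log_2 p(Q=a)$, so that $\mathsf{ENT}_z(Q) = E(Q) + z\,p(\dz{}(Q)) + 1$ for every $z \geq 0$; crucially, $E(Q)$ and $E(Q')$ do not depend on $z$ at all.

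Next I would form the score difference and isolate its $z$-dependence. With $x := p(\dz{}(Q')) - p(\dz{}(Q))$ (the same quantity as in Proposition~\ref{prop:ENT_preserves_discrimination-pref-order}), a direct computation gives
\begin{align*}
\mathsf{ENT}_z(Q) - \mathsf{ENT}_z(Q') = \bigl(E(Q) - E(Q')\bigr) - z\,x,
\end{align*}
which is an affine function of $z$ whose slope is $-x$. The key fact I would invoke is that $x > 0$: this is exactly statement~(1) of Proposition~\ref{prop:ENT_preserves_discrimination-pref-order}, which in turn follows from Proposition~\ref{prop:if_Q_discrimination-preferred_over_Q'_then_dz(Q')_supset_dz(Q)} (the proper inclusion $\dz{}(Q') \supset \dz{}(Q)$) together with the positivity of every diagnosis probability (cf.\ page~\pageref{etc:prob_of_each_diag_must_be_greater_zero}). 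Hence the map $z \mapsto \mathsf{ENT}_z(Q) - \mathsf{ENT}_z(Q')$ is strictly decreasing.

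Finally I would translate the two ordering statements into inequalities on this difference and conclude. Since $\mathsf{ENT}_z$ is a minimised measure, Definition~\ref{def:precedence_order_defined_by_q-partition_quality_measure} makes ``not $Q \prec_{\mathsf{ENT}_s} Q'$'' equivalent to $\mathsf{ENT}_s(Q) - \mathsf{ENT}_s(Q') \geq 0$, i.e.\ $(E(Q)-E(Q')) - s\,x \geq 0$. From $r < s$ and $x > 0$ we obtain $-r\,x > -s\,x$, so
\begin{align*}
\mathsf{ENT}_r(Q) - \mathsf{ENT}_r(Q') = (E(Q)-E(Q')) - r\,x > (E(Q)-E(Q')) - s\,x \geq 0,
\end{align*}
whence $\mathsf{ENT}_r(Q) > \mathsf{ENT}_r(Q')$ and therefore not $Q \prec_{\mathsf{ENT}_r} Q'$, as required.

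I expect essentially no obstacle here: once the difference of scores is seen to be affine in $z$ with slope $-x$, monotonicity does all the work. The only point demanding a little care -- and the sole place where the full hypothesis that $Q$ is discrimination-preferred to $Q'$ is used, rather than merely $Q \neq Q'$ -- is securing the strict inequality $x>0$; were $x$ allowed to be zero the slope would vanish and the implication could break down.
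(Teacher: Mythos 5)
Your proof is correct and follows essentially the same route as the paper's: both arguments rest on the observation that $\mathsf{ENT}_r$ and $\mathsf{ENT}_s$ differ only in the coefficient $z$ of the $p(\dz{}(\cdot))$ term, combined with the strict positivity of $x = p(\dz{}(Q')) - p(\dz{}(Q))$ guaranteed by Proposition~\ref{prop:ENT_preserves_discrimination-pref-order},(1.), so that ``not preferred at $s$'' transfers to ``not preferred at $r$'' for $r<s$. The only difference is presentational: the paper phrases the comparison via the explicit entropy inequality inherited from the proofs of bullets (3.b) and (3.c) of Proposition~\ref{prop:ENT_preserves_discrimination-pref-order}, whereas you keep the answer-entropy terms abstract as $E(Q)$ and $E(Q')$ and note that the score difference is affine in $z$ with slope $-x$, which is, if anything, a tidier way of saying the same thing.
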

\begin{proof}
Due to Proposition~\ref{prop:construction_of_Q'_from_Q_if_Q_discrimination-preferred_over_Q'} any $Q'$ for which $Q$ is discrimination-preferred to $Q'$ ``results'' from $Q$ by transferring diagnoses with overall probability mass $x$ from $\dx{}(Q) \cup \dnx{}(Q)$ to $\dz{}(Q)$ and by possibly replacing all diagnoses in the resulting set $\dx{}(Q)$ by all diagnoses in the resulting set $\dnx{}(Q)$ and vice versa. Now, if not $Q \prec_{\mathsf{ENT}_s} Q'$, then $p \log_2 p + (1-p) \log_2 (1-p) \geq (p+\frac{x}{2})\log_2 (p+\frac{x}{2}) + (1-p-\frac{x}{2}) \log_2 (1-p-\frac{x}{2}) + s\cdot x$ (using the same notation as in and referring to the proofs of bullets (3.b) and (3.c) of Proposition~\ref{prop:ENT_preserves_discrimination-pref-order}). However, due to $r < s$ and $x > 0$ (as per bullet (1.)\ of Proposition~\ref{prop:ENT_preserves_discrimination-pref-order}), we also obtain that $p \log_2 p + (1-p) \log_2 (1-p) \geq (p+\frac{x}{2})\log_2 (p+\frac{x}{2}) + (1-p-\frac{x}{2}) \log_2 (1-p-\frac{x}{2}) + r \cdot x$ which implies that $Q \prec_{\mathsf{ENT}_r} Q'$ does not hold.
\end{proof}

In addition to that, Proposition~\ref{prop:ENT_preserves_discrimination-pref-order} enables us to deduce a measure $\mathsf{ENT}_z$ from $\mathsf{ENT}$ which minimizes the proportion of queries that might potentially be misordered by $\mathsf{ENT}_z$ w.r.t.\ the discrimination-preference order to arbitrarily small size:
\begin{corollary}\label{cor:ENTz_satisfies_discrimination-pref_order_for_queries_with_p>=arbitrary_small_number}
Let $p_Q:=\min_{a\in\setof{t,f}} (p(Q=a))$ for some query $Q$. Let further $\mQ$ be a set of queries where each query $Q\in\mQ$ satisfies $p_Q > t$ for an arbitrary real number $t > 0$. 
Then, for each $z \geq \max \setof{-\frac{1}{2}(\log_2 t - \log_2 (1-t)),1}$ the measure $\mathsf{ENT}_z$ 
satisfies the discrimination-preference relation over $\mQ$ (Definition~\ref{def:measures_equivalent_theoretically-optimal_superior}).
\end{corollary}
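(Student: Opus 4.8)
The plan is to reduce the statement directly to the contrapositive of Proposition~\ref{prop:ENT_preserves_discrimination-pref-order},(3.c), which pins down the only interval of query probabilities on which $\mathsf{ENT}_z$ may violate the discrimination-preference order. First I would fix an arbitrary pair of queries $Q,Q' \in \mQ$ with $(Q,Q') \in DPR$ and aim to establish $Q \prec_{\mathsf{ENT}_z} Q'$. Since the phrase ``$\mathsf{ENT}_z$ satisfies the discrimination-preference relation over $\mQ$'' (Definition~\ref{def:measures_equivalent_theoretically-optimal_superior}) quantifies over precisely such pairs, verifying the claim for one arbitrary pair suffices.

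The heart of the argument is the observation that the prescribed lower bound on $z$ is exactly the bound that pushes the ``bad'' interval $(0,t(z)]$ of Proposition~\ref{prop:ENT_preserves_discrimination-pref-order},(3.c), with $t(z)=(2^{2z}+1)^{-1}$, below the guaranteed threshold $t$ on $p_Q$. Concretely, I would establish the algebraic equivalence
\[
z \geq -\frac{1}{2}\left(\log_2 t - \log_2 (1-t)\right) \iff t(z) \leq t,
\]
by rewriting the left-hand inequality as $2z \geq \log_2\frac{1-t}{t}$, exponentiating to $2^{2z} \geq \frac{1}{t}-1$, and inverting. Because $z \geq \max\setof{-\frac{1}{2}(\log_2 t - \log_2 (1-t)),\,1}$ in particular satisfies the first bound, we obtain $t(z) \leq t$.

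With this in hand the conclusion is immediate. By hypothesis $p_Q > t$, and since $t \geq t(z)$ we get the strict inequality $p_Q > t(z)$, i.e.\ $p_Q \notin (0,t(z)]$. Applying the law of contraposition to Proposition~\ref{prop:ENT_preserves_discrimination-pref-order},(3.c) (exactly as was done in Corollary~\ref{cor:ENT_satisfies_discrimination-pref_order_for_queries_with_p>=1/5} for the special case $z=1$, $t=\frac{1}{5}$) then yields $Q \prec_{\mathsf{ENT}_z} Q'$. As $(Q,Q')$ was an arbitrary $DPR$-pair in $\mQ$, this proves that $\mathsf{ENT}_z$ satisfies the discrimination-preference relation over $\mQ$.

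I expect no genuine obstacle here; the only point requiring care is the inversion of $t(z)$, namely checking that the supplied bound on $z$ is exactly the one making $t(z)\leq t$, and confirming that $p_Q > t(z)$ comes out \emph{strict} (it does, since $p_Q > t \geq t(z)$). The second argument $1$ of the maximum serves merely to keep $z$ at least at the standard $\mathsf{ENT}=\mathsf{ENT}_1$ weighting and plays no role in the core implication, which rests solely on the first bound.
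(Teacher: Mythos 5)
Your proof is correct, and it takes a more streamlined route than the paper's own proof. The paper argues by a three-way case split on $t$: for $t \geq \frac{1}{2}$ it notes that $\mQ = \emptyset$, so the claim is vacuous; for $t \in [\frac{1}{5},\frac{1}{2})$ it shows the maximum equals $1$ and reduces to the case $z=1$, $t=\frac{1}{5}$ (Corollary~\ref{cor:ENT_satisfies_discrimination-pref_order_for_queries_with_p>=1/5}) combined with the monotonicity-in-$z$ result (Corollary~\ref{cor:if_r<s_then_it_holds_that_if_not_Q_precENTs_Q'_then_not_Q_precENTr_Q'}); and for $t \in (0,\frac{1}{5})$ it sets $z$ equal to the stated bound, identified by re-examining the \emph{proofs} of Proposition~\ref{prop:ENT_preserves_discrimination-pref-order},(3.b) and (3.c) as the inverse of $t(z)$, invoking the monotonicity corollary once more for larger $z$. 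You collapse all of this into the single elementary equivalence $z \geq -\frac{1}{2}(\log_2 t - \log_2 (1-t)) \iff t(z) \leq t$ followed by one contraposition of statement (3.c): no case analysis, no auxiliary corollaries, and only the \emph{statement} of (3.c) with its explicit formula $t(z) = (2^{2z}+1)^{-1}$ is needed rather than its proof; moreover it becomes transparent that the prescribed bound on $z$ is exactly the inversion of $t(z) \leq t$, while the regime $t \geq \frac{1}{2}$ is covered automatically because $\mQ = \emptyset$ renders the universally quantified claim vacuous. What the paper's longer route buys is reuse of its already-established corollaries and a visible link to the inferiority ordering among the $\mathsf{ENT}_z$ measures; yours is shorter and self-contained. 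One elision you share with the paper: statement (3.c) is formally nested under the hypothesis $x < 1-2p$ of case (3.) of that proposition (in the proposition's notation), so strictly speaking the complementary case $x \geq 1-2p$ must be closed by statement (2.), which the paper extends to $\mathsf{ENT}_z$ for $z \geq 0$ and which is applicable here since $z \geq 1$; the paper's own corollary proofs omit this assembly step exactly as you do, so your argument matches the paper's level of rigor.
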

\begin{proof}
Since $p_Q \leq \frac{1}{2}$ due to the definition of $p_Q$, we have that for $t \geq \frac{1}{2}$ the set $\mQ$ must be empty which is why the statement of the corollary is true. 

Otherwise, if $t \in [\frac{1}{5},\frac{1}{2})$, we first consider the case $t = \frac{1}{5}$ and compute $-\frac{1}{2}(\log_2 t - \log_2 (1-t)) = 1$. 
Since $\log_2 t - \log_2 (1-t)$ is strictly monotonically increasing (its derivative $\frac{1}{t} + \frac{1}{1-t}$ is greater than $0$) for $t\in (0,\frac{1}{2})$, i.e.\ particularly for $t\in [\frac{1}{5},\frac{1}{2})$, we have that $-\frac{1}{2}(\log_2 t - \log_2 (1-t))$ is strictly monotonically decreasing for $t\in (0,\frac{1}{2})$, i.e.\ particularly for $t\in [\frac{1}{5},\frac{1}{2})$. Therefore, for all $t\in (\frac{1}{5},\frac{1}{2})$, the result of plugging $t$ into $-\frac{1}{2}(\log_2 t - \log_2 (1-t))$ is smaller than the result of plugging in $t = \frac{1}{5}$. Consequently, $\max \setof{-\frac{1}{2}(\log_2 t - \log_2 (1-t)),1} = 1$ for $t\in [\frac{1}{5},\frac{1}{2})$.

So, for $t\in [\frac{1}{5},\frac{1}{2})$ we need to verify that $\mathsf{ENT}_z$ for $z \geq 1$ satisfies the discrimination-preference relation over $\mQ$. 
Indeed, for $z = 1$ and $t = \frac{1}{5}$ the measure $\mathsf{ENT}_z$ satisfies the discrimination-preference relation over $\mQ$ due to Corollary~\ref{cor:ENT_satisfies_discrimination-pref_order_for_queries_with_p>=1/5} and the fact that $\mathsf{ENT}$ is equal to $\mathsf{ENT}_z$ for $z=1$. For $z > 1$ and $t = \frac{1}{5}$, $\mathsf{ENT}_z$ must also satisfy the discrimination-preference relation over $\mQ$ as a consequence of Corollary~\ref{cor:if_r<s_then_it_holds_that_if_not_Q_precENTs_Q'_then_not_Q_precENTr_Q'}. In other words, we have shown so far that for any set of queries where each query $Q$ in this set satisfies $p_Q > \frac{1}{5}$ the measure $\mathsf{ENT}_z$ for all $z \geq 1$ satisfies the discrimination-preference relation.

However, as $p_Q > t$ for $t > \frac{1}{5}$ in particular means that $p_Q > \frac{1}{5}$, it is an implication of the proof so far that $\mathsf{ENT}_z$ for $z \geq 1$ satisfies the discrimination-preference relation over $\mQ$ for all $t \in [\frac{1}{5},\frac{1}{2})$.
%
%

Now, assume an arbitrary $t \in (0,\frac{1}{5})$. By the analysis above we know that for all $t\in (0,\frac{1}{5})$ the value of $-\frac{1}{2}(\log_2 t - \log_2 (1-t))$ must be greater than $1$. So, $\max \setof{-\frac{1}{2}(\log_2 t - \log_2 (1-t)),1} = -\frac{1}{2}(\log_2 t - \log_2 (1-t))$ for $t\in (0,\frac{1}{5})$.

Let us recall the proof of bullet (3.b) of Proposition~\ref{prop:ENT_preserves_discrimination-pref-order}. According to the argumentation there, we compute the slope of $p \log_2 p + (1-p) \log_2 (1-p)$ at $p = t$ by plugging $p=t$ into the derivative $\log_2 p - \log_2 (1-p)$ of this function yielding $\log_2 t - \log_2 (1-t)$. Further, as it became evident in the proof of bullet (3.c) of Proposition~\ref{prop:ENT_preserves_discrimination-pref-order}, the suitable value of $z$ can be obtained by dividing this number by $-2$. Now, because each query $Q\in\mQ$ satisfies $p_Q > t$, setting $z := -\frac{1}{2}(\log_2 t - \log_2 (1-t))$ results in a measure $\mathsf{ENT}_z$ 
that satisfies the discrimination-preference relation. That this does hold also for all $z > -\frac{1}{2}(\log_2 t - \log_2 (1-t))$ is a direct consequence of Corollary~\ref{cor:if_r<s_then_it_holds_that_if_not_Q_precENTs_Q'_then_not_Q_precENTr_Q'}.
\end{proof}
\begin{example}\label{ex:ENT_z}
Let $t := \frac{1}{10^9}$. Then $\mathsf{ENT}_z$ for $z := 15$ preserves the discrimination-preference order for all queries for which the more unlikely answer has a probability of at least $10^{-9}$. 
For $t := \frac{1}{10^{20}}$, a value of $z := 34$ guarantees that $\mathsf{ENT}_z$ preserves the discrimination-preference order for all queries for which the more unlikely answer has a probability of at least $10^{-20}$. This illustrates that the $\mathsf{ENT}_z$ measure can be configured in a way it imposes an order consistent with the discrimination-preference order on any set of queries that might practically occur. \qed
\end{example}
%
%
%
\begin{corollary}\label{cor:if_0<=r<s_then_ENTr_inferior_to_ENTs}
Let $0 \leq r < s$. Then $\mathsf{ENT}_r$ is inferior to $\mathsf{ENT}_s$.
\end{corollary}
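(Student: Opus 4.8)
The plan is to verify directly the three conditions in the definition of superiority (Definition~\ref{def:measures_equivalent_theoretically-optimal_superior}) for the assertion that $\mathsf{ENT}_r$ is inferior to $\mathsf{ENT}_s$, i.e.\ $\mathsf{ENT}_s \prec \mathsf{ENT}_r$, instantiating $m_1 := \mathsf{ENT}_r$ and $m_2 := \mathsf{ENT}_s$. Since condition~(2) of that definition already entails condition~(1) (as the accompanying footnote notes), the task reduces to establishing conditions~(2) and~(3).

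Condition~(3) demands that for no discrimination-preferred pair $(Q,Q')$ with $\lnot(Q \prec_{\mathsf{ENT}_s} Q')$ does $Q \prec_{\mathsf{ENT}_r} Q'$ hold. This is exactly the content of Corollary~\ref{cor:if_r<s_then_it_holds_that_if_not_Q_precENTs_Q'_then_not_Q_precENTr_Q'}, which states that for $Q$ discrimination-preferred to $Q'$, $\lnot(Q \prec_{\mathsf{ENT}_s} Q')$ implies $\lnot(Q \prec_{\mathsf{ENT}_r} Q')$. So condition~(3) is immediate, and all the genuine work lies in condition~(2).

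For condition~(2) I must exhibit one discrimination-preferred pair $(Q,Q')$ with $\lnot(Q \prec_{\mathsf{ENT}_r} Q')$ yet $Q \prec_{\mathsf{ENT}_s} Q'$. Reusing the reduction from the proof of Proposition~\ref{prop:ENT_preserves_discrimination-pref-order}, I would start from a query $Q$ with $p := \min_{a} p(Q=a) = p(Q=t) \le \tfrac12$ and, by Proposition~\ref{prop:construction_of_Q'_from_Q_if_Q_discrimination-preferred_over_Q'}, build a specific successor $Q'$ by transferring a probability mass $x>0$ from the heavier of $\dx{}(Q),\dnx{}(Q)$ into $\dz{}(Q)$, so that $p(Q'=t) = p + \tfrac{x}{2}$ and $p(\dz{}(Q')) = p(\dz{}(Q)) + x$; such $Q'$ is by construction one for which $Q$ is discrimination-preferred to $Q'$. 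Writing $g(p) := p\log_2 p + (1-p)\log_2(1-p)$ and $\Delta g := g(p+\tfrac{x}{2}) - g(p) < 0$, Eq.~\eqref{eq:ENTz} yields $\mathsf{ENT}_z(Q') - \mathsf{ENT}_z(Q) = \Delta g + z\,x$ for every $z \ge 0$. Condition~(2) then translates into finding $p$ and $x$ with $\Delta g + r\,x \le 0 < \Delta g + s\,x$, equivalently $r \le -\Delta g/x < s$.

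The crux — and the step I expect to be the main obstacle — is controlling the ratio $-\Delta g/x$ and then guaranteeing realizability. As $x \to 0^+$ the ratio tends to $-\tfrac12 g'(p) = -\tfrac12(\log_2 p - \log_2(1-p))$, which is continuous and strictly decreasing, mapping $p\in(0,\tfrac12)$ bijectively onto $(0,\infty)$. I would therefore fix $p^{\ast}$ with $-\tfrac12(\log_2 p^{\ast} - \log_2(1-p^{\ast})) = \tfrac{r+s}{2} \in (r,s)$ and, by continuity, choose $x$ small enough (with $p^{\ast} + \tfrac{x}{2} < \tfrac12$) that $-\Delta g/x$ lies strictly between $r$ and $s$; this simultaneously forces $\mathsf{ENT}_r(Q') < \mathsf{ENT}_r(Q)$ (so $\lnot(Q \prec_{\mathsf{ENT}_r} Q')$ by asymmetry of the strict order $\prec_{\mathsf{ENT}_r}$) and $\mathsf{ENT}_s(Q) < \mathsf{ENT}_s(Q')$ (so $Q \prec_{\mathsf{ENT}_s} Q'$). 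The last detail is that the chosen $p^{\ast}$, $x$ and $p(\dz{}(Q))$ must be induced by an actual set of leading diagnoses $\mD$ with a probability measure $p$, with $\dnx{}(Q)$ holding at least two diagnoses so that $\dnx{}(Q') \neq \emptyset$ keeps $Q'$ a genuine query; since the superiority relation quantifies over all partitions and all admissible probability measures, I can simply pick diagnoses carrying the required probabilities, completing condition~(2) and hence the corollary.
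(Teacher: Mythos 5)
Your proof is correct and takes essentially the same route as the paper's: condition~(3) is discharged via Corollary~\ref{cor:if_r<s_then_it_holds_that_if_not_Q_precENTs_Q'_then_not_Q_precENTr_Q'}, and conditions~(1)--(2) by constructing a discrimination-preferred pair in the style of the counterexample in Proposition~\ref{prop:ENT_preserves_discrimination-pref-order},(3.a) combined with the slope/derivative reasoning of bullet~(3.b). The only difference is that you make explicit (via the choice of $p^{\ast}$ with $-\tfrac{1}{2}(\log_2 p^{\ast} - \log_2(1-p^{\ast})) = \tfrac{r+s}{2}$ and a sufficiently small $x$) what the paper leaves as an ``analogous'' construction.
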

\begin{proof}
To show that $\mathsf{ENT}_r$ is inferior to $\mathsf{ENT}_s$ we must verify all conditions of the definition of inferiority between two measures (cf.\ Definition~\ref{def:measures_equivalent_theoretically-optimal_superior}). Since $0 \leq r < s$, condition (3.)\ is met because of Corollary~\ref{cor:if_r<s_then_it_holds_that_if_not_Q_precENTs_Q'_then_not_Q_precENTr_Q'}. An example confirming the validity of conditions (1.)\ and (2.)\ can be constructed in an analogous way as in the proof of bullet (3.a) of Proposition~\ref{prop:ENT_preserves_discrimination-pref-order} by using the train of thoughts of the proof of bullet (3.b) of Proposition~\ref{prop:ENT_preserves_discrimination-pref-order}.
\end{proof}

\vspace{1em}

\noindent\emph{Split-In-Half Query Selection ($\mathsf{SPL}$).} The selection criterion $\mathsf{SPL}$, on the other hand, votes for the query 
\begin{align*}
Q_{\mathsf{SPL}} = \argmin_{Q\in\mQ} \left(\mathsf{SPL}(Q)\right) 
\quad \text{ where } \quad 
\mathsf{SPL}(Q):=\left|\, |\dx{}(Q)| - |\dnx{}(Q)| \,\right| + |\dz{}(Q)|
\end{align*}
and hence is optimized by queries for which the number of leading diagnoses predicting the positive answer is equal to the number of leading diagnoses predicting the negative answer and for which each leading diagnosis does predict an answer. 
\begin{proposition}\label{prop:spl}
Let $\mD$ be a set of leading diagnoses w.r.t.\ a DPI $\tuple{\mo,\mb,\Tp,\Tn}_\RQ$. Then:
\begin{enumerate}
	\item A query $Q$ is theoretically optimal w.r.t.\ $\mathsf{SPL}$ and $\mD$ iff $|\dx{}(Q)|=|\dnx{}(Q)|$ as well as $|\dz{}(Q)| = 0$.
	\item Let $q \in \setof{0,\dots,|\mD|}$ be a fixed number. Further, let $\mQ$ be a set of queries w.r.t.\ $\mD$ and $\tuple{\mo,\mb,\Tp,\Tn}_\RQ$ where each query $Q\in\mQ$ satisfies $|\dz{}(Q)| = q$. Then the theoretically optimal query w.r.t.\ $\mathsf{SPL}$ and $\mD$ over $\mQ$ satisfies $|\dx{}(Q)|=|\dnx{}(Q)|$.
\end{enumerate}
\end{proposition}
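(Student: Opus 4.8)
The plan is to exploit the very simple additive structure of $\mathsf{SPL}$, in direct analogy to how Proposition~\ref{prop:ent} was handled for $\mathsf{ENT}$ via the transformed expression in Eq.~\eqref{eq:scoring_funtion_dekleer}. Writing $A(Q) := \left|\,|\dx{}(Q)| - |\dnx{}(Q)|\,\right|$ and $B(Q) := |\dz{}(Q)|$, we have $\mathsf{SPL}(Q) = A(Q) + B(Q)$ with $A(Q)\geq 0$ and $B(Q)\geq 0$, and $\mathsf{SPL}$ is to be minimized. The crucial observation is that the two summands depend on disjoint data of the q-partition, so they can be minimized independently and there is no interaction term to worry about.

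For statement~(1), I would first note that $\mathsf{SPL}(Q)\geq 0$ for every query, so the global minimum of the function $\mathsf{SPL}$ --- i.e.\ its theoretically optimal value in the sense of Definition~\ref{def:measures_equivalent_theoretically-optimal_superior} --- is $0$. Then $\mathsf{SPL}(Q)=0$ holds iff both addends vanish simultaneously, i.e.\ iff $A(Q)=0$ and $B(Q)=0$. Since $A(Q)=0 \Leftrightarrow |\dx{}(Q)|=|\dnx{}(Q)|$ and $B(Q)=0 \Leftrightarrow |\dz{}(Q)|=0$, a query is theoretically optimal w.r.t.\ $\mathsf{SPL}$ exactly when it meets the two stated cardinality conditions. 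As foreshadowed by the remark in Definition~\ref{def:measures_equivalent_theoretically-optimal_superior} that a theoretically optimal query need not exist, I would point out that if $|\mD|$ is odd then $|\dx{}(Q)|=|\dnx{}(Q)|$ together with $|\dz{}(Q)|=0$ would force $|\mD|=2|\dx{}(Q)|$ to be even, a contradiction; hence the bound $0$ may be unattainable, yet this does not affect the stated iff-characterization of optimality.

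For statement~(2), I would restrict attention to the subfamily $\mQ$ on which $B(Q)=|\dz{}(Q)|=q$ is held constant. On this subfamily $\mathsf{SPL}(Q)=A(Q)+q$, and because the additive constant $q$ does not influence the $\argmin$, minimizing $\mathsf{SPL}$ over $\mQ$ is equivalent to minimizing $A(Q)=\left|\,|\dx{}(Q)|-|\dnx{}(Q)|\,\right|$. This quantity is non-negative and equals $0$ precisely when $|\dx{}(Q)|=|\dnx{}(Q)|$, so any query attaining the theoretically optimal value of $\mathsf{SPL}$ over $\mQ$ must satisfy $|\dx{}(Q)|=|\dnx{}(Q)|$, which is the claim.

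I do not anticipate a substantive obstacle: unlike the $\mathsf{ENT}$ case there is no logarithm to convexify and no probability measure to optimize over, so no calculus is needed, and the whole argument reduces to the fact that a sum of two independent non-negative integer quantities is minimized by minimizing each summand separately. The only point requiring a little care is reconciling the phrase \emph{theoretically optimal} with attainability; here I would lean on Definition~\ref{def:measures_equivalent_theoretically-optimal_superior} and its accompanying remark to read optimality as \emph{attaining the global minimum value} $0$ of the function, rather than as asserting the existence of such a query for every $\mD$.
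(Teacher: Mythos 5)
Your proof is correct and coincides with the paper's reasoning: the paper states Proposition~\ref{prop:spl} without a separate proof, treating it as immediate from the definition of $\mathsf{SPL}(Q)=\left|\,|\dx{}(Q)|-|\dnx{}(Q)|\,\right|+|\dz{}(Q)|$ as a sum of two non-negative terms (it remarks right after the definition that $\mathsf{SPL}$ ``is optimized by queries for which the number of leading diagnoses predicting the positive answer is equal to the number predicting the negative answer and for which each leading diagnosis does predict an answer''), which is exactly the decomposition you use. Your additional care about attainability --- reading ``theoretically optimal'' as attaining the global minimum value $0$ of the function rather than asserting that such a query (or even such a partition, when $|\mD|$ is odd) exists --- resolves the only delicate point and is consistent with Definition~\ref{def:measures_equivalent_theoretically-optimal_superior} and its accompanying remark.
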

%
{\begin{proposition}\label{prop:spl_consistent_with_DPR_but_does_not_satisfy_DPR}\leavevmode 
\samepage
\begin{enumerate}
	\item $\mathsf{SPL}$ is consistent with the discrimination-preference relation $DPR$, but does not satisfy $DPR$.
	\item Let $d(Z):=\left|\, |\dx{}(Z)| - |\dnx{}(Z)| \,\right|$ for some query $Z$ and let $Q,Q'$ be two queries such that $Q$ is discrimination-preferred to $Q'$. Then not $Q \prec_{\mathsf{SPL}} Q'$ iff $\mathsf{SPL}(Q) = \mathsf{SPL}(Q')$ and $d(Q) - d(Q') = (|\dz{}(Q')| - |\dz{}(Q)|)$.
\end{enumerate}
\end{proposition}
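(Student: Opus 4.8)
The plan is to reduce both parts to the structural characterization of discrimination-preferred pairs supplied by Proposition~\ref{prop:construction_of_Q'_from_Q_if_Q_discrimination-preferred_over_Q'} and then to a short cardinality estimate on the $\mathsf{SPL}$ value. First I would record that $\mathsf{SPL}(Z) = d(Z) + |\dz{}(Z)|$ and that a smaller $\mathsf{SPL}$ value signals a more preferred query, so that (by Definition~\ref{def:precedence_order_defined_by_q-partition_quality_measure}) $Q\prec_{\mathsf{SPL}} Q'$ means exactly $\mathsf{SPL}(Q) < \mathsf{SPL}(Q')$. Given $(Q,Q')\in DPR$, Proposition~\ref{prop:construction_of_Q'_from_Q_if_Q_discrimination-preferred_over_Q'} lets me write $Q'$ as arising from $Q$ by moving a non-empty set $\mathbf{X}\subset\dx{}(Q)\cup\dnx{}(Q)$ into $\dz{}(Q)$, possibly swapping the roles of the two outer sets. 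Writing $a := |\mathbf{X}\cap\dx{}(Q)|$, $b := |\mathbf{X}\cap\dnx{}(Q)|$ (so $a+b\geq 1$) and $D := |\dx{}(Q)| - |\dnx{}(Q)|$, I obtain $d(Q) = |D|$, $d(Q') = |D-(a-b)|$ (the swap only negates the quantity inside the absolute value, leaving $d(Q')$ unchanged) and $|\dz{}(Q')| = |\dz{}(Q)| + a + b$.

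For consistency I would compute $\mathsf{SPL}(Q') - \mathsf{SPL}(Q) = |D-(a-b)| - |D| + (a+b)$ and apply the reverse triangle inequality $|D-(a-b)| \geq |D| - |a-b|$ to get $\mathsf{SPL}(Q') - \mathsf{SPL}(Q) \geq (a+b) - |a-b| = 2\min(a,b) \geq 0$. Hence $\mathsf{SPL}(Q)\leq\mathsf{SPL}(Q')$ whenever $(Q,Q')\in DPR$, so $Q'\prec_{\mathsf{SPL}}Q$ can never hold; this is precisely consistency with $DPR$. To show that $\mathsf{SPL}$ nonetheless does not satisfy $DPR$, I would exhibit a pair with $\mathsf{SPL}(Q)=\mathsf{SPL}(Q')$: take $|\dx{}(Q)|=3$, $|\dnx{}(Q)|=1$, $\dz{}(Q)=\emptyset$ and move a single diagnosis out of $\dx{}(Q)$ (so $a=1$, $b=0$). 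Then $\mathsf{SPL}(Q)=|3-1|+0=2$ and $\mathsf{SPL}(Q')=|2-1|+1=2$. I would verify directly that $(Q,Q')\in DPR$: by Proposition~\ref{prop:properties_of_q-partitions},(\ref{prop:properties_of_q-partitions:enum:dx_dnx_dz_contain_exactly_those_diags_that_are...}.) the positive answer eliminates $\dnx{}$ and the negative answer eliminates $\dx{}$, so mapping each answer to $Q'$ to the same answer to $Q$ gives supersets, with the negative answer yielding a proper superset because $\mathbf{X}\neq\emptyset$. Thus $Q$ is discrimination-preferred to $Q'$ yet not $Q\prec_{\mathsf{SPL}}Q'$, establishing the second half of statement~(1.).

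For statement~(2.) I would argue that, under $(Q,Q')\in DPR$, ``not $Q\prec_{\mathsf{SPL}}Q'$'' means $\mathsf{SPL}(Q)\geq\mathsf{SPL}(Q')$, while the consistency just proved gives $\mathsf{SPL}(Q)\leq\mathsf{SPL}(Q')$; together these force $\mathsf{SPL}(Q)=\mathsf{SPL}(Q')$. Finally, unfolding $\mathsf{SPL}(Z)=d(Z)+|\dz{}(Z)|$ turns the identity $\mathsf{SPL}(Q)=\mathsf{SPL}(Q')$ into the equivalent equation $d(Q)-d(Q') = |\dz{}(Q')| - |\dz{}(Q)|$, so the two conditions in the claim coincide and both are equivalent to ``not $Q\prec_{\mathsf{SPL}}Q'$''.

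The main obstacle I anticipate is not any single computation but keeping the structural reduction airtight: I must confirm that the swap alternative in Proposition~\ref{prop:construction_of_Q'_from_Q_if_Q_discrimination-preferred_over_Q'} genuinely leaves $d(Q')$ and $|\dz{}(Q')|$, hence $\mathsf{SPL}(Q')$, unchanged, and that the counterexample pair really lies in $DPR$ rather than merely exhibiting the right cardinalities. The reverse-triangle-inequality estimate is the crux of the consistency direction, and its equality case $2\min(a,b)=0$ is exactly what makes the ``does not satisfy'' counterexample (all of $\mathbf{X}$ drawn from the larger outer set) possible.
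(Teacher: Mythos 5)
Your proposal is correct and follows essentially the same route as the paper's own proof: both reduce the problem via Proposition~\ref{prop:construction_of_Q'_from_Q_if_Q_discrimination-preferred_over_Q'} to the estimate $|d(Q)-d(Q')|\le|\mathbf{X}|$ (your reverse-triangle-inequality computation with $a,b$ just makes this bound and its equality case $2\min(a,b)=0$ explicit), then exhibit a counterexample obtained by moving one diagnosis from the larger outer set into $\dz{}$ so that the two $\mathsf{SPL}$ values coincide, and finally obtain statement~(2.) by combining consistency with the unfolding $\mathsf{SPL}(Z)=d(Z)+|\dz{}(Z)|$. Your extra checks (that the swap alternative leaves $d(Q')$ and $|\dz{}(Q')|$ unchanged, and the direct verification from Definition~\ref{def:measures_equivalent_theoretically-optimal_superior} that the counterexample pair lies in $DPR$) are sound refinements of the same argument rather than a different approach.
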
}
\begin{proof}
Let $Q,Q'$ be two queries such that $Q$ is discrimination-preferred to $Q'$. By Proposition~\ref{prop:construction_of_Q'_from_Q_if_Q_discrimination-preferred_over_Q'} $Q'$ differs from $Q$ in that $|\mathbf{X}| > 0$ diagnoses are deleted from $\dx{}(Q)\cup\dnx{}(Q)$ and added to $\dz{}(Q)$ to obtain $Q'$.
Thence, $|\dz{}(Q')| = |\dz{}(Q)| + |\mathbf{X}|$. 
It holds that $|d(Q) - d(Q')| \leq |\mathbf{X}|$ since $d(Q)$ can apparently not change by more than $|\mathbf{X}|$ through the deletion of $|\mathbf{X}|$ diagnoses from $\dx{}(Q)\cup\dnx{}(Q)$. Hence, $\mathsf{SPL}(Q) \leq \mathsf{SPL}(Q')$ which is why either $Q \prec_{\mathsf{SPL}} Q'$ or $\mathsf{SPL}(Q) = \mathsf{SPL}(Q')$. In the latter case, $Q$ and $Q'$ do not stand in a $\prec_{\mathsf{SPL}}$ relationship with one another (cf.\ Definition~\ref{def:precedence_order_defined_by_q-partition_quality_measure}). In the former case, due to Proposition~\ref{prop:precedence_order_is_strict_order} (asymmetry of $\prec_{\mathsf{SPL}}$) $Q' \prec_{\mathsf{SPL}} Q$ cannot hold. Altogether, we have shown that $Q' \prec_{\mathsf{SPL}} Q$ cannot be valid for any two queries $Q,Q'$ where $Q$ is discrimination-preferred to $Q'$. Therefore, $\mathsf{SPL}$ is consistent with the discrimination-preference relation $DPR$.

To see why $\mathsf{SPL}$ does not satisfy $DPR$, let us construct an example of two queries $Q,Q'$ where $Q$ is discrimination-preferred to $Q'$, but not $Q \prec_{\mathsf{SPL}} Q'$. To this end, let 
\begin{align*}
\tuple{\dx{}(Q),\dnx{}(Q),\dz{}(Q)} &= \tuple{\setof{\md_1,\md_2},\setof{\md_3},\emptyset} \\
\tuple{\dx{}(Q'),\dnx{}(Q'),\dz{}(Q')} &= \tuple{\setof{\md_1},\setof{\md_3},\setof{\md_2}} 
\end{align*}
Apparently, $Q$ is discrimination-preferred to $Q'$ (cf.\ Proposition~\ref{prop:construction_of_Q'_from_Q_if_Q_discrimination-preferred_over_Q'} with the set $\mathbf{X} = \setof{\md_2}$). Furthermore, $\mathsf{SPL}(Q) = \mathsf{SPL}(Q') = 1$ which implies by Definition~\ref{def:precedence_order_defined_by_q-partition_quality_measure} that $Q \prec_{\mathsf{SPL}} Q'$ does not hold. This finishes the proof of bullet (1.). 

The truth of the statement of bullet (2.) becomes evident by reconsidering the argumentation used to prove (1.). Clearly, not $Q \prec_{\mathsf{SPL}} Q'$ holds iff $\mathsf{SPL}(Q) = \mathsf{SPL}(Q')$. The latter, however, is true iff $|\mathbf{X}|$, by which $|\dz{}(Q')|$ is larger than $|\dz{}(Q)|$, is exactly the difference between $d(Q) - d(Q')$.
\end{proof}

Let us now, in a similar way as done with $\mathsf{ENT}$, characterize a measure $\mathsf{SPL}_z$ that constitutes a generalization of the $\mathsf{SPL}$ measure. Namely, $\mathsf{SPL}_z$ selects the query 
	\begin{align*}
	 Q_{\mathsf{SPL}_z} := \argmin_{Q\in\mQ} \left(\mathsf{SPL}_z(Q)\right)
	\end{align*}
	where
	\begin{align}\label{eq:SPLz}
	\mathsf{SPL}_z(Q):= \left|\, |\dx{}(Q)| - |\dnx{}(Q)| \,\right| + z\,|\dz{}(Q)|
	\end{align} 
Note that $\mathsf{SPL}_z(Q) = \mathsf{SPL}(Q) + (z-1) |\dz{}(Q)|$ and thus that $\mathsf{SPL}_1$ is equal to $\mathsf{SPL}$. The analysis of this generalized split-in-half measure yields the following results:
\begin{proposition}\label{prop:SPL2_satisfies_DPR}
$\mathsf{SPL}_z$ satisfies the debug preference relation $DPR$ for all real numbers $z > 1$ and is inconsistent with the $DPR$ for all real numbers $z < 1$.
\end{proposition}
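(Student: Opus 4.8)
The plan is to reduce everything to the structural description of $DPR$ provided by Proposition~\ref{prop:construction_of_Q'_from_Q_if_Q_discrimination-preferred_over_Q'} and then to compute the sign of $\mathsf{SPL}_z(Q') - \mathsf{SPL}_z(Q)$. First I would fix a pair $(Q,Q') \in DPR$, so that $Q$ is discrimination-preferred to $Q'$. By that proposition, $Q'$ arises from $Q$ by moving a non-empty set $\mathbf{X} \subset \dx{}(Q) \cup \dnx{}(Q)$ into $\dz{}(Q)$ (possibly swapping the roles of $\dx{}$ and $\dnx{}$, which leaves $d$ unchanged since $d$ is symmetric in its two arguments). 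Writing $d(Z) := \left|\,|\dx{}(Z)| - |\dnx{}(Z)|\,\right|$ as in Proposition~\ref{prop:spl_consistent_with_DPR_but_does_not_satisfy_DPR}, this yields the two key facts $|\dz{}(Q')| - |\dz{}(Q)| = |\mathbf{X}| > 0$ and $|d(Q) - d(Q')| \leq |\mathbf{X}|$, the latter because deleting $|\mathbf{X}|$ diagnoses from $\dx{}(Q) \cup \dnx{}(Q)$ can change the imbalance $d$ by at most $|\mathbf{X}|$.

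The core computation is then
\begin{align*}
\mathsf{SPL}_z(Q') - \mathsf{SPL}_z(Q) = \bigl(d(Q') - d(Q)\bigr) + z\bigl(|\dz{}(Q')| - |\dz{}(Q)|\bigr) = \bigl(d(Q') - d(Q)\bigr) + z\,|\mathbf{X}|,
\end{align*}
which, using $d(Q') - d(Q) \geq -|\mathbf{X}|$, is bounded below by $(z-1)|\mathbf{X}|$. For the first claim I would take any $z > 1$: since $|\mathbf{X}| > 0$, the lower bound $(z-1)|\mathbf{X}|$ is strictly positive, so $\mathsf{SPL}_z(Q) < \mathsf{SPL}_z(Q')$, i.e.\ $Q \prec_{\mathsf{SPL}_z} Q'$, for every $(Q,Q') \in DPR$. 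By the definition of satisfying $DPR$ (Definition~\ref{def:measures_equivalent_theoretically-optimal_superior}) this establishes that $\mathsf{SPL}_z$ satisfies $DPR$.

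For the inconsistency claim with $z < 1$, I would exhibit a single counterexample realizing the extreme case $d(Q') - d(Q) = -|\mathbf{X}|$, so that the difference above becomes exactly $(z-1)|\mathbf{X}| < 0$. The example from the proof of Proposition~\ref{prop:spl_consistent_with_DPR_but_does_not_satisfy_DPR} already does this: taking $\tuple{\dx{}(Q),\dnx{}(Q),\dz{}(Q)} = \tuple{\setof{\md_1,\md_2},\setof{\md_3},\emptyset}$ and $\tuple{\dx{}(Q'),\dnx{}(Q'),\dz{}(Q')} = \tuple{\setof{\md_1},\setof{\md_3},\setof{\md_2}}$ (so $\mathbf{X} = \setof{\md_2}$), both are genuine q-partitions, $Q$ is discrimination-preferred to $Q'$, and $d(Q)=1$, $d(Q')=0$, $|\dz{}(Q')|=1$. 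Hence $\mathsf{SPL}_z(Q) = 1$ while $\mathsf{SPL}_z(Q') = z < 1$, giving $Q' \prec_{\mathsf{SPL}_z} Q$, which by Definition~\ref{def:measures_equivalent_theoretically-optimal_superior} witnesses that $\mathsf{SPL}_z$ is inconsistent with $DPR$ whenever $z < 1$.

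I expect the only delicate point to be the direction and tightness of the bound $|d(Q) - d(Q')| \leq |\mathbf{X}|$: for the $z>1$ half I only need the lower estimate $d(Q') - d(Q) \geq -|\mathbf{X}|$ (removing diagnoses cannot reduce the imbalance by more than $|\mathbf{X}|$), whereas for the $z<1$ half I must confirm the bound is actually attained, which the explicit small example above settles. A minor bookkeeping check is that the optional swap of $\dx{}$ and $\dnx{}$ permitted by Proposition~\ref{prop:construction_of_Q'_from_Q_if_Q_discrimination-preferred_over_Q'} leaves both $d$ and $|\dz{}|$—and hence $\mathsf{SPL}_z$—unchanged, so it can be disregarded throughout.
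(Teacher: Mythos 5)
Your proof is correct and follows essentially the same route as the paper's: both reduce $(Q,Q')\in DPR$ to the structural description of Proposition~\ref{prop:construction_of_Q'_from_Q_if_Q_discrimination-preferred_over_Q'}, combine $|\dz{}(Q')|-|\dz{}(Q)|=|\mathbf{X}|>0$ with the bound $|d(Q)-d(Q')|\leq|\mathbf{X}|$ to get $\mathsf{SPL}_z(Q)<\mathsf{SPL}_z(Q')$ for $z>1$, and reuse the identical two-q-partition counterexample from Proposition~\ref{prop:spl_consistent_with_DPR_but_does_not_satisfy_DPR} for $z<1$. Your explicit lower bound $(z-1)|\mathbf{X}|$ and the check that the $\dx{}$/$\dnx{}$ swap leaves $d$ and $|\dz{}|$ invariant merely spell out details the paper leaves implicit.
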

\begin{proof}
Let $z>1$ and let us recall the proof of Proposition~\ref{prop:spl_consistent_with_DPR_but_does_not_satisfy_DPR} and reuse the assumptions and the notation of this proof. We observe that $z |\dz{}(Q')| = z(|\dz{}(Q)| + |\mathbf{X}|)$ and thus $z |\dz{}(Q')| - z|\dz{}(Q)| = z |\mathbf{X}| > |\mathbf{X}|$. But, still $|d(Q) - d(Q')| \leq |\mathbf{X}|$ holds.
Hence, is must be the case that $\mathsf{SPL}_z(Q) < \mathsf{SPL}_z(Q')$, i.e.\ $Q \prec_{\mathsf{SPL}_z} Q'$ (cf.\ Definition~\ref{def:precedence_order_defined_by_q-partition_quality_measure}), for any two queries $Q,Q'$ where $Q$ is discrimination-preferred to $Q'$ and $z > 1$. 

The queries $Q,Q'$ given in the proof of Proposition~\ref{prop:spl_consistent_with_DPR_but_does_not_satisfy_DPR} constitute a counterexample witnessing that $\mathsf{SPL}_z$ is not consistent with the $DPR$ for $z < 1$. 
\end{proof}
%
%
\begin{corollary}\label{cor:SPL2_is_superior_to_SPL}
$\mathsf{SPL}_z$ is superior to $\mathsf{SPL}$ for all real numbers $z > 1$.
\end{corollary}
\begin{proof}
We have to verify the three bullets of the definition of superiority (cf.\ Definition~\ref{def:measures_equivalent_theoretically-optimal_superior}). The first bullet is satisfied due to the example given in the proof of Proposition~\ref{prop:spl_consistent_with_DPR_but_does_not_satisfy_DPR}. The condition of the second bullet is met due to the example given in the proof of Proposition~\ref{prop:spl_consistent_with_DPR_but_does_not_satisfy_DPR} and Proposition~\ref{prop:SPL2_satisfies_DPR}. Finally, the postulation of the third bullet is fulfilled due to Proposition~\ref{prop:SPL2_satisfies_DPR}.
\end{proof}
We want to emphasize at this point that Corollary~\ref{cor:SPL2_is_superior_to_SPL} does not imply that $\mathsf{SPL}_r \equiv \mathsf{SPL}_s$ for all real numbers $r,s > 1$. To recognize this, consider the following example:
\begin{example}\label{ex:SPLr_not_equiv_SPLs_for_r,s>1}
Assume two queries $Q_1,Q_2$ in $\mQ$ where 
\begin{align*}
\langle |\dx{}(Q_1)|,|\dnx{}(Q_1)|,|\dz{}(Q_1)| \rangle &= \langle 7,3,8 \rangle \\
\langle |\dx{}(Q_2)|,|\dnx{}(Q_2)|,|\dz{}(Q_2)| \rangle &= \langle 4,4,10 \rangle
\end{align*}
and let $r := 1.1$ and $s := 10$. Then, 
\begin{align*}
\mathsf{SPL}_r(Q_1) &= 4 + 1.1 \cdot 8 = 12.8 \\
\mathsf{SPL}_r(Q_2) &= 0 + 1.1 \cdot 10 = 11 \\
\mathsf{SPL}_s(Q_1) &= 4 + 10 \cdot 8 = 84 \\
\mathsf{SPL}_s(Q_2) &= 0 + 10 \cdot 10 = 100 
\end{align*}
That is, $Q_2 \prec_{\mathsf{SPL}_r} Q_1$, but $Q_1 \prec_{\mathsf{SPL}_s} Q_2$. Consequently, $\mathsf{SPL}_r \not\equiv \mathsf{SPL}_s$ (cf.\ Definition~\ref{def:measures_equivalent_theoretically-optimal_superior}).\qed
\end{example}

The following example (cf.\ \cite[Ex.~9.1]{Rodler2015phd}) illustrates the notions explicated in Section~\ref{sec:QPartitionSelectionMeasures}: 
%
\begin{example}\label{ex:measure_notions_illustrated}
Let $\tuple{\mo,\mb,\Tp,\Tn}_\RQ$ be a DPI and $\mD := \setof{\md_1,\dots,\md_4}$ a set of leading diagnoses w.r.t.\ $\tuple{\mo,\mb,\Tp,\Tn}_\RQ$. Now, consider the example q-partitions for queries named $Q_1,\dots,Q_{10}$ w.r.t.\ this DPI listed in the lefthand columns of Table~\ref{tab:example_q-partitions}. The righthand columns of Table~\ref{tab:example_q-partitions} show the preference order of queries (lower values mean higher preference) induced by different measures discussed in this example. Now the following holds:
\begin{itemize}
	\item $Q_6$ is theoretically optimal w.r.t.\ $\mathsf{ENT}$ since $p(\dx{}(Q_6)) = 0.5$, $p(\dnx{}(Q_6)) = 0.5$ and $\dz{}(Q_6) = \emptyset$ (cf.\ Proposition~\ref{prop:ent}).
	\item $Q_7,Q_8$ and $Q_9$ are theoretically optimal w.r.t.\ $\mathsf{SPL}$ since $|\dx{}(Q_i)| = |\dnx{}(Q_i)|$ and $|\dz{}(Q_i)| = 0$ for $i\in\setof{7,8,9}$ (cf.\ Proposition~\ref{prop:spl}).
	\item Let $\mathsf{SPL}'$ be a measure defined in a way that it selects the query
	\begin{align*}
	Q_{\mathsf{SPL}'} := \argmin_{Q\in\mQ} \left(\mathsf{SPL}'(Q)\right) 
	\end{align*} 
	where
	\begin{align*}
	\mathsf{SPL}'(Q):=x \left(\left|\, |\dx{}(Q)| - |\dnx{}(Q)| \,\right| + |\dz{}(Q)|\right) + y
	\end{align*}
	for some fixed $x \in \mathbb{R}^+ \setminus \setof{0}$ and some fixed $y \in \mathbb{R}$. Then $\mathsf{SPL}' \equiv \mathsf{SPL}$ (cf.\ Proposition~\ref{prop:linear_function_of_measure_is_equivalent_to_measure_itself}).
	
	\item 
	For instance, it holds that $\mathsf{SPL}_2 \not\equiv \mathsf{SPL}$ (cf.\ Eq.~\eqref{eq:SPLz}). To see this, consider the righthand side of Table~\ref{tab:example_q-partitions} and compare the preference orders induced by $\mathsf{SPL}$ and $\mathsf{SPL}_2$. We can clearly identify that, e.g.\, $Q_3 \prec_{\mathsf{SPL}_2} Q_2$, but not $Q_3 \prec_{\mathsf{SPL}} Q_2$.
		
	\item It holds that $Q_9$ is discrimination-preferred to $Q_5$ as well as to $Q_2$ because $Q_i$ for $i\in\setof{2,5}$ shares one set in $\{\dx{}(Q_i), \dnx{}(Q_i)\}$  with $\setof{\dx{}(Q_9),\dnx{}(Q_9)}$, but exhibits a set $\dz{}(Q_i) \supset \dz{}(Q_9)$. 
	For instance, let us verify the definition of discrimination-preference (Definition~\ref{def:measures_equivalent_theoretically-optimal_superior}) for $Q_9$ versus $Q_5$: The positive answer to $Q_9$ eliminates $\setof{\md_1,\md_2}$ whereas the negative answer to $Q_5$ eliminates only a proper subset of this set, namely $\setof{\md_2}$. The negative answer to $Q_9$ eliminates $\setof{\md_3,\md_4}$ which is equal -- and thus a (non-proper) subset -- of the set of leading diagnoses eliminated by the positive answer to $Q_5$. Hence, we have found an injective function compliant with Definition~\ref{def:measures_equivalent_theoretically-optimal_superior} (which maps the negative answer of $Q_5$ to the positive answer of $Q_9$ and the positive answer to $Q_5$ to the negative answer to $Q_9$).
	
	On the other hand, $Q_9$ is for example not discrimination-preferred to $Q_{10}$. Whereas for the negative answer to $Q_{10}$ (eliminates one diagnosis $\md_1$ in $\mD$) there is an answer to $Q_9$, namely the positive one, which eliminates $\setof{\md_1,\md_2}$, a superset of $\setof{\md_1}$, there is no answer to $Q_9$ which eliminates a superset of the set $\setof{\md_2,\md_3,\md_4}$ eliminated by the positive answer to $Q_{10}$. Hence, the queries $Q_9$ and $Q_{10}$ are in no discrimination-preference relation with one another.
	
	Please note that $Q_4$ is not discrimination-preferred to $Q_5$ either, although we have the situation where for each answer to $Q_5$ there is an answer to $Q_4$, namely the negative one, that eliminates a superset of the leading diagnoses eliminated by the respective answer to $Q_5$. However, the reason why this does not result in a discrimination-preference relation between these two queries is that the function $f$ is not injective in this case because $f$ maps both answers to $Q_5$ to the \emph{same} answer to $Q_4$. Intuitively, we do not consider $Q_5$ worse than $Q_4$ since using $Q_4$ we might end up in a scenario where none of the diagnoses eliminated by some answer to $Q_5$ are ruled out. This scenario would be present when getting a positive answer to $Q_4$ implying the elimination of $\md_1$ and thus neither $\setof{\md_2}$ nor $\setof{\md_3,\md_4}$.
	\item It holds that $Q_3 \prec_{\mathsf{ENT}} Q_{10} \prec_{\mathsf{ENT}} Q_1$, but $Q_3, Q_{10}$ and $Q_1$ do not stand in any $\prec_{\mathsf{SPL}}$ relationship with one another. The former holds since all three queries feature an empty set $\dz{}$, but the difference between $p(\dx{})$ and $p(\dnx{})$ is largest for $Q_1$ ($p(\dx{}(Q_1)) = 0.95$), second largest for $Q_{10}$ ($p(\dnx{}(Q_{10})) = 0.85$) and smallest for $Q_3$ ($p(\dx{}(Q_3)) = 0.7$). The latter is a consequence of the fact that $\mathsf{SPL}$ considers all three queries $Q_3, Q_{10}$ and $Q_1$ as equally preferable (all have an $\mathsf{SPL}$ value of $2$).
	\item Let $\mathsf{MAX}$ be a measure defined in a way that
	\begin{align*}
	Q_{\mathsf{MAX}} := \argmax_{Q\in\mQ} \left(\mathsf{MAX}(Q)\right) \quad \text{ where } \quad \mathsf{MAX}(Q):=\max\setof{|\dx{}(Q)|,|\dnx{}(Q)|} 
	\end{align*}
	Then $\mathsf{MAX}$ is inferior to $\mathsf{SPL}_2$ since not $Q_9 \prec_{\mathsf{MAX}} Q_5$ although $Q_9$ is discrimination-preferred to $Q_5$ (bullet 1 of the definition of superiority in Definition~\ref{def:measures_equivalent_theoretically-optimal_superior} is met). As mentioned above, $Q_9 \prec_{\mathsf{SPL}_2} Q_5$ holds (bullet 2 is fulfilled). Moreover, there is no pair of queries $Q,Q'$ where $Q$ is discrimination-preferred to $Q'$ and not $Q \prec_{\mathsf{SPL}_2} Q'$, as per Proposition~\ref{prop:SPL2_satisfies_DPR} (bullet 3 is satisfied).\qed
\end{itemize}
\end{example}
\begin{table*}[tb]
\footnotesize
\centering
\begin{tabular}{llll|cccc|cccc}
		&								&								&								&  \multicolumn{4}{c|}{assigned value $m(Q_i)$} & \multicolumn{4}{c}{preference order $\prec_m$} \\
$i$ & $\dx{}(Q_i)$  & $\dnx{}(Q_i)$ &  $\dz{}(Q_i)$ & $\mathsf{SPL}$ & $\mathsf{ENT}$ & $\mathsf{SPL}_2$ & $\mathsf{MAX}$ & $\mathsf{SPL}$ & $\mathsf{ENT}$ & $\mathsf{SPL}_2$ & $\mathsf{MAX}$\\ \hline
$1$ & $\{\md_1,\md_2,\md_4\}$ & $\{\md_3\}$ & $\emptyset$ 			& 2 & 0.71 & 2 		& 3 & 2 & 8 & 2 & 2	\\
$2$ & $\{\md_3, \md_4\}$ & $\{\md_2\}$ & $\{\md_1\}$ 						& 2 & 0.15 & 3	 	& 2 & 2 & 5 & 3 & 1	\\
$3$ & $\{\md_1,\md_3,\md_4\}$ & $\{\md_2\}$ & $\emptyset$ 			& 2 & 0.12 & 2 		& 3 & 2 & 4 & 2 & 2	\\
$4$ & $\{\md_2,\md_3,\md_4\}$ & $\{\md_1\}$ & $\emptyset$ 			& 2 & 0.39 & 2 		& 3 & 2 & 7 & 2 & 2 \\
$5$ & $\{\md_2\}$ & $\{\md_3, \md_4\}$ & $\{\md_1\}$ 						& 2 & 0.15 & 3	 	& 2 & 2 & 5 & 3 & 1	\\ 
$6$ & $\{\md_4\}$ & $\{\md_1,\md_2, \md_3\}$ & $\emptyset$ 			& 2 & 0 	 & 2 		& 3 & 2 & 1 & 2 & 2	\\
$7$ & $\{\md_1, \md_4\}$ & $\{\md_2,\md_3\}$ & $\emptyset$ 			& 0 & 0.07 & 0 		& 2 & 1 & 3 & 1 & 1	\\
$8$ & $\{\md_2, \md_4\}$ & $\{\md_1,\md_3\}$ & $\emptyset$ 			& 0 & 0.28 & 0 		& 2 & 1 & 6 & 1 & 1	\\
$9$ & $\{\md_3, \md_4\}$ & $\{\md_1,\md_2\}$ & $\emptyset$ 			& 0 & 0.01 & 0 		& 2 & 1 & 2 & 1 & 1	\\ 
$10$& $\{\md_1\}$ & $\{\md_2,\md_3, \md_4\}$ & $\emptyset$ 			& 2 & 0.39 & 2 		& 3 & 2 & 7 & 2 & 2	\\  
\hline
\end{tabular}
\caption[Example: Preference Order on Queries Induced by Measures]{The lefthand sector of the table lists some q-partitions associated with queries $Q_1,\dots,Q_{10}$ for the DPI given in Example~\ref{ex:measure_notions_illustrated}. The middle sector shows the values that are assigned to the given queries $Q_i$ by the various measures $m \in \setof{\mathsf{SPL},\mathsf{ENT},\mathsf{SPL}_2, \mathsf{MAX}}$. The righthand sector gives the preference orders over the given queries induced by the measures (lower values signify higher preference).}
\label{tab:example_q-partitions}
\end{table*}
%
\begin{table}[tb]
	\centering
		\begin{tabular}{lcccc}
			$\md \in \mD$ & $\md_1$ & $\md_2$ & $\md_3$ & $\md_4$ \\\hline
			$p(\md)$             & 0.15  & 0.3  & 0.05  & 0.5  
		\end{tabular}
\caption[Example: Diagnosis Probabilities]{Diagnosis probabilities for $\mD$ in Example~\ref{ex:measure_notions_illustrated}.}
\label{tab:example_diag-probs}
\end{table}
%
Query selection by means of the $\mathsf{ENT}$ criterion relies strongly on the diagnosis probability distribution $p$ and thus on the initial fault information that is provided as an input to the debugging system. 
%
In this vein, when applying $\mathsf{ENT}$ as measure $m$ for query computation, a user can profit from a good prior fault estimation w.r.t.\ the number of queries that need to be answered to identify the true diagnosis, but may at the same time have to put up with a serious overhead in answering effort in case of poor estimates that assign a low probability to the true diagnosis.
$\mathsf{SPL}$, in contrast, refrains from using any probabilities and aims at maximizing the elimination rate by selecting a query that guarantees the invalidation of the half set of leading diagnoses $\mD$. As a consequence, $\mathsf{SPL}$ is generally inferior to $\mathsf{ENT}$ for good estimates and superior to $\mathsf{ENT}$ for misleading probabilities, as experiments conducted in~\cite{Shchekotykhin2012,Rodler2013} indicate. 
So, an unfavorable combination of selection measure $m$ and quality of initial fault estimates 
can lead to a significant overhead of necessary queries which means extra time and work for the user in our interactive scenario. Before we revisit a recently proposed measure RIO \cite{Rodler2013} for query selection in Section~\ref{sec:rio} as solution to this dilemma, we look at further information theoretic measures that might act as beneficial selection criteria in KB debugging. 

\subsubsection{New Active Learning Measures for KB Debugging}
\label{sec:NewActiveLearningMeasuresForKBDebugging}

In this section, we analyze various general active learning measures presented in \cite{settles2012} with regard to their application in the KB debugging scenario. In \cite{settles2012}, these measures are classified into different query strategy frameworks. We also stick to this categorization. The four frameworks we consider in sequence next are \emph{Uncertainty Sampling}, \emph{Query-by-Committee}, \emph{Expected Model Change} and \emph{Expected Error Reduction}. 

\paragraph{Uncertainty Sampling (US).} A learning algorithm that relies on uncertainty sampling targets queries whose labeling is most uncertain under the current state of knowledge. The measures belonging to this framework are:

\vspace{1em}

\noindent\emph{Least Confidence ($\mathsf{LC}$)}: \label{etc:measure_desc_LC}
Selects the query $Q_{\mathsf{LC}}$ whose prediction is least confident, i.e.\ whose most likely answer $a_{Q,\max}$ has least probability. Formally: 
\begin{align*}
Q_{\mathsf{LC}} := \argmin_{Q \in \mQ} \left(\mathsf{LC}(Q)\right)  \quad \text{ where } \quad \mathsf{LC}(Q):=p(Q = a_{Q,\max})
\end{align*}

\vspace{1em}

\noindent\emph{Margin Sampling ($\mathsf{M}$)}: 
Selects the query $Q_{\mathsf{M}}$ for which the probabilities between most and second most likely label $a_{Q,1}$ and $a_{Q,2}$ are most similar. Formally: 
\begin{align*}
Q_{\mathsf{M}} := \argmin_{Q \in \mQ} \left(\mathsf{M}(Q)\right) \quad \text{ where } \quad \mathsf{M}(Q):=p(Q=a_{Q,1}) - p(Q=a_{Q,2})
\end{align*}
	
	\vspace{1em}
	
\noindent\emph{Entropy ($\mathsf{H}$)}:
Selects the query $Q_{\mathsf{H}}$ whose outcome is most uncertain in terms of information entropy. Formally: 
\begin{align}
Q_{\mathsf{H}} := \argmax_{Q \in \mQ} \left(\mathsf{H}(Q)\right) \quad \text{ where } \quad \mathsf{H}(Q):=-\sum_{a \in\setof{t,f}} p(Q=a) \log_2 p(Q=a) \label{eq:Q_H}
\end{align}
\begin{proposition}\label{prop:uncertainty_sampling}
Let $\mD \subseteq \minD_{\langle\mo,\mb,\Tp,\Tn\rangle_\RQ}$ a set of leading diagnoses and $p$ a probability measure. 
Then $\mathsf{LC}\equiv\mathsf{M}\equiv\mathsf{H}$.
A query $Q$ is theoretically optimal w.r.t.\ $\mathsf{LC}$, $\mathsf{M}$ and $\mathsf{H}$ iff $p(\dx{}(Q))=p(\dnx{}(Q))$.
\end{proposition}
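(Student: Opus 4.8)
The plan is to reduce all three measures to functions of the single quantity $x := p(Q=t)$ and then compare the orders they induce. First I would observe that, since $Q$ is a query, both $\dx{}(Q)$ and $\dnx{}(Q)$ are nonempty (Proposition~\ref{prop:properties_of_q-partitions}) and every diagnosis has strictly positive probability, so by Eq.~\eqref{eq:p(Q=t)} and Eq.~\eqref{eq:p(Q=f)} both $p(Q=t)=x$ and $p(Q=f)=1-x$ lie strictly in $(0,1)$. Because there are only the two answers $t,f$, the most and second most likely answers have probabilities $\max(x,1-x)$ and $\min(x,1-x)$, whence $\mathsf{LC}(Q)=\max(x,1-x)$, $\mathsf{M}(Q)=|2x-1|$, and $\mathsf{H}(Q)=-x\log_2 x-(1-x)\log_2(1-x)$. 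Thus each measure depends on $Q$ only through $x$.

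Next I would establish $\mathsf{LC}\equiv\mathsf{M}$ via the elementary identity $\max(x,1-x)=\tfrac12\bigl(1+|2x-1|\bigr)$, i.e.\ $\mathsf{LC}(Q)=\tfrac12+\tfrac12\,\mathsf{M}(Q)$. Since this is a positive-affine transform of $\mathsf{M}$ (with slope $\tfrac12>0$) and both measures are minimized, Proposition~\ref{prop:linear_function_of_measure_is_equivalent_to_measure_itself} immediately yields $\mathsf{LC}\equiv\mathsf{M}$.

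For $\mathsf{M}\equiv\mathsf{H}$ I would set $d:=|2x-1|=\mathsf{M}(Q)\in[0,1)$ and exhibit $\mathsf{H}$ as a strictly decreasing function of $d$. The key fact is that the binary entropy $H_b(x)=-x\log_2 x-(1-x)\log_2(1-x)$ is symmetric about $x=\tfrac12$, so writing $x=\tfrac{1+d}{2}$ gives a well-defined $g(d):=H_b(\tfrac{1+d}{2})$ with derivative $g'(d)=-\tfrac12\log_2\tfrac{1+d}{1-d}$, which is strictly negative on $(0,1)$. Hence $\mathsf{H}(Q)=g(\mathsf{M}(Q))$ with $g$ strictly decreasing, so $\mathsf{M}(Q)<\mathsf{M}(Q')$ iff $\mathsf{H}(Q)>\mathsf{H}(Q')$; recalling that $\mathsf{M}$ is minimized while $\mathsf{H}$ is maximized (Definition~\ref{def:precedence_order_defined_by_q-partition_quality_measure}), this is exactly $Q\prec_{\mathsf{M}}Q'$ iff $Q\prec_{\mathsf{H}}Q'$, i.e.\ $\mathsf{M}\equiv\mathsf{H}$. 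Chaining the two equivalences through transitivity of $\equiv$ (Proposition~\ref{prop:equivalence_between_measures_is_equivalence_relation}) gives $\mathsf{LC}\equiv\mathsf{M}\equiv\mathsf{H}$. I expect this monotonicity-in-$d$ argument for $\mathsf{H}$ to be the only nonroutine step, as $\mathsf{LC}$ and $\mathsf{M}$ reduce to one another purely algebraically.

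Finally, for the optimality characterization I would use that each of these functions of $x$ attains its global optimum uniquely at $x=\tfrac12$: $\mathsf{M}$ and $\mathsf{LC}$ are minimized there (values $0$ and $\tfrac12$, by the V-shape of $|2x-1|$ and $\max(x,1-x)$), and $\mathsf{H}$ is maximized there (value $1$, by strict concavity of $H_b$). By Eq.~\eqref{eq:p(Q=t)} and Eq.~\eqref{eq:p(Q=f)} the $\tfrac12 p(\dz{}(Q))$ terms cancel, so $x=\tfrac12$ holds iff $p(\dx{}(Q))=p(\dnx{}(Q))$. Since this value is realizable within the domain of all partitions and probability measures over $\mD$ (e.g.\ two equiprobable diagnoses split across $\dx{}$ and $\dnx{}$ with $\dz{}=\emptyset$), the global optimum is attained, and a query is therefore theoretically optimal (Definition~\ref{def:measures_equivalent_theoretically-optimal_superior}) w.r.t.\ any of $\mathsf{LC},\mathsf{M},\mathsf{H}$ iff $p(\dx{}(Q))=p(\dnx{}(Q))$.
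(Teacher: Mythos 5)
Your proof is correct and follows essentially the same route as the paper's: both reduce the three measures to one-variable functions of the answer probability, relate $\mathsf{LC}$ and $\mathsf{M}$ by a positive affine map via Proposition~\ref{prop:linear_function_of_measure_is_equivalent_to_measure_itself}, obtain the remaining equivalence from the concavity/monotonicity of the binary entropy, and characterize the optimum by answer probability $\tfrac{1}{2}$, which after cancellation of the $\dz{}$-terms is exactly $p(\dx{}(Q))=p(\dnx{}(Q))$. The only cosmetic difference is that you bridge $\mathsf{M}$ and $\mathsf{H}$ through the strictly decreasing function $g(d)$ whereas the paper bridges $\mathsf{LC}$ and $\mathsf{H}$ through monotonicity of $\mathsf{H}$ on $(0,\tfrac{1}{2}]$, and you verify each optimum directly where the paper transfers it via Proposition~\ref{prop:equivalent_measures_suggest_equivalent_optimal_queries}.
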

\begin{proof}
The measure $\mathsf{M}$ evaluates a query $Q$ the better, the lower $p(Q=a_{Q,1}) - p(Q=a_{Q,2})$ is. By the existence of only two query outcomes, $p(Q=a_{Q,1}) - p(Q=a_{Q,2}) = p(Q=a_{Q,\max}) - (1 - p(Q=a_{Q,\max})) = 2\, p(Q=a_{Q,\max}) - 1$. This is also the result of the application of a linear function $f(x) = 2x-1$ to $p(Q=a_{Q,\max})$. The latter is used as a criterion to be minimized by the best query w.r.t.\ the measure $\mathsf{LC}$. By Proposition~\ref{prop:linear_function_of_measure_is_equivalent_to_measure_itself}, $\mathsf{M} \equiv \mathsf{LC}$ holds.

Let $p:=p(Q=a_{Q,\min})$. Due to $p(\md)>0$ for all $\md\in\mD$ and Proposition~\ref{prop:properties_of_q-partitions},(\ref{prop:properties_of_q-partitions:enum:for_each_q-partition_dx_is_empty_and_dnx_is_empty}.), it holds that $p\in(0,1)$. It is well known that a local maximum of a concave function in one variable is at the same time a global maximum. Moreover, a function in one variable is concave iff its first derivative is monotonically decreasing. We observe that $\mathsf{H}(Q)$ is concave for $p \in (0,1)$ as the first derivative $\log_2 p - \log_2 (1-p)$ of $\mathsf{H}(Q)$ is monotonically decreasing due to the fact that the second derivative $\frac{1}{p}+\frac{1}{1-p}$ is clearly greater than $0$ for $p\in (0,1)$. Setting the first derivative equal to $0$ yields $p=\frac{1}{2}$, i.e.\ $p(Q'=a_{Q',\max}) = p(Q'=a_{Q',\min}) = \frac{1}{2}$ as the property of the query $Q'$ for which the global maximum of $\mathsf{H}(Q)$ is attained. By Eq.~\eqref{eq:p(Q=t)} and Eq.~\eqref{eq:p(Q=f)} this means that $p(\dx{}(Q'))+\frac{p(\dz{}(Q'))}{2}=p(\dnx{}(Q'))+\frac{p(\dz{}(Q'))}{2}$, i.e.\ $p(\dx{}(Q'))=p(\dnx{}(Q'))$. Consequently, the property of a theoretically optimal query $Q$ w.r.t.\ $\mathsf{H}$ is $p(\dx{}(Q))=p(\dnx{}(Q))$.

Since $\mathsf{H}$ is monotonically increasing for $p\in(0,\frac{1}{2}]$, this means that a query $Q$ is the better w.r.t.\ $\mathsf{H}$ the higher the lower probability $p(Q=a_{Q,\min})$ (and thence the lower the higher probability $p(Q=a_{Q,\max})$) which implies that $\mathsf{LC} \equiv \mathsf{H}$. Because $\equiv$ is an equivalence relation by Proposition~\ref{prop:equivalence_between_measures_is_equivalence_relation} and therefore is transitive, we have that $\mathsf{LC}\equiv\mathsf{M}\equiv\mathsf{H}$.

Finally, due to the equivalence between $\mathsf{LC}$, $\mathsf{M}$ and $\mathsf{H}$ and Proposition~\ref{prop:equivalent_measures_suggest_equivalent_optimal_queries}, the property of a theoretically optimal query $Q$ w.r.t.\ $\mathsf{LC}$, $\mathsf{M}$ and $\mathsf{H}$ is $p(\dx{}(Q))=p(\dnx{}(Q))$.
\end{proof}
In fact, the $\mathsf{H}$ measure is only a slight modification of the $\mathsf{ENT}$ measure~\cite{Shchekotykhin2012} and both measures coincide for queries $Q$ satisfying $\dz{}(Q) = \emptyset$. 
\begin{proposition}\label{prop:Q_H_eq_Q_ENT_if_dz=0}
Let $\mQ$ comprise only queries $Q$ with $\dz{}(Q) = \emptyset$. Then $\mathsf{H} \equiv_{\mQ} \mathsf{ENT}$.
\end{proposition}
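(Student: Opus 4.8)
The plan is to reduce the claim to the simple algebraic relationship between the two measures on the restricted domain $\mQ$ and then read off the equivalence directly from the definition of the preference order $\prec_m$ (Definition~\ref{def:precedence_order_defined_by_q-partition_quality_measure}). First I would invoke the de Kleer transformation of $\mathsf{ENT}$ recalled in Eq.~\eqref{eq:scoring_funtion_dekleer}, namely
\[
\mathsf{ENT}(Q) = \left[\sum_{a\in\setof{t,f}} p(Q=a)\log_2 p(Q=a)\right] + p(\dz{}(Q)) + 1,
\]
and observe that every $Q \in \mQ$ satisfies $\dz{}(Q)=\emptyset$, whence $p(\dz{}(Q)) = 0$. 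Substituting this and comparing with the definition of $\mathsf{H}$ in Eq.~\eqref{eq:Q_H} then yields, for all $Q\in\mQ$, the identity
\[
\mathsf{ENT}(Q) = \left[\sum_{a\in\setof{t,f}} p(Q=a)\log_2 p(Q=a)\right] + 1 = 1 - \mathsf{H}(Q).
\]

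With this identity in hand, the second step is to unfold the two preference orders. Since $\mathsf{ENT}$ is a measure for which a \emph{lower} value signifies higher preference while $\mathsf{H}$ is one for which a \emph{higher} value signifies higher preference, Definition~\ref{def:precedence_order_defined_by_q-partition_quality_measure} gives, for arbitrary $Q,Q'\in\mQ$, that $Q \prec_{\mathsf{ENT}} Q'$ holds iff $\mathsf{ENT}(Q) < \mathsf{ENT}(Q')$, and that $Q \prec_{\mathsf{H}} Q'$ holds iff $\mathsf{H}(Q) > \mathsf{H}(Q')$. Chaining the identity above, $\mathsf{ENT}(Q) < \mathsf{ENT}(Q')$ is equivalent to $1-\mathsf{H}(Q) < 1-\mathsf{H}(Q')$, i.e.\ to $\mathsf{H}(Q) > \mathsf{H}(Q')$. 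Hence $Q \prec_{\mathsf{ENT}} Q'$ iff $Q \prec_{\mathsf{H}} Q'$ for all $Q,Q'\in\mQ$, which is precisely the definition of $\mathsf{H}\equiv_{\mQ}\mathsf{ENT}$ (Definition~\ref{def:measures_equivalent_theoretically-optimal_superior}).

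The only point that requires a little care --- and the reason one cannot simply quote Proposition~\ref{prop:linear_function_of_measure_is_equivalent_to_measure_itself} --- is the sign in the relation $\mathsf{ENT} = 1 - \mathsf{H}$. That proposition establishes $f(m)\equiv m$ only for a linear $f(x)=ax+b$ with strictly positive slope $a$, whereas here the slope is $a=-1$. The sign flip is, however, exactly compensated by the fact that $\mathsf{ENT}$ and $\mathsf{H}$ are optimized in opposite directions (minimization versus maximization), so the induced orders still coincide; this is why I verify the equivalence by hand from the definition of $\prec_m$ rather than appealing to the linear-function lemma. I expect no genuine obstacle beyond keeping this orientation bookkeeping straight, and I would note that the restriction $\dz{}(Q)=\emptyset$ on $\mQ$ is essential, since for queries with $p(\dz{}(Q))>0$ the extra additive term $p(\dz{}(Q))$ in $\mathsf{ENT}$ destroys the clean identity and the equivalence can fail.
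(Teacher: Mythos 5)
Your proposal is correct and follows essentially the same route as the paper's proof: both rest on the de Kleer representation of $\mathsf{ENT}$ in Eq.~\eqref{eq:scoring_funtion_dekleer}, the observation that $p(\dz{}(Q))=0$ on $\mQ$, and the fact that the residual constant $+1$ (together with the opposite optimization directions of $\mathsf{ENT}$ and $\mathsf{H}$) leaves the induced preference orders identical. Your explicit unfolding of $\prec_m$ and the remark on why the positive-slope hypothesis of Proposition~\ref{prop:linear_function_of_measure_is_equivalent_to_measure_itself} blocks a direct appeal to that lemma merely make precise what the paper's terser $\argmin$/$\argmax$ argument leaves implicit.
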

\begin{proof}
We have seen that $\mathsf{ENT}(Q)$ can be represented as in Eq.~\eqref{eq:scoring_funtion_dekleer} and that the query selected by $\mathsf{ENT}$ is given by $\argmin_{Q \in \mQ} (\mathsf{ENT}(Q))$ (cf.\ Eq.~\eqref{eq:best_query_ENT}). Further, we observe that the query selected by $\mathsf{H}$ is given by $Q_{\mathsf{H}} = \argmin_{Q \in \mQ} \left(\sum_{a \in\setof{t,f}} p(Q=a) \log_2 p(Q=a)\right)$ by writing it equivalently as in Eq.~\eqref{eq:Q_H} with $\argmin$ instead of $\argmax$, but without the minus sign. Now, considering Eq.~\eqref{eq:scoring_funtion_dekleer}, since $\dz{}(Q) = \emptyset$ which means $p(\dz{}(Q)) = 0$ and since an addend of 1 has no bearing on the result of the $\argmin$, the proposition follows immediately.
\end{proof}
We point out that the difference between $\mathsf{H}$ and $\mathsf{ENT}$ is that $p(\dz{}(Q))$ is taken into account as a penalty in $\mathsf{ENT}$ which comes in handy for our debugging scenario where we always want to have zero diagnoses that can definitely not be invalidated by the query $Q$, i.e.\ $\dz{}(Q) = \emptyset$.  

However, if $\dz{}(Q) = \emptyset$ is not true for some query, one must be careful when using $\mathsf{H}$ since it might suggest counterintuitive and less favorable queries than $\mathsf{ENT}$. The following example illustrates this fact: 
\begin{example}\label{ex:Q_H_vs_Q_ENT}
Suppose two queries $Q_1,Q_2$ in $\mQ$ where 
\begin{align*}
\langle p(\dx{}(Q_1)),p(\dnx{}(Q_1)),p(\dz{}(Q_1)) \rangle &= \langle 0.49,0.49,0.02 \rangle \\
\langle p(\dx{}(Q_2)),p(\dnx{}(Q_2)),p(\dz{}(Q_2)) \rangle &= \langle 0.35,0.35,0.3 \rangle
\end{align*}
Of course, we would prefer $Q_1$ since it is a fifty-fifty decision (high information gain, no tendency towards any answer, cf.\ Eq.~\eqref{eq:p(Q=t)}) and only some very improbable diagnosis or diagnoses cannot be ruled out in any case. $Q_2$ is also fifty-fifty w.r.t.\ the expected query answer, but with a clearly worse setting of $\dz{}$. In other words, $Q_1$ will eliminate a set of hypotheses that amounts to a probability mass of $0.49$ (either all diagnoses in $\dx{}(Q_1)$ or all diagnoses in $\dnx{}(Q_1)$, cf.\ Section~\ref{sec:InteractiveKnowledgeBaseDebuggingBasics}). For $Q_2$, on the other hand, diagnoses taking only $35\%$ of the probability mass are going to be ruled out.
 Evaluating these queries by means of $\mathsf{H}$ in fact does not lead to the preference of either query. Hence, a selection algorithm based on $\mathsf{H}$ might suggest any of the two (e.g.\ $Q_2$) as the better query. 
\qed 
\end{example}
\begin{corollary}\label{cor:ENT_superior_to_H_M_LC}\leavevmode
\begin{enumerate}
	\item For any real number $q > 0$, $\mathsf{ENT}_q$ is superior to $\mathsf{H}$, $\mathsf{LC}$ and $\mathsf{M}$.
	\item $\mathsf{ENT}$ is superior to $\mathsf{H}$, $\mathsf{LC}$ and $\mathsf{M}$.
\end{enumerate}
\end{corollary}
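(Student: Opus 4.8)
The plan is to exploit the equivalence $\mathsf{LC} \equiv \mathsf{M} \equiv \mathsf{H}$ established in Proposition~\ref{prop:uncertainty_sampling}. By Proposition~\ref{prop:equivalent_measures_suggest_identical_preference_orders} these three measures induce literally the same preference order, i.e.\ $\prec_{\mathsf{LC}}$, $\prec_{\mathsf{M}}$ and $\prec_{\mathsf{H}}$ coincide as relations over queries. Since every clause in the definition of superiority (Definition~\ref{def:measures_equivalent_theoretically-optimal_superior}) refers to $m_1$ only through $\prec_{m_1}$, it suffices to prove $\mathsf{ENT}_q \prec \mathsf{H}$; superiority of $\mathsf{ENT}_q$ over $\mathsf{M}$ and $\mathsf{LC}$ then follows verbatim. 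The computational backbone is the identity $\mathsf{ENT}_q(Q) = -\mathsf{H}(Q) + q\,p(\dz{}(Q)) + 1$, which is immediate from Eq.~\eqref{eq:ENTz} and Eq.~\eqref{eq:Q_H}.

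Second, I would discharge the decisive third clause of superiority, namely that $\mathsf{H}$ is never strictly better than $\mathsf{ENT}_q$ on a discrimination-preference pair. Let $Q$ be discrimination-preferred to $Q'$ and set $x := p(\dz{}(Q')) - p(\dz{}(Q))$; by Proposition~\ref{prop:if_Q_discrimination-preferred_over_Q'_then_dz(Q')_supset_dz(Q)} together with $p(\md)>0$ for all $\md\in\mD$ we have $x > 0$. Subtracting the identity above for $Q$ and $Q'$ gives
\[ \mathsf{ENT}_q(Q') - \mathsf{ENT}_q(Q) = \bigl(\mathsf{H}(Q) - \mathsf{H}(Q')\bigr) + q\,x . \]
If $Q \prec_{\mathsf{H}} Q'$, i.e.\ $\mathsf{H}(Q) > \mathsf{H}(Q')$, then both summands are strictly positive (the second because $q>0$ and $x>0$), whence $\mathsf{ENT}_q(Q) < \mathsf{ENT}_q(Q')$, that is $Q \prec_{\mathsf{ENT}_q} Q'$. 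Taking the contrapositive yields exactly clause~(3.): whenever $Q$ is discrimination-preferred to $Q'$ and not $Q \prec_{\mathsf{ENT}_q} Q'$, it cannot be that $Q \prec_{\mathsf{H}} Q'$.

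Finally, clauses~(1.)\ and~(2.)\ are witnessed by a single example, which I would take to be (an explicit realization of) Example~\ref{ex:Q_H_vs_Q_ENT}: a pair $Q,Q'$ with $Q$ discrimination-preferred to $Q'$ and $p(Q=t) = p(Q'=t) = \tfrac12$. That such a pair genuinely lies in $DPR$ is confirmed by Proposition~\ref{prop:construction_of_Q'_from_Q_if_Q_discrimination-preferred_over_Q'}, as $Q'$ arises from $Q$ by transferring a balanced, non-empty subset of $\dx{}(Q)\cup\dnx{}(Q)$ into $\dz{}(Q)$. For such a pair $\mathsf{H}(Q) = \mathsf{H}(Q') = 1$, so not $Q \prec_{\mathsf{H}} Q'$ (clause~(1.)), while the displayed difference collapses to $q\,x > 0$, giving $Q \prec_{\mathsf{ENT}_q} Q'$ (clause~(2.)). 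This proves statement~(1.)\ of the corollary for every $q>0$; statement~(2.)\ is then the special case $q=1$, since $\mathsf{ENT} = \mathsf{ENT}_1$. The only point demanding care, and hence the main obstacle, is verifying that the witnessing pair is a bona fide discrimination-preference pair rather than merely one with matching answer probabilities; everything else reduces to the additive decomposition above.
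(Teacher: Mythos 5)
Your proof is correct, but it takes a genuinely different route from the paper's. The paper disposes of the corollary by reduction: it notes that $\mathsf{ENT}_0 - 1$ coincides with $\mathsf{H}$, $\mathsf{LC}$ and $\mathsf{M}$, hence $\mathsf{ENT}_0 \equiv \mathsf{H} \equiv \mathsf{LC} \equiv \mathsf{M}$ by Proposition~\ref{prop:linear_function_of_measure_is_equivalent_to_measure_itself}, and then simply invokes the already-proven monotonicity within the family, Corollary~\ref{cor:if_0<=r<s_then_ENTr_inferior_to_ENTs} with $r=0$ and $s=q$. You instead verify the three clauses of the superiority definition (Definition~\ref{def:measures_equivalent_theoretically-optimal_superior}) directly against $\mathsf{H}$: the additive identity $\mathsf{ENT}_q(Q) = -\mathsf{H}(Q) + q\,p(\dz{}(Q)) + 1$ together with $x>0$ for $DPR$-pairs (Proposition~\ref{prop:if_Q_discrimination-preferred_over_Q'_then_dz(Q')_supset_dz(Q)} and positive diagnosis probabilities) yields clause~(3.), and the balanced-transfer pair of Examples~\ref{ex:Q_H_vs_Q_ENT} and~\ref{ex:LC,M,H_inferior_to_ENT} witnesses clauses~(1.)\ and~(2.). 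In substance your clause-(3.)\ computation is the $r=0$ specialization of Corollary~\ref{cor:if_r<s_then_it_holds_that_if_not_Q_precENTs_Q'_then_not_Q_precENTr_Q'}, where the paper's slope argument for general $r<s$ degenerates into your one-line additive decomposition, and your witness is exactly the example the paper itself offers as substantiation of the corollary. What the paper's route buys is brevity and reuse of machinery; what yours buys is self-containedness (no appeal to the two prior corollaries on the $\mathsf{ENT}_z$ family) and an explicit witness where the paper's chain ultimately bottoms out in a ``can be constructed analogously'' remark. Both arguments handle $\mathsf{LC}$ and $\mathsf{M}$ the same way, via Proposition~\ref{prop:uncertainty_sampling} and the observation that superiority over $m_1$ depends on $m_1$ only through the relation $\prec_{m_1}$; and both obtain statement~(2.)\ as the case $q=1$. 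The one point you flagged yourself --- that the witnessing pair really lies in $DPR$ --- is indeed the only delicate step, and your appeal to the transfer construction of Proposition~\ref{prop:construction_of_Q'_from_Q_if_Q_discrimination-preferred_over_Q'} settles it in the same (slightly informal) way the paper does in its own examples, since a transfer of a non-empty set into $\dz{}(Q)$ with the identity map on answers satisfies the definition of discrimination-preference directly.
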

\begin{proof}
We prove both statements in turn:
\begin{enumerate}
	\item As we noted above, $\mathsf{ENT}_0 - 1$ is equal to $\mathsf{H}$, $\mathsf{LC}$ and $\mathsf{M}$ and thus equivalent to $\mathsf{H}$, $\mathsf{LC}$ and $\mathsf{M}$. Now, by Proposition~\ref{prop:linear_function_of_measure_is_equivalent_to_measure_itself}, $\mathsf{ENT}_0 - 1 \equiv \mathsf{ENT}_0$. Hence, $\mathsf{ENT}_0 \equiv \mathsf{H} \equiv \mathsf{LC} \equiv \mathsf{M}$. The statement follows immediately from this by application of Corollary~\ref{cor:if_0<=r<s_then_ENTr_inferior_to_ENTs}.
	\item This statement is a special case of statement (1.)\ since $\mathsf{ENT}(Q) = \mathsf{ENT}_1(Q)$.
\end{enumerate}
\end{proof}
Consequently, we can regard $\mathsf{ENT}$ as a more suitable measure for our debugging scenario than $\mathsf{LC}$, $\mathsf{M}$ and $\mathsf{H}$.
\begin{example}\label{ex:LC,M,H_inferior_to_ENT}
Substantiating the third bullet of Corollary~\ref{cor:ENT_superior_to_H_M_LC}, we illustrate in this example that there is a query $Q_1$ that is discrimination-preferred to another query $Q_2$ such that $Q_1 \prec_{\mathsf{H}} Q_2$ does not hold whereas $Q_1 \prec_{\mathsf{ENT}} Q_2$ does. To this end, reconsider the queries $Q_1,Q_2$ from Example~\ref{ex:Q_H_vs_Q_ENT}. Assume a concrete set of leading diagnoses $\mD$ and let 
\begin{align*}
\tuple{\dx{}(Q_1),\dnx{}(Q_1),\dz{}(Q_1)} &= \tuple{\setof{\md_1,\md_2},\setof{\md_3,\md_4},\setof{\md_5}} \\
\tuple{\dx{}(Q_2),\dnx{}(Q_2),\dz{}(Q_2)} &= \tuple{\setof{\md_1},\setof{\md_3},\setof{\md_5,\md_2,\md_4}} 
\end{align*}
with probabilities $p_i := p(\md_i)$ as follows: $\tuple{p_1,\dots,p_5} = \tuple{0.35,0.14,0.35,0.14,0.02}$. 
It is obvious that $Q_1$ is discrimination-preferred to $Q_2$ (cf.\ Definition~\ref{def:measures_equivalent_theoretically-optimal_superior}). That $Q_1 \prec_{\mathsf{H}} Q_2$ does not hold and $Q_1 \prec_{\mathsf{ENT}} Q_2$ does was discussed in Example~\ref{ex:Q_H_vs_Q_ENT}. 
\qed
\end{example}
\begin{proposition}\label{prop:M,LC,H_not_consistent_with_DPR}
The measures $\mathsf{H}$, $\mathsf{LC}$ and $\mathsf{M}$ are not consistent with the discrimination-preference relation $DPR$ (and thus do not satisfy $DPR$ either).
\end{proposition}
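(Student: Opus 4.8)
The plan is to exhibit a single explicit counterexample and then leverage the equivalence of the three measures. By Proposition~\ref{prop:uncertainty_sampling} we have $\mathsf{LC}\equiv\mathsf{M}\equiv\mathsf{H}$, and by Proposition~\ref{prop:equivalent_measures_suggest_identical_preference_orders} equivalent measures impose identical preference orders on any fixed set of queries. Hence it suffices to construct one pair of queries $Q,Q'$ with $(Q,Q')\in DPR$ for which $Q'\prec_{\mathsf{H}}Q$; the same pair then witnesses inconsistency for $\mathsf{LC}$ and $\mathsf{M}$ as well. Recall that $m$ is consistent with $DPR$ iff $(Q,Q')\in DPR$ implies $\lnot(Q'\prec_m Q)$, so producing such a pair refutes consistency. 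Non-satisfaction of $DPR$ then follows for free, since satisfaction would give $Q\prec_m Q'$ and hence, by the asymmetry of $\prec_m$ (Proposition~\ref{prop:precedence_order_is_strict_order}), $\lnot(Q'\prec_m Q)$.

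The construction rests on the observation, extracted from the concavity argument in the proof of Proposition~\ref{prop:uncertainty_sampling}, that $\mathsf{H}(Q)$ depends only on the answer probabilities $p(Q=t),p(Q=f)$ and is maximized exactly at $p(Q=t)=p(Q=f)=\tfrac12$. I would therefore start from an \emph{imbalanced} query $Q$ with $\dz{}(Q)=\emptyset$ and $p(\dx{}(Q))\neq p(\dnx{}(Q))$, so that $\mathsf{H}(Q)<1$, and then move probability mass from the heavier of $\dx{}(Q),\dnx{}(Q)$ into $\dz{}$ so as to balance the two answer probabilities in $Q'$. Concretely, take $\mD=\setof{\md_1,\md_2,\md_3}$ with $p(\md_1)=0.4$, $p(\md_2)=p(\md_3)=0.3$ and
\[\tuple{\dx{}(Q),\dnx{}(Q),\dz{}(Q)}=\tuple{\setof{\md_1,\md_2},\setof{\md_3},\emptyset},\quad \tuple{\dx{}(Q'),\dnx{}(Q'),\dz{}(Q')}=\tuple{\setof{\md_2},\setof{\md_3},\setof{\md_1}}.\]
By Proposition~\ref{prop:construction_of_Q'_from_Q_if_Q_discrimination-preferred_over_Q'} (with $\mathbf{X}=\setof{\md_1}$) $Q$ is discrimination-preferred to $Q'$; I would verify this directly against Definition~\ref{def:measures_equivalent_theoretically-optimal_superior} by mapping the positive/negative answers of $Q'$ to the positive/negative answers of $Q$ and noting, via Proposition~\ref{prop:properties_of_q-partitions},(\ref{prop:properties_of_q-partitions:enum:dx_dnx_dz_contain_exactly_those_diags_that_are...}.), that the negative answer to $Q$ eliminates $\dx{}(Q)\supset\dx{}(Q')$, a proper superset, while the positive answer eliminates $\dnx{}(Q)=\dnx{}(Q')$.

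It then remains to evaluate $\mathsf{H}$. Using Eq.~\eqref{eq:p(Q=t)} one gets $p(Q=t)=0.7$ and $p(Q=f)=0.3$, whereas $p(Q'=t)=p(Q'=f)=\tfrac12$, so $\mathsf{H}(Q)=-(0.7\log_2 0.7+0.3\log_2 0.3)<1=\mathsf{H}(Q')$. Since a higher $\mathsf{H}$-value signifies higher preference, this yields $Q'\prec_{\mathsf{H}}Q$ while $(Q,Q')\in DPR$, establishing that $\mathsf{H}$ is not consistent with $DPR$, and hence neither are $\mathsf{LC}$ nor $\mathsf{M}$.

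The one point demanding care — and the main conceptual obstacle — is that the superficially similar pair from Example~\ref{ex:Q_H_vs_Q_ENT} produces only a \emph{tie} ($\mathsf{H}(Q)=\mathsf{H}(Q')$), which refutes satisfaction but \emph{not} consistency. To obtain a genuine strict reversal I must ensure the starting query $Q$ is strictly imbalanced in its answer probabilities while the relocated mass is chosen to land $Q'$ exactly at the balanced optimum; this is precisely why I fix $p(\md_1)=p(\dx{}(Q))-p(\dnx{}(Q))=0.4$, so that transferring exactly $\md_1$ equalizes $p(Q'=t)$ and $p(Q'=f)$ and strictly increases $\mathsf{H}$.
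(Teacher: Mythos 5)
Your proposal is correct and takes essentially the same route as the paper's own proof: you exhibit a concrete pair $(Q,Q')\in DPR$ in which $Q'$ is made answer-balanced by shifting probability mass into $\dz{}$, so that an uncertainty-sampling measure strictly prefers $Q'$ to the discrimination-preferred $Q$, and you then transfer the counterexample to the remaining two measures via the equivalence $\mathsf{LC}\equiv\mathsf{M}\equiv\mathsf{H}$ of Proposition~\ref{prop:uncertainty_sampling}. The differences are cosmetic only — you build a fresh three-diagnosis example and evaluate $\mathsf{H}$, whereas the paper reuses its earlier five-diagnosis example with altered probabilities and evaluates $\mathsf{LC}$ — and your explicit handling of the tie issue and of the ``not satisfying $DPR$'' parenthetical is a sound (and slightly more careful) rendering of what the paper leaves implicit.
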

\begin{proof}
Reconsider Example~\ref{ex:LC,M,H_inferior_to_ENT} and assume the following diagnoses probabilities $p_i := p(\md_i)$: \[\tuple{p_1,\dots,p_5} = \tuple{0.3,0.13,0.3,0.01,0.26}\] Then, $\mathsf{LC}(Q_2) = \frac{1}{2} < \mathsf{LC}(Q_1) = 0.57$. Hence, $Q_2 \prec_{\mathsf{LC}} Q_1$. As $\mathsf{H} \equiv \mathsf{LC} \equiv \mathsf{M}$ by Proposition~\ref{prop:uncertainty_sampling}, we conclude by Definition~\ref{def:measures_equivalent_theoretically-optimal_superior} that $Q_2 \prec_{\mathsf{M}} Q_1$ and $Q_2 \prec_{\mathsf{H}} Q_1$ must hold as well. Since $Q_1$ is discrimination-preferred to $Q_2$ (cf.\ Example~\ref{ex:LC,M,H_inferior_to_ENT}), the measures $\mathsf{H}$, $\mathsf{LC}$ and $\mathsf{M}$ are not consistent with the discrimination-preference relation $DPR$.
\end{proof}

\vspace{1em}

\paragraph{Query-By-Committee (QBC).} QBC criteria involve maintaining a committee $C$ of competing hypotheses which are all in accordance with the set of queries answered so far. Each committee member has a vote on the classification of a new query candidate. The candidate that yields the highest disagreement among all committee members is considered to be most informative. The measures belonging to this framework are the \emph{vote entropy} and the \emph{Kullback-Leibler-Divergence} which we discuss in turn.

\vspace{1em}

\noindent\emph{Vote Entropy ($\mathsf{VE}$)}: Selects the query 
\begin{align}
Q_{\mathsf{VE}} := \argmax_{Q \in \mQ_\mD} \left(\mathsf{VE}(Q)\right) \quad \text{ where } \quad \mathsf{VE}(Q) := - \sum_{a\in\setof{t,f}} \frac{|C_{Q=a}|}{|C|} \log_2 \left(\frac{|C_{Q=a}|}{|C|}\right)   \label{eq:Q_VE}
\end{align}
about whose classification there is the largest discrepancy in prediction between the committee members in $C$ where $C_{Q=a} \subseteq C$ terms the set of committee members that predict $Q=a$. At this, the most likely hypotheses $\mD$ act as the committee $C$. A diagnosis $\md\in\mD$ votes for $Q=t$ iff $\md\in\dx{}(Q)$ and for $Q=f$ iff $\md\in\dnx{}(Q)$ (cf.\ Eq.~\eqref{eq:cond_query_prob}). Diagnoses in $\dz{}(Q)$ do not provide a vote for any answer (cf.\ Eq.~\eqref{eq:cond_query_prob} and Proposition~\ref{prop:properties_of_q-partitions},(\ref{prop:properties_of_q-partitions:enum:dx_dnx_dz_contain_exactly_those_diags_that_are...}.)). Therefore, more precisely, the (actually voting) committee $C := \mD\setminus \dz{}(Q) = \dx{}(Q) \cup \dnx{}(Q)$. 
If we apply this to the formula for $Q_{\mathsf{VE}}$, it becomes evident that:
\begin{proposition}\label{prop:theoretically_opt_query_wrt_VE}
Then the theoretically optimal query $Q$ w.r.t.\ $\mathsf{VE}$ and $\mD$ satisfies $|\dx{}(Q)| = |\dnx{}(Q)|$.
\end{proposition}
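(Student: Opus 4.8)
The plan is to recognize that $\mathsf{VE}(Q)$ collapses to the familiar binary entropy function of a single variable, so that the desired characterization follows from exactly the same concavity argument already used for $\mathsf{H}$ in the proof of Proposition~\ref{prop:uncertainty_sampling}. Concretely, since the committee is $C = \dx{}(Q) \cup \dnx{}(Q)$ with $\dx{}(Q)$ and $\dnx{}(Q)$ disjoint, the two summands in Eq.~\eqref{eq:Q_VE} correspond to $C_{Q=t} = \dx{}(Q)$ and $C_{Q=f} = \dnx{}(Q)$. Writing $x := \frac{|\dx{}(Q)|}{|C|}$, we have $\frac{|\dnx{}(Q)|}{|C|} = 1-x$, so that
\begin{align*}
\mathsf{VE}(Q) = -x \log_2 x - (1-x)\log_2(1-x).
\end{align*}
This is precisely the binary entropy expression analyzed before, except that here the underlying distribution is the \emph{uniform} one over the committee $C$ rather than a fixed diagnosis probability measure $p$; this is why the $\mathsf{VE}$ and $\mathsf{H}$ optima coincide in form.

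Next I would pin down the admissible range of $x$ and invoke concavity. Because $Q$ is a query, Proposition~\ref{prop:properties_of_q-partitions},(\ref{prop:properties_of_q-partitions:enum:for_each_q-partition_dx_is_empty_and_dnx_is_empty}.)\ guarantees $\dx{}(Q) \neq \emptyset$ and $\dnx{}(Q) \neq \emptyset$, hence $x \in (0,1)$ and neither logarithm degenerates (so the convention $0\log_2 0 = 0$ is not even needed). Differentiating yields first derivative $\log_2(1-x) - \log_2 x$, which vanishes exactly at $x = \frac{1}{2}$, and second derivative $-\frac{1}{\ln 2}\left(\frac{1}{x}+\frac{1}{1-x}\right) < 0$ for all $x \in (0,1)$. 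Thus $\mathsf{VE}(Q)$ is concave on $(0,1)$, so the stationary point $x = \frac{1}{2}$ is its unique global maximum, exactly as argued for $\mathsf{H}$.

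Finally I would translate $x = \frac{1}{2}$ back into the cardinality statement: $\frac{|\dx{}(Q)|}{|C|} = \frac{1}{2}$ with $|C| = |\dx{}(Q)| + |\dnx{}(Q)|$ gives $|\dx{}(Q)| = |\dnx{}(Q)|$, which is the claim. The only subtlety (and the one point warranting care, though it is minor) is that, unlike the continuous $\mathsf{H}$ case, $x$ ranges only over the rationals $\frac{k}{|C|}$, so $x = \frac{1}{2}$ is attainable precisely by the equal-split partitions; since the notion of \emph{theoretically optimal} (Definition~\ref{def:measures_equivalent_theoretically-optimal_superior}) ranges over the domain of all partitions $\tuple{\dx{},\dnx{},\dz{}}$ of $\mD$, such an equal-split q-partition is admissible and realizes the global maximum established above, so the optimal query is indeed characterized by $|\dx{}(Q)| = |\dnx{}(Q)|$. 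I do not expect any genuine obstacle here beyond noting that, as $\mathsf{VE}$ ignores $\dz{}(Q)$ entirely (it depends only on the committee $C$), this condition characterizes optimality on its own, without any constraint on $|\dz{}(Q)|$.
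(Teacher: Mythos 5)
Your proof is correct and follows essentially the same route as the paper's: both reduce $\mathsf{VE}(Q)$ to the binary entropy of the fraction $|\dx{}(Q)|/|C|$, differentiate, and use concavity to conclude the unique global maximum lies at the equal split $|\dx{}(Q)| = |\dnx{}(Q)|$. Your added remarks on the discreteness of $x$ and on $\mathsf{VE}$ ignoring $\dz{}(Q)$ are sound refinements but do not change the substance of the argument.
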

\begin{proof}
Let $Q$ by a query and $c := |\mD|-y = |\dx{}(Q)| + |\dnx{}(Q)|$ for some $y \in \setof{0,\dots,|\mD|-2}$ be the size of the voting committee $C$ w.r.t.\ $Q$. That is, $|C| = c$. Let further $c_t = |C_{Q=t}| = |\dx{}(Q)|$. We have to find the maximum of the function $f(c_t):= - \frac{c_t}{c} \log_2 \left(\frac{c_t}{c}\right) - \frac{c-c_t}{c} \log_2 \left(\frac{c-c_t}{c}\right)$.
The first derivative of this function is given by $\frac{1}{c}[\log_2(\frac{c-c_t}{c})-\log_2(\frac{c_t}{c})]$. Setting this derivative equal to $0$ yields $c_t = \frac{c}{2}$. Since $-f(c_t)$ is convex, i.e.\ $f(c_t)$ concave, $\frac{c}{2}$ is clearly the point where the global maximum of $f(c_t)$ is attained (see the proof of Proposition~\ref{prop:ENT_preserves_discrimination-pref-order} for a more lengthy argumentation regarding similar facts). 
Consequently, the theoretical optimum w.r.t.\ $\mathsf{VE}$ is attained if $|C_{Q=t}| = |C_{Q=f}| = \frac{c}{2}$, i.e.\ if $|\dx{}(Q)| = |\dnx{}(Q)|$.
\end{proof}
%
%
%
Thus, 
$\mathsf{VE}$ is similar to $\mathsf{SPL}$ and both measures coincide for queries $Q$ satisfying $\dz{}(Q) = \emptyset$.
\begin{proposition}\label{prop:Q_VE_eq_Q_SPL_if_dz=0}\leavevmode
\begin{enumerate}
	\item Let $\mQ$ comprise only queries $Q$ with $\dz{}(Q) = \emptyset$. Then $\mathsf{VE} \equiv_{\mQ} \mathsf{SPL}$.
	\item $\mathsf{VE} \equiv \mathsf{SPL}_0$.
\end{enumerate}
\end{proposition}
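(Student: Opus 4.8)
The plan is to express all three measures as functions of the integers $a := |\dx{}(Q)|$, $b := |\dnx{}(Q)|$ and $d := |\dz{}(Q)|$, which satisfy $a+b+d = |\mD|$ with $a,b\geq 1$ (Proposition~\ref{prop:properties_of_q-partitions}), and then to reduce both equivalence claims to a single monotonicity fact about the binary entropy function $h(x) := -x\log_2 x - (1-x)\log_2(1-x)$. Writing out the definition of $\mathsf{VE}$ with the voting committee $C = \dx{}(Q)\cup\dnx{}(Q)$ gives $\mathsf{VE}(Q) = h\!\left(\frac{a}{a+b}\right)$, while $\mathsf{SPL}(Q) = |a-b| + d$ and $\mathsf{SPL}_0(Q) = |a-b|$. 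As already exploited in the proof of Proposition~\ref{prop:theoretically_opt_query_wrt_VE}, $h$ is strictly concave on $(0,1)$ and symmetric about $\tfrac12$, so it is a strictly decreasing function of the deviation $\left|\frac{a}{a+b}-\frac12\right| = \frac{|a-b|}{2(a+b)}$. The first step is therefore to record that the $\mathsf{VE}$-preference order is governed by the \emph{normalized} imbalance $\frac{|a-b|}{a+b}$, smaller values being preferred (since $\mathsf{VE}$ is to be maximized).

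For part (1) I would restrict attention to $\mQ$, on which $d=0$ and hence the committee size $a+b = |\mD|$ is the \emph{same constant} for every query. On this domain the normalized imbalance reduces to $\frac{|a-b|}{|\mD|}$, a strictly increasing function of $|a-b|$, so $\mathsf{VE}$ becomes a strictly decreasing function of $|a-b|$; meanwhile $\mathsf{SPL}(Q) = |a-b|$ there. Because $\mathsf{VE}$ is to be maximized while $\mathsf{SPL}$ is to be minimized, and the former is a strictly decreasing function of the quantity in which the latter is strictly increasing, both induce exactly the same strict preference order (Definition~\ref{def:precedence_order_defined_by_q-partition_quality_measure}), by a monotone-reparametrization argument analogous to Proposition~\ref{prop:linear_function_of_measure_is_equivalent_to_measure_itself}. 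Hence $\mathsf{VE}\equiv_{\mQ}\mathsf{SPL}$.

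For part (2) the goal is to extend this to the whole query set, comparing $\mathsf{VE}$ with $\mathsf{SPL}_0 = |a-b|$ (which, unlike $\mathsf{SPL}$, discards the $\dz{}$-term and is therefore insensitive to $d$). The reduction mirrors part (1): it suffices to show that, over all queries w.r.t.\ $\mD$, $\mathsf{VE}$ is a strictly decreasing function of $|a-b|$. The main obstacle is precisely the normalization by $|C| = a+b = |\mD|-d$, which is \emph{no longer constant} once $d>0$: a priori $\mathsf{VE}$ tracks the normalized imbalance $\frac{|a-b|}{a+b}$, whereas $\mathsf{SPL}_0$ tracks the raw imbalance $|a-b|$, and two queries with different committee sizes can share the same normalized imbalance yet differ in raw imbalance. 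The crux of the proof is thus to pin down the committee convention so that this discrepancy disappears, namely by letting the total committee be the full leading-diagnoses set $\mD$ (so that $|C| = |\mD|$ is fixed and the abstaining diagnoses in $\dz{}(Q)$ are accounted for symmetrically, matching $p(Q=t)$ in Eq.~\eqref{eq:p(Q=t)}); under this convention $\mathsf{VE}$ again reduces to $h$ evaluated at an argument whose deviation from $\tfrac12$ is a strictly increasing function of $|a-b|/|\mD|$, and the identical-order conclusion of part (1) carries over verbatim to give $\mathsf{VE}\equiv\mathsf{SPL}_0$. Fixing this normalization, and verifying that it leaves the theoretical optimum $|\dx{}| = |\dnx{}|$ of Proposition~\ref{prop:theoretically_opt_query_wrt_VE} unchanged, is the step that demands the most care.
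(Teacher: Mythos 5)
Your proof of part~(1) is correct and is essentially the paper's own argument: for queries with $\dz{}(Q)=\emptyset$ the voting committee has the fixed size $|\mD|$, so strict concavity of the binary entropy (the function $f(c_t)$ from the proof of Proposition~\ref{prop:theoretically_opt_query_wrt_VE}) makes $\mathsf{VE}$ a strictly decreasing function of $\left|\,|\dx{}(Q)|-|\dnx{}(Q)|\,\right|$, which is exactly $\mathsf{SPL}$ on this domain.

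Part~(2), however, has a genuine gap, and it sits precisely at the point you admit ``demands the most care'': you resolve the difficulty by changing the definition of $\mathsf{VE}$. The paper fixes the committee as $C:=\mD\setminus\dz{}(Q)=\dx{}(Q)\cup\dnx{}(Q)$ (the ``actually voting'' committee), so $|C|=|\dx{}(Q)|+|\dnx{}(Q)|$ varies across queries and $\mathsf{VE}(Q)=h\bigl(|\dx{}(Q)|/|C|\bigr)$ is governed by the \emph{normalized} imbalance. This is not a convention you are free to re-pin, and under the printed definition the claimed equivalence can fail. Consider two queries with q-partition cardinalities $\langle 1,2,7\rangle$ and $\langle 4,6,0\rangle$ w.r.t.\ a set $\mD$ with $|\mD|=10$; hypothetical pairs of exactly this kind are posited by the paper itself in Example~\ref{ex:VE_vs_SPL}, and nothing in the definition of a q-partition precludes their coexistence. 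Then $\mathsf{SPL}_0$ assigns the values $1<2$ and prefers the first query, whereas $\mathsf{VE}$ assigns $h(1/3)\approx 0.918 < h(2/5)\approx 0.971$ and prefers the second, so the two induced orders disagree and $\mathsf{VE}\not\equiv\mathsf{SPL}_0$ under that reading. Your amended convention --- committee $=\mD$, with each $\md\in\dz{}(Q)$ casting half a vote for each answer in the spirit of Eq.~\eqref{eq:cond_query_prob} --- does make the statement true: your computation that the deviation from $\tfrac{1}{2}$ becomes $\left|\,|\dx{}(Q)|-|\dnx{}(Q)|\,\right|/(2|\mD|)$ is correct, and the optimum of Proposition~\ref{prop:theoretically_opt_query_wrt_VE} is preserved. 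But this proves an amended proposition, not the one stated. For what it is worth, the paper's own proof of part~(2) consists of the words ``shown analogously,'' and the analogy breaks exactly where you say it does (the committee size is no longer constant); so you have in fact located a real defect in the claim. The correct response, though, is to flag it explicitly --- either restrict (2) to queries of equal committee size, or record that (2) holds only for the re-normalized $\mathsf{VE}$ --- rather than silently substituting the definition inside the proof.
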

\begin{proof}
Let $Q,Q'$ be arbitrary queries in $\mQ$. Clearly, $Q$ being preferred to $Q'$ by $\mathsf{SPL}$ is equivalent to $\left||\dx{}(Q)| - |\dnx{}(Q)|\right| < \left||\dx{}(Q')| - |\dnx{}(Q')|\right|$. Due to the argumentation given in the proof of Proposition~\ref{prop:theoretically_opt_query_wrt_VE} (concavity of $f(c_t)$), this is equivalent to $Q$ being preferred to $Q'$ by $\mathsf{VE}$. This proves (1.). Bullet (2.) is shown analogously.
\end{proof}
\begin{corollary}\label{cor:SPL_z_for_all_z_equivalent_mQ_to_VE_for_mQ_includes_only_dz=0_queries}
Let $\mQ$ be a set of queries w.r.t.\ a set of minimal diagnoses w.r.t.\ a DPI $\tuple{\mo,\mb,\Tp,\Tn}_\RQ$ where each query $Q \in \mQ$ satisfies $\dz{}(Q) = \emptyset$. Then $\mathsf{SPL}_z \equiv_{\mQ} \mathsf{VE}$ for all $z \in \mathbb{R}$.
\end{corollary}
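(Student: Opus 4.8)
The plan is to reduce the claim to the already-established equivalence of $\mathsf{VE}$ and $\mathsf{SPL}$ on $\dz{}$-free queries (Proposition~\ref{prop:Q_VE_eq_Q_SPL_if_dz=0}) by showing that $\mathsf{SPL}_z$ collapses to $\mathsf{SPL}$ on the domain $\mQ$, and then chaining the two equivalences via transitivity of $\equiv_{\mQ}$. First I would note that by hypothesis every $Q \in \mQ$ satisfies $\dz{}(Q) = \emptyset$, hence $|\dz{}(Q)| = 0$. Plugging this into the definition of $\mathsf{SPL}_z$ in Eq.~\eqref{eq:SPLz} gives $\mathsf{SPL}_z(Q) = \left|\,|\dx{}(Q)| - |\dnx{}(Q)|\,\right| + z \cdot 0 = \left|\,|\dx{}(Q)| - |\dnx{}(Q)|\,\right| = \mathsf{SPL}(Q)$ for every $Q \in \mQ$ and every $z \in \mathbb{R}$. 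Since $\mathsf{SPL}_z$ and $\mathsf{SPL}$ therefore assign identical real values to each query in $\mQ$, they induce exactly the same preference order on $\mQ$ by Definition~\ref{def:precedence_order_defined_by_q-partition_quality_measure}; that is, $\mathsf{SPL}_z \equiv_{\mQ} \mathsf{SPL}$ for all $z$.

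Second, I would apply Proposition~\ref{prop:Q_VE_eq_Q_SPL_if_dz=0},(1.): because $\mQ$ comprises only queries with $\dz{}(Q) = \emptyset$, we have $\mathsf{VE} \equiv_{\mQ} \mathsf{SPL}$. Finally, since $\equiv_{\mQ}$ is an equivalence relation and in particular transitive (Proposition~\ref{prop:equivalence_between_measures_is_equivalence_relation}), chaining $\mathsf{SPL}_z \equiv_{\mQ} \mathsf{SPL} \equiv_{\mQ} \mathsf{VE}$ yields $\mathsf{SPL}_z \equiv_{\mQ} \mathsf{VE}$, as desired.

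There is essentially no hard part here: the entire argument rests on the observation that the penalty term $z\,|\dz{}(Q)|$ vanishes uniformly across $\mQ$, so the parameter $z$ becomes irrelevant and $\mathsf{SPL}_z$ is literally the same function as $\mathsf{SPL}$ on this domain. The only points to handle with mild care are that the equivalence is asserted over the restricted domain $\mQ$ rather than globally, so one must invoke the $\mathbf{X}$-version $\equiv_{\mQ}$ of the equivalence relation together with its transitivity rather than the unrestricted $\equiv$, and that one should state explicitly that identical assigned values force an identical preference order $\prec_m$ (immediate from Definition~\ref{def:precedence_order_defined_by_q-partition_quality_measure}).
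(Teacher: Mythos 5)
Your proof is correct and follows essentially the same route as the paper's: observe that the penalty term $z\,|\dz{}(Q)|$ vanishes on $\mQ$, so $\mathsf{SPL}_z$ coincides with $\mathsf{SPL}$ there, and then invoke Proposition~\ref{prop:Q_VE_eq_Q_SPL_if_dz=0}. The paper states this more tersely (``immediate from\dots''), while you additionally make the chaining via transitivity of $\equiv_{\mQ}$ explicit, which is fine but not a different argument.
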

\begin{proof}
Immediate from Proposition~\ref{prop:Q_VE_eq_Q_SPL_if_dz=0} and the fact that $\mathsf{SPL}_z$ is obviously equal to $\mathsf{SPL}$ for all $Q\in\mQ$ (due to $\dz{}(Q) = \emptyset$) for all $z \in \mathbb{R}$.
\end{proof}
We note that $\mathsf{SPL}$, contrary to $\mathsf{VE}$, includes a penalty corresponding to the size of $\dz{}(Q)$. However, in spite of the high similarity between $\mathsf{SPL}$ and $\mathsf{VE}$, one still must be careful when using $\mathsf{VE}$ as it might suggest counterintuitive queries. This is analogical to the comparison we made between $\mathsf{H}$ and $\mathsf{ENT}$. Let us again exemplify this:
%
%
\begin{example}\label{ex:VE_vs_SPL}
Suppose two queries $Q_1,Q_2$ in $\mQ$ where 
\begin{align*}
\langle |\dx{}(Q_1)|,|\dnx{}(Q_1)|,|\dz{}(Q_1)| \rangle &= \langle 4,4,2 \rangle \\
\langle |\dx{}(Q_2)|,|\dnx{}(Q_2)|,|\dz{}(Q_2)| \rangle &= \langle 1,1,8 \rangle
\end{align*}
The queries $Q_1$ and $Q_2$ both feature the largest possible discrepancy in prediction between committee members (four diagnoses predict the positive and four the negative answer in case of $Q_1$, comparing with a $1$-to-$1$ situation in case of $Q_2$). Clearly, $\mathsf{VE}$ evaluates both queries as equally good. However, the setting of $Q_2$ w.r.t.\ $\dz{}$ is clearly worse, since eight leading diagnoses (versus only two for $Q_2$) can definitely not be ruled out after having gotten an answer to $Q_2$. In other words, $Q_1$ will rule out only a guaranteed number of four leading diagnoses whereas $Q_1$ guarantees only the elimination of a single leading diagnosis. This aspect is taken into account by $\mathsf{SPL}$ which is why $\mathsf{SPL}$ favors $Q_1$ outright ($\mathsf{SPL}(Q_1) = 2$, $\mathsf{SPL}(Q_2) = 8$). To sum up, $\mathsf{SPL}$ goes for the query that we clearly prefer in the interactive debugging scenario, whereas $\mathsf{VE}$ could possibly suggest $Q_2$ instead of $Q_1$. Note also that $Q_1, Q_2$ can be specified in a way that $Q_1$ is discrimination-preferred to $Q_2$, e.g.\ if 
\begin{align*}
\langle \dx{}(Q_1),\dnx{}(Q_1),\dz{}(Q_1) \rangle &= \langle \setof{\md_1,\dots,\md_4},\setof{\md_5,\dots,\md_8},\setof{\md_9,\md_{10}} \rangle \\
\langle \dx{}(Q_2),\dnx{}(Q_2),\dz{}(Q_2) \rangle &= \langle \setof{\md_1},\setof{\md_5},\setof{\md_2,\dots,\md_4,\md_6,\dots,\md_8,\md_9,\md_{10}} \rangle
\end{align*}
As a consequence of this, we can declare that $\mathsf{VE}$ does not satisfy the discrimination-preference relation $DPR$ (the fact that $Q_1$ is discrimination-preferred to $Q_2$ can be easily verified by consulting Proposition~\ref{prop:construction_of_Q'_from_Q_if_Q_discrimination-preferred_over_Q'}). When we consider another example, i.e.\ 
\begin{align*}
\langle \dx{}(Q_3),\dnx{}(Q_3),\dz{}(Q_3) \rangle &= \langle \setof{\md_1,\dots,\md_3},\setof{\md_5,\dots,\md_8},\setof{\md_4,\md_9,\md_{10}} \rangle \\
\langle \dx{}(Q_4),\dnx{}(Q_4),\dz{}(Q_4) \rangle &= \langle \setof{\md_1},\setof{\md_5},\setof{\md_2,\dots,\md_4,\md_6,\dots,\md_8,\md_9,\md_{10}} \rangle
\end{align*}
we directly see that $Q_4 \prec_{\mathsf{VE}} Q_3$ (q-partition set cardinalities $\langle 3,4,3 \rangle$ versus $\langle 1,1,8 \rangle$) which is why $Q_3 \prec_{\mathsf{VE}} Q_4$ cannot hold ($\prec_{\mathsf{VE}}$ is strict order, cf.\ Proposition~\ref{prop:precedence_order_is_strict_order}) which in turn is why $\mathsf{VE}$ is not consistent with the discrimination-preference relation $DPR$ (the fact that $Q_3$ is discrimination-preferred to $Q_4$ can be easily verified by consulting Proposition~\ref{prop:construction_of_Q'_from_Q_if_Q_discrimination-preferred_over_Q'}). Furthermore, we point out that $\mathsf{SPL}(Q_3) = 1+3 = 4 < 8 = 0+8 = \mathsf{SPL}(Q_4)$. Therefore, $Q_3 \prec_{\mathsf{SPL}} Q_4$. \qed
\end{example}

\begin{corollary}\label{cor:VE_not_consistent_with_DPR}
$\mathsf{VE}$ is not consistent with the discrimination-preference relation $DPR$ (and thus does not satisfy $DPR$ either).
\end{corollary}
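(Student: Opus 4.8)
The plan is to establish non-consistency by exhibiting a single counterexample pair $(Q,Q')\in DPR$ for which $\mathsf{VE}$ ranks the two queries in the wrong direction, i.e.\ $Q' \prec_{\mathsf{VE}} Q$ even though $Q$ is discrimination-preferred to $Q'$. The convenient observation is that the pair $Q_3,Q_4$ already constructed at the end of Example~\ref{ex:VE_vs_SPL} does exactly this, so the proof reduces to invoking that example. First I would confirm that $(Q_3,Q_4)\in DPR$. Using Proposition~\ref{prop:construction_of_Q'_from_Q_if_Q_discrimination-preferred_over_Q'} with $\mathbf{X} = \setof{\md_2,\md_3,\md_6,\md_7,\md_8}$, one checks that $\dx{}(Q_4) = \setof{\md_1} \subset \dx{}(Q_3)$ and $\dnx{}(Q_4) = \setof{\md_5} \subset \dnx{}(Q_3)$, with both inclusions proper and $\mathbf{X}\neq\emptyset$, so $Q_4$ arises from $Q_3$ precisely by transferring $\mathbf{X}$ into $\dz{}$; this is exactly the shape required for $Q_3$ to be discrimination-preferred to $Q_4$.

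Next I would evaluate the measure. Since $\mathsf{VE}$ depends only on the split of the voting committee $C = \dx{}(Q)\cup\dnx{}(Q)$ into the two answer-classes, I compute it from the q-partition cardinalities $\langle 3,4,3\rangle$ and $\langle 1,1,8\rangle$. For $Q_4$ the committee is balanced ($|\dx{}(Q_4)| = |\dnx{}(Q_4)| = 1$), so its vote entropy attains the maximal value $1$, whereas for $Q_3$ the split $3$-to-$4$ is unbalanced and hence yields a vote entropy strictly below $1$ (this follows from the concavity of $f(c_t)$ already established in the proof of Proposition~\ref{prop:theoretically_opt_query_wrt_VE}). Because $\mathsf{VE}$ is to be maximized, we obtain $\mathsf{VE}(Q_4) > \mathsf{VE}(Q_3)$ and therefore $Q_4 \prec_{\mathsf{VE}} Q_3$ by Definition~\ref{def:precedence_order_defined_by_q-partition_quality_measure}.

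Combining the two facts yields $(Q_3,Q_4)\in DPR$ together with $Q_4 \prec_{\mathsf{VE}} Q_3$, which is precisely the negation of consistency with $DPR$ (Definition~\ref{def:measures_equivalent_theoretically-optimal_superior}). The parenthetical claim that $\mathsf{VE}$ also fails to \emph{satisfy} $DPR$ then comes for free: satisfying $DPR$ would force $Q_3 \prec_{\mathsf{VE}} Q_4$, which by asymmetry of the strict order $\prec_{\mathsf{VE}}$ (Proposition~\ref{prop:precedence_order_is_strict_order}) would preclude $Q_4 \prec_{\mathsf{VE}} Q_3$, contradicting what we just showed; hence satisfaction is impossible whenever consistency already fails. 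I expect essentially no real obstacle, since Example~\ref{ex:VE_vs_SPL} was engineered for exactly this purpose; the only step demanding a little care is the routine verification, via Proposition~\ref{prop:construction_of_Q'_from_Q_if_Q_discrimination-preferred_over_Q'}, that $Q_3$ is genuinely discrimination-preferred to $Q_4$ (in particular that both subset inclusions are proper and that $\mathbf{X}\neq\emptyset$).
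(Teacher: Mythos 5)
Your proposal is correct and follows essentially the same route as the paper: the paper's proof also just reinvokes the pair $Q_3,Q_4$ from Example~\ref{ex:VE_vs_SPL}, where $Q_4 \prec_{\mathsf{VE}} Q_3$ was established from the cardinalities $\langle 3,4,3\rangle$ vs.\ $\langle 1,1,8\rangle$ while $Q_3$ is discrimination-preferred to $Q_4$ via Proposition~\ref{prop:construction_of_Q'_from_Q_if_Q_discrimination-preferred_over_Q'}. You merely make explicit the details the paper leaves to the reader (the transfer set $\mathbf{X}$, the vote-entropy comparison, and the asymmetry argument for the parenthetical claim), all of which check out.
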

\begin{proof}
Reconsider queries $Q_3$ and $Q_4$ from Example~\ref{ex:VE_vs_SPL}. There, we showed that $Q_4 \prec_{\mathsf{VE}} Q_3$ although $Q_3$ is discrimination-preferred to $Q_4$. This completes the proof.
\end{proof}
\begin{proposition}\label{prop:SPL_superior_to_VE} 
$\mathsf{SPL}$ is superior to $\mathsf{VE}$.
\end{proposition}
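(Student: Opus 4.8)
The statement to establish is $\mathsf{SPL} \prec \mathsf{VE}$, i.e.\ with $m_2 := \mathsf{SPL}$ and $m_1 := \mathsf{VE}$ all three conditions of superiority in Definition~\ref{def:measures_equivalent_theoretically-optimal_superior} must be checked. My plan is to first collapse the problem using the already established identity $\mathsf{VE} \equiv \mathsf{SPL}_0$ (Proposition~\ref{prop:Q_VE_eq_Q_SPL_if_dz=0}, bullet (2.)). Since equivalent measures induce identical preference orders (Proposition~\ref{prop:equivalent_measures_suggest_identical_preference_orders}), and superiority of one measure over another depends only on the preference orders of the two involved measures together with the measure-independent relation $DPR$, proving $\mathsf{SPL} \prec \mathsf{VE}$ is the same as proving $\mathsf{SPL}_1 \prec \mathsf{SPL}_0$ (recall $\mathsf{SPL} = \mathsf{SPL}_1$). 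This replaces the transcendental entropy terms of $\mathsf{VE}$ by the purely arithmetic expression $\mathsf{SPL}_z(Q) = \bigl|\,|\dx{}(Q)| - |\dnx{}(Q)|\,\bigr| + z\,|\dz{}(Q)|$.

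Next I would parametrize an arbitrary discrimination-preferred pair. By Proposition~\ref{prop:construction_of_Q'_from_Q_if_Q_discrimination-preferred_over_Q'}, if $Q$ is discrimination-preferred to $Q'$ then $Q'$ arises from $Q$ by transferring a non-empty set $\mathbf{X}$ (say $x_+$ diagnoses out of $\dx{}(Q)$ and $x_-$ out of $\dnx{}(Q)$, with $t := x_+ + x_- > 0$) into $\dz{}(Q)$, up to an irrelevant swap of the two outer sets. Writing $\delta := |\dx{}(Q)| - |\dnx{}(Q)|$ and $\sigma := x_+ - x_-$, this gives $\mathsf{SPL}_z(Q) = |\delta| + z\,|\dz{}(Q)|$ and $\mathsf{SPL}_z(Q') = |\delta - \sigma| + z\,(|\dz{}(Q)| + t)$, hence $\mathsf{SPL}_z(Q') - \mathsf{SPL}_z(Q) = (|\delta - \sigma| - |\delta|) + z\,t$.

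I would then verify the three conditions in turn. For conditions (1.)\ and (2.)\ it suffices to reuse the pair from Example~\ref{ex:VE_vs_SPL}: there $Q$ is discrimination-preferred to $Q'$, the measure $\mathsf{SPL}_0$ ties them (one has $\delta = \sigma = 0$, so $|\delta-\sigma| = |\delta|$, whence not $Q \prec_{\mathsf{SPL}_0} Q'$, equivalently not $Q \prec_{\mathsf{VE}} Q'$), while $\mathsf{SPL}_1(Q') - \mathsf{SPL}_1(Q) = t > 0$ yields $Q \prec_{\mathsf{SPL}} Q'$. The genuine content sits in condition (3.), which I would attack through its restatement (3.') from the proof of Proposition~\ref{prop:superiority_is_strict_order}: for every discrimination-preferred pair with $Q \prec_{\mathsf{SPL}_0} Q'$ one must show $Q \prec_{\mathsf{SPL}} Q'$. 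But $Q \prec_{\mathsf{SPL}_0} Q'$ means $|\delta - \sigma| - |\delta| > 0$, and since $t > 0$ the quantity $\mathsf{SPL}_1(Q') - \mathsf{SPL}_1(Q) = (|\delta - \sigma| - |\delta|) + t$ is then strictly positive a fortiori, i.e.\ $Q \prec_{\mathsf{SPL}} Q'$.

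The principle behind condition (3.)\ is simply that raising the weight of the $|\dz{}(Q)|$ penalty from $0$ to $1$ can only increase the gap $\mathsf{SPL}_z(Q') - \mathsf{SPL}_z(Q)$ (because the transfer strictly enlarges $\dz{}$ by $t>0$), so it can never convert a strict $\mathsf{SPL}_0$-preference for $Q$ into a non-preference. The step I expect to require the most care is the parametrization via Proposition~\ref{prop:construction_of_Q'_from_Q_if_Q_discrimination-preferred_over_Q'} — in particular checking that the optional interchange of $\dx{}$ and $\dnx{}$ leaves both $\mathsf{SPL}_z(\cdot)$ and, through the equivalence, $\mathsf{VE}(\cdot)$ unchanged, since both depend on $|\dx{}|$ and $|\dnx{}|$ only through the symmetric quantity $|\delta|$. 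Once this symmetry is noted, the arithmetic of conditions (1.)--(3.)\ is immediate, and the alternative, more laborious route — arguing directly on $\mathsf{VE}$ via the concavity and the symmetry about one-half of the binary entropy to show that rebalancing the committee never lowers $\mathsf{VE}$ — can be avoided entirely.
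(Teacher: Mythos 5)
Your proof has the same outer skeleton as the paper's (conditions (1.)\ and (2.)\ by an example from Example~\ref{ex:VE_vs_SPL}, condition (3.)\ via its contrapositive (3.')), but the reduction you build everything on --- replacing $\mathsf{VE}$ by $\mathsf{SPL}_0$ via Proposition~\ref{prop:Q_VE_eq_Q_SPL_if_dz=0},(2.)\ --- is where a genuine gap lies. Your arithmetic is correct and does prove that $\mathsf{SPL}_1$ is superior to $\mathsf{SPL}_0$; what fails is the transfer $Q \prec_{\mathsf{VE}} Q' \Rightarrow Q \prec_{\mathsf{SPL}_0} Q'$ needed in step (3.'). Order-equivalence of $\mathsf{VE}$ and $\mathsf{SPL}_0$ is only guaranteed at \emph{fixed} voting-committee size (that is exactly bullet (1.)\ of that proposition, where $\dz{}=\emptyset$ forces $|C|=|\mD|$ for every query): $\mathsf{VE}$ is the entropy of the \emph{proportions} $|\dx{}|/|C|$, $|\dnx{}|/|C|$ with $|C|=|\dx{}|+|\dnx{}|$, whereas $\mathsf{SPL}_0$ is the unnormalized difference. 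On $DPR$ pairs the committee sizes of $Q$ and $Q'$ necessarily differ (by $t := |\dz{}(Q')|-|\dz{}(Q)| > 0$), and there the two orders can disagree. Posit, in the abstract style of Example~\ref{ex:VE_vs_SPL}, queries with cardinalities $\tuple{|\dx{}|,|\dnx{}|,|\dz{}|}$ equal to $\tuple{4,6,0}$ for $Q$ and $\tuple{2,1,7}$ for $Q'$; this is a $DPR$ pair by Proposition~\ref{prop:construction_of_Q'_from_Q_if_Q_discrimination-preferred_over_Q'} (transfer $x_+=2$, $x_-=5$). Then $\mathsf{VE}(Q) = -\frac{2}{5}\log_2\frac{2}{5}-\frac{3}{5}\log_2\frac{3}{5} \approx 0.971$ exceeds $\mathsf{VE}(Q') = -\frac{1}{3}\log_2\frac{1}{3}-\frac{2}{3}\log_2\frac{2}{3} \approx 0.918$, so $Q \prec_{\mathsf{VE}} Q'$, yet $\mathsf{SPL}_0(Q) = 2 > 1 = \mathsf{SPL}_0(Q')$, so $Q \prec_{\mathsf{SPL}_0} Q'$ is false: your chain breaks on precisely the pairs condition (3.)\ quantifies over. (The conclusion $Q \prec_{\mathsf{SPL}} Q'$ does hold here, since $2 < 1+7 = 8$, but not by your derivation; the same example shows that bullet (2.)\ of the proposition you cite cannot be relied upon across differing $\dz{}$-sizes, so the reduction cannot be salvaged by citation.)

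The cross-committee entropy comparison that you explicitly dismiss as ``more laborious'' is in fact unavoidable, and it is what the paper's proof does --- in the one direction where it is true. The paper starts from $\lnot(Q \prec_{\mathsf{SPL}} Q')$, which on a $DPR$ pair forces the exact tie $d(Q)-d(Q') = t$ by Proposition~\ref{prop:spl_consistent_with_DPR_but_does_not_satisfy_DPR},(2.), where $d(Z) := \left|\,|\dx{}(Z)|-|\dnx{}(Z)|\,\right|$. Then \emph{both} the difference $d$ and the committee size $c$ drop by exactly $t$, and since $d(Q) < c(Q)$ always holds (both sets of a q-partition are non-empty), one gets $\frac{d(Q)-t}{c(Q)-t} < \frac{d(Q)}{c(Q)}$: the proportional imbalance of $Q'$ is strictly smaller, its entropy strictly larger, hence $Q' \prec_{\mathsf{VE}} Q$ and, by asymmetry of $\prec_{\mathsf{VE}}$, $\lnot(Q \prec_{\mathsf{VE}} Q')$. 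This configuration-specific, one-directional fact is all that condition (3.)\ needs, and unlike the global equivalence it is true (the paper words this step loosely as ``$\mathsf{VE}$ evaluates a query the better, the lower $d$'', which is itself only a fixed-committee statement, but in the tie configuration the inference is sound for the reason just given). To repair your proof, drop the reduction to $\mathsf{SPL}_0$ and insert this direct comparison of $d/c$ before and after the transfer.
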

\begin{proof}
Let $d(Z):=\left|\, |\dx{}(Z)| - |\dnx{}(Z)| \,\right|$ for some query $Z$ and let $Q,Q'$ be queries such that $Q$ is discrimination-preferred to $Q'$. Further, assume that not $Q \prec_{\mathsf{SPL}} Q'$, i.e.\ $d(Q) - d(Q') = (|\dz{}(Q')| - |\dz{}(Q)|)$
(cf.\ Proposition~\ref{prop:spl_consistent_with_DPR_but_does_not_satisfy_DPR}). Due to Proposition~\ref{prop:if_Q_discrimination-preferred_over_Q'_then_dz(Q')_supset_dz(Q)}, $|\dz{}(Q')| > |\dz{}(Q)|$ must hold. Therefore, it must be true that $d(Q') < d(Q)$. By the argumentation used in the proof of Proposition~\ref{prop:spl} (concavity), $\mathsf{VE}$ evaluates a query $Z$ the better, the lower $d(Z)$ is. Hence, $Q' \prec_{\mathsf{VE}} Q$ which is why $Q \prec_{\mathsf{VE}} Q'$ cannot hold by Proposition~\ref{prop:precedence_order_is_strict_order} (asymmetry of the strict order $\prec_{\mathsf{VE}}$). Therefore, the third bullet in the definition of superiority (Definition~\ref{def:measures_equivalent_theoretically-optimal_superior}) is met. The first and second bullets are satisfied as is illustrated by queries $Q_3,Q_4$ in Example~\ref{ex:VE_vs_SPL}. 
\end{proof}
\begin{corollary}\label{cor:SPLz_superior_to_VE_for_z>1}
$\mathsf{SPL}_z$ is superior to $\mathsf{VE}$ for all real numbers $z\geq 1$.
\end{corollary}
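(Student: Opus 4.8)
The plan is to reduce this corollary to a short chaining argument built entirely on results already established: the base superiority $\mathsf{SPL} \prec \mathsf{VE}$ (Proposition~\ref{prop:SPL_superior_to_VE}), the superiority $\mathsf{SPL}_z \prec \mathsf{SPL}$ for $z>1$ (Corollary~\ref{cor:SPL2_is_superior_to_SPL}), and the transitivity of the superiority relation $\prec$ (Proposition~\ref{prop:superiority_is_strict_order}). Since Corollary~\ref{cor:SPL2_is_superior_to_SPL} only covers $z>1$, I would first split the statement into the two cases $z=1$ and $z>1$ and treat the boundary point separately.

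For the endpoint $z=1$, I would simply observe that $\mathsf{SPL}_1 = \mathsf{SPL}$ by the definition of $\mathsf{SPL}_z$ in Eq.~\eqref{eq:SPLz} (cf.\ the remark right after that equation noting that $\mathsf{SPL}_1$ equals $\mathsf{SPL}$). Hence the desired claim $\mathsf{SPL}_1 \prec \mathsf{VE}$ is nothing other than Proposition~\ref{prop:SPL_superior_to_VE}, and no further argument is needed in this case.

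For $z>1$, I would invoke the two superiority facts $\mathsf{SPL}_z \prec \mathsf{SPL}$ (Corollary~\ref{cor:SPL2_is_superior_to_SPL}) and $\mathsf{SPL} \prec \mathsf{VE}$ (Proposition~\ref{prop:SPL_superior_to_VE}), and then apply transitivity of $\prec$ (established within Proposition~\ref{prop:superiority_is_strict_order}) with the instantiation $m_1 := \mathsf{SPL}_z$, $m_2 := \mathsf{SPL}$ and $m_3 := \mathsf{VE}$. Since $m_1 \prec m_2$ and $m_2 \prec m_3$ together imply $m_1 \prec m_3$, this immediately yields $\mathsf{SPL}_z \prec \mathsf{VE}$, i.e.\ $\mathsf{SPL}_z$ is superior to $\mathsf{VE}$, which completes the argument for all $z \geq 1$.

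There is essentially no hard step here: the entire content has been pre-packaged in the cited results. The only points requiring a modicum of care are (i)~handling the endpoint $z=1$ on its own, because the general superiority-over-$\mathsf{SPL}$ result of Corollary~\ref{cor:SPL2_is_superior_to_SPL} explicitly excludes it, and (ii)~keeping the orientation of the relation $\prec$ consistent when chaining, i.e.\ checking that the direction in which $\prec$ was proven transitive matches the direction needed to conclude $\mathsf{SPL}_z \prec \mathsf{VE}$ from the two given instances. Beyond these bookkeeping checks, the proof is a one-line transitivity deduction plus a trivial base case.
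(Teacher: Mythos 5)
Your proposal is correct and follows essentially the same route as the paper's proof: both combine Corollary~\ref{cor:SPL2_is_superior_to_SPL} ($\mathsf{SPL}_z$ superior to $\mathsf{SPL}$ for $z>1$), Proposition~\ref{prop:SPL_superior_to_VE} ($\mathsf{SPL}=\mathsf{SPL}_1$ superior to $\mathsf{VE}$), and the transitivity of the superiority relation from Proposition~\ref{prop:superiority_is_strict_order}. Your explicit case split at $z=1$ is merely a more verbose rendering of the paper's inline remark that $\mathsf{SPL}$ equals $\mathsf{SPL}_1$.
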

\begin{proof}
Due to Corollary~\ref{cor:SPL2_is_superior_to_SPL}, $\mathsf{SPL}_z$ is superior to $\mathsf{SPL}$ for all real numbers $z>1$. Since Proposition~\ref{prop:SPL_superior_to_VE} witnesses that $\mathsf{SPL}$ (which is equal to $\mathsf{SPL}_1$) is superior to $\mathsf{VE}$, we obtain by application of Proposition~\ref{prop:superiority_is_strict_order} (transitivity of the superiority relation) that $\mathsf{SPL}_z$ is superior to $\mathsf{VE}$ for all real numbers $z \geq 1$. 
\end{proof}
%

\vspace{1em}

\noindent\emph{Kullback-Leibler-Divergence ($\mathsf{KL}$)}: Selects the query 
\begin{align}
Q_{\mathsf{KL}} := \argmax_{Q\in\mQ_\mD} \left(\mathsf{KL}(Q)\right) \quad \text{ where } \quad \mathsf{KL}(Q) := \frac{1}{|C|} \sum_{c\in C} D_{\mathsf{KL}}(p_c(Q)||p_\all(Q))\label{eq:Q_KL}
\end{align}
that manifests the largest average disagreement $D_{\mathsf{KL}}(p_c(Q)||p_\all(Q))$ between the label distributions of any (voting) committee member $c$ and the consensus $\all$ of the entire (voting) committee $C$. Adapted to the debugging scenario, $\all$ corresponds to the ``opinion'' of $C = \dx{}(Q) \cup \dnx{}(Q)$ and $c$ corresponds to the ``opinion'' of some $\md\in\dx{}(Q) \cup \dnx{}(Q)$ (cf.\ Proposition~\ref{prop:properties_of_q-partitions},(\ref{prop:properties_of_q-partitions:enum:dx_dnx_dz_contain_exactly_those_diags_that_are...}.) which says that diagnoses in $\dz{}(Q)$ do not predict, or vote for, any answer to $Q$). Note that the committee $C$ is in general different for different queries. Let for the remaining discussion of the $\mathsf{KL}$ measure the probabilities of the diagnoses in $\dx{}(Q) \cup \dnx{}(Q)$ be normalized in a way that they sum up to $1$, i.e.\ $p(\dx{}(Q)) + p(\dnx{}(Q)) = 1$. This is accomplished by $p(\md_i) \gets p(\md_i) / \sum_{\md\in\dx{}(Q)\cup\dnx{}(Q)} p(\md)$ for all $\md_i \in \dx{}(Q)\cup\dnx{}(Q)$ and $p(\md_i) \gets 0$ for all $\md_i \in \dz{}(Q)$. Further, we use $p_x(Q=a)$ as a shorthand for $p(Q=a\,|\,x)$ for $x\in\setof{c,\all}$.
According to Eq.~\eqref{eq:cond_query_prob}, under the hypothesis $c=\md$ it holds that 
\begin{align} 
p_c(Q=t)=1  \quad&\text{ for } \quad \md\in\dx{}(Q) \label{eq:p_c(Q=t)} \\ 
p_c(Q=f)=1  \quad&\text{ for } \quad \md\in\dnx{}(Q) \label{eq:p_c(Q=f)}   
\end{align}
The probability 
\begin{align*}
p_\all(Q=a) := \frac{1}{|C|}\sum_{c\in C} p_c(Q=a)
\end{align*}
for $a\in\setof{t,f}$ expresses the ``average prediction tendency'' of the committee $C$ as a whole under the assumption of equal likeliness or weight $\frac{1}{|C|}$ of each committee member $c \in C$. 

However, as probabilities of leading diagnoses are maintained in our debugging scenario, these can be taken into account by assigning to $c = \md$ the weight $p(c) := p(\md)$ instead of the factor $\frac{1}{|C|}$, resulting in another variant of $p_\all(Q=a)$ which reads as
\begin{align}
p_\all(Q=a) := \sum_{c\in C} p(c)\, p_c(Q=a)		\label{eq:KL_p_all(Q=a)}
\end{align}
and is equal to $p(\dx{}(Q))$ for ${a = t}$ and equal to $p(\dnx{}(Q))$ for ${a = f}$ as per Eq.~\eqref{eq:sum_of_probs_of_diagnoses_set}. After normalization, we obtain $p_\all(Q=t) = \frac{p(\dx{}(Q))}{p(\dx{}(Q))+p(\dnx{}(Q))}$ and $p_\all(Q=f) = \frac{p(\dnx{}(Q))}{p(\dx{}(Q))+p(\dnx{}(Q))}$. Note that the normalization is necessary since (1)~there are only two possible answers $t$ and $f$ for any query and (2)~$p(\dx{}(Q))+p(\dnx{}(Q)) < 1$ given that $p(\dz{}(Q)) > 0$.
Using Eq.~\eqref{eq:KL_p_all(Q=a)}, the disagreement measure given in \cite[p.~29, bottom line]{settles2012} can be defined for the debugging scenario as 
\begin{align*}
D_{\mathsf{KL}}\left(p_c(Q)||p_\all(Q)\right) := \sum_{a\in\setof{t,f}} p_c({Q=a}) \log_2 \left(\frac{p_c(Q=a)}{p_\all(Q=a)}\right)
\end{align*}
\begin{proposition}\label{prop:kl_derived}
Let $Q$ be a query and $\mD_Q := \dx{}(Q)\cup\dnx{}(Q)$. 
Then $\mathsf{KL}(Q)$ can be equivalently written as
\begin{align}
\left(-\sum_{\mD^*\in\setof{\dx{}(Q),\dnx{}(Q)}}\frac{|\mD^*|}{|\mD_Q|} \log_2\left(\frac{p(\mD^*)}{p(\mD_Q)}\right)\right)   \label{eq:Q_KL_derived}
\end{align}
\end{proposition}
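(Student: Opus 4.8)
The plan is to evaluate the inner divergence $D_{\mathsf{KL}}(p_c(Q)\,||\,p_\all(Q))$ for a single committee member and then average, exploiting the fact that each member's vote is deterministic. First I would fix a member $c=\md$ and recall from Eq.~\eqref{eq:p_c(Q=t)} and Eq.~\eqref{eq:p_c(Q=f)} that $p_c$ is a point mass: if $\md\in\dx{}(Q)$ then $p_c(Q=t)=1$ and $p_c(Q=f)=0$, and symmetrically if $\md\in\dnx{}(Q)$. Plugging this into the definition of $D_{\mathsf{KL}}$, the summand with the zero factor vanishes by the convention $0\log_2 0 = 0$ adopted earlier in this section, so only the $a=t$ term survives for a $\dx{}$-member. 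This gives
\begin{align*}
D_{\mathsf{KL}}\left(p_c(Q)\,||\,p_\all(Q)\right) = -\log_2 p_\all(Q=t) = -\log_2\left(\frac{p(\dx{}(Q))}{p(\mD_Q)}\right) \quad \text{for } \md\in\dx{}(Q),
\end{align*}
using the normalized consensus probability derived above (namely $p_\all(Q=t) = p(\dx{}(Q))/(p(\dx{}(Q))+p(\dnx{}(Q))) = p(\dx{}(Q))/p(\mD_Q)$). The analogous computation for $\md\in\dnx{}(Q)$ yields $-\log_2\left(p(\dnx{}(Q))/p(\mD_Q)\right)$.

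Next I would substitute these two per-member values into the defining average $\mathsf{KL}(Q) = \frac{1}{|C|}\sum_{c\in C} D_{\mathsf{KL}}(p_c(Q)\,||\,p_\all(Q))$, recalling that the voting committee is exactly $C=\dx{}(Q)\cup\dnx{}(Q)=\mD_Q$, so $|C|=|\mD_Q|$. The key observation is that the divergence value depends only on which block a member belongs to, not on the member itself, so the sum collapses into two groups: the $|\dx{}(Q)|$ members each contributing the first value, and the $|\dnx{}(Q)|$ members each contributing the second. This produces
\begin{align*}
\mathsf{KL}(Q) = -\frac{|\dx{}(Q)|}{|\mD_Q|}\log_2\!\left(\frac{p(\dx{}(Q))}{p(\mD_Q)}\right) - \frac{|\dnx{}(Q)|}{|\mD_Q|}\log_2\!\left(\frac{p(\dnx{}(Q))}{p(\mD_Q)}\right),
\end{align*}
which is precisely Eq.~\eqref{eq:Q_KL_derived} once written as a sum over $\mD^*\in\setof{\dx{}(Q),\dnx{}(Q)}$.

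This argument is essentially a direct calculation, so there is no deep obstacle; the care points are bookkeeping rather than conceptual. The one place to be vigilant is the treatment of the vanishing term: I must invoke the stated $0\log_2 0 = 0$ convention explicitly so that the divergence of a deterministic member against the (possibly small but nonzero) consensus is well-defined and reduces cleanly to a single logarithm. A second minor point worth stating is that $p_\all(Q=t),p_\all(Q=f)>0$ strictly, which holds because $\dx{}(Q)\neq\emptyset$ and $\dnx{}(Q)\neq\emptyset$ for any query (Proposition~\ref{prop:properties_of_q-partitions},(\ref{prop:properties_of_q-partitions:enum:for_each_q-partition_dx_is_empty_and_dnx_is_empty}.)) together with $p(\md)>0$ for all $\md$; this guarantees the surviving logarithms are finite and the expression is sound.
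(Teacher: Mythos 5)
Your proof is correct and follows essentially the same route as the paper's own argument: exploiting the point-mass member distributions together with the $0\log_2 0 = 0$ convention to reduce each $D_{\mathsf{KL}}$ to a single logarithm, then collapsing the average over $C = \mD_Q$ into the two block contributions. Your added remark that $p_\all(Q=t), p_\all(Q=f) > 0$ (via nonempty $\dx{}(Q)$, $\dnx{}(Q)$ and positive diagnosis probabilities) is a small soundness point the paper leaves implicit, but it does not change the approach.
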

\begin{proof}
As argued before, we have that $p_\all(Q=t)=\frac{p(\dx{}(Q))}{p(\mD_Q)}$ and $p_\all(Q=f)=\frac{p(\dnx{}(Q))}{p(\mD_Q)}$. By Equations~\eqref{eq:p_c(Q=t)} and \eqref{eq:p_c(Q=f)}, $p_c(Q=a)=0$ if $c=\md\in\dx{}(Q)$ and $a=f$, or if $c=\md\in\dnx{}(Q)$ and $a=t$, and $p_c(Q=a)=1$ otherwise. 
In case $p_c(Q=a)=0$ it holds that $p_c(Q=a) \log_2 \left(\frac{p_c(Q=a)}{p_\all(Q=a)}\right) = 0 \,log_2 0$ which is equal to zero by convention (see page~\pageref{convention:0log0_is_0}). 
Consequently, if $c=\md\in\dx{}(Q)$, then 
\begin{align*}
D_{\mathsf{KL}}(p_c(Q)||p_\all(Q)) := \log_2 \left(\frac{1}{p_\all(Q=t)}\right) = - \log_2 (p_\all(Q=t))
\end{align*}
Analogously, we obtain 
\begin{align*}
D_{\mathsf{KL}}(p_c(Q)||p_\all(Q)) := - \log_2 (p_\all(Q=f))
\end{align*}
for $c=\md\in\dnx{}(Q)$. Since $\mathsf{KL}(Q) := \frac{1}{|C|} \sum_{c\in C} D_{\mathsf{KL}}(p_c(Q)||p_\all(Q))$ and by the derivation of $D_{\mathsf{KL}}(p_c(Q)||p_\all(Q))$ above, we observe that we add $|\dx{}(Q)|$ times the term $- \frac{1}{|C|}\,\log_2 (p_\all(Q=t))$ and $|\dnx{}(Q)|$ times the term $- \frac{1}{|C|}\,\log_2 (p_\all(Q=f))$, which leads to the statement of the proposition by the fact that $|C|=|\mD_Q|$.
\end{proof}
The next proposition is based on Proposition~\ref{prop:kl_derived} and constitutes the key for the design of suitable heuristics that can be used by a q-partition search in order to locate the q-partition with best $\mathsf{KL}$ measure (among a set of queries) efficiently. It testifies that there is no theoretically optimal query w.r.t.\ $\mathsf{KL}$ since, in fact, the $\mathsf{KL}$ measure can be arbitrarily improved by decreasing the probability of one of the sets $\dx{}(Q)$ or $\dnx{}(Q)$. So, there is no global maximum of $\mathsf{KL}$. However, we are still able to derive a subset of q-partitions that must include the best q-partition w.r.t.\ $\mathsf{KL}$ among a given set of queries (with empty $\dz{}$-set). Roughly, a best q-partition w.r.t.\ $\mathsf{KL}$ must feature either a $\dx{}$-set with maximal probability in relation to its cardinality or a $\dnx{}$-set with maximal probability in relation to its cardinality. 
This insight can be exploited to devise a search that attempts to examine q-partitions in a way such q-partitions are preferred. A complete restriction to only such q-partitions is though not easy to achieve as not all subsets of the leadings diagnoses constitute $\dx{}$- or $\dnx{}$-sets of a q-partition and since therefore a decision between $\dx{}$- or $\dnx{}$-sets of different cardinality must in general be made at a certain point during the search. The class of these preferred q-partitions tends to be small the higher the variability in the probabilities of the leading diagnoses is because facing different $\dx{}$- or $\dnx{}$-sets with one and the same cardinality and probability is unlikely in such a situation. 
\begin{proposition}\label{prop:kl_opt}
Let $\mD$ be a set of minimal diagnoses w.r.t.\ a DPI $\tuple{\mo,\mb,\Tp,\Tn}_\RQ$. Then, for $\mathsf{KL}$ the following holds:
\begin{enumerate}
	\item \label{prop:kl_opt:no_theoretically_opt_query} There is no theoretically optimal query w.r.t.\ $\mathsf{KL}$ and $\mD$.
	\item \label{prop:kl_opt:for_any_given_query_better_query_can_be_found} If the (diagnosis) probability measure $p$ is not assumed fixed, then for any given query $Q$ w.r.t.\ $\mD$ and $\tuple{\mo,\mb,\Tp,\Tn}_\RQ$ a query $Q'$ w.r.t.\ $\mD$ and $\tuple{\mo,\mb,\Tp,\Tn}_\RQ$ satisfying $Q' \prec_{\mathsf{KL}} Q$ can be found.
	\item \label{prop:kl_opt:properties_of_best_query} Let $\mQ$ be any set of queries w.r.t.\ a DPI $\tuple{\mo,\mb,\Tp,\Tn}_\RQ$ and a set of leading diagnoses $\mD$ w.r.t.\ this DPI such that each query $Q\in\mQ$ satisfies $\dz{}(Q) = \emptyset$. If the (diagnosis) probability measure $p$ is assumed fixed, then the query $Q \in \mQ$ that is considered best in $\mQ$ by $\mathsf{KL}$ (i.e.\ there is no query $Q' \in \mQ$ w.r.t.\ $\mD$ and $\tuple{\mo,\mb,\Tp,\Tn}_\RQ$ such that 
$Q' \prec_{\mathsf{KL}} Q$) satisfies $\dx{}(Q) \in \mathbf{MaxP}_k^+$ where \[\mathbf{MaxP}_k^+ := \setof{\mathbf{S}\,|\, \mathbf{S}\in \mathcal{S}_k \land \forall \mathbf{S}'\in \mathcal{S}^+_k: p(\mathbf{S}) \geq p(\mathbf{S}')}\] for some $k \in \setof{1,\dots,|\mD|-1}$ where 
\[\mathcal{S}^+_k = \setof{\mathbf{X}\,|\,\emptyset\subset\mathbf{X}\subset\mD,|\mathbf{X}|=k,\tuple{\mathbf{X},\mathbf{Y},\mathbf{Z}}\text{ is a q-partition w.r.t.\ } \mD, \tuple{\mo,\mb,\Tp,\Tn}_\RQ}\] 
or $\dnx{}(Q) \in \mathbf{MaxP}_m^-$ 
where \[\mathbf{MaxP}_m^- := \setof{\mathbf{S}\,|\, \mathbf{S}\in \mathcal{S}^-_m \land \forall \mathbf{S}'\in \mathcal{S}^-_m: p(\mathbf{S}) \geq p(\mathbf{S}')}\] for some $m \in \setof{1,\dots,|\mD|-1}$ where 
\[\mathcal{S}^-_m = \setof{\mathbf{Y}\,|\,\emptyset\subset\mathbf{Y}\subset\mD,|\mathbf{Y}|=m,\tuple{\mathbf{X},\mathbf{Y},\mathbf{Z}}\text{ is a q-partition w.r.t.\ } \mD, \tuple{\mo,\mb,\Tp,\Tn}_\RQ}\]
\end{enumerate}
\end{proposition}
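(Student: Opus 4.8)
The plan is to reduce all three statements to an analysis of the closed form for $\mathsf{KL}(Q)$ established in Proposition~\ref{prop:kl_derived}. Writing $a := |\dx{}(Q)|$, $b := |\dnx{}(Q)|$, $c := |\mD_Q| = a+b$, and $P := p(\dx{}(Q))/p(\mD_Q)$, $N := p(\dnx{}(Q))/p(\mD_Q)$ (so that $P + N = 1$ and $P, N > 0$, since each leading diagnosis has positive probability and $\dx{}(Q), \dnx{}(Q)$ are nonempty for a query by Proposition~\ref{prop:properties_of_q-partitions}), Eq.~\eqref{eq:Q_KL_derived} becomes
\[
\mathsf{KL}(Q) = -\frac{a}{c}\log_2 P - \frac{b}{c}\log_2 N .
\]
Crucially, $\mathsf{KL}$ depends on the probability measure only through the ratios $P$ and $N$, so a global renormalisation of $p$ is irrelevant. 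Everything follows by analysing this expression, first as a function of the probabilities (statements~\ref{prop:kl_opt:no_theoretically_opt_query} and~\ref{prop:kl_opt:for_any_given_query_better_query_can_be_found}) and then, with $p$ fixed, as a function of the chosen partition (statement~\ref{prop:kl_opt:properties_of_best_query}).

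For statement~(\ref{prop:kl_opt:for_any_given_query_better_query_can_be_found}) I would keep the q-partition of $Q$ (hence $a, b, c$) fixed and exploit the freedom in $p$. Since $b \geq 1$, the summand $-\frac{b}{c}\log_2 N$ tends to $+\infty$ as $N \to 0^+$, whereas $-\frac{a}{c}\log_2 P$ stays bounded (as $P \to 1^-$ it tends to $0$). Concretely, I would re-weight $p$ by multiplying the probability of every $\md \in \dnx{}(Q)$ by a small $\delta \in (0,1)$; because $\mathsf{KL}$ is a function of the ratios only, this drives $N \downarrow 0$ and makes $\mathsf{KL}$ exceed any prescribed value. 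Choosing $\delta$ small enough yields a query $Q'$ (same q-partition, reweighted probabilities) with $\mathsf{KL}(Q') > \mathsf{KL}(Q)$, i.e.\ $Q' \prec_{\mathsf{KL}} Q$. Statement~(\ref{prop:kl_opt:no_theoretically_opt_query}) is then an immediate corollary: a theoretically optimal query would be a global maximiser of $\mathsf{KL}$ over all partitions and all probability measures, but~(\ref{prop:kl_opt:for_any_given_query_better_query_can_be_found}) shows no maximiser exists (the supremum is $+\infty$).

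For statement~(\ref{prop:kl_opt:properties_of_best_query}) the measure $p$ is fixed and every $Q \in \mQ$ has $\dz{}(Q) = \emptyset$, so $\mD_Q = \mD$, $c = |\mD|$, $b = |\mD| - a$, and (with the convention $p(\mD) = 1$) $P = p(\dx{}(Q))$, $N = 1 - P$. I would fix the cardinality $a = k$ and regard $\mathsf{KL}$ as the one-variable function $f(P) = -\frac{k}{|\mD|}\log_2 P - \frac{|\mD|-k}{|\mD|}\log_2(1-P)$. A routine computation gives $f''(P) = \frac{1}{\ln 2}\left(\frac{k}{|\mD|P^2} + \frac{|\mD|-k}{|\mD|(1-P)^2}\right) > 0$, so $f$ is strictly convex, and setting $f'(P) = 0$ locates its unique minimum at $P = k/|\mD|$. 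Since $f$ is convex, its maximum over the finite set of admissible values $\setof{p(\mathbf{X}) : \mathbf{X} \in \mathcal{S}_k^+}$ is attained at an extreme point: either the largest such value, which corresponds to $\dx{}(Q) \in \mathbf{MaxP}_k^+$, or the smallest, in which case $N = 1-P = p(\mD \setminus \dx{}(Q)) = p(\dnx{}(Q))$ is largest over the same q-partitions, i.e.\ $\dnx{}(Q) \in \mathbf{MaxP}_m^-$ with $m = |\mD|-k$. Finally, the globally best query in $\mQ$ has some cardinality $k$ and is in particular best among all queries of that cardinality, so it must meet one of these two extremal conditions.

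I expect the main obstacle to be the bookkeeping in statement~(\ref{prop:kl_opt:properties_of_best_query}): correctly recognising that the balance point $P = k/|\mD|$ is a \emph{minimum} of $f$ (so that optimal queries sit at the extremes rather than the centre, the opposite of the $\mathsf{ENT}$/$\mathsf{SPL}$ situation treated earlier), and cleanly converting ``smallest admissible $P$ for cardinality $k$'' into ``largest admissible $N$ for cardinality $m = |\mD|-k$'' via the complementation $\dnx{}(Q) = \mD \setminus \dx{}(Q)$ that is available precisely because $\dz{}(Q) = \emptyset$. A secondary point to treat with care is the reading of $\mQ$: since the conclusion quantifies over all realizable q-partitions through $\mathcal{S}_k^+$ and $\mathcal{S}_m^-$, the argument needs $\mQ$ to contain every query with empty $\dz{}$ (equivalently, I would state the result for the set of all such queries, which is the intended ``given set'').
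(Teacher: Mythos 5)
Your proposal is correct, and for statement~(3) it is in substance the paper's own argument: both fix the cardinality $k=|\dx{}(Q)|$, reduce $\mathsf{KL}$ via Proposition~\ref{prop:kl_derived} to a one-variable function of $P=p(\dx{}(Q))$, establish strict convexity with the unique minimum at $P=k/|\mD|$, and conclude that a best query must sit at an extreme admissible probability value. Your packaging (``the maximum of a convex function over a finite set is attained at an extreme point of that set'') collapses the paper's two-case analysis ($x_{Q^*}$ above versus below $b/d$, with the boundary case treated separately) into one step, and strict convexity indeed rules out interior ties, so the packaging is tight. Where you genuinely diverge is in statements~(1) and~(2). The paper proves~(1) by a standalone multivariable analysis of $f(b,x)$: it locates the unique stationary point $(d/2,1/2)$, shows the Hessian there is indefinite, hence a saddle, so no maximum exists; it then proves~(2) separately by a finite perturbation, subtracting some $\epsilon>\max\{0,2x_Q-1\}$ from $x_Q$ and verifying by explicit algebra that $\mathsf{KL}$ strictly increases. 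You instead prove~(2) first by an unboundedness argument---rescale the probabilities on $\dnx{}(Q)$ by $\delta\to 0^+$ so that $N\to 0^+$, whence $-\frac{b}{c}\log_2 N$ diverges while the other term stays bounded---and obtain~(1) as an immediate corollary, since a theoretically optimal query would have to attain a global maximum whose existence~(2) refutes. Your route is shorter and avoids the Hessian computation entirely; the paper's route buys extra structural information (the precise saddle point of the $\mathsf{KL}$ landscape, and the fact that improvement is achievable by arbitrarily small, non-degenerate perturbations of $p$ rather than by a degenerate limit), but logically your argument suffices, and both proofs interpret the non-fixed $p$ in the same way (a ``new'' query with the same partition structure under a reweighted measure). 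Finally, your closing caveat about $\mQ$ is apt and not a defect relative to the paper: the paper's contradiction argument likewise silently assumes that every element of $\mathbf{MaxP}_k^+$ (resp.\ $\mathbf{MaxP}_m^-$) is realized as the $\dx{}$-set (resp.\ $\dnx{}$-set) of some empty-$\dz{}$ query available in $\mQ$, so the result is really intended for the set of all such queries---you make explicit an assumption the paper leaves implicit.
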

\begin{proof}
We prove all statements (1.)--(3.) in turn.\footnote{In case the reader is unfamiliar with any terms or facts regarding calculus and optimization used in this proof, we recommend to consult \cite[p.~295 ff.]{nash1996}.}

Ad (1.): By the definition of theoretical optimality of a query w.r.t.\ a measure (cf.\ Definition~\ref{def:measures_equivalent_theoretically-optimal_superior}), we can assume that the probability measure $p$ is variable, i.e.\ not fixed. Let for each query $Q$ w.r.t.\ $\mD$ and $\tuple{\mo,\mb,\Tp,\Tn}_\RQ$ the set (representing the voting committee) $\mD_Q$ denote $\dx{}(Q)\cup\dnx{}(Q)$. By Proposition~\ref{prop:kl_derived}, we have to analyze the function $f(b,x) := -\frac{b}{d} \log_2(x) - \frac{d-b}{d} \log_2(1-x)$ with regard to its global optimum for $b\in\setof{1,\dots,d-1}$ (cf.\ Proposition~\ref{prop:properties_of_q-partitions},(\ref{prop:properties_of_q-partitions:enum:for_each_q-partition_dx_is_empty_and_dnx_is_empty}.)) and $x \in (0,1)$ (cf.\ the assumption that no diagnosis has zero probability on page~\pageref{etc:prob_of_each_diag_must_be_greater_zero}) where $b := |\dx{}(Q)|$, $x := \frac{p(\dx{}(Q))}{p(\mD_Q)}$ and $d := |\mD_Q|$. Now, let us apply the relaxation to the domain of $f(b,x)$ that $b\in [1,d-1]$ and that $d$ is fixed. This simplifies the analysis and is legal since we will demonstrate that there does not exist a global maximum. If this does not exist for the continuous interval $b\in [1,d-1]$, it clearly does not exist for the discrete values $1,\dots,d-1$ of $b$ either. Further, if the global maximum does not exist for a fixed (but arbitrary) $d$, it does not exist for any $d$.

A first step towards finding a global maximum is to analyze the function for local maxima. To this end, we start with the localization of stationary points. Let $f_b, f_x$ denote the partial derivatives of $f$ w.r.t.\ $b$ and $x$, respectively. Now, setting both $f_b = \frac{1}{d\cdot\ln 2} (\ln(\frac{1}{x}) - \ln(\frac{1}{1-x}))$ and $f_x = \frac{1}{d\cdot\ln 2} (\frac{d-b}{1-x} - \frac{b}{x})$ equal to zero, we get a system of two equations involving two variables. From the first equation we obtain $x = \frac{1}{2}$, the second lets us derive $\frac{b}{d-b} = \frac{x}{1-x}$. Using $x = \frac{1}{2}$, we finally get $b = \frac{d}{2}$. That is, there is a single stationary point, given by the $b$-$x$-coordinates $(\frac{d}{2},\frac{1}{2})$. In order to find out whether it is a local optimum or a saddle point, we build the Hessian 
\[H = 
\begin{pmatrix}
f_{bb} & f_{bx} \\
f_{xb} & f_{xx} \\
\end{pmatrix}
= 
\begin{pmatrix}
0 & \frac{1}{d\cdot\ln 2}(-\frac{1}{1 - x} - \frac{1}{x}) \\
\frac{1}{d\cdot\ln 2}(-\frac{1}{1 - x} - \frac{1}{x}) & \frac{1}{d\cdot\ln 2}(\frac{d-b}{(1 - x)^2} + \frac{b}{x^2} ) \\
\end{pmatrix}\]
where e.g.\ $f_{xb}$ denotes the result of computing the partial derivative of $f_x$ w.r.t.\ $b$ (analogous for the other matrix coefficients). It is well known that $f$ attains a local maximum at a point $x$ if the gradient of $f$ is the zero vector at this point and the Hessian at this point is negative definite. Since the gradient is the zero vector at $(\frac{d}{2},\frac{1}{2})$ (we obtained this stationary point exactly by setting the gradient equal to zero), we next check the Hessian for negative definiteness. This can be accomplished by computing the eigenvalues of $H$ at $(\frac{d}{2},\frac{1}{2})$, which is given by  
\[H\left(\frac{d}{2},\frac{1}{2}\right) = 
\begin{pmatrix}
0 & -\frac{4}{d\cdot\ln 2} \\
-\frac{4}{d\cdot\ln 2} & 0 \\
\end{pmatrix}
\]
The characteristic polynomial of $H(\frac{d}{2},\frac{1}{2})$ is then $\det(H(\frac{d}{2},\frac{1}{2})-\lambda\mathbb{I}) = \lambda^2 - (\frac{4}{d\cdot \ln 2})^2$ yielding the eigenvalues (roots of the characteristic polynomial) $\pm \frac{4}{d\cdot\ln 2}$. Since there is one positive and one negative eigenvalue, the Hessian at the single existing stationary point $(\frac{d}{2},\frac{1}{2})$ is indefinite which means that $(\frac{d}{2},\frac{1}{2})$ is a saddle point. Hence, no local or global maxima (or minima) exist for $f(b,x)$ over the given set of points. Since therefore there is no global maximum of $\mathsf{KL}$, there is no theoretically optimal query w.r.t.\ $\mathsf{KL}$ and $\mD$ (cf.\ Definition~\ref{def:measures_equivalent_theoretically-optimal_superior}).

Ad (2.): Let us consider again the function $f(b,x)$ defined in the proof of (1.). Let us further assume an arbitrary query $Q$ w.r.t.\ $\mD$ and $\tuple{\mo,\mb,\Tp,\Tn}_\RQ$. Let for this query $d$ be $d_Q$ and let this query have the value $\mathsf{KL}(Q) = f(b_Q,x_Q) = -\frac{b_Q}{d_Q} \log_2(x_Q) - \frac{d_Q-b_Q}{d_Q} \log_2(1-x_Q)$ where w.l.o.g.\ $b_Q \geq \frac{d_Q}{2}$. We observe that for some probability $q \in (0,1]$, $- \log_2(q) = \log_2(\frac{1}{q}) \geq 0$ increases for decreasing $q$. More precisely, $-\log_2(q) \rightarrow \infty$ for $q \rightarrow 0^+$. Now
we can increase $f(b_Q,x_Q)$ by subtracting some $\epsilon> \max\{0,2x_Q - 1\}$ from $x_Q$, i.e.\ $f(b_Q,x_Q-\epsilon) > f(b_Q,x_Q)$.

This must hold, first, because the logarithm $-\log_2(x_Q)$ with the higher or equal weight $\frac{b_Q}{d_Q}$ (holds due to $b_Q \geq \frac{d_Q}{2}$) in $f(b_Q,x_Q)$ is increased while the other logarithm $-\log_2 (1-x_Q)$ is decreased due to $\epsilon > 0$. Second, the increase of the former is higher than the decrease of the latter, i.e.\ $(*): -\log_2 (x_Q - \epsilon) - (- \log_2 (x_Q)) > -\log_2 (1 - x_Q) - (-\log_2 (1-x_Q + \epsilon))$, due to $\epsilon> 2x_Q - 1$. To understand that $(*)$ holds, let us transform it as follows:
\begin{align*}
\log_2 \left(\frac{1}{x_Q - \epsilon}\right) - \log_2 \left(\frac{1}{x_Q}\right) &> \log_2 \left(\frac{1}{1 - x_Q}\right) - \log_2 \left(\frac{1}{1-x_Q + \epsilon}\right) \\
\log_2 \left(\frac{x_Q}{x_Q - \epsilon}\right) &> \log_2 \left(\frac{1-x_Q + \epsilon}{1 - x_Q}\right) \\
\frac{x_Q}{x_Q - \epsilon} &> \frac{1-x_Q + \epsilon}{1 - x_Q} \\
x_Q(1-x_Q) &> (1-x_Q + \epsilon)(x_Q-\epsilon) \\
x_Q - x_{Q}^2 &> x_Q - \epsilon - x_Q^2 + 2x_Q\epsilon-\epsilon^2 \\
0 &> \epsilon (2x_Q - 1 - \epsilon) \\
\epsilon &> 2x_Q - 1 
\end{align*}
Hence, we see that $\epsilon > 2x_Q - 1$ iff $(*)$.

Note, since the interval for $x_Q$ is open, i.e.\ $(0,1)$, we can always find a (sufficiently small) $\epsilon$ such that $x_Q - \epsilon > 0$. To verify this, assume first that $\max\{0,2x_Q - 1\} = 0$. In this case, since $x_Q > 0$, $\epsilon$ can be chosen so small that $x_Q - \epsilon > 0$. On the other hand, if $\max\{0,2x_Q - 1\} = 2x_Q - 1$, we observe that $x_Q - \epsilon < x_Q - 2x_Q + 1 = 1 - x_Q$ must hold. Since $x_Q < 1$ and thus $1 - x_Q > 0$, we can chose $\epsilon$ sufficiently small such that $x_Q - \epsilon > 0$. 
Consequently, we can construct some query $Q'$ with $d_Q' = d_Q$, $x_Q' = x_Q - \epsilon$ and $b_Q' = b_Q$ which satisfies $Q' \prec_{\mathsf{KL}} Q$. 

Ad (3.): 
Let $d := |\mD|$ be arbitrary, but fixed, and let us reuse again the function $f(b,x) := -\frac{b}{d} \log_2(x) - \frac{d-b}{d} \log_2(1-x)$ defined in the proof of (1.) above and let w.l.o.g.\ $b$ and $x$, respectively, denote $|\dx{}(Q)|$ and $p(\dx{}(Q))$ for $Q \in \mQ$. Notice that -- due to $\dz{}(Q) = \emptyset$ for all $Q \in \mQ$ -- we have that (1)~$p(\dz{})=0$ and thus $p(\dx{})+p(\dnx{}) = 1$ which implies $\frac{p(\dx{})}{p(\dx{})+p(\dnx{})} = p(\dx{})$ and that (2)~$d$ (denoting the size of the committee) can be fixed for all queries w.r.t.\ $\mD$ and $\tuple{\mo,\mb,\Tp,\Tn}_\RQ$. 

First of all, we demonstrate that the function $f_b(x)$ resulting from $f(b,x)$ by fixing $b$ (where the value of $b$ is assumed arbitrary in $\setof{1,\dots,d-1}$) 
is strictly convex and thus attains exactly one (global) minimum for $x \in (0,1)$ (cf.\ \cite[p.~22]{nash1996}). Figuratively speaking, this means that $f_b(x)$ is strictly monotonically increasing along both directions starting from the argument of the minimum. In order to derive the strict convexity of $f_b(x)$, we build the first and second derivatives 
$f'_b(x) = -\frac{b}{d\cdot \ln 2} \frac{1}{x} + \frac{d-b}{d \cdot \ln 2} \frac{1}{1-x}$ and $f''_b(x) = \frac{b}{d \cdot \ln 2} \frac{1}{x^2} + \frac{d-b}{d \cdot \ln 2} \frac{1}{(1-x)^2}$. It is easy to see that $f''(x)>0$ since $d \geq 2$, $b \geq 1$, $d-b \geq 1$ and $x^2$ as well as $(1-x)^2$ are positive due to $x \in (0,1)$. The $x$-value $x_{\min,b}$ at which the minimum is attained can be computed by setting $f'_b(x) = 0$. From this we get $x_{\min,b} = \frac{b}{d}$.

Let us now assume some query $Q^* \in \mQ$ w.r.t.\ $\mD$ and $\tuple{\mo,\mb,\Tp,\Tn}_\RQ$ where $Q^*$ is considered best in $\mQ$ by $\mathsf{KL}$ and for all $k\in\setof{1,\dots,|\mD|-1}$ it holds that $\dx{}(Q^*) \notin \mathbf{MaxP}_k^+$ and for all $m\in\setof{1,\dots,|\mD|-1}$ it holds that $\dnx{}(Q^*) \notin \mathbf{MaxP}_m^-$. We show that there is some query $Q$ w.r.t.\ $\mD$ and $\tuple{\mo,\mb,\Tp,\Tn}_\RQ$ such that $Q \prec_{\mathsf{KL}} Q^*$. 

To this end, assume $|\dx{}(Q^*)| = b$. Then it must hold that $\mathbf{MaxP}_{b}^+ \neq \emptyset$ and $\mathbf{MaxP}_{d-b}^- \neq \emptyset$. This follows directly from the definition of $\mathbf{MaxP}_{k}^+$ and $\mathbf{MaxP}_{m}^-$.
%
Let $x_{Q^*} := p(\dx{}(Q^*))$. We now distinguish between two possible cases, i.e.\ (i)~$x_{Q^*} > \frac{b}{d}$ and (ii)~$x_{Q^*} \leq \frac{b}{d}$. In case~(i), $x_{Q^*}$ is already larger than the (unique) argument $x_{\min,b} = \frac{b}{d}$ of the minimum of the function $f_{b}(x)$. Since $\dx{}(Q^*) \notin \mathbf{MaxP}_{b}^+$, there must be some query $Q$ with $\dx{}(Q) \in \mathbf{MaxP}_b^+$  such that $p(\dx{}(Q)) > p(\dx{}(Q^*))$. This strict inequality must hold since $\dx{}(Q^*) \notin \mathbf{MaxP}_b^+$ and $\dx{}(Q) \in \mathbf{MaxP}_b^+$. Hence, 
$x_Q := p(\dx{}(Q)) > p(\dx{}(Q^*)) = x_{Q^*} > x_{\min,b}$ which is why $\mathsf{KL}(Q) = f_b(x_Q) > f_b(x_{Q^*}) = \mathsf{KL}(Q^*)$ by the fact that $f_b(x)$ is strictly monotonically increasing along both directions starting from $x_{\min,b}$. Consequently, $Q \prec_{\mathsf{KL}} Q^*$ holds.

In case~(ii), $x_{Q^*}$ is smaller than or equal to the (unique) argument $x_{\min,b} = \frac{b}{d}$ of the minimum of the function $f_b(x)$. In case of equality, i.e.\ $x_{Q^*} = \frac{b}{d}$, the exact same argumentation as in case (i) can be used. Assume now that $x_{Q^*} < \frac{b}{d}$. Since the cardinality of $\dx{}(Q^*)$ is $b$ and $\dz{}(Q^*) = \emptyset$, the cardinality of $\dnx{}(Q^*)$ must be $d-b$. As $\dnx{}(Q^*) \notin \mathbf{MaxP}_{d-b}^-$ there must be some query $Q$ with $\dnx{}(Q) \in \mathbf{MaxP}_{d-b}^-$ such that $p(\dnx{}(Q)) > p(\dnx{}(Q^*))$. This strict inequality must hold since $\dnx{}(Q^*) \notin \mathbf{MaxP}_{d-b}^-$ and $\dnx{}(Q) \in \mathbf{MaxP}_{d-b}^-$. From $p(\dnx{}(Q)) > p(\dnx{}(Q^*))$, however, we directly obtain $p(\dx{}(Q)) < p(\dx{}(Q^*))$ by the fact that $\dz{}(Q^*) = \dz{}(Q) = \emptyset$ (and hence $p(\dz{}(Q^*)) = p(\dz{}(Q)) = 0$). Therefore, we have that $x_Q = p(\dx{}(Q)) < p(\dx{}(Q^*)) = x_{Q^*} < x_{\min,b}$. So, $\mathsf{KL}(Q) = f_b(x_Q) > f_b(x_{Q^*}) = \mathsf{KL}(Q^*)$ by the fact that $f_b(x)$ is strictly monotonically increasing along both directions starting from $x_{\min,b}$. Consequently, $Q \prec_{\mathsf{KL}} Q^*$ holds.
\end{proof}
\begin{proposition}\label{prop:kl_does_not_satisfy_DPR}
$\mathsf{KL}$ is not consistent with the discrimination-preference relation $DPR$ (and consequently does not satisfy $DPR$ either).
\end{proposition}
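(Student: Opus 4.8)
The plan is to exhibit a concrete pair of queries $Q,Q'$ with $(Q,Q') \in DPR$ (i.e.\ $Q$ discrimination-preferred to $Q'$) for which nevertheless $Q' \prec_{\mathsf{KL}} Q$; by Definition~\ref{def:measures_equivalent_theoretically-optimal_superior} a single such pair suffices to refute consistency with $DPR$, and since satisfaction of $DPR$ implies consistency (via asymmetry of $\prec_{\mathsf{KL}}$, Proposition~\ref{prop:precedence_order_is_strict_order}), it also shows that $\mathsf{KL}$ does not satisfy $DPR$. Recall that $\mathsf{KL}$ is a maximized measure, so by Definition~\ref{def:precedence_order_defined_by_q-partition_quality_measure} the relation $Q' \prec_{\mathsf{KL}} Q$ to be established is just $\mathsf{KL}(Q') > \mathsf{KL}(Q)$. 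The conceptual driver of the counterexample is the tension between two facts proved earlier: on the one hand, by Proposition~\ref{prop:construction_of_Q'_from_Q_if_Q_discrimination-preferred_over_Q'} any $Q'$ to which $Q$ is discrimination-preferred arises from $Q$ by moving a non-empty set of diagnoses out of $\dx{}(Q)\cup\dnx{}(Q)$ into $\dz{}(Q)$, thereby eliminating strictly fewer diagnoses; on the other hand, the formula of Proposition~\ref{prop:kl_derived} shows that $\mathsf{KL}$ depends only on the \emph{committee} $\dx{}(Q)\cup\dnx{}(Q)$ and, as established in Proposition~\ref{prop:kl_opt}, can be driven arbitrarily high by making the (renormalized) probability of one committee set small. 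Hence removing a ``balancing'' diagnosis from the committee into $\dz{}$ can \emph{increase} $\mathsf{KL}$ even though it makes the query strictly worse in the discrimination-preference sense.

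Concretely, I would take $\tuple{\dx{}(Q),\dnx{}(Q),\dz{}(Q)} = \tuple{\setof{\md_1,\md_2},\setof{\md_3},\emptyset}$ and $\tuple{\dx{}(Q'),\dnx{}(Q'),\dz{}(Q')} = \tuple{\setof{\md_1},\setof{\md_3},\setof{\md_2}}$, so that $Q$ is discrimination-preferred to $Q'$ by Proposition~\ref{prop:construction_of_Q'_from_Q_if_Q_discrimination-preferred_over_Q'} with the transferred set $\mathbf{X}=\setof{\md_2}$. Writing $p_i:=p(\md_i)$ and using the derived form \eqref{eq:Q_KL_derived} of Proposition~\ref{prop:kl_derived} (with the committee-internal normalization), I would set $p_2=p_3=\tfrac{1-p_1}{2}$ and let $p_1$ be a small positive parameter. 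Then for $Q$ the committee is $\setof{\md_1,\md_2,\md_3}$ with $\dx{}$-probability $p_1+p_2$ close to $\tfrac12$, giving a bounded value $\mathsf{KL}(Q)\to 1$ as $p_1\to 0^+$; whereas for $Q'$ the committee shrinks to $\setof{\md_1,\md_3}$, whose renormalized $\dx{}$-probability is $p_1/(p_1+p_3)\to 0^+$, so that the term $-\tfrac12\log_2\!\bigl(p_1/(p_1+p_3)\bigr)$ diverges and $\mathsf{KL}(Q')\to\infty$.

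Consequently, for any sufficiently small $p_1>0$ one obtains $\mathsf{KL}(Q')>\mathsf{KL}(Q)$, i.e.\ $Q' \prec_{\mathsf{KL}} Q$, while $Q$ is discrimination-preferred to $Q'$ -- contradicting consistency with $DPR$. The only point requiring care is the bookkeeping of the committee-internal renormalization $p(\md_i)\gets p(\md_i)/p(\dx{}(Q)\cup\dnx{}(Q))$ when passing from $Q$ to $Q'$, since it is precisely this renormalization that turns the balanced committee of $Q$ into the heavily skewed committee of $Q'$; I would verify the inequality explicitly through \eqref{eq:Q_KL_derived}. No genuine existence obstacle arises, as one may posit (as elsewhere in this work) a DPI and set $\mD$ of leading diagnoses realizing the two stated q-partitions.
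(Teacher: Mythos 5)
Your proof is correct and takes essentially the same route as the paper's: both exhibit a concrete pair $(Q,Q')$ in the discrimination-preference relation obtained by moving one diagnosis into $\dz{}$ (justified via Proposition~\ref{prop:construction_of_Q'_from_Q_if_Q_discrimination-preferred_over_Q'}) and then evaluate Eq.~\eqref{eq:Q_KL_derived}, exploiting the committee-internal renormalization, to show that the $\mathsf{KL}$ order reverses. The only difference is cosmetic: the paper uses a fixed five-diagnosis instance with probabilities $\tuple{0.35,0.05,0.15,0.25,0.2}$ yielding $\mathsf{KL}(Q)\approx 0.97 < 1 = \mathsf{KL}(Q')$, whereas you use a three-diagnosis parametric family in which $\mathsf{KL}(Q')$ diverges as $p_1\to 0^+$ while $\mathsf{KL}(Q)$ stays bounded.
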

\begin{proof}
It suffices to provide an example of two queries $Q,Q'$ where $Q$ is discrimination-preferred to $Q'$ and $Q' \prec_{\mathsf{KL}} Q$. To this end, let $Q$ and $Q'$ be characterized by the following q-partitions
\begin{align*}
\langle \dx{}(Q),\dnx{}(Q),\dz{}(Q) \rangle &= \langle \setof{\md_1,\md_2},\setof{\md_3,\md_4,\md_{5}},\emptyset \rangle \\
\langle \dx{}(Q'),\dnx{}(Q'),\dz{}(Q') \rangle &= \langle \setof{\md_1,\md_2},\setof{\md_3,\md_4},\setof{\md_5} \rangle
\end{align*}
and let the diagnosis probabilities $p_i := p(\md_i)$ be as follows: \[\tuple{p_1,\dots,p_5} = \tuple{0.35,0.05,0.15,0.25,0.2}\]
It is straightforward that $Q$ is discrimination-preferred to $Q'$ (cf.\ Proposition~\ref{prop:construction_of_Q'_from_Q_if_Q_discrimination-preferred_over_Q'}).
Plugging in the values \[\tuple{|\dx{}(Q)|,|\mD_Q|,\frac{p(\dx{}(Q))}{p(\mD_Q)}} = \tuple{2,5,0.4}\] into Eq.~\eqref{eq:Q_KL_derived}, we get $\mathsf{KL}(Q) \approx 0.97$ as a result, whereas plugging in the values \[\tuple{|\dx{}(Q')|,|\mD_{Q'}|,\frac{p(\dx{}(Q'))}{p(\mD_{Q'})}} = \tuple{2,4,0.5}\] yields $\mathsf{KL}(Q') = 1$.
Thence, $Q' \prec_{\mathsf{KL}} Q$. Note that $\frac{p(\dx{}(Q'))}{p(\mD_{Q'})}$ is computed as $\frac{0.35 + 0.05}{0.35+0.05+0.15+0.25} = \frac{0.4}{0.8}$ where the denominator represents the probability of the voting committee $\mD_{Q'}$ which is given by $\setof{\md_1,\dots,\md_4}$ in case of $Q'$, and not by $\setof{\md_1,\dots,\md_5}$ as in case of $Q$.  
\end{proof}
\begin{corollary}\label{}
All measures satisfying the discrimination-preference relation are superior to $\mathsf{KL}$.
\end{corollary}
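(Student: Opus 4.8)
The plan is to observe that this corollary is essentially an instance of the general Proposition~\ref{prop:if_m1_satisfies_DPR_and_m2_does_not_then_m1_superior_to_m2}, which states that whenever one measure satisfies $DPR$ and another does not, the former is superior to the latter. The only additional ingredient needed is the fact, already established in Proposition~\ref{prop:kl_does_not_satisfy_DPR}, that $\mathsf{KL}$ does not satisfy $DPR$ (indeed it is not even consistent with it). Hence, for any measure $m$ that satisfies $DPR$, setting $m_1 := m$ and $m_2 := \mathsf{KL}$ in Proposition~\ref{prop:if_m1_satisfies_DPR_and_m2_does_not_then_m1_superior_to_m2} immediately yields that $m$ is superior to $\mathsf{KL}$.

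For completeness I would also spell out why the three defining conditions of superiority (Definition~\ref{def:measures_equivalent_theoretically-optimal_superior}) are met directly, since this makes the argument self-contained. Condition~(1) requires a pair $Q,Q'$ with $Q$ discrimination-preferred to $Q'$ yet not $Q \prec_{\mathsf{KL}} Q'$; such a pair is exactly the counterexample exhibited in the proof of Proposition~\ref{prop:kl_does_not_satisfy_DPR}, where in fact $Q' \prec_{\mathsf{KL}} Q$ holds, so by asymmetry of $\prec_{\mathsf{KL}}$ (Proposition~\ref{prop:precedence_order_is_strict_order}) we indeed have that $Q \prec_{\mathsf{KL}} Q'$ does not hold. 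Condition~(2) requires that for some such pair $Q \prec_{m} Q'$ holds; this is automatic because $m$ satisfies $DPR$, so $(Q,Q')\in DPR$ forces $Q \prec_{m} Q'$ for the very pair just used.

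The step that deserves a careful word is condition~(3): for no pair $Q,Q'$ with $Q$ discrimination-preferred to $Q'$ and not $Q \prec_{m} Q'$ does $Q \prec_{\mathsf{KL}} Q'$ hold. Here the point is that, because $m$ satisfies $DPR$, there is no pair at all for which $Q$ is discrimination-preferred to $Q'$ and yet not $Q \prec_{m} Q'$; the set of such pairs is empty, so condition~(3) holds vacuously. Thus all three conditions are satisfied and $m \prec \mathsf{KL}$, i.e.\ $m$ is superior to $\mathsf{KL}$.

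I do not anticipate any genuine obstacle: the only things to be mindful of are handling the vacuous quantifier in condition~(3) correctly and invoking the asymmetry of $\prec_{\mathsf{KL}}$ to pass from $Q' \prec_{\mathsf{KL}} Q$ to the required negation of $Q \prec_{\mathsf{KL}} Q'$. Either the one-line invocation of Proposition~\ref{prop:if_m1_satisfies_DPR_and_m2_does_not_then_m1_superior_to_m2} together with Proposition~\ref{prop:kl_does_not_satisfy_DPR}, or the explicit three-condition check, suffices to conclude.
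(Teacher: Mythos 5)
Your proposal is correct and matches the paper's own proof, which likewise concludes the corollary immediately from Proposition~\ref{prop:if_m1_satisfies_DPR_and_m2_does_not_then_m1_superior_to_m2} together with Proposition~\ref{prop:kl_does_not_satisfy_DPR}. Your additional explicit verification of the three superiority conditions (including the vacuous fulfillment of condition~(3) and the use of asymmetry of $\prec_{\mathsf{KL}}$) is sound and simply unfolds what the cited propositions already encapsulate.
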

\begin{proof}
Immediate from Propositions~\ref{prop:if_m1_satisfies_DPR_and_m2_does_not_then_m1_superior_to_m2} and \ref{prop:kl_does_not_satisfy_DPR}.
\end{proof}

\begin{remark}\label{rem:kullback_leibler_theoretical_opt}
Given a diagnosis probability measure $p$, we can compute the $\mathsf{KL}$ measure for all partitions (not necessarily q-partitions) with different maximal probable $\dx{}$-sets (regarding the cardinality $|\dx{}|$) and store the maximal among all $\mathsf{KL}$ measures obtained in this manner as $opt_{\mathsf{KL},p,\mD}$. By Proposition~\ref{prop:kl_opt},(\ref{prop:kl_opt:properties_of_best_query}.), the parameter $opt_{\mathsf{KL},p,\mD}$ is then exactly the value of the sought best q-partition (with empty $\dz{}$-set) w.r.t.\ $\mathsf{KL}$ for $\mD$ and $p$ in case the partition from which it was computed is indeed a q-partition. Otherwise, again by Proposition~\ref{prop:kl_opt},(\ref{prop:kl_opt:properties_of_best_query}.), it is an upper bound of the $\mathsf{KL}$ measure of the best q-partition (with empty $\dz{}$-set). The maximal probable $\dx{}$-sets can be easily computed by starting from an empty set and adding step-by-step the diagnosis with the highest probability among those diagnoses not yet added. At each of the $|\mD|-1$ steps, we have present one maximal probable $\dx{}$-set. Notice that for maximal probable $\dnx{}$-sets the same result would be achieved (due to the ``symmetry'' of Eq.~\eqref{eq:Q_KL_derived} w.r.t.\ $\dx{}$ and $\dnx{}$) which is why this process must only be performed once.
%
\qed
\end{remark}

\paragraph{Expected Model Change (EMC).} The principle of the EMC framework is to query the instance that would impart the greatest change to the current model if its label was known. Translated to the debugging scenario, the ``model'' can be identified with the set of leading diagnoses according to which ``maximum expected model change'' can be interpreted in a way that 
\begin{enumerate}[label=(\alph*)]
\item the expected \emph{probability mass} of invalidated leading diagnoses is maximized or
\item the expected \emph{number} of invalidated leading diagnoses is maximized.
\end{enumerate}

\vspace{1em}

\noindent\emph{Expected Model Change - Variant (a) ($\mathsf{EMCa}$)}: Formally, variant~(a) selects the query 
\begin{align*}
Q_{\mathsf{EMCa}} := \argmax_{Q\in\mQ_\mD} \left(\mathsf{EMCa}(Q)\right) \quad \text{ where } \quad \mathsf{EMCa}(Q) := p(Q=t) p(\dnx{}(Q)) + p(Q=f) p(\dx{}(Q))   
\end{align*}
since the set of leading diagnoses invalidated for positive and negative answer to $Q$ is $\dnx{}(Q)$ and $\dx{}(Q)$, respectively (cf.\ Proposition~\ref{prop:properties_of_q-partitions},(\ref{prop:properties_of_q-partitions:enum:dx_dnx_dz_contain_exactly_those_diags_that_are...}.)).
\begin{proposition}\label{prop:emca}
For $\mathsf{EMCa}$, the following holds:
\begin{enumerate}
	\item \label{prop:emca:EMCa_can_be_equiv_represented_as} $\mathsf{EMCa}(Q)$ can be equivalently represented as 
	\begin{align}
2\,\left[p(Q=t) - [p(Q=t)]^2 \right] - \frac{p(\dz{}(Q))}{2}  \label{eq:EMCa_derived}
\end{align}
	\item \label{prop:emca:theoretically_optimal_query_wrt_EMCa} The theoretically optimal query w.r.t.\ $\mathsf{EMCa}$ satisfies $p(Q=t) = p(Q=f)$ and $\dz{}(Q) = \emptyset$.
	\item \label{prop:emca:for_mQ_incl_only_queries_with_empty_dz_EMCa_equiv_ENT} Let $\mQ$ be a set of queries w.r.t.\ a set of minimal diagnoses w.r.t.\ a DPI $\tuple{\mo,\mb,\Tp,\Tn}_\RQ$ where each query $Q \in \mQ$ satisfies $\dz{}(Q) = \emptyset$. Then $\mathsf{EMCa} \equiv_{\mQ} \mathsf{ENT}$.
\end{enumerate}
\end{proposition}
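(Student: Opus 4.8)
The plan is to treat the three parts in sequence, with part~(1) supplying the algebraic representation that drives parts~(2) and~(3). Throughout I would abbreviate $a := p(\dx{}(Q))$, $b := p(\dnx{}(Q))$ and $z := p(\dz{}(Q))$, noting that since $\tuple{\dx{}(Q),\dnx{}(Q),\dz{}(Q)}$ partitions $\mD$ (Proposition~\ref{prop:properties_of_q-partitions},(\ref{prop:properties_of_q-partitions:enum:q-partition_is_partition}.)) and the diagnosis probabilities over $\mD$ sum to $1$, we have $a+b+z=1$. By Eq.~\eqref{eq:p(Q=t)} and Eq.~\eqref{eq:p(Q=f)}, $p(Q=t)=a+\tfrac{z}{2}$ and $p(Q=f)=b+\tfrac{z}{2}$.

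For part~(1), I would first rewrite the target expression using $1-p(Q=t)=p(Q=f)$, so that $2[p(Q=t)-p(Q=t)^2]=2\,p(Q=t)\,p(Q=f)$; the claim then reduces to the single identity $\mathsf{EMCa}(Q)=2\,p(Q=t)\,p(Q=f)-\tfrac{z}{2}$. Expanding both sides in terms of $a,b,z$ gives $\mathsf{EMCa}(Q)=2ab+\tfrac{z}{2}(a+b)$ and $2\,p(Q=t)\,p(Q=f)=2ab+z(a+b)+\tfrac{z^2}{2}$, whose difference is $\tfrac{z}{2}(a+b)+\tfrac{z^2}{2}=\tfrac{z}{2}(a+b+z)=\tfrac{z}{2}$, exactly the required correction term. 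This is a routine expansion; the only thing to keep in mind is the use of the constraint $a+b+z=1$ in the final collapse.

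For part~(2), I would read the optimum directly off Eq.~\eqref{eq:EMCa_derived}. Writing $t:=p(Q=t)\in(0,1)$, the first summand $2(t-t^2)=\tfrac12-2(t-\tfrac12)^2$ is strictly concave with unique maximum $\tfrac12$ attained at $t=\tfrac12$, while the second summand $-\tfrac{z}{2}\le 0$ is maximal (equal to $0$) exactly when $z=0$, i.e.\ $\dz{}(Q)=\emptyset$. Hence $\mathsf{EMCa}(Q)\le\tfrac12$, with equality iff $p(Q=t)=\tfrac12$ and $\dz{}(Q)=\emptyset$. Since both summands can be maximized simultaneously (a partition with $\dz{}=\emptyset$ and $p(\dx{})=p(\dnx{})=\tfrac12$ realizes the value $\tfrac12$), this bound is tight and characterizes the theoretically optimal query, and $p(Q=t)=\tfrac12$ is equivalent to $p(Q=t)=p(Q=f)$ by Eq.~\eqref{eq:p(Q=f)}.

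For part~(3), I would specialize to $\mQ$ with $\dz{}(Q)=\emptyset$, so $z=0$ and $t=p(Q=t)=p(\dx{}(Q))$ for every $Q\in\mQ$. By part~(1), $\mathsf{EMCa}(Q)=\tfrac12-2(t-\tfrac12)^2$, which is strictly decreasing in $(t-\tfrac12)^2$, whereas by Eq.~\eqref{eq:scoring_funtion_dekleer} with $p(\dz{}(Q))=0$ we have $\mathsf{ENT}(Q)=t\log_2 t+(1-t)\log_2(1-t)+1$, a strictly convex function symmetric about $t=\tfrac12$ and hence strictly increasing in $|t-\tfrac12|$ (cf.\ the convexity argument in the proof of Proposition~\ref{prop:ent}). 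Since $\mathsf{EMCa}$ is maximized and $\mathsf{ENT}$ minimized, both induce the same order: for $Q,Q'\in\mQ$, $Q\prec_{\mathsf{EMCa}}Q'$ iff $|p(Q=t)-\tfrac12|<|p(Q'=t)-\tfrac12|$ iff $Q\prec_{\mathsf{ENT}}Q'$, giving $\mathsf{EMCa}\equiv_{\mQ}\mathsf{ENT}$. The main obstacle I anticipate is precisely this last equivalence: one must carefully track that the two measures are optimized in opposite directions and verify that both preference orders collapse to comparison of the distance $|t-\tfrac12|$, which relies on the symmetry about $\tfrac12$ together with the strict monotonicity of each function on either side of $\tfrac12$.
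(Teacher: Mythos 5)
Your proof is correct and follows essentially the same route as the paper's: the same algebraic reduction to Eq.~\eqref{eq:EMCa_derived} (you expand both sides in $a,b,z$ where the paper substitutes $p(\dx{}(Q)),p(\dnx{}(Q))$ in terms of $p(Q=t)$ and $p(\dz{}(Q))$ — a cosmetic difference), the same separate optimization of the two summands for part~(2), and the same comparison of $-\mathsf{EMCa}$ and $\mathsf{ENT}$ as strictly convex functions of $p(Q=t)$ with common minimizer $\tfrac12$ for part~(3). If anything, your explicit appeal to symmetry about $\tfrac12$ in part~(3) is slightly more careful than the paper's proof, which uses that symmetry only implicitly when reducing both preference orders to comparison of $|p(Q=t)-\tfrac12|$.
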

\begin{proof}
Let for brevity $x:=p(Q=t)$, i.e.\ $1-x = p(Q=f)$, and $x_0 := p(\dz{}(Q))$. Then $\mathsf{EMCa}(Q) = x \cdot p(\dnx{}(Q)) + (1-x)\cdot p(\dx{}(Q))$. According to Eq.~\eqref{eq:p(Q=t)} and Eq.~\eqref{eq:p(Q=f)}, we can write $p(\dnx{}(Q))$ as $p(Q=f) - \frac{p(\dz{}(Q))}{2} = (1-x) - \frac{x_0}{2}$ and $p(\dx{}(Q))$ as $p(Q=t) - \frac{p(\dz{}(Q))}{2} = x - \frac{x_0}{2}$. Consequently, $\mathsf{EMCa}(Q) = x \cdot [(1-x) - \frac{x_0}{2}] + (1-x)\cdot [x - \frac{x_0}{2}]$. After some simple algebra, $\mathsf{EMCa}(Q)$ looks as follows: $[2(x-x^2)]-\frac{x_0}{2}$. This completes the proof of (1.). 

The best query $Q$ w.r.t.\ $\mathsf{EMCa}$ is the one which maximizes $\mathsf{EMCa}(Q)$. Hence, the best query w.r.t.\ $\mathsf{EMCa}$ is the one which minimizes $-\mathsf{EMCa}(Q) = [2(x^2-x)]+\frac{1}{2} x_0$. We now observe that $x_0 = p(\dz{}(Q))$ is independent of the sum in squared brackets. Thus, both terms can be minimized separately. Therefore, we immediately see that $x_0 = p(\dz{}(Q)) = 0$ must be true for the theoretically optimal query w.r.t.\ $\mathsf{EMCa}$. This implies $\dz{}(Q) = \emptyset$ by $p(\md)>0$ for all $\md \in \mD$ (cf.\ page~\pageref{etc:prob_of_each_diag_must_be_greater_zero}). Further on, if $x_0 = p(\dz{}(Q)) = \emptyset$, $-\mathsf{EMCa}(Q) = [2(x^2-x)]$.

Next, we analyze the term $f(x) := 2(x^2-x)$ in squared brackets for extreme points. To this end, we build the first and second derivatives $f'(x) = 2(2x-1)$ and $f''(x) = 4$. Clearly, $f(x)$ is a strictly convex function as $f''(x) > 0$ for all $x \in (0,1)$. Hence, there is exactly a unique (global) minimum of it. By setting $f'(x) = 0$, we obtain $x = \frac{1}{2}$ as the argument of this minimum. As $x = p(\dx{}(Q))$ and $p(\dx{}(Q)) = \frac{1}{2}$ implies that $p(\dnx{}(Q)) = 1-p(\dx{}(Q)) = \frac{1}{2}$, (2.) is thereby proven.

Now, recall that $\mathsf{ENT}(Q)$ can be represented as in Eq.~\eqref{eq:scoring_funtion_dekleer} and bring back to mind the proof of Proposition~\ref{prop:ent} where we showed that the term $x \log_2 x + (1-x) \log_2 (1-x)$ in squared brackets in Eq.~\eqref{eq:scoring_funtion_dekleer} is strictly convex as well, attaining its minimum exactly at $x = \frac{1}{2}$.
Moreover, given $x_0 = p(\dz{}(Q)) = \emptyset$, we have that $\mathsf{ENT}(Q) = x \log_2 x + (1-x) \log_2 (1-x)$. So, for queries with empty $\dz{}$, both $-\mathsf{EMCa}(Q)$ and $\mathsf{ENT}(Q)$ are strictly convex and feature the same point where the global minimum is attained. That is, both $-\mathsf{EMCa}(Q)$ and $\mathsf{ENT}(Q)$ are strictly monotonically increasing along both directions outgoing from the argument of the minimum $x =\frac{1}{2}$. As a consequence, for queries $Q_i, Q_j \in \mQ$ we have that $Q_i \prec_{\mathsf{ENT}} Q_j$ iff $|p(Q_i = t)-\frac{1}{2}| < |p(Q_j = t)-\frac{1}{2}|$ iff $Q_i \prec_{\mathsf{EMCa}} Q_j$. This completes the proof of (3.).
\end{proof}
Similarly as in the case of $\mathsf{ENT}$, we define a measure $\mathsf{EMCa}_z$ which constitutes a generalization of the $\mathsf{EMCa}$ measure. Namely, $\mathsf{EMCa}_z$ selects the query  
\begin{align*}
Q_{\mathsf{EMCa}_z} := \argmax_{Q \in \mQ} \left(\mathsf{EMCa}_z(Q)\right)
\end{align*}
where
\begin{align}\label{eq:EMCa_z}
\mathsf{EMCa}_z(Q) := 2\,\left[p(Q=t) - [p(Q=t)]^2 \right] - z\, \frac{p(\dz{}(Q))}{2} 
\end{align}
%
Note that $\mathsf{EMCa}_z(Q) = \mathsf{EMCa}(Q) + \frac{z-1}{2}p(\dz{}(Q))$ and thus that $\mathsf{EMCa}_1$ is equal to $\mathsf{EMCa}$ (cf.\ Proposition~\ref{prop:EMCa_preserves_discrimination-pref-order} and Eq.~\eqref{eq:EMCa_derived}, respectively). Moreover, $\mathsf{EMCa}_0$ is equal to the \emph{Gini Index} \cite{rokach2005}, a frequently adopted (information) gain measure in decision tree learning. Using our terminology, the Gini Index is defined as $1 - [p(Q=t)]^2 - [p(Q=f)]^2$ which can be easily brought into the equivalent form $2\,\left[p(Q=t) - [p(Q=t)]^2 \right]$ by using the fact that $p(Q=t) = 1 - p(Q=f)$. 
\begin{corollary}\label{cor:EMCa_z_for_all_z_equivalent_mQ_to_ENT_for_mQ_includes_only_dz=0_queries}
Let $\mQ$ be a set of queries w.r.t.\ a set of minimal diagnoses w.r.t.\ a DPI $\tuple{\mo,\mb,\Tp,\Tn}_\RQ$ where each query $Q \in \mQ$ satisfies $\dz{}(Q) = \emptyset$. Then $\mathsf{EMCa}_z \equiv_{\mQ} \mathsf{ENT}$ for all $z \in \mathbb{R}$.
\end{corollary}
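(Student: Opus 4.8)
The plan is to reduce the claim to the already-established equivalence $\mathsf{EMCa} \equiv_{\mQ} \mathsf{ENT}$ from Proposition~\ref{prop:emca},(\ref{prop:emca:for_mQ_incl_only_queries_with_empty_dz_EMCa_equiv_ENT}.), mirroring exactly the argument used for the analogous split-in-half/vote-entropy statement in Corollary~\ref{cor:SPL_z_for_all_z_equivalent_mQ_to_VE_for_mQ_includes_only_dz=0_queries}. The crucial observation is that the generalized measure $\mathsf{EMCa}_z$ differs from the base measure $\mathsf{EMCa}$ only through a term weighted by $p(\dz{}(Q))$, and this weight vanishes on $\mQ$ by hypothesis.

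First I would recall the identity $\mathsf{EMCa}_z(Q) = \mathsf{EMCa}(Q) + \frac{z-1}{2}\,p(\dz{}(Q))$ stated immediately after Eq.~\eqref{eq:EMCa_z}. Since by assumption every $Q \in \mQ$ satisfies $\dz{}(Q) = \emptyset$, and since $p(\md) > 0$ for every $\md \in \mD$ (cf.\ page~\pageref{etc:prob_of_each_diag_must_be_greater_zero}), we obtain $p(\dz{}(Q)) = 0$ for all $Q \in \mQ$. Substituting this into the identity yields $\mathsf{EMCa}_z(Q) = \mathsf{EMCa}(Q)$ for every $Q \in \mQ$ and every $z \in \mathbb{R}$.

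Next I would conclude that, on the restricted domain $\mQ$, the measures $\mathsf{EMCa}_z$ and $\mathsf{EMCa}$ assign identical real values to each query and hence induce exactly the same preference order $\prec$ (Definition~\ref{def:precedence_order_defined_by_q-partition_quality_measure}); that is, $\mathsf{EMCa}_z \equiv_{\mQ} \mathsf{EMCa}$ for all $z \in \mathbb{R}$. Combining this with $\mathsf{EMCa} \equiv_{\mQ} \mathsf{ENT}$ (Proposition~\ref{prop:emca},(\ref{prop:emca:for_mQ_incl_only_queries_with_empty_dz_EMCa_equiv_ENT}.)) and the transitivity of $\equiv_{\mQ}$, which is an equivalence relation by Proposition~\ref{prop:equivalence_between_measures_is_equivalence_relation}, gives $\mathsf{EMCa}_z \equiv_{\mQ} \mathsf{ENT}$, as required.

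The result is essentially immediate, so there is no genuine obstacle; the only point to be careful about is not to overstate it as a \emph{global} equality of measures. The measures $\mathsf{EMCa}_z$ and $\mathsf{EMCa}$ genuinely differ on queries with $\dz{} \neq \emptyset$, so the equality of assigned values — and therefore of induced orders — must be asserted relative to $\mQ$ throughout. Accordingly I would phrase every value equality and every order equivalence explicitly as holding over $\mQ$, ensuring the final equivalence is the $\mathbf{X}$-equivalence $\equiv_{\mQ}$ and not unqualified $\equiv$.
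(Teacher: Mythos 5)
Your proof is correct and follows essentially the same route as the paper's: the paper likewise observes that $\mathsf{EMCa}_z$ coincides with $\mathsf{EMCa}$ on $\mQ$ because the $p(\dz{}(Q))$-dependent term vanishes, and then invokes Proposition~\ref{prop:emca},(\ref{prop:emca:for_mQ_incl_only_queries_with_empty_dz_EMCa_equiv_ENT}.) to conclude $\mathsf{EMCa}_z \equiv_{\mQ} \mathsf{ENT}$. Your write-up merely makes the implicit transitivity step and the restriction-to-$\mQ$ caveat explicit, which is a faithful elaboration rather than a different argument.
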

\begin{proof}
Immediate from Proposition~\ref{prop:emca},(\ref{prop:emca:for_mQ_incl_only_queries_with_empty_dz_EMCa_equiv_ENT}.) and the fact that $\mathsf{EMCa}_z$ is obviously equal to $\mathsf{EMCa}$ for all $Q\in\mQ$ (due to $\dz{}(Q) = \emptyset$) for all $z \in \mathbb{R}$.
\end{proof}
\begin{proposition}\label{prop:EMCa_preserves_discrimination-pref-order}
Let $Q$ be a query w.r.t.\ a set of minimal diagnoses $\mD$ w.r.t.\ a DPI $\tuple{\mo,\mb,\Tp,\Tn}_\RQ$. Further, let $p:=\min_{a\in\setof{t,f}} (p(Q=a))$ and $Q'$ be a query such that $Q$ is discrimination-preferred to $Q'$. In addition, let $x:= p(\dz{}(Q')) - p(\dz{}(Q))$. 
Then:
\begin{enumerate}
	\item $x > 0$.
	\item If $x \geq 1-2p$, then $Q \prec_{\mathsf{EMCa}} Q'$.
	\item If $x < 1-2p$, then: 
	\begin{enumerate}
		\item In general it does not hold that $Q \prec_{\mathsf{EMCa}} Q'$.
		\item If not $Q \prec_{\mathsf{EMCa}} Q'$, then $p \in (0,t]$ where $t := \frac{1}{4}$.
		\item If not $Q \prec_{\mathsf{EMCa}_z} Q'$, then $p \in (0,t(z)]$ where $t(z) := \frac{-z+2}{4}$, i.e.\ $\mathsf{EMCa}_z$ for all real numbers $z \geq 2$ satisfies the discrimination-preference relation $DPR$.
	\end{enumerate}
\end{enumerate}
\end{proposition}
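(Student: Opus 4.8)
The plan is to follow the blueprint of the proof of Proposition~\ref{prop:ENT_preserves_discrimination-pref-order} for $\mathsf{ENT}$, exploiting that the representation of $\mathsf{EMCa}$ in Eq.~\eqref{eq:EMCa_derived} replaces the entropy term by the \emph{polynomial} $g(y) := 2(y - y^2)$ evaluated at $y = p(Q=t)$, so that all the calculus becomes elementary. Throughout I would write $g(p(Q=t))$ for the bracketed term of Eq.~\eqref{eq:EMCa_derived} and treat $-\tfrac12 p(\dz{}(Q))$ (resp.\ $-\tfrac{z}{2}p(\dz{}(Q))$ for $\mathsf{EMCa}_z$, cf.\ Eq.~\eqref{eq:EMCa_z}) as a separable penalty, assuming w.l.o.g.\ $p := p(Q=t) \le \tfrac12$ so that $p(\dnx{}(Q))$ is the larger of the two answer masses. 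Statement~(1) is immediate from Proposition~\ref{prop:if_Q_discrimination-preferred_over_Q'_then_dz(Q')_supset_dz(Q)} together with $p(\md) > 0$ for all $\md \in \mD$, exactly as in the $\mathsf{ENT}$ case.

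For the bookkeeping that drives (2)--(3) I would first record, via Proposition~\ref{prop:construction_of_Q'_from_Q_if_Q_discrimination-preferred_over_Q'}, how transferring a probability mass $x$ from $\dx{}(Q)\cup\dnx{}(Q)$ into $\dz{}(Q)$ changes the relevant quantities: if mass $\mu_+$ is taken from $\dx{}(Q)$ and $\mu_- = x-\mu_+$ from $\dnx{}(Q)$, then by Eq.~\eqref{eq:p(Q=t)} the new answer probability is $p(Q'=t) = p + \tfrac12(\mu_- - \mu_+) =: p+\delta$ with $\delta \in [-\tfrac{x}{2},\tfrac{x}{2}]$, while $p(\dz{}(Q')) = p(\dz{}(Q)) + x$; the swap option of Proposition~\ref{prop:construction_of_Q'_from_Q_if_Q_discrimination-preferred_over_Q'} is irrelevant because $g$ is symmetric about $\tfrac12$. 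For (2), when $x \ge 1-2p$ one can (and the $\mathsf{EMCa}$-best such $Q^*$ does) choose $\mu_+,\mu_-$ so that $p(Q^*=t)=\tfrac12$, giving $g=\tfrac12$; a one-line comparison then reduces $\mathsf{EMCa}(Q) > \mathsf{EMCa}(Q^*)$ to $2p(1-p) > \tfrac{1-x}{2}$, which holds because $2p(1-p) \ge p \ge \tfrac{1-x}{2}$ with at least one inequality strict (the corner $p=\tfrac12$ being salvaged by $x > 1-2p = 0$ from~(1)). Since $Q^*$ is $\mathsf{EMCa}$-optimal among all $Q'$ with this $x$, the conclusion follows for every such $Q'$.

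For (3) I would argue in the regime $x < 1-2p$, which forces $p+\delta \le p+\tfrac{x}{2} < \tfrac12$, i.e.\ $g$ is evaluated only on its strictly increasing branch. The condition ``not $Q \prec_{\mathsf{EMCa}} Q'$'' then reads $g(p+\delta) - g(p) \ge \tfrac{x}{2}$, and concavity of $g$ (since $g'' \equiv -4 < 0$) bounds the left-hand side by $g'(p)\tfrac{x}{2} = (2-4p)\tfrac{x}{2}$; hence the condition can hold only if $g'(p) \ge 1$, i.e.\ $p \le \tfrac14$, proving~(3b), while the impossibility of the condition for $p > \tfrac14$ together with a concrete small-$x$ instance (e.g.\ $p=0.1$, $p(\dz{}(Q))=0$, $x=0.05$, which yields $\mathsf{EMCa}(Q') > \mathsf{EMCa}(Q)$) gives~(3a). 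Part~(3c) is the same computation with penalty coefficient $z$: ``not $Q \prec_{\mathsf{EMCa}_z} Q'$'' becomes $g(p+\delta)-g(p) \ge z\tfrac{x}{2}$, forcing $2-4p \ge z$, i.e.\ $p \le \tfrac{2-z}{4} = t(z)$; and for $z \ge 2$ this bound is non-positive whereas $p > 0$, so the condition is never met and $\mathsf{EMCa}_z$ satisfies $DPR$.

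The main obstacle is not any single computation but the careful case management around the boundary and the direction of the concavity estimate: one must verify that in the regime $x < 1-2p$ the perturbed mass $p+\delta$ never reaches $\tfrac12$ (so that the single-sided bound $g(p+\delta)-g(p) \le g'(p)\tfrac{x}{2}$ is valid and attained by the $\mathsf{EMCa}$-best $Q'$), and that the $p=\tfrac12$ corner in~(2) is handled using $x > 0$ from~(1) rather than $x \ge 1-2p$. Everything else reduces to comparing the quadratic $g$ against a linear penalty, which is precisely why the thresholds $\tfrac14$ and $\tfrac{2-z}{4}$ come out in closed form, in contrast to the transcendental threshold $\tfrac15$ obtained for $\mathsf{ENT}$.
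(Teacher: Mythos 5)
Your proof is correct and is exactly what the paper's proof prescribes: the paper's own proof of this proposition is a single sentence stating that the argument of Proposition~\ref{prop:ENT_preserves_discrimination-pref-order} is to be repeated with Eqs.~\eqref{eq:EMCa_derived} and \eqref{eq:EMCa_z} analyzed in place of Eqs.~\eqref{eq:scoring_funtion_dekleer} and \eqref{eq:ENTz}, and your write-up carries out precisely that adaptation (the tangent-line/concavity bound on $g(y)=2(y-y^2)$ playing the role of the derivative argument for the entropy term, and the same $p=\tfrac{1}{2}$ corner handled via $x>0$ from part~(1)). One cosmetic quibble: the $\mathsf{ENT}$ threshold $\tfrac{1}{5}$ is rational, not transcendental — it is the defining equation, not the value, that involves logarithms.
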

\begin{proof}
The proof follows exactly the same line of argumentation as used in the proof of Proposition~\ref{prop:ENT_preserves_discrimination-pref-order}, just that Equations~\eqref{eq:EMCa_derived} and \eqref{eq:EMCa_z} are analyzed instead of Equations~\eqref{eq:scoring_funtion_dekleer} and \eqref{eq:ENTz}, respectively.
\end{proof}
\begin{corollary}\label{cor:EMCa_satisfies_discrimination-pref_order_for_queries_with_p>=1/4}
Let $Q$ be a query and $p_Q:=\min_{a\in\setof{t,f}} (p(Q=a))$. Let further $\mQ$ be a set of queries where each query $Q$ in this set satisfies $p_Q > \frac{1}{4}$. Then, $\mathsf{EMCa}$ 
satisfies the discrimination-preference relation (over $\mQ$) (Definition~\ref{def:measures_equivalent_theoretically-optimal_superior}).
\end{corollary}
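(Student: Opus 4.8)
The plan is to obtain this corollary exactly as the dual of Corollary~\ref{cor:ENT_satisfies_discrimination-pref_order_for_queries_with_p>=1/5}, namely as a contrapositive reading of Proposition~\ref{prop:EMCa_preserves_discrimination-pref-order}, which has already done all the analytic work. Unfolding Definition~\ref{def:measures_equivalent_theoretically-optimal_superior}, showing that $\mathsf{EMCa}$ preserves the discrimination-preference relation over $\mQ$ amounts to verifying the following: for every pair of queries $Q,Q' \in \mQ$ with $(Q,Q') \in DPR$ (i.e.\ $Q$ discrimination-preferred to $Q'$), it holds that $Q \prec_{\mathsf{EMCa}} Q'$. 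So I would fix such a pair and adopt the notation of the proposition, setting $p := p_Q = \min_{a\in\setof{t,f}} p(Q=a)$ and $x := p(\dz{}(Q')) - p(\dz{}(Q))$. By hypothesis $Q \in \mQ$, hence $p > \frac{1}{4}$.

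Next I would split into the two cases distinguished by Proposition~\ref{prop:EMCa_preserves_discrimination-pref-order}, because bullet (3.b) only governs the regime $x < 1-2p$ and one must not forget the complementary regime. If $x \geq 1-2p$, then bullet~(2.) of Proposition~\ref{prop:EMCa_preserves_discrimination-pref-order} gives $Q \prec_{\mathsf{EMCa}} Q'$ outright. If instead $x < 1-2p$, I would argue by contraposition on bullet~(3.b): were $Q \prec_{\mathsf{EMCa}} Q'$ to fail, (3.b) would force $p \in (0,\frac{1}{4}]$, i.e.\ $p \leq \frac{1}{4}$, contradicting $p > \frac{1}{4}$; hence $Q \prec_{\mathsf{EMCa}} Q'$ holds in this case too. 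Combining both cases yields $Q \prec_{\mathsf{EMCa}} Q'$ for every discrimination-preferred pair in $\mQ$, which is precisely the assertion that $\mathsf{EMCa}$ satisfies $DPR$ over $\mQ$.

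I expect no genuine obstacle here: the substantive content (the critical threshold $t = \frac{1}{4}$ and the convexity computations behind it) is entirely absorbed into the proof of Proposition~\ref{prop:EMCa_preserves_discrimination-pref-order}, so this corollary is essentially bookkeeping. The only point requiring care—and the thing I would flag explicitly rather than collapse into the one-line ``apply the law of contraposition'' phrasing used for the $\mathsf{ENT}$ corollary—is to cover the $x \geq 1-2p$ case via bullet~(2.), since a contraposition of (3.b) alone does not formally address it (even though that case is trivial because (2.) already delivers the desired strict preference). With both cases in hand the argument is complete.
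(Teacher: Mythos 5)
Your proof is correct and follows essentially the same route as the paper, which likewise obtains the corollary by applying the law of contraposition to Proposition~\ref{prop:EMCa_preserves_discrimination-pref-order},(3.b). Your explicit treatment of the complementary regime $x \geq 1-2p$ via bullet~(2.) is sound bookkeeping that the paper's one-line proof leaves implicit.
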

\begin{proof}
This statement is a consequence of applying the law of contraposition to the implication stated by Proposition~\ref{prop:EMCa_preserves_discrimination-pref-order},(3.b).
\end{proof}
The next corollary is quite interesting, as it testifies that $\mathsf{EMCa}_r$ for any selection of a \emph{finite} $r \geq 2$ satisfies the discrimination-preference relation $DPR$. Recall from Proposition~\ref{prop:ENT_preserves_discrimination-pref-order} that $\mathsf{ENT}_z$ for no finite selection of a non-negative $z$ value (and thus neither $\mathsf{ENT}$) satisfies the $DPR$ (at least theoretically). For that reason, $\mathsf{EMCa}_2$, for instance, is already superior to $\mathsf{ENT}_z$ \emph{for all} finite non-negative values of $z$.
\begin{corollary}\label{cor:EMCa_z_for_z>=2_is_better_than_ENTr_for_all_r}
For all $r \geq 2$ and $z \geq 0$, $\mathsf{EMCa}_{r}$ satisfies the discrimination-preference relation $DPR$ and is superior to $\mathsf{ENT}_z$. 
\end{corollary}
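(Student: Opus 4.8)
The plan is to assemble this corollary from three earlier results, since all the technical work has already been carried out in the cited propositions. First I would establish that $\mathsf{EMCa}_r$ satisfies the discrimination-preference relation $DPR$ for every $r \geq 2$. This is read off directly from Proposition~\ref{prop:EMCa_preserves_discrimination-pref-order},(3.c), which states that whenever $Q$ is discrimination-preferred to $Q'$ and not $Q \prec_{\mathsf{EMCa}_z} Q'$, then necessarily $p \in (0,t(z)]$ with $t(z) = \frac{-z+2}{4}$. For $z = r \geq 2$ we have $t(r) \leq 0$, so the interval $(0,t(r)]$ is empty. Because every query has nonempty $\dx{}$ and $\dnx{}$ (Proposition~\ref{prop:properties_of_q-partitions},(\ref{prop:properties_of_q-partitions:enum:for_each_q-partition_dx_is_empty_and_dnx_is_empty}.)) and every diagnosis carries strictly positive probability (cf.\ page~\pageref{etc:prob_of_each_diag_must_be_greater_zero}), we always have $p = \min_{a\in\setof{t,f}} p(Q=a) > 0$. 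Hence the necessary condition for a misordering can never be met, so no discrimination-preferred pair is ever misordered by $\mathsf{EMCa}_r$; that is, $\mathsf{EMCa}_r$ preserves $DPR$.

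Next I would argue that $\mathsf{ENT}_z$ fails to satisfy $DPR$ for every $z \geq 0$. This follows from Proposition~\ref{prop:ENT_preserves_discrimination-pref-order},(3.a), whose validity for $\mathsf{ENT}_z$ (all $z \geq 0$) is asserted in the closing sentence of statement (3.c) of that proposition: in the regime $x < 1-2p$ there exists a pair of queries $Q,Q'$ with $Q$ discrimination-preferred to $Q'$ but not $Q \prec_{\mathsf{ENT}_z} Q'$. Such a pair witnesses precisely the negation of ``$\mathsf{ENT}_z$ satisfies $DPR$'' (Definition~\ref{def:measures_equivalent_theoretically-optimal_superior}). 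The underlying reason is that for any finite $z \geq 0$ the threshold $t(z) = (2^{2z}+1)^{-1}$ remains strictly positive, leaving a nonempty band of small-$p$ queries on which $\mathsf{ENT}_z$ may still misorder a discrimination-preferred pair.

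Finally I would combine the two facts through Proposition~\ref{prop:if_m1_satisfies_DPR_and_m2_does_not_then_m1_superior_to_m2}: since $\mathsf{EMCa}_r$ (with $r \geq 2$) does satisfy $DPR$ while $\mathsf{ENT}_z$ (with $z \geq 0$) does not, that proposition immediately yields $\mathsf{EMCa}_r \prec \mathsf{ENT}_z$, i.e.\ that $\mathsf{EMCa}_r$ is superior to $\mathsf{ENT}_z$. There is no genuine obstacle here; the only point requiring slight care is the vacuity argument in the first step, where one must invoke the standing assumption $p(\md)>0$ for all $\md\in\mD$ to guarantee $p>0$ strictly and thereby rule out the degenerate interval bound $t(r)\leq 0$. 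Everything else is a direct citation of the preceding propositions, so the corollary is essentially a bookkeeping assembly rather than a new argument.
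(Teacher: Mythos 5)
Your proof is correct and follows essentially the same route as the paper: both rest on the two facts that $t(z)>0$ for all $z\geq 0$ makes $\mathsf{ENT}_z$ fail $DPR$ (via Proposition~\ref{prop:ENT_preserves_discrimination-pref-order}) while $t(r)\leq 0$ for $r\geq 2$ makes $\mathsf{EMCa}_r$ satisfy it (via Proposition~\ref{prop:EMCa_preserves_discrimination-pref-order},(3.c)). The only cosmetic difference is that you finish by citing Proposition~\ref{prop:if_m1_satisfies_DPR_and_m2_does_not_then_m1_superior_to_m2}, whereas the paper inlines the verification of the three bullets of Definition~\ref{def:measures_equivalent_theoretically-optimal_superior} — which is exactly what that proposition encapsulates.
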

\begin{proof}
By (the argumentation given in the proof of) Proposition~\ref{prop:ENT_preserves_discrimination-pref-order}, one can construct two queries $Q,Q'$ where $Q$ is discrimination-preferred to $Q'$ and not $Q \prec_{\mathsf{ENT}_z} Q'$ for all $z \geq 0$. This holds because $t(z) > 0$ and thus the interval $(0,t(z)]$ is non-empty for all $z \geq 0$ (cf.\ Proposition~\ref{prop:ENT_preserves_discrimination-pref-order},(c)). Since, by Proposition~\ref{prop:EMCa_preserves_discrimination-pref-order},(c), $\mathsf{EMCa}_r$ preserves the discrimination-preference order for all $r \geq 2$, we can conclude that $Q \prec_{\mathsf{EMCa}_r} Q'$ must be given and that there cannot be any pair of queries $Q_i,Q_j$ where $(Q_i,Q_j)$ is in the debug preference relation $DPR$ and not $Q_i \prec_{\mathsf{EMCa}_r} Q_j$. Hence all three bullets of the definition of superiority (Definition~\ref{def:measures_equivalent_theoretically-optimal_superior}) are met.
\end{proof}
\begin{corollary}\label{cor:if_0<=r<s<=2_then_EMCa_r_inferior_to_EMCa_s}
Let $0 \leq r < s \leq 2$. Then $\mathsf{EMCa}_r$ is inferior to $\mathsf{EMCa}_s$.
\end{corollary}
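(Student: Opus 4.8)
The plan is to verify the three conditions of the definition of inferiority (Definition~\ref{def:measures_equivalent_theoretically-optimal_superior}) with $m_1 := \mathsf{EMCa}_r$ and $m_2 := \mathsf{EMCa}_s$, following the same template as the proof of Corollary~\ref{cor:if_0<=r<s_then_ENTr_inferior_to_ENTs}. The single observation driving everything is an affine monotonicity fact in the parameter $z$. Fix any pair of queries $Q,Q'$ such that $Q$ is discrimination-preferred to $Q'$, and let $x := p(\dz{}(Q')) - p(\dz{}(Q))$; by Proposition~\ref{prop:EMCa_preserves_discrimination-pref-order},(1.) (equivalently Proposition~\ref{prop:if_Q_discrimination-preferred_over_Q'_then_dz(Q')_supset_dz(Q)}) we have $x > 0$. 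Writing $\mathsf{EMCa}_z(Q) = 2[p(Q=t) - (p(Q=t))^2] - \tfrac{z}{2}\,p(\dz{}(Q))$ as in Eq.~\eqref{eq:EMCa_z} and noting that the bracketed Gini term is independent of $z$, I would set $D(z) := \mathsf{EMCa}_z(Q) - \mathsf{EMCa}_z(Q')$ and obtain $D(z) = C + \tfrac{z}{2}\,x$, where $C$ is the $z$-independent difference of the two Gini terms. Hence $D$ is affine in $z$ with strictly positive slope $\tfrac{x}{2}$. Since $\mathsf{EMCa}_z$ is a maximization measure, $Q \prec_{\mathsf{EMCa}_z} Q'$ holds iff $D(z) > 0$ (Definition~\ref{def:precedence_order_defined_by_q-partition_quality_measure}).

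Condition~(3.)\ then follows immediately and plays the role of the (here unstated) $\mathsf{EMCa}$-analogue of Corollary~\ref{cor:if_r<s_then_it_holds_that_if_not_Q_precENTs_Q'_then_not_Q_precENTr_Q'}: whenever not $Q \prec_{\mathsf{EMCa}_s} Q'$ we have $D(s) \le 0$, and since $r < s$ together with $x > 0$ gives $D(r) < D(s) \le 0$, we conclude not $Q \prec_{\mathsf{EMCa}_r} Q'$. Thus no discrimination-preferred pair is ordered correctly by $\mathsf{EMCa}_r$ but not by $\mathsf{EMCa}_s$.

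For conditions~(1.)\ and (2.)\ I would exhibit a single explicit pair $Q,Q'$ witnessing both at once, constructed exactly as in the proof of Proposition~\ref{prop:EMCa_preserves_discrimination-pref-order},(3.a)/(3.c). Because $r < s \le 2$ forces $r < 2$, Proposition~\ref{prop:EMCa_preserves_discrimination-pref-order},(3.c)\ yields a strictly positive threshold $t(r) = \tfrac{2-r}{4} > 0$, so $\mathsf{EMCa}_r$ does not satisfy $DPR$. Concretely I would build the boundary instance (the ``best possible $Q'$'' configuration from that proof, with $\min_{a\in\setof{t,f}} p(Q=a) = t(r)$) for which $\mathsf{EMCa}_r(Q) = \mathsf{EMCa}_r(Q')$, i.e.\ $D(r) = 0$ and hence not $Q \prec_{\mathsf{EMCa}_r} Q'$ (condition~(1.)). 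Feeding this same pair into the monotonicity relation yields $D(s) = D(r) + \tfrac{s-r}{2}\,x = \tfrac{s-r}{2}\,x > 0$, so $Q \prec_{\mathsf{EMCa}_s} Q'$ (condition~(2.)).

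The main obstacle I anticipate is the bookkeeping for this explicit counterexample: I must produce concrete $\dx{},\dnx{},\dz{}$ sets together with a probability measure whose minority-answer probability sits exactly at the closed boundary $t(r)$, so that $D(r) = 0$, while keeping the transferred mass $x$ positive so that $\tfrac{s-r}{2}x$ flips the sign for $\mathsf{EMCa}_s$; handling the boundary carefully (closed at $t(r)$, giving equality rather than strict preference) is where the argument is most delicate. Everything else reduces to the affine monotonicity of $D(z)$ established above. Note that the proof genuinely uses only $r < 2$ (which is implied by $r < s \le 2$) to make $\mathsf{EMCa}_r$ violate $DPR$, whereas condition~(3.)\ holds for all $r < s$; the stated range $s \le 2$ is the natural complement to Corollary~\ref{cor:EMCa_z_for_z>=2_is_better_than_ENTr_for_all_r}.
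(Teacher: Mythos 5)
Your proposal follows essentially the same route as the paper's proof, which is itself only a pointer to the two $\mathsf{ENT}$ corollaries: condition~(3.)\ from the $z$-monotonicity analogue of Corollary~\ref{cor:if_r<s_then_it_holds_that_if_not_Q_precENTs_Q'_then_not_Q_precENTr_Q'}, conditions~(1.)\ and~(2.)\ from a counterexample built as in Proposition~\ref{prop:EMCa_preserves_discrimination-pref-order}, plus the observation that $r<s\leq 2$ forces $r<2$. Your affine form $D(z)=C+\tfrac{z}{2}x$ is a faithful explicit unfolding of exactly that argument.

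There is, however, one concrete slip in your witness for conditions~(1.)\ and~(2.): the boundary instance does \emph{not} sit at $\min_{a\in\setof{t,f}}p(Q=a)=t(r)$. Take $\dz{}(Q)=\emptyset$, $p:=p(Q=t)\leq\tfrac{1}{2}$, and $Q'$ obtained by moving probability mass $x>0$ from $\dnx{}(Q)$ to $\dz{}$; then Eq.~\eqref{eq:EMCa_z} gives exactly
$D(z)=\mathsf{EMCa}_z(Q)-\mathsf{EMCa}_z(Q')=x\left(2p+\tfrac{x}{2}+\tfrac{z}{2}-1\right)$,
so $D(r)=0$ holds iff $p=t(r)-\tfrac{x}{4}$, whereas at $p=t(r)$ one gets $D(r)=\tfrac{x^2}{2}>0$, i.e.\ a \emph{correctly} ordered pair rather than a witness that $\mathsf{EMCa}_r$ violates $DPR$. (This also corrects your reading of the threshold: by Proposition~\ref{prop:EMCa_preserves_discrimination-pref-order},(3.b)/(3.c), $p\leq t(r)$ is a \emph{necessary}, not a sufficient, condition for a violation, and equality of the two $\mathsf{EMCa}_r$ values occurs strictly below $t(r)$.) The fix is immediate within your own framework: choose small $x>0$ and $p=t(r)-\tfrac{x}{4}$, which is feasible since $t(r)>0$ by $r<2$ (e.g.\ three diagnoses with probabilities $p$, $1-p-x$, $x$, where $Q'$ moves the third into $\dz{}$); then $D(r)=0$, hence not $Q\prec_{\mathsf{EMCa}_r}Q'$, and $D(s)=\tfrac{s-r}{2}\,x>0$, hence $Q\prec_{\mathsf{EMCa}_s}Q'$. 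With that correction the proof is sound and matches the paper's intended argument.
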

\begin{proof}
The proof follows exactly the same line of argumentation as was given in the proofs of Corollaries~\ref{cor:if_r<s_then_it_holds_that_if_not_Q_precENTs_Q'_then_not_Q_precENTr_Q'} and \ref{cor:if_0<=r<s_then_ENTr_inferior_to_ENTs}. The upper bound $2$ for $s$ must hold since $\mathsf{EMCa}_s$ for $s \geq 2$ satisfies the discrimination-preference relation $DPR$ (as per Proposition~\ref{prop:EMCa_preserves_discrimination-pref-order},(3c.)) and thus no other measure can be superior to $\mathsf{EMCa}_s$ in this case since the first and second bullets of the definition of superiority (cf.\ Definition~\ref{def:measures_equivalent_theoretically-optimal_superior}) can never be satisfied. 
\end{proof}

\vspace{1em}

\noindent\emph{Expected Model Change - Variant (b) ($\mathsf{EMCb}$)}: Formally, variant~(b) selects the query 
\begin{align*}
Q_{\mathsf{EMCb}} := \argmax_{Q\in\mQ_\mD}      
\end{align*}
where
\begin{align}\label{eq:EMCb}
\mathsf{EMCb}(Q) := p(Q=t) |\dnx{}(Q)| + p(Q=f) |\dx{}(Q)|
\end{align}
since the set of leading diagnoses invalidated for positive and negative answer to $Q$ is $\dnx{}(Q)$ and $\dx{}(Q)$, respectively (cf.\ Proposition~\ref{prop:properties_of_q-partitions},(\ref{prop:properties_of_q-partitions:enum:dx_dnx_dz_contain_exactly_those_diags_that_are...}.)). For $\mathsf{EMCb}$ there is no theoretically optimal query since there is no global optimum for the probabilities for positive and negative answers to queries assuming values in $(0,1)$ and the cardinalities of the $\dx{}$- and $\dnx{}$-sets being in the range $\setof{1,\dots,|\mD|-1}$. The reason for this is the \emph{open} interval $(0,1)$ for the probabilities which results from Equations~\eqref{eq:p(Q=t)} and \eqref{eq:p(Q=f)} as well as the facts that all diagnoses $\md\in\mD$ have a positive probability (cf.\ page~\pageref{etc:prob_of_each_diag_must_be_greater_zero}) and that neither $\dx{}(Q)$ nor $\dnx{}(Q)$ must be the empty set for any query. This open interval enables to find for each query, no matter how good it is w.r.t.\ $\mathsf{EMCb}$, another query that is even better w.r.t.\ $\mathsf{EMCb}$. In all cases but the one where $|\dx{}(Q)| = |\dnx{}(Q)|$ it suffices to simply increase the probability of one of the answers accordingly which is always possible due the open interval. Moreover, the set of candidate queries that must include the one that is regarded best by the $\mathsf{EMCb}$ measure among a given set of queries with empty $\dz{}$ and given a fixed probability measure $p$ can be characterized equally as in the case of the $\mathsf{KL}$ measure. Hence, interestingly, although belonging to different active learning frameworks, $\mathsf{EMCb}$ and $\mathsf{KL}$ prove to bear strong resemblance to one another as far as the discussed debugging scenario is concerned. The next proposition summarizes this similarity (cf.\ Proposition~\ref{prop:kl_opt}).
\begin{proposition}\label{prop:EMCb_opt}
Let $\mD$ be a set of minimal diagnoses w.r.t.\ a DPI $\tuple{\mo,\mb,\Tp,\Tn}_\RQ$. Then, for $\mathsf{EMCb}$ the following holds:
\begin{enumerate}
	\item \label{prop:EMCb_opt:no_theoretically_opt_query} There is no theoretically optimal query w.r.t.\ $\mathsf{EMCb}$ and $\mD$.
	\item \label{prop:EMCb_opt:for_any_given_query_better_query_can_be_found} If the (diagnosis) probability measure $p$ is not assumed fixed and $|\mD| \geq 3$, then for any given query $Q$ w.r.t.\ $\mD$ and $\tuple{\mo,\mb,\Tp,\Tn}_\RQ$ a query $Q'$ w.r.t.\ $\mD$ and $\tuple{\mo,\mb,\Tp,\Tn}_\RQ$ satisfying $Q' \prec_{\mathsf{EMCb}} Q$ can be found.
	\item \label{prop:EMCb_opt:properties_of_best_query} Let $\mQ$ be any set of queries w.r.t.\ a DPI $\tuple{\mo,\mb,\Tp,\Tn}_\RQ$ and a set of leading diagnoses $\mD$ w.r.t.\ this DPI such that each query $Q\in\mQ$ satisfies $\dz{}(Q) = \emptyset$. If the (diagnosis) probability measure $p$ is assumed fixed, then the query $Q \in \mQ$ that is considered best in $\mQ$ by $\mathsf{EMCb}$ (i.e.\ there is no query $Q' \in \mQ$ w.r.t.\ $\mD$ and $\tuple{\mo,\mb,\Tp,\Tn}_\RQ$ such that 
$Q' \prec_{\mathsf{EMCb}} Q$) satisfies $|\dx{}(Q)| = |\dnx{}(Q)| = \frac{|\mD|}{2}$ or $\dx{}(Q) \in \mathbf{MaxP}_k^+$ where \[\mathbf{MaxP}_k^+ := \setof{\mathbf{S}\,|\, \mathbf{S}\in \mathcal{S}^+_k \land \forall \mathbf{S}'\in \mathcal{S}^+_k: p(\mathbf{S}) \geq p(\mathbf{S}')}\] for some $k \in \setof{1,\dots,|\mD|-1}$ where 
\[\mathcal{S}^+_k = \setof{\mathbf{X}\,|\,\emptyset\subset\mathbf{X}\subset\mD,|\mathbf{X}|=k,\tuple{\mathbf{X},\mathbf{Y},\mathbf{Z}}\text{ is a q-partition w.r.t.\ } \mD, \tuple{\mo,\mb,\Tp,\Tn}_\RQ}\] 
or $\dnx{}(Q) \in \mathbf{MaxP}_m^-$ 
where \[\mathbf{MaxP}_m^- := \setof{\mathbf{S}\,|\, \mathbf{S}\in \mathcal{S}^-_m \land \forall \mathbf{S}'\in \mathcal{S}^-_m: p(\mathbf{S}) \geq p(\mathbf{S}')}\] for some $m \in \setof{1,\dots,|\mD|-1}$ where 
\[\mathcal{S}^-_m = \setof{\mathbf{Y}\,|\,\emptyset\subset\mathbf{Y}\subset\mD,|\mathbf{Y}|=m,\tuple{\mathbf{X},\mathbf{Y},\mathbf{Z}}\text{ is a q-partition w.r.t.\ } \mD, \tuple{\mo,\mb,\Tp,\Tn}_\RQ}\]
\end{enumerate}
\end{proposition}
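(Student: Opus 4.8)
The plan is to exploit the fact that, unlike $\mathsf{KL}$, the measure $\mathsf{EMCb}$ becomes \emph{affine} once we read off the cardinalities of $\dx{}$ and $\dnx{}$, which makes the whole analysis far more elementary than the saddle-point computation used for Proposition~\ref{prop:kl_opt}. Writing $a:=|\dx{}(Q)|$, $b:=|\dnx{}(Q)|$ and recalling from Equations~\eqref{eq:p(Q=t)} and \eqref{eq:p(Q=f)} that $p(Q=f)=1-p(Q=t)$, Eq.~\eqref{eq:EMCb} rearranges to
\[\mathsf{EMCb}(Q)=p(Q=t)\,b+\bigl(1-p(Q=t)\bigr)\,a=a+p(Q=t)\,(b-a).\]
The two crucial observations are that $p(Q=t)\in(0,1)$ ranges over an \emph{open} interval (every diagnosis has strictly positive probability, and both $\dx{}(Q)$ and $\dnx{}(Q)$ are non-empty by Proposition~\ref{prop:properties_of_q-partitions},(\ref{prop:properties_of_q-partitions:enum:for_each_q-partition_dx_is_empty_and_dnx_is_empty}.)), and that $\mathsf{EMCb}(Q)$ is affine in $p(Q=t)$ with slope $b-a$ whose sign is governed solely by the cardinalities.

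For statements~(\ref{prop:EMCb_opt:no_theoretically_opt_query}.) and (\ref{prop:EMCb_opt:for_any_given_query_better_query_can_be_found}.) I would first bound $\mathsf{EMCb}$: since $a,b\geq 1$ and $a+b\leq|\mD|$, the affine form yields $\mathsf{EMCb}(Q)<|\mD|-1$ for every query $Q$ under every $p$ when $|\mD|\ge 3$; indeed, for $a\neq b$ one has $\mathsf{EMCb}(Q)<\max(a,b)\leq|\mD|-1$ (strict because $p(Q=t)$ cannot reach an endpoint of $(0,1)$), while for $a=b$ one has $\mathsf{EMCb}(Q)=a\le|\mD|/2<|\mD|-1$. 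To prove (\ref{prop:EMCb_opt:for_any_given_query_better_query_can_be_found}.) I would then exhibit a uniformly good competitor: take $Q'$ with q-partition $\tuple{\setof{\md_1},\mD\setminus\setof{\md_1},\emptyset}$ (a genuine q-partition by Proposition~\ref{prop:properties_of_q-partitions},(\ref{prop:properties_of_q-partitions:enum:D+=d_i_is_q-partition_and_lower_bound_of_queries}.)) together with a measure $p'$ concentrating mass on $\md_1$; here $\mathsf{EMCb}(Q')=1+p'(Q'=t)\,(|\mD|-2)$, which can be pushed arbitrarily close to $|\mD|-1$ by letting $p'(\md_1)\to 1$. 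Choosing $p'(\md_1)$ close enough to $1$ gives $\mathsf{EMCb}(Q')>\mathsf{EMCb}(Q)$, i.e.\ $Q'\prec_{\mathsf{EMCb}}Q$. Statement~(\ref{prop:EMCb_opt:no_theoretically_opt_query}.) follows at once, since no query can be a global optimum if a strictly better one always exists. (As the discussion preceding the proposition notes, when $a\neq b$ one may alternatively just shift probability mass toward the more favourable answer, changing only $p$ and not the partition.)

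For statement~(\ref{prop:EMCb_opt:properties_of_best_query}.) I would fix $p$ and use $\dz{}(Q)=\emptyset$, so that $p(Q=t)=p(\dx{}(Q))$ and $b=|\mD|-a$, giving $\mathsf{EMCb}(Q)=a+p(\dx{}(Q))\,(|\mD|-2a)$. The plan is a case split on the slope sign \emph{within each cardinality class} $a$: for $a<|\mD|/2$ (positive slope) the best query of cardinality $a$ maximises $p(\dx{}(Q))$ over all size-$a$ $\dx{}$-sets that belong to some q-partition, i.e.\ $\dx{}(Q)\in\mathbf{MaxP}_a^+$; for $a>|\mD|/2$ (negative slope) it minimises $p(\dx{}(Q))$, equivalently maximises $p(\dnx{}(Q))$ over size-$(|\mD|-a)$ $\dnx{}$-sets, i.e.\ $\dnx{}(Q)\in\mathbf{MaxP}_{|\mD|-a}^-$; and for $a=|\mD|/2$ (zero slope) the value is the constant $|\mD|/2$ with $|\dx{}(Q)|=|\dnx{}(Q)|=|\mD|/2$. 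Since the globally best query is in particular best within its own cardinality class, it must satisfy one of these three conditions, which is exactly the claimed disjunction.

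The routine calculations (the algebra rearranging Eq.~\eqref{eq:EMCb} and verifying $\mathsf{EMCb}(Q')\to|\mD|-1$) are straightforward, so the only genuinely delicate point is the edge-case handling in~(\ref{prop:EMCb_opt:no_theoretically_opt_query}.): for $|\mD|=2$ the only admissible q-partition has $a=b=1$ and $\mathsf{EMCb}\equiv 1$, so every query is trivially optimal and the claim requires the hypothesis $|\mD|\ge 3$ already imposed in~(\ref{prop:EMCb_opt:for_any_given_query_better_query_can_be_found}.). I expect this, together with making the strict inequality $\mathsf{EMCb}(Q)<|\mD|-1$ watertight across the $a<b$, $a>b$ and $a=b$ subcases, to be the main obstacle, all of it stemming from the open probability interval rather than from any hard optimisation.
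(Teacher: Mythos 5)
Your proof is correct; for statement (\ref{prop:EMCb_opt:properties_of_best_query}.)\ it coincides with the paper's argument, but for statements (\ref{prop:EMCb_opt:no_theoretically_opt_query}.)\ and (\ref{prop:EMCb_opt:for_any_given_query_better_query_can_be_found}.)\ you take a genuinely different route. The paper argues locally, mirroring the proof of Proposition~\ref{prop:kl_opt}: for (1.)\ it relaxes $b:=|\dx{}(Q)|$ to a continuous variable, computes the single stationary point $(\frac{d}{2},\frac{1}{2})$ of $g(b,x)=x(d-b)+(1-x)b$ and shows via the Hessian that it is a saddle point, so no maximum exists; for (2.)\ it reads off the slope $d-2b$ of $g_b(x)$ and perturbs $x$ by $\pm\epsilon$, with a separate three-way sub-case analysis when the slope is zero (which is where $|\mD|\geq 3$ enters, since then $b$ itself must be shifted by one). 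You instead argue globally: the uniform strict bound $\mathsf{EMCb}(Q)<|\mD|-1$ for every query under every $p$, combined with the explicit family of configurations $\tuple{\setof{\md_1},\mD\setminus\setof{\md_1},\emptyset}$ under measures concentrating mass on $\md_1$, whose values approach $|\mD|-1$. This gives (2.)\ in one stroke and (1.)\ as an immediate corollary, dispensing with both the Hessian computation and the slope case analysis; what the paper's local argument buys in exchange is a structurally closer witness (the improving query keeps the same cardinalities, except in the zero-slope case), whereas your witness is a single extreme configuration. For (3.)\ your slope-sign case split within each cardinality class is essentially the paper's proof, and it inherits the same mild imprecision, namely that the improving query is only guaranteed to exist among all queries w.r.t.\ $\mD$, not necessarily inside the given set $\mQ$.

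Your closing remark on $|\mD|=2$ deserves emphasis, because it is not a defect of your approach but an actual gap in the paper. For $|\mD|=2$ every q-partition has $|\dx{}|=|\dnx{}|=1$, hence $\mathsf{EMCb}\equiv 1$ over all queries and all probability measures, and this value is also the maximum over the whole domain of partitions; thus every query is theoretically optimal and statement (\ref{prop:EMCb_opt:no_theoretically_opt_query}.), which carries no $|\mD|\geq 3$ hypothesis, fails in that case. The paper's saddle-point argument breaks down silently there, since the domain $b\in\setof{1,\dots,d-1}$ degenerates to a single point and no movement in the $b$-direction is possible. Note the contrast with $\mathsf{KL}$: in Proposition~\ref{prop:kl_opt} the corresponding function is unbounded in $x$ even for $d=2$, which is why the analogous statement needs no cardinality restriction there.
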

\begin{proof}
The proof follows exactly the same line of argumentation that was used in the proof of Proposition~\ref{prop:kl_opt}. To nevertheless provide a sketch of the proof, observe regarding (1.) that, for each $d$, there is a single stationary point $(\frac{d}{2},\frac{1}{2})$ of the function $g(b,x) = x (d-b) + (1-x) b$ corresponding to $\mathsf{EMCb}$ using the same denotations as in the proof of Proposition~\ref{prop:kl_opt}. This stationary point turns out to be a saddle point. Hence there is no (local or global) maximum of $g(b,x)$ for $b \in \setof{1,\dots,d-1}, x\in(0,1)$.

Concerning (2.), we bring $g(b,x)$ into the form $x(d-2b)+b$ which exposes that, for fixed and arbitrary $b$, $g_b(x)$ corresponds to a straight line with a slope of $d-2b$. Therefore, if $b < \frac{d}{2}$, the slope is positive and we obtain a better query w.r.t.\ $\mathsf{EMCb}$ by simply adding some (arbitrarily small) $\epsilon > 0$ to $x$. Such $\epsilon$ always exists since $x$ has some value in the \emph{open} interval $(0,1)$. Otherwise, in case $b > \frac{d}{2}$, the slope is negative which is why we can subtract some small $\epsilon' >0$ from $x$ to construct a better query w.r.t.\ $\mathsf{EMCb}$. Finally, if the slope is zero, i.e.\ $b = \frac{d}{2}$ (which can by the way only occur if $d$ is even), we must show that there is another query with a better $\mathsf{EMCb}$ measure. If $x > 1-x$, such a query is obviously given by setting $b \gets b - 1$; if $x < 1-x$, then by setting $b \gets b + 1$; if $x = 1-x$, then by setting e.g.\ $x \gets x + \epsilon$ and $b \gets b - 1$ for some $\epsilon > 0$. Note that the different settings for $b$ are possible (without implying that any of the $\dx{}$- or $\dnx{}$-sets becomes empty) since $|\mD| = d \geq 3$ which entails that $b = \frac{d}{2}$ only if $d \geq 4$.

With regard to (3.), we again use $g_b(x)$, as specified above, and consider a specific query $Q$ that does not satisfy the conditions given in (3.). If the slope $d-2b$ for this query is positive, we know that there must be a query $Q'$ with $\dx{}(Q') \in \mathbf{MaxP}_b^+$ that is better w.r.t.\ $\mathsf{EMCb}$ than $Q$ due to the fact that $\dx{}(Q) \notin \mathbf{MaxP}_b^+$. Given a negative slope, we likewise exploit the fact that there must be a query $Q'$ with $\dnx{}(Q') \in \mathbf{MaxP}_{d-b}^-$ that is better w.r.t.\ $\mathsf{EMCb}$ than $Q$ due to the fact that $\dx{}(Q) \notin \mathbf{MaxP}_{d-b}^-$. Finally, we point out that $Q$, as it does not satisfy the condition of (3.), cannot have the property of a slope of zero because this would imply $|\dx{}(Q)| = b = \frac{|\mD|}{2}$ and due to $\dz{}(Q) = \emptyset$ also $|\dnx{}(Q)| = \frac{|\mD|}{2}$, i.e.\ that $Q$ satisfies the condition of (3.). 
\end{proof}
Please note that, probably a bit surprisingly, in spite of the similarity between $\mathsf{KL}$ and $\mathsf{EMCb}$ witnessed by Propositions~\ref{prop:kl_opt} and \ref{prop:EMCb_opt}, these two measures are not equivalent, i.e.\ $\mathsf{KL} \not\equiv \mathsf{EMCb}$, as we elaborate by the following example:
\begin{example}\label{ex:KL_not_equiv_EMCb}
The see the non-equivalence, we state two queries $Q$ and $Q'$ where $Q \prec_{\mathsf{KL}} Q'$ and $Q' \prec_{\mathsf{EMCb}} Q$. Let the set of leading diagnoses be $\mD$ where $|\mD|=10$. The queries $Q,Q'$ are characterized by the following properties:
\begin{align*}
\langle |\dx{}(Q)|,p(\dx{}(Q)) \rangle &= \langle 3, 0.05 \rangle \\
\langle |\dx{}(Q')|,p(\dx{}(Q')) \rangle &= \langle 5, 0.25 \rangle 
\end{align*}
Calculating $\mathsf{KL}(Q) \approx 1.35 $, $\mathsf{KL}(Q') \approx 1.21 $, $\mathsf{EMCb}(Q) = 3.2$ and $\mathsf{EMCb}(Q') = 5$ from these values (see Equations~\eqref{eq:Q_KL_derived} and \eqref{eq:EMCb}), we clearly see that $\mathsf{KL}(Q) > \mathsf{KL}(Q')$ whereas $\mathsf{EMCb}(Q) < \mathsf{EMCb}(Q')$. Hence, $Q \prec_{\mathsf{KL}} Q'$ and $Q' \prec_{\mathsf{EMCb}} Q$.\qed 
\end{example}

\begin{remark}\label{rem:EMCb_theoretical_opt}
In a way completely analogous to the process described in Remark~\ref{rem:kullback_leibler_theoretical_opt}, we can calculate $opt_{\mathsf{EMCb},p,\mD}$. By Proposition~\ref{prop:EMCb_opt},(\ref{prop:EMCb_opt:properties_of_best_query}.) and because all queries satisfying $|\dx{}(Q)| = |\dnx{}(Q)| = \frac{|\mD|}{2}$ have an equal $\mathsf{EMCb}$ measure due to the fact that Eq.~\eqref{eq:EMCb} in this case reduces to $\frac{|\mD|}{2}$, the parameter $opt_{\mathsf{EMCb},p,\mD}$ is then exactly the value of the sought best q-partition (with empty $\dz{}$-set) w.r.t.\ $\mathsf{EMCb}$ for $\mD$ and $p$ in case the partition from which it was computed is indeed a q-partition. Otherwise, again by Proposition~\ref{prop:EMCb_opt},(\ref{prop:EMCb_opt:properties_of_best_query}.), it is an upper bound of the $\mathsf{EMCb}$ measure of the best q-partition (with empty $\dz{}$-set).\qed
\end{remark}

\begin{proposition}\label{prop:EMCb_does_not_satisfy_DPR}
$\mathsf{EMCb}$ is not consistent with the discrimination-preference relation $DPR$ (and consequently does not satisfy $DPR$ either).
\end{proposition}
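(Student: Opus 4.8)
The plan is to refute consistency directly by a single counterexample. By Definition~\ref{def:measures_equivalent_theoretically-optimal_superior}, $\mathsf{EMCb}$ is consistent with $DPR$ exactly when $(Q,Q')\in DPR$ never entails $Q'\prec_{\mathsf{EMCb}} Q$, so it suffices to produce one pair $(Q,Q')\in DPR$ with $Q'\prec_{\mathsf{EMCb}} Q$. Since a measure that satisfies $DPR$ is automatically consistent with it (by asymmetry of $\prec_{\mathsf{EMCb}}$, Proposition~\ref{prop:precedence_order_is_strict_order}), the same counterexample simultaneously shows that $\mathsf{EMCb}$ does not satisfy $DPR$, so no separate argument is needed for that clause. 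Recall that, as $\mathsf{EMCb}$ is maximized, $Q'\prec_{\mathsf{EMCb}} Q$ is equivalent to $\mathsf{EMCb}(Q')>\mathsf{EMCb}(Q)$.

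To build the example I would start from a query $Q$ with $\dz{}(Q)=\emptyset$ and obtain $Q'$ by transferring a non-empty set $\mathbf{X}\subset\dnx{}(Q)$ into $\dz{}$ (keeping the orientation), which by Proposition~\ref{prop:construction_of_Q'_from_Q_if_Q_discrimination-preferred_over_Q'} guarantees that $Q$ is discrimination-preferred to $Q'$. Writing $a:=|\dx{}(Q)|$, $c:=|\dnx{}(Q)|$, $\alpha:=p(\dx{}(Q))$, and setting $j:=|\mathbf{X}|$, $w:=p(\mathbf{X})$, a short computation using Equations~\eqref{eq:p(Q=t)} and \eqref{eq:p(Q=f)} gives the key identity
\begin{align*}
\mathsf{EMCb}(Q')-\mathsf{EMCb}(Q) = \tfrac{w}{2}\,(c-j-a) - \alpha\, j .
\end{align*}
The guiding idea (consistent with $\mathsf{EMCb}(Q)=p(Q=t)|\dnx{}(Q)|+p(Q=f)|\dx{}(Q)|$ and with Proposition~\ref{prop:EMCb_opt}) is therefore to move a \emph{small} number $j$ of diagnoses carrying a \emph{large} probability mass $w$, while keeping $c-j>a$, so that the positive term $\tfrac{w}{2}(c-j-a)$ dominates $\alpha j$ and the sign reverses.

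Concretely I would take $\mD=\setof{\md_1,\dots,\md_5}$ with
\begin{align*}
\tuple{\dx{}(Q),\dnx{}(Q),\dz{}(Q)} &= \tuple{\setof{\md_1},\setof{\md_2,\md_3,\md_4,\md_5},\emptyset}, \\
\tuple{\dx{}(Q'),\dnx{}(Q'),\dz{}(Q')} &= \tuple{\setof{\md_1},\setof{\md_3,\md_4,\md_5},\setof{\md_2}},
\end{align*}
and probabilities $\tuple{p_1,\dots,p_5}=\tuple{0.1,0.5,0.15,0.15,0.1}$ (so $j=1$, $w=0.5$, $\alpha=0.1$, and $c-j=3>a=1$). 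Here $Q$ is discrimination-preferred to $Q'$ by Proposition~\ref{prop:construction_of_Q'_from_Q_if_Q_discrimination-preferred_over_Q'} with $\mathbf{X}=\setof{\md_2}$, and plugging into Eq.~\eqref{eq:EMCb} yields $\mathsf{EMCb}(Q)=0.1\cdot4+0.9\cdot1=1.3$ while $\mathsf{EMCb}(Q')=0.35\cdot3+0.65\cdot1=1.7$, i.e.\ $Q'\prec_{\mathsf{EMCb}} Q$, as required.

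The only real obstacle is picking numbers so that all constraints hold at once: both $Q$ and $Q'$ must stay genuine queries (nonempty $\dx{}$- and $\dnx{}$-sets, cf.\ Proposition~\ref{prop:properties_of_q-partitions}), $Q$ must be discrimination-preferred to $Q'$, and the displayed difference must be strictly positive. The identity makes the requirements transparent — it forces $c-j>a$ and a transferred mass $w>\tfrac{2\alpha j}{c-j-a}$, which the chosen values satisfy comfortably. As a cross-check I would also verify the discrimination-preference straight from its definition: mapping the positive (resp.\ negative) answer of $Q'$ to the positive (resp.\ negative) answer of $Q$ is an injection under which $Q$'s positive answer eliminates $\setof{\md_2,\md_3,\md_4,\md_5}$, a proper superset of the $\setof{\md_3,\md_4,\md_5}$ eliminated by $Q'$'s positive answer, while $Q$'s negative answer eliminates exactly the $\setof{\md_1}$ eliminated by $Q'$'s negative answer.
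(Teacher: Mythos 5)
Your proof is correct and takes essentially the same route as the paper: both exhibit a concrete pair $(Q,Q')\in DPR$ obtained via Proposition~\ref{prop:construction_of_Q'_from_Q_if_Q_discrimination-preferred_over_Q'} by moving one high-probability diagnosis into $\dz{}$, then compute $\mathsf{EMCb}$ via Eq.~\eqref{eq:EMCb} to show the preference reversal (the paper transfers from $\dx{}(Q)$, you from $\dnx{}(Q)$, which is immaterial). Your general identity $\mathsf{EMCb}(Q')-\mathsf{EMCb}(Q)=\tfrac{w}{2}(c-j-a)-\alpha j$ is a nice extra that makes the choice of numbers transparent, but it does not change the underlying argument.
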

\begin{proof}
It suffices to provide an example of two queries $Q,Q'$ where $Q$ is discrimination-preferred to $Q'$ and $Q' \prec_{\mathsf{KL}} Q$. To this end, let $Q$ and $Q'$ be characterized by the following q-partitions
\begin{align*}
\langle \dx{}(Q),\dnx{}(Q),\dz{}(Q) \rangle &= \langle \setof{\md_1,\md_2,\md_3},\setof{\md_4},\emptyset \rangle \\
\langle \dx{}(Q'),\dnx{}(Q'),\dz{}(Q') \rangle &= \langle \setof{\md_1,\md_2},\setof{\md_4},\setof{\md_3} \rangle
\end{align*}
and let the diagnosis probabilities $p_i := p(\md_i)$ be as follows: \[\tuple{p_1,\dots,p_4} = \tuple{0.03,0.07,0.8,0.1}\]
It is straightforward that $Q$ is discrimination-preferred to $Q'$ (cf.\ Proposition~\ref{prop:construction_of_Q'_from_Q_if_Q_discrimination-preferred_over_Q'}).
We observe that $p(Q=t) = p(\dx{}(Q))+\frac{1}{2}p(\dz{}(Q)) = 0.03+0.07+0.8+\frac{1}{2}\cdot 0 = 0.9$, i.e.\ $p(Q=f) = 0.1$. Likewise, $p(Q'=t) = p(\dx{}(Q'))+\frac{1}{2}p(\dz{}(Q')) = 0.03+0.07+\frac{1}{2}\cdot 0.8 = 0.5$, i.e.\ $p(Q'=f) = 0.5$.

Now, plugging in the values 
\[\tuple{p(Q=t),|\dx{}(Q)|,p(Q=f),|\dnx{}(Q)|} = \tuple{0.9,3,0.1,1}\] 
into Eq.~\eqref{eq:EMCb}, we get $\mathsf{EMCb}(Q) = 0.9 \cdot 1 + 0.1 \cdot 3 = 1.2$ as a result, whereas plugging in the values 
\[\tuple{p(Q'=t),|\dx{}(Q')|,p(Q'=f),|\dnx{}(Q')|} = \tuple{0.5,2,0.5,1}\] 
yields $\mathsf{EMCb}(Q') = 0.5 \cdot 1 + 0.5 \cdot 2 = 1.5$.
Since a higher $\mathsf{EMCb}$ value means that a query is preferred by $\mathsf{EMCb}$, it holds that $Q' \prec_{\mathsf{EMCb}} Q$.   
\end{proof}
\begin{corollary}\label{cor:all_measures_satisfying_DPR_superior_to_EMCb}
All measures satisfying the discrimination-preference relation are superior to $\mathsf{EMCb}$.
\end{corollary}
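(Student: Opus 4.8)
The plan is to reduce the corollary to two results already established, in complete parallel to the (analogous) corollary proved for $\mathsf{KL}$ immediately after Proposition~\ref{prop:kl_does_not_satisfy_DPR}. The key observation is that establishing superiority of an arbitrary $DPR$-satisfying measure over $\mathsf{EMCb}$ requires no fresh analysis of $\mathsf{EMCb}$ itself: all the substantive work has already been front-loaded, on the one hand into Proposition~\ref{prop:EMCb_does_not_satisfy_DPR}, whose proof exhibits a concrete counterexample (two queries $Q,Q'$ with $Q$ discrimination-preferred to $Q'$ yet $\mathsf{EMCb}(Q)=1.2 < 1.5 = \mathsf{EMCb}(Q')$, i.e.\ $Q' \prec_{\mathsf{EMCb}} Q$) witnessing that $\mathsf{EMCb}$ is not even \emph{consistent} with $DPR$, and on the other hand into Proposition~\ref{prop:if_m1_satisfies_DPR_and_m2_does_not_then_m1_superior_to_m2}, which converts any gap in $DPR$-compliance into a superiority relation.

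Concretely, I would fix an arbitrary query quality measure $m$ that satisfies the discrimination-preference relation $DPR$ and instantiate Proposition~\ref{prop:if_m1_satisfies_DPR_and_m2_does_not_then_m1_superior_to_m2} with $m_1 := m$ and $m_2 := \mathsf{EMCb}$. The first hypothesis of that proposition, that $m_1$ satisfies $DPR$, holds by the choice of $m$. The second hypothesis, that $m_2$ does not satisfy $DPR$, is supplied by Proposition~\ref{prop:EMCb_does_not_satisfy_DPR} (indeed $\mathsf{EMCb}$ fails the stronger property of consistency, hence a fortiori fails to satisfy $DPR$). Proposition~\ref{prop:if_m1_satisfies_DPR_and_m2_does_not_then_m1_superior_to_m2} then delivers $m \prec \mathsf{EMCb}$, that is, $m$ is superior to $\mathsf{EMCb}$. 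Since $m$ was an arbitrary $DPR$-satisfying measure, this proves the claim for all such measures simultaneously.

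There is no genuine obstacle in this final step; the statement is an immediate corollary, and its entire force rests on the two cited propositions. If anything merited care it would be checking that the logical direction matches: Proposition~\ref{prop:if_m1_satisfies_DPR_and_m2_does_not_then_m1_superior_to_m2} asks only that $m_2$ \emph{not satisfy} $DPR$ (not that it be inconsistent with it), so the weaker conclusion of Proposition~\ref{prop:EMCb_does_not_satisfy_DPR} — non-satisfaction — already suffices, and the stronger non-consistency assertion is not even needed here. Thus the proof can be stated in a single sentence, exactly mirroring the $\mathsf{KL}$ case.

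\begin{proof}
Immediate from Propositions~\ref{prop:if_m1_satisfies_DPR_and_m2_does_not_then_m1_superior_to_m2} and \ref{prop:EMCb_does_not_satisfy_DPR}.
\end{proof}
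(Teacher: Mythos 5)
Your proposal is correct and coincides exactly with the paper's own proof, which likewise derives the corollary as immediate from Propositions~\ref{prop:if_m1_satisfies_DPR_and_m2_does_not_then_m1_superior_to_m2} and \ref{prop:EMCb_does_not_satisfy_DPR}. Your additional remark is also accurate: only non-satisfaction of $DPR$ by $\mathsf{EMCb}$ is needed to instantiate Proposition~\ref{prop:if_m1_satisfies_DPR_and_m2_does_not_then_m1_superior_to_m2}, so the stronger inconsistency statement plays no role here.
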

\begin{proof}
Immediate from Propositions~\ref{prop:if_m1_satisfies_DPR_and_m2_does_not_then_m1_superior_to_m2} and \ref{prop:EMCb_does_not_satisfy_DPR}.
\end{proof}

Next we introduce two other measures which seem reasonable and fit into the EMC framework, \emph{most probable singleton} and \emph{biased maximal elimination}:

\vspace{1em}

\noindent\emph{Most Probable Singleton ($\mathsf{MPS}$)}: The intention of $\mathsf{MPS}$ is the elimination of all but one diagnoses in $\mD$, i.e.\ leaving valid only a singleton diagnoses set. In order to achieve this with highest likeliness, it selects among all queries that either eliminate all but one or one diagnosis in $\mD$ the one query $Q$ whose singleton set in $\setof{\dx{}(Q),\dnx{}(Q)}$ has highest probability. 
Formally, $\mathsf{MPS}$ chooses the query
\begin{align}
Q_{\mathsf{MPS}} := \argmax_{Q\in\mQ_\mD} \left(\mathsf{MPS}(Q)\right)  
\label{eq:Q_MPS}
\end{align}
where $\mathsf{MPS}(Q) := 0$ for all queries $Q$ satisfying 
\begin{align}
\tuple{|\dx{}(Q)|,|\dnx{}(Q)|,|\dz{}(Q)|} \notin \setof{\tuple{|\mD|-1,1,0},\tuple{1,|\mD|-1,0}} \label{eq:condition_MPS}
\end{align}
and $\mathsf{MPS}(Q) := p(\mD_{Q,\min})$ where $\mD_{Q,\min} := \argmin_{\mathbf{X}\in\setof{\dx{}(Q),\dnx{}(Q)}}(|\mathbf{X}|)$ otherwise.

Note that the function $\mathsf{MPS}(Q)$ is not injective. Hence, there might be more than one maximum of the function, e.g.\ in case all diagnoses in $\mD$ have the same probability. In case of non-uniqueness of the maximum, the $\argmax$ statement in Eq.~\eqref{eq:Q_MPS} is meant to select just any of the arguments where the maximum is attained. 

The $\mathsf{MPS}$ measure fits into the EMC framework because it maximizes the probability of the maximal theoretically possible change on the current model (leading diagnoses).
\begin{proposition}\label{prop:MPS_consistent_with_DPR_but_not_satisfies_DPR}
$\mathsf{MPS}$ is consistent with the discrimination-preference relation $DPR$, but does not satisfy $DPR$.
\end{proposition}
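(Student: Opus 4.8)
The plan is to handle the two claims separately, exploiting the crucial observation that the $\mathsf{MPS}$ measure assigns the value $0$ to every query whose q-partition has a non-empty $\dz{}$-set, since the two admissible cardinality tuples $\tuple{|\mD|-1,1,0}$ and $\tuple{1,|\mD|-1,0}$ in Eq.~\eqref{eq:condition_MPS} both require $|\dz{}|=0$.

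For the consistency claim, I would take two queries $Q,Q'$ with $Q$ discrimination-preferred to $Q'$ and show that $Q' \prec_{\mathsf{MPS}} Q$ cannot hold. By Proposition~\ref{prop:if_Q_discrimination-preferred_over_Q'_then_dz(Q')_supset_dz(Q)} we have $\dz{}(Q') \supset \dz{}(Q)$, so in particular $\dz{}(Q') \neq \emptyset$. Hence the cardinality tuple of $Q'$ matches neither of the two admissible patterns in Eq.~\eqref{eq:condition_MPS}, and therefore $\mathsf{MPS}(Q') = 0$ by definition. Since $\mathsf{MPS}(Q)$ is by definition either $0$ or a probability $p(\mD_{Q,\min})$, which is non-negative, we obtain $\mathsf{MPS}(Q) \geq 0 = \mathsf{MPS}(Q')$. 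As higher $\mathsf{MPS}$ values mean higher preference, $Q' \prec_{\mathsf{MPS}} Q$ would require the strict inequality $\mathsf{MPS}(Q') > \mathsf{MPS}(Q)$ (cf.\ Definition~\ref{def:precedence_order_defined_by_q-partition_quality_measure}), which is impossible. Thus $\mathsf{MPS}$ is consistent with $DPR$.

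For the non-satisfaction claim, I would exhibit a counterexample in the same abstract style as used in the proof of Proposition~\ref{prop:spl_consistent_with_DPR_but_does_not_satisfy_DPR}. Taking a set of leading diagnoses with $|\mD| = 4$, I would set $\tuple{\dx{}(Q),\dnx{}(Q),\dz{}(Q)} = \tuple{\setof{\md_1,\md_2},\setof{\md_3,\md_4},\emptyset}$ and $\tuple{\dx{}(Q'),\dnx{}(Q'),\dz{}(Q')} = \tuple{\setof{\md_1},\setof{\md_3,\md_4},\setof{\md_2}}$. By Proposition~\ref{prop:construction_of_Q'_from_Q_if_Q_discrimination-preferred_over_Q'} (with $\mathbf{X} = \setof{\md_2}$), $Q$ is discrimination-preferred to $Q'$. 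Both q-partitions fail the admissibility condition of Eq.~\eqref{eq:condition_MPS} (the first because its cardinality tuple is $\tuple{2,2,0}$, the second because $\dz{}(Q')\neq\emptyset$), so $\mathsf{MPS}(Q) = \mathsf{MPS}(Q') = 0$. Consequently $Q \prec_{\mathsf{MPS}} Q'$ does not hold, witnessing that $\mathsf{MPS}$ does not satisfy $DPR$.

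I do not anticipate a genuine obstacle here: the entire argument rests on the single structural fact that discrimination-preference forces a non-empty $\dz{}$ in the inferior query, which collapses its $\mathsf{MPS}$ value to $0$. The only point requiring mild care is ensuring the counterexample queries are legitimate (both have non-empty $\dx{}$ and $\dnx{}$, as required by Proposition~\ref{prop:properties_of_q-partitions}) and that the preferred query $Q$ is itself chosen outside the special singleton form so that $\mathsf{MPS}(Q)=0$ rather than a positive value — otherwise the strict preference $Q \prec_{\mathsf{MPS}} Q'$ would actually hold and the example would fail to be a counterexample.
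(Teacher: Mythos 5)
Your proof is correct and follows essentially the same route as the paper's: consistency is derived from Proposition~\ref{prop:if_Q_discrimination-preferred_over_Q'_then_dz(Q')_supset_dz(Q)} forcing $\dz{}(Q')\neq\emptyset$ and hence $\mathsf{MPS}(Q')=0$, and non-satisfaction is shown by a concrete pair of q-partitions both collapsing to $\mathsf{MPS}$ value $0$ (the paper's counterexample gives the preferred query a non-empty $\dz{}$-set, yours gives it an empty $\dz{}$-set with non-singleton parts, but the mechanism is identical). Your streamlining of the paper's two-case analysis into the single observation $\mathsf{MPS}(Q)\geq 0=\mathsf{MPS}(Q')$ is a cosmetic, not substantive, difference.
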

\begin{proof}
Assume two queries $Q,Q'$ such that $(Q,Q') \in DPR$. 
Then, by Proposition~\ref{prop:if_Q_discrimination-preferred_over_Q'_then_dz(Q')_supset_dz(Q)}, $Q'$ must feature a non-empty set $\dz{}(Q')$.
In case $\dz{}(Q)=\emptyset$ and one of $|\dx{}(Q)| = 1$ or $|\dnx{}(Q)| = 1$ holds, we obtain $p(\mD_{Q,\min}) = \mathsf{MPS}(Q) > \mathsf{MPS}(Q') = 0$ (as $|\mD_{Q,\min}| = 1$ and each diagnosis in $\mD$ has non-zero probability, cf.\ page~\pageref{etc:prob_of_each_diag_must_be_greater_zero}). Otherwise, we have that $\mathsf{MPS}(Q) = \mathsf{MPS}(Q') = 0$. In both cases we can conclude that $Q' \prec_{\mathsf{MPS}} Q$ does not hold. 

To see why $\mathsf{MPS}$ does not satisfy $DPR$, consider the two queries $Q$ and $Q'$ characterized by the following q-partitions
\begin{align*}
\langle \dx{}(Q),\dnx{}(Q),\dz{}(Q) \rangle &= \langle \setof{\md_1,\md_2},\setof{\md_3},\setof{\md_4} \rangle \\
\langle \dx{}(Q'),\dnx{}(Q'),\dz{}(Q') \rangle &= \langle \setof{\md_1},\setof{\md_3},\setof{\md_2,\md_4} \rangle
\end{align*}
Obviously, $Q$ is discrimination-preferred to $Q'$, i.e.\ $(Q,Q')\in DPR$ (cf.\ Proposition~\ref{prop:construction_of_Q'_from_Q_if_Q_discrimination-preferred_over_Q'}). But, as $\dz{}(Q) \neq \emptyset$ and $\dz{}(Q') \neq \emptyset$, we point out that both queries are assigned a zero $\mathsf{MPS}$ value which is why $Q \prec_{\mathsf{MPS}} Q'$ does not hold.
\end{proof}
In fact, we can slightly modify the $\mathsf{MPS}$ measure such that the resulting measure $\mathsf{MPS}'$ satisfies the discrimination-preference relation and the set of query tuples satisfying the relation $\prec_{\mathsf{MPS}'}$ is a superset of the set of query tuples satisfying the relation $\prec_{\mathsf{MPS}}$. The latter means in other words that given $\mathsf{MPS}$ prefers a query to another one, $\mathsf{MPS}'$ will do so as well. So, the order imposed by $\mathsf{MPS}$ on any set of queries is maintained by $\mathsf{MPS}'$, just that $\mathsf{MPS}'$ includes some more preference tuples in order to comply with the discrimination-preference order.
The $\mathsf{MPS}'$ is defined so as to select the query
\begin{align}
Q_{\mathsf{MPS}'} := \argmax_{Q\in\mQ_\mD} \left(\mathsf{MPS}'(Q)\right)  
\label{eq:Q_MPS'}
\end{align}
where $\mathsf{MPS}'(Q) := -|\dz{}(Q)|$ for all queries $Q$ satisfying the condition given by Eq.~\eqref{eq:condition_MPS} and $\mathsf{MPS}'(Q) := \mathsf{MPS}(Q)$ otherwise.
\begin{proposition}\label{prop:MPS'_satisfies_DPR}
$\mathsf{MPS}'$ satisfies the discrimination-preference relation $DPR$. Further, $Q \prec_{\mathsf{MPS}'} Q'$ whenever $Q \prec_{\mathsf{MPS}} Q'$ for all queries $Q,Q' \in \mQ_{\mD,\tuple{\mo,\mb,\Tp,\Tn}_\RQ}$ for any DPI $\tuple{\mo,\mb,\Tp,\Tn}_\RQ$ and any $\mD \subseteq \minD_{\tuple{\mo,\mb,\Tp,\Tn}_\RQ}$.
\end{proposition}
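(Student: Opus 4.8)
The plan is to exploit the simple two-tier structure shared by $\mathsf{MPS}$ and $\mathsf{MPS}'$. Call a query $Q$ \emph{special} if its triple $\tuple{|\dx{}(Q)|,|\dnx{}(Q)|,|\dz{}(Q)|}$ lies in $\setof{\tuple{|\mD|-1,1,0},\tuple{1,|\mD|-1,0}}$, i.e.\ if Eq.~\eqref{eq:condition_MPS} fails, and \emph{non-special} otherwise. First I would record the following observation, which drives everything: on special queries $\mathsf{MPS}'(Q)=\mathsf{MPS}(Q)=p(\mD_{Q,\min})>0$, where the positivity uses $p(\md)>0$ for all $\md\in\mD$ (cf.\ page~\pageref{etc:prob_of_each_diag_must_be_greater_zero}) together with $|\mD_{Q,\min}|=1$; on non-special queries $\mathsf{MPS}'(Q)=-|\dz{}(Q)|\leq 0 = \mathsf{MPS}(Q)$. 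Since both measures are maximized ($\argmax$), higher value means higher preference. Thus $\mathsf{MPS}'$ places every special query strictly above every non-special one (exactly as $\mathsf{MPS}$ does), but in addition orders the non-special queries among themselves by $-|\dz{}(Q)|$, whereas $\mathsf{MPS}$ collapses them all to $0$.

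For the first claim (satisfaction of $DPR$) I would take $(Q,Q')\in DPR$ and invoke Proposition~\ref{prop:construction_of_Q'_from_Q_if_Q_discrimination-preferred_over_Q'}: $Q'$ arises from $Q$ by transferring a non-empty $\mathbf{X}\subset\dx{}(Q)\cup\dnx{}(Q)$ into $\dz{}(Q)$ (possibly swapping the $\dx{}$/$\dnx{}$ roles), so $\dz{}(Q')=\dz{}(Q)\cup\mathbf{X}$ and hence $|\dz{}(Q')|=|\dz{}(Q)|+|\mathbf{X}|>|\dz{}(Q)|$ with $\dz{}(Q')\neq\emptyset$ (this also follows from Proposition~\ref{prop:if_Q_discrimination-preferred_over_Q'_then_dz(Q')_supset_dz(Q)}). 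Because $\dz{}(Q')\neq\emptyset$, the query $Q'$ is necessarily non-special, so $\mathsf{MPS}'(Q')=-|\dz{}(Q')|<0$. Now I split on $Q$: if $Q$ is special then $\mathsf{MPS}'(Q)=p(\mD_{Q,\min})>0>\mathsf{MPS}'(Q')$; if $Q$ is non-special then both values are of the form $-|\dz{}(\cdot)|$ and $\mathsf{MPS}'(Q)-\mathsf{MPS}'(Q')=|\dz{}(Q')|-|\dz{}(Q)|=|\mathbf{X}|>0$. Either way $\mathsf{MPS}'(Q)>\mathsf{MPS}'(Q')$, i.e.\ $Q\prec_{\mathsf{MPS}'}Q'$, which is precisely the $DPR$ condition of Definition~\ref{def:measures_equivalent_theoretically-optimal_superior}.

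For the second claim (order preservation), I would start from $Q\prec_{\mathsf{MPS}}Q'$, i.e.\ $\mathsf{MPS}(Q)>\mathsf{MPS}(Q')\geq 0$. This forces $\mathsf{MPS}(Q)>0$, hence $Q$ is special and $\mathsf{MPS}'(Q)=\mathsf{MPS}(Q)$. I then split on $Q'$: if $Q'$ is non-special, then $\mathsf{MPS}'(Q')=-|\dz{}(Q')|\leq 0<\mathsf{MPS}'(Q)$; if $Q'$ is special, then $\mathsf{MPS}'$ and $\mathsf{MPS}$ coincide on both queries, so $\mathsf{MPS}'(Q)=\mathsf{MPS}(Q)>\mathsf{MPS}(Q')=\mathsf{MPS}'(Q')$. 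In both cases $Q\prec_{\mathsf{MPS}'}Q'$, establishing that $\mathsf{MPS}'$ retains every preference of $\mathsf{MPS}$.

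I do not expect a genuine obstacle here: there is no analysis or optimization, only a finite case distinction. The one point requiring care is the bookkeeping around the ``special'' condition Eq.~\eqref{eq:condition_MPS} — in particular noting that any query obtained by enlarging $\dz{}$ cannot be special, and that $Q\prec_{\mathsf{MPS}}Q'$ can only hold when $Q$ itself is special — together with the correct reading that higher $\mathsf{MPS}$/$\mathsf{MPS}'$ values indicate higher preference when the selection rule is $\argmax$ (cf.\ Definition~\ref{def:precedence_order_defined_by_q-partition_quality_measure}).
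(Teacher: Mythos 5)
Your proof is correct and takes essentially the same route as the paper's: both use the fact that $\dz{}(Q') \supset \dz{}(Q)$ (hence $Q'$ cannot meet the singleton-cardinality pattern), then split on whether $Q$ meets it, and for the second claim both observe that $Q \prec_{\mathsf{MPS}} Q'$ forces $\mathsf{MPS}(Q) > 0$ so that $Q$ must be of the singleton form, followed by the same two-case analysis on $Q'$. The only differences are cosmetic: your ``special/non-special'' bookkeeping and your appeal to Proposition~\ref{prop:construction_of_Q'_from_Q_if_Q_discrimination-preferred_over_Q'}, where the paper invokes Proposition~\ref{prop:if_Q_discrimination-preferred_over_Q'_then_dz(Q')_supset_dz(Q)} directly.
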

\begin{proof}
Let $Q,Q'$ be two queries such that $(Q,Q') \in DPR$. Then, $\dz{}(Q') \supset \dz{}(Q)$ by Proposition~\ref{prop:if_Q_discrimination-preferred_over_Q'_then_dz(Q')_supset_dz(Q)}, i.e.\ $|\dz{}(Q')| > |\dz{}(Q)| \geq 0$. Hence, in case $Q$ satisfies $\tuple{|\dx{}(Q)|,|\dnx{}(Q)|,|\dz{}(Q)|} \notin \setof{\tuple{|\mD|-1,1,0},\tuple{1,|\mD|-1,0}}$, we conclude that $\mathsf{MPS}'(Q') = -|\dz{}(Q')| < -|\dz{}(Q)| = \mathsf{MPS}'(Q)$. That is, $Q \prec{\mathsf{MPS}'} Q'$. Otherwise, $\mathsf{MPS}'(Q) = \mathsf{MPS}(Q) > 0$. This holds since $\mathsf{MPS}(Q) := p(\mD_{Q,\min})$ where $|\mD_{Q,\min}| = 1$, i.e.\ $\mD_{Q,\min}$ includes one diagnosis, and each diagnosis has a probability greater than zero (cf.\ page~\pageref{etc:prob_of_each_diag_must_be_greater_zero}). In that, $|\mD_{Q,\min}| = 1$ holds since $\mD_{Q,\min} := \argmin_{\mathbf{X}\in\setof{\dx{}(Q),\dnx{}(Q)}}(|\mathbf{X}|)$ and $\tuple{|\dx{}(Q)|,|\dnx{}(Q)|,|\dz{}(Q)|} \in \setof{\tuple{|\mD|-1,1,0},\tuple{1,|\mD|-1,0}}$. All in all, we have derived that $\mathsf{MPS}'(Q') = -|\dz{}(Q')| < 0$ and $\mathsf{MPS}'(Q) > 0$ which implies $\mathsf{MPS}'(Q) >\mathsf{MPS}'(Q')$, i.e.\ $Q \prec{\mathsf{MPS}'} Q'$. Thence, $\mathsf{MPS}'$ satisfies $DPR$.

Second, assume two queries $Q,Q' \in \mQ_{\mD,\tuple{\mo,\mb,\Tp,\Tn}_\RQ}$ and let $Q \prec_{\mathsf{MPS}} Q'$, i.e.\ $\mathsf{MPS}(Q) > \mathsf{MPS}(Q')$. Since $\mathsf{MPS}$ is either positive (a non-zero probability) in case Eq.~\eqref{eq:condition_MPS} is met or zero otherwise, we have that $\mathsf{MPS}(Q_i) \geq 0$ for all $Q_i \in \mQ_{\mD,\tuple{\mo,\mb,\Tp,\Tn}_\RQ}$. Therefore, $Q$ cannot satisfy Eq.~\eqref{eq:condition_MPS} as this would imply that $\mathsf{MPS}(Q') < 0$. Due to this fact, we obtain that $0 < \mathsf{MPS}(Q) = \mathsf{MPS}'(Q)$ by the definition of $\mathsf{MPS}'(Q)$ and the fact that the probability of each diagnosis in greater than zero. As a consequence, there are two cases to distinguish: (a)~$Q'$ does and (b)~does not satisfy Eq.~\eqref{eq:condition_MPS}. Given case (a), it clearly holds that $\mathsf{MPS}'(Q') \leq 0$ by the definition of $\mathsf{MPS}'(Q')$. So, combining the results, we get $\mathsf{MPS}'(Q') \leq 0 < \mathsf{MPS}'(Q)$ which is equivalent to $Q \prec_{\mathsf{MPS}'} Q'$. If case (b) arises, $\mathsf{MPS}(Q') = \mathsf{MPS}'(Q')$ by the definition of $\mathsf{MPS}'(Q)$. Hence, both $\mathsf{MPS}(Q) = \mathsf{MPS}'(Q)$ and $\mathsf{MPS}(Q') = \mathsf{MPS}'(Q')$. However, by assumption $\mathsf{MPS}(Q) > \mathsf{MPS}(Q')$. Thus, $\mathsf{MPS}'(Q) > \mathsf{MPS}'(Q')$ which is why $Q \prec_{\mathsf{MPS}'} Q'$.  
\end{proof}
\begin{corollary}\label{cor:MPS_equiv_MPS'_for_mQ_incl_only_dz=0_queries}
Let $\mQ$ be any set of queries w.r.t.\ a DPI $\tuple{\mo,\mb,\Tp,\Tn}_\RQ$ and a set of leading diagnoses $\mD$ w.r.t.\ this DPI such that each query $Q\in\mQ$ satisfies $\dz{}(Q) = \emptyset$. Then $\mathsf{MPS} \equiv_{\mQ} \mathsf{MPS}'$.
\end{corollary}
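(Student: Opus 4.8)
The plan is to prove the two measures coincide \emph{pointwise} on $\mQ$, from which $\mQ$-equivalence follows at once. Since both $\mathsf{MPS}$ and $\mathsf{MPS}'$ are maximized criteria (their optimal queries are obtained via $\argmax$, cf.\ Eq.~\eqref{eq:Q_MPS}), Definition~\ref{def:precedence_order_defined_by_q-partition_quality_measure} tells us that $Q \prec_{\mathsf{MPS}} Q'$ holds iff $\mathsf{MPS}(Q) > \mathsf{MPS}(Q')$, and analogously for $\mathsf{MPS}'$. Hence, if I can show $\mathsf{MPS}(Q) = \mathsf{MPS}'(Q)$ for every $Q \in \mQ$, then for any pair $Q,Q' \in \mQ$ the inequality $\mathsf{MPS}(Q) > \mathsf{MPS}(Q')$ is equivalent to $\mathsf{MPS}'(Q) > \mathsf{MPS}'(Q')$, which is precisely $\mathsf{MPS} \equiv_{\mQ} \mathsf{MPS}'$ as per Definition~\ref{def:measures_equivalent_theoretically-optimal_superior}.

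To establish the pointwise agreement, I would fix an arbitrary $Q \in \mQ$ and use the hypothesis $\dz{}(Q) = \emptyset$, i.e.\ $|\dz{}(Q)| = 0$, then split into the two cases on which the definitions of $\mathsf{MPS}$ and $\mathsf{MPS}'$ branch, namely whether or not $Q$ satisfies the condition of Eq.~\eqref{eq:condition_MPS}. If $Q$ satisfies Eq.~\eqref{eq:condition_MPS}, then by definition $\mathsf{MPS}(Q) = 0$ while $\mathsf{MPS}'(Q) = -|\dz{}(Q)|$; but $|\dz{}(Q)| = 0$ forces $\mathsf{MPS}'(Q) = 0$, so $\mathsf{MPS}(Q) = \mathsf{MPS}'(Q) = 0$. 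If instead $Q$ violates Eq.~\eqref{eq:condition_MPS}, then the definition of $\mathsf{MPS}'$ sets $\mathsf{MPS}'(Q) := \mathsf{MPS}(Q)$ directly, so the two values agree trivially. In both cases $\mathsf{MPS}(Q) = \mathsf{MPS}'(Q)$, as required.

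There is essentially no obstacle here, and the proof is a short two-case verification. The only subtlety — and the single place where the restriction to queries with $\dz{}(Q) = \emptyset$ is actually used — is the observation that the penalty term $-|\dz{}(Q)|$, which is the \emph{only} feature distinguishing $\mathsf{MPS}'$ from $\mathsf{MPS}$, vanishes exactly when $\dz{}(Q)$ is empty. This collapses the one branch on which the two definitions could differ and reduces the whole claim to a trivial identity of assigned real numbers.
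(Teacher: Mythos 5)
Your proof is correct and follows exactly the route the paper intends: the paper's proof simply states the result "is a direct consequence of the definitions of $\mathsf{MPS}$ and $\mathsf{MPS}'$," and your two-case verification (penalty term $-|\dz{}(Q)|$ vanishes when $\dz{}(Q)=\emptyset$; otherwise $\mathsf{MPS}'(Q):=\mathsf{MPS}(Q)$ by definition) is precisely that consequence spelled out, yielding pointwise equality and hence $\equiv_{\mQ}$.
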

\begin{proof}
Is a direct consequence of the definitions of $\mathsf{MPS}$ and $\mathsf{MPS}'$.
\end{proof}
\begin{corollary}\label{cor:MPS'_superior_to_MPS}
$\mathsf{MPS}'$ is superior to $\mathsf{MPS}$.
\end{corollary}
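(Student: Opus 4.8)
The plan is to reduce the claim to the two immediately preceding propositions together with the generic criterion for superiority given in Proposition~\ref{prop:if_m1_satisfies_DPR_and_m2_does_not_then_m1_superior_to_m2}. That criterion states that whenever one measure satisfies the discrimination-preference relation $DPR$ and another does not, the former is superior to the latter. I would therefore first note that by Proposition~\ref{prop:MPS'_satisfies_DPR} the measure $\mathsf{MPS}'$ satisfies $DPR$, whereas by Proposition~\ref{prop:MPS_consistent_with_DPR_but_not_satisfies_DPR} the measure $\mathsf{MPS}$ does not satisfy $DPR$ (it is merely consistent with it). Instantiating the generic criterion with its satisfying measure taken to be $\mathsf{MPS}'$ and its non-satisfying measure taken to be $\mathsf{MPS}$ then yields directly that $\mathsf{MPS}'$ is superior to $\mathsf{MPS}$. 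This is exactly the pattern already used in the proofs of Corollary~\ref{cor:all_measures_satisfying_DPR_superior_to_EMCb} and its $\mathsf{KL}$-analogue.

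If one prefers a self-contained verification, I would instead check the three conditions of the definition of superiority (Definition~\ref{def:measures_equivalent_theoretically-optimal_superior}) with the roles $m_2 := \mathsf{MPS}'$ (the superior candidate) and $m_1 := \mathsf{MPS}$. For conditions (1.)\ and (2.), the explicit pair $Q,Q'$ constructed in the proof of Proposition~\ref{prop:MPS_consistent_with_DPR_but_not_satisfies_DPR} witnesses a tuple $(Q,Q') \in DPR$ for which $\lnot(Q \prec_{\mathsf{MPS}} Q')$; since $\mathsf{MPS}'$ satisfies $DPR$, that same pair gives $Q \prec_{\mathsf{MPS}'} Q'$, establishing both conditions at once. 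Condition (3.)\ holds vacuously: because $\mathsf{MPS}'$ satisfies $DPR$, there exists no tuple $(Q,Q') \in DPR$ with $\lnot(Q \prec_{\mathsf{MPS}'} Q')$, so the universally quantified premise of condition (3.)\ is never met and the condition is trivially true.

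I do not anticipate any genuine obstacle, since all the substantive work has already been carried out in Propositions~\ref{prop:MPS_consistent_with_DPR_but_not_satisfies_DPR} and \ref{prop:MPS'_satisfies_DPR}, namely constructing a counterexample witnessing that $\mathsf{MPS}$ violates $DPR$ and showing that the repaired measure $\mathsf{MPS}'$ respects $DPR$ while still extending the preference order of $\mathsf{MPS}$. The only points requiring a little care are the bookkeeping of which measure is assigned to $m_1$ and which to $m_2$ in the (asymmetric) superiority definition, and the observation that condition (3.)\ is satisfied vacuously rather than by an explicit argument over query pairs.
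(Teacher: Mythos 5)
Your proposal is correct and matches the paper's own argument: the paper proves the corollary exactly as in your second, self-contained route, citing the counterexample pair from the proof of Proposition~\ref{prop:MPS_consistent_with_DPR_but_not_satisfies_DPR} for bullets (1.)\ and (2.)\ of Definition~\ref{def:measures_equivalent_theoretically-optimal_superior} and Proposition~\ref{prop:MPS'_satisfies_DPR} for bullet (3.). Your alternative shortcut via Proposition~\ref{prop:if_m1_satisfies_DPR_and_m2_does_not_then_m1_superior_to_m2} is also sound and is the same packaging the paper uses for the analogous corollaries concerning $\mathsf{EMCb}$ and $\mathsf{BME}$.
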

\begin{proof}
An example of two queries $Q,Q'$ where $(Q,Q') \in DPR$ and $Q \prec_{\mathsf{MPS}} Q'$ does not hold was stated in the proof of Proposition~\ref{prop:MPS_consistent_with_DPR_but_not_satisfies_DPR}. Moreover, Proposition~\ref{prop:MPS'_satisfies_DPR} testifies that $\mathsf{MPS}'$ satisfies the $DPR$. Hence, all three bullets of the definition of superiority (Definition~\ref{def:measures_equivalent_theoretically-optimal_superior}) are met.
\end{proof}

The charm of the $\mathsf{MPS}$ and $\mathsf{MPS}'$ measures lies in the fact that we can, in theory, derive the \emph{exact} shape of the q-partition of the best query (which might be one of potentially multiple best queries) w.r.t.\ $\mathsf{MPS}$ or $\mathsf{MPS}'$, respectively -- and not only properties it must meet. What is more, it is guaranteed that such a best query must exist for any concrete DPI and set of leading diagnoses. This is confirmed by the next proposition. Note that this means that no (heuristic) search is necessary to determine the optimal query w.r.t.\ $\mathsf{MPS}$ or $\mathsf{MPS}'$, respectively.
\begin{proposition}\label{prop:best_query_wrt_MPS_has_q-partition}
Let $\langle\mo,\mb,\Tp,\Tn\rangle_\RQ$ be an arbitrary DPI and $\mD \subseteq \minD_{\langle\mo,\mb,\Tp,\Tn\rangle_\RQ}$. Then
one query $Q \in \mQ_{\mD,\langle\mo,\mb,\Tp,\Tn\rangle_\RQ}$ that is considered best in $\mQ_{\mD,\langle\mo,\mb,\Tp,\Tn\rangle_\RQ}$ by $\mathsf{MPS}$ (i.e.\ there is no query $Q' \in \mQ_{\mD,\langle\mo,\mb,\Tp,\Tn\rangle_\RQ}$ such that $Q' \prec_{\mathsf{MPS}} Q$) has the q-partition $\tuple{\setof{\md},\mD\setminus\setof{\md},\emptyset}$ where $p(\md) \geq p(\md')$ for all $\md' \in \mD$. The same holds if $\mathsf{MPS}$ is replaced by $\mathsf{MPS}'$.
\end{proposition}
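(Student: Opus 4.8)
The plan is to exhibit one explicit query that realises the claimed q-partition and then to argue, by a short case distinction on the definition of $\mathsf{MPS}$, that no query can be strictly preferred to it. Throughout I may assume $|\mD| \geq 2$, since otherwise $\mQ_{\mD,\langle\mo,\mb,\Tp,\Tn\rangle_\RQ}$ is empty (every q-partition has $\dx{} \neq \emptyset$ and $\dnx{} \neq \emptyset$ by Proposition~\ref{prop:properties_of_q-partitions},(\ref{prop:properties_of_q-partitions:enum:for_each_q-partition_dx_is_empty_and_dnx_is_empty}.), forcing $|\mD|\geq 2$) and the statement then holds vacuously. As $\mD$ is finite and non-empty, I first pick some $\md \in \mD$ with $p(\md) \geq p(\md')$ for all $\md' \in \mD$. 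By Proposition~\ref{prop:properties_of_q-partitions},(\ref{prop:properties_of_q-partitions:enum:D+=d_i_is_q-partition_and_lower_bound_of_queries}.), the set $U_\mD \setminus \md$ is then a query $Q_{\md}$ whose associated q-partition is exactly $\tuple{\setof{\md},\mD\setminus\setof{\md},\emptyset}$. Its cardinality triple is $\tuple{1,|\mD|-1,0}$, which lies in the branch of the definition where $\mathsf{MPS}$ is evaluated as $p(\mD_{Q,\min})$; since the smaller of the two sets $\dx{}(Q_{\md}),\dnx{}(Q_{\md})$ is the singleton $\setof{\md}$, I obtain $\mathsf{MPS}(Q_{\md}) = p(\md)$.

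Next I would establish that $p(\md)$ is an upper bound for $\mathsf{MPS}$ over all queries, which yields optimality once I recall that $\mathsf{MPS}$ is maximised, so that $Q' \prec_{\mathsf{MPS}} Q_{\md}$ (i.e.\ $Q'$ preferred to $Q_{\md}$) means $\mathsf{MPS}(Q') > \mathsf{MPS}(Q_{\md})$ (cf.\ Definition~\ref{def:precedence_order_defined_by_q-partition_quality_measure}). Take an arbitrary query $Q'$. If its cardinality triple satisfies Eq.~\eqref{eq:condition_MPS}, then $\mathsf{MPS}(Q') = 0$, and since every diagnosis has strictly positive probability (cf.\ page~\pageref{etc:prob_of_each_diag_must_be_greater_zero}) we get $\mathsf{MPS}(Q') = 0 < p(\md) = \mathsf{MPS}(Q_{\md})$. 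Otherwise the triple equals $\tuple{|\mD|-1,1,0}$ or $\tuple{1,|\mD|-1,0}$, so the min-cardinality block $\mD_{Q',\min}$ is a singleton $\setof{\md''}$ and $\mathsf{MPS}(Q') = p(\md'') \leq p(\md)$ by the maximality of $p(\md)$. In either case $\mathsf{MPS}(Q') \leq \mathsf{MPS}(Q_{\md})$, so no query is strictly preferred to $Q_{\md}$, which proves the claim for $\mathsf{MPS}$.

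For $\mathsf{MPS}'$ I would rerun the identical argument, noting first that $Q_{\md}$ again lies in the branch where $\mathsf{MPS}'$ coincides with $\mathsf{MPS}$, so $\mathsf{MPS}'(Q_{\md}) = p(\md)$. For an arbitrary $Q'$: if $Q'$ satisfies Eq.~\eqref{eq:condition_MPS} then $\mathsf{MPS}'(Q') = -|\dz{}(Q')| \leq 0 < p(\md)$, and otherwise $\mathsf{MPS}'(Q') = \mathsf{MPS}(Q') = p(\md'') \leq p(\md)$ exactly as before. Hence $\mathsf{MPS}'(Q') \leq \mathsf{MPS}'(Q_{\md})$ for every query, and the same q-partition is optimal.

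The argument is short, and its only delicate points are bookkeeping rather than conceptual. I must keep track of the orientation of $\prec_{\mathsf{MPS}}$ (a maximisation measure), and I must use the positivity of diagnosis probabilities — this is precisely what separates the candidate $Q_{\md}$, with value $p(\md) > 0$, from every query in the zero-valued (respectively non-positive, for $\mathsf{MPS}'$) branch. The one genuine edge case worth mentioning is $|\mD| = 2$, where both $\dx{}(Q')$ and $\dnx{}(Q')$ are singletons and the inner $\argmin$ defining $\mD_{Q',\min}$ ties on cardinality; this does not affect the upper bound, since whichever singleton is selected has probability at most $p(\md)$, while $Q_{\md}$ still attains $p(\md)$. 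I therefore expect no substantive obstacle beyond careful invocation of Proposition~\ref{prop:properties_of_q-partitions},(\ref{prop:properties_of_q-partitions:enum:D+=d_i_is_q-partition_and_lower_bound_of_queries}.) and of the probability-positivity assumption.
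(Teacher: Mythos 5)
Your proof is correct and follows essentially the same route as the paper's: exhibit the singleton q-partition $\tuple{\setof{\md},\mD\setminus\setof{\md},\emptyset}$ built from a most probable diagnosis (a query by Proposition~\ref{prop:properties_of_q-partitions}), observe that it attains the positive value $p(\md)$, and note that every query in the zero-valued branch of Eq.~\eqref{eq:condition_MPS} (resp.\ non-positive branch, for $\mathsf{MPS}'$) is strictly worse while every other singleton-shaped query is bounded above by $p(\md)$. The only cosmetic difference is that the paper treats $\mathsf{MPS}'$ by contradiction, whereas you make the same value comparison directly (and you additionally spell out the $|\mD|=2$ tie and the vacuous case, which the paper leaves implicit).
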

\begin{proof}
By \cite[Prop.~7.5]{Rodler2015phd} each partition $\tuple{\setof{\md},\mD\setminus\setof{\md},\emptyset}$ is a q-partition w.r.t.\ any set of leading diagnoses $\mD$ where $|\mD| \geq 2$ (note that the latter is a necessary requirement for queries in $\mQ_\mD$ to exist, cf.\ Proposition~\ref{prop:properties_of_q-partitions},(\ref{prop:properties_of_q-partitions:enum:for_each_q-partition_dx_is_empty_and_dnx_is_empty}.)). Each such q-partition has a positive $\mathsf{MPS}$ value due to the fact that its $\dz{}$-set is empty. Since all queries $Q$ where neither $\dx{}(Q)$ nor $\dnx{}(Q)$ is a singleton feature $\mathsf{MPS}(Q) = 0$, the statement of the Proposition follows from the definition of $\mD_{Q,\min}$ and $Q_{\mathsf{MPS}}$.
%

We show that the same must hold for $\mathsf{MPS}'$ by contradiction. Suppose $\mathsf{MPS}'$ does not consider the query with the given q-partition as a best one among $\mQ_{\mD,\langle\mo,\mb,\Tp,\Tn\rangle_\RQ}$. Then, since $\mathsf{MPS}'$ equals $\mathsf{MPS}$, by definition, for queries not satisfying Eq.~\eqref{eq:condition_MPS} and all queries with associated q-partitions of the form $\tuple{\setof{\md},\mD\setminus\setof{\md},\emptyset}$ do not satisfy Eq.~\eqref{eq:condition_MPS}, the best query w.r.t.\ $\mathsf{MPS}'$ must not be of the form $\tuple{\setof{\md},\mD\setminus\setof{\md},\emptyset}$. There are two possible types of q-partitions remaining which could be regarded best by $\mathsf{MPS}'$, either $\tuple{\mD\setminus\setof{\md},\setof{\md},\emptyset}$ or some q-partition that does meet Eq.~\eqref{eq:condition_MPS}. In the former case, the value returned for any such q-partition can be at most the value achieved for the best q-partition of the form $\tuple{\setof{\md},\mD\setminus\setof{\md},\emptyset}$. This would imply that $\mathsf{MPS}'$ considers the same q-partition as a best one as $\mathsf{MPS}$, contradiction. The latter case entails a contradiction in the same style due to the fact that, for each q-partition satisfying Eq.~\eqref{eq:condition_MPS}, the $\mathsf{MPS}'$ measure is less than or equal to zero and thus smaller than for all q-partitions of the form $\tuple{\setof{\md},\mD\setminus\setof{\md},\emptyset}$, in particular smaller than for the one q-partition considered best by $\mathsf{MPS}$ and hence, as argued above, smaller than for the one q-partition considered best by $\mathsf{MPS}'$.
\end{proof}

\vspace{1em}

\noindent\emph{Biased Maximal Elimination ($\mathsf{BME}$)}:
The idea underlying $\mathsf{BME}$ is to prefer a situation where a maximal number of the leading diagnoses $\mD$, i.e.\ at least half of them, can be eliminated with a probability of more than $0.5$. Formally, this can be expressed as preferring the query 
\begin{align*}
Q_{\mathsf{BME}} := \argmax_{Q\in\mQ_\mD} \left(\mathsf{BME}(Q)\right) \quad \text{ where } \quad \mathsf{BME}(Q) := |\mD_{Q,p,\min}|
\end{align*}
where 
\[
    \mD_{Q,p,\min} := \left\{\begin{array}{lr}
        \dnx{}(Q), & \quad\text{if } p(\dnx{}(Q)) < p(\dx{}(Q))\\
        \dx{}(Q), & \quad\text{if } p(\dnx{}(Q)) > p(\dx{}(Q))\\
        0, & \text{otherwise } 
        \end{array}
				\right.
\]

At this, $\mD_{Q,p,\min}$ is exactly the subset of $\mD$ that is invalidated if the more probable answer to $Q$ is actually given. Because, if say $t$ is the more likely answer, i.e.\ $p(Q = t) = p(\dx{}(Q)) + \frac{1}{2}p(\dz{}(Q)) > p(Q=f) = p(\dnx{}(Q)) + \frac{1}{2}p(\dz{}(Q))$, then $p(\dx{}(Q)) > p(\dnx{}(Q))$, i.e.\ $\mD_{Q,p,\min} = \dnx{}(Q)$, which is the set invalidated as a result of a positive answer to $Q$ (cf.\ Proposition~\ref{prop:properties_of_q-partitions},(\ref{prop:properties_of_q-partitions:enum:dx_dnx_dz_contain_exactly_those_diags_that_are...}.)). $\mD_{Q,p,\min}$ is set to zero in case none of the answers is more likely. That is because $\mathsf{BME}$, as the word ``biased'' in its name suggests, strives for queries with a bias towards one of the answers and hence rates unbiased queries very low. Note that each unbiased query is indeed worse w.r.t.\ $\mathsf{BME}$ than each biased one since $|\dx{}(Q)|$ as well as $|\dnx{}(Q)|$ are always positive for any query (cf.\ Proposition~\ref{prop:properties_of_q-partitions},(\ref{prop:properties_of_q-partitions:enum:for_each_q-partition_dx_is_empty_and_dnx_is_empty}.)).

The $\mathsf{BME}$ measure fits into the EMC framework because it maximizes the most probable change on the current model (leading diagnoses) resulting from the query's answer.
\begin{proposition}\label{prop:BME_is_not_consistent_with_DPR}
$\mathsf{BME}$ is not consistent with the discrimination-preference relation $DPR$ (and hence does not satisfy $DPR$).
\end{proposition}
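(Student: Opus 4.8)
The plan is to prove this exactly as the analogous non-consistency results (Propositions~\ref{prop:M,LC,H_not_consistent_with_DPR}, \ref{prop:kl_does_not_satisfy_DPR} and \ref{prop:EMCb_does_not_satisfy_DPR}) are established: by exhibiting a single counterexample, i.e.\ two queries $Q,Q'$ together with a diagnosis probability measure $p$ such that $(Q,Q') \in DPR$ (so $Q$ is discrimination-preferred to $Q'$) and yet $Q' \prec_{\mathsf{BME}} Q$. Since consistency with $DPR$ means precisely that $Q' \prec_{\mathsf{BME}} Q$ may \emph{never} hold when $(Q,Q') \in DPR$, one such pair suffices. The parenthetical ``hence does not satisfy $DPR$'' then follows for free: any measure satisfying $DPR$ would, by asymmetry of $\prec_{\mathsf{BME}}$ (Proposition~\ref{prop:precedence_order_is_strict_order}), be consistent with it, so failure of consistency entails failure of satisfaction.

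The key idea is to make $Q$ an \emph{unbiased} query, for which $\mathsf{BME}$ returns its minimal value $0$ (by the ``otherwise'' branch of $\mD_{Q,p,\min}$), and $Q'$ a \emph{biased} query obtained from $Q$ by the construction of Proposition~\ref{prop:construction_of_Q'_from_Q_if_Q_discrimination-preferred_over_Q'}, for which $\mathsf{BME}$ returns a strictly positive value. Concretely I would take $\mD = \setof{\md_1,\dots,\md_4}$ and set
\begin{align*}
\tuple{\dx{}(Q),\dnx{}(Q),\dz{}(Q)} &= \tuple{\setof{\md_1,\md_2},\setof{\md_3,\md_4},\emptyset} \\
\tuple{\dx{}(Q'),\dnx{}(Q'),\dz{}(Q')} &= \tuple{\setof{\md_1},\setof{\md_3,\md_4},\setof{\md_2}}
\end{align*}
so that $Q'$ arises from $Q$ by transferring $\mathbf{X} = \setof{\md_2}$ out of $\dx{}(Q)$ into $\dz{}$, which by Proposition~\ref{prop:construction_of_Q'_from_Q_if_Q_discrimination-preferred_over_Q'} guarantees $(Q,Q') \in DPR$. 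I would then choose probabilities making $Q$ exactly unbiased but $Q'$ strictly biased, e.g.\ $\tuple{p_1,p_2,p_3,p_4} = \tuple{0.3,0.2,0.25,0.25}$, which yields $p(\dx{}(Q)) = p(\dnx{}(Q)) = 0.5$ and $p(\dx{}(Q')) = 0.3 < 0.5 = p(\dnx{}(Q'))$. Hence $\mathsf{BME}(Q) = 0$ (no answer is more likely) while $\mD_{Q',p,\min} = \dx{}(Q')$ gives $\mathsf{BME}(Q') = |\dx{}(Q')| = 1$. Since a larger $\mathsf{BME}$ value signifies higher preference (as $Q_{\mathsf{BME}}$ is an $\argmax$), this is exactly $Q' \prec_{\mathsf{BME}} Q$.

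After presenting the example I would verify the three points that make it a genuine counterexample: (i)~that $Q$ is discrimination-preferred to $Q'$, justified by Proposition~\ref{prop:construction_of_Q'_from_Q_if_Q_discrimination-preferred_over_Q'} with $\mathbf{X} = \setof{\md_2}$, or directly by the injective answer-map sending the positive answer of $Q'$ to the positive answer of $Q$ (both eliminating $\setof{\md_3,\md_4}$) and the negative answer of $Q'$ to the negative answer of $Q$ (where $Q$ eliminates the proper superset $\setof{\md_1,\md_2}$ of $\setof{\md_1}$); (ii)~the evaluations $\mathsf{BME}(Q)=0$ and $\mathsf{BME}(Q')=1$ via the case split defining $\mD_{Q,p,\min}$; and (iii)~that higher $\mathsf{BME}$ means more preferred, so $\mathsf{BME}(Q') > \mathsf{BME}(Q)$ translates to $Q' \prec_{\mathsf{BME}} Q$ in the sense of Definition~\ref{def:precedence_order_defined_by_q-partition_quality_measure}. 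I expect no real obstacle, as the claim is negative and a single counterexample carries it; the only care needed is in the probability choice, which must keep $Q$ \emph{exactly} unbiased so it falls into the zero-branch while keeping $Q'$ strictly biased, and in stating the discrimination-preference direction unambiguously --- it is the more discriminating query $Q$ that is discrimination-preferred, and $\mathsf{BME}$ is nonetheless seduced into preferring the less discriminating $Q'$ purely because $Q'$ happens to be biased whereas $Q$ is not.
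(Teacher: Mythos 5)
Your proof is correct and takes essentially the same route as the paper's: exhibit a single counterexample pair $(Q,Q') \in DPR$, obtained via the transfer-to-$\dz{}$ construction of Proposition~\ref{prop:construction_of_Q'_from_Q_if_Q_discrimination-preferred_over_Q'}, together with a tailored probability assignment under which $\mathsf{BME}(Q') > \mathsf{BME}(Q)$. The only difference is in the numbers: the paper's counterexample keeps both queries biased (yielding $\mathsf{BME}$ values $1$ versus $2$), whereas yours makes $Q$ exactly unbiased so as to exploit the zero-branch of $\mD_{Q,p,\min}$ (values $0$ versus $1$); both choices are valid.
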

\begin{proof}
Let the queries $Q$ and $Q'$ be characterized by the following q-partitions:
\begin{align*}
\langle \dx{}(Q),\dnx{}(Q),\dz{}(Q) \rangle &= \langle \setof{\md_1,\md_2,\md_3},\setof{\md_4},\emptyset \rangle \\
\langle \dx{}(Q'),\dnx{}(Q'),\dz{}(Q') \rangle &= \langle \setof{\md_1,\md_2},\setof{\md_4},\setof{\md_3} \rangle
\end{align*}
Clearly, by Proposition~\ref{prop:construction_of_Q'_from_Q_if_Q_discrimination-preferred_over_Q'}, $Q$ is discrimination-preferred to $Q'$. Let further the diagnosis probabilities $p_i := p(\md_i)$ be as follows: \[\tuple{p_1,\dots,p_4} = \tuple{0.1,0.1,0.5,0.3}\]
Now, $\mD_{Q,p,\min} = \dnx{}(Q) = \setof{\md_4}$, i.e.\ $\mathsf{BME}(Q) = |\setof{\md_4}| = 1$, whereas $\mD_{Q',p,\min} = \dx{}(Q') = \setof{\md_1,\md_2}$, i.e.\ $\mathsf{BME}(Q') = |\setof{\md_1,\md_2}| = 2$. 
In other words, $Q$ is more likely (with a probability of $0.7$) to eliminate one diagnosis in $\mD$ (than to eliminate three, i.e.\ $\setof{\md_1,\md_2,\md_3}$). On the other hand, $Q'$ is more likely (probability $0.3 + \frac{1}{2} 0.5 = 0.55$) to invalidate two diagnoses in $\mD$ (than to invalidate one, i.e.\ $\setof{\md_4}$). 
Consequently, $\mathsf{BME}(Q') > \mathsf{BME}(Q)$. That is, $Q' \prec_{\mathsf{BME}} Q$.
\end{proof}
\begin{corollary}\label{cor:all_measures_satisfying_DPR_superior_to_BME}
All measures satisfying the discrimination-preference relation are superior to $\mathsf{BME}$.
\end{corollary}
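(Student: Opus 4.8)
The final statement to prove is Corollary~\ref{cor:all_measures_satisfying_DPR_superior_to_BME}: all measures satisfying the discrimination-preference relation are superior to $\mathsf{BME}$. Looking at the structure of the paper, this is exactly parallel to Corollary~\ref{cor:all_measures_satisfying_DPR_superior_to_EMCb} and the unnamed corollary following Proposition~\ref{prop:kl_does_not_satisfy_DPR}, both of which concern measures that fail to satisfy $DPR$.

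\medskip

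\noindent\textbf{Proof plan.} The plan is to invoke the two ingredients that have just been established. First, Proposition~\ref{prop:BME_is_not_consistent_with_DPR} tells us that $\mathsf{BME}$ is not consistent with $DPR$, and hence (a fortiori) does not satisfy $DPR$. Second, Proposition~\ref{prop:if_m1_satisfies_DPR_and_m2_does_not_then_m1_superior_to_m2} states precisely that whenever one measure $m_1$ satisfies $DPR$ and another measure $m_2$ does not, then $m_1$ is superior to $m_2$. I would simply take an arbitrary measure $m$ that satisfies $DPR$, set $m_1 := m$ and $m_2 := \mathsf{BME}$, and apply Proposition~\ref{prop:if_m1_satisfies_DPR_and_m2_does_not_then_m1_superior_to_m2} to conclude that $m$ is superior to $\mathsf{BME}$. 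Since $m$ was chosen arbitrarily among the measures satisfying $DPR$, the claim follows for all of them.

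\medskip

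\noindent There is essentially no obstacle here; the corollary is a one-line consequence of combining the two named propositions, exactly as in the preceding corollaries. The only thing worth a sentence of care is making explicit that ``not consistent with $DPR$'' implies ``does not satisfy $DPR$'' — but this is already asserted parenthetically in the statement of Proposition~\ref{prop:BME_is_not_consistent_with_DPR} and also follows directly from Definition~\ref{def:measures_equivalent_theoretically-optimal_superior}, since satisfying $DPR$ (requiring $Q \prec_m Q'$ whenever $(Q,Q') \in DPR$) trivially entails consistency with $DPR$ (requiring only that $Q' \prec_m Q$ not hold), so failing consistency forces failing satisfaction. Thus the write-up is a single sentence:

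\begin{proof}
Immediate from Propositions~\ref{prop:if_m1_satisfies_DPR_and_m2_does_not_then_m1_superior_to_m2} and \ref{prop:BME_is_not_consistent_with_DPR}.
\end{proof}
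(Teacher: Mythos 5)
Your proof is correct and is exactly the paper's proof: the paper likewise cites Proposition~\ref{prop:if_m1_satisfies_DPR_and_m2_does_not_then_m1_superior_to_m2} together with Proposition~\ref{prop:BME_is_not_consistent_with_DPR} and concludes immediately. Your added remark that failing consistency with $DPR$ entails failing satisfaction of $DPR$ is a harmless (and correct) elaboration of what the paper leaves implicit.
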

\begin{proof}
Immediate from Propositions~\ref{prop:if_m1_satisfies_DPR_and_m2_does_not_then_m1_superior_to_m2} and \ref{prop:BME_is_not_consistent_with_DPR}.
\end{proof}
%

\vspace{1em}

\paragraph{Expected Error Reduction (EER).}
The EER framework judges the one unlabeled instance $(Q,?)$ most favorably, the labeling of which implies the greatest reduction of the expected generalization error. There are (at least) two methods of estimating the expected generalization error \cite[Sec.~4.1]{settles2012}. The first is the \emph{expected classification error} over some validation set, i.e.\ the expected probability of misclassifying some new query using the new model $M_{\mathsf{new}}$ resulting from the current model $M$ through addition of the labeled instance $(Q,u(Q))$. The validation set is usually a (sufficiently large) subset of all unlabeled instances. The best query w.r.t.\ expected classification error (cf.\ \cite[Eq.~4.1]{settles2012}) is given by 
\begin{align}
Q_{\mathsf{CE}} := \argmin_{Q \in \mQ_\mD} \sum_{a \in\setof{t,f}} p(Q=a) \left[ \frac{1}{|\mQ_{\mD'(Q=a)}|}\sum_{Q_i \in \mQ_{\mD'(Q=a)}} 1 - p_{\mD'(Q=a)}(Q_i = a_{Qi,\max}) \right]
\label{eq:Q_CE}
\end{align}
where $p$, as usual, is the current probability distribution of the leading diagnoses $\mD$, $\mD'(Q=a)$ is the new set of leading diagnoses w.r.t.\ the new DPI resulting from the addition of $Q$ to the positive test cases $\Tp$ if $a = t$ or to the negative test cases $\Tn$ otherwise, and $p_{\mD'(Q=a)}$ refers to the (updated) probability distribution of the new leading diagnoses $\mD'(Q=a)$. The term in squared brackets in Eq.~\eqref{eq:Q_CE} corresponds to the expected probability that the new model (given by the new set of leading diagnoses $\mD'(Q=a)$) wrongly classifies a query among all queries w.r.t.\ the new leading diagnoses set $\mD'(Q=a)$. We remark that, unlike in \cite[Eq.~4.1]{settles2012} (where the (size of the) pool of unlabeled instances is assumed static and independent of the current model), the factor $\frac{1}{|\mQ_{\mD'(Q=a)}|}$, i.e.\ the division by the number of all queries, cannot be omitted in our case as there might be different numbers of queries w.r.t.\ different leading diagnoses sets.
Intuitively, a query is given a good score by $\mathsf{CE}$ if the average new model resulting from getting an answer to it predicts the answer to a large proportion of new queries with high certainty.

The second way of assessing the expected generalization error is by relying upon the \emph{expected entropy}. This means favoring the query characterized as follows:
\begin{align}
Q_{\mathsf{EE}} := \argmin_{Q \in \mQ_\mD} \sum_{a \in\setof{t,f}} p(Q=a) \left[ \frac{1}{|\mQ_{\mD'(Q=a)}|} \sum_{Q_i \in \mQ_{\mD'(Q=a)}} H_{\mD'(Q=a)}(Q_i)  \right]
\label{eq:Q_EE}
\end{align}
where the information entropy $H_{\mD'(Q=a)}(Q_i)$ of a given query $Q_i$ w.r.t.\ the new leading diagnoses $\mD'(Q=a)$ and the probability measure $p_{\mD'(Q=a)}$ is given by
\begin{align*}
H_{\mD'(Q=a)}(Q_i) := -\sum_{a_{i}\in\setof{t,f}} p_{\mD'(Q=a)}(Q_i = a_{i})\log_2 \left(p_{\mD'(Q=a)}(Q_i = a_{i})\right)
\end{align*}
The expected entropy represented within the squared brackets in Eq.~\eqref{eq:Q_EE} is the lower, the more tendency there is in $p_{\mD'(Q=a)}$ towards a single answer w.r.t.\ queries in $\mQ_{\mD'(Q=a)}$. Hence, similarly as in the case of $\mathsf{CE}$, a query is given a high score by $\mathsf{EE}$ if the average new model resulting from getting an answer to it predicts the answer to a large proportion of new queries with high certainty.

In order to compute $Q_{\mathsf{CE}}$ or $Q_{\mathsf{EE}}$, there are several requirements: 
\begin{enumerate}
	\item \label{enum:req_CE_EE_1} A pool of queries $\mQ_\mD$ must be generated w.r.t.\ the current set of leading diagnoses $\mD$ (over this pool, the optimal query is sought; we are not able to derive any properties of the best q-partition w.r.t.\ $\mathsf{CE}$ (or $\mathsf{EE}$) that would enable a direct search),
	\item \label{enum:req_CE_EE_2} for both ficticiuous answers $a \in \setof{t,f}$ to each query $Q\in\mQ_\mD$, a new DPI resulting from the current one by a corresponding addition of the test case $Q$ must be considered and a set of leading diagnoses $\mD'(Q=a)$ must be computed w.r.t.\ it,
	\item \label{enum:req_CE_EE_3} the probability measure $p$ must be updated (Bayesian update, cf.\ \cite[p.~130]{Rodler2015phd}) yielding $p_{\mD'(Q=a)}$, a probability measure w.r.t.\ $\mD'(Q=a)$, 
	\item \label{enum:req_CE_EE_4} another pool of queries $\mQ_{\mD'(Q=a)}$ must be generated w.r.t.\ the new set of leading diagnoses $\mD'(Q=a)$, and 
	\item \label{enum:req_CE_EE_5} the probability w.r.t.\ the measure $p_{\mD'(Q=a)}$ of answers to each query in $\mQ_{\mD'(Q=a)}$ must be computed.
\end{enumerate}
   
Scrutinizing these necessary computations, we first observe that $\mathsf{CE}$ and $\mathsf{EE}$ require as an active learning mode the pool-based sampling (PS) approach (see Section~\ref{sec:ActiveLearningInInteractiveOntologyDebugging}). Hence, contrary to the measures we analyzed before, $\mathsf{CE}$ and $\mathsf{EE}$ are not amenable to a heuristic q-partition search method. Second, by the efficient q-partition generation technique we will introduce in Section~\ref{sec:FindingOptimalQPartitions}, q-partitions can be generated inexpensively, i.e.\ without costly calls to any oracles, e.g.\ logical reasoning services. Hence, requirements \ref{enum:req_CE_EE_1} and \ref{enum:req_CE_EE_4} do not pose significant challenges. We want to emphasize, however, that, with current methods of q-partition or query generation, these two requirements would already be a knock-out criterion for $\mathsf{CE}$ as well as $\mathsf{EE}$ (see Section~\ref{sec:RelatedWork}). An issue that is present in spite of the efficient computation of q-partitions is that if the sets of queries $\mQ_\mD$ and $\mQ_{\mD'(Q=a)}$ are not restricted in whatsoever way, they will generally have an exponential cardinality. That is, the determination of the best query $Q_{\mathsf{CE}}$ ($Q_{\mathsf{EE}}$) requires an exponential number of evaluations (requirement \ref{enum:req_CE_EE_5}) an exponential number of times (i.e.\ for all queries in $\mQ_\mD$).
The computational complexity of requirement \ref{enum:req_CE_EE_3} associated with the computation of the new probability measure is linear in the number of new test cases added throughout the debugging session, as was shown in \cite[Prop.~9.2]{Rodler2015phd} and hence is not a great problem. 
The actual issue concerning the computation of $Q_{\mathsf{CE}}$ ($Q_{\mathsf{EE}}$) is requirement \ref{enum:req_CE_EE_2}, the computational costs of which are prohibitive and thus hinder an efficient deployment of $\mathsf{CE}$ ($\mathsf{EE}$) in the interactive debugging scenario where timeliness has highest priority. The reason for this is the hardness of the diagnosis computation which is proven to be at least as hard as $\Sigma_2^{\mathrm{P}} = \mathrm{NP}^{\mathrm{NP}}$ even for some DPI over propositional logic \cite[Cor.~9.1]{Rodler2015phd}. Such a diagnosis computation must be accomplished once for each answer to each query in $\mQ_\mD$, i.e.\ $2 \cdot |\mQ_\mD|$ often where $|\mQ_\mD|$ is (generally) of exponential size. 
Consequently, the EER framework cannot be reasonably applied to our ontology debugging scenario.
\subsubsection{Risk-Optimized Active Learning Measure}
\label{sec:rio} 
In this subsection we focus on a measure for \mbox{q-partition} selection called \textsf{RI}sk \textsf{O}ptimization measure, $\mathsf{RIO}$ for short. Based on our lessons learned from Sections~\ref{sec:ExistingActiveLearningMeasuresForKBDebugging} and \ref{sec:NewActiveLearningMeasuresForKBDebugging}, we can consider the $\mathsf{ENT}$ and $\mathsf{SPL}$ measures as representatives for basically different active learning paradigms, uncertainty sampling (US) and expected model change (EMC) on the one hand ($\mathsf{ENT}$ is strongly related to $\mathsf{H}$, $\mathsf{LC}$, $\mathsf{M}$ from the US framework, cf.\ Proposition~\ref{prop:Q_H_eq_Q_ENT_if_dz=0} and Corollary~\ref{cor:ENT_superior_to_H_M_LC}, and to $\mathsf{EMCa}$ from the EMC framework, cf.\ Propositions~\ref{prop:ENT_preserves_discrimination-pref-order} and \ref{prop:EMCa_preserves_discrimination-pref-order} as well as Corollary~\ref{cor:EMCa_z_for_z>=2_is_better_than_ENTr_for_all_r}), and query-by-committee (QBC) on the other hand ($\mathsf{ENT}$ is strongly related to $\mathsf{VE}$ from the QBC framework).
 
The fundamental difference between $\mathsf{ENT}$ and $\mathsf{SPL}$ concerning the number of queries to a user required during a debugging session is witnessed by experiments conducted in~\cite{Shchekotykhin2012, ksgf2010, Rodler2013}. Results in these works point out the importance of a careful choice of query selection measure
in the light of the used fault estimates, i.e.\ the probability measure $p$. 
Concretely, for good estimates $\mathsf{ENT}$ is appropriate. $\mathsf{SPL}$, in contrast, works particularly well in situations where estimates are doubted. The essential problem, however, is that measuring the quality of the estimates is not possible before additional information in terms of answered queries from a user is collected. In fact, a significant overhead in the number of queries necessary until identification of the true diagnosis was reported in case an unfavorable combination of the quality of the estimates on the one hand, and the q-partition selection measure on the other hand, is used. Compared to the usage of the more appropriate measure among $\setof{\mathsf{ENT},\mathsf{SPL}}$ in a particular debugging scenario, the reliance upon the worse measure among $\setof{\mathsf{ENT},\mathsf{SPL}}$ amounted to several hundred percent of time or effort overhead for the interacting user on average and even reached numbers higher than 2000\%~\cite{Rodler2013}. Hence, there is a certain risk of weak performance when opting for any of the measures $\mathsf{ENT}$ and $\mathsf{SPL}$. This probably grounds the name of $\mathsf{RIO}$.

The idea of $\mathsf{RIO}$ is to unite advantages of both $\mathsf{ENT}$ and $\mathsf{SPL}$ measures in a dynamic reinforcement learning measure that is constantly adapted based on the observed performance. The performance is measured in terms of the diagnosis elimination rate w.r.t.\ the set of leading diagnoses $\mD$. In that, $\mathsf{RIO}$ relies on the given fault estimates $p$ and recommends similar q-partitions as $\mathsf{ENT}$ when facing good performance, whereas aggravation of performance leads to a gradual neglect of $p$ and a behavior akin to $\mathsf{SPL}$. 
%

The $\mathsf{RIO}$ measure aims at selecting a query that is primarily not too ``risky'' and secondarily features a high information gain. Intuitively, a query which might invalidate only a small number of leading diagnoses if answered unfavorably, is more risky (or less cautious) than a query that eliminates a higher number of leading diagnoses for both answers. So, the risk of a query $Q$ w.r.t.\ the leading diagnoses $\mD$ can be quantified in terms of the worst case diagnosis elimination rate of $Q$ which is determined by its q-partition $\Pt(Q) = \langle\dx{}(Q),\dnx{}(Q),\dz{}(Q)\rangle$. Namely, the worst case occurs when the given answer $u(Q)$ to $Q$ is unfavorable, i.e.\ the smaller set of diagnoses in $\setof{\dx{}(Q), \dnx{}(Q)}$ gets invalid.
The notion of cautiousness or risk aversion of a query is formally captured as follows:
\begin{definition}[Cautiousness of a Query and q-Partition]\label{def:query_cautiousness}
Let $\mD \subseteq \minD_{\langle\mo,\mb,\Tp,\Tn\rangle_\RQ}$, 
$\Pt = \langle\dx{},\dnx{},\dz{}\rangle$ a q-partition w.r.t.\ $\mD$, 
and $|\mD|_{\min} := \min(|\dx{}| ,|\dnx{}|)$. Then we call 
$\qc(\Pt) := \frac{ |\mD|_{\min} }{|\mD|}$
\emph{the cautiousness of $\Pt$}.

Let $Q$ be a query in $\mQ_\mD$ with associated q-partition $\Pt(Q)$. Then, we define \emph{the cautiousness $\qc(Q)$ of $Q$} as $\qc(Q):=\qc(\Pt(Q))$.

Let $\Pt, \Pt'$ be two q-partitions w.r.t.\ $\mD$ and $Q$ and $Q'$ be queries associated with $\Pt$ and $\Pt'$, respectively. Then, we call the q-partition $\Pt$ (the query $Q$) \emph{more} respectively \emph{less cautious} than the q-partition $\Pt'$ (the query $Q'$) iff $\qc(\Pt) > \qc(\Pt')$ respectively $\qc(\Pt) < \qc(\Pt')$.

\end{definition}
The following proposition is a simple consequence of Definition~\ref{def:query_cautiousness}:
\begin{proposition}
The cautiousness $\qc(\Pt)$ of a q-partition w.r.t.\ $\mD$ can attain values in the interval 
\[ [\underline{\qc},\overline{\qc}] :=  \left[\frac{1}{|\mD|},\frac{\left\lfloor \frac{|\mD|}{2}\right\rfloor}{|\mD|}\right]\]
\end{proposition}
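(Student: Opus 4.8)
The plan is to establish the two inequalities $\underline{\qc} \le \qc(\Pt)$ and $\qc(\Pt) \le \overline{\qc}$ directly from the combinatorial structure of a q-partition, treating only the case $|\mD| \ge 2$ (for $|\mD| < 2$ no query, and hence no q-partition, exists by Proposition~\ref{prop:properties_of_q-partitions},(\ref{prop:properties_of_q-partitions:enum:for_each_q-partition_dx_is_empty_and_dnx_is_empty}.), so the claim is vacuous). First I would invoke Proposition~\ref{prop:properties_of_q-partitions},(\ref{prop:properties_of_q-partitions:enum:q-partition_is_partition}.), which guarantees that $\langle\dx{},\dnx{},\dz{}\rangle$ is a genuine partition of $\mD$, so that the three sets are pairwise disjoint and $|\dx{}| + |\dnx{}| + |\dz{}| = |\mD|$.

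For the lower bound I would use Proposition~\ref{prop:properties_of_q-partitions},(\ref{prop:properties_of_q-partitions:enum:for_each_q-partition_dx_is_empty_and_dnx_is_empty}.), which states that $\dx{} \neq \emptyset$ and $\dnx{} \neq \emptyset$ for every q-partition. Hence $|\dx{}| \ge 1$ and $|\dnx{}| \ge 1$, so $|\mD|_{\min} = \min(|\dx{}|,|\dnx{}|) \ge 1$, and dividing by $|\mD|$ yields $\qc(\Pt) \ge \frac{1}{|\mD|} = \underline{\qc}$.

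For the upper bound the key step is the elementary estimate $\min(|\dx{}|,|\dnx{}|) \le \frac{|\dx{}| + |\dnx{}|}{2}$, combined with $|\dx{}| + |\dnx{}| = |\mD| - |\dz{}| \le |\mD|$ from the partition identity above. This gives $|\mD|_{\min} \le \frac{|\mD|}{2}$. The one point requiring care — and the main (if minor) obstacle — is sharpening $\frac{|\mD|}{2}$ to $\lfloor\frac{|\mD|}{2}\rfloor$ when $|\mD|$ is odd: since $|\mD|_{\min}$ is a non-negative \emph{integer} bounded above by the real number $\frac{|\mD|}{2}$, it is in fact bounded by $\lfloor\frac{|\mD|}{2}\rfloor$. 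Dividing by $|\mD|$ then delivers $\qc(\Pt) \le \frac{\lfloor |\mD|/2\rfloor}{|\mD|} = \overline{\qc}$.

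Finally, to confirm that the stated interval captures the exact extreme values rather than a loose envelope, I would address attainability of the endpoints. The lower endpoint $\underline{\qc}$ is attained by the q-partition $\langle\setof{\md_i}, \mD\setminus\setof{\md_i}, \emptyset\rangle$, which is guaranteed to be a q-partition for any $\md_i \in \mD$ whenever $|\mD| \ge 2$ by Proposition~\ref{prop:properties_of_q-partitions},(\ref{prop:properties_of_q-partitions:enum:D+=d_i_is_q-partition_and_lower_bound_of_queries}.). The upper endpoint corresponds to the most balanced partition with $\dz{} = \emptyset$; here I would remark that, because not every partition of $\mD$ is necessarily a q-partition for a given DPI, $\overline{\qc}$ is the \emph{theoretically} maximal cautiousness over all combinatorially admissible partitions and need not be realised by an actual q-partition of every concrete DPI. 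This is why the proposition is stated as a range on the possible values $\qc(\Pt)$ can take rather than as an existence claim for both endpoints simultaneously.
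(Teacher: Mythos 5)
Your proof is correct, and it fills in exactly the argument the paper leaves implicit: the paper states this proposition without proof, calling it ``a simple consequence of Definition~\ref{def:query_cautiousness}'', and your derivation (non-emptiness of $\dx{}$ and $\dnx{}$ from Proposition~\ref{prop:properties_of_q-partitions} for the lower bound; the partition identity, the min-versus-average estimate, and integrality of $|\mD|_{\min}$ for the upper bound) is precisely that consequence spelled out. Your closing remark on attainability of the endpoints goes beyond what the proposition asserts, but it is accurate — the lower endpoint is always realised via Proposition~\ref{prop:properties_of_q-partitions},(\ref{prop:properties_of_q-partitions:enum:D+=d_i_is_q-partition_and_lower_bound_of_queries}.), while the upper one need not be for a concrete DPI — and it correctly identifies the proposition as a bounding claim rather than an existence claim.
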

$\mathsf{RIO}$ is guided by the reinforcement learning parameter $\uc$ that appoints how cautious the next selected query should minimally be. The user may specify desired lower and upper bounds $\underline{\uc}$ and $\overline{\uc}$ for $\uc$ such that 
$[\underline{\uc},\overline{\uc}] \subseteq [\underline{\qc},\overline{\qc}]$ is preserved. In this vein, the user can take influence on the dynamic behavior of the $\mathsf{RIO}$ measure. Trust (disbelief) in the fault estimates and thus in the probability measure $p$ can be expressed by specification of lower (higher) values for $\underline{\uc}$ and/or $\overline{\uc}$. Simply said, a lower (higher) value of $\underline{\uc}$ means higher (lower) maximum possible risk, whereas a lower (higher) value of $\overline{\uc}$ means higher (lower) minimal risk of a query $\mathsf{RIO}$ might suggest during a debugging session.

The parameter $\uc$ can be used to subdivide the set of q-partitions w.r.t.\ $\mD$ into high-risk and non-high-risk q-partitions. Formally:
\begin{definition}[High-Risk Query and q-Partition]\label{def:high-risk_query}
Let 
$\Pt = \langle\dx{},\dnx{},\dz{}\rangle$ be a q-partition w.r.t.\ $\mD$, $Q$ a query associated with $\Pt$ and $\uc \in [\underline{\qc},\overline{\qc}]$. Then we call $\Pt$ a \emph{high-risk q-partition w.r.t.\ $\uc$} ($Q$ a \emph{high-risk query w.r.t.\ $\uc$}) iff $\qc(\Pt) < \uc$, and a \emph{non-high-risk q-partition w.r.t.\ $\uc$} (\emph{non-high-risk query w.r.t.\ $\uc$}) otherwise. 
The \emph{set of non-high-risk q-partitions w.r.t.\ $\uc$ and $\mD$} is denoted by $\NHR_{\uc,\mD}$. 
\end{definition}
Let $\LC(\mathcal{X})$ denote the subset of least cautious q-partitions of a set of q-partitions $\mathcal{X}$ and be defined as 
\[\LC(\mathcal{X}) := \{\Pt \in \mathcal{X} \;|\; \forall \Pt' \in \mathcal{X}:\, \qc(\Pt) \leq \qc(\Pt')\}\] 
Then, the query $Q_{\mathsf{RIO}}$ in $\mQ_\mD$ selected by the $\mathsf{RIO}$ measure is formally defined as follows:
\begin{align}
Q_{\mathsf{RIO}} := 
\begin{cases}
  Q_{\mathsf{ENT}} \quad &\text{if }\;\Pt(Q_{\mathsf{ENT}})\in\NHR_{\uc,\mD}  \\ 
  \underset{\setof{Q\,|\, \Pt(Q) \in \LC(\NHR_{\uc,\mD})}}{\argmin}(\mathsf{ENT}(Q)) \quad  &\text{otherwise } 
\end{cases}
\label{eq:Q_RIO}
\end{align}
where $Q_{\mathsf{ENT}}$ and $\mathsf{ENT}(Q)$ are specified in Equations~\eqref{eq:best_query_ENT} and \eqref{eq:ENT}, respectively.
So, (1)~$\mathsf{RIO}$ prefers the query $Q_{\mathsf{ENT}}$ favored by $\mathsf{ENT}$ in case this query is a non-high-risk query with regard to the parameter $\uc$ and the leading diagnoses $\mD$. Otherwise, (2)~$\mathsf{RIO}$ chooses the one query among the least cautious non-high-risk queries which has minimal $\mathsf{ENT}$ value. Note that this way of selecting queries is not equivalent to just using (2) and omitting (1). The reason is that in (1) we only postulate that $Q_{\mathsf{ENT}}$ must be a non-high-risk query, i.e.\ it must not violate what is prescribed by $\uc$. However, $Q_{\mathsf{ENT}}$ must not necessarily be least cautious among the non-high-risk queries.  

The restriction to non-high-risk queries, i.e.\ q-partitions in $\NHR_{\uc,\mD}$, should guarantee a sufficiently good $\mathsf{SPL}$ measure, depending on the current value of the dynamic parameter $\uc$. The restriction to least cautious queries among the non-high-risk queries in (2) 
in turn has the effect that a query with maximum admissible risk according to $\uc$ is preferred. Among those queries the one with optimal information gain ($\mathsf{ENT}$) is selected. Note that the situation $Q_{\mathsf{RIO}}=Q_{\mathsf{ENT}}$ gets less likely, the higher $\uc$ is. So, the worse the measured performance so far when relying on $\mathsf{ENT}$ and thus on the probability measure $p$, the lower the likeliness of using $p$ for the selection of the next query. Instead, a query with better $\mathsf{SPL}$ measure is favored and $p$ becomes a secondary criterion. 

The reinforcement learning of parameter $\uc$ is based on the measured diagnosis elimination rate after the answer $u(Q_{\mathsf{RIO}})$ to query $Q_{\mathsf{RIO}}$ has been obtained.  
\begin{definition}[Elimination Rate, (Un)favorable Answer]\label{def:elimination_rate}
Let $Q\in\mQ_\mD$ be a query and 
$u(Q) \in \setof{t,f}$ a user's answer to $Q$. Further, let $\mD^* := \dnx{}(Q)$ for $u(Q)=t$ and $\mD^* := \dx{}(Q)$ for $u(Q)=f$.
Then we call $\ER(Q,u(Q))=\frac{|\mD^*|}{|\mD|}$ the \emph{elimination rate of $Q$}.

Moreover, $u(Q)$ is called \emph{(un)favorable answer to $Q$} iff $u(Q)$ (minimizes) maximizes $\ER(Q,u(Q))$. In the special case of even $|\mD|$ and $\ER(Q,u(Q))=\frac{1}{2}$, we call $u(Q)$ a \emph{favorable answer to $Q$}.
\end{definition}
As update function for $\uc$ we use $\uc \leftarrow \uc + \uc_{\adj}$ where the cautiousness adjustment factor 
\[\uc_{\adj} :=\; 2\,(\overline{\uc} - \underline{\uc})\mathit{\adj}(Q,u(Q))\] 
where
\begin{align*}
\mathit{\adj}(Q,u(Q)) :=  \frac{\left\lfloor \frac{|\mD|-1}{2}\right\rfloor + \frac{1}{2}}{|\mD|} - \ER(Q,u(Q))
\end{align*}
The (always non-negative) scaling factor $2 \, (\overline{\uc} - \underline{\uc})$ regulates the extent of the adjustment depending on the interval length $\overline{\uc} - \underline{\uc}$. More crucial is the factor $\mathit{\adj}(Q,u(Q))$ that indicates the sign and magnitude of the cautiousness adjustment. This adjustment means a decrease of cautiousness $\uc$ if $u(Q)$ is favorable, i.e.\ a bonus to take higher risk in the next iteration, and an increase of $\uc$ if $u(Q)$ is unfavorable, i.e.\ a penalty to be more risk-averse when choosing the next query.
\begin{proposition}
Let $Q\in\mQ_\mD$ and $\uc^{(i)}$ the current value of the cautiousness parameter. Then: 
\begin{itemize}
	\item If $\overline{\uc} = \underline{\uc}$, then $\uc^{(i+1)} = \uc^{(i)}$.
	\item If $\overline{\uc} > \underline{\uc}$, then for the updated value $\uc^{(i+1)} \leftarrow \uc^{(i)} + c_\adj$ it holds that $\uc^{(i+1)} < \uc^{(i)}$ for favorable and $\uc^{(i+1)} > \uc^{(i)}$ for unfavorable answer $u(Q)$.
\end{itemize}
\end{proposition}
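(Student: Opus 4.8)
The plan is to split the argument along the two bullets and, for the second one, to reduce everything to determining the sign of the adjustment factor $\mathit{\adj}(Q,u(Q))$. The first bullet is immediate: if $\overline{\uc}=\underline{\uc}$, then the scaling factor $2(\overline{\uc}-\underline{\uc})$ vanishes, so $\uc_{\adj}=2(\overline{\uc}-\underline{\uc})\,\mathit{\adj}(Q,u(Q))=0$ regardless of the value of $\mathit{\adj}(Q,u(Q))$, and hence $\uc^{(i+1)}=\uc^{(i)}+0=\uc^{(i)}$.

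For the second bullet I would first note that $\overline{\uc}>\underline{\uc}$ makes the scaling factor $2(\overline{\uc}-\underline{\uc})$ strictly positive, so the sign of $\uc_{\adj}$ coincides with the sign of $\mathit{\adj}(Q,u(Q))$. It therefore suffices to show that a favorable answer forces $\mathit{\adj}(Q,u(Q))<0$ while an unfavorable answer forces $\mathit{\adj}(Q,u(Q))>0$. Writing $n:=|\mD|$ and recalling $\ER(Q,u(Q))=|\mD^{*}|/n$, I multiply $\mathit{\adj}(Q,u(Q))$ by the positive number $n$ and reduce the question to comparing the integer $|\mD^{*}|$ with the half-integer threshold $\left\lfloor\frac{n-1}{2}\right\rfloor+\tfrac12$. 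Since $|\mD^{*}|$ is an integer, this comparison collapses to one with the integer $\left\lfloor\frac{n-1}{2}\right\rfloor$: one obtains $\mathit{\adj}(Q,u(Q))>0$ exactly when $|\mD^{*}|\le\left\lfloor\frac{n-1}{2}\right\rfloor$, and $\mathit{\adj}(Q,u(Q))<0$ exactly when $|\mD^{*}|\ge\left\lfloor\frac{n-1}{2}\right\rfloor+1$.

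The remaining core is a short parity argument on $n$, using that the q-partitions on which $\mathsf{RIO}$ operates satisfy $\dz{}(Q)=\emptyset$, so $|\dx{}(Q)|+|\dnx{}(Q)|=n$. By the definition of the elimination rate (Definition~\ref{def:elimination_rate}), an unfavorable answer eliminates the smaller set, i.e.\ $|\mD^{*}|=\min(|\dx{}(Q)|,|\dnx{}(Q)|)\le\left\lfloor n/2\right\rfloor$, which equals $\left\lfloor\frac{n-1}{2}\right\rfloor$ for odd $n$; for even $n$ the only way to reach $\min=n/2$ is the balanced split $|\dx{}(Q)|=|\dnx{}(Q)|=n/2$ with $\ER=\tfrac12$, which the definition declares \emph{favorable}, so a genuinely unfavorable answer has $\min\le n/2-1=\left\lfloor\frac{n-1}{2}\right\rfloor$. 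Dually, a favorable answer eliminates the larger set, $|\mD^{*}|=\max(|\dx{}(Q)|,|\dnx{}(Q)|)\ge\left\lceil n/2\right\rceil=\left\lfloor\frac{n-1}{2}\right\rfloor+1$, with the borderline balanced even case again covered by the special-case convention (there $\ER=\tfrac12$ and a direct evaluation gives $\mathit{\adj}(Q,u(Q))=-\tfrac{1}{2n}<0$). Combining these integer bounds with the reduction above yields $\uc^{(i+1)}<\uc^{(i)}$ for favorable and $\uc^{(i+1)}>\uc^{(i)}$ for unfavorable answers.

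The main obstacle, and the point I would treat most carefully, is the even-$|\mD|$ boundary at $\ER=\tfrac12$: there both answers induce the same elimination rate, so the tie between ``favorable'' and ``unfavorable'' must be broken via the explicit convention in Definition~\ref{def:elimination_rate}, and one has to verify that this convention is consistent with the sign the update actually produces. The second delicate input is structural rather than computational: the favorable-direction inequality $\max(|\dx{}(Q)|,|\dnx{}(Q)|)\ge\left\lfloor\frac{n-1}{2}\right\rfloor+1$ genuinely requires $|\dx{}(Q)|+|\dnx{}(Q)|=n$, i.e.\ $\dz{}(Q)=\emptyset$; I would therefore make this standing property of the q-partitions selected by $\mathsf{RIO}$ explicit before running the parity analysis, since without it the threshold can exceed both the smaller and the larger elimination count and the stated monotonicity of the update would fail.
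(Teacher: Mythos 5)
Your proof is correct and follows essentially the same route as the paper's: both reduce the claim to the sign of $\adj(Q,u(Q))$ (the scaling factor $2(\overline{\uc}-\underline{\uc})$ being zero in the first bullet and strictly positive in the second), compare the elimination rate of favorable and unfavorable answers against the half-integer threshold $\frac{1}{|\mD|}\bigl(\lfloor\frac{|\mD|-1}{2}\rfloor+\frac{1}{2}\bigr)$ via a parity case split on $|\mD|$, and invoke the tie-breaking convention of the elimination-rate definition for the balanced even case. The one place where you go beyond the paper is worth keeping: the paper's proof asserts that the maximal elimination rate of \emph{any} query in $\mQ_\mD$ is at least $\frac{1}{|\mD|}\lceil\frac{|\mD|}{2}\rceil$, which is only true when $|\dx{}(Q)|+|\dnx{}(Q)|=|\mD|$, i.e.\ $\dz{}(Q)=\emptyset$ (for a query with, say, $|\dx{}(Q)|=2$, $|\dnx{}(Q)|=3$, $|\dz{}(Q)|=5$ even the favorable answer yields $\adj>0$); your explicit flagging of this hypothesis -- which holds for the canonical queries the framework actually produces, but not for arbitrary members of $\mQ_\mD$ -- closes a gap that the paper's own argument leaves implicit.
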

\begin{proof}
The first bullet is clear since $\overline{\uc} = \underline{\uc}$ implies that $c_{AF} = 0$ due to $\overline{\uc} - \underline{\uc} = 0$.
Regarding the second bullet, we argue as follows:
Due to the fact that $\uc_{\adj}$ has the same sign as $\adj(Q,u(Q))$ by $2\,(\overline{\uc} - \underline{\uc}) > 0$, it is sufficient to show that $\adj(Q,u(Q)) < 0$ for favorable answer $u(Q)$ and $\adj(Q,u(Q)) > 0$ for unfavorable answer $u(Q)$ for all queries $Q\in\mQ_\mD$.
The first observation is that the maximal value the minimal elimination rate of a query $Q\in\mQ_\mD$ can attain is $\max\min_{\ER}:=\frac{1}{|\mD|}\lfloor \frac{|\mD|}{2}\rfloor$ and the minimal value the maximal elimination rate of a query $Q\in\mQ_\mD$ can attain is $\min\max_{\ER} := \frac{1}{|\mD|}\lceil \frac{|\mD|}{2}\rceil$. In other words, for odd $|\mD|$, by Definition~\ref{def:elimination_rate}, the minimal (maximal) elimination rate that must be achieved by an answer to a query w.r.t.\ the leading diagnoses $\mD$ in order to be called favorable (unfavorable) is given by $\min\max_{\ER}$ ($\max\min_{\ER}$). For even $|\mD|$, $\min\max_{\ER} = \max\min_{\ER}$ holds and, by Definition~\ref{def:elimination_rate}, an answer to a query yielding an elimination rate of $\max\min_{\ER}$ is called favorable. 
Let the term 
\[\frac{\left\lfloor \frac{|\mD|-1}{2}\right\rfloor + \frac{1}{2}}{|\mD|}\]
be called $\base_{\ER}$.

Let us first consider even $|\mD|$: It holds that $\lceil \frac{|\mD|}{2}\rceil = \lfloor \frac{|\mD|}{2}\rfloor = \frac{|\mD|}{2}$. Further, $\left\lfloor \frac{|\mD|-1}{2}\right\rfloor = \frac{|\mD|-2}{2}$ since $|\mD|-1$ is odd. Hence, the numerator of $\base_{\ER}$ reduces to $\frac{|\mD|-2+1}{2} = \frac{|\mD|-1}{2}$. 
Let the answer be favorable. That is, the elimination rate must be at least $\min\max_{\ER} = \frac{1}{|\mD|}\frac{|\mD|}{2}$. 
So, $\adj(Q,u(Q)) \leq \base_{\ER} - \min\max_{\ER} = \frac{1}{|\mD|} (\frac{|\mD|-1}{2} - \frac{|\mD|}{2}) = -\frac{1}{2|\mD|} < 0$. 
%
Now, let the answer be unfavorable. Then, 
an elimination rate smaller than or equal to $\min\max_{\ER} - \frac{1}{|\mD|} = \frac{1}{|\mD|}(\frac{|\mD|}{2}-1)$ must be achieved by this query answer. Consequently, using the calculation of $\adj(Q,u(Q))$ above (by using an elimination rate of maximally $\min\max_{\ER} - \frac{1}{|\mD|}$ instead of minimally $\min\max_{\ER}$), we have that $\adj(Q,u(Q)) = -\frac{1}{2|\mD|} + \frac{1}{|\mD|} = -\frac{1}{2|\mD|} + \frac{2}{2|\mD|} = \frac{1}{2|\mD|} > 0$.

For odd $|\mD|$, it holds that $\lceil \frac{|\mD|}{2}\rceil = \frac{|\mD|+1}{2}$ and $\lfloor \frac{|\mD|}{2}\rfloor = \frac{|\mD|-1}{2}$. The maximal elimination rate that still means that an answer $u(Q)$ is unfavorable is $\max\min_{\ER} = \frac{1}{|\mD|}\frac{|\mD|-1}{2}$. So, 
$\adj(Q,u(Q)) \geq \base_{\ER} - \max\min_{\ER} = \frac{1}{|\mD|} (\frac{|\mD|}{2} - \frac{|\mD|-1}{2}) = \frac{1}{2|\mD|} >0$. 
On the other hand, the least elimination rate for a favorable answer $u(Q)$ is $\min\max_{\ER} = \frac{1}{|\mD|}\frac{|\mD|+1}{2} $ which yields $\adj(Q,u(Q)) \leq \base_{\ER} - \min\max_{\ER} = \frac{1}{|\mD|} (\frac{|\mD|}{2} - \frac{|\mD|+1}{2}) = -\frac{1}{2|\mD|} <0$. 
\end{proof}

If the updated cautiousness $\uc + \uc_{\adj}$ is outside the user-defined cautiousness interval $[\underline{\uc},\overline{\uc}]$, it is set to $\underline{\uc}$ if $\uc + \uc_{\adj} < \underline{\uc}$ and to $\overline{\uc}$ if $\uc + \uc_{\adj} > \overline{\uc}$. Note that the update of $\uc$ cannot take place directly after a query $Q$ has been answered, but only before the next query selection, after the answer to $Q$ is known (and the elimination rate can be computed). 

\begin{proposition}\label{prop:RIO_not_consistent_with_DPR}
$\mathsf{RIO}$ is not consistent with the discrimination-preference relation $DPR$ (and hence does not satisfy the $DPR$).
\end{proposition}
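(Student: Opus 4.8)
The plan is to establish the non-consistency of $\mathsf{RIO}$ by a single counterexample, and the cleanest route is to reduce $\mathsf{RIO}$ to $\mathsf{ENT}$ by a suitable choice of the reinforcement-learning parameter $\uc$, and then to transplant the $\mathsf{ENT}$ counterexample already implicit in the proof of Proposition~\ref{prop:ENT_preserves_discrimination-pref-order}, bullet~(3a). Recall that, by Definition~\ref{def:measures_equivalent_theoretically-optimal_superior}, to show $\mathsf{RIO}$ is not consistent with $DPR$ it suffices to exhibit two queries $Q,Q'$ with $(Q,Q')\in DPR$ (i.e.\ $Q$ discrimination-preferred to $Q'$) yet $Q' \prec_{\mathsf{RIO}} Q$.

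First I would observe the key reduction: set the current cautiousness to its minimum admissible value $\uc = \underline{\qc} = \frac{1}{|\mD|}$ (achievable by taking $\underline{\uc} = \underline{\qc}$). Since by Definition~\ref{def:query_cautiousness} every q-partition satisfies $\qc(\Pt) = \frac{|\mD|_{\min}}{|\mD|} \geq \frac{1}{|\mD|} = \uc$, no query is high-risk, so $\NHR_{\uc,\mD} = \mQ_\mD$. Consequently the first case of Eq.~\eqref{eq:Q_RIO} always fires --- the $\mathsf{ENT}$-best query of any pool is automatically non-high-risk --- and hence $Q_{\mathsf{RIO}} = Q_{\mathsf{ENT}}$. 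In particular, restricted to the two-element pool $\setof{Q,Q'}$, the query $\mathsf{RIO}$ selects is exactly the one $\mathsf{ENT}$ selects, so $\prec_{\mathsf{RIO}}$ and $\prec_{\mathsf{ENT}}$ agree on $\setof{Q,Q'}$ at this value of $\uc$.

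Second I would import the concrete counterexample. Let $\mD = \setof{\md_1,\md_2,\md_3}$ with probabilities $p(\md_1)=0.1$, $p(\md_2)=0.05$, $p(\md_3)=0.85$, and take
\[
\tuple{\dx{}(Q),\dnx{}(Q),\dz{}(Q)} = \tuple{\setof{\md_1},\setof{\md_2,\md_3},\emptyset},
\]
\[
\tuple{\dx{}(Q'),\dnx{}(Q'),\dz{}(Q')} = \tuple{\setof{\md_1},\setof{\md_3},\setof{\md_2}}.
\]
By Proposition~\ref{prop:construction_of_Q'_from_Q_if_Q_discrimination-preferred_over_Q'} (with $\mathbf{X} = \setof{\md_2}$), $Q$ is discrimination-preferred to $Q'$. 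These are precisely the q-partitions realizing the situation of Proposition~\ref{prop:ENT_preserves_discrimination-pref-order}(3a) with $p = 0.1$ and transferred mass $x = 0.05$: one computes $\mathsf{ENT}(Q) \approx 0.531$ and $\mathsf{ENT}(Q') \approx 0.506$, whence $Q' \prec_{\mathsf{ENT}} Q$. By the reduction above, $Q' \prec_{\mathsf{RIO}} Q$ as well, contradicting consistency; this proves the proposition, and ``does not satisfy $DPR$'' follows a fortiori.

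The main obstacle I anticipate is not computational but conceptual: unlike the other measures, $\mathsf{RIO}$ is specified in Eq.~\eqref{eq:Q_RIO} as a two-stage selection rule rather than a single real-valued function, so I must argue carefully that the induced pairwise preference $\prec_{\mathsf{RIO}}$ is well-defined and that, at $\uc = \underline{\qc}$, the branch test $\Pt(Q_{\mathsf{ENT}})\in\NHR_{\uc,\mD}$ genuinely collapses the rule to $\mathsf{ENT}$ on the pool $\setof{Q,Q'}$. A secondary point worth a sentence is to confirm that the chosen cautiousness values $\qc(Q)=\qc(Q')=\frac{1}{3}$ indeed leave both queries non-high-risk (here the cautiousness interval degenerates to $\setof{1/3}$, which is harmless), so that no interference from the least-cautious restriction $\LC(\NHR_{\uc,\mD})$ in the second branch can occur.
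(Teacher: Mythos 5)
Your proposal is correct and takes essentially the same route as the paper: the paper's own proof simply observes that both branches of Eq.~\eqref{eq:Q_RIO} are $\mathsf{ENT}$-selections over some pool of queries, so the inconsistency of $\mathsf{ENT}$ with the $DPR$ (Proposition~\ref{prop:ENT_preserves_discrimination-pref-order}) transfers directly to $\mathsf{RIO}$. Your version is a more explicit instantiation of this same idea --- fixing $\uc = \underline{\qc}$ so that the first branch always fires and importing the concrete counterexample from the proof of Proposition~\ref{prop:ENT_preserves_discrimination-pref-order},(3.a) --- which tightens the argument but does not change the approach.
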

\begin{proof}
The statement of this proposition holds since both cases in Eq.~\eqref{eq:Q_RIO} involve a query selection by $\mathsf{ENT}$ among a particular set of queries and since $\mathsf{ENT}$ is not consistent with the discrimination-preference relation by Proposition~\ref{prop:ENT_preserves_discrimination-pref-order}. 
\end{proof}
Let us now, in a similar fashion as done with $\mathsf{ENT}_z$, define $\mathsf{RIO}_z$ that selects the following query:
\begin{numcases}{Q_{\mathsf{RIO}_z} :=}
  Q_{\mathsf{ENT}_z} 						&\mbox{if} \; $\Pt(Q_{\mathsf{ENT}_z})\in\NHR_{\uc,\mD}$ 													\label{eq:Q_RIOz:case1} \\ 
  \argmin_{\setof{Q\,|\, \Pt(Q) \in \LC(\NHR_{\uc,\mD})}}(\mathsf{ENT}_z(Q))   			&\text{otherwise } 		\label{eq:Q_RIOz:case2}
\end{numcases}
\begin{proposition}\label{prop:RIO_with_ENTz_satisfies_DPR_for_restricted_query_set}
$\mathsf{RIO}_z$ 
with sufficiently large $z$ satisfies the discrimination-preference relation $DPR$ 
\begin{itemize}
	\item over $\mQ_\mD$ in the first case (Eq.~\eqref{eq:Q_RIOz:case1}).
	\item over $\setof{Q\,|\, \Pt(Q) \in \LC(\NHR_{\uc,\mD})}$ in the second case (Eq.~\eqref{eq:Q_RIOz:case2}).
\end{itemize}
\end{proposition}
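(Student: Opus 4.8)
The plan is to reduce the entire claim to the behavior of $\mathsf{ENT}_z$, exploiting that in both branches of Eq.~\eqref{eq:Q_RIOz:case1}--\eqref{eq:Q_RIOz:case2} the candidate queries are ranked \emph{solely} by $\mathsf{ENT}_z$; the two cases differ only in the domain over which $\mathsf{ENT}_z$ is minimized, namely $\mQ_\mD$ in the first case and $\setof{Q\,|\,\Pt(Q)\in\LC(\NHR_{\uc,\mD})}$ in the second. Consequently, the preference relation $\prec_{\mathsf{RIO}_z}$ effectively induced within each case coincides with $\prec_{\mathsf{ENT}_z}$ restricted to the corresponding domain. Thus it suffices to prove that, for $z$ large enough, $\mathsf{ENT}_z$ satisfies the $DPR$ over $\mQ_\mD$; this immediately handles the first case and, by restriction to subsets, the second as well.

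First I would establish a uniform positive lower bound on the answer uncertainty. For any $Q\in\mQ_\mD$, Proposition~\ref{prop:properties_of_q-partitions},(\ref{prop:properties_of_q-partitions:enum:for_each_q-partition_dx_is_empty_and_dnx_is_empty}.) gives $\dx{}(Q)\neq\emptyset$ and $\dnx{}(Q)\neq\emptyset$, and since $p(\md)>0$ for every $\md\in\mD$ (cf.\ page~\pageref{etc:prob_of_each_diag_must_be_greater_zero}), Eq.~\eqref{eq:p(Q=t)} and Eq.~\eqref{eq:p(Q=f)} yield $p(Q=t)>0$ and $p(Q=f)>0$, whence $p_Q:=\min_{a\in\setof{t,f}}p(Q=a)>0$. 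Crucially, $p_Q$ depends only on the q-partition $\Pt(Q)$, and since $\mD$ is finite there are only finitely many q-partitions w.r.t.\ $\mD$; therefore the set of values attained by $p_Q$ over $\mQ_\mD$ is finite, and $t_{\min}:=\min_{Q\in\mQ_\mD} p_Q$ is a well-defined strictly positive number.

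Next I would fix any real $t$ with $0<t<t_{\min}$ (for instance $t:=t_{\min}/2$), so that every $Q\in\mQ_\mD$ satisfies $p_Q>t$. Applying Corollary~\ref{cor:ENTz_satisfies_discrimination-pref_order_for_queries_with_p>=arbitrary_small_number} with this $t$ and $\mQ:=\mQ_\mD$, any $z\geq\max\setof{-\tfrac{1}{2}(\log_2 t-\log_2(1-t)),1}$ makes $\mathsf{ENT}_z$ satisfy the $DPR$ over $\mQ_\mD$; this is the quantitative content of ``sufficiently large $z$''. For the first case the relevant domain is exactly $\mQ_\mD$, so the claim follows directly. For the second case, whenever $(Q,Q')\in DPR$ with both $Q,Q'$ lying in the smaller domain $\setof{Q\,|\,\Pt(Q)\in\LC(\NHR_{\uc,\mD})}\subseteq\mQ_\mD$, they in particular lie in $\mQ_\mD$, so $Q\prec_{\mathsf{ENT}_z}Q'$ holds by the already-proven $DPR$-preservation over $\mQ_\mD$; hence $\mathsf{ENT}_z$, and with it $\mathsf{RIO}_z$, satisfies the $DPR$ over the restricted set too.

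The main obstacle is conceptual rather than computational: one must justify that $\prec_{\mathsf{RIO}_z}$ is genuinely governed by $\mathsf{ENT}_z$ \emph{within} each case despite the case split and the $\argmin$ formulation, and that a \emph{single} finite threshold on $z$ works uniformly over the whole relevant query set. The latter hinges on the fixedness of the probability measure $p$ together with the finiteness of $\mD$ (and hence of the set of q-partitions), which is precisely what guarantees $t_{\min}>0$. If $p$ were allowed to vary, as in the theoretical-optimality analysis, no finite $z$ would suffice, mirroring the phenomenon already observed for $\mathsf{ENT}_z$ in Proposition~\ref{prop:ENT_preserves_discrimination-pref-order},(3.c).
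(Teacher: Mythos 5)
Your proof is correct and follows essentially the same route as the paper's: the paper's (one-line) proof likewise reduces each case of $\mathsf{RIO}_z$ to the behavior of $\mathsf{ENT}_z$ over the respective domain ($\mQ_\mD$ resp.\ $\setof{Q\,|\,\Pt(Q)\in\LC(\NHR_{\uc,\mD})}$) and then invokes Corollary~\ref{cor:ENTz_satisfies_discrimination-pref_order_for_queries_with_p>=arbitrary_small_number}. Your additional $t_{\min}>0$ argument (finitely many q-partitions plus strictly positive diagnosis probabilities) merely makes explicit the uniform finite threshold on $z$ that the paper leaves implicit in the phrase ``sufficiently large $z$''.
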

\begin{proof}
Immediate from Corollary~\ref{cor:ENTz_satisfies_discrimination-pref_order_for_queries_with_p>=arbitrary_small_number} and the fact that $\mathsf{RIO}_z$ chooses the best query w.r.t.\ $\mathsf{ENT}_z$ over $\mQ_\mD$ in the first case (Eq.~\eqref{eq:Q_RIOz:case1}) and the best query w.r.t.\ $\mathsf{ENT}_z$ over $\setof{Q\,|\, \Pt(Q) \in \LC(\NHR_{\uc,\mD})}$ in the second case (Eq.~\eqref{eq:Q_RIOz:case2}).
\end{proof}
\begin{remark}\label{RIOz_not_satisfies_DPR}
Proposition~\ref{prop:RIO_with_ENTz_satisfies_DPR_for_restricted_query_set} does not imply that $\mathsf{RIO}_z$ generally satisfies the discrimination-preference relation over $\mQ_\mD$. To see this, realize that, given two queries $Q,Q'$ where $(Q,Q')\in DPR$, the cautiousness $\qc(\Pt(Q')) \leq \qc(\Pt(Q))$. This holds due to Definition~\ref{def:query_cautiousness} and Proposition~\ref{prop:construction_of_Q'_from_Q_if_Q_discrimination-preferred_over_Q'} which implies that $\min\setof{|\dx{}(Q')|,|\dnx{}(Q')|} \leq \min\setof{|\dx{}(Q)|,|\dnx{}(Q)|}$. Hence, $Q'$ might be among the least cautious non-high-risk queries, i.e.\ in $\setof{Q\,|\, \Pt(Q) \in \LC(\NHR_{\uc,\mD})}$, and $Q$ might not. Given that additionally, e.g.\ $p(\dx{}(Q)) = p(\dnx{}(Q)) = \frac{1}{2}$, $Q'$ might be be selected by $\mathsf{RIO}_z$ and thus preferred to $Q$ (because $Q$ is simply not least cautious and therefore, as desired, not taken into consideration).\qed 
\end{remark}
\begin{remark}\label{RIOz_used_with_EMCa_2}
Thinkable variants of $\mathsf{RIO}$ other than $\mathsf{RIO}_z$ are to use $\mathsf{EMCa}_2$ or $\mathsf{MPS}'$, respectively, instead of $\mathsf{ENT}$ in Eq.~\eqref{eq:Q_RIO} (i.e.\ either replace all occurrences of $\mathsf{ENT}$ in $\mathsf{RIO}$ by $\mathsf{EMCa}_2$ or all occurrences by $\mathsf{MPS}'$). Both of these measures satisfy the discrimination-preference relation, unlike $\mathsf{ENT}$. Also $\mathsf{ENT}_z$ only does so for a large enough setting of $z$ which depends on the given set of queries from which an optimal one should be suggested. $\mathsf{EMCa}_2$ or $\mathsf{MPS}'$, on the other hand, satisfy the discrimination-preference relation anyway, without the need to adjust any parameters. 

It is however worth noting that whether $\mathsf{EMCa}_2$ or $\mathsf{MPS}'$ is used within the $\mathsf{RIO}$ measure is quite different. Whereas $\mathsf{EMCa}_2$ implies a very similar query preference order as $\mathsf{ENT}$ does (cf.\ Section~\ref{sec:QPartitionRequirementsSelection} and Table~\ref{tab:requirements_for_all_measures}), just putting more emphasize on the minimization of $|\dz{}(Q)|$, the employment of $\mathsf{MPS}'$ would, by definition, favor only queries of least possible cautiousness, or equivalently of highest possible risk, if such queries are available. In this case, the situation of $\mathsf{RIO}$ directly accepting the best query w.r.t.\ $\mathsf{MPS}'$ (first line in Eq.~\eqref{eq:Q_RIO}) becomes rather unlikely as this happens only if the current cautiousness parameter $\uc$ is not higher than $\frac{1}{|\mD|}$. Consequently, besides modifying the property of basically preferring queries with an optimal information gain, the usage of $\mathsf{MPS}'$ might also involve a higher average query computation time, as the (additional) second criterion (second line in Eq.~\eqref{eq:Q_RIO}) focusing only on non-high-risk queries tends to come into effect more often.

Independently of these thoughts, Proposition~\ref{prop:RIO_with_ENTz_satisfies_DPR_for_restricted_query_set} also holds for $\mathsf{RIO}$ using either of the two measures $\mathsf{EMCa}_2$ or $\mathsf{MPS}'$, see below.
\qed 
\end{remark}
\begin{proposition}\label{prop:RIO_with_EMCa_2_or_MPS'_satisfies_DPR_for_restricted_query_set}
$\mathsf{RIO}$ with every occurrence of $\mathsf{ENT}$ in Eq.~\eqref{eq:Q_RIO} replaced by exactly one of $\setof{\mathsf{EMCa}_2,\mathsf{MPS}'}$ satisfies the discrimination-preference relation $DPR$ 
\begin{itemize}
	\item over $\mQ_\mD$ in the first case (Eq.~\eqref{eq:Q_RIOz:case1}).
	\item over $\setof{Q\,|\, \Pt(Q) \in \LC(\NHR_{\uc,\mD})}$ in the second case (Eq.~\eqref{eq:Q_RIOz:case2}).
\end{itemize}
\end{proposition}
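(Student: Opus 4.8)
The plan is to mirror the proof of Proposition~\ref{prop:RIO_with_ENTz_satisfies_DPR_for_restricted_query_set} almost verbatim, substituting the one ingredient that was specific to $\mathsf{ENT}_z$ — the proviso ``sufficiently large $z$'' needed to secure the discrimination-preference property over a query set — by the \emph{unconditional} $DPR$-guarantees that have already been established for $\mathsf{EMCa}_2$ and $\mathsf{MPS}'$. First I would make explicit how the modified $\mathsf{RIO}$ operates once we have fixed which of its two cases applies. In the first case (Eq.~\eqref{eq:Q_RIOz:case1}) the modified measure simply returns the best query w.r.t.\ the substituted measure $m \in \setof{\mathsf{EMCa}_2,\mathsf{MPS}'}$ over the whole pool $\mQ_\mD$; in the second case (Eq.~\eqref{eq:Q_RIOz:case2}) it returns the best query w.r.t.\ $m$ over the restricted set $\setof{Q\,|\, \Pt(Q) \in \LC(\NHR_{\uc,\mD})}$. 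Hence, inside each case, the query preference \emph{actually realized} by $\mathsf{RIO}$ on the relevant set coincides with the preference order $\prec_m$ induced by $m$ on that set.

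Second, I would invoke the two unconditional satisfaction results. For $m = \mathsf{EMCa}_2$, Proposition~\ref{prop:EMCa_preserves_discrimination-pref-order},(3c) gives that $\mathsf{EMCa}_z$ satisfies $DPR$ for every real $z \geq 2$, so in particular $\mathsf{EMCa}_2$ satisfies $DPR$ over all query pairs. For $m = \mathsf{MPS}'$, Proposition~\ref{prop:MPS'_satisfies_DPR} states directly that $\mathsf{MPS}'$ satisfies $DPR$. Since satisfaction of $DPR$ over \emph{all} query pairs trivially implies satisfaction over any subset of queries, $m$ satisfies $DPR$ both over $\mQ_\mD$ (first case) and over $\setof{Q\,|\, \Pt(Q) \in \LC(\NHR_{\uc,\mD})}$ (second case). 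Combined with the first step — which identifies the order realized by $\mathsf{RIO}$ with $\prec_m$ on the relevant set — this yields both bullets of the proposition.

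The argument is essentially routine, and the only point requiring care is the contrast with the $\mathsf{ENT}_z$ case. There, the restriction to queries whose smaller answer probability is bounded below by some $t>0$ was genuinely needed, and forced the dependence of the admissible $z$ on $t$ (via Corollary~\ref{cor:ENTz_satisfies_discrimination-pref_order_for_queries_with_p>=arbitrary_small_number}), because $\mQ_\mD$ may in general contain queries with arbitrarily small answer probabilities. Here, by contrast, the subsets $\mQ_\mD$ and $\setof{Q\,|\, \Pt(Q) \in \LC(\NHR_{\uc,\mD})}$ impose no extra work whatsoever, precisely because $\mathsf{EMCa}_2$ and $\mathsf{MPS}'$ order every discrimination-preferred pair correctly irrespective of the set in which the pair lives (this is the observation already flagged in Remark~\ref{RIOz_used_with_EMCa_2}). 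Thus the main — and only mild — obstacle is to state cleanly that, once we are inside a fixed one of the two cases, no $\mathsf{RIO}$-specific switching logic (the $\mathsf{ENT}$-versus-$\mathsf{SPL}$ risk adaptation) can reorder a discrimination-preferred pair; this is immediate from the case definitions in Eqs.~\eqref{eq:Q_RIOz:case1}--\eqref{eq:Q_RIOz:case2}, since within each case the selection is governed solely by $\prec_m$ over the respective fixed set.
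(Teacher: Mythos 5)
Your proposal is correct and matches the paper's (implicit) argument exactly: the paper states this proposition without a written proof precisely because it is immediate by the same reasoning as Proposition~\ref{prop:RIO_with_ENTz_satisfies_DPR_for_restricted_query_set}, substituting the unconditional $DPR$-guarantees of Proposition~\ref{prop:EMCa_preserves_discrimination-pref-order},(3.c) (or Corollary~\ref{cor:EMCa_z_for_z>=2_is_better_than_ENTr_for_all_r}) and Proposition~\ref{prop:MPS'_satisfies_DPR} for the $z$-dependent guarantee of Corollary~\ref{cor:ENTz_satisfies_discrimination-pref_order_for_queries_with_p>=arbitrary_small_number}, and noting that within each case the selection is governed solely by $\prec_m$ over the respective set. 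Your added observation that no $\mathsf{RIO}$-specific switching logic can reorder a discrimination-preferred pair inside a fixed case is exactly the point the paper makes in Remark~\ref{RIOz_used_with_EMCa_2}.
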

The theoretically optimal query w.r.t.\ to $\mathsf{RIO}$ and $\mathsf{RIO}_z$, respectively, is one that is optimal w.r.t.\ $\mathsf{ENT}$ and $\mathsf{ENT}_z$, respectively, and that is sufficiently cautious as prescribed by the current parameter $\uc$. 
\begin{proposition}\label{prop:theoretical_opt_wrt_RIO_RIOz}\leavevmode
Let the current parameter of $\mathsf{RIO}$ ($\mathsf{RIO}_z$) be $\uc$ and the set of leading diagnoses be $\mD$. Then, the theoretically optimal query w.r.t.\ $\mathsf{RIO}$ ($\mathsf{RIO}_z$) satisfies $|p(\dx{}(Q)) - p(\dnx{}(Q))| = 0$, $\dz{}(Q) = \emptyset$ and $\qc(Q) \geq \uc$.
\end{proposition}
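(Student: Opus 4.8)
The plan is to reduce the claim about the composite, piecewise-defined $\mathsf{RIO}$ rule to the already-established characterization of the $\mathsf{ENT}$-optimum in Proposition~\ref{prop:ent}. The key observation is that, in both branches of Eq.~\eqref{eq:Q_RIO}, the query returned by $\mathsf{RIO}$ is always (i)~a non-high-risk query, i.e.\ one whose q-partition lies in $\NHR_{\uc,\mD}$ and hence satisfies $\qc(Q) \geq \uc$, and (ii)~a query that minimizes $\mathsf{ENT}$ over some subset of $\mQ_\mD$. The best $\mathsf{RIO}$ can possibly do, therefore, is to return a query that is globally $\mathsf{ENT}$-optimal while being non-high-risk. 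I would first argue that such a query is realizable in the domain over which theoretical optimality is assessed (all partitions and all probability measures w.r.t.\ $\mD$, cf.\ Definition~\ref{def:measures_equivalent_theoretically-optimal_superior}), and then argue that every theoretically optimal $\mathsf{RIO}$ query must possess the three stated properties.

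For the realizability step, I would exhibit a concrete witness $Q^*$. Over the admissible domain we may freely choose both the partition and $p$, so set $\dz{}(Q^*) := \emptyset$, let $\dx{}(Q^*)$ and $\dnx{}(Q^*)$ have cardinalities $\lfloor |\mD|/2 \rfloor$ and $\lceil |\mD|/2 \rceil$ (legal as soon as $|\mD| \geq 2$, the precondition for any query to exist by Proposition~\ref{prop:properties_of_q-partitions}), and choose $p$ so that $p(\dx{}(Q^*)) = p(\dnx{}(Q^*)) = \tfrac{1}{2}$. By Proposition~\ref{prop:ent} this $Q^*$ is a global optimum of $\mathsf{ENT}$, so we may take $Q_{\mathsf{ENT}} = Q^*$. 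Moreover $\qc(Q^*) = \lfloor |\mD|/2 \rfloor / |\mD| = \overline{\qc}$, the maximal attainable cautiousness, and since $\uc \leq \overline{\uc} \leq \overline{\qc}$ we obtain $\qc(Q^*) \geq \uc$, i.e.\ $\Pt(Q^*) \in \NHR_{\uc,\mD}$. Hence the first branch of Eq.~\eqref{eq:Q_RIO} applies and $\mathsf{RIO}$ returns $Q^*$.

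It then remains to show that $Q^*$ is a $\mathsf{RIO}$-optimum and that any $\mathsf{RIO}$-optimum has the three properties. For the former: whichever branch fires, the query $\mathsf{RIO}$ selects has $\mathsf{ENT}$-value no smaller than the global $\mathsf{ENT}$-minimum $\mathsf{ENT}(Q^*)$; since $Q^*$ attains this minimum and is itself returned by $\mathsf{RIO}$, no query can be $\prec_{\mathsf{RIO}}$-preferred to $Q^*$, so $Q^*$ is theoretically optimal. For the converse, a theoretically optimal $\mathsf{RIO}$ query $Q$ attains the same globally optimal value, hence is $\mathsf{ENT}$-optimal, which by Proposition~\ref{prop:ent} forces $p(\dx{}(Q)) = p(\dnx{}(Q))$ (equivalently $|p(\dx{}(Q)) - p(\dnx{}(Q))| = 0$) and $p(\dz{}(Q)) = 0$; the latter yields $\dz{}(Q) = \emptyset$ because every diagnosis carries strictly positive probability (cf.\ page~\pageref{etc:prob_of_each_diag_must_be_greater_zero}). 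Finally, since any query returned by $\mathsf{RIO}$ is non-high-risk in either branch, $\qc(Q) \geq \uc$. The argument for $\mathsf{RIO}_z$ is identical, replacing $\mathsf{ENT}$ by $\mathsf{ENT}_z$ throughout and invoking the observation (from the proof of Proposition~\ref{prop:ent} and Eq.~\eqref{eq:ENTz}) that for every $z \geq 0$ the measure $\mathsf{ENT}_z$ is likewise globally optimized exactly at $p(\dx{}) = p(\dnx{})$ together with $\dz{} = \emptyset$.

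I expect the main obstacle to be conceptual rather than computational: $\mathsf{RIO}$ is specified as a piecewise selection rule rather than as a single scalar quality function, so the phrase \emph{theoretically optimal w.r.t.\ $\mathsf{RIO}$} must be pinned down carefully against Definition~\ref{def:measures_equivalent_theoretically-optimal_superior}. The crux is to justify that the best achievable $\mathsf{RIO}$ value coincides with the global $\mathsf{ENT}$-optimum, which hinges precisely on the fact that, in the theoretical domain, one can always realize an $\mathsf{ENT}$-optimal partition that is simultaneously maximally cautious and therefore non-high-risk, thereby landing in the first (unconstrained) branch of Eq.~\eqref{eq:Q_RIO}. Once this alignment is established, the three desired properties fall out directly from Proposition~\ref{prop:ent} and the definition of $\NHR_{\uc,\mD}$.
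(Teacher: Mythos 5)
The paper never actually proves this proposition: it is stated bare, justified only by the sentence immediately preceding it (``the theoretically optimal query w.r.t.\ $\mathsf{RIO}$ \dots is one that is optimal w.r.t.\ $\mathsf{ENT}$ \dots and that is sufficiently cautious''). Your argument is a correct formalization of exactly that reasoning, so it matches the paper's (implicit) approach: both branches of Eq.~\eqref{eq:Q_RIO} return only queries with $\Pt(Q)\in\NHR_{\uc,\mD}$ (hence $\qc(Q)\geq\uc$) and select by minimizing $\mathsf{ENT}$ over an admissible subset of $\mQ_\mD$; your maximally cautious witness (using $\uc\leq\overline{\uc}\leq\overline{\qc}$) shows the global $\mathsf{ENT}$-optimum is realizable inside $\NHR_{\uc,\mD}$, so the best achievable $\mathsf{RIO}$ value coincides with the global $\mathsf{ENT}$-optimum, and Proposition~\ref{prop:ent} together with the positivity of diagnosis probabilities yields $|p(\dx{}(Q))-p(\dnx{}(Q))|=0$ and $\dz{}(Q)=\emptyset$.

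One concrete imprecision: your closing claim that $\mathsf{ENT}_z$ is globally optimized \emph{exactly} at $p(\dx{})=p(\dnx{})$ together with $\dz{}=\emptyset$ ``for every $z\geq 0$'' fails at $z=0$. There the penalty term $z\,p(\dz{}(Q))$ in Eq.~\eqref{eq:ENTz} vanishes ($\mathsf{ENT}_0$ is equivalent to $\mathsf{H}$), so global optima with $\dz{}(Q)\neq\emptyset$ exist, and consequently a non-high-risk query with non-empty $\dz{}$ can be a theoretical optimum of $\mathsf{RIO}_0$ --- meaning the proposition itself is false in that degenerate case. For $z>0$ your argument goes through unchanged, so this is an artifact of the (unproven, unrestricted) statement rather than of your approach; it would suffice to note the restriction $z>0$ when invoking the $\mathsf{ENT}_z$ optimum.
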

Note that the third condition given in this proposition involves an inequality. This implies that there are $|\mD|-\lceil \uc \rceil+1$ different global optima for $\mathsf{RIO}$ and $\mathsf{RIO}_z$, respectively, i.e.\ one for each query cautiousness $\qc(Q) \in \setof{\lceil \uc \rceil,\dots,|\mD|}$.

\subsubsection{Summary}
\label{sec:Summary}
In the previous sections we have thoroughly analyzed the properties of various existing (Section~\ref{sec:ExistingActiveLearningMeasuresForKBDebugging}) and new (Section~\ref{sec:NewActiveLearningMeasuresForKBDebugging}) active learning measures with regard to their application as query selection measures in the interactive KB debugging. In this section, we summarize the obtained results. 

\paragraph{Discrimination-Preference Relation and Theoretical Optimum.} Table~\ref{tab:measures_satisfy_DPR_theoretical_opt_exists} reports on the extent the discussed measures, grouped by active learning frameworks they belong to, comply with the discrimination-preference relation (satisfaction of $DPR$: $\checkmark$; non-satisfaction of, but consistency with $DPR$: $(\checkmark)$; non-consistency with $DPR$: $\times$) and indicates whether there is a theoretical optimum ($\checkmark$) of a measure or not ($\times$). Note that $\mathsf{ENT}$ as well as $\mathsf{ENT}_z$ were not assigned to any of the active learning frameworks discussed throughout Section~\ref{sec:NewActiveLearningMeasuresForKBDebugging} since they do not seem to fit any of these sufficiently well. The measures $\mathsf{SPL}$ and $\mathsf{SPL}_z$, on the other hand, match the QBC framework pretty well with the additional assumption that a penalty of $z$ is awarded for each non-voting committee member. 

What becomes evident from Table~\ref{tab:measures_satisfy_DPR_theoretical_opt_exists} is that the only measures that satisfy the discrimination-preference relation $DPR$ without the need of appropriate additional conditions being fulfilled are $\mathsf{EMCa}_z$ for $z>2$, $\mathsf{SPL}_z$ for $z>1$ as well as $\mathsf{MPS}'$. That is, the order imposed by any of these measures over an arbitrary set of queries includes all preferences between query pairs $DPR$ includes. For $\mathsf{ENT}_z$ and fixed selection of $z$, all queries in the set of queries $\mQ$ of interest must feature a large enough discrepancy -- which depends on $z$ -- between the likeliness of positive and negative answers in order for $\mathsf{ENT}_z$ to satisfy the $DPR$ over $\mQ$. This fact is made precise by Corollary~\ref{cor:ENTz_satisfies_discrimination-pref_order_for_queries_with_p>=arbitrary_small_number}. Further, we have examined two measures, $\mathsf{SPL}$ and $\mathsf{MPS}$, which do not violate the $DPR$, i.e.\ do not imply any preferences between query pairs which are reciprocally stated by $DPR$, but are proven not to imply all preferences between query pairs stated by $DPR$. All other investigated measures are not consistent with the $DPR$ in general.

Concerning the theoretical optimum, i.e.\ the global optimum of the real-valued functions characterizing the measures, there are only two measures, $\mathsf{KL}$ and $\mathsf{EMCb}$, for which such a one does not exist. Responsible for this fact to hold is, roughly, that given any fixed query, we can construct another query with a better value w.r.t.\ these measures, theoretically. On the practical side, however, we were still able to derive criteria -- which are, remarkably, almost identical for both measures, albeit these belong to different active learning frameworks -- the best query among a number of queries must comply with. Nevertheless, a drawback of these criteria is that they are non-deterministic in that they still leave open a number of different shapes the best query might have and that must be searched through in order to unveil this sought query. The reason why we studied the theoretical optima of the measures is that they provide us with a valuable insight into the properties that render queries good or bad w.r.t.\ the respective measure. And, as we have seen, these properties are often not extractable in a very easy way just from the given quantitative function. We use these deduced properties in the next section to pin down qualitative requirements to optimal queries w.r.t.\ all the discussed measures. These qualitative criteria will be an essential prerequisite to the successful implementation of heuristics to guide a systematic search for optimal queries. We will delve into this latter topic in Section~\ref{sec:FindingOptimalQPartitions}.

\paragraph{Superiority Relation.} We were also studying the relationship between different measures in terms of superiority. The superiority relation $\prec$ (cf.\ Definition~\ref{def:measures_equivalent_theoretically-optimal_superior}) is defined on the basis of the discrimination-preference relation $DPR$ and holds between two measures $m_1$ and $m_2$ (i.e.\ $m_1$ is superior to $m_2$) iff the intersection of $DPR$ with the (preference) relation imposed on queries by $m_1$ is a proper superset of the intersection of $DPR$ with the (preference) relation imposed on queries by $m_2$. 
Figure~\ref{fig:superiority_relation_graph} summarizes all superiority relationships between measures we discovered. Note this figure raises no claim to completeness as there might be further valid relationships not depicted which we did not detect. The figure is designed in a way that higher compliance with the $DPR$ means a higher position of the node representing the respective measure in the graph. We can read from the figure that all the measures that satisfy the $DPR$ (framed nodes in the graph) are superior to all others. In fact, this must be the case due to Proposition~\ref{prop:if_m1_satisfies_DPR_and_m2_does_not_then_m1_superior_to_m2}. Please observe that the superiority relation is transitive (witnessed by Proposition~\ref{prop:superiority_is_strict_order}). This implicates that, given a directed path between to measures, the source measure of this path is inferior to the sink measure of the path. That is, for instance, the $\mathit{GiniIndex}$ is inferior to $\mathsf{MPS}'$ although there is no direct edge between both measures. Interestingly, none of the measures $\mathsf{SPL}$ (only consistent with $DPR$) and $\mathsf{ENT}$ (inconsistent with $DPR$ in general) so far used in KB debugging satisfy the $DPR$. Hence, for each of these, there are other measures superior to them. That said, for each of $\mathsf{SPL}$ and $\mathsf{ENT}$, there is a superior measure which favors queries with the very same properties, just that it additionally obeys the discrimination-preference order. For $\mathsf{SPL}$, one of such measures is e.g.\ $\mathsf{SPL}_2$. In fact, all measures $\mathsf{SPL}_z$ with a parametrization $z>1$ are candidates to improve $\mathsf{SPL}$ regarding the fulfillment of the $DPR$, and hence are superior to $\mathsf{SPL}$. As to $\mathsf{ENT}$, one measure involving the optimization of the same query features, just with a higher penalization of features that can lead to a violation of the $DPR$, is e.g.\ $\mathsf{EMCa}_2$. Actually, all measures $\mathsf{EMCa}_z$ with an arbitrary setting of $z\geq 2$ are superior to $\mathsf{SPL}$. This adherence to the same strategy of query selection of $(\mathsf{SPL},\mathsf{SPL}_{z\,(z>1)})$ and $(\mathsf{ENT},\mathsf{EMC}_{z\,(z\geq 2)})$ with more appropriate prioritization of properties achieved by the superior measure finds expression also in the derived requirements to optimal queries which we will discuss in Section~\ref{sec:QPartitionRequirementsSelection} (cf.\ Table~\ref{tab:requirements_for_all_measures}).

\paragraph{Equivalence Relations.} Finally, Table~\ref{tab:measure_equiv_classes} gives information about equivalence relations between measures, in particular about the equivalence classes w.r.t.\ the relations $=$ (two measures optimize exactly the same function) , $\equiv$ (two measures impose exactly the same preference order on any set of queries) and $\equiv_{\mathsf{\mQ}}$ (two measures impose exactly the same preference order on the query set $\mQ$) where $\mQ$ is perceived to be any set of queries comprising only queries with an empty $\dz{}$-set. It is important to recognize that the derived equivalences between measures do hold as far as the KB debugging setting is concerned, but must not necessarily hold in a different setting, e.g.\ an active learning setting where the learner is supposed to learn a classifier with a discrete class variable with domain of cardinality $3$ or higher. 
Further on, note that the higher the number of the row in the table, the more coarse grained is the partitioning of the set of all measures given by the equivalence classes imposed by the relations. This results from the fact that if two measures are equal, they must be also equivalent, and if they are equivalent, they must be equivalent under any assumption of underlying set of queries.
This table tells us that, up to equality, there are $18$ different measures we discussed, each with a different quantitative function (first vertical section of the table). If the preference order imposed by the measures is taken as the criterion, then there remain $15$ different equivalence classes of measures (second section of the table). We observe that, compared with the relation $=$, the classes $\setof{\mathsf{LC}}$, $\setof{\mathsf{M}}$, $\setof{\mathsf{H}}$ and $\setof{\mathsf{EMCa}_0,\mathit{GiniIndex}}$ as well as the classes $\setof{\mathsf{VE}}$ and $\setof{\mathsf{SPL}_0}$ (in the first section only implicitly listed in terms of $\setof{\mathsf{SPL}_{z\,(z\neq 1)}}$) join up when considering $\equiv$. That is, these four and two classes of different functions lead to only two different query selection measures in the context of our KB debugging scenario. Comparing the third section of the table with the second, we see various coalescences of classes, indicating that, over queries with empty $\dz{}$-set, essentially different measures suddenly constitute equivalent measures. In particular, measures motivated by the maximization of \emph{information gain}, the maximization of uncertainty about query answers (\emph{uncertainty sampling}), the maximization of the expected probability mass of invalidated (leading) diagnoses (\emph{expected model change, variant (a)}), and the $\mathsf{GiniIndex}$ conflate into one single equivalence class. On the other hand, \emph{vote entropy} and \emph{split-in-half} (with arbitrary $z$-parametrization) merge. Third, $\mathsf{MPS}'$ is no longer better (in terms of the superiority relation) than $\mathsf{MPS}$, because both are now equivalent. In fact, the order on the set of measures imposed by the superiority relation $\prec$ given by Figure~\ref{fig:superiority_relation_graph} collapses completely after restricting the set of arbitrary queries to only queries in $\mQ$, i.e.\ all edges in the graph depicted by Figure~\ref{fig:superiority_relation_graph} vanish. This is substantiated by Proposition~\ref{prop:if_Q_discrimination-preferred_over_Q'_then_dz(Q')_supset_dz(Q)} which implies that the discrimination-preference relation $DPR$ upon $\mQ$ must be the empty relation. So, overall, we count seven remaining classes of measures w.r.t.\ $\equiv_{\mQ}$.

\renewcommand{\arraystretch}{1.4}
\begin{table}[t!]
\small
	\centering
		\begin{tabular}{lccc}
\toprule
 framework 															& measure $m$  			& $DPR$							&   	theoretical optimum exists	 								\\
\midrule
\multirow{3}{*}{US} 										& 	$\mathsf{LC}$ 	&	 $\times$	 				&  			$\checkmark$	\\
																				& 	$\mathsf{M}$ 		&	 $\times$	 				&  			$\checkmark$			\\
																				& 	$\mathsf{H}$ 		&	 $\times$	 				&  				$\checkmark$		\\
\hline
\multirow{2}{*}{IG}											& 	$\mathsf{ENT}$ 	&$\times$/$\checkmark_{1)}$&	$\checkmark$		\\
																				& 	$\mathsf{ENT}_z$&$\times$/$\checkmark_{2)}$&	$\checkmark$		\\
\hline
\multirow{4}{*}{QBC}		 								& 	$\mathsf{SPL}$ 	&	 $(\checkmark)$		&	 $\checkmark$		\\
																				& 	$\mathsf{SPL}_z$&	 $\times_{(z<1)}$/$(\checkmark)_{(z=1)}$/$\checkmark_{(z>1)}$&	  	$\checkmark$			\\
																				& 	$\mathsf{VE}$ 	&	 $\times$					&  	$\checkmark$		\\
																				& 	$\mathsf{KL}$ 	&	 $\times$					&  	$\times$	\\
\hline
\multirow{6}{*}{EMC}										& 	$\mathsf{EMCa}$	&	 $\times$/$\checkmark_{3)}$	&  	$\checkmark$		\\
																				& 	$\mathsf{EMCa}_z$&	$\times_{(z< 2)}$/$\checkmark_{(z\geq 2)}$ 	&  	$\checkmark$			\\
																				& 	$\mathsf{EMCb}$	&	 $\times$					&  	$\times$		\\
																				& 	$\mathsf{MPS}$ 	&	 $(\checkmark)$		&  	$\checkmark$		\\
																				& 	$\mathsf{MPS}'$ 	&	 $\checkmark$		&  	$\checkmark$		\\
																				& 	$\mathsf{BME}$ 	&	 $\times$					&  	$\checkmark$		\\
\hline
\multirow{2}{*}{RL} 										& 	$\mathsf{RIO}$ 	&	 $\times$			 		&  	$\checkmark$		\\
																				& 	$\mathsf{RIO}_z$&	 $\times_{4)}$		&  	$\checkmark$		\\
\bottomrule
\end{tabular}
\renewcommand{\arraystretch}{1}

\vspace{5pt}

\begin{tabular}{lp{10cm}}
1):& In general, $\checkmark$ holds only over a set of queries $\mQ$ satisfying Corollary~\ref{cor:ENT_satisfies_discrimination-pref_order_for_queries_with_p>=1/5}. \\ 
2):& In general, $\checkmark$ holds only over a set of queries $\mQ$ and a parameter $z$ that 
 satisfy Corollary~\ref{cor:ENTz_satisfies_discrimination-pref_order_for_queries_with_p>=arbitrary_small_number}. \\ 
3):& In general, $\checkmark$ holds only over a set of queries $\mQ$ satisfying Corollary~\ref{cor:EMCa_satisfies_discrimination-pref_order_for_queries_with_p>=1/4}. \\  
4):& The parameter $z$ can be specified in a way such that the $DPR$ is met, but only over a restricted set of queries (as per Proposition~\ref{prop:RIO_with_ENTz_satisfies_DPR_for_restricted_query_set}).
		\end{tabular}
\caption[discrimination-Preference Relation and Theoretical Optima]{The table lists all the discussed query quality measures $m$ (second column), grouped by the respective active learning frameworks (first column) uncertainty sampling (US), information gain (IG), query-by-committee (QBC), expected model change (EMC) or reinforcement learning (RL) they belong to. 
The column $DPR$ signalizes whether the measure satisfies the $DPR$ (denoted by a $\checkmark$), is consistent with it and does not satisfy it (denoted by a bracketed $\checkmark$), or is not consistent with it (denoted by a $\times$). Subscript numbers such as $_{1)}$ or bracketed statements such as $_{(z>2)}$ give additional conditions (for the numbers: listed at the bottom of the table) that must be satisfied for the respective property to hold. The rightmost column reports whether ($\checkmark$) or not ($\times$) a theoretical optimum exists for the measure.} 
	\label{tab:measures_satisfy_DPR_theoretical_opt_exists}
\end{table}

\begin{figure}[tb]
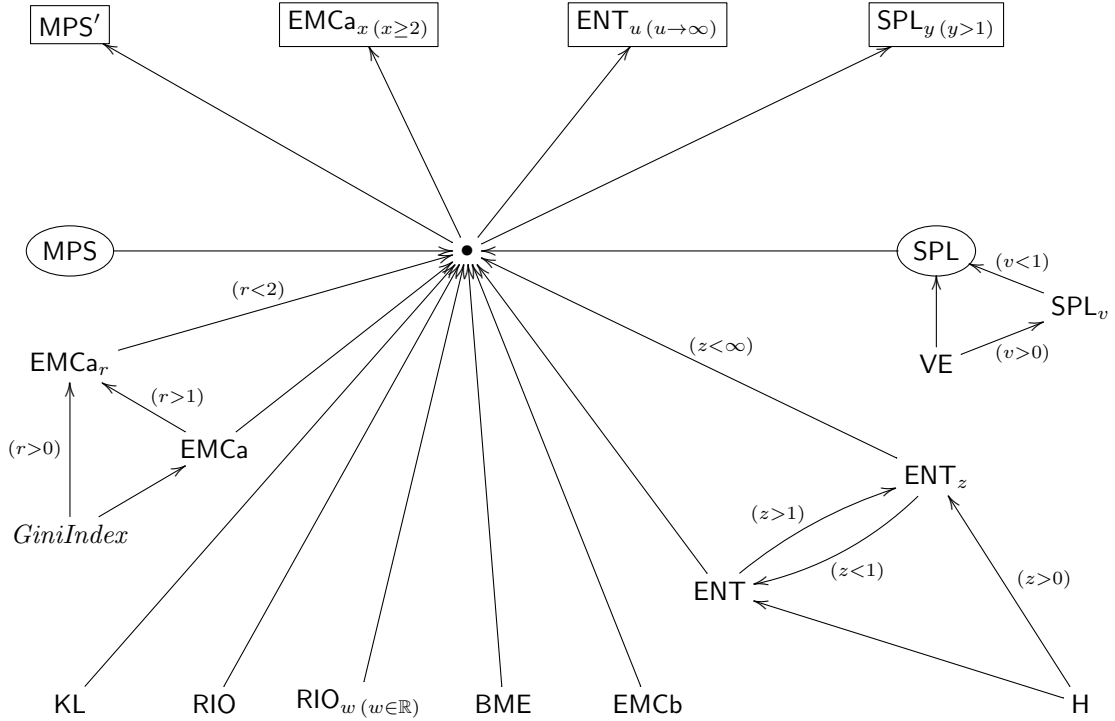

\xygraph{
!{<0cm,0cm>;<1.9cm,0cm>:<0cm,1.5cm>::}
!{(0,5)}*+[F]{\mathsf{MPS}'}="MPS'"
!{(2,5)}*+[F]{\mathsf{EMCa}_{x\,(x\geq 2)}}="EMCa_x"
!{(4,5)}*+[F]{\mathsf{ENT}_{u\,(u\to\infty)}}="ENT_u"
!{(6,5)}*+[F]{\mathsf{SPL}_{y\,(y > 1)}}="SPL_y"
!{(0,3)}*++[o][F]{\mathsf{MPS}}="MPS"
!{(6,3)}*++[o][F]{\mathsf{SPL}}="SPL"
!{(0,2)}*+{\mathsf{EMCa}_r}="EMCa_r"
!{(7,2.5)}*+{\mathsf{SPL}_v}="SPL_v"
!{(2.75,3)}*+{\bullet}="bullet"
!{(6,2)}*+{\mathsf{VE}}="VE"
!{(1,1.25)}*+{\mathsf{EMCa}}="EMCa"
!{(4.5,0)}*+{\mathsf{ENT}}="ENT"
!{(6,1)}*+{\mathsf{ENT}_z}="ENT_z"
!{(0,0.5)}*+{\mathit{GiniIndex}}="Gini"
!{(0,-1)}*+{\mathsf{KL}}="KL"
!{(1,-1)}*+{\mathsf{RIO}}="RIO"
!{(2,-1)}*+{\mathsf{RIO}_{w\,(w\in\mathbb{R})}}="RIO_w"
!{(3,-1)}*+{\mathsf{BME}}="BME"
!{(4,-1)}*+{\mathsf{EMCb}}="EMCb"
!{(7,-1)}*+{\mathsf{H}}="H"
"Gini":"EMCa"
"EMCa_r":"bullet"^{(r<2)}
"Gini":"EMCa_r"^{(r>0)}
"EMCa":"EMCa_r"_(0.4){(r>1)}
"EMCa":"bullet"
"MPS":"bullet"
"ENT":"bullet"
"bullet":"EMCa_x"
"bullet":"MPS'"
"bullet":"SPL_y"
"bullet":"ENT_u"
"VE":"SPL_v"_{(v>0)}
"VE":"SPL"
"SPL_v":"SPL"_{(v<1)}
"H":"ENT"
"H":"ENT_z"_{(z>0)}
"KL":"bullet"
"RIO":"bullet"
"RIO_w":"bullet"
"BME":"bullet"
"EMCb":"bullet"
"SPL":"bullet"
"ENT_z":@/^0.3cm/"ENT"^{(z<1)}
"ENT":@/^/"ENT_z"^(0.4){(z>1)}
"ENT_z":"bullet"_{(z<\infty)}
%
}
\caption[The Superiority Relationship Graph]{The graph in this figure depicts all the derived superiority relationships between the discussed measures. $m_1 \to m_2$ denotes that $m_2$ is superior to $m_1$, i.e.\ $m_2 \prec m_1$ (cf.\ Definition~\ref{def:measures_equivalent_theoretically-optimal_superior}). Arrows with a label correspond to conditional superiority relations that hold only if the condition given by the label is satisfied. Framed (circled) nodes indicate measures that satisfy (are consistent with) the discrimination-preference relation $DPR$. Nodes without frame or circle are (in general) not consistent with the $DPR$. Note that whenever possible we have included only one element of each equivalence class (see Table~\ref{tab:measure_equiv_classes}, row ``$\equiv$'') for simplicity, e.g.\ all relationships that are pictured for $\mathsf{H}$ are valid for $\mathsf{LC}$ or $\mathsf{M}$ as well. The bullet in the middle of the graph has no semantics other than allowing for a clearer representation of the graph by bundling all arrows that point to the measures that satisfy the $DPR$ (due to Proposition~\ref{prop:if_m1_satisfies_DPR_and_m2_does_not_then_m1_superior_to_m2}).}
\label{fig:superiority_relation_graph}
\end{figure}

\renewcommand{\arraystretch}{1.65}
\begin{table}
\footnotesize
	\centering
		\begin{tabular}{ll}
\toprule
relation					& 		equivalence classes			\\ 
\midrule
\multirow{3}{*}{$=$}								& $\setof{\mathsf{ENT}_1,\mathsf{ENT}}, \setof{\mathsf{ENT}_{z\,(z \neq 1)}}, \setof{\mathsf{SPL}_1, \mathsf{SPL}},\setof{\mathsf{SPL}_{z\,(z \neq 1)}}, \setof{\mathsf{RIO}_1, \mathsf{RIO}},\setof{\mathsf{RIO}_{z\,(z \neq 1)}},    $ \\
									& $\setof{\mathsf{EMCa}_1,\mathsf{EMCa}}, \setof{\mathsf{EMCa}_{z\,(z \notin \setof{0,1})}},\setof{\mathsf{EMCa}_0,\mathit{GiniIndex}},\setof{\mathsf{LC}},\setof{\mathsf{M}},\setof{\mathsf{H}},	$\\
									& $\setof{\mathsf{VE}},\setof{\mathsf{KL}},\setof{\mathsf{EMCb}},\setof{\mathsf{MPS}},\setof{\mathsf{MPS}'}, \setof{\mathsf{BME}}$ \\
\hline
\multirow{3}{*}{$\equiv$}					& $\setof{\mathsf{ENT}_1,\mathsf{ENT}}, \setof{\mathsf{ENT}_{z\,(z \notin\setof{0,1})}}, \setof{\mathsf{SPL}_1, \mathsf{SPL}},\setof{\mathsf{SPL}_{z\,(z \notin \setof{0,1})}}, \setof{\mathsf{RIO}_1, \mathsf{RIO}},\setof{\mathsf{RIO}_{z\,(z \neq 1)}},    $ \\
									& $\setof{\mathsf{EMCa}_1,\mathsf{EMCa}}, \setof{\mathsf{EMCa}_{z\,(z \notin \setof{0,1})}},\setof{\mathsf{EMCa}_0,\mathit{GiniIndex},\mathsf{LC},\mathsf{M},\mathsf{H}, \mathsf{ENT}_0},	$\\
									& $\setof{\mathsf{VE},\mathsf{SPL}_0},\setof{\mathsf{KL}},\setof{\mathsf{EMCb}},\setof{\mathsf{MPS}},\setof{\mathsf{MPS}'}, \setof{\mathsf{BME}}$ \\
\hline
\multirow{2}{*}{$\equiv_{\mQ}$}		&	 $ \setof{\mathsf{EMCa},\mathsf{EMCa}_{z\,(z\in\mathbb{R})},\mathit{GiniIndex},\mathsf{LC},\mathsf{M},\mathsf{H},\mathsf{ENT},\mathsf{ENT}_{z\,(z\in\mathbb{R})}}^{{\larger\textcircled{\smaller[2]1}}},\setof{\mathsf{SPL}, \mathsf{SPL}_{z\,(z\in\mathbb{R})},\mathsf{VE}}^{{\larger\textcircled{\smaller[2]2}}},$ \\
									&	$\setof{\mathsf{RIO},\mathsf{RIO}_{z\,(z\in\mathbb{R})}}^{{\larger\textcircled{\smaller[2]3}}},\setof{\mathsf{KL}}^{{\larger\textcircled{\smaller[2]4}}},\setof{\mathsf{EMCb}}^{{\larger\textcircled{\smaller[2]5}}}, \setof{\mathsf{MPS},\mathsf{MPS}'}^{{\larger\textcircled{\smaller[2]6}}}, \setof{\mathsf{BME}}^{{\larger\textcircled{\smaller[2]7}}}$														\\
\bottomrule
		\end{tabular}
\caption[Equivalence Classes of Measures]{Equivalence classes of measures w.r.t.\ the relations $\equiv$ and $\equiv_{\mQ}$ (cf.\ Definition~\ref{def:measures_equivalent_theoretically-optimal_superior}) and $=$ where $m_1=m_2$ iff $m_1$ and $m_2$ are represented by the same function. $\mQ \subseteq \mQ_{\mD,\tuple{\mo,\mb,\Tp,\Tn}_\RQ}$ denotes any set of queries where each query $Q \in \mQ$ satisfies $\dz{}(Q) = \emptyset$. 
The circled numbers in the third row have no semantics in this table and just provide an (arbitrary) numbering of the equivalence classes which enables easy reference to these classes. Any measure that is mentioned with a parameter $z$ without without assignment of $z$ to a fixed number, is to be understood as the measure resulting from a specification of $z$ to an arbitrary, but fixed number. That is, e.g.\ the class $\setof{\mathsf{ENT}_{z\,(z\neq 1)}}$ stands for infinitely many \emph{different} equivalence classes, one for each setting of $z$ over $\mathbb{R}\setminus\setof{1}$. Note that in case a measure with undefined $z$ parameter occurs in a class along with some other non-parameterized measure, this implies that for any concrete value of $z$, the measure must be in one and the same equivalence class. E.g.\ $\setof{\mathsf{SPL}, \mathsf{SPL}_{z\,(z\in\mathbb{R})},\mathsf{VE}}$ yields one and the same class for any setting of $z$ over $\mathbb{R}$.}
	\label{tab:measure_equiv_classes}
\end{table}

\subsection{Q-Partition Requirements Selection}
\label{sec:QPartitionRequirementsSelection}
In this section we enumerate and discuss the qualitiative requirements $r_m$ that can be derived for all the discussed quantitative query quality measures $m$. These requirements provide the basis for the construction of efficient search strategies for the optimal q-partition $\Pt$ that do not depend on a precalculation of a pool of q-partitions (cf.\ the discussion on page~\pageref{etc:discussion_quantitative_vs_qualitative_requirements}) and are used by the \textsc{qPartitionReqSelection} function in Algorithm~\ref{algo:query_comp} on page~\pageref{algo:query_comp}.

The function \textsc{qPartitionReqSelection} in Algorithm~\ref{algo:query_comp} is described by Table~\ref{tab:requirements_for_all_measures}. It outputs a set of requirements $r_m$ that must hold for a q-partition $\Pt$ to be optimal w.r.t.\ some quantitative query quality measure measure $m$ given as input. That is, the function can be imagined as a simple lookup in this table given a particular measure $m$ as input.
The table summarizes the results obtained in Sections~\ref{sec:ExistingActiveLearningMeasuresForKBDebugging} and \ref{sec:NewActiveLearningMeasuresForKBDebugging}. The rightmost column of the table gives the numbers of the respective Proposition (P) or Corollary (C), from which the requirements $r_m$ for some measure $m$ can be deduced in a straightforward way. In case the requirements are directly recognizable from the definition of a measure, this is indicated by a ``D'' in the last column of the table.

The table groups measures by their associated active learning frameworks (see Section~\ref{sec:Summary} as to why $\mathsf{ENT}$ and $\mathsf{ENT}_z$ are not allocated to any of the discussed frameworks). 
Drawing our attention to the ``requirements'' column of the table, roman numbers in brackets signalize the priority of requirements, e.g.\ $(\mathrm{I})$ denotes higher priority than $(\mathrm{II})$. In rows where no prioritization of requirements is given by roman numbers, either all requirements have equal weight (which holds e.g.\ for $\mathsf{SPL}$) or the prioritization depends on additional conditions. The latter is denoted by a (*) on the right of the respective row in the table. Such additional conditions might be e.g.\ the probability of query answers. For example, in the case of $\mathsf{ENT}$, for queries with a high difference ($\geq \frac{3}{5}$, cf.\ Proposition~\ref{prop:ENT_preserves_discrimination-pref-order}) between the probabilities of positive and negative answer, more weight is put on the minimization of $\left|p(\dx{}(Q)) - p(\dnx{}(Q))\right|$, whereas for queries with a lower difference ($<\frac{3}{5}$) the prioritization order of the requirements is tilted implying a higher weight on the minimization of $p(\dz{}(Q))$. 

Note further the disjunctive (``either...or'') and indeterministic character (``some'') of the requirements for $\mathsf{EMCb}$ and $\mathsf{KL}$. This comes due to the fact that there is no theoretical optimum for these two equivalence classes (cf.\ Table~\ref{tab:measures_satisfy_DPR_theoretical_opt_exists}). In the last two rows, $r_{\mathsf{ENT}}$ as well as $r_{\mathsf{ENT}_z}$ is a shorthand for the requirements $r_m$ for $m$ equal to $\mathsf{ENT}$ and $\mathsf{ENT}_z$, respectively, given in the IG section of the table. Other formalisms given in the last two rows are explained in detail in Section~\ref{sec:rio}.

If we compare the requirements for measures not satisfying the discrimination-preference relation $DPR$ with those for measures satisfying the $DPR$, we realize that the former (a)~do not consider or (b)~do not put enough weight on the minimization of the set $\dz{}(Q)$. As an example for (a), the requirements for both $\mathsf{H}$ and $\mathsf{VE}$ do not mention $\dz{}(Q)$ at all, i.e.\ disregard minimizing this set. The inevitable consequence of this is that these measures are not consistent with the $DPR$ (cf.\ Table~\ref{tab:measures_satisfy_DPR_theoretical_opt_exists}). Case (b) occurs e.g.\ with $\mathsf{SPL}_z$ for $z<1$. Here, more emphasize is put on the minimization of $\left| |\dx{}(Q)| - |\dnx{}(Q))| \right|$ than on the minimization of $|\dz{}(Q)|$, again resulting in an inconsistency w.r.t.\ the $DPR$ (cf.\ Figure~\ref{fig:superiority_relation_graph}).

On the other hand, all measures preserving the $DPR$, i.e.\ $\mathsf{SPL}_z$ for $z>1$, $\mathsf{MPS}'$, $\mathsf{ENT}_z$ for $z\to\infty$ as well as $\mathsf{EMCa}_z$ for $z\geq 2$, have a single highest priority requirement to the best query, namely to minimize $|\dz{}(Q)|$. Note that the postulation of minimizing $p(\dz{}(Q))$ is equivalent to the postulation of minimizing $|\dz{}(Q)|$ since all diagnoses have a positive probability (cf.\ page~\pageref{etc:prob_of_each_diag_must_be_greater_zero}).

Another aspect that catches one's eye is the fact that $\mathsf{KL}$ and $\mathsf{EMCb}$, albeit not equivalent as illustrated by Example~\ref{ex:KL_not_equiv_EMCb}, lead to the same set of requirements to the best query. However, we have to be aware and not conclude from this fact that both measures will always cause the selection of the same query. Responsible for that is the indeterministic character of the requirements set of these two measures, as dicussed above. It is more appropriate to see these same requirements as a sign that both measures need only search through the same subset of all given queries in order to find their individual -- and not necessarily equal -- optimum.

\renewcommand{\arraystretch}{1.4}
\begin{table}[tb]
\small
	\centering
		\begin{tabular}{lcl}
\toprule
 measure(s) $m$ & EC of $m$ w.r.t.\ $\equiv_{\mQ}$		&   requirements $r_m$  							\\
\midrule
 $\mathsf{ENT}$ & {\larger\textcircled{\smaller[2]1}} & $\left|p(\dx{}(Q)) - p(\dnx{}(Q))\right| \to \min$ \\ 	
 $\mathsf{SPL}$ & {\larger\textcircled{\smaller[2]2}} &	$\left| |\dx{}(Q)| - |\dnx{}(Q))| \right| \to \min$ \\
$\mathsf{RIO}$ & {\larger\textcircled{\smaller[2]3}} &	$(\mathrm{I})~\Pt(Q) \in \NHR_{\uc,\mD},\quad (\mathrm{II})~\left|\qc(Q) - \uc\right| \rightarrow \min,\quad (\mathrm{II	I})~r_{\mathsf{ENT}}$ \\
\multirow{2}{*}{$\mathsf{KL},\mathsf{EMCb}$} & \multirow{2}{*}{{\larger\textcircled{\smaller[2]4}},{\larger\textcircled{\smaller[2]5}}} &	either $p(\dx{}(Q)) \to \max$ for some fixed $|\dx{}(Q)| \in \setof{1,\dots,|\mD|-1}$ \\ 
															 &  																										&	or $p(\dnx{}(Q)) \to \max$ for some fixed $|\dnx{}(Q)| \in \setof{1,\dots,|\mD|-1}$ \\
$\mathsf{MPS}$ & {\larger\textcircled{\smaller[2]6}} &	$(\mathrm{I})~|\mD^*|=1$ where $\mD^*\in\setof{\dx{}(Q),\dnx{}(Q)}$, $(\mathrm{II})~p(\mD^*) \rightarrow \max$ \\
$\mathsf{BME}$ & {\larger\textcircled{\smaller[2]7}} &	$(\mathrm{I})~p(\mD^*)<0.5$ where $\mD^*\in\setof{\dx{}(Q),\dnx{}(Q)}$, $(\mathrm{II})~|\mD^*| \rightarrow \max$  \\																		
\bottomrule
		\end{tabular}
\caption[Qualitative Requirements for Equivalence Classes of Measures]{Requirements derived for the different equivalence classes (EC) of query quality measures w.r.t.\ the relation $\equiv_{\mQ}$ where $\mQ \subseteq \mQ_{\mD,\tuple{\mo,\mb,\Tp,\Tn}_\RQ}$ is a set of queries where $\dz{}(Q) = \emptyset$ for all $Q \in \mQ$. The leftmost column gives one representative of each equivalence class. The circled number in the middle column refers to the respective equivalence class given in row ``$\equiv_{\mQ}$'' in Table~\ref{tab:measure_equiv_classes}. Roman numbers in brackets signalize the priority of requirements, e.g.\ $(\mathrm{I})$ denotes higher priority than $(\mathrm{II})$. 
$r_{\mathsf{ENT}}$ in the third row is a shorthand for the requirements $r_m$ for $m = \mathsf{ENT}$ given in the first row. Other formalisms given in the third row are explained in Section~\ref{sec:rio}.}
	\label{tab:requirements_for_equiv_classes_of_measures_wrt_equiv_mQ}
\end{table}

Next, we analyze in which way requirements $r_m$ might be used to realize a search for an optimal q-partition w.r.t.\ the $\mathsf{RIO}$ and $\mathsf{RIO}_z$ measures. This will also give reasons for the prioritization of the requirements w.r.t.\ these measures used in Table~\ref{tab:requirements_for_all_measures}: 

Basically, we can think of two different methods for computing an optimal query w.r.t.\ $\mathsf{RIO}$ ($\mathsf{RIO}_z$). Before we describe them, note that we define ``optimal q-partition w.r.t.\ the $\mathsf{ENT}$ ($\mathsf{ENT}_z$) measure'' as some q-partition for which the requirements $r_{\mathsf{ENT}}$ ($r_{\mathsf{ENT}_z}$) deviate by not more than some threshold $t_{\mathsf{ent}}$ (cf.\ Section~\ref{sec:FindingOptimalQPartitions} and Algorithm~\ref{algo:dx+_best_successor}) from the theoretical optimum, which is why there may be multiple ``optimal'' q-partitions.

The first method works exactly according to the definition of $\mathsf{RIO}$ ($\mathsf{RIO}_z$), i.e.\ run a (heuristic) search $S_1$ using $r_{\mathsf{ENT}}$ ($r_{\mathsf{ENT}_z}$), check whether the returned q-partition $\Pt_1$ is a non-high-risk q-partition. If so, return $\Pt_1$ and stop. Otherwise execute another search $S_2$ that takes into consideration only least cautious non-high-risk q-partitions, and find among them the best q-partition $\Pt_2$ w.r.t.\ $r_{\mathsf{ENT}}$ ($r_{\mathsf{ENT}_z}$). Return $\Pt_2$. While the first search $S_1$ uses the requirements $r_m$ specified for $\mathsf{ENT}$ ($\mathsf{ENT}_z$) and has no specifics of $\mathsf{RIO}$ ($\mathsf{RIO}_z$) to take into account, the second one, $S_2$, 
strictly postulates the fulfillment of requirements $(\mathrm{I})~\Pt(Q) \in \NHR_{\uc,\mD}$ and $(\mathrm{II})~|\qc(Q) - \uc| \rightarrow \min$ (for explanations of these terms, see Section~\ref{sec:rio}) while trying to optimize $(\mathrm{III})~r_{\mathsf{ENT}} (r_{\mathsf{ENT}_z})$.
This approach involves carrying out a (heuristic) search $S_1$ and, if needed, another (heuristic) search $S_2$. Thence, this first method might be more time-consuming than the second one.

The second method, on the other hand, implements $\mathsf{RIO}$ in a way that only one search suffices. To this end, the first observation is that $\mathsf{RIO}$ will never accept a high-risk q-partition, cf.\ both lines of Eq.~\eqref{eq:Q_RIO} (as to $\mathsf{RIO}$) as well as  Eq.~\eqref{eq:Q_RIOz:case1} and \eqref{eq:Q_RIOz:case2} (regarding $\mathsf{RIO}_z$) where a query with $\Pt(Q) \in \NHR_{\uc,\mD}$ is postulated. That is, the primary (and necessary) requirement is given by $(\mathrm{I})~\Pt(Q) \in \NHR_{\uc,\mD}$. 
So, the search $S_1$ can be understood as a search for a q-partition with optimal $\mathsf{ENT}$ measure among all non-high-risk q-partitions (because any output high-risk q-partition will be discarded).
%
%
As explained above, $S_2$ searches for a q-partition with optimal $\mathsf{ENT}$ measure among all \emph{least-cautious} non-high-risk q-partitions.
Therefore, the least-cautious non-high-risk q-partitions can be simply considered first in the search, before other non-high-risk q-partitions are taken into account. In this vein, $(\mathrm{II})~|\qc(Q) - \uc| \rightarrow \min$ can be seen as the secondary criterion postulating to focus on least-cautious q-partitions first. This criterion will only be ``activated'' during $S_2$ and will be ``switched off'' during $S_1$. 

Virtually, this can be conceived as executing the search $S_1$ in a way its beginning simulates the search $S_2$. If this search, which is executed exactly as $S_2$, then locates a solution q-partition complying with the given threshold $t$, then this q-partition is at the same time a valid solution of $S_1$ and $S_2$ (since ``least cautious non-high-risk'' implies ``non-high-risk'') and the overall search can stop.
%
Otherwise, the best found (non-optimal) q-partition $\Pt_{lc}$ among the least-cautious non-high-risk q-partitions is stored and the search moves on towards more cautious q-partitions, i.e.\ search $S_1$ continues to iterate.
To this end, however, q-partitions representing nodes in the search tree at which the tree was pruned during $S_2$ must be stored in memory in order to enable a resumption of the search $S_1$ at these nodes at some later point.
If $S_1$ detects a solution q-partition complying with the given threshold $t$, this q-partition is returned. Otherwise, $\Pt_{lc}$ is output.

To summarize the second method, the search  
\begin{enumerate}
	\item starts seeking a q-partition meeting $(\mathrm{III})$ 
	among all q-partitions meeting $(\mathrm{I})$ and $(\mathrm{II})$, and, if there is a (sufficiently optimal, as per threshold $t$) result, returns this q-partition and stops. Otherwise, it
	\item stores the best found q-partition $\Pt_{lc}$ w.r.t.\ the $\mathsf{ENT}$ measure among all q-partitions meeting $(\mathrm{I})$ and $(\mathrm{II})$, and
	\item continues the search for a q-partition satisfying $(\mathrm{III})$ 
	among all q-partitions satisfying only $(\mathrm{I})$. If some (sufficiently optimal, as per threshold $t$) q-partition is found this way, it is returned and the search stops. Otherwise, the returned search result is $\Pt_{lc}$.
\end{enumerate}

\begin{remark}\label{rem:closed_RIO_function}
Taking as a basis the second method of computing a $\mathsf{RIO}$ query, we can express the $\mathsf{RIO}$ measure as a closed function. We call the measure characterized by this closed function $\mathsf{RIO}'$ to stress the non-equivalence to $\mathsf{RIO}$. Equivalence, however, holds under the assumption that all q-partitions that deviate by not more than some (arbitrarily small) $t_{\mathsf{ent}}>0$ from the theoretical optimum of $\mathsf{ENT}$ are considered $\mathsf{ENT}$-optimal. The closed-form $\mathsf{RIO}'$ is defined as 
\begin{align}
\mathsf{RIO}'(Q) &:= \frac{\mathsf{ENT}(Q)}{2}+\mD^*_{dev}    \label{eq:RIO'}
\end{align}
where
\begin{align*} 
\mD^*_{dev} &:= \begin{cases} 
\min\{|\dx{}(Q)|,|\dnx{}(Q)|\} - n 	& \mbox{if } \min\{|\dx{}(Q)|,|\dnx{}(Q)|\} \geq n \\ 
|\mD| 															& \mbox{otherwise.}
\end{cases}
\end{align*}
and $n := \lceil \uc |\mD| \rceil$ is the number of leading diagnoses that must be elimininated (in the worst case) by $Q$ as prescribed by the learned parameter $\uc$ of $\mathsf{RIO}$. To be precise, note that if $n > \frac{|\mD|}{2}$, then $n$ is set to $\frac{|\mD|}{2}-1$ (because no more than $\frac{|\mD|}{2}$ leading diagnoses can be eliminated in the worst case).

$\mathsf{RIO}'$ as given by Equation~\ref{eq:RIO'} in fact forces primarily the optimization of requirements $(\mathrm{I})~\Pt(Q) \in \NHR_{\uc,\mD}$ and $(\mathrm{II})~|\qc(Q) - \uc| \rightarrow \min$ before $(\mathrm{III})~r_{\mathsf{ENT}}$. 
Because, first, $\min\{|\dx{}(Q)|,|\dnx{}(Q)|\} \leq |\mD|-1$ and $n \geq 1$ we have that the value of $\mD^*_{dev}$ is always better given $\min\{|\dx{}(Q)|,|\dnx{}(Q)|\} \geq n$ than otherwise. That is, non-high-risk queries are always favorized as per requirement $(\mathrm{I})$. Second, $\mD^*_{dev}$ is lower the smaller $|\qc(Q) - \uc|$ is, i.e.\ $\mD^*_{dev}$ is better the more requirement $(\mathrm{II})$ is optimized. Third, $\mathsf{ENT}(Q) < 2$ for all queries $Q$ and hence $\frac{\mathsf{ENT}(Q)}{2} < 1$. Therefore, minimizing $\mD^*_{dev}$ by one improves $\mathsf{RIO}'$ by more than an improvement of a query's entropy measure $\mathsf{ENT}$ could ever do. This forces the optimization of requirement $(\mathrm{II})$ before requirement $(\mathrm{III})$.

To see why $\mathsf{ENT}(Q)<2$ must hold, assume an arbitrary query $Q$ and let $p_0, p_1, q \in (0,1)$ such that $p_0+p_1=1$, $q := 1- p(\dz{}(Q))$, $q p_1 := p(\dx{}(Q))$ and $q p_0 := p(\dnx{}(Q))$. Note, as required, $p(\dx{}(Q)) + p(\dnx{}(Q)) + p(\dz{}(Q)) = 1$. Further, $p(\dz{}(Q)) = 1-q$ and $p_1 = 1-p_0$. Then, using the representation of $\mathsf{ENT}$ as per Eq.~\ref{eq:scoring_funtion_dekleer}: 
\begin{align*}
\mathsf{ENT}(Q) &= \\
q p_1 \log_2 (q p_1) + q (1-p_1) \log_2 (q (1-p_1)) + (1-q) + 1&= \\
q \left[ p_1 (\log_2 q + \log_2 p_1) + (1-p_1) (\log_2 q + \log_2 (1-p_1))\right] + (1-q) + 1 &= \\
q \left[ (\log_2 q) + \left(p_1 \log_2 p_1 + (1-p_1) \log_2 (1-p_1)\right)\right] + (1-q) + 1 &= \\
q \left[ (c_1) + (c_2)\right] + (1-q) + 1 
\end{align*}
where $c_1 := \log_2 q < 0$ due to $q \in (0,1)$ and $c_2$ is the negative Shannon entropy of the (random) variable $p_1$ which attains its maximum at zero given some $p_1 \in \setof{0,1}$. As $p_1 \in (0,1)$, we have that $c_2 < 0$. Therefore, the squared brackets expression is smaller than zero -- also, after being multiplied by $q$. So, we have that $\mathsf{ENT}(Q)$ is equal to some expression smaller than zero plus $(1-q) + 1 = 2-q$ which is why $\mathsf{ENT}(Q) < 2$ must hold (due to $q > 0$).

Analogously, to overcome the $\mathsf{ENT}$-measure's non-satisfaction of the $DPR$ we can define a parameterized $\mathsf{RIO}'_z$ measure as 
\begin{align*}
\mathsf{RIO}'_z(Q) &:= \frac{\mathsf{ENT}_z(Q)}{2}+\mD^*_{dev}    
\end{align*}
where $\mD^*_{dev}$ is defined exactly as above.
\qed
\end{remark}

\begin{remark}\label{rem:search_method_for_RIO} 
As we shall see in Section~\ref{sec:FindingOptimalQPartitions}, both discussed approaches to tackle the finding of an optimal q-partition w.r.t.\ $\mathsf{RIO}$ will perfectly go along with the search method we will define, since the proposed search will naturally start with least-cautious q-partitions. 

However, note that the Algorithms~\ref{algo:dx+_update_best}, \ref{algo:dx+_opt}, \ref{algo:dx+_prune} and \ref{algo:dx+_best_successor} we will give in Section~\ref{sec:FindingOptimalQPartitions}, which characterize the behavior of our proposed heuristic q-partition search for the particular used query quality measure, will specify for $\mathsf{RIO}$ only the search $S_2$ (see above). That is, only an optimal q-partition meeting $(\mathrm{III})$ 
	among all q-partitions meeting $(\mathrm{I})$ and $(\mathrm{II})$ will be sought. The underlying assumption is that the first search method for $\mathsf{RIO}$ described above is adopted. This involves a prior run of the heuristic search algorithm using $\mathsf{ENT}$ as query quality measure and a test whether the output q-partition is a non-high-risk q-partition. Only if not so, then another run of the heuristic search using $\mathsf{RIO}$ as query quality measure is necessary.
	
	We want to emphasize that the implementation of the second search method for $\mathsf{RIO}$ described above would involve only slight changes to the given algorithms. Roughly, the following three modifications are necessary:
	\begin{enumerate}
		\item Specification of a second set of pruning conditions, optimality conditions, heuristics and update conditions regarding the currently best q-partition, which fit search $S_1$.
		\item Storage of all q-partitions at which search $S_2$ pruned the search tree.
		\item Storage of two currently best q-partitions, the first (i.e.\ $\Pt_{\mathsf{best}}$ in Algorithm~\ref{algo:dx+_part}) to memorize the best q-partition found using $S_2$ (i.e.\ the heursitic search using $\mathsf{RIO}$ we describe in Section~\ref{sec:FindingOptimalQPartitions}), and another one (call it $\Pt_{\mathsf{best},S_1}$) to memorize the best q-partition found afterwards using $S_1$ (i.e.\ the heursitic search using $\mathsf{RIO}$ and the second set of conditions and heuristics mentioned in (1.)). During $S_2$, \emph{only} $\Pt_{\mathsf{best}}$ is subject to change, whereas during $S_1$ \emph{only} $\Pt_{\mathsf{best},S_1}$ might be modified. 
	\end{enumerate}
	If $\Pt_{\mathsf{best}}$ is sufficiently good (w.r.t.\ the threshold $t$) after $S_2$, $\Pt_{\mathsf{best}}$ is returned and no resumption of the search ($S_1$) is necessary. Otherwise, if $\Pt_{\mathsf{best},S_1}$ is sufficiently good (w.r.t.\ the threshold $t$) after $S_2$ and $S_1$, then $\Pt_{\mathsf{best},S_1}$ is returned. Otherwise, $\Pt_{\mathsf{best}}$ is returned.
\qed
\end{remark}

Another material property of the search we will introduce in the next section will be that it automatically neglects q-partitions with a non-empty set $\dz{}$. Hence, any two measures in one and the same equivalence class in the section ``$\equiv_{\mQ}$'' of Table~\ref{tab:requirements_for_equiv_classes_of_measures_wrt_equiv_mQ} will yield exactly the same behavior of the q-partition finding algorithm. Moreover, the derived requirements (Table~\ref{tab:requirements_for_all_measures}) for some of the discussed measures can be simplified in this scenario. On this account, we provide with Table~\ref{tab:requirements_for_equiv_classes_of_measures_wrt_equiv_mQ} a summary of the resulting requirements $r_m$, for each equivalence class w.r.t.\ $\equiv_{\mQ}$. Exactly these requirements in Table~\ref{tab:requirements_for_equiv_classes_of_measures_wrt_equiv_mQ} will be the basis for the specification of \emph{pruning conditions} (Algorithm~\ref{algo:dx+_prune}), \emph{optimality conditions} (Algorithm~\ref{algo:dx+_opt}) and \emph{heuristics} to identify most promising successor q-partitions (Algorithm~\ref{algo:dx+_best_successor}) and of how to \emph{update the currently best q-partition} (Algorithm~\ref{algo:dx+_update_best}).

\renewcommand{\arraystretch}{1.4}
\begin{table}[H]
\small
	\centering
		\begin{tabular}{lclrl}
\toprule
 FW 															& measure $m$  			& requirements $r_m$ to optimal query $Q$ w.r.t.\ $m$  &	 			&	 						\\
\midrule
\multirow{3}{*}{US} 										& 	$\mathsf{LC}$ 	&	 $\left|p(\dx{}(Q)) - p(\dnx{}(Q))\right| \to \min$ &	 &	P\ref{prop:uncertainty_sampling} 		\\
																				& 	$\mathsf{M}$ 		&	 $\left|p(\dx{}(Q)) - p(\dnx{}(Q))\right| \to \min$ &	 &	P\ref{prop:uncertainty_sampling} 	\\
																				& 	$\mathsf{H}$ 		&	 $\left|p(\dx{}(Q)) - p(\dnx{}(Q))\right| \to \min$ &	 &	P\ref{prop:uncertainty_sampling} 	\\
\hline
\multirow{3}{*}{IG}											& 	$\mathsf{ENT}$ 	&  $\left|p(\dx{}(Q)) - p(\dnx{}(Q))\right| \to \min$, $p(\dz{}(Q))\to \min$ &	  (*) &	P\ref{prop:ent} 	\\
																				& 	\multirow{2}{*}{$\mathsf{ENT}_z$} & $z<\infty$: $\left|p(\dx{}(Q)) - p(\dnx{}(Q))\right| \to \min$, $p(\dz{}(Q))\to \min$ &	  (*) &	\multirow{2}{*}{P\ref{prop:ENT_preserves_discrimination-pref-order}} 	\\
																				&																			& $z\to\infty$: $\,(\mathrm{I})~p(\dz{}(Q))\to \min$, $(\mathrm{II})~\left|p(\dx{}(Q)) - p(\dnx{}(Q))\right| \to \min$ &	&	 	 \\
\hline
\multirow{6}{*}{QBC}		 								& 	$\mathsf{SPL}$ 	&	$\left| |\dx{}(Q)| - |\dnx{}(Q))| \right| \to \min$, $|\dz{}(Q)|\to \min$ &	 &  P\ref{prop:spl}\\
																				& 	\multirow{2}{*}{$\mathsf{SPL}_z$} & $z<1$: $\,(\mathrm{I})~\left| |\dx{}(Q)| - |\dnx{}(Q))| \right| \to \min$, $(\mathrm{II})~|\dz{}(Q)|\to \min$ &	 &	\multirow{2}{*}{P\ref{prop:SPL2_satisfies_DPR}} 	 \\
																				&																			& $z>1$: $\,(\mathrm{I})~|\dz{}(Q)|\to \min$, $(\mathrm{II})~\left| |\dx{}(Q)| - |\dnx{}(Q))| \right| \to \min$ &	&	 	 \\
																				& 	$\mathsf{VE}$ 	&		$\left| |\dx{}(Q)| - |\dnx{}(Q))| \right| \to \min$ &	 &	P\ref{prop:theoretically_opt_query_wrt_VE} 	\\
																				& 	\multirow{2}{*}{$\mathsf{KL}$} 	&	either $p(\dx{}(Q)) \to \max$ for some fixed $|\dx{}(Q)| \in \setof{1,\dots,|\mD|-1}$ &	&	\multirow{2}{*}{P\ref{prop:kl_opt}} 	\\ 
																				&																		& or $p(\dnx{}(Q)) \to \max$ for some fixed $|\dnx{}(Q)| \in \setof{1,\dots,|\mD|-1}$ &	 &	 	\\
\hline
\multirow{10}{*}{EMC}										& 	$\mathsf{EMCa}$	&	 $\left|p(\dx{}(Q)) - p(\dnx{}(Q))\right| \to \min$, $p(\dz{}(Q))\to \min$ &	  (*) &	 P\ref{prop:emca}	\\
																				& 	\multirow{2}{*}{$\mathsf{EMCa}_z$} & $z<2$: $\left|p(\dx{}(Q)) - p(\dnx{}(Q))\right| \to \min$, $p(\dz{}(Q))\to \min$ &	 (*) &	\multirow{2}{*}{C\ref{cor:EMCa_z_for_z>=2_is_better_than_ENTr_for_all_r}} 	\\
																				&																			& $z \geq 2$: $\,(\mathrm{I})~p(\dz{}(Q))\to \min$, $(\mathrm{II})~\left|p(\dx{}(Q)) - p(\dnx{}(Q))\right| \to \min$ &	&	 	  \\
																				& 	\multirow{2}{*}{$\mathsf{EMCb}$} 	&	either $p(\dx{}(Q)) \to \max$ for some fixed $|\dx{}(Q)| \in \setof{1,\dots,|\mD|-1}$ &	 &	\multirow{2}{*}{P\ref{prop:EMCb_opt}} 	\\ 
																				&																		& or $p(\dnx{}(Q)) \to \max$ for some fixed $|\dnx{}(Q)| \in \setof{1,\dots,|\mD|-1}$ &	 &	 	\\
																				& 	\multirow{2}{*}{$\mathsf{MPS}$} 	&	$(\mathrm{I})~|\mD^*|=1, |\dz{}(Q)|=0$ where $\mD^*\in\setof{\dx{}(Q),\dnx{}(Q)}$ &  &	\multirow{2}{*}{D} 	\\
																				& 																		& $(\mathrm{II})~p(\mD^*) \rightarrow \max$ &	 & \\ 
																				& 	\multirow{2}{*}{$\mathsf{MPS}'$} 	&	$(\mathrm{I})~|\dz{}(Q)|=0$, $(\mathrm{II})~|\mD^*|=1$ where $\mD^*\in\setof{\dx{}(Q),\dnx{}(Q)}$ &  &  \multirow{2}{*}{P\ref{prop:MPS'_satisfies_DPR}}  \\ 
																				&																			& $(\mathrm{III})~p(\mD^*) \rightarrow \max$ &	&	 	 \\ 
																				& 	\multirow{2}{*}{$\mathsf{BME}$} 	&	$(\mathrm{I})~p(\mD^*)<p(\mD^{**})$ where $\mD^*,\mD^{**}\in\setof{\dx{}(Q),\dnx{}(Q)}$ & & \multirow{2}{*}{D} \\ 
																				&										& $(\mathrm{II})~|\mD^*| \rightarrow \max$ 																								&	& \\
\hline
\multirow{2}{*}{RL} 										& 	$\mathsf{RIO}$ 	&	 $(\mathrm{I})~\Pt(Q) \in \NHR_{\uc,\mD},\quad (\mathrm{II})~\left|\qc(Q) - \uc\right| \rightarrow \min,\quad (\mathrm{II	I})~r_{\mathsf{ENT}}$ &	 &	\multirow{2}{*}{P\ref{prop:theoretical_opt_wrt_RIO_RIOz}} 	 \\
																				& 	$\mathsf{RIO}_z$&	 $(\mathrm{I})~\Pt(Q) \in \NHR_{\uc,\mD},\quad (\mathrm{II})~\left|\qc(Q) - \uc\right| \rightarrow \min,\quad (\mathrm{II	I})~r_{\mathsf{ENT}_z}$ &	 &	 	 \\
\bottomrule
		\end{tabular}
\caption[Qualitative Requirements for All Measures]{Requirements derived from the analyses conducted in Sections~\ref{sec:ExistingActiveLearningMeasuresForKBDebugging} and \ref{sec:NewActiveLearningMeasuresForKBDebugging} for the different query quality measures, grouped by active learning frameworks FW (see description of Table~\ref{tab:measures_satisfy_DPR_theoretical_opt_exists}). Roman numbers in brackets signalize the priority of requirements, e.g.\ $(\mathrm{I})$ denotes higher priority than $(\mathrm{II})$. In rows where no prioritization of requirements is given, 
either all requirements have equal weight or the prioritization depends on additional conditions, denoted by (*). 
$r_{\mathsf{ENT}}$ as well as $r_{\mathsf{ENT}_z}$ in the last two rows is a shorthand for the requirements $r_m$ for $m$ equal to $\mathsf{ENT}$ and $\mathsf{ENT}_z$, respectively, given in the IG section of the table. Other formalisms given in the last two rows are explained in Section~\ref{sec:rio}. The abbreviations P$i$, C$j$ and D at the far right mean that the requirements in the respective row can be derived from Proposition~$i$, Corollary~$j$ or directly from the definition of the measure itself, respectively.} 
	\label{tab:requirements_for_all_measures}
\end{table}

\subsection{Finding Optimal Q-Partitions}
\label{sec:FindingOptimalQPartitions}
In this section we want to discuss the implementation of the function \textsc{findQPartition} of Algorithm~\ref{algo:query_comp} which searches for an optimal q-partition 
w.r.t.\ some query quality measure $m$ given the optimality requirements $r_m$, a probability measure $p$, a set of leading diagnoses $\mD$ and a threshold $t_m$ as input. Note, all we can expect from \textsc{findQPartition} is to compute (an approximation of) the best \emph{existing} q-partition w.r.t.\ the leading diagnoses $\mD$. In fact, depending on the used $m$, $p$ and the number of diagnoses in $\mD$, this best existing q-partition might exhibit a significantly worse value regarding $m$ than the theoretically optimal q-partition. Nevertheless, we will determine the behavior or goal of the search by specifying $t_m$ in relation to the \emph{theoretical} optimum of $m$, i.e.\ the requirements $r_m$, because before the search is executed we usually will not know anything about the best existing q-partition w.r.t.\ $m$. Hence, given a too small threshold $t_m$ we might require the algorithm to find a q-partition that in fact does not exist. In this case the search will simply explore the entire search space and finally return the best found, and therefore the best existing, q-partition.

To realize \textsc{findQPartition}, we will derive systematic search methods. According to~\cite{russellnorvig2010}, a search problem is defined by an \emph{initial state}, a \emph{successor function} (that returns all direct successor states of a state), \emph{path costs} and a \emph{goal test}. In our case, a state corresponds to a q-partition $\Pt$ and a successor state to a q-partition $\Pt'$ resulting from $\Pt$ by minimal changes. Depending on the successor function $s$ used, the initial state is defined as one distinguished partition (not necessarily q-partition) of $\mD$ that enables the exploration of the entire search space for q-partitions by means of $s$. Concerning the finding of a goal q-partition, the path costs are immaterial as we are interested only in the shape of the resulting goal q-partition.
As hinted above, we will consider a q-partition as a goal state of the search if it meets the requirements $r_m$ to a sufficient degree. More specifically, we will 
regard a state as optimal if its ``distance'' to perfect fulfillment of $r_m$ is smaller than the optimality threshold $t_m$ (cf.~\cite{Shchekotykhin2012}). In the case of the $\mathsf{LC}$ measure (cf.\ page \pageref{etc:measure_desc_LC}), for example, a q-partition $\Pt = \langle \dx{}, \dnx{},\dz{}\rangle$ is viewed as optimal if $|p(\dx{}) - p(\dnx{})| \leq t_m$ (cf.\ Table~\ref{tab:requirements_for_all_measures}).

An informed (or heuristic) search problem additionally assumes some heuristic function being given which estimates the distance of a state to the next goal state. Such a heuristic function can be used to determine which successor state of a given state should be best visited next, thereby guiding the search faster towards a goal state. 
In order to define appropriate heuristic functions for $m$, we will exploit the results about the qualitative requirements $r_m$ for $m$ obtained in Section~\ref{sec:QPartitionRequirementsSelection}.


\subsubsection{Canonical Queries and Q-Partitions}\label{sec:can_query} 
In the context of Algorithm~\ref{algo:query_comp}, the \textsc{findQPartition} function aims at identifying a q-partition with postulated properties, such that a query associated with this q-partition can be extracted \emph{in the next step} by function \textsc{selectQueryForQPartition}. Consequently, realizing the fact that there can be (exponentially) many queries for one and the same q-partition (see Corollary~\ref{cor:upper_lower_bound_for_canonical_q-partitions} later, on page~\pageref{cor:upper_lower_bound_for_canonical_q-partitions}), 
and the fact that the construction respectively verification of a q-partition requires a query (cf. Definition~\ref{def:q-partition}), it becomes evident that the specification of some well-defined ``canonical'' query is desirable for this purpose.

\paragraph{Search-Space Minimization.} 
Furthermore, in order to devise a time and space saving search method, the potential size of the search tree needs to be minimized. To this end, in case of q-partitions, a key idea is to omit those q-partitions in the search that are proven suboptimal.
One such class of suboptimal q-partitions comprises those that have weak \emph{discriminative power} in the sense that they do not discriminate between all leading diagnoses. In other words, there is a non-empty set of (leading) diagnoses that queries associated with these q-partitions do not allow to eliminate \emph{in any case}. That is, these suboptimal q-partitions are exactly the q-partitions with the property $\dz{} \neq \emptyset$ as no answer to a query $Q$ 
can invalidate any diagnosis in $\dz{}(Q)$ (cf.\ Proposition~\ref{prop:properties_of_q-partitions},(\ref{prop:properties_of_q-partitions:enum:dx_dnx_dz_contain_exactly_those_diags_that_are...}.)). However, a query should be as constraining as possible such that a maximum number of (leading) diagnoses can be eliminated by it. So, it is desirable that the number of diagnoses that are elements of $\dz{}(Q)$, i.e. consistent with both query outcomes, is minimal, zero at the best.
As we shall see later, the search method we present will automatically neglect all q-partitions in this suboptimal class.

Recalling that all discrimination-dispreferred queries, i.e.\ those for which there is another query that is discrimination-preferred over them (cf.\ Definition~\ref{def:measures_equivalent_theoretically-optimal_superior}), must also have a non-empty $\dz{}$-set by Proposition~\ref{prop:if_Q_discrimination-preferred_over_Q'_then_dz(Q')_supset_dz(Q)}, the fact that the proposed search method ignores the abovementioned suboptimal class of q-partitions means that it \emph{automatically turns any query quality measure into one that satisfies the discrimination-preference relation} $DPR$. 
This is a simple consequence of Definition~\ref{def:measures_equivalent_theoretically-optimal_superior} and Proposition~\ref{prop:if_Q_discrimination-preferred_over_Q'_then_dz(Q')_supset_dz(Q)}.

\paragraph{Explicit-Entailments Queries.} Let an entailment $\alpha$ of a set of axioms $X$ be called \emph{explicit} iff $\alpha \in X$, \emph{implicit} otherwise.\label{etc:def_explicit_implicit_entailment} Then, a natural way of specifying a canonical query and at the same time achieving a neglect of suboptimal q-partitions of the mentioned type is to define a query as an adequate subset of all common explicit entailments implied by all KBs $(\mo \setminus \md_i)$ for which $\md_i$ is an element of some fixed set $\dx{}(Q) \subset \mD$. We call a query $Q \subseteq \mo$ an \emph{explicit-entailments query}. The following proposition witnesses that no explicit-entailments query can have a suboptimal q-partition in terms of $\dz{}(Q) \neq \emptyset$.
\begin{proposition}\label{prop:d0}
Let $\langle\mo,\mb,\Tp,\Tn\rangle_\RQ$ be a DPI, $\mD \subseteq \minD_{\langle\mo,\mb,\Tp,\Tn\rangle_\RQ}$ and $Q$ a query in $\mQ_\mD$. If $Q \subseteq \mo$, then $\dz{}(Q) = \emptyset$ holds.
\end{proposition}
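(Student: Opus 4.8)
The proposition claims that if a query $Q$ is a subset of the KB $\mo$ itself (an explicit-entailments query), then its $\dz{}$-set is empty. Recall from Definition~\ref{def:q-partition} that $\dz{}(Q)$ consists of those diagnoses $\md_i \in \mD$ for which $\mo^*_i \not\models Q$ (so $\md_i \notin \dx{}(Q)$) and $\mo^*_i \cup Q$ violates no $x \in \RQ \cup \Tn$ (so $\md_i \notin \dnx{}(Q)$). My goal is therefore to show that no diagnosis $\md_i \in \mD$ can simultaneously fail to entail $Q$ and be consistent with the addition of $Q$, under the hypothesis $Q \subseteq \mo$.

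**The key observation.** The crucial fact is that $\mo^*_i = (\mo \setminus \md_i) \cup \mb \cup U_\Tp$ partitions the formulas of $Q$ according to whether they lie in $\md_i$. Since $Q \subseteq \mo$, every formula $\alpha \in Q$ is either in $\md_i$ or in $\mo \setminus \md_i$. If $\alpha \in \mo \setminus \md_i$, then $\alpha \in \mo^*_i$, so by extensivity (assumption~\ref{logic:cond3}) and monotonicity (assumption~\ref{logic:cond1}) we have $\mo^*_i \models \alpha$. Hence the only formulas of $Q$ that $\mo^*_i$ might fail to entail are those lying in $\md_i \cap Q$.

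**Plan of the proof.** I would argue by cases on a fixed but arbitrary diagnosis $\md_i \in \mD$, aiming to show $\md_i \notin \dz{}(Q)$. First suppose $Q \cap \md_i = \emptyset$. Then $Q \subseteq \mo \setminus \md_i \subseteq \mo^*_i$, so by the extensivity and monotonicity of $\mathcal{L}$, $\mo^*_i \models \alpha$ for every $\alpha \in Q$, i.e.\ $\mo^*_i \models Q$. By Definition~\ref{def:q-partition} this means $\md_i \in \dx{}(Q)$, hence $\md_i \notin \dz{}(Q)$. Second, suppose $Q \cap \md_i \neq \emptyset$. Here I want to show $\md_i \in \dnx{}(Q)$, i.e.\ that $\mo^*_i \cup Q$ violates some requirement or negative test case. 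The intuition is that adding $Q$ back to $\mo^*_i = (\mo \setminus \md_i) \cup \mb \cup U_\Tp$ reinstates the formulas of $Q \cap \md_i$ that were deleted, and since $\md_i$ is a \emph{minimal} diagnosis, every formula in $\md_i$ is genuinely needed for validity --- restoring any of them must reintroduce a fault.

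**The main obstacle and how I would handle it.** The delicate point is the second case: I must convert ``$\md_i$ is minimal'' into ``$\mo^*_i \cup Q$ is invalid.'' I would leverage Proposition~\ref{prop:properties_of_q-partitions},(\ref{prop:properties_of_q-partitions:enum:dx_dnx_dz_contain_exactly_those_diags_that_are...}.), which characterizes $\dnx{}(Q)$ as exactly those $\md_i$ for which $\mo \setminus \md_i$ is invalid w.r.t.\ $\tuple{\cdot,\mb,\Tp\cup\setof{Q},\Tn}_\RQ$. Since $\md_i$ is a minimal diagnosis and $\emptyset \neq Q \cap \md_i \subseteq \md_i$, pick any $\alpha \in Q \cap \md_i$; then $\md_i \setminus \setof{\alpha}$ is a proper subset of $\md_i$ and hence not a diagnosis, so by Proposition~\ref{prop:notions_equiv} the set $\mo \setminus (\md_i \setminus \setof{\alpha}) = (\mo \setminus \md_i) \cup \setof{\alpha}$ is invalid w.r.t.\ $\tuple{\cdot,\mb,\Tp,\Tn}_\RQ$. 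Now $(\mo \setminus \md_i) \cup \setof{\alpha} \cup \mb \cup U_\Tp \subseteq \mo^*_i \cup Q$ (as $\alpha \in Q$), so by monotonicity the larger set $\mo^*_i \cup Q$ inherits the requirement violation or entailed negative test case, giving $\md_i \in \dnx{}(Q)$. In both cases $\md_i \notin \dz{}(Q)$, and since $\md_i$ was arbitrary, $\dz{}(Q) = \emptyset$, completing the proof. The only subtlety to check carefully is that the notion of ``invalid'' transfers correctly under supersets --- but this is precisely the content of monotonicity (\ref{logic:cond1}) applied to both the entailment of negative test cases and the preservation of requirement violations.
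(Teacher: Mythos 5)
Your proof is correct and takes essentially the same approach as the paper's: the same case split on whether $Q \cap \md_i = \emptyset$, the same extensivity-plus-monotonicity argument placing $\md_i$ in $\dx{}(Q)$ in the first case, and the same appeal to subset-minimality of $\md_i$ to place it in $\dnx{}(Q)$ in the second. Your detour through Proposition~\ref{prop:notions_equiv} merely makes explicit the step the paper compresses into ``by the subset-minimality property of each diagnosis,'' which is a fine (indeed slightly more rigorous) way to justify it.
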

\begin{proof}
We have to show that for an arbitrary diagnosis $\md_i \in \mD$ either $\md_i\in\dx{}(Q)$ or $\md_i\in\dnx{}(Q)$. Therefore two cases must be considered: (a)~$\mo\setminus\md_i \supseteq Q$ and (b)~$\mo\setminus\md_i \not\supseteq Q$. In case (a), by the fact that the entailment relation is extensive for $\mathcal{L}$, $\mo\setminus\md_i \models Q$ and thus, by monotonicity of $\mathcal{L}$, $\mo^{*}_i = (\mo\setminus\md_i) \cup \mb \cup U_P \models Q$. So $\md_i \in \dx{}(Q)$. In case (b) there exists some axiom $\tax \in Q \subseteq \mo$ such that $\tax\notin \mo\setminus \md_i$, which means that $(\mo \setminus \md_i) \cup Q \supset (\mo\setminus\md_i)$. From this we can derive that $\mo^{*}_i \cup Q$ must violate some $r\in\RQ$ or $\tn\in\Tn$ by the subset-minimality property of each diagnosis in $\mD$, in particular of $\md_i$. Hence, $\md_i \in \dnx{}(Q)$.
\end{proof}
The proof of Proposition~\ref{prop:d0} exhibits a decisive advantage of using explicit-entailments queries for the construction of q-partitions during the search. This is the opportunity to use set comparison instead of reasoning, which means that it must solely be determined whether $\mo\setminus\md_i \supseteq Q$ or not for each $\md_i\in\mD$ in order to build the q-partition $\Pt(Q)$ associated with some query $Q$. More specifically, given an explicit-entailments query $Q$, it holds that $\md \in \dx{}(Q)$ if $\mo\setminus\md \supseteq Q$ and $\md \in \dnx{}(Q)$ if $\mo\setminus\md \not\supseteq Q$. Using this result, contrary to existing works, queries and q-partitions can be computed without any reasoner calls. In fact, our approach permits to employ a reasoner optionally (in the \textsc{enrichQuery} function of Algorithm~\ref{algo:query_comp}) to enrich \emph{only one} already selected explicit-entailments query by additional (non-explicit) entailments (see Section~\ref{sec:EnrichmentOfAQuery}).  

\begin{example}\label{ex:explicit_ents_query}
Recall the propositional example DPI $\tuple{\mo,\mb,\Tp,\Tn}_\RQ$ given by Table~\ref{tab:example_dpi_0}. Let the leading diagnoses 
\begin{align} \label{eq:ex_explicit_ents_query:leading_diags}
\mD &= \setof{\md_1,\md_2,\md_3} = \{\{\tax_2,\tax_3\},\{\tax_2,\tax_5\},\{\tax_2,\tax_6\}\} \quad \mbox{(cf.\ Table~\ref{tab:min_diagnoses_example_dpi_0})}
\end{align}
Then a query w.r.t.\ $\mD$ and $\tuple{\mo,\mb,\Tp,\Tn}_\RQ$ is given, for instance, by $Q_1 = \setof{\tax_1,\tax_4,\tax_6,\tax_7} = (\mo\setminus\md_1) \cap (\mo\setminus \md_2)$, i.e.\ $Q_1$ is the set of common explicit entailments of $\mo\setminus\md_i$ for $i \in \setof{1,2}$. As $Q_1 \subset \mo$ holds, $Q_1$ is an explicit-entailments query. To verify that it indeed satisfies the sufficient and necessary criteria $\dx{}(Q_1) \neq \emptyset$ and $\dnx{}(Q_1) \neq \emptyset$ for a query as per Proposition~\ref{prop:properties_of_q-partitions},(\ref{prop:properties_of_q-partitions:enum:for_each_q-partition_dx_is_empty_and_dnx_is_empty}.), it suffices to test whether $\mo \setminus \md_3 \not\supseteq Q_1$. Plugging in the sets we indeed obtain $\setof{\tax_1,\tax_3,\tax_4,\tax_5,\tax_7} \not\supseteq \setof{\tax_1,\tax_4,\tax_6,\tax_7}$ due to $\tax_6$. In this vein, we could determine the q-partition $\Pt(Q_1) = \tuple{\setof{\md_1,\md_2},\setof{\md_3},\emptyset}$ of $Q_1$ by relying just on set comparisons and operations and without the need to use any reasoning service. Note that $Q'_1 := \setof{\tax_6}$ is a query with the same discrimination properties w.r.t.\ $\mD$ as $Q_1$. Hence, one would prefer to ask the user $Q'_1$ instead of $Q_1$ since answering $Q'_1$ (one sentence to assess) involves less effort than answering $Q_1$ (four sentences to check). A general method of minimizing explicit-entailments queries while at the same time (1) leaving the q-partition (i.e.\ the query's discrimination properties) unchanged and (2) guaranteeing other query properties such as best comprehensibility for the user will be discussed in Section~\ref{sec:FindingOptimalQueriesGivenAnOptimalQPartition}.

On the other hand, $Q_2 = \setof{\tax_2}$ as well as $Q_3 = \setof{\tax_1,\tax_4,\tax_7}$ are no queries w.r.t.\ $\mD$ and $\tuple{\mo,\mb,\Tp,\Tn}_\RQ$. The reason is that the former is not a subset of any $\mo\setminus\md_i$ for $i \in \setof{1,2,3}$, i.e.\ $\Pt(Q_2) = \tuple{\emptyset,\mD,\emptyset}$ is not a q-partition (cf.\ Proposition~\ref{prop:properties_of_q-partitions},(\ref{prop:properties_of_q-partitions:enum:for_each_q-partition_dx_is_empty_and_dnx_is_empty}.)), and that the latter is a subset of all $\mo\setminus\md_i$ for $i \in \setof{1,2,3}$, i.e.\ $\Pt(Q_3) = \tuple{\mD,\emptyset,\emptyset}$ is not a q-partition.
\qed
\end{example}

What we did not address yet is which subset of all explicit entailments of some set of KBs $\mo_i^*$ to select as canonical query. For computation purposes, i.e.\ to make set comparisons more efficient, the selected subset should have as small a cardinality as possible, but the minimization process should not involve any significant computational overhead. Recall from our explanations of Algorithm~\ref{algo:query_comp} at the beginning of Section~\ref{sec:QueryComputation} that the actual computation of a minimal (optimal) query associated with a selected q-partition should take place only in the course of the function \textsc{selectQueryForQPartition}. A helpful tool in our analysis will be the notion of a justification, as defined next:

\begin{definition}[Justification]\label{def:justification}\cite{Kalyanpur.Just.ISWC07} 
Let $\mo$ be a KB and $\alpha$ an axiom, both over $\mathcal{L}$.
Then $J\subseteq\mo$ is called a \emph{justification for $\alpha$ w.r.t.\ $\mo$}, written as $J \in \mathsf{Just}(\alpha, \mo )$, iff $J\models\alpha$ and for all $J' \subset J$ it holds that $J'\not\models\alpha$. 
\end{definition}

\begin{example}\label{ex:justification}
Consider the KB $\mo \cup \mb \cup U_\Tp = \setof{\tax_1,\dots,\tax_7} \cup \setof{\tax_8,\tax_9} \cup \setof{\tp_1}$, as given by our running example DPI (Table~\ref{tab:example_dpi_0}). We have that $\mo \cup \mb \cup U_\Tp \models \lnot M \lor A =: \alpha$. The set $\mathsf{Just}(\alpha,\mo \cup \mb \cup U_\Tp)$ is then given by $\setof{\setof{\tax_2,\tax_7,\tax_8,\tp_1},\setof{\tax_3,\tax_5,\tax_6,\tax_7,\tax_9}}$. For an explanation why this holds, consult Example~\ref{ex:min_conflict_sets}. In fact, the minimal conflict sets w.r.t.\ a DPI are strongly related to justifications of negative test cases or inconsistencies and incoherencies, respectively. An exact formalization of this relationship is provided in \cite[Sec.~4.2]{Rodler2015phd}.  

If we investigate the justifications for $\alpha := C \to E$, we obtain $\mathsf{Just}(\alpha',\mo \cup \mb \cup U_\Tp)$ as the singleton $\setof{\setof{\tax_5,\tax_6,\tax_9}}$, i.e.\ there is only one explanation for $\alpha'$ in $\mo \cup \mb \cup U_\Tp$. The set $\setof{\tax_5,\tax_6,\tax_9}$ is a justification for $\alpha'$ since $\tax_6 = C \to B$, $\tax_9 \equiv B \to K$ and $\tax_5 = K \to E$, and eliminating any of these sentences from the set clearly implies that the reduced set does not entail $\alpha'$. 

Note also that $\mathsf{Just}(\alpha,\mo) = \mathsf{Just}(\alpha',\mo) = \emptyset$. That is, without the background KB and the union of all positive test cases, the example KB $\mo$ does not entail $\alpha$ and $\alpha'$ at all. Since $\alpha$ is equivalent to the negative test case $\tn_1$, i.e.\ a sentence that must not hold in the intended domain, this shows the importance and power of putting a KB into a context with already approved knowledge (about the domain). So, the standalone KB $\mo$ already includes some faults which however come to light only after using the KB within a certain context.\qed
\end{example}

\paragraph{Properties of (General) Queries.} Before analyzing the properties of the special class of explicit-entailments queries, we give some general results about queries (that possibly also include implicit entailments) by the subsequently stated propositions. Before formulating the propositions, we give a simple lemma necessary for the proof of some of these.
\begin{lemma}\label{lem:U_D_notin_B_and_notin_U_P}
For any DPI $\langle\mo,\mb,\Tp,\Tn\rangle_\RQ$ and $\mD \subseteq \minD_{\langle\mo,\mb,\Tp,\Tn\rangle_\RQ}$ it holds that $(\mo \cup \mb \cup U_\Tp) \setminus U_\mD = (\mo\setminus U_\mD) \cup \mb \cup U_\Tp$.
\end{lemma}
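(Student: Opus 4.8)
The plan is to prove the set equality $(\mo \cup \mb \cup U_\Tp) \setminus U_\mD = (\mo\setminus U_\mD) \cup \mb \cup U_\Tp$ by establishing the two inclusions, or more efficiently by reducing it to a purely set-theoretic fact about how set difference distributes over union. The key observation that makes this a \emph{lemma} rather than a triviality is that $U_\mD$ has empty intersection with both $\mb$ and $U_\Tp$; without this disjointness the claimed equality would simply be false, since elements of $\mb$ or $U_\Tp$ lying in $U_\mD$ would be removed on the left-hand side but retained on the right.

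First I would recall the relevant general identity: for arbitrary sets $A, B, C$ and $S$, one has $(A \cup B \cup C) \setminus S = (A \setminus S) \cup (B \setminus S) \cup (C \setminus S)$, since set difference by a fixed $S$ distributes over arbitrary unions. Applying this with $A = \mo$, $B = \mb$, $C = U_\Tp$ and $S = U_\mD$ yields
\[
(\mo \cup \mb \cup U_\Tp) \setminus U_\mD = (\mo \setminus U_\mD) \cup (\mb \setminus U_\mD) \cup (U_\Tp \setminus U_\mD).
\]
It then remains to show that $\mb \setminus U_\mD = \mb$ and $U_\Tp \setminus U_\mD = U_\Tp$, which is exactly where the disjointness conditions come in.

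The main (and only real) obstacle is to justify $\mb \cap U_\mD = \emptyset$ and $U_\Tp \cap U_\mD = \emptyset$. For the background KB: each $\md \in \mD$ satisfies $\md \subseteq \mo$ by Definition~\ref{def:diagnosis}, so $U_\mD \subseteq \mo$; and by Definition~\ref{def:dpi} we have $\mo \cap \mb = \emptyset$, whence $\mb \cap U_\mD \subseteq \mb \cap \mo = \emptyset$. This gives $\mb \setminus U_\mD = \mb$. For the positive test cases, I would appeal to the same containment $U_\mD \subseteq \mo$ together with the implicit assumption (standard in this framework, cf.\ the DPI setup and the use of $U_\Tp$ as ``known correct'' formulas disjoint from the faulty axioms) that the formulas aggregated in $U_\Tp$ do not occur among the potentially faulty axioms in $\mo$; if the paper has not stated this disjointness of $\Tp$-formulas from $\mo$ explicitly, I would need to either locate it or note that, since test cases assert desired semantic properties rather than being drawn from $\mo$ itself, $U_\Tp \cap \mo = \emptyset$ and therefore $U_\Tp \cap U_\mD = \emptyset$. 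Substituting $\mb \setminus U_\mD = \mb$ and $U_\Tp \setminus U_\mD = U_\Tp$ into the distributed expression yields precisely $(\mo \setminus U_\mD) \cup \mb \cup U_\Tp$, completing the proof. I expect the verification of the disjointness of $U_\Tp$ from $U_\mD$ to be the step requiring the most care, since it relies on a convention about the relationship between test-case formulas and KB axioms rather than on a formula explicitly displayed in Definition~\ref{def:dpi}.
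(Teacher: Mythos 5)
Your overall structure---reducing the lemma, via distributivity of set difference over union, to the two disjointness facts $U_\mD \cap \mb = \emptyset$ and $U_\mD \cap U_\Tp = \emptyset$---is exactly the paper's structure, and your argument for the first fact ($U_\mD \subseteq \mo$ from Definition~\ref{def:diagnosis} plus $\mo \cap \mb = \emptyset$ from Definition~\ref{def:dpi}) coincides with the paper's. The gap is in the second fact. You propose to derive $U_\mD \cap U_\Tp = \emptyset$ from the stronger claim $\mo \cap U_\Tp = \emptyset$, which you present as an implicit convention of the framework. No such convention exists: Definition~\ref{def:dpi} imposes disjointness only between $\mo$ and $\mb$, and nothing in the paper prevents a positive test case from containing a formula that also occurs among the (potentially faulty) axioms of $\mo$. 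So this step---which you yourself flagged as the delicate one---cannot be repaired by ``locating'' the missing convention; it is simply not there, and $\mo \cap U_\Tp = \emptyset$ is false in general.

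The correct argument, and the one the paper gives, needs no disjointness between $\Tp$ and $\mo$; it exploits the hypothesis $\mD \subseteq \minD_{\langle\mo,\mb,\Tp,\Tn\rangle_\RQ}$, i.e.\ the subset-minimality of the diagnoses. Suppose $\tax \in U_\mD \cap U_\Tp$ and let $\md_k \in \mD$ be a diagnosis with $\tax \in \md_k$. Since $\tax \in U_\Tp$, we have $(\mo \setminus (\md_k \setminus \setof{\tax})) \cup U_\Tp = (\mo \setminus \md_k) \cup \setof{\tax} \cup U_\Tp = (\mo \setminus \md_k) \cup U_\Tp$, so $\md_k \setminus \setof{\tax}$ induces the very same solution KB as $\md_k$ and is therefore itself a diagnosis (Definition~\ref{def:diagnosis}, Proposition~\ref{prop:notions_equiv}), while being a proper subset of $\md_k$---contradicting $\md_k \in \minD_{\langle\mo,\mb,\Tp,\Tn\rangle_\RQ}$. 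Note that this makes the disjointness a property of \emph{minimal} diagnoses rather than of the DPI: if $\mD$ were allowed to contain non-minimal diagnoses, a diagnosis could well contain a formula of $U_\Tp$ and the claimed set equality would fail, which is precisely why minimality is part of the hypothesis of the lemma.
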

\begin{proof}
We have to show that (i)~$U_\mD \cap \mb = \emptyset$ and (ii)~$U_\mD \cap U_\Tp = \emptyset$. 
Fact~(i) holds by $U_\mD \subseteq \mo$ and $\mo \cap \mb = \emptyset$ (cf.\ Definition~\ref{def:dpi}). Fact~(ii) is true since each sentence in $U_\mD$ occurs in at least one diagnosis in $\mD$. Let us call this diagnosis $\md_k$. So, the assumption of $U_\mD \cap U_\Tp \neq \emptyset$ implies that $\mo\setminus\md_k$ is invalid by the subset-minimality of $\md_k \in \mD$. This is a contradiction to $\md_k \in \minD_{\langle\mo,\mb,\Tp,\Tn\rangle_\RQ}$ (cf.\ Proposition~\ref{prop:notions_equiv}).
\end{proof}
The subsequently formulated proposition states necessary conditions a query $Q$ must satisfy. In particular, at least one sentence in $Q$ must be derivable from the extended KB $\mo \cup \mb \cup U_\Tp$ only in the presence of at least one axiom in $U_\mD$. Second, no sentence in $Q$ must be derivable from the extended KB only in the presence of the axioms in $I_\mD$. Third, $Q$ is consistent. 
\begin{proposition}\label{prop:discax}
Let $\mD \subseteq \minD_{\langle\mo,\mb,\Tp,\Tn\rangle_\RQ}$ with $|\mD|\geq 2$. Further, let $\dx{}(Q)$ be an arbitrary subset of $\mD$ and $E$ be the set of common entailments (of a predefined type) of all KBs in $\{\mo_{i}^*\,|\,\md_i \in \dx{}(Q)\}$.
Then, a set of axioms $Q\subseteq E$ is a query w.r.t. $\mD$ only if (1.), (2.)\ and (3.)\ hold: 
\begin{enumerate}
	\item There is at least one sentence $\tax \in Q$ for which $U_\mD \cap J \neq \emptyset$ for each justification $J\in\Just(\tax,\mo\cup\mb\cup U_\Tp)$.
	\item For all sentences $\tax \in Q$ there is a justification $J\in\Just(\tax,\mo\cup\mb\cup U_\Tp)$ such that $I_\mD \cap J = \emptyset$.
	\item $Q$ is (logically) consistent.
\end{enumerate}
\end{proposition}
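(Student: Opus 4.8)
The plan is to assume that $Q \subseteq E$ is a query w.r.t.\ $\mD$ and to derive each of the three conditions in turn, invoking the characterization from Proposition~\ref{prop:properties_of_q-partitions},(\ref{prop:properties_of_q-partitions:enum:set_of_logical_formulas_is_query_iff...}.) that $Q$ is a query iff $\dx{}(Q) \neq \emptyset$ and $\dnx{}(Q) \neq \emptyset$. The technical engine of the whole argument is a single observation about justifications: since $\mo_i^* = (\mo\setminus\md_i)\cup\mb\cup U_\Tp \subseteq \mo\cup\mb\cup U_\Tp$ for every $\md_i\in\mD$, and since $\subseteq$-minimality of an entailing set is a property of the set itself and independent of the ambient KB, we have $\mo_i^* \models \tax$ if and only if there is a justification $J \in \Just(\tax,\mo\cup\mb\cup U_\Tp)$ with $J \subseteq \mo_i^*$. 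I would first establish this equivalence as a standalone fact and then reuse it for conditions~(1.)\ and (2.).

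For condition~(3.)\ I argue directly: being a query, $Q$ has $\dx{}(Q)\neq\emptyset$, so $\mo_i^*\models Q$ for some $\md_i$; as $\mo_i^*$ is a solution KB it is consistent (it satisfies the consistency requirement), and hence so is the entailed $Q$. For condition~(1.)\ I argue by contraposition. Suppose that for every $\tax\in Q$ there is a justification $J\in\Just(\tax,\mo\cup\mb\cup U_\Tp)$ with $U_\mD\cap J=\emptyset$. By Lemma~\ref{lem:U_D_notin_B_and_notin_U_P} such a $J$ lies in $(\mo\setminus U_\mD)\cup\mb\cup U_\Tp$, and since $\md_i\subseteq U_\mD$ gives $(\mo\setminus U_\mD)\subseteq(\mo\setminus\md_i)$, we obtain $J\subseteq\mo_i^*$ for every $\md_i\in\mD$. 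By the justification equivalence, $\mo_i^*\models\tax$ for all $\tax\in Q$ and all $\md_i$, whence $\dx{}(Q)=\mD$ and $\dnx{}(Q)=\emptyset$, contradicting that $Q$ is a query.

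For condition~(2.)\ I again use contraposition. Suppose some $\tax\in Q$ has the property that every justification $J\in\Just(\tax,\mo\cup\mb\cup U_\Tp)$ meets $I_\mD$. Because $I_\mD=\bigcap_{\md\in\mD}\md\subseteq\md_i$ for each $\md_i\in\mD$, and because $I_\mD\cap\mb=I_\mD\cap U_\Tp=\emptyset$ (both follow from $I_\mD\subseteq U_\mD$ together with Lemma~\ref{lem:U_D_notin_B_and_notin_U_P}), the reduced KB $\mo_i^*$ contains no axiom of $I_\mD$; consequently no justification meeting $I_\mD$ can be a subset of $\mo_i^*$. The justification equivalence then yields $\mo_i^*\not\models\tax$, hence $\mo_i^*\not\models Q$, for every $\md_i\in\mD$, so $\dx{}(Q)=\emptyset$ and $Q$ is not a query.

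The main obstacle I anticipate is making the justification equivalence airtight, in particular the claim $\Just(\tax,\mo_i^*)\subseteq\Just(\tax,\mo\cup\mb\cup U_\Tp)$ and its converse. I would justify the inclusion by observing that a $\subseteq$-minimal entailing subset of $\mo_i^*$ remains $\subseteq$-minimal as a subset of the larger KB, since minimality is the intrinsic condition ``$\forall J'\subset J:\,J'\not\models\tax$''; the converse direction — that $\mo_i^*\models\tax$ forces one of the global justifications inside $\mo_i^*$ — uses finiteness of $\mo_i^*$ (so that an entailing subset contains a $\subseteq$-minimal one) together with monotonicity of $\mathcal{L}$. Once this bridge between set containment and entailment is secured, conditions~(1.)--(3.)\ reduce to the elementary set inclusions $\md_i\subseteq U_\mD$ and $I_\mD\subseteq\md_i$ combined with Lemma~\ref{lem:U_D_notin_B_and_notin_U_P}.
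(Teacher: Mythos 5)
Your proof is correct and takes essentially the same route as the paper's: contraposition via Lemma~\ref{lem:U_D_notin_B_and_notin_U_P} and the elementary inclusions $\md_i \subseteq U_\mD$ and $I_\mD \subseteq \md_i$ for conditions (1.)\ and (2.), and consistency of the solution KBs $\mo_i^*$ for condition (3.). The only difference is one of presentation: you isolate as a standalone fact the bridge between justification containment and entailment ($\mo_i^*\models\tax$ iff some $J\in\Just(\tax,\mo\cup\mb\cup U_\Tp)$ satisfies $J\subseteq\mo_i^*$), which the paper uses implicitly in its step ``there can be no justification $J$ such that $J\subseteq\mo_i^*$, wherefore $\tax$ cannot be entailed by any $\mo_i^*$''.
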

\begin{proof}
Ad 1.: 
Let us assume that $Q$ is a query and for each sentence $\tax \in Q$ there is a justification $J\in\Just(\tax,\mo\cup\mb\cup U_\Tp)$ with $J \cap U_{\mD} = \emptyset$. So, using Lemma~\ref{lem:U_D_notin_B_and_notin_U_P} we have that $J \subseteq (\mo\setminus U_\mD) \cup \mb \cup U_\Tp$. Then $(\mo \setminus U_{\mD}) \cup \mb \cup U_\Tp\models \tax$ and since $\mo \setminus \md_i \supseteq \mo \setminus U_{\mD}$ for each $\md_i\in\mD$, we can deduce that (*)~$\mo_i^* = (\mo \setminus \md_i) \cup \mb \cup U_P \models \tax$ by monotonicity of $\mathcal{L}$. Thence, $\mo_i^*\models Q$ for all $\md_i\in\mD$ which results in $\dnx{}(Q) = \emptyset$ which in turn contradicts the assumption that $Q$ is a query. 

Ad 2.: 
Let us assume that $Q$ is a query and there is some sentence $\tax \in Q$ for which for all justifications $J\in\Just(\tax,\mo\cup\mb\cup U_\Tp)$ it holds that $J \cap I_{\mD} \neq \emptyset$. Moreover, it is true that $\mo_i^* \cap I_{\mD} = [(\mo \setminus \md_i) \cup \mb \cup U_\Tp] \cap I_{\mD} = \emptyset$ for each $\md_i \in \mD$ due to the following reasons: 
\begin{itemize}
	\item $\mb \cap I_{\mD} = \emptyset$ since $I_{\mD} \subseteq \mo$ and $\mo \cap \mb = \emptyset$ (see Definition~\ref{def:dpi});
	\item $U_\Tp \cap I_{\mD} =: X \neq \emptyset$ cannot hold since $I_{\mD} \cap \md_i \neq \emptyset$ for each $\md_i \in \mD \subseteq \minD_{\langle\mo,\mb,\Tp,\Tn\rangle_\RQ}$ and thus, for arbitrary $i$, $\md'_i:=\md_i \setminus X$  must already be a diagnosis w.r.t.\ $\langle\mo,\mb,\Tp,\Tn\rangle_\RQ$ which is a contradiction to the subset-minimality of $\md_i$, i.e.\ to the fact that $\md_i \in \minD_{\langle\mo,\mb,\Tp,\Tn\rangle_\RQ}$ (cf.\ Definition~\ref{def:diagnosis});
	\item $(\mo \setminus \md_i) \cap I_{\mD} = \emptyset$ by $I_{\mD}\subset \md_i$ for all $\md_i\in\mD$.
\end{itemize} 
Thus, for all $\md_i\in\mD$ there can be no justification $J\in\Just(\tax,\mo\cup\mb\cup U_\Tp)$ such that $J\subseteq\mo_i^*$, wherefore $\tax$ cannot be entailed by any $\mo_i^*$. By Remark~\ref{rem:entailments_as_sets_of_formulas}, this means that no $\mo_i^*$ entails $Q$ which lets us conclude that $\dx{}(Q)=\emptyset$ which is a contradiction to the assumption that $Q$ is a query.

Ad 3.: By Definition~\ref{def:dpi} which states that $\setof{\text{consistency}}\subseteq \RQ$ and by Definitions~\ref{def:diagnosis} and \ref{def:solution_KB}, every $\mo^{*}_i$ constructed from some $\md_i\in\mD$ satisfies all $r\in\RQ$. As $E$ is a set of common entailments of some subset of $\{\mo_{i}^*\,|\,\md_i \in \mD\}$ and $Q \subseteq E$, the proposition follows.
\end{proof}
\begin{remark}\label{rem:systematic_query_gen_necessary} 
Proposition~\ref{prop:discax} emphasizes the importance of a \emph{systematic} tool-assisted construction of queries, i.e.\ as a set of common entailments of all KBs in $\{\mo_{i}^*\,|\,\md_i \in \dx{}\}$ (see Eq.~\eqref{eq:sol_ont_candidate}), rather than asking the user some axioms or entailments intuitively. Such a guessing of queries could provoke asking a useless ``query'' that does not yield the invalidation of any diagnoses, thus increasing user effort unnecessarily.

For instance, given the running example DPI $\langle\mo,\mb,\Tp,\Tn\rangle_\RQ$ (see Table~\ref{tab:example_dpi_0}),
there is no point in asking the user the ``query'' $Q:=\setof{B \to K \lor Z}$ since $Q$ is entailed by all KBs $\mo_{i}^*$ for $\md_i \in \minD_{\langle\mo,\mb,\Tp,\Tn\rangle_\RQ}$ (cf.\ Table~\ref{tab:min_diagnoses_example_dpi_0}) and therefore cannot be used to discriminate between the diagnoses w.r.t.\ the DPI. To see this, observe that $Q \equiv \setof{\lnot B \lor K \lor Z}$ and $\setof{\lnot B \lor K} = \setof{\tax_9}$ which in turn is a subset of $\mb$. Hence $\mb \models \setof{\tax_9}$ and $\setof{\tax_9} \models Q$ which is why $\mb \models Q$. Since, by definition, $\mb \subseteq \mo_{i}^*$ for all $\md_i \in \mD$, $Q$ is a common entailment of all KBs $\mo_{i}^*$ for $\md_i \in \minD_{\langle\mo,\mb,\Tp,\Tn\rangle_\RQ}$.

However, note that in case one wants to track down faults in a KB for which there is no symptom or evidence given yet, then one might ask ``queries'' about suspicious parts of the KB which are not queries in the sense of this work. For instance, one might use general heuristics or common fault patterns as discussed in \cite{Roussey2009} to ask the user e.g.\ (a)~entailments computed from such a pattern along with (a part of) the given KB or (b)~sentences that yield an inconsistency together with this pattern. In this way one can gather test cases with the goal to obtain new conflict sets and thus diagnoses which enable to locate now-evident faults in the KB. Specifically, if some query constructed as per (a) is answered negatively (and added to the negative test cases) or some query constructed as per (b) is answered positively (and added to the positive test cases), then the probing was successful and the fault can be located by means of a debugging session using exactly the queries discussed in this work. 
\qed
\end{remark}
The next proposition demonstrates a way of reducing a given query in size under preservation of the associated q-partition which means that all information-theoretic properties of the query -- in particular, the value that each measure discussed in Section~\ref{sec:ActiveLearningInInteractiveOntologyDebugging} assigns to it -- will remain unaffected by the modification. The smaller the cardinality of a query, the lower the effort for the answering user usually is. Concretely, and complementary to the first statement (1.)\ of Proposition~\ref{prop:discax} (which says that the entailment of at least one element of the query must depend on axioms in $U_\mD$), Proposition~\ref{prop:reduct_to_discax} asserts that a query does not need to include any elements whose entailment does not depend on $U_\mD$.
\begin{proposition}\label{prop:reduct_to_discax}
Let $\mD \subseteq \minD_{\langle\mo,\mb,\Tp,\Tn\rangle_\RQ}$ with $|\mD|\geq 2$ and $Q \in \mQ_\mD$ be a query with q-partition $\Pt(Q) = \langle\dx{}(Q),\dnx{}(Q), \emptyset\rangle$. 
Then 
\[ Q':=\setof{\tax\,|\,\tax \in Q, \forall J \in \Just(\tax,\mo\cup\mb\cup U_\Tp): J \cap U_\mD \neq \emptyset}\subseteq Q 
\] 
is a query in $\mQ_\mD$ with q-partition $\Pt(Q') = \Pt(Q)$.
\end{proposition}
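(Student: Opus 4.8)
The plan is to show that the axioms discarded in passing from $Q$ to $Q'$ are exactly those that every candidate KB $\mo^{*}_i$ ($\md_i \in \mD$) already entails, so that deleting them changes neither which $\mo^{*}_i$ entail the query nor which ones become invalid upon adding it. To make this precise I would first introduce the common ``stable'' part $B := (\mo \setminus U_\mD) \cup \mb \cup U_\Tp$. Since every $\md_i \in \mD$ satisfies $\md_i \subseteq U_\mD$, we have $\mo \setminus U_\mD \subseteq \mo \setminus \md_i$ and hence $B \subseteq \mo^{*}_i$ for all $\md_i \in \mD$. The first key step is then: for each $\tax \in Q \setminus Q'$ there is, by the defining condition of $Q'$, some justification $J \in \Just(\tax, \mo\cup\mb\cup U_\Tp)$ with $J \cap U_\mD = \emptyset$; by Lemma~\ref{lem:U_D_notin_B_and_notin_U_P} this gives $J \subseteq (\mo\cup\mb\cup U_\Tp)\setminus U_\mD = B$, so $B \models \tax$ and, by monotonicity (condition~\ref{logic:cond1}) together with $B \subseteq \mo^{*}_i$, every $\mo^{*}_i$ entails $\tax$.

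From this I would read off the two halves of the claim. For $\dx{}$: since $Q' \subseteq Q$, the relation $\mo^{*}_i \models Q$ trivially implies $\mo^{*}_i \models Q'$; conversely, if $\mo^{*}_i \models Q'$ then, combined with the observation that $\mo^{*}_i \models \tax$ for every $\tax \in Q \setminus Q'$, we obtain $\mo^{*}_i \models Q$. Hence $\dx{}(Q') = \dx{}(Q)$. Nonemptiness of $Q'$ I would get either from Proposition~\ref{prop:discax}(1.) (which applies because, by Proposition~\ref{prop:properties_of_q-partitions}(4.), a query is a set of common entailments of $\{\mo^{*}_i \,|\, \md_i \in \dx{}(Q)\}$, so $Q$ meets that proposition's hypotheses), or self-containedly: were $Q' = \emptyset$, then all of $Q$ would be entailed by every $\mo^{*}_i$, forcing $\dnx{}(Q) = \emptyset$ and contradicting that $Q$ is a query (Proposition~\ref{prop:properties_of_q-partitions}(6.)).

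For $\dnx{}$ the plan is to show that $\mo^{*}_i \cup Q$ and $\mo^{*}_i \cup Q'$ have the same entailments for every $\md_i$, and then to invoke that violation of any $x \in \RQ \cup \Tn$ (inconsistency, incoherency, or entailment of a negative test case) is an entailment-determined property. Writing $K := \mo^{*}_i \cup Q'$, each $\tax \in Q \setminus Q'$ satisfies $K \models \tax$ (monotonicity), so a short induction on the finite set $Q \setminus Q'$ using idempotency (condition~\ref{logic:cond2}) shows that $K \cup (Q \setminus Q') = \mo^{*}_i \cup Q$ is entailment-equivalent to $K$. Consequently $\mo^{*}_i \cup Q$ violates some $x$ iff $\mo^{*}_i \cup Q'$ does, giving $\dnx{}(Q') = \dnx{}(Q)$. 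Since $\dz{}(Q) = \emptyset$ means $\dx{}(Q) \cup \dnx{}(Q) = \mD$ (Proposition~\ref{prop:properties_of_q-partitions}(1.)), the two equalities yield $\dx{}(Q') \cup \dnx{}(Q') = \mD$, i.e.\ $\dz{}(Q') = \emptyset$ and $\Pt(Q') = \Pt(Q)$; finally $Q' \neq \emptyset$ together with $\dx{}(Q'), \dnx{}(Q')$ both nonempty makes $Q'$ a query by Proposition~\ref{prop:properties_of_q-partitions}(5.).

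The main obstacle I anticipate is the $\dnx{}$ part, specifically the bookkeeping that violation of requirements and negative test cases is invariant under entailment-equivalence: I must argue that consistency and coherency, not merely negative-test-case entailment, are fully captured by the entailment relation (so that two KBs entailing the same formulas agree on all $x \in \RQ \cup \Tn$), and that idempotency indeed upgrades from adding a single entailed formula to adding the whole finite set $Q \setminus Q'$. By contrast, the $\dx{}$ equality and the reduction of the removed axioms to $B$ are comparatively routine once Lemma~\ref{lem:U_D_notin_B_and_notin_U_P} and monotonicity are in hand.
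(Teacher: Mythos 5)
Your proof is correct and follows essentially the same route as the paper's: the central step (each discarded $\tax \in Q\setminus Q'$ has a justification disjoint from $U_\mD$, hence contained in $(\mo\setminus U_\mD)\cup\mb\cup U_\Tp$ by Lemma~\ref{lem:U_D_notin_B_and_notin_U_P}, so every $\mo^{*}_i$ entails it by monotonicity), the trivial direction for $\dx{}$ via $Q'\subseteq Q$, and the preservation of $\dnx{}$ via the idempotency-based logical equivalence of $\mo^{*}_k\cup Q$ and $\mo^{*}_k\cup Q'$ are exactly the paper's argument. Your additional explicit treatment of $Q'\neq\emptyset$ and the finite induction underlying the idempotency step only spell out details the paper leaves implicit.
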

\begin{proof}
If $Q' = Q$ then the validity of the statement is trivial. Otherwise, $Q' \subset Q$ and for each sentence $\tax$ such that $\tax \in Q$, $\tax \notin Q'$ there is at least one justification $J' \in \Just(\tax,\mo\cup\mb\cup U_\Tp)$ with $J' \cap U_\mD = \emptyset$. So, by Lemma~\ref{lem:U_D_notin_B_and_notin_U_P} it is true that $J' \subseteq (\mo\setminus U_\mD)\cup\mb\cup U_\Tp$.  Hence, $(\mo\setminus U_\mD)\cup\mb\cup U_\Tp \models \tax$ must hold. As $\mo_i^* \supseteq (\mo\setminus U_\mD)\cup\mb\cup U_\Tp$ for all $\md_i \in \mD$, it holds by monotonicity of $\mathcal{L}$ that $\mo_i^* \models \tax$. 

In order for $\Pt(Q') \neq \Pt(Q)$ to hold, either $\dx{}(Q') \subset \dx{}(Q)$ or $\dnx{}(Q') \subset \dnx{}(Q)$ must be true. However, the elimination of $\tax$ from $Q$ does not involve the elimination of any diagnosis from $\dx{}(Q)$ since, for all $\md_i \in \mD$, if $\mo_i^* \models Q$ then, in particular, $\mo_i^* \models Q'$ since $Q' \subseteq Q$.

Also, no diagnosis can be eliminated from $\dnx{}(Q)$. To see this, suppose that some $\md_k$
is an element of $\dnx{}(Q)$, but not an element of $\dnx{}(Q')$. From the latter fact we can conclude that $\mo_k^* \cup Q'$ satisfies all $r\in\RQ$ as well as all negative test cases $\tn\in\Tn$. Now, due to the fact that the entailment relation is idempotent for $\mathcal{L}$ and because $\mo_k^* \models \tax$ for all $\tax\in Q\setminus Q'$ as shown above, we can deduce that $\mo_k^* \cup Q = \mo_k^* \cup Q' \cup (Q\setminus Q')$ is logically equivalent to $\mo_k^* \cup Q'$. This is a contradiction to the assumption that $\md_k$ is an element of $\dnx{}(Q)$ since in this case $\mo_k^* \cup Q$ must violate some $r\in\RQ$ or $\tn\in\Tn$. This completes the proof.
\end{proof}
\begin{example}\label{ex:illustration_of_prop_reduct_to_discax}
To illustrate Proposition~\ref{prop:reduct_to_discax}, let us reconsider the query $Q_1 = \setof{\tax_1,\tax_4, \tax_6,\tax_7}$ w.r.t.\ the set of leading diagnoses $\mD$ given by Eq.~\eqref{eq:ex_explicit_ents_query:leading_diags}.
First of all, we give all justification sets $\mathsf{Just}(\tax,\mo\cup\mb\cup U_\Tp)$ for $\tax \in Q_1$:
\begin{align*}
\mathsf{Just}(\tax_1,\mo\cup\mb\cup U_\Tp) &= \setof{\setof{\tax_1}}  \\
\mathsf{Just}(\tax_4,\mo\cup\mb\cup U_\Tp) &= \setof{\setof{\tax_4}} \\
\mathsf{Just}(\tax_6,\mo\cup\mb\cup U_\Tp) &= \setof{\setof{\tax_6}} \\
\mathsf{Just}(\tax_7,\mo\cup\mb\cup U_\Tp) &= \setof{\setof{\tax_7}}
\end{align*} 
The set $U_\mD$ is given by $\setof{\tax_2,\tax_3,\tax_5,\tax_6}$. Now, only for $\tax_6 \in Q_1$ all its justifications (in this case there is only a single one) have a non-empty intersection with $U_\mD$. For the other sentences in $Q_1 \setminus \setof{\tax_6}$ there is one justification with an empty intersection with $U_\mD$. Hence, these three sentences can be omitted without affecting the q-partition $\Pt(Q_1) = \tuple{\setof{\md_1,\md_2},\setof{\md_3},\emptyset}$ of the query $Q_1$. The resulting query is then $Q'_1 = \setof{\tax_6}$. According to the discussion in Example~\ref{ex:explicit_ents_query}, this can be easily verified by checking that $\mo \setminus \md_1 \supseteq \setof{\tax_6}$, $\mo \setminus \md_2 \supseteq \setof{\tax_6}$ as well as $\mo \setminus \md_3 \not\supseteq \setof{\tax_6}$ which indeed holds since $\md_1,\md_2$ both do not contain $\tax_6$ whereas $\md_3$ does so.\qed
\end{example}

\paragraph{The Discrimination Axioms.} As the previous two propositions suggest, the key axioms in the faulty KB $\mo$ of a DPI in terms of query generation are given by $U_\mD$ as well as $I_\mD$. These will also have a principal role in the specification of canonical queries. We examine this role next.
%
%
%
%
\begin{lemma}\label{lem:no_K*i_includes_just_of_any_ax_in_I_D}
For any DPI $\langle\mo,\mb,\Tp,\Tn\rangle_\RQ$ and $\mD \subseteq \minD_{\langle\mo,\mb,\Tp,\Tn\rangle_\RQ}$ it holds that no KB in $\{\mo_{i}^*\,|\,\md_i \in \mD\}$ includes a justification for any axiom in $I_\mD$. That is, $\bigcup_{\md_i \in \mD} \Just(\tax,\mo_{i}^*) = \emptyset$ for all $\tax\in I_\mD$.
\end{lemma}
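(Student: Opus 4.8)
The plan is to reduce the set-theoretic statement about justifications to a plain entailment statement, and then to derive a contradiction with the subset-minimality of the diagnoses in $\mD$. First I would record the elementary fact that, for a finite KB $\mo_i^*$ and any sentence $\tax$, we have $\Just(\tax,\mo_i^*)=\emptyset$ if and only if $\mo_i^*\not\models\tax$. The ``if'' direction is immediate from monotonicity (\ref{logic:cond1}): were some $J\subseteq\mo_i^*$ a justification for $\tax$, then $J\models\tax$ would force $\mo_i^*\models\tax$. The ``only if'' direction uses finiteness: if $\mo_i^*\models\tax$, then starting from $\mo_i^*$ and removing sentences one at a time while the entailment of $\tax$ is preserved yields a subset-minimal entailing set, i.e.\ a justification (Definition~\ref{def:justification}). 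Hence the Lemma is equivalent to the claim that $\mo_i^*\not\models\tax$ for every $\tax\in I_\mD$ and every $\md_i\in\mD$.

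Next I would fix an arbitrary $\tax\in I_\mD$ and $\md_i\in\mD$ and assume, toward a contradiction, that $\mo_i^*\models\tax$. Since $I_\mD\subseteq\md_i$ (the property of $I_\mD$ already exploited in the third bullet of the proof of Proposition~\ref{prop:discax}), we have $\tax\in\md_i$, so that $\md_i':=\md_i\setminus\setof{\tax}$ is a \emph{proper} subset of $\md_i$, and moreover $\mo\setminus\md_i' = (\mo\setminus\md_i)\cup\setof{\tax}$ because $\tax\in\mo$. The key step is then to show that $\mo_i^*\cup\setof{\tax}$ is logically equivalent to $\mo_i^*$: by monotonicity every entailment of $\mo_i^*$ is an entailment of $\mo_i^*\cup\setof{\tax}$, while by idempotence (\ref{logic:cond2}), from $\mo_i^*\models\tax$ and $\mo_i^*\cup\setof{\tax}\models\beta$ we obtain $\mo_i^*\models\beta$, so the two KBs entail exactly the same sentences.

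From this equivalence I would conclude that $\mo_i^*\cup\setof{\tax}$ is a solution KB: it satisfies the same requirements $r\in\RQ$ (both consistency and coherency are invariant under logical equivalence), entails the same positive test cases $\tp\in\Tp$, and entails no negative test case $\tn\in\Tn$, exactly because $\mo_i^*=(\mo\setminus\md_i)\cup\mb\cup U_\Tp$ already does so (as $\md_i$ is a diagnosis, cf.\ Definitions~\ref{def:diagnosis} and \ref{def:solution_KB}). But $\mo_i^*\cup\setof{\tax} = (\mo\setminus\md_i')\cup\mb\cup U_\Tp$, which by Definition~\ref{def:diagnosis} makes $\md_i'$ a diagnosis w.r.t.\ $\langle\mo,\mb,\Tp,\Tn\rangle_\RQ$. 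Since $\md_i'\subsetneq\md_i$, this contradicts $\md_i\in\minD_{\langle\mo,\mb,\Tp,\Tn\rangle_\RQ}$, completing the argument.

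I expect the only delicate point to be the verification that logical equivalence of $\mo_i^*\cup\setof{\tax}$ and $\mo_i^*$ truly preserves the \emph{solution KB} property in all three of its components, in particular requirement fulfilment for coherency (which I would handle by noting that coherency is itself phrased as a non-entailment of sentences of the form $\forall X_1,\dots,X_k\,\lnot p(X_1,\dots,X_k)$, hence invariant under equivalence) and consistency (invariant since equivalent KBs share their models). Everything else is a direct application of the monotonicity and idempotence assumptions together with the minimality of $\md_i$, mirroring the reasoning already used in part~(2.) of the proof of Proposition~\ref{prop:discax}.
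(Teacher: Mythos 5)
Your proof is correct and takes essentially the same route as the paper's: from an assumed justification you obtain $\mo_i^*\models\tax$ by monotonicity, conclude $\tax\in\md_i$ from $\tax\in I_\mD$, and then use idempotence to show that $(\mo\setminus(\md_i\setminus\setof{\tax}))\cup\mb\cup U_\Tp$ inherits all requirement and test-case properties of $\mo_i^*$, making $\md_i\setminus\setof{\tax}$ a diagnosis and contradicting the subset-minimality of $\md_i$. The only (harmless) extra is the converse direction of your justification/entailment equivalence via finiteness, which the argument never actually needs.
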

\begin{proof}
Assume that there is an axiom $\tax\in I_\mD$ and some $J\in \Just(\tax,\mo_{k}^*)$ for some $\md_k \in \mD$. Then, $\mo_{k}^* \models \tax$. Since $\tax \in I_\mD = \bigcap_{\md_i\in\mD} \md_i$ we can conclude that $\tax \in \md_k$. Since $\mo_{k}^* = (\mo\setminus\md_k)\cup\mb\cup U_\Tp$ meets all requirements $r \in \RQ$ and all negative test cases $\tn\in\Tn$, this must also hold for $(\mo\setminus(\md_k\setminus\tax))\cup\mb\cup U_\Tp$. But this is a contradiction to the subset-minimality of $\md_k \in \minD_{\langle\mo,\mb,\Tp,\Tn\rangle_\RQ}$.
\end{proof}
The latter lemma means that each minimal diagnosis among the leading diagnoses $\mD$ must hit every justification of every axiom in $I_\mD$. That is, no matter which $\dx{}(Q) \subseteq \mD$ we select for the computation of a query $Q$ (as a subset of the common entailments of KBs $\mo^{*}_i$ for $\md_i\in\dx{}(Q)$), $Q$ can never comprise an axiom of $I_\mD$. Of course, this holds in particular for explicit-entailments queries and is captured by the following proposition:
\begin{proposition}\label{prop:no_ax_in_I_D_can_be_element_of_any_query_over_mD}
Let $\langle\mo,\mb,\Tp,\Tn\rangle_\RQ$ be any DPI, $\mD \subseteq \minD_{\langle\mo,\mb,\Tp,\Tn\rangle_\RQ}$ and $Q$ be any query in $\mQ_\mD$. Then $Q \cap I_\mD = \emptyset$.
\end{proposition}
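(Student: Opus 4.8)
The statement to prove is Proposition~\ref{prop:no_ax_in_I_D_can_be_element_of_any_query_over_mD}: for any DPI, any $\mD \subseteq \minD_{\langle\mo,\mb,\Tp,\Tn\rangle_\RQ}$, and any query $Q \in \mQ_\mD$, it holds that $Q \cap I_\mD = \emptyset$.

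My plan is to reduce this statement to the preceding results, primarily Lemma~\ref{lem:no_K*i_includes_just_of_any_ax_in_I_D}, rather than re-derive the subset-minimality argument from scratch. The core observation is that if some axiom $\tax$ belonged to both $Q$ and $I_\mD$, then since $Q$ is a query we would have $\dx{}(Q) \neq \emptyset$ by Proposition~\ref{prop:properties_of_q-partitions},(\ref{prop:properties_of_q-partitions:enum:for_each_q-partition_dx_is_empty_and_dnx_is_empty}.). Picking any $\md_i \in \dx{}(Q)$, the defining property of $\dx{}(Q)$ (Definition~\ref{def:q-partition}) gives $\mo^{*}_i \models Q$, and by Remark~\ref{rem:entailments_as_sets_of_formulas} this means $\mo^{*}_i \models \tax$ for every $\tax \in Q$, in particular for our assumed $\tax \in Q \cap I_\mD$.

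Now I would invoke Lemma~\ref{lem:no_K*i_includes_just_of_any_ax_in_I_D}, which states that $\bigcup_{\md_i \in \mD} \Just(\tax, \mo^{*}_i) = \emptyset$ for all $\tax \in I_\mD$. The entailment $\mo^{*}_i \models \tax$ would require the existence of at least one justification $J \in \Just(\tax, \mo^{*}_i)$, since every entailed axiom possesses a justification within the entailing KB (a minimal subset that still entails it, guaranteed to exist because $\mo^{*}_i$ itself entails $\tax$ and is finite). This directly contradicts the lemma, which asserts that no such justification exists for any axiom in $I_\mD$ with respect to any $\mo^{*}_i$. Therefore the assumption $\tax \in Q \cap I_\mD$ is untenable, and since $\tax$ was arbitrary, $Q \cap I_\mD = \emptyset$.

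The main obstacle, and the one point requiring a little care, is justifying that $\mo^{*}_i \models \tax$ entails $\Just(\tax, \mo^{*}_i) \neq \emptyset$. This is where I would lean on the finiteness of $\mo^{*}_i$ (all of $\mo$, $\mb$, $\Tp$ are finite by Definition~\ref{def:dpi}) together with the extensivity and monotonicity of $\mathcal{L}$ (assumptions \ref{logic:cond3} and \ref{logic:cond1}): starting from $\mo^{*}_i$ and removing axioms one at a time while preserving the entailment of $\tax$ terminates in a subset-minimal entailing set, which is by Definition~\ref{def:justification} a justification. I expect the remainder of the argument to be entirely routine, essentially a one-line contradiction once the justification-existence step is in place. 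An alternative, slightly more self-contained route would bypass the lemma and reprove the subset-minimality contradiction directly: if $\tax \in \md_i$ (which follows from $\tax \in I_\mD \subseteq \md_i$) and simultaneously $\mo^{*}_i \models \tax$, then $(\mo \setminus (\md_i \setminus \setof{\tax})) \cup \mb \cup U_\Tp$ would still satisfy all $r \in \RQ$ and all $\tn \in \Tn$, contradicting the minimality of $\md_i \in \minD_{\langle\mo,\mb,\Tp,\Tn\rangle_\RQ}$; but since the lemma already packages exactly this reasoning, the cleanest proof simply cites it.
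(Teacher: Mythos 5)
Your proof is correct and takes essentially the same route as the paper's, which likewise derives the result immediately from Lemma~\ref{lem:no_K*i_includes_just_of_any_ax_in_I_D} together with the fact that $Q$ is a set of common entailments of the KBs $\mo^{*}_i$ for $\md_i \in \dx{}(Q)$ (Proposition~\ref{prop:properties_of_q-partitions}). The only difference is that you make explicit the step the paper treats as immediate---that $\mo^{*}_i \models \tax$ over a finite KB in a monotonic logic guarantees a nonempty $\Just(\tax,\mo^{*}_i)$---which is a correct and harmless addition.
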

\begin{proof}
The proposition follows immediately from Lemma~\ref{lem:no_K*i_includes_just_of_any_ax_in_I_D} and the fact that $Q \subseteq E$ where $E$ is a set of common entailments of some subset of KBs in $\{\mo_{i}^*\,|\,\md_i \in \mD\}$.
\end{proof}
The next finding is that any explicit-entailments query must comprise some axiom(s) from $U_\mD$.
\begin{proposition}\label{prop:each_expl-ents_query_must_incl_ax_in_U_D}
Let $\langle\mo,\mb,\Tp,\Tn\rangle_\RQ$ be any DPI, $\mD \subseteq \minD_{\langle\mo,\mb,\Tp,\Tn\rangle_\RQ}$ and $Q$ be any explicit-entailments query in $\mQ_\mD$. Then $Q \cap U_\mD \neq \emptyset$.
\end{proposition}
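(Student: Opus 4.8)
The plan is to give a short proof by contradiction that leverages the defining property $Q \subseteq \mo$ of an explicit-entailments query together with the fact that $\dnx{}(Q)$ must be non-empty for any query. First I would record the elementary set-theoretic observation that $\md_i \subseteq U_\mD$ for every $\md_i \in \mD$, since $U_\mD = \bigcup_{\md_j \in \mD} \md_j$; consequently $\mo \setminus U_\mD \subseteq \mo \setminus \md_i$ for all $\md_i \in \mD$.

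Next, I would assume towards a contradiction that $Q \cap U_\mD = \emptyset$. Because $Q$ is an explicit-entailments query we have $Q \subseteq \mo$, so the assumption yields $Q \subseteq \mo \setminus U_\mD$, and by the inclusion from the previous step $Q \subseteq \mo \setminus \md_i$ for every $\md_i \in \mD$. Using extensivity (\ref{logic:cond3}) and monotonicity (\ref{logic:cond1}) of $\mathcal{L}$ exactly as in the proof of Proposition~\ref{prop:d0}, case~(a), each $\mo^{*}_i = (\mo \setminus \md_i) \cup \mb \cup U_\Tp \supseteq Q$ entails $Q$, i.e.\ $\mo^{*}_i \models Q$ for all $\md_i \in \mD$. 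Hence $\dx{}(Q) = \mD$ and therefore $\dnx{}(Q) = \emptyset$. This contradicts Proposition~\ref{prop:properties_of_q-partitions},(\ref{prop:properties_of_q-partitions:enum:set_of_logical_formulas_is_query_iff...}.) (equivalently (\ref{prop:properties_of_q-partitions:enum:for_each_q-partition_dx_is_empty_and_dnx_is_empty}.)), which requires $\dnx{}(Q) \neq \emptyset$ for $Q$ to be a query. Thus $Q \cap U_\mD \neq \emptyset$.

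I expect no serious obstacle here: the statement is essentially the contrapositive reading of the ``case~(a)'' argument already established for Proposition~\ref{prop:d0}, and the only point demanding minor care is to invoke the correct logical assumptions in the right direction (extensivity to pass from membership of an axiom to its being entailed, monotonicity to lift this to all of $Q$ and to the enlarged KB $\mo^{*}_i$). As an alternative route I could instead invoke statement~(1.)\ of Proposition~\ref{prop:discax}: it supplies a witness $\tax \in Q$ whose every justification $J \in \Just(\tax,\mo\cup\mb\cup U_\Tp)$ meets $U_\mD$; since this in particular forces $\tax$ to be non-tautological (as $U_\mD \cap \emptyset = \emptyset$ would otherwise arise), $\setof{\tax}$ is itself a justification for $\tax$ by extensivity, whence $\setof{\tax} \cap U_\mD \neq \emptyset$ gives $\tax \in U_\mD \cap Q$. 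The direct contradiction argument is cleaner, so I would present that as the main proof and perhaps add a remark on the derivation via Proposition~\ref{prop:discax}.
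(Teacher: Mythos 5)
Your proof is correct and follows essentially the same route as the paper's own argument: assume $Q \cap U_\mD = \emptyset$, use $Q \subseteq \mo$ to get $Q \subseteq \mo \setminus U_\mD \subseteq \mo\setminus\md_i \subseteq \mo^{*}_i$, conclude $\mo^{*}_i \models Q$ for all $\md_i \in \mD$ by extensivity and monotonicity, and derive the contradiction $\dnx{}(Q) = \emptyset$ via Proposition~\ref{prop:properties_of_q-partitions}. Your explicit invocation of extensivity is in fact slightly more careful than the paper's wording, which cites only monotonicity.
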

\begin{proof}
Suppose that $Q$ is an explicit-entailments query in $\mQ_\mD$ and $Q \cap U_\mD = \emptyset$. Then $Q \subseteq \mo$. Further, since $\mo^{*}_i \supseteq \mo\setminus\md_i \supseteq \mo\setminus U_\mD \supseteq Q$ and by the monotonicity of $\mathcal{L}$, we can deduce that $\mo^{*}_i \models Q$ for all $\md_i \in \mD$. Therefore, $\dx{}(Q) = \mD$ and $\dnx{}(Q) = \emptyset$ which contradicts the assumption that $Q$ is a query. 
\end{proof}

Now, we can summarize these results for explicit-entailments queries as follows:
\begin{corollary}\label{cor:expl_ent_query_must_neednot_mustnot_include_ax}
Let $\langle\mo,\mb,\Tp,\Tn\rangle_\RQ$ be any DPI, $\mD \subseteq \minD_{\langle\mo,\mb,\Tp,\Tn\rangle_\RQ}$ and $Q$ be any explicit-entailments query in $\mQ_\mD$. Then $Q$
\begin{itemize}
	\item must include some axiom(s) in $U_\mD$,
	\item need not include any axioms in $\mo\setminus U_\mD$, and
	\item must not include any axioms in $I_\mD$.
\end{itemize}
Further on, elimination of axioms in $\mo\setminus U_\mD$ from $Q$ does not affect the q-partition $\Pt(Q)$ of $Q$.
\end{corollary}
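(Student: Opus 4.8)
The final statement is Corollary~\ref{cor:expl_ent_query_must_neednot_mustnot_include_ax}, which asserts three membership facts about any explicit-entailments query $Q \in \mQ_\mD$ — that $Q$ must include some axiom(s) in $U_\mD$, need not include any axioms in $\mo\setminus U_\mD$, and must not include any axioms in $I_\mD$ — together with the claim that deleting axioms in $\mo\setminus U_\mD$ from $Q$ leaves the q-partition $\Pt(Q)$ unchanged.

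The plan is to read this off directly from the three propositions that immediately precede it, since each of the three bullets corresponds cleanly to one of them. First I would establish the third bullet ("must not include any axioms in $I_\mD$"): this is exactly the assertion $Q \cap I_\mD = \emptyset$ proved in Proposition~\ref{prop:no_ax_in_I_D_can_be_element_of_any_query_over_mD}, which applies to \emph{every} query in $\mQ_\mD$ and hence a fortiori to explicit-entailments queries. Next I would handle the first bullet ("must include some axiom(s) in $U_\mD$"): this is verbatim the conclusion $Q \cap U_\mD \neq \emptyset$ of Proposition~\ref{prop:each_expl-ents_query_must_incl_ax_in_U_D}, which is stated precisely for explicit-entailments queries. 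So the only remaining work is to justify the second bullet and the concluding sentence.

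For the second bullet and the final q-partition-preservation claim, I would invoke Proposition~\ref{prop:reduct_to_discax} together with Proposition~\ref{prop:d0}. The argument runs as follows. Since $Q$ is an explicit-entailments query, $Q \subseteq \mo$, so by Proposition~\ref{prop:d0} we have $\dz{}(Q) = \emptyset$, which means $\Pt(Q)$ has the shape $\langle\dx{}(Q),\dnx{}(Q),\emptyset\rangle$ required by the hypotheses of Proposition~\ref{prop:reduct_to_discax}. That proposition then guarantees that the reduced query $Q' := \setof{\tax \,|\, \tax \in Q, \forall J \in \Just(\tax,\mo\cup\mb\cup U_\Tp): J \cap U_\mD \neq \emptyset}$ is still a query with $\Pt(Q') = \Pt(Q)$. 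It remains to observe that the axioms dropped in passing from $Q$ to $Q'$ are precisely those $\tax$ admitting some justification $J$ disjoint from $U_\mD$; since for explicit-entailments queries each $\tax \in Q$ lies in $\mo$ and is its own (singleton) justification $\setof{\tax}$ (because $\mathcal{L}$ is extensive, so $\setof{\tax} \models \tax$), the condition $J \cap U_\mD = \emptyset$ for this justification reduces to $\tax \notin U_\mD$, i.e.\ $\tax \in \mo \setminus U_\mD$. Hence the dropped axioms are exactly those in $\mo\setminus U_\mD$, establishing both that $Q$ need not contain any such axioms (they can be removed) and that their removal preserves $\Pt(Q)$.

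I anticipate the main obstacle to be the second bullet, specifically the small bookkeeping step of arguing that the axioms eliminated by the $Q \mapsto Q'$ reduction of Proposition~\ref{prop:reduct_to_discax} coincide with $\mo\setminus U_\mD$ rather than some proper subset thereof. The subtlety is that Proposition~\ref{prop:reduct_to_discax} removes $\tax$ whenever \emph{some} justification of $\tax$ w.r.t.\ $\mo\cup\mb\cup U_\Tp$ avoids $U_\mD$, and an explicit axiom $\tax \in \mo$ could in principle have complicated implicit justifications as well; I would need to note that the singleton $\setof{\tax}$ is always among the justifications (it is itself minimal and entails $\tax$ by extensivity), so the disjointness criterion is met exactly when $\tax \notin U_\mD$, independently of any other justifications $\tax$ might have. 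Everything else is a direct citation of the three preceding propositions, so the corollary is essentially a consolidation rather than a new argument.
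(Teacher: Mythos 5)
Your citations for the first and third bullets are exactly what the paper intends (it gives no separate proof; the corollary is stated as a summary of Propositions~\ref{prop:each_expl-ents_query_must_incl_ax_in_U_D} and \ref{prop:no_ax_in_I_D_can_be_element_of_any_query_over_mD}), and using Proposition~\ref{prop:d0} to license the application of Proposition~\ref{prop:reduct_to_discax} is also correct. The gap lies in the step you yourself flagged as the main obstacle, where your resolution is a non sequitur. Proposition~\ref{prop:reduct_to_discax} drops $\tax$ as soon as \emph{some} justification of $\tax$ avoids $U_\mD$. Your singleton-justification observation correctly yields one inclusion: every $\tax \in Q\cap(\mo\setminus U_\mD)$ has the justification $\setof{\tax}$ (or $\emptyset$, if $\tax$ were a tautology) disjoint from $U_\mD$, hence is dropped, so $Q'\subseteq Q\cap U_\mD$. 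But it yields nothing about the converse inclusion: for $\tax \in Q\cap U_\mD$, the fact that $\setof{\tax}$ hits $U_\mD$ does not exclude some \emph{other} justification $J$ with $J\cap U_\mD=\emptyset$, in which case $\tax$ would be dropped too and $Q'$ would be a proper subset of $Q\cap U_\mD$; then Proposition~\ref{prop:reduct_to_discax} says nothing about $\Pt(Q\cap U_\mD)$, which is what the corollary actually asserts (``elimination of axioms in $\mo\setminus U_\mD$''). Ruling out such a $J$ is a genuine, if small, theorem: if $J\cap U_\mD=\emptyset$, then $J\subseteq(\mo\setminus U_\mD)\cup\mb\cup U_\Tp\subseteq\mo_i^*$ for every $\md_i\in\mD$ with $\tax\in\md_i$ (Lemma~\ref{lem:U_D_notin_B_and_notin_U_P}), so $\mo_i^*\models\tax$ and $\md_i\setminus\setof{\tax}$ would already be a diagnosis, contradicting the subset-minimality of $\md_i$ --- this is precisely Lemma~\ref{lem:Ki^*_does_not_entail_any_ax_in_Di}, which the paper establishes only later and which your argument silently presupposes.

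Note that you can also close the gap without that lemma, using only the inclusion you did prove. From $Q'\subseteq Q\cap U_\mD\subseteq Q$ and $\Pt(Q')=\Pt(Q)$, monotonicity of $\mathcal{L}$ gives a sandwich: removing formulas can only enlarge the $\dx{}$-set and shrink the $\dnx{}$-set, so $\dx{}(Q)\subseteq\dx{}(Q\cap U_\mD)\subseteq\dx{}(Q')=\dx{}(Q)$ and $\dnx{}(Q)=\dnx{}(Q')\subseteq\dnx{}(Q\cap U_\mD)\subseteq\dnx{}(Q)$, whence $\Pt(Q\cap U_\mD)=\Pt(Q)$; moreover $Q\cap U_\mD$ is then a query by Proposition~\ref{prop:properties_of_q-partitions},(\ref{prop:properties_of_q-partitions:enum:set_of_logical_formulas_is_query_iff...}.), being nonempty with nonempty $\dx{}$- and $\dnx{}$-sets. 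Either repair (the minimality argument or the sandwich) makes your proof complete; as written, however, the claim that the dropped axioms are exactly those in $\mo\setminus U_\mD$ ``independently of any other justifications'' is unjustified.
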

Due to this result which implies that $Q' := Q \cap (U_\mD \setminus I_\mD)$ is a query equivalent to $Q$ in terms of the q-partition for any explicit-entailments query $Q$, we now give the set $U_\mD \setminus I_\mD$ the specific denotation $\DiscAx_\mD$. We term this set \emph{the discrimination axioms w.r.t. the leading diagnoses $\mD$}. These are precisely the essential axioms facilitating discrimination between the leading diagnoses.

\begin{example}\label{ex:disc_ax}
Let us consider again the set of leading diagnoses 
w.r.t.\ the running example DPI (Table~\ref{tab:example_dpi_0}) shown by Eq.~\eqref{eq:ex_explicit_ents_query:leading_diags}. Then, $U_\mD = \setof{\tax_2,\tax_3,\tax_5,\tax_6}$ and $I_\mD = \setof{\tax_2}$. Now, all $\subseteq$-minimal explicit-entailments query candidates we might build according to Corollary~\ref{cor:expl_ent_query_must_neednot_mustnot_include_ax} (which provides necessary criteria to explicit-entailments queries) are 
\[\setof{\setof{\tax_3,\tax_5,\tax_6},\setof{\tax_3,\tax_5},\setof{\tax_3,\tax_6},\setof{\tax_5,\tax_6},\setof{\tax_3},\setof{\tax_5},\setof{\tax_6}}\]
That is, all these candidates include at least one element out of $U_\mD$ and no elements out of $\mo \setminus U_\mD$ or $I_\mD$. 
Clearly, there are exactly six different q-partitions with empty $\dz{}$ w.r.t.\ three diagnoses (i.e.\ three possibilities to select one, and three possibilities to select two diagnoses to constitute the set $\dx{}$). Since all these candidates are \emph{explicit-entailments} query candidates, as per Proposition~\ref{prop:d0} the q-partition of each of those must feature an empty $\dz{}$. Consequently, by the pigeonhole principle, either at least two candidates have the same q-partition or at least one candidate is not a query at all. As we require that there must be exactly one canonical query per q-partition and require a method that computes only queries (and no candidates that turn out to be no queries), we see that Corollary~\ref{cor:expl_ent_query_must_neednot_mustnot_include_ax} does not yet refine the set of candidates enough in order to be also a sufficient criterion for canonical queries.

So let us now find out where the black sheep is among the candidates above. The key to finding it is the fact that each query is a common entailment of all $\mo_i^* := (\mo \setminus \md_i) \cup \mb \cup U_\Tp$ (cf.\ Eq.~\eqref{eq:sol_ont_candidate}) for all $\md_i$ in the $\dx{}$-set of the q-partition of it (cf.\ Proposition~\ref{prop:properties_of_q-partitions},(\ref{prop:properties_of_q-partitions:enum:query_is_set_of_common_ent}.)). 
Since the candidates for canonical queries above are all explicit-entailments queries (which can be computed without the need to use a reasoning service), they, by definition, comprise only explicit entailments $\alpha \in \mo$. So, we immediately see that we must postulate that each canonical query is a set of common elements of $\mo \setminus \md_i$ for all $\md_i$ in the $\dx{}$-set of the q-partition of it. Starting to verify this for the first candidate $\setof{\tax_3,\tax_5,\tax_6}$ above, we quickly find out that there is no possible $\dx{}$-set of a q-partition such that $Q \subseteq \bigcap_{\md_i \in \dx{}} \mo \setminus \md_i$ because none of these intersected sets includes all elements out of $\setof{\tax_3,\tax_5,\tax_6}$. Hence, it became evident that the first candidate is no query at all. 

Performing an analogue verification for the other candidates, we recognize that all of them are indeed queries and no two of them exhibit the same q-partition. Concretely, the q-partitions associated with the queries in the set above (minus the first set $\setof{\tax_3,\tax_5,\tax_6}$) are as follows: 
\begin{align}
\begin{split} \label{eq:canQ+canQP_for_diags1,2,3}
\Pt(\setof{\tax_3,\tax_5}) &= \tuple{\setof{\md_3},\setof{\md_1,\md_2},\emptyset} \\
\Pt(\setof{\tax_3,\tax_6}) &= \tuple{\setof{\md_2},\setof{\md_1,\md_3},\emptyset} \\
\Pt(\setof{\tax_5,\tax_6}) &= \tuple{\setof{\md_1},\setof{\md_2,\md_3},\emptyset} \\
\Pt(\setof{\tax_3}) &= \tuple{\setof{\md_2,\md_3},\setof{\md_1},\emptyset} \\
\Pt(\setof{\tax_5}) &= \tuple{\setof{\md_1,\md_3},\setof{\md_2},\emptyset} \\
\Pt(\setof{\tax_6}) &= \tuple{\setof{\md_1,\md_2},\setof{\md_3},\emptyset} 
\end{split}
\end{align}
which can be easily seen from the sets $\mo \setminus \md_i$:
\begin{align*}
\mo \setminus \md_1 &= \setof{\tax_1,\tax_4,\tax_5,\tax_6,\tax_7} \\
\mo \setminus \md_2 &= \setof{\tax_1,\tax_3,\tax_4,\tax_6,\tax_7} \\
\mo \setminus \md_3 &= \setof{\tax_1,\tax_3,\tax_4,\tax_5,\tax_7} 
\end{align*}
As it will turn out, these six queries are exactly all canonical queries w.r.t.\ the leading diagnoses $\mD$.\qed
\end{example}

We are now in the state to formally define a canonical query. 
To this end, let $E_{\mathsf{exp}}(\mathbf{X})$ denote all common explicit entailments of all elements of the set $\{\mo \setminus \md_i\,|\,\md_i \in \mathbf{X}\}$ for some set $\mathbf{X}$ of minimal diagnoses w.r.t.\ $\tuple{\mo,\mb,\Tp,\Tn}_\RQ$. Then:
\begin{proposition}\label{prop:E_exp} For $E_{\mathsf{exp}}(\dx{}(Q))$, the following statements are true:
\begin{enumerate}
	\item $E_{\mathsf{exp}}(\dx{}(Q)) = \mo \setminus U_{\dx{}(Q)}$.
	\item $E_{\mathsf{exp}}(\dx{}(Q)) \cap I_\mD = \emptyset$.
\end{enumerate}
\end{proposition}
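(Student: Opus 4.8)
The plan is to reduce both statements to elementary set theory by unwinding the definition of an explicit entailment. Recall that an entailment $\alpha$ of a set of axioms $X$ is explicit iff $\alpha \in X$ (page~\pageref{etc:def_explicit_implicit_entailment}). Hence a \emph{common} explicit entailment of all elements of $\{\mo \setminus \md_i \,|\, \md_i \in \dx{}(Q)\}$ is nothing but an axiom contained in every one of these sets, so that
\begin{align*}
E_{\mathsf{exp}}(\dx{}(Q)) = \bigcap_{\md_i \in \dx{}(Q)} (\mo \setminus \md_i).
\end{align*}
This identification is the conceptual crux; once it is in place, both claims follow immediately.

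For statement (1), I would apply De Morgan's law, taking care that all set-differences are relative to $\mo$: since $\md_i \subseteq \mo$ for every $\md_i \in \mD$ (Definition~\ref{def:diagnosis}), an axiom lies in $\bigcap_{\md_i \in \dx{}(Q)} (\mo \setminus \md_i)$ iff it lies in $\mo$ and in no $\md_i$ with $\md_i \in \dx{}(Q)$, i.e.\ iff it lies in $\mo \setminus \bigcup_{\md_i \in \dx{}(Q)} \md_i = \mo \setminus U_{\dx{}(Q)}$. This yields $E_{\mathsf{exp}}(\dx{}(Q)) = \mo \setminus U_{\dx{}(Q)}$ directly. The degenerate case $\dx{}(Q) = \emptyset$ does not arise, since $\dx{}(Q) \neq \emptyset$ for any query by Proposition~\ref{prop:properties_of_q-partitions},(\ref{prop:properties_of_q-partitions:enum:for_each_q-partition_dx_is_empty_and_dnx_is_empty}.).

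For statement (2), I would use the relation $I_\mD = \bigcap_{\md_i \in \mD}\md_i$ together with $\dx{}(Q) \subseteq \mD$ and $\dx{}(Q) \neq \emptyset$. Fixing any $\md_j \in \dx{}(Q)$, every $\tax \in I_\mD$ satisfies $\tax \in \md_j \subseteq U_{\dx{}(Q)}$, so $I_\mD \subseteq U_{\dx{}(Q)}$. Combined with statement (1), this gives $E_{\mathsf{exp}}(\dx{}(Q)) \cap I_\mD = (\mo \setminus U_{\dx{}(Q)}) \cap I_\mD = \emptyset$. (Alternatively, (2) can be read off from Lemma~\ref{lem:no_K*i_includes_just_of_any_ax_in_I_D} respectively Proposition~\ref{prop:no_ax_in_I_D_can_be_element_of_any_query_over_mD}, but the direct containment argument is shorter.) I expect no genuine obstacle here: the only points demanding attention are confirming that explicit entailment collapses to set membership and that the De Morgan step is legitimate because complementation is performed within $\mo$; the non-emptiness of $\dx{}(Q)$ is precisely what makes the inclusion $I_\mD \subseteq U_{\dx{}(Q)}$ hold.
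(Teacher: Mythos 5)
Your argument for statement (2) and for the $\supseteq$ half of statement (1) matches the paper's: the paper likewise derives (2) from (1) together with $I_\mD \subseteq U_{\dx{}(Q)}$, and obtains $\supseteq$ from extensiveness of the entailment relation. The gap is in the $\subseteq$ half of (1), which you dissolve by declaring that a common explicit entailment of $\setof{\mo\setminus\md_i\,|\,\md_i\in\dx{}(Q)}$ is just an axiom contained in every one of these sets, so that $E_{\mathsf{exp}}(\dx{}(Q))$ is the plain intersection and (1) is De Morgan. The paper does not read the definition this way: in its proof, $E_{\mathsf{exp}}(\dx{}(Q))$ consists of axioms that are \emph{logically entailed} by every $\mo\setminus\md_i$, and the work of the $\subseteq$ direction is to exclude the possibility that some $\tax \in U_{\dx{}(Q)}$, i.e.\ some $\tax$ belonging to a diagnosis $\md_i\in\dx{}(Q)$, is entailed by $\mo\setminus\md_i$ without being a member of it. This exclusion is not free: the paper obtains it from Lemma~\ref{lem:Ki^*_does_not_entail_any_ax_in_Di} (which states $\mo_i^*\not\models\tax$ for all $\tax\in\md_i$ and rests on the subset-minimality of $\md_i$ via idempotence) combined with monotonicity of $\mathcal{L}$.

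The diagnostic that something is missing is that your proof nowhere uses the minimality of the diagnoses, yet under the paper's reading identity (1) is false without it: take a DPI over $\mo=\setof{A, A\land B}$ with $\RQ=\setof{\text{consistency}}$, $\mb=\emptyset$ and empty test case sets; then $\md=\setof{A}$ is a (non-minimal) diagnosis, $\mo\setminus\md = \setof{A\land B}$ entails the axiom $A$, so $A$ would belong to the common explicit entailments of $\setof{\mo\setminus\md}$ while $\mo\setminus U_{\setof{\md}} = \setof{A\land B}$ does not contain $A$. So on the reading your proof operates under, the proposition is a vacuous set identity and the paper's lemma would be pointless; on the reading the paper's proof manifestly targets, your ``conceptual crux'' is exactly the claim to be proven, not a consequence of the definition. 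To repair the proof, keep your De Morgan computation for $\supseteq$ and your argument for statement (2), and establish $\subseteq$ by showing that no axiom of $U_{\dx{}(Q)}$ can be a common entailment of all the KBs $\mo\setminus\md_i$, which follows from Lemma~\ref{lem:Ki^*_does_not_entail_any_ax_in_Di} and monotonicity.
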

\begin{proof}
Ad 1.: "$\subseteq$":~Clearly, each common explicit entailment of $\{\mo \setminus \md_i\,|\,\md_i \in \dx{}(Q)\}$ must be in $\mo$. Furthermore, Lemma~\ref{lem:Ki^*_does_not_entail_any_ax_in_Di} (see below) applied to $\mo_i^*$ for $\md_i\in\dx{}(Q)$ yields $\mo_i^* \not\models \tax$ for each $\tax \in \md_i$. By monotonicity of $\mathcal{L}$, this is also valid for $\mo \setminus \md_i$. Since $U_{\dx{}(Q)}$ is exactly the set of axioms that occur in at least one diagnosis in $\dx{}(Q)$, no axiom in this set can be a \emph{common} (explicit) entailment of all KBs in $\setof{\mo\setminus\md_i \,|\,\md_i\in\dx{}(Q)}$. Consequently, each common explicit entailment of $\{\mo \setminus \md_i\,|\,\md_i \in \dx{}(Q)\}$ must be in $\mo \setminus U_{\dx{}(Q)}$.

"$\supseteq$":~~All axioms in $\mo \setminus U_{\dx{}(Q)}$ occur in each $\mo\setminus \md_i$ for $\md_i\in\dx{}(Q)$ and are therefore entailed by each $\mo\setminus\md_i$ due to that fact that the entailment relation in $\mathcal{L}$ is extensive.

Ad 2.: Since clearly $I_\mD \subseteq U_\mD$ and, by (1.), $E_{\mathsf{exp}}(\dx{}(Q)) = \mo \setminus U_{\dx{}(Q)}$, the proposition follows by the fact that $I_\mD \subseteq U_{\dx{}(Q)}$. The latter holds because all axioms that occur in all diagnoses in $\mD$ (i.e.\ the axioms in $I_\mD$) must also occur in all diagnoses in $\dx{}(Q)$ (i.e.\ in $U_{\dx{}(Q)}$) since $\dx{}(Q) \subseteq \mD$.
\end{proof}
\begin{lemma}\label{lem:Ki^*_does_not_entail_any_ax_in_Di}
Let $\langle\mo,\mb,\Tp,\Tn\rangle_\RQ$ be a DPI and $\mD \subseteq \minD_{\langle\mo,\mb,\Tp,\Tn\rangle_\RQ}$. Then, for each $\md_i\in\mD$ and for each $\tax\in\md_i$ it holds that $\mo_i^{*} \not\models \tax$.
\end{lemma}
\begin{proof}
Let us assume that $\mo_i^{*} \models \tax$ for some $\tax \in \md_i$ for some $\md_i \in \mD$. Then, by the fact that the entailment relation in $\mathcal{L}$ is idempotent, $\mo_i^{*} \cup \tax \equiv \mo_i^{*}$. But $\mo_i^{*} \cup \tax \equiv (\mo \setminus (\md_i \setminus \setof{\tax})) \cup \mb \cup U_\Tp$. Since $\mo_i^{*}$ satisfies all negative test cases $\tn\in\Tn$ as well as all requirements $r\in\RQ$, $(\mo \setminus (\md_i \setminus \setof{\tax})) \cup \mb \cup U_\Tp$ must do so as well, wherefore $\md_i \setminus \setof{\tax}$ is a diagnosis w.r.t.\ $\langle\mo,\mb,\Tp,\Tn\rangle_\RQ$. This however is a contradiction to the assumption that $\md_i \in \minD_{\langle\mo,\mb,\Tp,\Tn\rangle_\RQ}$, i.e.\ that $\md_i$ is a minimal diagnosis w.r.t.\ $\langle\mo,\mb,\Tp,\Tn\rangle_\RQ$.
\end{proof}
Roughly, a canonical query is an explicit-entailments query that has been reduced as per Corollary~\ref{cor:expl_ent_query_must_neednot_mustnot_include_ax}.
\begin{definition}[Canonical Query]\label{def:canonical_query}
Let $\langle\mo,\mb,\Tp,\Tn\rangle_\RQ$ be a DPI, $\mD \subseteq \minD_{\langle\mo,\mb,\Tp,\Tn\rangle_\RQ}$ with $|\mD| \geq 2$. Let further $\emptyset\subset\mathbf{S}\subset\mD$ be a seed set of minimal diagnoses. Then we call $Q_{\mathsf{can}}(\mathbf{S}) := E_{\mathsf{exp}}(\mathbf{S}) \cap \DiscAx_\mD$ \emph{the canonical query w.r.t.\ $\mathbf{S}$} if $Q_{\mathsf{can}}(\mathbf{S}) \neq \emptyset$. Otherwise, $Q_{\mathsf{can}}(\mathbf{S})$ is undefined.
\end{definition}
It is trivial to see that:
\begin{proposition}\label{prop:canonical_query_unique_for_seed}
If existent, the canonical query w.r.t.\ $\mathbf{S}$ is unique. 
\end{proposition}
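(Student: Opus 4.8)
The plan is to observe that the definition of $Q_{\mathsf{can}}(\mathbf{S})$ involves no free choice whatsoever: it is obtained by a chain of deterministic set operations applied to fixed inputs. First I would fix the DPI $\langle\mo,\mb,\Tp,\Tn\rangle_\RQ$, the set of leading diagnoses $\mD$, and the seed set $\emptyset\subset\mathbf{S}\subset\mD$. Since $Q_{\mathsf{can}}(\mathbf{S}) := E_{\mathsf{exp}}(\mathbf{S}) \cap \DiscAx_\mD$, it suffices to show that each of the two ingredients, $E_{\mathsf{exp}}(\mathbf{S})$ and $\DiscAx_\mD$, denotes exactly one set, and then that their intersection is thereby pinned down as well.

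For $\DiscAx_\mD = U_\mD \setminus I_\mD$ uniqueness is immediate, because $U_\mD = \bigcup_{\md\in\mD}\md$ and $I_\mD = \bigcap_{\md\in\mD}\md$ are both uniquely determined by the fixed set $\mD$. For $E_{\mathsf{exp}}(\mathbf{S})$ I would invoke Proposition~\ref{prop:E_exp},(1.), which identifies it with the single set $\mo \setminus U_{\mathbf{S}}$; equivalently, unfolding the definition of a common explicit entailment (an entailment $\alpha$ of $X$ being explicit iff $\alpha \in X$), a common explicit entailment of the family $\{\mo\setminus\md_i \mid \md_i \in \mathbf{S}\}$ is precisely a formula lying in $\bigcap_{\md_i \in \mathbf{S}}(\mo\setminus\md_i)$, which is one fixed set. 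Hence $E_{\mathsf{exp}}(\mathbf{S})$ is uniquely determined.

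Since the intersection of two uniquely determined sets is again uniquely determined, $Q_{\mathsf{can}}(\mathbf{S}) = E_{\mathsf{exp}}(\mathbf{S}) \cap \DiscAx_\mD$ denotes a single set. Provided this set is non-empty -- which is exactly the existence proviso of Definition~\ref{def:canonical_query} -- it is by definition the canonical query w.r.t.\ $\mathbf{S}$, and there is no second set satisfying the definition. This establishes uniqueness. There is essentially no hard part here; the only point to be careful about is the reading of the phrase \emph{if existent}: it refers to the non-emptiness condition on $Q_{\mathsf{can}}(\mathbf{S})$, not to any ambiguity in the set operations themselves. It is worth flagging that this uniqueness is precisely what distinguishes canonical queries from arbitrary queries (for which, as noted around Corollary~\ref{cor:upper_lower_bound_for_canonical_q-partitions}, exponentially many may share one q-partition), and is what makes them a well-defined target for the search in \textsc{findQPartition}.
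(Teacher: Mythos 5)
Your proof is correct and matches the paper's (implicit) reasoning: the paper states this proposition without proof, deeming it trivial for exactly the reason you spell out, namely that $Q_{\mathsf{can}}(\mathbf{S}) = E_{\mathsf{exp}}(\mathbf{S}) \cap \DiscAx_\mD$ is a fixed composition of set operations on uniquely determined inputs, with the existence proviso referring only to non-emptiness. Nothing further is needed.
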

We now show that a canonical query is a query (as per Definition~\ref{def:query}).
\begin{proposition}\label{prop:canonical_query_is_query}
If $Q$ is a canonical query, then $Q$ is a query.
\end{proposition}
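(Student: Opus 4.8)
The plan is to reduce the goal to the handy query-characterization of Proposition~\ref{prop:properties_of_q-partitions},(\ref{prop:properties_of_q-partitions:enum:set_of_logical_formulas_is_query_iff...}.), which states that a non-empty set of formulas $Q$ is a query w.r.t.\ $\mD$ if and only if $\dx{}(Q) \neq \emptyset$ and $\dnx{}(Q) \neq \emptyset$. Since $Q = Q_{\mathsf{can}}(\mathbf{S})$ is assumed to be a (defined) canonical query, it is non-empty by Definition~\ref{def:canonical_query}, so the whole task boils down to establishing these two non-emptiness conditions for its q-partition sets.

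First I would record the structural facts that turn the membership tests into pure set comparisons. By Proposition~\ref{prop:E_exp},(1.) we have $E_{\mathsf{exp}}(\mathbf{S}) = \mo \setminus U_{\mathbf{S}}$, hence $Q = E_{\mathsf{exp}}(\mathbf{S}) \cap \DiscAx_\mD \subseteq \mo$ is an explicit-entailments query. Consequently the case distinction used in the proof of Proposition~\ref{prop:d0} applies: for every $\md_i \in \mD$ we get $\md_i \in \dx{}(Q)$ exactly when $\mo \setminus \md_i \supseteq Q$, i.e.\ when $\md_i \cap Q = \emptyset$, and $\md_i \in \dnx{}(Q)$ exactly when $\md_i \cap Q \neq \emptyset$.

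With this characterization the two non-emptiness claims follow by short set manipulations. For $\dx{}(Q) \neq \emptyset$ I would pick any $\md \in \mathbf{S}$ (which exists since $\emptyset \subset \mathbf{S}$); because $\md \subseteq U_{\mathbf{S}}$ and $Q \subseteq \mo \setminus U_{\mathbf{S}}$, we have $\md \cap Q = \emptyset$, so $\md \in \dx{}(Q)$ (in fact $\mathbf{S} \subseteq \dx{}(Q)$). For $\dnx{}(Q) \neq \emptyset$ I would use that $Q \neq \emptyset$ and fix some $\tax \in Q$; since $Q \subseteq \DiscAx_\mD = U_\mD \setminus I_\mD \subseteq U_\mD$, the axiom $\tax$ lies in at least one diagnosis $\md_j \in \mD$, whence $\tax \in \md_j \cap Q$ and so $\md_j \in \dnx{}(Q)$. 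Combining both yields $\dx{}(Q), \dnx{}(Q) \neq \emptyset$, and the cited item of Proposition~\ref{prop:properties_of_q-partitions} delivers that $Q$ is a query.

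The only point requiring genuine care — and thus the main obstacle — is the bootstrapping issue in the second paragraph: I want to use the set-comparison description of q-partition membership before $Q$ is known to be a query, whereas Proposition~\ref{prop:d0} is phrased for queries. I would resolve this by pointing out explicitly that the two cases in the proof of Proposition~\ref{prop:d0} establish $\md_i \in \dx{}(Q) \cup \dnx{}(Q)$ for every $\md_i \in \mD$ using only $Q \subseteq \mo$ together with extensiveness and monotonicity of $\mathcal{L}$ and the subset-minimality of the diagnoses, so no circularity arises. Everything else is routine.
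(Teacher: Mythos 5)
Your proof is correct, and its skeleton coincides with the paper's: both reduce the claim via Proposition~\ref{prop:properties_of_q-partitions},(\ref{prop:properties_of_q-partitions:enum:set_of_logical_formulas_is_query_iff...}.)\ to showing $\dx{}(Q)\neq\emptyset$ and $\dnx{}(Q)\neq\emptyset$, and your argument that $\mathbf{S}\subseteq\dx{}(Q)$ (via $Q\subseteq E_{\mathsf{exp}}(\mathbf{S})=\mo\setminus U_{\mathbf{S}}$, extensiveness and monotonicity) is exactly the paper's first step. Where you genuinely diverge is the $\dnx{}$-half. The paper proves the auxiliary implication that $U_{\mathbf{S}}\subset U_\mD$ forces $\dnx{}(Q)\neq\emptyset$ (constructing an axiom $\tax\in U_\mD\setminus U_{\mathbf{S}}$, showing $\tax\in\DiscAx_\mD$ and hence $\tax\in Q$), and then closes by contraposition: $\dnx{}(Q)=\emptyset$ would give $U_{\mathbf{S}}=U_\mD$ and thus $Q\subseteq(\mo\setminus U_\mD)\cap U_\mD=\emptyset$, contradicting the definedness of the canonical query. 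You instead argue directly: since $Q\neq\emptyset$ and $Q\subseteq\DiscAx_\mD\subseteq U_\mD$, any $\tax\in Q$ lies in some $\md_j\in\mD$, and subset-minimality of $\md_j$ places $\md_j$ in $\dnx{}(Q)$. Your route is leaner --- it avoids the intermediate facts $I_\mD\subseteq U_{\mathbf{S}}$ and the separate treatment of $U_{\mathbf{S}}=U_\mD$ --- whereas the paper's detour makes explicit the structural criterion $U_{\dx{}}\subset U_\mD$, which is not wasted effort there since precisely this condition resurfaces as condition~(1.)\ of Proposition~\ref{prop:suff+nec_criteria_when_partition_is_q-partition}. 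Your resolution of the circularity concern is also sound: the case analysis in the proof of Proposition~\ref{prop:d0} uses only $Q\subseteq\mo$, extensiveness, monotonicity and subset-minimality of the diagnoses, so it applies to a set not yet known to be a query; note only that your ``exactly when'' phrasing claims slightly more than this yields (the converse directions would additionally require disjointness of $\dx{}(Q)$ and $\dnx{}(Q)$, which rests on idempotence), but since your proof uses only the two forward implications, nothing is lost.
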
 
\begin{proof}
Let $Q$ be a canonical query w.r.t.\ some seed $\mathbf{S}$ satisfying $\emptyset \subset \mathbf{S} \subset \mD$. We demonstrate that $\dx{}(Q) \neq \emptyset$ as well as $\dnx{}(Q) \neq \emptyset$ which is sufficient to show the proposition due to Proposition~\ref{prop:properties_of_q-partitions},(\ref{prop:properties_of_q-partitions:enum:set_of_logical_formulas_is_query_iff...}.). 

First, since for all $\md_i \in \mathbf{S}$ it holds that $\md_i \subseteq U_{\mathbf{S}}$, we have that $\mo \setminus \md_i \supseteq \mo \setminus U_{\mathbf{S}}$. Due to the fact that the entailment relation in the used logic $\mathcal{L}$ is extensive, we can derive that $\mo \setminus \md_i \models \mo \setminus U_{\mathbf{S}}$. Hence, by the monotonicity of $\mathcal{L}$, also $\mo_i^* := (\mo \setminus \md_i) \cup \mb \cup U_\Tp \models \mo \setminus U_{\mathbf{S}}$ for all $\md_i \in \mathbf{S}$. But, by Proposition~\ref{prop:E_exp}, $\mo \setminus U_{\mathbf{S}}$ is equal to $E_{\mathsf{exp}}(\mathbf{S})$. As $Q \subseteq E_{\mathsf{exp}}(\mathbf{S})$ by Definition~\ref{def:canonical_query}, it becomes evident that $\mo_i^* \models Q$ for all $\md_i \in \mathbf{S}$. Therefore $\dx{}(Q) \supseteq \mathbf{S} \supset \emptyset$ by the definition of $\dx{}(Q)$ (Definition~\ref{def:q-partition}). 

Second, (*): If $U_{\mathbf{S}} \subset U_\mD$, then $\dnx{}(Q) \neq \emptyset$. 
To see why (*) holds, we point out that the former implies the existence of a diagnosis $\md_i \in \mD$ which is not in $\mathbf{S}$. 
Also, there must be some sentence $\tax \in \md_i$ where $\tax \notin U_{\mathbf{S}}$. 
This implies that $\tax \in U_\mD$ because it is in some diagnosis in $\mD$ and that $\tax \notin I_\mD$ because clearly $I_\mD \subseteq U_{\mathbf{S}}$ (due to $\mathbf{S} \supset \emptyset$). 
Thus, $\tax \in (U_{\mD} \setminus I_{\mD}) =: \DiscAx_\mD$. 
Since $Q = (\mo \setminus U_{\mathbf{S}}) \cap \DiscAx_\mD$ by Definition~\ref{def:canonical_query} and Proposition~\ref{prop:E_exp}, we find that $\tax \in Q$ must hold. 
So, by $Q \subseteq \mo$, we have $\mo_i^* \cup Q := (\mo \setminus \md_i) \cup \mb \cup U_\Tp \cup Q = (\mo \setminus \md'_i) \cup \mb \cup U_\Tp$ for some $\md'_i \subseteq \md_i \setminus \setof{\tax} \subset \md_i$. Now, the $\subseteq$-minimality of all diagnoses in $\mD$, in particular $\md_i$, lets us derive that $\mo_i^* \cup Q$ must violate some $x \in \Tn \cup \RQ$. By the definition of $\dnx{}(Q)$ (Definition~\ref{def:q-partition}), this means that $\md_i \in \dnx{}(Q)$. This proves (*).

Finally, let us assume that $\dnx{}(Q) = \emptyset$. By application of the law of contraposition to (*), this implies that $U_{\mathbf{S}} = U_\mD$. This in turn yields by Proposition~\ref{prop:E_exp} that $E_{\mathsf{exp}}(\mathbf{S}) = \mo \setminus U_\mD$. Hence, by Definition~\ref{def:canonical_query}, $Q = (\mo \setminus U_\mD) \cap (U_\mD \setminus I_\mD) \subseteq (\mo \setminus U_\mD) \cap (U_\mD) = \emptyset$, i.e.\ $Q = \emptyset$. From this we get by Definition~\ref{def:canonical_query} that $Q$ is undefined and hence no canonical query. Thence, $\dnx{}(Q) \neq \emptyset$ must hold.
 
This completes the proof that $Q$ is a query.
\end{proof}
A canonical q-partition is a q-partition for which there is a canonical query with exactly this q-partition:
\begin{definition}[Canonical Q-Partition]\label{def:canonical_q-partition}
Let $\langle\mo,\mb,\Tp,\Tn\rangle_\RQ$ be a DPI, $\mD \subseteq \minD_{\langle\mo,\mb,\Tp,\Tn\rangle_\RQ}$ with $|\mD| \geq 2$. Let further $\Pt' = \langle\dx{},\dnx{}, \emptyset\rangle$ be a partition of $\mD$. Then we call $\Pt'$ \emph{a canonical q-partition} iff $\Pt' = \Pt(Q_{\mathsf{can}}(\dx{}))$, i.e.\ $\langle\dx{},\dnx{}, \emptyset\rangle = \langle\dx{}(Q_{\mathsf{can}}(\dx{})),\dnx{}(Q_{\mathsf{can}}(\dx{})), \emptyset\rangle$. In other words, given the partition $\langle\dx{},\dnx{}, \emptyset\rangle$, the canonical query w.r.t.\ the seed $\dx{}$ must have exactly the q-partition $\langle\dx{},\dnx{}, \emptyset\rangle$.
\end{definition}
As a consequence of this definition and Proposition~\ref{prop:canonical_query_is_query}, we obtain that each canonical q-partition is a q-partition (as per Definition~\ref{def:q-partition}):
\begin{corollary}\label{cor:canonical_q-partition_is_q-partition}
Each canonical q-partition is a q-partition.
\end{corollary}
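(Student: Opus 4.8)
The plan is to derive the corollary in one short chain from the two immediately preceding results, rather than re-examining the internal structure of $\dx{}$, $\dnx{}$ and the canonical query. First I would fix an arbitrary canonical q-partition $\Pt' = \langle \dx{}, \dnx{}, \emptyset\rangle$ w.r.t.\ $\mD$ and $\langle\mo,\mb,\Tp,\Tn\rangle_\RQ$. By Definition~\ref{def:canonical_q-partition}, being a canonical q-partition means precisely that $\Pt' = \Pt(Q_{\mathsf{can}}(\dx{}))$, i.e.\ $\Pt'$ equals the q-partition induced by the canonical query with seed $\dx{}$.

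The key observation I would make next is that the right-hand side of this identity is well-formed only if $Q_{\mathsf{can}}(\dx{})$ is \emph{defined}. By Definition~\ref{def:canonical_query}, $Q_{\mathsf{can}}(\dx{})$ is defined (and is then the canonical query w.r.t.\ the seed $\dx{}$) exactly when $E_{\mathsf{exp}}(\dx{}) \cap \DiscAx_\mD \neq \emptyset$. Since by assumption $\Pt'$ is a canonical q-partition, the expression $\Pt(Q_{\mathsf{can}}(\dx{}))$ must refer to an existing object, so $Q_{\mathsf{can}}(\dx{})$ is indeed a genuine canonical query.

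Given that $Q_{\mathsf{can}}(\dx{})$ is a canonical query, Proposition~\ref{prop:canonical_query_is_query} immediately yields that it is a query w.r.t.\ $\mD$ and the DPI. Finally, by Definition~\ref{def:q-partition}, for any query $Q$ the triple $\Pt(Q) = \langle \dx{}(Q), \dnx{}(Q), \dz{}(Q)\rangle$ is a q-partition. Applying this to $Q := Q_{\mathsf{can}}(\dx{})$ shows that $\Pt(Q_{\mathsf{can}}(\dx{}))$ is a q-partition, and since $\Pt' = \Pt(Q_{\mathsf{can}}(\dx{}))$, the canonical q-partition $\Pt'$ is a q-partition.

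I would stress that there is essentially no technical obstacle here: the corollary is a pure composition of Definition~\ref{def:canonical_q-partition}, Definition~\ref{def:canonical_query}, Proposition~\ref{prop:canonical_query_is_query} and Definition~\ref{def:q-partition}. The only point demanding a moment's care -- and hence the closest thing to an ``obstacle'' -- is the existence check in the middle step: one must confirm that, for a canonical q-partition, the canonical query appearing in Definition~\ref{def:canonical_q-partition} is genuinely defined (non-empty), so that Proposition~\ref{prop:canonical_query_is_query} is applicable. All the heavy lifting concerning $\dz{} = \emptyset$ and the non-emptiness of $\dx{}(Q)$ and $\dnx{}(Q)$ has already been carried out in the proof of Proposition~\ref{prop:canonical_query_is_query} (together with Proposition~\ref{prop:d0}), so no further set-theoretic or logical argument is needed.
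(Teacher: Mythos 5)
Your proof is correct and follows essentially the same route as the paper's: invoke Definition~\ref{def:canonical_q-partition} to identify the canonical q-partition with $\Pt(Q_{\mathsf{can}}(\dx{}))$, apply Proposition~\ref{prop:canonical_query_is_query} to conclude the canonical query is a query, and then use Definition~\ref{def:q-partition}. The extra remark on the well-definedness of $Q_{\mathsf{can}}(\dx{})$ is a reasonable (if implicit in the paper) point of care, but does not change the argument.
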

\begin{proof}
By Definition~\ref{def:canonical_q-partition}, for each canonical q-partition $\Pt'$ there is a canonical query $Q$ such that $\Pt(Q) = \Pt'$. Since however $Q$ is a query as per Proposition~\ref{prop:canonical_query_is_query}, $\Pt'$ is a q-partition by Definition~\ref{def:q-partition}.
\end{proof}
It is very easy to verify that there is a one-to-one relationship between canonical q-partitions and canonical queries:
\begin{proposition}\label{prop:canonical_query_unique_for_seed}
Given a canonical q-partition, there is exactly one canonical query associated with it and vice versa.
\end{proposition}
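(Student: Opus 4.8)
The plan is to exhibit the correspondence explicitly as a pair of mutually inverse maps and to verify that each is well defined. Given a canonical q-partition $\Pt' = \langle \dx{}, \dnx{}, \emptyset\rangle$, I would associate with it the query $Q_{\mathsf{can}}(\dx{})$; conversely, given a canonical query $Q$, I would associate with it the partition $\Pt(Q)$. I would first record that both assignments produce objects of the correct type: $Q_{\mathsf{can}}(\dx{})$ is defined and is a canonical query directly by Definition~\ref{def:canonical_q-partition} (which presupposes that $Q_{\mathsf{can}}(\dx{})$ is a query, and hence, by Proposition~\ref{prop:canonical_query_is_query}, that it is nonempty), while $\Pt(Q)$ is a q-partition with empty $\dz{}$-set by Propositions~\ref{prop:canonical_query_is_query} and~\ref{prop:d0}.

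The crux, and the step I expect to carry the real work, is the identity $Q = Q_{\mathsf{can}}(\dx{}(Q))$ for every canonical query $Q$. Writing $Q = Q_{\mathsf{can}}(\mathbf{S}) = (\mo \setminus U_{\mathbf{S}}) \cap \DiscAx_\mD$ via Proposition~\ref{prop:E_exp}, and recalling from the proof of Proposition~\ref{prop:canonical_query_is_query} that $\mathbf{S} \subseteq \dx{}(Q)$, the inclusion $Q_{\mathsf{can}}(\dx{}(Q)) \subseteq Q$ follows from $U_{\mathbf{S}} \subseteq U_{\dx{}(Q)}$. For the reverse inclusion I would take $\tax \in Q$ and use the explicit-entailments characterization (Proposition~\ref{prop:d0} together with the set-comparison criterion stated just after it): every $\md_i \in \dx{}(Q)$ satisfies $\mo \setminus \md_i \supseteq Q$, so $\tax \notin \md_i$ for all such $\md_i$, i.e.\ $\tax \notin U_{\dx{}(Q)}$; since moreover $\tax \in \DiscAx_\mD$, we get $\tax \in (\mo \setminus U_{\dx{}(Q)}) \cap \DiscAx_\mD = Q_{\mathsf{can}}(\dx{}(Q))$. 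This yields $Q = Q_{\mathsf{can}}(\dx{}(Q))$.

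With this identity in hand the two maps are seen to be inverse to one another. Starting from a canonical q-partition $\Pt'$, Definition~\ref{def:canonical_q-partition} gives $\Pt(Q_{\mathsf{can}}(\dx{})) = \Pt'$, so applying $\Pt(\cdot)$ to the associated canonical query returns $\Pt'$. Starting from a canonical query $Q$, the identity of the previous paragraph shows that forming $\Pt(Q)$ and then taking $Q_{\mathsf{can}}(\dx{}(Q))$ returns $Q$. Uniqueness in each direction is then immediate: the seed $\dx{}$ is determined by the q-partition, and the canonical query for a fixed seed is unique by the earlier uniqueness proposition, so each canonical q-partition determines exactly one canonical query; and since $\Pt(\cdot)$ is a genuine function, each canonical query determines exactly one canonical q-partition. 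The only point requiring slight care is that a single canonical query may arise from several seeds $\mathbf{S}$; the identity $Q = Q_{\mathsf{can}}(\dx{}(Q))$ resolves this by singling out the canonical representative seed $\dx{}(Q)$, so that no ambiguity remains in the correspondence.
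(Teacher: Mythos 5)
Your proof is correct, and it supplies something the paper itself does not: the paper states this proposition without any proof, introducing it only with the remark that the one-to-one relationship is ``very easy to verify'', so there is no argument of the paper's to compare yours against. The real content of your write-up -- and the reason the claim is not quite as immediate as the paper suggests -- is the identity $Q = Q_{\mathsf{can}}(\dx{}(Q))$ for every canonical query $Q$. That identity is needed at exactly the two places where the statement could otherwise fail: (i)~a canonical query is by definition $Q_{\mathsf{can}}(\mathbf{S})$ for \emph{some} seed $\mathbf{S}$, which need not coincide with $\dx{}(Q)$, so without the identity two distinct seeds could conceivably yield two distinct canonical queries sharing one q-partition; and (ii)~the identity is what makes $\Pt(Q)$ a \emph{canonical} q-partition, via $\Pt(Q)=\Pt(Q_{\mathsf{can}}(\dx{}(Q)))$ and Definition~\ref{def:canonical_q-partition}, whereas your opening paragraph only certifies the weaker fact that $\Pt(Q)$ is a q-partition with empty $\dz{}$-set -- a property the paper is at pains to distinguish from canonicity in Section~\ref{sec:SearchCompletenessUsingOnlyCanonicalQPartitions}. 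Your derivation of the identity itself is sound: the inclusion $Q_{\mathsf{can}}(\dx{}(Q)) \subseteq Q$ follows from $\mathbf{S}\subseteq\dx{}(Q)$, and the reverse inclusion from the set-comparison criterion ($\md_i\in\dx{}(Q)$ forces $\mo\setminus\md_i\supseteq Q$, hence $\tax\notin U_{\dx{}(Q)}$ for every $\tax\in Q$) together with $Q\subseteq\DiscAx_\mD$ and Proposition~\ref{prop:E_exp}. One small point worth making explicit: the reverse inclusion also settles that $Q_{\mathsf{can}}(\dx{}(Q))$ is \emph{defined}, since it contains $Q\neq\emptyset$, and $\dx{}(Q)$ is a legitimate seed because $\emptyset\subset\dx{}(Q)\subset\mD$ follows from Propositions~\ref{prop:properties_of_q-partitions} and~\ref{prop:d0}.
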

So, the idea is to use for a q-partition $\langle\dx{}(Q),\dnx{}(Q), \emptyset\rangle$ the canonical query $E_{\mathsf{exp}}(\dx{}(Q)) \cap \DiscAx_\mD$ as a standardized representation of an associated query during the search for an optimal q-partition. Stated differently, the search strategy proposed in this work will consider solely canonical q-partitions and will use a successor function based on canonical queries that computes in each node expansion step in the search tree (all and only) canonical q-partitions that result from a given canonical q-partition by minimal changes.

Moreover, it is easy to see that the following must be valid:
\begin{proposition}\label{prop:canonical_q-partition_has_empty_dz}
Any canonical q-partition $\langle\dx{},\dnx{}, \dz{}\rangle$ satisfies $\dz{} = \emptyset$.
\end{proposition}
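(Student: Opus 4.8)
The statement to prove is Proposition~\ref{prop:canonical_q-partition_has_empty_dz}: any canonical q-partition $\langle\dx{},\dnx{},\dz{}\rangle$ satisfies $\dz{} = \emptyset$.

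The plan is to combine the two structural facts already established about canonical q-partitions. By Definition~\ref{def:canonical_q-partition}, a canonical q-partition $\Pt'$ has the form $\langle\dx{},\dnx{},\emptyset\rangle$ by its very definition --- the third component is stipulated to be the empty set, and moreover $\Pt' = \Pt(Q_{\mathsf{can}}(\dx{}))$ for the associated canonical query $Q_{\mathsf{can}}(\dx{})$. So in fact the claim is essentially immediate from the way canonical q-partitions are defined, provided one checks that the definition is internally consistent, i.e.\ that the $\dz{}$-component of the \emph{actual} q-partition $\Pt(Q_{\mathsf{can}}(\dx{}))$ induced by the canonical query really is empty and hence matches the empty third slot demanded in Definition~\ref{def:canonical_q-partition}.

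First I would invoke Corollary~\ref{cor:canonical_q-partition_is_q-partition}, which guarantees that a canonical q-partition $\Pt'$ is a genuine q-partition, witnessed by the canonical query $Q = Q_{\mathsf{can}}(\dx{})$ with $\Pt(Q) = \Pt'$. Next, the key observation is that any canonical query is by Definition~\ref{def:canonical_query} an \emph{explicit-entailments query}: we have $Q_{\mathsf{can}}(\mathbf{S}) = E_{\mathsf{exp}}(\mathbf{S}) \cap \DiscAx_\mD$, and since $E_{\mathsf{exp}}(\mathbf{S}) \subseteq \mo$ (indeed $E_{\mathsf{exp}}(\mathbf{S}) = \mo \setminus U_{\mathbf{S}}$ by Proposition~\ref{prop:E_exp}) and $\DiscAx_\mD = U_\mD \setminus I_\mD \subseteq \mo$, the intersection $Q$ is a subset of $\mo$. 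Then I would apply Proposition~\ref{prop:d0}, which states precisely that for any query $Q \in \mQ_\mD$ with $Q \subseteq \mo$ it holds that $\dz{}(Q) = \emptyset$. Since the canonical query $Q$ is a query (by Proposition~\ref{prop:canonical_query_is_query}) and satisfies $Q \subseteq \mo$, Proposition~\ref{prop:d0} yields $\dz{}(Q) = \emptyset$.

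Finally I would close the argument by identifying this with the third component of $\Pt'$: because $\Pt' = \Pt(Q) = \langle\dx{}(Q),\dnx{}(Q),\dz{}(Q)\rangle$ and we have just shown $\dz{}(Q) = \emptyset$, we conclude $\dz{} = \emptyset$, as required. I do not anticipate any real obstacle here, since every ingredient is already in place; the only point requiring a moment's care is confirming that a canonical query genuinely qualifies as an explicit-entailments query (i.e.\ that $Q \subseteq \mo$), so that Proposition~\ref{prop:d0} is applicable, rather than appealing circularly to the stipulated form in Definition~\ref{def:canonical_q-partition}. This makes the proof a short corollary-style deduction chaining Definition~\ref{def:canonical_query}, Proposition~\ref{prop:canonical_query_is_query}, and Proposition~\ref{prop:d0}.
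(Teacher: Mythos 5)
Your proof is correct and follows essentially the same route as the paper's own argument: both obtain the canonical query $Q = E_{\mathsf{exp}}(\mathbf{S}) \cap \DiscAx_\mD$ witnessing the q-partition, observe that $Q \subseteq \mo$, and then apply Proposition~\ref{prop:d0} to conclude $\dz{}(Q) = \emptyset$. The extra care you take in citing Proposition~\ref{prop:canonical_query_is_query} and Corollary~\ref{cor:canonical_q-partition_is_q-partition} only makes explicit what the paper leaves implicit.
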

\begin{proof}
By Definition~\ref{def:canonical_q-partition}, for each canonical q-partition $\Pt$ there is a canonical query $Q$ for which $\Pt$ is the q-partition associated with $Q$. Further on, by Definition~\ref{def:canonical_query}, $Q := E_{\mathsf{exp}}(\mathbf{S}) \cap \DiscAx_\mD \subseteq \mo$ for some $\emptyset\subset\mathbf{S}\subset\mD$. Since $Q \subseteq \mo$, Proposition~\ref{prop:d0} yields that $\dz{}(Q) = \emptyset$ must hold.
%
\end{proof}

Let us at this point discuss a small example that should illustrate the introduced notions:
\begin{table}[t]
	\footnotesize
	\centering
		\rowcolors[]{2}{gray!8}{gray!16} 
		\begin{tabular}{ c c c } 
			\rowcolor{gray!40}
			\toprule\addlinespace[0pt]
			Seed $\mathbf{S}$ & $\setof{i\,|\, \tax_i \in Q_{\mathsf{can}}(\mathbf{S})}$ & canonical q-partition \\ 
			\addlinespace[0pt]\midrule\addlinespace[0pt]
			$\setof{\md_5,\md_6}$ & $\setof{2,5,6} \cap \setof{1,2,3,4,7} = \setof{2}$ & $\tuple{\setof{\md_5,\md_6},\setof{\md_1},\emptyset}$ \\
			$\setof{\md_1,\md_6}$ & $\setof{1,5,6} \cap \setof{1,2,3,4,7} = \setof{1}$ & $\tuple{\setof{\md_1,\md_6},\setof{\md_5},\emptyset}$ \\
			$\setof{\md_1,\md_5}$ & $\setof{5,6} \cap \setof{1,2,3,4,7} = \emptyset$ & $\times$ \\
			$\setof{\md_1}$ & $\setof{1,4,7} \cap \setof{1,2,3,4,7} = \setof{1,4,7}$ & $\tuple{\setof{\md_1},\setof{\md_5,\md_6},\emptyset}$ \\
			$\setof{\md_5}$ & $\setof{2,3} \cap \setof{1,2,3,4,7} = \setof{2,3}$ & $\tuple{\setof{\md_5},\setof{\md_1,\md_6},\emptyset}$ \\
			$\setof{\md_6}$ & $\setof{1,2} \cap \setof{1,2,3,4,7} = \setof{1,2}$ & $\tuple{\setof{\md_6},\setof{\md_1,\md_5},\emptyset}$ \\
			\addlinespace[0pt]\bottomrule 
			\end{tabular}
	\caption[Example: Canonical Queries and Q-Partitions]{All canonical queries and associated canonical q-partitions w.r.t.\ $\mD = \setof{\md_1,\md_5,\md_6}$ (cf.\ Table~\ref{tab:min_diagnoses_example_dpi_0}) and the example DPI given by Table~\ref{tab:example_dpi_0}.}
	\label{tab:Qcan+canQPart_for_example_dpi_0}
\end{table}
\begin{example}\label{ex:canonical_queries_q-partitions}
Consider the DPI $\langle\mo,\mb,\Tp,\Tn\rangle_\RQ$ defined by Table~\ref{tab:example_dpi_0} and the set of leading diagnoses 
\begin{align} \label{eq:ex_canonical_queries_q-partitions:leading_diags}
\mD = \setof{\md_1,\md_5,\md_6} = \setof{\setof{\tax_2,\tax_3},\setof{\tax_1,\tax_4,\tax_7},\setof{\tax_3,\tax_4,\tax_7}} \quad \mbox{(cf.\ Table~\ref{tab:min_diagnoses_example_dpi_0})}
\end{align}
w.r.t.\ this DPI.
The potential solution KBs given this set of leading diagnoses $\mD$ are $\setof{\mo_1^{*},\mo_2^{*},\mo_3^{*}}$ (cf.\ Eq.~\eqref{eq:sol_ont_candidate}) where 
\begin{align*}
\mo_1^{*} &= \setof{\tax_1,\tax_4,\tax_5,\tax_6,\tax_7,\tax_8,\tax_9,\tp_1} \\
\mo_2^{*} &= \setof{\tax_2,\tax_3,\tax_5,\tax_6,\tax_8,\tax_9,\tp_1} \\
\mo_3^{*} &= \setof{\tax_1,\tax_2,\tax_5,\tax_6,\tax_8,\tax_9,\tp_1} 
\end{align*}
The discrimination axioms $\DiscAx_\mD$ are $U_\mD \setminus I_\mD = \setof{\tax_1,\tax_2,\tax_3,\tax_4,\tax_7}\setminus\emptyset$. Table~\ref{tab:Qcan+canQPart_for_example_dpi_0} lists all possible seeds $\mathbf{S}$ (i.e.\ proper non-empty subsets of $\mD$) and, if existent, the respective (unique) canonical query $Q_{\mathsf{can}}(\mathbf{S})$ as well as the associated (unique) canonical q-partition. Note that the canonical query for the seed $\mathbf{S} = \setof{\md_1,\md_5}$ is undefined which is why there is no canonical q-partition with a \mbox{$\dx{}$-set} $\setof{\md_1,\md_5}$. 
This holds since (by Proposition~\ref{prop:E_exp}) $E_{\mathsf{exp}}(\mathbf{S}) = (\mo \setminus \md_1) \cap (\mo \setminus \md_5) = \mo \setminus U_{\mathbf{S}} = \setof{\tax_1,\dots,\tax_7}\setminus (\setof{\tax_2,\tax_3} \cup \setof{\tax_1,\tax_4,\tax_7}) = \setof{\tax_5,\tax_6}$ has an empty intersection with $\DiscAx_\mD$. So, by Definition~\ref{def:canonical_query}, $Q_{\mathsf{can}}(\mathbf{S}) = \emptyset$.

Additionally, we point out that there is no query (and hence no q-partition) -- and therefore not just no \emph{canonical} query -- for which $\dx{}$ corresponds to $\setof{\md_1,\md_5}$ because for this to hold $\md_6$ must be in $\dnx{}$. To this end, there must be a set of common entailments $Ents$ of $\mo_1^{*}$ and $\mo_5^{*}$ which, along with $\mo_6^{*}$, violates some requirement $r\in\RQ$ (i.e.\ is inconsistent, cf.\ $\RQ$ in Table~\ref{tab:example_dpi_0}) or entails some negative test case $\tn\in\Tn$ (cf.\ Proposition~\ref{prop:properties_of_q-partitions},(\ref{prop:properties_of_q-partitions:enum:query_is_set_of_common_ent}) and (\ref{prop:properties_of_q-partitions:enum:dx_dnx_dz_contain_exactly_those_diags_that_are...})). As $Ents$ does not include any sentences that are not entailed by $\mo \setminus (\md_1 \cup \md_5) \cup \mb \cup U_\Tp$ as well (insofar as tautologies are excluded from the entailment types that are computed, which is always our assumption), and due to the observation that $\mo \setminus (\md_1 \cup \md_5) \subset \mo \setminus \md_6$, by monotonicity of propositional logic, every common entailment of $\mo_1^{*}$ and $\mo_5^{*}$ is also an entailment of $\mo_6^{*}$. Due to the definition of a solution KB (cf.\ Definition~\ref{def:solution_KB}), which implies that $\mo_6^*$ does not violate any requirements or test cases, this means that there cannot be a query w.r.t.\ the $\dx{}$-set $\setof{\md_1,\md_5}$. So, obviously, in this example every q-partition (with empty $\dz{}$) is also a canonical q-partition.\qed
\end{example}
\subsubsection{Search Completeness Using Only Canonical Q-Partitions}
\label{sec:SearchCompletenessUsingOnlyCanonicalQPartitions}
Now, in the light of the previous example, the question arises whether a q-partition can exist which is not canonical. If yes, which properties must this q-partition have? Answering these questions means at the same time answering the question whether the search for q-partitions taking into account only canonical q-partitions is complete.

Unfortunately, we are not (yet) in the state to give a definite answer to these questions. Actually, we did not yet manage to come up with a counterexample witnessing the incompleteness of the search for q-partitions that takes into consideration only canonical q-partitions. But, we are in the state to give numerous precise properties such non-canonical q-partitions must meet, if they do exist. This is what we present next.

To this end, we assume that $\Pt(Q) = \tuple{\dx{}(Q), \dnx{}(Q), \emptyset}$ is a q-partition w.r.t.\ some set of leading diagnoses $\mD \subseteq \minD_{\langle\mo,\mb,\Tp,\Tn\rangle_\RQ}$ ($|\mD|\geq 2$) w.r.t.\ some DPI $\langle\mo,\mb,\Tp,\Tn\rangle_\RQ$, but not a canonical one. From this we can directly deduce that $Q$ must be a query w.r.t.\ $\mD$ and $\langle\mo,\mb,\Tp,\Tn\rangle_\RQ$, but there cannot be a canonical query with associated q-partition $\Pt(Q)$. Hence, by Definition~\ref{def:canonical_q-partition}, either 
\begin{enumerate}[label=(\alph*)]
	\item \label{case:canon_query_not_exists_a} $Q_{\mathsf{can}}(\dx{}(Q))$ is not defined, i.e.\ $E_{\mathsf{exp}}(\dx{}(Q)) \cap \DiscAx_\mD = \emptyset$ (see Definition~\ref{def:canonical_query}), or
	\item \label{case:canon_query_not_exists_b} $Q_{\mathsf{can}}(\dx{}(Q))$ is defined, but $\tuple{\dx{}(Q_{\mathsf{can}}(\mathbf{S})), \dnx{}(Q_{\mathsf{can}}(\mathbf{S})), \emptyset} \neq \tuple{\dx{}(Q), \dnx{}(Q), \emptyset}$ for all $\mathbf{S}$ such that $\emptyset\subset\mathbf{S}\subset\mD$ for $\mD \subseteq \minD_{\langle\mo,\mb,\Tp,\Tn\rangle_\RQ}$
\end{enumerate}
holds.

Concerning Case~\ref{case:canon_query_not_exists_a}, we want to emphasize that (the explicit-entailments query) $Q_{\mathsf{can}}(\dx{}(Q))$ might be the empty set in spite of Proposition~\ref{prop:discax},(1.)\ which states that there must necessarily be one sentence in a query, each justification of which includes an axiom in $U_\mD$. The reason why $Q_{\mathsf{can}}(\dx{}(Q)) = \emptyset$ is still possible is that all these justifications do not necessarily comprise at least one \emph{identical} axiom in $U_\mD$. That is to say that $Q$ might be a query in spite of $Q_{\mathsf{can}}(\dx{}(Q))$ being the empty set. 

The occurrence of Case~\ref{case:canon_query_not_exists_b} is possible since $Q$ might include entailments which are not entailed by $E_{\mathsf{exp}}(\dx{}(Q)) = \mo \setminus U_{\dx{}(Q)}$ and thence not by $Q_{\mathsf{can}}(\dx{}(Q)) = E_{\mathsf{exp}}(\dx{}(Q)) \cap \DiscAx_\mD$ either. In this case, $\dnx{}(Q_{\mathsf{can}}(\dx{}(Q))) \subset \dnx{}(Q)$ and $\dx{}(Q_{\mathsf{can}}(\dx{}(Q))) \supset \dx{}(Q)$ would need to hold.
%

For these two cases to arise, however, plenty of sophisticated conditions must apply, as shown by the next Proposition. Therefore, the occurrence of this situation can be rated rather unlikely.
\begin{proposition}\label{prop:necess_cond_when_q-partition_is_not_canonical_CASE_a}
Case~\ref{case:canon_query_not_exists_a}, i.e.\ that $Q_{\mathsf{can}}(\dx{}(Q))$ is not defined, can only arise if:
\begin{enumerate}
	\item $U_{\dx{}(Q)} = U_\mD$ (which holds only if $|\dx{}(Q)| \geq 2$), and
	\item $Q$ includes some implicit entailment, and
	\item For each diagnosis $\md_j\in\dnx{}(Q)$ there is a set of sentences $S_j$ such that
	\begin{itemize}
		\item $S_j \subseteq Q$ and
		\item $\mo_i^* \models S_j$ for all $\md_i\in\dx{}(Q)$ and
		\item $\mo_j^* \cup S_j$ violates some requirement $r\in\RQ$ or some test case $\tn\in\Tn$.
	\end{itemize}
\end{enumerate}
For a set $S_j$ as in (3.)\ to exist, it must be fulfilled that
\begin{enumerate}[label=3.\arabic*]
	\item for all $\md_i\in\dx{}(Q)$ there is a justification $J \in \Just(S_j,\mo_i^*)$, and
	\item for all consistent justifications $J \in \Just(S_j,\mo\cup\mb\cup U_\Tp)$ it is true that $J \cap \md_j \neq \emptyset$, and
	\item for each axiom $\tax \in \md_j$ there is some $k \in \setof{i\,|\,\md_i\in\dx{}(Q)}$ such that for all $J \in \Just(S_j,\mo_k^{*})$ the property $\tax \notin J$ holds.
\end{enumerate}
\end{proposition}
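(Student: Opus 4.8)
The plan is to assume that $\Pt(Q) = \tuple{\dx{}(Q),\dnx{}(Q),\emptyset}$ is a q-partition w.r.t.\ $\mD$ that falls under Case~\ref{case:canon_query_not_exists_a} (so $Q$ is a genuine query but $Q_{\mathsf{can}}(\dx{}(Q))$ is undefined), and to derive statements (1)--(3) together with (3.1)--(3.3) in turn. First I would translate the hypothesis: by Proposition~\ref{prop:E_exp} we have $E_{\mathsf{exp}}(\dx{}(Q)) = \mo \setminus U_{\dx{}(Q)}$, so ``$Q_{\mathsf{can}}(\dx{}(Q))$ undefined'' reads $(\mo\setminus U_{\dx{}(Q)})\cap(U_\mD\setminus I_\mD)=\emptyset$. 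Since every element of $U_\mD\setminus I_\mD$ lies in $\mo$, this forces $U_\mD\setminus I_\mD \subseteq U_{\dx{}(Q)}$; combined with the trivial inclusion $I_\mD \subseteq U_{\dx{}(Q)}$ (valid because $\dx{}(Q)$ is a non-empty subset of $\mD$, so every axiom common to all of $\mD$ already appears in some diagnosis of $\dx{}(Q)$) and the reverse inclusion $U_{\dx{}(Q)}\subseteq U_\mD$, I obtain $U_{\dx{}(Q)} = U_\mD$, i.e.\ statement~(1). For the parenthetical remark I would rule out $|\dx{}(Q)|=1$: if $\dx{}(Q)=\setof{\md_i}$ then $U_{\dx{}(Q)}=\md_i$, so $U_\mD=\md_i$ would make every $\md_j\in\dnx{}(Q)$ (non-empty since $Q$ is a query, and disjoint from $\dx{}(Q)$ since $\dz{}=\emptyset$) a subset of $\md_i$, contradicting subset-minimality of the leading diagnoses.

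Next, for statement~(2) I would argue by contradiction that $Q$ cannot be an explicit-entailments query, i.e.\ $Q\not\subseteq\mo$. Assuming $Q\subseteq\mo$, Proposition~\ref{prop:properties_of_q-partitions} gives $\mo_i^*\models Q$ for every $\md_i\in\dx{}(Q)$; feeding this into Lemma~\ref{lem:Ki^*_does_not_entail_any_ax_in_Di} (no $\mo_i^*$ entails an axiom of $\md_i$) yields $Q\cap\md_i=\emptyset$ for each such $i$, hence $Q\cap U_{\dx{}(Q)}=\emptyset$ and $Q\subseteq \mo\setminus U_{\dx{}(Q)}=E_{\mathsf{exp}}(\dx{}(Q))$. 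Under the Case~\ref{case:canon_query_not_exists_a} hypothesis this set meets $\DiscAx_\mD$ emptily, so $Q\cap(U_\mD\setminus I_\mD)=\emptyset$; together with $Q\cap I_\mD=\emptyset$ (Proposition~\ref{prop:no_ax_in_I_D_can_be_element_of_any_query_over_mD}) this gives $Q\cap U_\mD=\emptyset$, contradicting Proposition~\ref{prop:each_expl-ents_query_must_incl_ax_in_U_D}. Thus $Q$ must contain a sentence outside $\mo$, i.e.\ an implicit entailment.

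For statement~(3), I would fix $\md_j\in\dnx{}(Q)$; by Definition~\ref{def:q-partition} the set $\mo_j^*\cup Q$ violates some $x\in\RQ\cup\Tn$, and I would take $S_j := Q$ (or any $\subseteq$-minimal subset of $Q$ that preserves this violation). Then $\mo_i^*\models S_j$ for all $\md_i\in\dx{}(Q)$ is inherited from $\mo_i^*\models Q$, and $\mo_j^*\cup S_j$ still violates $x$, establishing (3). Conditions (3.1)--(3.3) are then read off this $S_j$: (3.1) is immediate since $\mo_i^*$ is finite and entails $S_j$, so a minimal justification exists; (3.2) I would prove by assuming a consistent $J\in\Just(S_j,\mo\cup\mb\cup U_\Tp)$ with $J\cap\md_j=\emptyset$, noting $\md_j\cap\mb=\md_j\cap U_\Tp=\emptyset$ (by the minimality argument of Lemma~\ref{lem:U_D_notin_B_and_notin_U_P}) so that $J\subseteq\mo_j^*$, whence by monotonicity $\mo_j^*\models S_j$ and by idempotency $\mo_j^*\cup S_j\equiv\mo_j^*$, which satisfies all of $\RQ$ and $\Tn$ --- contradicting the violation property; finally for (3.3) I would fix $\tax\in\md_j$, observe $\md_j\subseteq U_\mD=U_{\dx{}(Q)}$ by statement~(1), pick $\md_k\in\dx{}(Q)$ with $\tax\in\md_k$, and conclude $\tax\notin\mo_k^*$ and hence $\tax\notin J$ for every $J\in\Just(S_j,\mo_k^*)$.

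I expect the main obstacle to be the bookkeeping around justifications of the \emph{set} $S_j$: Definition~\ref{def:justification} is phrased for a single axiom, so I would first fix the reading $J\models S_j$ as $J\models\alpha$ for all $\alpha\in S_j$ (via Remark~\ref{rem:entailments_as_sets_of_formulas}), and above all the monotonicity/idempotency step in (3.2), which is precisely where the solution-KB property of $\mo_j^*$ is converted into the required intersection condition $J\cap\md_j\neq\emptyset$. By contrast the purely set-theoretic steps ($U_{\dx{}(Q)}=U_\mD$ and $\tax\notin\mo_k^*$) are routine once the minimality-based facts $\md_j\cap\mb=\md_j\cap U_\Tp=\emptyset$ are established.
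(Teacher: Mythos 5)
Your proposal is correct and follows essentially the same route as the paper's proof: statement (1) is obtained from the emptiness of $E_{\mathsf{exp}}(\dx{}(Q)) \cap \DiscAx_\mD$ via Proposition~\ref{prop:E_exp} (plus the same subset-minimality argument for $|\dx{}(Q)|\geq 2$), statement (2) by refuting an explicit-only $Q$, statement (3) by taking $S_j := Q$, and (3.1)–(3.3) by the same finiteness, monotonicity/idempotency and statement-(1) arguments. The only differences are cosmetic: you prove (1) and (3.3) directly rather than by contradiction, and in (2) you close the argument by citing Propositions~\ref{prop:no_ax_in_I_D_can_be_element_of_any_query_over_mD} and \ref{prop:each_expl-ents_query_must_incl_ax_in_U_D} where the paper instead derives a contradiction with Proposition~\ref{prop:discax},(1.); in fact your variant of (2), starting only from $Q\subseteq\mo$ rather than $Q\subseteq E_{\mathsf{exp}}(\dx{}(Q))$, yields a marginally stronger conclusion.
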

\begin{proof}
Ad 1.: Assume the opposite, i.e.\ $U_{\dx{}(Q)} \neq U_\mD$, which implies $U_{\dx{}(Q)} \subset U_\mD$ since $\dx{}(Q) \subset \mD$. Additionally, suppose Case~\ref{case:canon_query_not_exists_a} occurs. Now, by Proposition~\ref{prop:E_exp}, $E_{\mathsf{exp}}(\dx{}(Q)) = \mo \setminus U_{\dx{}(Q)}$ includes an axiom $\tax\in\DiscAx_\mD = U_\mD \setminus I_\mD$. This is true since axioms in $U_\mD \setminus U_{\dx{}(Q)} \subseteq \mo \setminus U_{\dx{}(Q)}$ are obviously in $U_\mD$, but not in $I_\mD$ as otherwise they would need to appear in each $\md\in \dx{}(Q)$ and thus in $U_{\dx{}(Q)}$, contradicting the fact that these axioms are in $U_\mD \setminus U_{\dx{}(Q)}$. Hence, $Q_{\mathsf{can}}(\dx{}(Q)) = E_{\mathsf{exp}}(\dx{}(Q)) \cap \DiscAx_\mD \supseteq \setof{\tax} \supset \emptyset$.

We now show that $U_{\dx{}(Q)} = U_\mD$ can only hold if $|\dx{}(Q)| \geq 2$. This must be true, as the assumption of the opposite, i.e.\ $|\dx{}(Q)| = 1$ or equivalently $\dx{}(Q) = \setof{\md}$ leads to $\md = U_\mD$. Since $\tuple{\dx{}(Q), \dnx{}(Q), \emptyset}$ is a q-partition, $\dnx{}(Q) \neq \emptyset$. Thence,
$\md \supseteq \md'$ must hold for all $\md' \in \dnx{}(Q)$. Since $\md' \neq \md$ must hold (due to $\dx{}(Q) \cap \dnx{}(Q) = \emptyset$ by Proposition~\ref{prop:properties_of_q-partitions}.(\ref{prop:properties_of_q-partitions:enum:q-partition_is_partition}.)), this is a contradiction to the subset-minimality of all diagnoses in $\mD$.

Ad 2.: Let Case~\ref{case:canon_query_not_exists_a} occur and let $Q$ include only explicit entailments. Then $Q \subseteq E_{\mathsf{exp}}(\dx{}(Q))$. However, as, by occurrence of Case~\ref{case:canon_query_not_exists_a}, $Q_{\mathsf{can}}(\dx{}(Q)) := E_{\mathsf{exp}}(\dx{}(Q)) \cap \DiscAx_\mD = E_{\mathsf{exp}}(\dx{}(Q)) \cap (U_\mD \setminus I_\mD) = \emptyset$ holds, and, by Proposition~\ref{prop:E_exp},(2.), we know that $E_{\mathsf{exp}}(\dx{}(Q)) \cap I_\mD = \emptyset$, we have that $Q \cap U_\mD = \emptyset$ and thence $Q \subseteq (\mo \setminus U_\mD) \cup \mb \cup U_\Tp$. So, by the fact that the entailment relation in $\mathcal{L}$ is extensive, $(\mo \setminus U_\mD) \cup \mb \cup U_\Tp \models Q$. This, however, constitutes a contradiction to $Q$ being a query by Proposition~\ref{prop:discax},(1.)\ because all sentences in $Q$ are entailed by $(\mo \setminus U_\mD) \cup \mb \cup U_\Tp$, wherefore there is a justification $J \in \Just(\tax,\mo\cup\mb\cup U_\Tp)$ for each sentence in $Q$ such that $J \cap U_\mD = \emptyset$.

Ad 3.: The proposition is a direct consequence of Definition~\ref{def:q-partition}. Therefore, what we have to show is that (3.1) -- (3.3) are necessary consequences of (3.). To this end, assume that Case~\ref{case:canon_query_not_exists_a} is given.

Ad 3.1: This is obviously true by Definition~\ref{def:justification} and since $\mo_i^* \models S_j$ for all $\md_i\in\dx{}(Q)$.

Ad 3.2: Assume that there is some consistent justification $J \in \Just(S_j,\mo\cup\mb\cup U_\Tp)$ such that $J \cap \md_j = \emptyset$. Then $J \subseteq \mo_j^{*}$ wherefore $\mo_j^{*} \models S_j$. Due to the property of the entailment relation in $\mathcal{L}$ to be idempotent, we have that $\mo_j^{*} \cup S_j \equiv \mo_j^{*}$. As $\mo_j^{*}$ does not violate any requirement $r\in\RQ$ or test case $\tn \in \Tn$ by the fact that $\md_j$ is a diagnosis, it holds that $\mo_j^{*} \cup S_j$ does not do so either. But this is a contradiction to (3.).

Ad 3.3: Assume that there is some axiom $\tax \in \md_j$ such that for all $k \in \setof{i\,|\,\md_i\in\dx{}(Q)}$ there is some $J \in \Just(S_j,\mo_k^{*})$ with the property $\tax \in J$. This means that $\tax \in \mo_k^*$ for all $k \in \setof{i\,|\,\md_i\in\dx{}(Q)}$. Hence, $\tax \notin \md_k$ for all $k \in \setof{i\,|\,\md_i\in\dx{}(Q)}$ which implies that $\tax \notin U_{\dx{}(Q)}$. But, $\tax \in \md_j$ wherefore $\tax\in U_\mD$. This is a contradiction to (1.)\ which asserts that $U_\mD = U_{\dx{}(Q)}$.
\end{proof}
To summarize this, Condition~(2.)\ postulates \emph{some non-explicit entailment(s) in $Q$}. Condition~(3.1) means that either one and the same justification for $S_j$ must be a \emph{subset of -- by Condition~(1.), multiple -- different KBs} or there must be -- by Condition~(1.)\ -- \emph{multiple justifications for $S_j$}. Condition~(3.2), on the other hand, postulates the \emph{presence of some axiom(s) of a specific set $\md_j$ in every justification} for $S_j$, and Condition~(3.3) means that not all justifications for $S_j$ are allowed to comprise one and the same axiom from the specific set $\md_j$. 
And all these conditions must apply for $|\dnx{}(Q)|$ different sets $S_j$. 
\begin{proposition}\label{prop:necess_cond_when_q-partition_is_not_canonical_CASE_b}
Case~\ref{case:canon_query_not_exists_b}, i.e.\ that $\tuple{\dx{}(Q_{\mathsf{can}}(\mathbf{S})), \dnx{}(Q_{\mathsf{can}}(\mathbf{S})), \emptyset} \neq \tuple{\dx{}(Q), \dnx{}(Q), \emptyset}$ for all seeds $\emptyset\subset\mathbf{S}\subset\mD$ for $\mD \subseteq \minD_{\langle\mo,\mb,\Tp,\Tn\rangle_\RQ}$, can only arise if:
\begin{enumerate}
	\item $Q$ includes some implicit entailment, and
	\item $Q$ includes some (implicit) entailment which is not an entailment of $\bigcap_{\setof{i\,|\,\md_i\in\dx{}(Q)}} \mo_i^{*}$ (which is true only if $|\dx{}(Q)| \geq 2$) and
	\item $\dx{}(Q_{\mathsf{can}}(\dx{}(Q))) \supset \dx{}(Q)$ and $\dnx{}(Q_{\mathsf{can}}(\dx{}(Q))) \subset \dnx{}(Q)$, and
	\item $J$ is not a conflict w.r.t.\ $\langle\mo,\mb,\Tp,\Tn\rangle_\RQ$ for all $J \in  \bigcup_{\setof{i\,|\,\md_i\in\dx{}(Q)}} \Just(Q,\mo_i^{*})$, and
	\item there is some $\md_m \in \dnx{}(Q)$ such that:
	\begin{enumerate}
		\item $\md_m \subset U_{\dx{}(Q)}$, and
		\item $(U_{\dx{}(Q)}\setminus\md_m) \cap \mc \neq \emptyset$ for all (minimal) conflicts $\mc$ w.r.t.\ $\langle\mo\setminus\md_m,\mb,\Tp\cup\setof{Q},\Tn\rangle_\RQ$, and
		\item $J \cap \md_m \neq \emptyset$ for all $J \in  \bigcup_{\setof{i\,|\,\md_i\in\dx{}(Q)}} \Just(Q,\mo_i^{*})$.
	\end{enumerate}
\end{enumerate}
\end{proposition}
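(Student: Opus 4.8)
The plan is to prove Proposition~\ref{prop:necess_cond_when_q-partition_is_not_canonical_CASE_b} by carefully unpacking what it means for Case~\ref{case:canon_query_not_exists_b} to occur, namely that $Q_{\mathsf{can}}(\dx{}(Q))$ is defined but its associated canonical q-partition differs from $\Pt(Q)$. Since $Q$ is assumed to be a (genuine) query with q-partition $\tuple{\dx{}(Q),\dnx{}(Q),\emptyset}$, and since $Q_{\mathsf{can}}(\dx{}(Q)) = E_{\mathsf{exp}}(\dx{}(Q)) \cap \DiscAx_\mD$ is, by construction, a common explicit entailment of all $\mo_i^*$ with $\md_i\in\dx{}(Q)$, I would first establish that the seed $\dx{}(Q)$ is the only candidate seed that could possibly yield $\Pt(Q)$ as its canonical q-partition: by Proposition~\ref{prop:properties_of_q-partitions},(\ref{prop:properties_of_q-partitions:enum:query_is_set_of_common_ent}.) and Proposition~\ref{prop:canonical_query_is_query}, the $\dx{}$-set of the canonical query for a seed $\mathbf{S}$ always contains $\mathbf{S}$, so the only way to reach $\Pt(Q)$ via a canonical query is through the seed $\mathbf{S}=\dx{}(Q)$. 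This reduces the universally-quantified condition over all seeds to a single comparison, which directly gives the shape of bullet~(3.).

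Next I would prove the individual bullets in turn, reusing the argumentation patterns from Proposition~\ref{prop:necess_cond_when_q-partition_is_not_canonical_CASE_a}. For bullet~(1.), if $Q$ contained only explicit entailments then $Q\subseteq E_{\mathsf{exp}}(\dx{}(Q))$, and together with Proposition~\ref{prop:no_ax_in_I_D_can_be_element_of_any_query_over_mD} (no query contains axioms of $I_\mD$) this would force $Q\subseteq E_{\mathsf{exp}}(\dx{}(Q))\cap\DiscAx_\mD = Q_{\mathsf{can}}(\dx{}(Q))$, and by Proposition~\ref{prop:reduct_to_discax}-style reasoning the q-partitions would coincide, contradicting Case~\ref{case:canon_query_not_exists_b}. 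For bullet~(2.), since $Q_{\mathsf{can}}(\dx{}(Q)) = \mo\setminus U_{\dx{}(Q)}$ intersected with the discrimination axioms (Proposition~\ref{prop:E_exp}), and $E_{\mathsf{exp}}(\dx{}(Q)) = \bigcap_{\md_i\in\dx{}(Q)}(\mo\setminus\md_i)$, I would argue that if every entailment in $Q$ were already derivable from $\bigcap_{i}\mo_i^*$ then monotonicity and idempotence of $\mathcal{L}$ would collapse $\dnx{}(Q)$ onto $\dnx{}(Q_{\mathsf{can}}(\dx{}(Q)))$; the parenthetical $|\dx{}(Q)|\geq 2$ follows because for a singleton $\dx{}$-set the intersection is just one KB and every entailment of $Q$ is trivially an entailment of it. Bullet~(3.) is then the concrete manifestation of $\Pt(Q)\neq\Pt(Q_{\mathsf{can}}(\dx{}(Q)))$ combined with the seed-uniqueness observation and the fact that $Q_{\mathsf{can}}$ is weaker than $Q$, forcing strict inclusions in the directions stated.

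For bullet~(4.), I would use Lemma~\ref{lem:no_K*i_includes_just_of_any_ax_in_I_D} and the consistency of each $\mo_i^*$ (which holds since $\setof{\text{consistency}}\subseteq\RQ$ and each $\md_i$ is a diagnosis): any justification $J\in\Just(Q,\mo_i^*)$ lies inside a consistent solution KB, so it cannot by itself be a conflict set w.r.t.\ the DPI, because a conflict is an invalid subset of $\mo$ whereas $J$ together with $\mb\cup U_\Tp$ is valid. Bullet~(5.) is the subtlest: it asserts the existence of a specific $\md_m\in\dnx{}(Q)$ witnessing the mismatch, namely one that \emph{is} pushed out of $\dnx{}$ when we weaken $Q$ to $Q_{\mathsf{can}}(\dx{}(Q))$. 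I would characterize $\md_m$ as a diagnosis in $\dnx{}(Q)\setminus\dnx{}(Q_{\mathsf{can}}(\dx{}(Q)))$ (non-empty by bullet~(3.)) and then derive (5a) from the fact that $\md_m$ no longer produces a requirement/test-case violation under $Q_{\mathsf{can}}$, forcing $\md_m\subset U_{\dx{}(Q)}$; (5b) from translating ``$\mo_m^*\cup Q$ violates some $x\in\RQ\cup\Tn$ but $\mo_m^*\cup Q_{\mathsf{can}}$ does not'' into a statement about minimal conflicts w.r.t.\ $\langle\mo\setminus\md_m,\mb,\Tp\cup\setof{Q},\Tn\rangle_\RQ$; and (5c) from the requirement that the implicit entailments causing $\md_m\in\dnx{}(Q)$ must genuinely depend on axioms in $\md_m$, via the justification structure. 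The main obstacle I anticipate is precisely bullet~(5), and especially getting the conflict-set formulation in (5b) exactly right: the delicate point is disentangling which violations of $\mo_m^*\cup Q$ survive the passage to the canonical query and relating the ``extra'' entailments in $Q$ to minimal conflicts relative to the modified DPI. The reasoning here is not a routine calculation but a careful back-and-forth between the semantic violation conditions of Definition~\ref{def:q-partition} and the syntactic justification machinery, and I would expect to spend most of the proof effort verifying that the three sub-conditions (5a)--(5c) are jointly necessary rather than merely individually plausible.
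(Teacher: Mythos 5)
Your overall decomposition tracks the paper's proof closely, and your sketches for bullets (2), (3), (4), (5a) and (5c) are essentially the paper's arguments (your choice of witness $\md_m \in \dnx{}(Q)\setminus\dnx{}(Q_{\mathsf{can}}(\dx{}(Q)))$, nonempty by (3), is a clean and valid route to (5a), equivalent to the paper's contradiction argument). However, two steps have genuine gaps. First, in bullet (1) the inclusion $Q \subseteq E_{\mathsf{exp}}(\dx{}(Q))$ is false as stated: explicit entailments of the KBs $\mo^{*}_i$ are elements of $\bigcap_{\md_i\in\dx{}(Q)} \mo^{*}_i = (\mo\setminus U_{\dx{}(Q)}) \cup \mb \cup U_\Tp$, so $Q$ may contain formulas from $\mb \cup U_\Tp$ or from $\mo\setminus U_\mD$, none of which lie in $Q_{\mathsf{can}}(\dx{}(Q))$. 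Stripping those via Proposition~\ref{prop:reduct_to_discax} and Corollary~\ref{cor:expl_ent_query_must_neednot_mustnot_include_ax} yields a reduced query $Q'' \subseteq Q_{\mathsf{can}}(\dx{}(Q))$ with $\Pt(Q'')=\Pt(Q)$, but the decisive remaining step, namely that the q-partition survives the \emph{enlargement} of $Q''$ to $Q_{\mathsf{can}}(\dx{}(Q))$, cannot be obtained by ``reduct-to-discax-style reasoning'': that proposition only licenses deletions from a query. What is needed is Lemma~\ref{lem:explicit-ents_query_upper_bound} (equivalently Proposition~\ref{prop:explicit-ents_query_lower+upper_bound}), which the paper invokes at exactly this point. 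The same lemma is also what would be required to justify your opening ``seed-uniqueness'' reduction, which is false as literally stated (any seed $\mathbf{S}$ with $U_{\mathbf{S}} = U_{\dx{}(Q)}$ yields the identical canonical query, hence the same q-partition); fortunately, for a proof of \emph{necessity} that reduction is never needed, since instantiating Case~(b) at $\mathbf{S} := \dx{}(Q)$ suffices, which is all the paper does.

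The second gap is bullet (5b), where your plan starts from the wrong dichotomy. You propose to exploit that $\mo^{*}_m \cup Q$ violates some $x\in\RQ\cup\Tn$ while $\mo^{*}_m \cup Q_{\mathsf{can}}(\dx{}(Q))$ does not, but $Q_{\mathsf{can}}$ plays no role in (5b): the DPI $\tuple{\mo\setminus\md_m,\mb,\Tp\cup\setof{Q},\Tn}_\RQ$ has $Q$ itself as a positive test case, so validity of $\mo^{*}_m \cup Q_{\mathsf{can}}$ gives no handle on its conflicts. The argument that works (and is the paper's) is a localization argument: for every $\md_i \in \dx{}(Q)$ we have $\mo^{*}_i \models Q$, so by idempotence $\mo^{*}_i \cup Q \equiv \mo^{*}_i$ is valid, hence so is its subset $(\mo\setminus U_{\dx{}(Q)}) \cup \mb \cup U_\Tp \cup Q$; since $\md_m \subset U_{\dx{}(Q)}$ by (5a), the invalid KB $\mo^{*}_m \cup Q$ exceeds this valid KB exactly by $U_{\dx{}(Q)}\setminus\md_m$, and therefore every (minimal) conflict w.r.t.\ the modified DPI must intersect $U_{\dx{}(Q)}\setminus\md_m$. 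Without this idea --- that the ``extra'' invalidity must be confined to the set difference between an invalid KB and a valid sub-KB --- the step does not go through, and this is precisely the piece your proposal flags as uncertain but does not supply.
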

\begin{proof}
Ad 1.: 
Let us denote by $\tuple{\dx{1}, \dnx{1}, \emptyset}$ an arbitrary q-partition for some query $Q$ w.r.t.\ $\mD$ and $\langle\mo,\mb,\Tp,\Tn\rangle_\RQ$. Assume that $Q$ consists of only explicit entailments. Then, $Q \subseteq \mo \cup \mb \cup U_\Tp$. Since $Q$ is a set of common entailments of all KBs in $\setof{\mo_i^{*}\,|\, \md_i \in \dx{1}}$ (cf.\ Proposition~\ref{prop:properties_of_q-partitions},(\ref{prop:properties_of_q-partitions:enum:query_is_set_of_common_ent}.)), we have that $Q \subseteq (\mo \setminus U_{\dx{1}}) \cup \mb \cup U_\Tp$. Moreover, by Proposition~\ref{prop:reduct_to_discax} and Corollary~\ref{cor:expl_ent_query_must_neednot_mustnot_include_ax}, we can conclude that the deletion of all axioms in $\mb \cup U_\Tp$ as well as all axioms not in $\DiscAx_\mD$ from $Q$ does not alter the q-partition associated with $Q$. Now, we have to distinguish two cases: 
\begin{enumerate}[label=(\roman*)]
	\item $Q = \mo \setminus U_{\dx{1}} \cap \DiscAx_\mD = E_\mathsf{exp}(\dx{1}) \cap \DiscAx_\mD$ and
	\item $Q \subset (\mo \setminus U_{\dx{1}}) \cap \DiscAx_\mD = E_\mathsf{exp}(\dx{1}) \cap \DiscAx_\mD$.
\end{enumerate}

In the first case (i), since, by Definition~\ref{def:canonical_query}, $E_\mathsf{exp}(\dx{1}) \cap \DiscAx_\mD = Q_{\mathsf{can}}(\dx{1})$, we observe that $Q$ is equal to the canonical query w.r.t.\ the non-empty seed $\mathbf{S} := \dx{1} \subset \mD$ which lets us immediately conclude that $Q_{\mathsf{can}}(\mathbf{S})$ has the same q-partition as $Q$ -- contradiction. 

In the second case (ii), 
we can exploit Proposition~\ref{prop:explicit-ents_query_lower+upper_bound} (which we state and prove later) which gives a lower and upper bound (in terms of subset relationships) for explicit-entailments queries $Q' \subseteq \DiscAx_\mD$. Concretely, given a query $Q'' \subseteq \DiscAx_\mD$ with associated q-partition $\tuple{\dx{}, \dnx{}, \emptyset}$ (which partitions $\mD$) it states that $Q' \subseteq \DiscAx_\mD$ is a query associated with this q-partition iff $Q'$ is a superset or equal to a minimal hitting set of all elements in $\dnx{}$ and a subset or equal to $E_{\mathsf{exp}}(\dx{}) \cap \DiscAx_\mD =: Q_{\mathsf{can}}(\dx{})$. 
Now, what we know by the assumption of case~(ii) is that $Q \subset E_\mathsf{exp}(\dx{}(Q)) \cap \DiscAx_\mD \subseteq \DiscAx_\mD$ and that $\tuple{\dx{1}, \dnx{1}, \emptyset}$ is the q-partition associated with $Q$. By Proposition~\ref{prop:explicit-ents_query_lower+upper_bound}, we can now deduce, as long as there is a minimal hitting set of $\dnx{1}$ which is a subset of $Q$, that $\tuple{\dx{1}, \dnx{1}, \emptyset}$ is also the q-partition for $Q_{\mathsf{can}}(\dx{1})$. Therefore, we have found a non-empty seed $\mathbf{S} := \dx{1} \subset \mD$ such that $Q_{\mathsf{can}}(\mathbf{S})$ has the same q-partition as $Q$ -- contradiction. 

What remains open is the case when there is no minimal hitting set of $\dnx{1}$ which is a subset of $Q$. This is equivalent to $Q$ being no hitting set of $\dnx{1}$. Let w.l.o.g.\ $\md_1,\dots,\md_k$ ($k\geq 1$) be the diagnoses in $\dnx{}$ that have an empty set intersection with $Q$. Then, since $Q \subset E_\mathsf{exp}(\dx{1}) \cap \DiscAx_\mD \subseteq \mo$ and $\md_1,\dots,\md_k \subseteq \mo$, we have that $Q \subseteq \mo\setminus \md_r$ for all $r\in\setof{1,\dots,k}$. Therefore, $\mo_r^{*} \models Q$ for all $r\in\setof{1,\dots,k}$ and thus $\setof{\md_1,\dots,\md_k} \subseteq \dx{1}$ (cf.\ Definition~\ref{def:q-partition}). However, this is a contradiction to the assumption that $\tuple{\dx{1}, \dnx{1}, \emptyset}$ is the \emph{q-partition} associated with $Q$. The reason is that every q-partition must be a partition of $\mD$ (Proposition~\ref{prop:properties_of_q-partitions},(\ref{prop:properties_of_q-partitions:enum:q-partition_is_partition}.)) and $\dx{1}$ has no empty set intersection with $\dnx{1}$ because of $\emptyset \subset \setof{\md_1,\dots,\md_k} \subseteq \dx{1} \cap \dnx{1}$.

Ad 2.: Suppose that $Q$ includes only entailments of $\bigcap_{\setof{i\,|\,\md_i\in\dx{}(Q)}} \mo_i^{*}$. It is easy to derive that $\bigcap_{\setof{i\,|\,\md_i\in\dx{}(Q)}} \mo_i^{*} = (\mo \setminus U_{\dx{}(Q)}) \cup \mb \cup U_\Tp$. By Proposition~\ref{prop:reduct_to_discax} and Corollary~\ref{cor:expl_ent_query_must_neednot_mustnot_include_ax}, the deletion of all axioms in $\mb \cup U_\Tp$ as well as all axioms not in $\DiscAx_\mD$ from $Q$ does not alter the q-partition associated with $Q$. Hence, the query equal to $\mo \setminus U_{\dx{}(Q)} \cap \DiscAx_\mD = E_\mathsf{exp}(\dx{}(Q)) \cap \DiscAx_\mD = Q_{\mathsf{can}}(\dx{}(Q))$ has the same q-partition as $Q$, wherefore we have found a non-empty seed $\mathbf{S} := \dx{}(Q) \subset \mD$ for which $\tuple{\dx{}(Q_{\mathsf{can}}(\mathbf{S})), \dnx{}(Q_{\mathsf{can}}(\mathbf{S})), \emptyset} = \tuple{\dx{}(Q), \dnx{}(Q), \emptyset}$, which is a contradiction.

Now we show that (2.)\ can only be true if $|\dx{}(Q)| \geq 2$. For, the assumption of the opposite, i.e.\ $|\dx{}(Q)| = 1$ or equivalently $\dx{}(Q) = \setof{\md_i}$, means that $Q$ must not include an entailment of $\mo_i^{*}$. However, as $Q \neq \emptyset$ (Definition~\ref{def:query}) and $Q$ is a subset of the entailments of $\mo_i^{*}$ (Proposition~\ref{prop:properties_of_q-partitions},(\ref{prop:properties_of_q-partitions:enum:query_is_set_of_common_ent}.)). This is a contradiction.

Ad 3.: Let us assume that $\dx{}(Q_{\mathsf{can}}(\dx{}(Q))) \subseteq \dx{}(Q)$. Then, either $\dx{}(Q_{\mathsf{can}}(\dx{}(Q))) = \dx{}(Q)$ or $\dx{}(Q_{\mathsf{can}}(\dx{}(Q))) \subset \dx{}(Q)$. In the former case, we directly obtain a contradiction since clearly $\tuple{\dx{}(Q_{\mathsf{can}}(\mathbf{S})), \dnx{}(Q_{\mathsf{can}}(\mathbf{S})), \emptyset} = \tuple{\dx{}(Q), \dnx{}(Q), \emptyset}$ for the non-empty seed $\mathbf{S} := \dx{}(Q) \subset \mD$. In the latter case, there is some $\md_i \in \dx{}(Q) \setminus \dx{}(Q_{\mathsf{can}}(\dx{}(Q)))$. Since $\dz{}(Q)$ is assumed to be the empty set and since $Q_{\mathsf{can}}(\dx{}(Q))$ is an explicit-entailments query which must satisfy $\dz{}(Q_{\mathsf{can}}(\dx{}(Q))) =\emptyset$ by Proposition~\ref{prop:d0}, $\md_i \in \dnx{}(Q_{\mathsf{can}}(\dx{}(Q)))$ must hold. This implies that $\mo_i^{*} \cup Q_{\mathsf{can}}(\dx{}(Q))$ violates some $x \in \RQ \cup \Tn$. But, as $\md_i \in \dx{}(Q)$ and as the entailment relation in $\mathcal{L}$ is extensive, we must also have that $\mo_i^{*} = (\mo \setminus \md_i) \cup \mb \cup U_\Tp \supseteq (\mo \setminus U_{\dx{}(Q)}) \cup \mb \cup U_\Tp \supseteq (\mo \setminus U_{\dx{}(Q)}) \supseteq (\mo \setminus U_{\dx{}(Q)}) \cap \DiscAx_\mD = E_\mathsf{exp}(\dx{}(Q)) \cap \DiscAx_\mD = Q_{\mathsf{can}}(\dx{}(Q)) \models Q_{\mathsf{can}}(\dx{}(Q))$. By idempotence of the entailment relation in $\mathcal{L}$, we obtain that $\mo_i^{*} \cup Q_{\mathsf{can}}(\dx{}(Q)) \equiv \mo_i^{*}$. Altogether, we have derived that $\mo_i^{*}$ violates some $x \in \RQ \cup \Tn$ which is a contradiction to $\md_i$ being a diagnosis w.r.t.\ $\langle\mo,\mb,\Tp,\Tn\rangle_\RQ$.

Ad 4.: Assume that $J^*$ is a conflict w.r.t.\ $\langle\mo,\mb,\Tp,\Tn\rangle_\RQ$ for some $J^* \in  \bigcup_{\setof{i\,|\,\md_i\in\dx{}(Q)}} \Just(Q,\mo_i^{*})$. However, for $J^*$, by Definition~\ref{def:justification}, $J^* \subseteq \mo_{i^*}^{*}$ must hold for some $i^* \in \setof{i\,|\,\md_i\in\dx{}(Q)}$. Due to the fact that $\md_{i^*}$ is a diagnosis w.r.t.\ $\langle\mo,\mb,\Tp,\Tn\rangle_\RQ$, we see (by Propositions~\ref{prop:notions_equiv} and \ref{prop:mindiag_mincs}) that $\mo_{i^*}^{*}$ cannot comprise any conflict sets w.r.t.\ $\langle\mo,\mb,\Tp,\Tn\rangle_\RQ$. Hence, $J^*$ cannot be a conflict set w.r.t.\ $\langle\mo,\mb,\Tp,\Tn\rangle_\RQ$.

Ad 5.(a): Assume that $\md_m \not\subset U_{\dx{}(Q)}$ for all $\md_m \in \dnx{}(Q)$. Then, for all $\md_m \in \dnx{}(Q)$, there is one axiom $\tax_m \in \md_m$ such that $\tax_m \notin U_{\dx{}(Q)}$. Additionally, $\tax_m \notin \md$ for at least one $\md \in \dx{}(Q)$ (as otherwise $\tax_m \in U_{\dx{}(Q)}$ would be true), i.e.\ $\tax_m \notin I_\mD$, and $\tax_m \in U_\mD$ since $\tax_m \in \md_m$ with $\md_m \in \mD$. So, $\tax_m \in \DiscAx_\mD$ must be valid. Because of $Q_{\mathsf{can}}(\dx{}(Q)) = (\mo \setminus U_{\dx{}(Q)}) \cap \DiscAx_\mD$, we obtain that $\tax_m \in Q_{\mathsf{can}}(\dx{}(Q))$.  Therefore, for all $\md_m \in \dnx{}(Q)$, it must be true that $\mo_m^{*} \cup Q_{\mathsf{can}}(\dx{}(Q))$ violates some $x \in \RQ \cup \Tn$ by the subset-minimality of the diagnoses $\md_m \in \dnx{}(Q)$. Consequently, $\md_m \in \dnx{}(Q_{\mathsf{can}}(\dx{}(Q)))$ for all $\md_m \in \dnx{}(Q)$ which implies that $\dnx{}(Q_{\mathsf{can}}(\dx{}(Q))) = \dnx{}(Q)$. Hence, (cf.\ argumentation in the proof of (3.)\ before) also $\dx{}(Q_{\mathsf{can}}(\dx{}(Q))) = \dx{}(Q)$. This, however, means that we have found a non-empty seed $\mathbf{S} := \dx{}(Q) \subset \mD$ for which $\tuple{\dx{}(Q_{\mathsf{can}}(\mathbf{S})), \dnx{}(Q_{\mathsf{can}}(\mathbf{S})), \emptyset} = \tuple{\dx{}(Q), \dnx{}(Q), \emptyset}$, which is a contradiction.

Ad 5.(b): By (5.(a)), we know that there is some $\md_m \in \dnx{}(Q)$ such that $\md_m \subset U_{\dx{}(Q)}$. Since $\md_m \in \dnx{}(Q)$, it holds that $\mo_m^{*} \cup Q$ violates some $x \in \RQ \cup \Tn$. Further on, as $\mo_i^{*} \cup Q$ does not violate any $x \in \RQ \cup \Tn$ for all $\md_i \in \dx{}(Q)$, we can derive by the monotonicity of $\mathcal{L}$ that $(\mo \setminus U_{\dx{}(Q)}) \cup \mb \cup U_\Tp$ cannot violate any $x \in \RQ \cup \Tn$ either. A conflict set $\mc$ w.r.t.\ $\langle\mo\setminus\md_m,\mb,\Tp\cup\setof{Q},\Tn\rangle_\RQ$ is a subset of $\mo\setminus\md_m$ such that $\mc_m \cup \mb \cup U_\Tp \cup Q$ violates some $x \in \RQ\cup\Tn$. Because of the fact that $\md_m \subset U_{\dx{}(Q)}$, we have that $\mo_m^{*} \cup Q \supset (\mo \setminus U_{\dx{}(Q)}) \cup \mb \cup U_\Tp$. The set difference of these two sets is then exactly $U_{\dx{}(Q)} \setminus \md_m$. As the smaller set does not contain any conflicts whereas the larger set does so, we can conclude that each conflict set for the larger set, i.e.\ each conflict set $\mc$ w.r.t.\ $\langle\mo\setminus\md_m,\mb,\Tp\cup\setof{Q},\Tn\rangle_\RQ$, must contain at least one axiom in $U_{\dx{}(Q)} \setminus \md_m$.

Ad 5.(c): Assume that $J \cap \md_m = \emptyset$ for some $J \in  \bigcup_{\setof{i\,|\,\md_i\in\dx{}(Q)}} \Just(Q,\mo_i^{*})$. That is, there is some $i^* \in \setof{i\,|\,\md_i\in\dx{}(Q)}$ such that $J \in \Just(Q,\mo_{i^*}^{*})$. Since however $J \cap \md_m = \emptyset$, we can deduce (cf.\ Lemma~\ref{lem:U_D_notin_B_and_notin_U_P}) that $J \subseteq (\mo_{i^*}^{*} \setminus \md_m) = [(\mo \setminus \md_{i^*}) \cup \mb \cup U_\Tp]\setminus \md_m = (\mo \setminus (\md_{i^*}\cup\md_m)) \cup \mb \cup U_\Tp \subseteq (\mo \setminus \md_m) \cup \mb \cup U_\Tp = \mo_m^{*}$. So, by monotonicity of $\mathcal{L}$, we have that $\mo_m^{*} \models Q$. On the other hand, $\md_m \in \dnx{}(Q)$ implies that $\mo_m^{*} \cup Q$ violates some $x \in \RQ \cup \Tn$. Since the entailment relation in $\mathcal{L}$ is idempotent, it is thus true that $\mo_m^{*} \cup Q \equiv \mo_m^{*}$. This yields that $\mo_m^{*}$ violates some $x \in \RQ \cup \Tn$ which is a contradiction to the fact that $\md_m$ is a diagnosis w.r.t.\ $\langle\mo,\mb,\Tp,\Tn\rangle_\RQ$.
\end{proof}

We want to point out that the non-fulfillment of both some criterion given by Proposition~\ref{prop:necess_cond_when_q-partition_is_not_canonical_CASE_a} and some criterion given by Proposition~\ref{prop:necess_cond_when_q-partition_is_not_canonical_CASE_b} leads to the impossibility of any q-partitions that are not canonical. 

For instance, a comprehensive study~\cite{Horridge2012b} on entailments and justifications dealing with a large corpus of real-world KBs (ontologies) from the Bioportal Repository\footnote{http://bioportal.bioontology.org} reveals that the probability for a q-partition to 
be non-canonical
can be rated pretty low in the light of the sophisticated requirements enumerated by Propositions~\ref{prop:necess_cond_when_q-partition_is_not_canonical_CASE_a} and \ref{prop:necess_cond_when_q-partition_is_not_canonical_CASE_b}. In particular, the study showed that 
\begin{enumerate}[label=(i)]
	\item only one third, precisely 72 of 214, of the KBs featured so-called non-trivial entailments.
\end{enumerate}
A logical sentence $\alpha$ with $\mo \models \alpha$ is a \emph{non-trivial entailment} of the KB $\mo$ iff 
$\mo \setminus \setof{\alpha} \models \alpha$, i.e.\ $\alpha$ is either not explicitly stated as KB axiom or is still entailed by the KB after having been deleted from it. Otherwise, $\alpha$ is termed \emph{trivial entailment}. In other words, an entailment $\alpha$ is non-trivial iff there is at least one justification for $\alpha$ that does not include $\alpha$. 

\begin{example}\label{ex:non-trivial_entailments}
For instance, all explicit entailments of the KB $\mo$ in our example DPI given by Table~\ref{tab:example_dpi_0}, i.e.\ all $\alpha \in \mo$, are trivial entailments. So, the deletion of, e.g., $\tax_5 = K \to E$ cannot be ``compensated'' by the rest of the KB, i.e.\ $\mo \setminus \setof{\tax_5} \not\models K \to E$.  

An example of a non-trivial entailment of $\mo$ is $\alpha := F \to \lnot G$ which is not stated explicitly in $\mo$, but a logical consequence of $\tax_2 = X \vee F \to H$ and $\tax_1 = \lnot H \vee \lnot G$, i.e.\ $\setof{\tax_1,\tax_2}$ is a justification of $\alpha$ that does not include $\alpha$.\qed
\end{example}
Note that an implicit entailment as per our definition (see page~\pageref{etc:def_explicit_implicit_entailment}) is always a non-trivial entailment since it is not an axiom in the KB. 
Consequently, at least two thirds of the Bioportal KBs did not exhibit any implicit entailments.
By Condition~(2.)\ of Proposition~\ref{prop:necess_cond_when_q-partition_is_not_canonical_CASE_a} as well as Condition~(1.)\ of Proposition~\ref{prop:necess_cond_when_q-partition_is_not_canonical_CASE_b}, there cannot be any non-canonical q-partitions in such a KB.

Further on, the study unveiled that 
\begin{enumerate}[label=(ii), resume*]
	\item in more than a third (25) of the remaining 72 KBs the average number of justifications per non-trivial entailment was lower than 2.
\end{enumerate}
Such entailments, however, 
do not satisfy 
Condition (3.1) of Proposition~\ref{prop:necess_cond_when_q-partition_is_not_canonical_CASE_a}, 
unless there is a single justification that is a subset of all KBs $\mo_i^*$ for $\md_i\in\dx{}(Q)$. Furthermore, such entailments do not comply with Condition~(2.)\ of Proposition~\ref{prop:necess_cond_when_q-partition_is_not_canonical_CASE_b}. 

Additionally, it was reported that
\begin{enumerate}[label=(iii), resume*]
	\item the average size of a justification for a non-trivial entailment was measured to be lower than 4 for 85\% (61) and below 2 axioms for a good half (43) of the 72 KBs.
\end{enumerate}
On average, a KB among those 72 had 10 645 axioms. Now, let us assume in such a KB a diagnosis $\md_j$ with $|\md_j|=10$ that -- by Condition~(3.2) of Proposition~\ref{prop:necess_cond_when_q-partition_is_not_canonical_CASE_a} -- must be hit by every justification $J$ for $S_j$. Then the probability for $J$ with $|J|=4$ and $|J|=2$ to contain at least one axiom of $\md_j$ is roundly $0.004$ and $0.002$.\footnote{The latter two values can be easily computed by means of the Hypergeometric Distribution as $1 - p(S = 0)$ where ${S \sim Hyp_{n,N,M}}$ measures the number of successes (number of ``good ones'') when drawing $n$ elements from a set of $N$ elements with $M$ ``good ones'' among the $N$. In that, 
$p(S = s) = \binom{M}{s}\binom{N-M}{n-s}/\binom{N}{n}$. 
For the calculation of the two values, the parameters $\tuple{n,N,M}=\tuple{4,10 645,10}$ and $\tuple{n,N,M}=\tuple{2,10 645,10}$ are used.} Fulfillment of Condition~(5.(c)) of Proposition~\ref{prop:necess_cond_when_q-partition_is_not_canonical_CASE_b} can be analyzed in a very similar way.
Thus, the necessary conditions for non-canonical q-partitions will be hardly satisfied on average in this dataset of KBs.

Consequently, albeit we might (in case non-canonical q-partitions do exist) give up perfect completeness of q-partition search by the restriction to only canonical q-partitions, we have argued based on one comprehensive real-world dataset of KBs that in practice there might be high probability that we might miss none or only very few q-partitions. We can cope well with that since canonical queries and q-partitions bring along nice computational properties. Moreover, given practical numbers of leading diagnoses per iteration, e.g.\ $\approx 10$, cf.\ \cite{Shchekotykhin2012,Rodler2013}, the number of canonical q-partitions considered this way will prove to be still large enough to identify (nearly) optimal q-partitions (and queries) for all discussed measures, as our preliminary experiments (still unpublished) suggest. Theoretical support for this is given e.g.\ by Corollary~\ref{cor:upper_lower_bound_for_canonical_q-partitions}.

\subsubsection{The Search for Q-Partitions}\label{sec:search_for_q-partitions}

\paragraph{Overall Algorithm.}
We now turn to the specification of the search procedure for q-partitions. According to \cite{russellnorvig2010}, a search problem can be described by giving the \emph{initial state}, a \emph{successor function}\footnote{For problems in which only the possible actions in a state, but not the results (i.e.\ neighbor states) of these actions are known, one might use a more general way of specifying the successor function. One such way is called \emph{transition model}, see \cite[p.~67]{russellnorvig2010}. For our purposes in this work, the notion of a successor function is sufficiently general.} enumerating all direct neighbor states of a given state, the \emph{step/path costs} from a given state to a successor state, some \emph{heuristics} which should estimate the remaining effort (or: additional steps) towards a goal state from some given state, and the \emph{goal test} which determines whether a given state is a goal state or not. In the case of our search for an optimal q-partition according to some query quality measure $m$ (see Section~\ref{sec:ActiveLearningInInteractiveOntologyDebugging}), these parameters can be instantiated as follows:
\begin{itemize}
	\item \emph{Initial State.}\label{etc:initial_state} 
	There are two natural candidates for initial states in our q-partition search, either $\langle\emptyset,\mD,\emptyset\rangle$ or $\langle\mD,\emptyset,\emptyset\rangle$. Note that both of these initial states are partitions of $\mD$, but no q-partitions -- however, all other states in the search will be (canonical) q-partitions. These initial states induce two different searches, one starting from empty $\dx{}$ which is successively filled up by transferring diagnoses from $\dnx{}$ to $\dx{}$, and the other conducting the same procedure with $\dx{}$ and $\dnx{}$ interchanged. We will call the former $\dx{}$-partitioning and the latter $\dnx{}$-partitioning search.
	\item \emph{Successor Function.} To specify a suitable successor function, we rely on the notion of a minimal transformation. In that, $\dx{}$-partitioning and $\dnx{}$-partitioning search will require slightly different characterizations of a minimal transformation: 
	\begin{definition}\label{def:minimal_transformation}
	Let $\mD \subseteq \minD_{\langle\mo,\mb,\Tp,\Tn\rangle_\RQ}$, $\Pt_i := \langle \dx{i},\dnx{i},\emptyset\rangle$ be a partition (not necessarily a q-partition) of $\mD$ and $\Pt_j := \langle \dx{j},\dnx{j},\emptyset\rangle$ be a canonical q-partition w.r.t.\ $\mD$ such that $\langle \dx{j},\dnx{j},\emptyset\rangle \neq \langle \dx{i},\dnx{i},\emptyset\rangle$. Then, we call 
	\begin{itemize}
		\item $\Pt_i \mapsto \Pt_j$ a \emph{minimal $\dx{}$-transformation} from $\Pt_i$ to $\Pt_j$ iff $\dx{i} \subset \dx{j}$ and there is no canonical q-partition $\langle \dx{k},\dnx{k},\emptyset\rangle$ such that $\dx{i} \subset \dx{k} \subset \dx{j}$.
		\item $\Pt_i \mapsto \Pt_j$ a \emph{minimal $\dnx{}$-transformation} from $\Pt_i$ to $\Pt_j$ iff $\dnx{i} \subset \dnx{j}$ and there is no canonical q-partition $\langle \dx{k},\dnx{k},\emptyset\rangle$ such that $\dnx{i} \subset \dnx{k} \subset \dnx{j}$.
	\end{itemize}
	\end{definition}
	
	The successor function then maps a given partition $\Pt$ w.r.t.\ $\mD$ to the set of all its possible successors, i.e.\ to the set including all canonical q-partitions that result from $\Pt$ by a minimal transformation.
	
	The reliance upon a minimal transformation guarantees that the search is complete w.r.t.\ canonical q-partitions 
	because we cannot skip over any canonical q-partitions when transforming a state into a direct successor state. In the light of the initial states for $\dx{}$- as well as $\dnx{}$-partitioning being no q-partitions, the definition of the successor function involves specifying 
	\begin{itemize}
		\item a function $S_{\mathsf{init}}$ that maps the initial state to the set of all canonical q-partitions that can be reached by it by a single minimal transformation, and
		\item a function $S_{\mathsf{next}}$ that maps any canonical q-partition to the set of all canonical q-partitions that can be reached by it by a single minimal transformation
	\end{itemize}
As $\dx{}$- and $\dnx{}$-partitioning require different successor functions, they must be treated separately. In this work, we restrict ourselves to $\dx{}$-partitioning. The derivation and specification of a \emph{sound} and \emph{complete} successor function $S_{\mathsf{init}} \cup S_{\mathsf{next}}$ (computing \emph{only} and \emph{all} canonical q-partitions resulting from any state in the search by a minimal $\dx{}$-transformation) will be treated in Section~\ref{sec:AlgorithmForSuccessorComputation} after the overall algorithm has been discussed in what follows.
	\item \emph{Path Costs / Heuristics.} Since the step costs from one q-partition to a direct successor q-partition of it is of no relevance to our objective, which is finding a sufficiently ``good'' q-partition, we do not specify any step costs. In other words, we are not interested in the way how we constructed the optimal q-partition starting from the initial state, but only in the shape of the optimal q-partition as such. However, what we do introduce is some estimate of ``goodness'' of q-partitions in terms of heuristics. These heuristics are dependent on the used query quality measure $m$ and are specified based on the respective optimality criteria given by requirements $r_m$ (cf.\ Sections~\ref{sec:ActiveLearningInInteractiveOntologyDebugging} and \ref{sec:QPartitionRequirementsSelection}). We characterize these for all measures discussed in Section~\ref{sec:ActiveLearningInInteractiveOntologyDebugging} in the procedure \textsc{heur} in Algorithm~\ref{algo:dx+_best_successor}.
	\item \emph{Goal Test.} A canonical q-partition is considered as a goal state of the search iff it meets the requirements $r_m$ to a ``sufficient degree''. The latter is predefined in terms of some optimality threshold $\tr$. So, a state (canonical q-partition) is a goal if its ``distance'' to optimality, i.e.\ perfect fulfillment of $r_m$, is smaller than $\tr$ (cf.\ \cite{Shchekotykhin2012}). The goal test for all measures discussed in Section~\ref{sec:ActiveLearningInInteractiveOntologyDebugging} is described in Algorithm~\ref{algo:dx+_opt}.
\end{itemize}

The overall search algorithm for finding a (sufficiently) optimal q-partition is presented by Algorithm~\ref{algo:dx+_part}. The inputs to the algorithm are a set $\mD$ of leading minimal diagnoses w.r.t.\ the given DPI $\tuple{\mo,\mb,\Tp,\Tn}_\RQ$, the requirements $r_m$ to an optimal q-partition, a threshold $t_m$ describing the distance from the theoretical optimum of $r_m$ (cf.\ Table~\ref{tab:requirements_for_equiv_classes_of_measures_wrt_equiv_mQ}) within which we still consider a value as optimal, and a probability measure $p$ defined over the sample space $\mD$. The latter assigns a probability $p(\md)\in (0,1]$ to all $\md \in \mD$ such that $\sum_{\md\in\mD} p(\md) = 1$. To be precise, $p(\md)$ describes the probability that $\md$ is the true diagnosis assuming that one diagnosis in $\mD$ must be the correct one.\footnote{For a more in-depth treatment of the diagnosis probability space and a discussion where such probabilities might originate from, see \cite[Sec.~4.6]{Rodler2015phd}} Moreover, $p(\mathbf{S}) = \sum_{\md\in\mathbf{S}} p(\md)$ for any $\mathbf{S} \subseteq \mD$. The output of the algorithm is a canonical q-partition that is optimal (as per $r_m$) within the tolerance $t_m$ if such a q-partition exists, and the best (as per $r_m$) among all canonical q-partitions w.r.t.\ $\mD$, otherwise.

The algorithm consists of a single call of \textsc{$\dx{}$Partition} with forwarded arguments $\tuple{\emptyset,\mD,\emptyset}$, $\langle\emptyset,\mD$, $\emptyset\rangle$, $\emptyset$, $p,t_m,r_m$ which initiates the recursive search procedure. The type of the implemented search can be regarded as a depth-first, ``local best-first'' backtracking algorithm. \emph{Depth-first} because, starting from the initial partition $\tuple{\emptyset,\mD,\emptyset}$ (or:\ root node), the search will proceed \emph{downwards} until (a)~an optimal canonical q-partition has been found, (b)~all successors of the currently analyzed q-partition (or:\ node) have been pruned or (c)~there are no successors of the currently analyzed q-partition (or:\ node). \emph{Local best-first} because the search, at each current q-partition (or:\ node), moves on to the best \emph{successor} q-partition (or: \emph{child} node) as per some heuristics constructed based on $r_m$. \emph{Backtracking} because the search procedure is ready to backtrack in case all successors (or:\ children) of a q-partition (or:\ node) have been explored and no optimal canonical q-partition has been found yet. In this case, the next-best unexplored sibling of the node will be analyzed next according to the implemented local best-first depth-first strategy.

\textsc{$\dx{}$Partition} expects six arguments. The first, $\Pt$, represents the currently analyzed q-partition or, equivalently, node in the search tree. The second, $\Pt_{\mathsf{b}}$, denotes the best q-partition (as per $r_m$) that has been discovered so far during the search. The third, $\mD_{\mathsf{used}}$, constitutes the set of diagnoses that must remain elements of the $\dnx{}$-set, i.e.\ must not be moved to the $\dx{}$-set, for all q-partitions generated as direct or indirect successors of $\Pt$. The elements of $\mD_{\mathsf{used}}$ are exactly those diagnoses in $\mD$ for which all existing canonical q-partitions $\Pt_x$ with $\dx{}(\Pt_x) \supseteq \mD_{\mathsf{used}}$ have already been explored so far, i.e.\ these diagnoses have already been \emph{used} as elements of the $\dx{}$-set. Hence, we do not want to add any of these to any $\dx{}$-set of a generated q-partition anymore. 
The last three arguments $p, t_m$ as well as $r_m$ are explained above. We point out that only the first three arguments $\Pt$, $\Pt_{\mathsf{b}}$ and $\mD_{\mathsf{used}}$ vary throughout the entire execution of the search. All other parameters remain constant.  

The first step within the \textsc{$\dx{}$Partition} procedure (line~\ref{algoline:dx+_part:updateBest}) is the update of the best q-partition found so far. This is accomplished by the function \textsc{updateBest} (see Algorithm~\ref{algo:dx+_update_best} on page~\pageref{algo:dx+_update_best}) which returns the partition among $\setof{\Pt,\Pt_{\mathsf{b}}}$ which is better w.r.t.\ $r_m$. For some of the query quality measures $m$, this will require calculations involving probabilities, which is why the parameter $p$ is handed over to the function as well. After the execution of \textsc{updateBest}, the best currently known q-partition is stored in $\Pt_{\mathsf{best}}$. 

The next step (line~\ref{algoline:dx+_part:opt}) involves a check of $\Pt_{\mathsf{best}}$ for optimality w.r.t.\ the query quality requirements $r_m$ and the threshold $t_m$ and is realized by the function \textsc{opt} (see Algorithm~\ref{algo:dx+_opt} on page~\pageref{algo:dx+_opt}). The latter returns $\true$ iff $\Pt_{\mathsf{best}}$ is an optimal q-partition w.r.t.\ $r_m$ and the threshold $t_m$. If optimality of $\Pt_{\mathsf{best}}$ is given, then the most recent call to \textsc{$\dx{}$Partition} returns (line~\ref{algoline:dx+_part:return--opt_found}) and passes back the tuple $\tuple{\Pt_{\mathsf{best}},\true}$ to the one level higher point in the recursion where the call was made. The parameter $\true$ in this tuple is a flag that tells the caller of \textsc{$\dx{}$Partition} that $\Pt_{\mathsf{best}}$ is already optimal an no further search activities are required.

In case optimality of $\Pt_{\mathsf{best}}$ is not satisfied, the procedure moves on to line~\ref{algoline:dx+_part:prune} where a pruning test is executed, implemented by the function \textsc{prune} (see Algorithm~\ref{algo:dx+_prune} on page~\pageref{algo:dx+_prune}). The latter, given the inputs $\Pt, \Pt_{\mathsf{best}}, p$ and $r_m$, evaluates to $\true$ if exploration of successor q-partitions of the currently analyzed q-partition $\Pt$ cannot lead to the detection of q-partitions that are better w.r.t.\ $r_m$ than $\Pt_{\mathsf{best}}$. If the pruning test returns positively, the tuple $\tuple{\Pt_{\mathsf{best}},\false}$ is returned, where $\false$ signalizes that $\Pt_{\mathsf{best}}$ is not optimal w.r.t.\ $r_m$ and $t_m$.

Facing a negative pruning test, the algorithm continues at line~\ref{algoline:dx+_part:getD+Sucs}, where the function \textsc{get$\dx{}$Sucs} (see Algorithm~\ref{algo:dx+_suc} on page~\pageref{algo:dx+_suc}) is employed to generate and store in $sucs$ all successors of the currently analyzed partition $\Pt$ which result from $\Pt$ by a minimal $\dx{}$-transformation. 

Given the set $sucs$ of all direct successors, the algorithm enters the \textbf{while}-loop in order to examine all successors in turn. To this end, the algorithm is devised to select always the best not-yet-explored successor q-partition in $sucs$ according to some heuristics based on $r_m$. This selection is implemented by the function \textsc{bestSuc} (see Algorithm~\ref{algo:dx+_best_successor} on page~\pageref{algo:dx+_best_successor}) in line~\ref{algoline:dx+_part:bestSuc} which gets $\Pt$, $sucs$, $p$ and $r_m$ as inputs. \textsc{bestSuc} returns the best direct successor $\Pt'$ of $\Pt$ w.r.t.\ $r_m$ and one diagnosis $\md$ which is an element of the $\dx{}$-set of this best successor q-partition and is not an element of the $\dx{}$-set of the current partition $\Pt$, i.e.\ $\md$ has been moved from $\dnx{}$ to $\dx{}$ in the context of the minimal $\dx{}$-transformation that maps $\Pt$ to $\Pt'$. 

\begin{remark}\label{rem:bestSuc_returned_diagnosis}
This diagnosis $\md$ serves as a representative of all diagnoses that are moved from $\dnx{}$ to $\dx{}$ in the context of this minimal transformation. It is later (after the next recursive call to \textsc{$\dx{}$Partition} in line~\ref{algoline:dx+_part:D+Partition_recursive_call}) added to $\mD_{\mathsf{used}}$ in line~\ref{algoline:dx+_part:update_mD_used}. This update of the set $\mD_{\mathsf{used}}$ effectuates that a q-partition $\Pt_x$ where $\md$ together with all other diagnoses in $\mD_{\mathsf{used}}$ is in $\dx{}(\Pt_x)$ will never be encountered again during the complete execution of \textsc{findQPartition}. The reason why this is desirable is that all existing canonical q-partitions with a $\dx{}$-set that is a superset of $\mD_{\mathsf{used}}$ have already been explored before the call of \textsc{$\dx{}$Partition} given $\Pt$ as argument, and all existing canonical q-partitions with a $\dx{}$-set that is a superset of $\mD_{\mathsf{used}} \cup \setof{\md}$ have already been explored during the execution of the most recent call of \textsc{$\dx{}$Partition} given $\Pt'$ as argument in line~\ref{algoline:dx+_part:D+Partition_recursive_call}. 

We point out that although only one representative $\md$ from the set of diagnoses that has been moved from $\dnx{}$ to $\dx{}$ in the context of the minimal transformation from $\Pt$ to $\Pt'$ is stored in $\mD_{\mathsf{used}}$, what holds for $\md$ also holds for all other diagnoses moved from $\dnx{}$ to $\dx{}$ together with $\md$ during the transformation. This is achieved by leveraging the fact (see Corollary~\ref{cor:nec_followers_form_equivalence_class_wrt_md^(y)} later) that all the diagnoses moved during a minimal $\dx{}$-transformation form an equivalence class w.r.t.\ a well-defined equivalence relation introduced later in Definition~\ref{def:equiv_rel}. 

Note that within the entire execution of any call of \textsc{$\dx{}$Partition} with arguments (i.a.) $\Pt$ and $\mD_{\mathsf{used}}$ during the execution 
of \textsc{findQPartition}, the set $\mD_{\mathsf{used}}$ will always comprise only diagnoses that are in $\dnx{}(\Pt)$.\qed
\end{remark}

Having computed the best direct successor $\Pt'$ of $\Pt$ by means of \textsc{bestSuc}, the algorithm proceeds to focus on $\Pt'$ which is now the currently analyzed q-partition. This is reflected in line~\ref{algoline:dx+_part:D+Partition_recursive_call} where the method \textsc{$\dx{}$Partition} calls itself recursively with the arguments $\Pt'$ (currently analyzed q-partition), $\Pt_{\mathsf{best}}$ (best currently known q-partition), $\mD_{\mathsf{used}}$ and the constant parameters $p$, $t_m$ and $r_m$. 

The output $\tuple{\Pt'',isOpt}$ of this recursive call of \textsc{$\dx{}$Partition} is then processed in lines~\ref{algoline:dx+_part:if_not_isOpt}-\ref{algoline:dx+_part:return--unwind_recursion_since_opt_found}. In this vein, if $isOpt = \false$, meaning that $\Pt''$ is not optimal w.r.t.\ $r_m$ and $t_m$, then the currently best known q-partition $\Pt_{\mathsf{best}}$ is updated and set to $\Pt''$. The idea behind this assignment is that, during any call of \textsc{$\dx{}$Partition}, the currently best known q-partition can only become better or remain unmodified, and cannot become worse. After this variable update, the just explored q-partition $\Pt'$ is eliminated from the set $sucs$ of successors of $\Pt$, and the next iteration of the \textbf{while}-loop is initiated.

If, on the other hand, $isOpt = \true$, then this indicates that an optimal q-partition w.r.t.\ $r_m$ and $t_m$ has been located and is stored in $\Pt''$. Hence, the algorithm just forwards the result $\tuple{\Pt'',\true}$ to the next higher level in the recursion (line~\ref{algoline:dx+_part:return--unwind_recursion_since_opt_found}). Note that this effectuates that the search immediately returns by unwinding the recursion completely once an optimal q-partition has been found.

Finally, in case none of the explorations of all successors $sucs$ of $\Pt$ has resulted in the discovery of an optimal q-partition, then $\tuple{\Pt_{\mathsf{best}},\false}$ is returned (i.e.\ the algorithm backtracks) in line~\ref{algoline:dx+_part:return--all_sucs_explored} and the execution continues one level higher in the recursion. Visually, concerning the search tree, this means that the parent node of $\Pt$ is again the currently analyzed node and a sibling of $\Pt$ is explored next.  

\begin{algorithm}[tb]
\small
\caption[$\dx{}$-Partitioning]{$\dx{}$-Partitioning (implements \textsc{findQPartition} of Algorithm~\ref{algo:query_comp})}\label{algo:dx+_part}
\begin{algorithmic}[1]
\Require set of minimal diagnoses $\mD$ w.r.t.\ a DPI $\tuple{\mo,\mb,\Tp,\Tn}_\RQ$ satisfying $|\mD| \geq 2$, probability measure $p$, threshold $t_m$, requirements $r_m$ to optimal q-partition 
\Ensure 
a canonical q-partition $\Pt$ w.r.t.\ $\mD$ that is optimal w.r.t.\ $m$ and $t_m$, if $isOptimal = \true$; the best of all canonical q-partitions w.r.t.\ $\mD$ and $m$, otherwise 
\Procedure{findQPartition}{$\mD, p, t_m ,r_m$}
\State $\tuple{\Pt,isOptimal} \gets$ \Call{$\dx{}$Partition}{$\langle \emptyset, \mD, \emptyset \rangle, \langle \emptyset, \mD, \emptyset \rangle, \emptyset, p, t_m, r_m$} \label{algoline:dx+_part:call_D+Partition_from_FindQPartition}
\State \Return $\Pt$
\EndProcedure
\vspace{2pt}
\hrule
\vspace{2pt}
\Procedure{$\dx{}$Partition}{$\Pt,\Pt_{\mathsf{b}}, \mD_{\mathsf{used}} ,p,t_m,r_m$}				\Comment{$p,t_m,r_m$ are constant throughout entire procedure}
\State $\Pt_{\mathsf{best}} \gets \Call{updateBest}{\Pt,\Pt_{\mathsf{b}},p,r_m}$ \label{algoline:dx+_part:updateBest}		\Comment{see Alg.~\ref{algo:dx+_update_best}}
\If{$\Call{opt}{\Pt_{\mathsf{best}},t_m,p,r_m}$}  \label{algoline:dx+_part:opt}						\Comment{see Alg.~\ref{algo:dx+_opt}}
	\State \Return $\tuple{\Pt_{\mathsf{best}},\true}$			\label{algoline:dx+_part:return--opt_found}						\Comment{$\Pt_{\mathsf{best}}$ is optimal q-partition w.r.t.\ $r_m$ and $t_m$}
\EndIf
\If{$\Call{prune}{\Pt,\Pt_{\mathsf{best}},p,r_m}$}   \label{algoline:dx+_part:prune}   \Comment{see Alg.~\ref{algo:dx+_prune}}
	\State \Return $\tuple{\Pt_{\mathsf{best}},\false}$			\label{algoline:dx+_part:return--pruned}						\Comment{all descendant q-partitions of $\Pt$ are no better than $\Pt_{\mathsf{best}}$ w.r.t.\ $r_m$}
\EndIf
\State $sucs \gets \Call{get$\dx{}$Sucs}{\Pt, \mD_{\mathsf{used}}}$	\label{algoline:dx+_part:getD+Sucs}
\While{$sucs \neq \emptyset$}
\State  $\tuple{\Pt',\md} \gets \Call{bestSuc}{\Pt,sucs,p,r_m}$		\label{algoline:dx+_part:bestSuc}  \Comment{see Alg.~\ref{algo:dx+_best_successor}; $\md$ is some diagnosis in $\dx{}(\Pt') \setminus \dx{}(\Pt)$}
\State  $\tuple{\Pt'',isOpt} \gets \Call{$\dx{}$Partition}{\Pt',\Pt_{\mathsf{best}},\mD_{\mathsf{used}},p,t_m,r_m}$   \label{algoline:dx+_part:D+Partition_recursive_call}
\State $\mD_{\mathsf{used}} \gets \mD_{\mathsf{used}} \cup \setof{\md}$   \label{algoline:dx+_part:update_mD_used}
\If{$\lnot isOpt$}  \label{algoline:dx+_part:if_not_isOpt}
	\State $\Pt_{\mathsf{best}} \gets \Pt''$	\label{algoline:dx+_part:opt_not_found_continue}	\Comment{optimal q-partition not found, continue with next successor in $sucs$}
\Else
	\State \Return $\tuple{\Pt'',\true}$		\label{algoline:dx+_part:return--unwind_recursion_since_opt_found}		\Comment{optimal q-partition found, unwind recursion completely and return $\Pt''$}
\EndIf
\State  $sucs \gets sucs \setminus \setof{\Pt'}$ \label{algoline:dx+_part:update_sucs}
\EndWhile
\State  \Return $\tuple{\Pt_{\mathsf{best}},\false}$	\label{algoline:dx+_part:return--all_sucs_explored}		\Comment{all successors in $sucs$ explored, continue at next higher recursion level}
\EndProcedure
\end{algorithmic}
\normalsize
\end{algorithm}

\begin{algorithm}[]
\small
\caption[Update of Best Q-Partition]{Update best q-partition}\label{algo:dx+_update_best}
\begin{algorithmic}[1]
\Require partitions $\Pt$ and $\Pt_{\mathsf{best}}$ (both with empty $\dz{}$) w.r.t.\ the set of leading diagnoses $\mD$, requirements $r_m$ to an optimal q-partition, a (diagnosis) probability measure $p$ 
\Ensure the better partition among $\setof{\Pt,\Pt_{\mathsf{best}}}$ w.r.t.\ $\mD$ and $r_m$; if both partitions are equally good w.r.t.\ $\mD$ and $r_m$, then $\Pt_{\mathsf{best}}$ is returned 
\Procedure{updateBest}{$\Pt,\Pt_{\mathsf{best}},p,r_m$}
\If{$\dx{}(\Pt_{\mathsf{best}}) = \emptyset \lor \dnx{}(\Pt_{\mathsf{best}}) = \emptyset$} \label{algoline:dx+_update_best:if_Pbest_is_no_q-partition_start}   \Comment{$\Pt_{\mathsf{best}}$ is not a q-partition, hence update anyway}
	\State $\Pt_{\mathsf{best}} \gets \Pt$ 				
	\State \Return $\Pt_{\mathsf{best}}$			\label{algoline:dx+_update_best:if_Pbest_is_no_q-partition_end}
\EndIf
\State $\mD \gets \dx{}(\Pt) \cup \dnx{}(\Pt)$	\label{algoline:dx+_update_best:reconstruct_leading_diags}						\Comment{reconstruct leading diagnoses from q-partition $\Pt$}
\If{$m \in \setof{\mathsf{ENT}}$}   \label{algoline:dx+_update_best:if_case_ENT}   \Comment{if $m$ in eq.\ class {\larger\textcircled{\smaller[2]1}} w.r.t.\ Table~\ref{tab:requirements_for_equiv_classes_of_measures_wrt_equiv_mQ}}
	\If{$|p(\dx{}(\Pt)) - 0.5| < |p(\dx{}(\Pt_{\mathsf{best}})) - 0.5|$}
		\State $\Pt_{\mathsf{best}} \gets \Pt$
	\EndIf
\EndIf
\If{$m \in \setof{\mathsf{SPL}}$}			\label{algoline:dx+_update_best:if_case_SPL}    \Comment{if $m$ in eq.\ class {\larger\textcircled{\smaller[2]2}} w.r.t.\ Table~\ref{tab:requirements_for_equiv_classes_of_measures_wrt_equiv_mQ}}
	\If{$\left|\,|\dx{}(\Pt)| - \frac{|\mD|}{2}\right| < \left|\,|\dx{}(\Pt_{\mathsf{best}})| - \frac{|\mD|}{2}\right|$}
		\State $\Pt_{\mathsf{best}} \gets \Pt$
	\EndIf
\EndIf
\If{$m \in \setof{\mathsf{RIO}}$} 	\label{algoline:dx+_update_best:if_case_RIO}	
\Comment{if $m$ in eq.\ class {\larger\textcircled{\smaller[2]3}} w.r.t.\ Table~\ref{tab:requirements_for_equiv_classes_of_measures_wrt_equiv_mQ}}
			\State $\uc \gets \Call{getCaut}{r_m}$		\label{algoline:dx+_update_best:getCaut}
			\State $n \gets \left\lceil \uc |\mD|\right\rceil$    \label{algoline:dx+_update_best:getMinimalNumOfDiagsToEliminate}
			\If{$|\dx{}(\Pt)| \geq n$}   	\label{algoline:dx+_update_best:if_CardDxPt_geq_n}
				\If{$|\dx{}(\Pt_{\mathsf{best}})| < n$}    \label{algoline:dx+_update_best:if_PtBest_high-risk_q-partition} 
					\State $\Pt_{\mathsf{best}} \gets \Pt$   \label{algoline:dx+_update_best:RIO_update_Pbest_1}  \Comment{$\Pt$ is non-high-risk and $\Pt_{\mathsf{best}}$ high-risk q-partition}
				\ElsIf{$\left|n - |\dx{}(\Pt)| \right| < \left|n - |\dx{}(\Pt_{\mathsf{best}})| \right|$}    \label{algoline:dx+_update_best:if_CardDxPt_closer_to_n_than_CardDxPtBest}        
					\State $\Pt_{\mathsf{best}} \gets \Pt$   \label{algoline:dx+_update_best:RIO_update_Pbest_2}\Comment{both $\Pt_{\mathsf{best}}$ and $\Pt$ non-high-risk, but $\Pt$ less cautious}
				\ElsIf{$\left|n - |\dx{}(\Pt)| \right| = \left|n - |\dx{}(\Pt_{\mathsf{best}})| \right|$}					\label{algoline:dx+_update_best:if_CardDxPt_equal_to_CardDxPtBest}
					\If{$\left|0.5 - p(\dx{}(\Pt))\right| < \left|0.5 - p(\dx{}(\Pt_{\mathsf{best}}))\right|$}			\label{algoline:dx+_update_best:if_ProbDxPt_closer_to_0.5_than_ProbDxPtBest}
						\State $\Pt_{\mathsf{best}} \gets \Pt$ \label{algoline:dx+_update_best:RIO_update_Pbest_3} \Comment{both non-high-risk, equally cautious, but $\Pt$ better $\mathsf{ENT}$ value}
					\EndIf
				\EndIf
			\EndIf
\EndIf
\If{$m \in \setof{\mathsf{KL}}$}				\label{algoline:dx+_update_best:if_case_KL}
	\State $\mathsf{KL}_{\mathsf{best}} \gets \Call{computeKL}{\Pt_{\mathsf{best}}}$
	\State $\mathsf{KL}_{\mathsf{new}} \gets \Call{computeKL}{\Pt}$
	\If{$\mathsf{KL}_{\mathsf{new}} > \mathsf{KL}_{\mathsf{best}}$}
		\State $\Pt_{\mathsf{best}} \gets \Pt$
	\EndIf
\EndIf
\If{$m \in \setof{\mathsf{EMCb}}$}				\label{algoline:dx+_update_best:if_case_EMCb}
	\State $\mathsf{KL}_{\mathsf{best}} \gets \Call{computeEMCb}{\Pt_{\mathsf{best}}}$
	\State $\mathsf{KL}_{\mathsf{new}} \gets \Call{computeEMCb}{\Pt}$
	\If{$\mathsf{EMCb}_{\mathsf{new}} > \mathsf{EMCb}_{\mathsf{best}}$}
		\State $\Pt_{\mathsf{best}} \gets \Pt$
	\EndIf
\EndIf
\If{$m \in \setof{\mathsf{MPS}}$}				\label{algoline:dx+_update_best:if_case_MPS}    \Comment{if $m$ in eq.\ class {\larger\textcircled{\smaller[2]6}} w.r.t.\ Table~\ref{tab:requirements_for_equiv_classes_of_measures_wrt_equiv_mQ}}
	\If{$|\dx{}(\Pt)| = 1 \land p(\dx{}(\Pt)) > p(\dx{}(\Pt_{\mathsf{best}}))$}   \label{algoline:dx+_update_best:MPS_if}
		\State $\Pt_{\mathsf{best}} \gets \Pt$															\label{algoline:dx+_update_best:if_case_set_Pbest}
	\EndIf
\EndIf
\If{$m \in \setof{\mathsf{BME}}$}					\label{algoline:dx+_update_best:if_case_BME}
	\State $\mD' \gets \argmin_{\mD^*\in\setof{\dx{}(\Pt),\dnx{}(\Pt)}}(p(\mD^*))$
	\State $\mD'_{\mathsf{best}} \gets \argmin_{\mD^*\in\setof{\dx{}(\Pt_{\mathsf{best}}),\dnx{}(\Pt_{\mathsf{best}})}}(p(\mD^*))$
	\If{$|\mD'| > |\mD'_{\mathsf{best}}|$}
		\State $\Pt_{\mathsf{best}} \gets \Pt$  \label{algoline:dx+_update_best:BME_Pbest_gets_P}
	\EndIf
\EndIf
\State \Return $\Pt_{\mathsf{best}}$
\EndProcedure
\end{algorithmic}
\normalsize
\end{algorithm}

Before we elucidate the functions (i.e.\ Algorithms~\ref{algo:dx+_update_best}, \ref{algo:dx+_opt}, \ref{algo:dx+_prune} and \ref{algo:dx+_best_successor}) called by \textsc{$\dx{}$Partition} in Algorithm~\ref{algo:dx+_part}, take note of the following remarks:
\begin{remark}\label{rem:dz=0_is_maintained_throughout_search} We want to emphasize that the favorable feature $\dz{}(Q) = \emptyset$ of canonical queries $Q$ is preserved throughout the execution of the $\textsc{calcQuery}$ function (Algorithm~\ref{algo:query_comp}). In other words, the algorithm will always return a query with empty $\dz{}(Q)$ as all upcoming functions (that are executed after \textsc{FindQPartition}) are q-partition preserving, i.e.\ they possibly manipulate the query, but leave the q-partition invariant. \qed
\end{remark}

\begin{remark}\label{rem:RIO_only_onesided_search}
The code specified for the $\mathsf{RIO}$ measure in Algorithms~\ref{algo:dx+_update_best}, \ref{algo:dx+_opt}, \ref{algo:dx+_prune} and \ref{algo:dx+_best_successor} tacitly assumes that we use a one-directional search for the best canonical q-partition with early pruning. That is, since we consider $\dx{}$-partitioning where $\dx{}$ is being gradually filled up with diagnoses starting from the empty set, we premise that the goal is to find only those least cautious non-high-risk queries (cf.\ Section~\ref{sec:rio}) with the feature $|\dx{}|\leq |\dnx{}|$. The reason for this is that, in $\dx{}$-partitioning, the size of the search tree grows proportionally (depth-first) and the search time in the worst case even exponentially with the number of diagnoses in $\dx{}$. Of course, one usually wants to keep the search complexity small. In order to ``complete'' the search for optimal canonical q-partitions (which will be rarely necessary in practical situations as optimal q-partitions seem to be found by means of only one-directional early-pruning search with very high reliability given reasonable values of $t_m$), we would rely on another early-pruning search (possibly executed in parallel) by means of $\dnx{}$-partitioning assuming $|\dnx{}|\leq |\dx{}|$. This can be compared with a bidirectional search (cf.\ \cite[Sec.~3.4.6]{russellnorvig2010}). In this vein, a great deal of time might be saved compared to a one-directional full search.\qed
\end{remark}

\begin{remark}\label{rem:only_1_representative_of_each_measure_eq_class_in_algos}
In Section~\ref{sec:QPartitionRequirementsSelection} we have deduced for each discussed query quality measure $m$ some qualitative requirements $r_m$ to an optimal query w.r.t.\ $m$. These are listed in Table~\ref{tab:requirements_for_all_measures}. However, as our search for optimal q-partitions automatically neglects q-partitions with a non-empty set $\dz{}$, the focus of this search is put exclusively on queries in a set $\mQ \subseteq \mQ_{\mD,\tuple{\mo,\mb,\Tp,\Tn}_\RQ}$ where each $Q \in \mQ$ features $\dz{}(Q) = \emptyset$. For that reason, we can exploit Table~\ref{tab:requirements_for_equiv_classes_of_measures_wrt_equiv_mQ} which states equivalences between the discussed measures in this particular setting. The section ``$\equiv_{\mQ}$'' of the table suggests that there are only seven measures with a different behavior when it comes to the selection of a query (q-partition) from the set $\mQ$. Therefore, we mention only one representative of each equivalence class (namely the same one as given in the first column of Table~\ref{tab:requirements_for_equiv_classes_of_measures_wrt_equiv_mQ}) in Algorithms~\ref{algo:dx+_update_best}, \ref{algo:dx+_opt}, \ref{algo:dx+_prune} and \ref{algo:dx+_best_successor} although the same code is valid for all measures in the same class. For instance, for $\mathsf{EMCa}$ Table~\ref{tab:measure_equiv_classes} tells us that the code given for $\mathsf{ENT}$ must be considered.\qed
\end{remark}

\paragraph{Algorithm for the Update of the Best Q-Partition.} In the following, we make some comments on Algorithm~\ref{algo:dx+_update_best} on page~\pageref{algo:dx+_update_best}. 
Given the inputs consisting of two partitions (not necessarily q-partitions) $\Pt$ and $\Pt_{\mathsf{best}}$ w.r.t.\ the set of leading diagnoses $\mD$, a set of requirements $r_m$ to an optimal q-partition and a (diagnosis) probability measure $p$, the output is the better partition among $\setof{\Pt,\Pt_{\mathsf{best}}}$ w.r.t.\ $\mD$ and $r_m$.

First of all, if either the $\dx{}$ or the $\dnx{}$-set in $\Pt_{\mathsf{best}}$ is empty, then $\Pt_{\mathsf{best}}$ is set to $\Pt$ anyway (lines~\ref{algoline:dx+_update_best:if_Pbest_is_no_q-partition_start}-\ref{algoline:dx+_update_best:if_Pbest_is_no_q-partition_end}). The reason is that, except for the very first call of \textsc{updateBest}, $\Pt$ will always be a q-partition. So, simply put, $\Pt_{\mathsf{best}}$ which is initially no q-partition is made to one at the very first chance to do so.
Second, in line~\ref{algoline:dx+_update_best:reconstruct_leading_diags} the set of leading diagnoses $\mD$, which is needed in the computations performed by the algorithm, is reconstructed by means of the q-partition $\Pt$ (note that $\mD$ is indeed equal to $\dx{}(\Pt) \cup \dnx{}(\Pt)$ since $\dz{}(Q) = \emptyset$ by assumption). What comes next is the check what the used query quality measure $m$ is. Depending on the outcome, the determination of the better q-partition among $\setof{\Pt,\Pt_{\mathsf{best}}}$ is realized in different ways, steered by $r_m$ (cf.\ Table~\ref{tab:requirements_for_equiv_classes_of_measures_wrt_equiv_mQ}). In the case of 
\begin{itemize}
	\item the measure $\mathsf{ENT}$ (line~\ref{algoline:dx+_update_best:if_case_ENT}) and equivalent measures as per the section ``$\equiv_{\mQ}$'' of Table~\ref{tab:measure_equiv_classes}, $r_m$ dictates that the better q-partition is the one for which the sum of the probabilities in $\dx{}$ is closer to $0.5$. Given $p(\dx{}(\Pt))$ is closer to $0.5$ than $p(\dx{}(\Pt_{\mathsf{best}}))$, $\Pt$ becomes the new $\Pt_{\mathsf{best}}$. Otherwise, $\Pt_{\mathsf{best}}$ remains unchanged.
	\item the measure $\mathsf{SPL}$ (line~\ref{algoline:dx+_update_best:if_case_SPL}) and equivalent measures as per the section ``$\equiv_{\mQ}$'' of Table~\ref{tab:measure_equiv_classes}, the better q-partition according to $r_m$ is the one for which the number of diagnoses in $\dx{}$ is closer to $\frac{|\mD|}{2}$. Given $|\dx{}(\Pt)|$ is closer to $\frac{|\mD|}{2}$ than $|\dx{}(\Pt_{\mathsf{best}})|$, $\Pt$ becomes the new $\Pt_{\mathsf{best}}$. Otherwise, $\Pt_{\mathsf{best}}$ remains unchanged.
	\item the measure $\mathsf{RIO}$ (line~\ref{algoline:dx+_update_best:if_case_RIO}) and equivalent measures as per the section ``$\equiv_{\mQ}$'' of Table~\ref{tab:measure_equiv_classes}, at first the current value of the cautiousness parameter is stored in $c$ (line~\ref{algoline:dx+_update_best:getCaut}). Using $c$, the number of leading diagnoses that must be eliminated by the query answer at a minimum are assigned to the variable $n$ (line~\ref{algoline:dx+_update_best:getMinimalNumOfDiagsToEliminate}), cf.\ Section~\ref{sec:rio}. Then, the algorithm makes some tests utilizing $n$ in order to find out whether $\Pt$ is better than $\Pt_{\mathsf{best}}$. In fact, the three requirements in $r_m$ (third row of Table~\ref{tab:requirements_for_equiv_classes_of_measures_wrt_equiv_mQ}) are tested in sequence according to their priority $(\mathrm{I})$--$(\mathrm{III})$. 
	
	That is, if $\dx{}(\Pt)$ has a cardinality larger than or equal to $n$ (line~\ref{algoline:dx+_update_best:if_CardDxPt_geq_n}), i.e.\ $\Pt$ is a non-high-risk q-partition (due to $|\dx{}(\Pt)|\leq|\dnx{}(\Pt)|$, cf.\ Remark~\ref{rem:RIO_only_onesided_search})
	and $\dx{}(\Pt_{\mathsf{best}})$ has a cardinality lower than $n$ (line~\ref{algoline:dx+_update_best:if_PtBest_high-risk_q-partition}), i.e.\ $\Pt_{\mathsf{best}}$ is a high-risk q-partition, $\Pt$ is already better than $\Pt_{\mathsf{best}}$ w.r.t.\ the priority $(\mathrm{I})$ requirement in $r_m$. Hence, $\Pt_{\mathsf{best}}$ is set to $\Pt$ in line~\ref{algoline:dx+_update_best:RIO_update_Pbest_1}.
	
	Otherwise, both $\Pt$ and $\Pt_{\mathsf{best}}$ are non-high-risk q-partitions, i.e.\ equally good w.r.t.\ the priority $(\mathrm{I})$ requirement in $r_m$. In this case, if the cardinality of $\dx{}(\Pt)$ is closer to $n$ than the cardinality of $\dx{}(\Pt_{\mathsf{best}})$ (line~\ref{algoline:dx+_update_best:if_CardDxPt_closer_to_n_than_CardDxPtBest}), then $\Pt$ is less cautious than $\Pt_{\mathsf{best}}$ and thus better regarding the priority $(\mathrm{II})$ requirement in $r_m$ (recall that $\mathsf{RIO}$ searches for the \emph{least cautious} non-high-risk query). Hence, $\Pt_{\mathsf{best}}$ is set to $\Pt$ in line~\ref{algoline:dx+_update_best:RIO_update_Pbest_2}.
	
	Otherwise, both $\Pt$ and $\Pt_{\mathsf{best}}$ are non-high-risk q-partitions and $\Pt$ is at least as cautious as $\Pt_{\mathsf{best}}$. Now, the algorithm checks first whether both are equally cautious (line~\ref{algoline:dx+_update_best:if_CardDxPt_equal_to_CardDxPtBest}). If so, then both $\Pt$ and $\Pt_{\mathsf{best}}$ are equally good w.r.t.\ the priority $(\mathrm{I})$ and $(\mathrm{II})$ requirements in $r_m$. Therefore, the priority $(\mathrm{III})$ requirement in $r_m$ is tested, i.e.\ if $p(\dx{}(\Pt))$ is closer to $0.5$ than $p(\dx{}(\Pt_{\mathsf{best}}))$ (line~\ref{algoline:dx+_update_best:if_ProbDxPt_closer_to_0.5_than_ProbDxPtBest}). If this evaluates to $\true$, then $\Pt_{\mathsf{best}}$ is set to $\Pt$ in line~\ref{algoline:dx+_update_best:RIO_update_Pbest_3}.
	
	In all other cases, $\Pt$ is not better w.r.t.\ $r_m$ than $\Pt_{\mathsf{best}}$, and hence $\Pt_{\mathsf{best}}$ is not updated.
	\item the measure $\mathsf{KL}$ (see line~\ref{algoline:dx+_update_best:if_case_KL}), the algorithm computes the $\mathsf{KL}$ measure for both $\Pt$ and $\Pt_{\mathsf{best}}$ (function \textsc{computeKL}) and compares the obtained values. In case $\Pt$ leads to a better, i.e.\ larger, $\mathsf{KL}$ value, $\Pt_{\mathsf{best}}$ is set to $\Pt$, otherwise not. Note that our analyses conducted in Section~\ref{sec:NewActiveLearningMeasuresForKBDebugging} do not suggest any other plausible, efficient and \emph{general} way to find out which of two given q-partitions is better w.r.t.\ $\mathsf{KL}$. 
	\item the measure $\mathsf{EMCb}$ (see line~\ref{algoline:dx+_update_best:if_case_EMCb}), the algorithm computes the $\mathsf{EMCb}$ measure for both $\Pt$ and $\Pt_{\mathsf{best}}$ (function \textsc{computeEMCb}) and compares the obtained values. In case $\Pt$ leads to a better, i.e.\ larger, $\mathsf{EMCb}$ value, $\Pt_{\mathsf{best}}$ is set to $\Pt$, otherwise not. Note that our analyses conducted in Section~\ref{sec:NewActiveLearningMeasuresForKBDebugging} do not suggest any other plausible, efficient and \emph{general} way to find out which of two given q-partitions is better w.r.t.\ $\mathsf{EMCb}$. 
	\item the measure $\mathsf{MPS}$ (see line~\ref{algoline:dx+_update_best:if_case_MPS}) and equivalent measures as per the section ``$\equiv_{\mQ}$'' of Table~\ref{tab:measure_equiv_classes}, one best q-partition is one for which $|\dx{}| = 1$ and the sum of diagnoses probabilities in $\dx{}$ is maximal (cf.\ Table~\ref{tab:requirements_for_equiv_classes_of_measures_wrt_equiv_mQ} and Proposition~\ref{prop:best_query_wrt_MPS_has_q-partition}). This is reflected by the code in lines~\ref{algoline:dx+_update_best:MPS_if}-\ref{algoline:dx+_update_best:if_case_set_Pbest} which guarantees that only q-partitions with $|\dx{}|=1$ can ever become a currently best q-partition. Note that the heuristics implemented by Algorithm~\ref{algo:dx+_best_successor} will ensure that the best canonical q-partition will always be visited as a very first node (other than the root node) in the search. This holds for $\mathsf{MPS}$, but clearly does not need to hold for the other considered measures.
	\item the measure $\mathsf{BME}$ (see line~\ref{algoline:dx+_update_best:if_case_BME}), the better q-partition is the one for which the set among $\setof{\dx{},\dnx{}}$ with lower probability has the higher cardinality (cf.~Table~\ref{tab:requirements_for_equiv_classes_of_measures_wrt_equiv_mQ}). This is reflected by the code in lines~\ref{algoline:dx+_update_best:if_case_BME}-\ref{algoline:dx+_update_best:BME_Pbest_gets_P}.
\end{itemize}

\begin{algorithm}
\small
\caption[Optimality Check]{Optimality check in $\dx{}$-Partitioning}\label{algo:dx+_opt}
\begin{algorithmic}[1]
\Require partition $\Pt_{\mathsf{best}} = \tuple{\dx{},\dnx{},\emptyset}$ w.r.t.\ the set of leading diagnoses $\mD$, a threshold $t_m$, a (diagnosis) probability measure $p$, requirements $r_m$ to an optimal q-partition,   
\Ensure $\true$ iff $\Pt_{\mathsf{best}}$ is an optimal q-partition w.r.t.\ $r_m$ and $t_m$
\Procedure{opt}{$\Pt_{\mathsf{best}},t_m,p,r_m$}
\If{$\dx{} = \emptyset \lor \dnx{} = \emptyset$}  \label{algoline:dx+_opt:if_dx_or_dnx_emptyset}
	\State \textbf{return} $\false$					\Comment{partition is not a q-partition, cf.\ Proposition~\ref{prop:properties_of_q-partitions},(\ref{prop:properties_of_q-partitions:enum:for_each_q-partition_dx_is_empty_and_dnx_is_empty}.)}
\EndIf
\State $\mD \gets \dx{} \cup \dnx{}$			\label{algoline:dx+_opt:reconstruct_leading_diags}			\Comment{reconstruct leading diagnoses from $\dx{},\dnx{}$}
\If{$m \in \setof{\mathsf{ENT}}$}  \label{algoline:dx+_opt:if_case_ENT}   \Comment{if $m$ in eq.\ class {\larger\textcircled{\smaller[2]1}} w.r.t.\ Table~\ref{tab:requirements_for_equiv_classes_of_measures_wrt_equiv_mQ}}
			\If{$|p(\dx{}) - 0.5| \leq t_m$}
					\State \textbf{return} $\true$
			\EndIf
			\State \textbf{return} $\false$
\EndIf
\If{$m \in \setof{\mathsf{SPL}}$}   \label{algoline:dx+_opt:if_case_SPL}   \Comment{if $m$ in eq.\ class {\larger\textcircled{\smaller[2]2}} w.r.t.\ Table~\ref{tab:requirements_for_equiv_classes_of_measures_wrt_equiv_mQ}}
			\If{$\left|\,|\dx{}| - \frac{|\mD|}{2} \right| \leq t_m$}
					\State \textbf{return} $\true$
			\Else
					\State \textbf{return} $\false$
			\EndIf
\EndIf
\If{$m \in \setof{\mathsf{RIO}}$}		\label{algoline:dx+_opt:if_case_RIO}   \Comment{if $m$ in eq.\ class {\larger\textcircled{\smaller[2]3}} w.r.t.\ Table~\ref{tab:requirements_for_equiv_classes_of_measures_wrt_equiv_mQ}}
			\State $\uc \gets \Call{getCaut}{r_m}$  \label{algoline:dx+_opt:getCaut}
			\State $n \gets \lceil \uc |\mD|\rceil$  \label{algoline:dx+_opt:compute_n}
			\State $t_{\mathsf{card}} \gets \Call{getCardinalityThreshold}{t_m}$  \label{algoline:dx+_opt:t_card}
			\State $t_{\mathsf{ent}} \gets \Call{getEntropyThreshold}{t_m}$    \label{algoline:dx+_opt:t_ent}
			\If{$|\dx{}| \geq n$} 	\label{algoline:dx+_opt:if_|dx|_geq_n}  \Comment{$\Pt_{\mathsf{best}}$ is a non-high-risk q-partition}
				\If{$|\dx{}| - n \leq t_{\mathsf{card}}$}   \label{algoline:dx+_opt:if_|dx|-n_leq_t_card}  \Comment{$\Pt_{\mathsf{best}}$ is sufficiently cautious}
					\If{$|p(\dx{}) - 0.5| \leq t_{\mathsf{ent}}$}  \label{algoline:dx+_opt:if_|p(dx)-0.5|_leq_t_ent} \Comment{$\Pt_{\mathsf{best}}$ has sufficiently high information gain}
							\State \Return $\true$ 
					\EndIf
				\EndIf
			\EndIf
			\State \Return $\false$  \label{algoline:dx+_opt:RIO_return_false}
\EndIf
\If{$m \in \setof{\mathsf{KL}}$}   \label{algoline:dx+_opt:if_case_KL}
			\If{$\left|\frac{|\dx{}|}{|\mD|} \log_2\left(\frac{1}{p(\dx{})}\right) + \frac{|\dnx{}|}{|\mD|} \log_2\left(\frac{1}{p(\dnx{})}\right) - opt_{\mathsf{KL},p,\mD}\right| \leq t_m$}		   \Comment{cf.\ Remark~\ref{rem:kullback_leibler_theoretical_opt}}
					\State \textbf{return} $\true$
			\EndIf
			\State \textbf{return} $\false$
\EndIf
\If{$m \in \setof{\mathsf{EMCb}}$}   \label{algoline:dx+_opt:if_case_EMCb}
			\If{$\left| p(Q=t) |\dnx{}(Q)| + p(Q=f) |\dx{}(Q)| - opt_{\mathsf{EMCb},p,\mD}\right| \leq t_m$}	\Comment{cf.\ Remark~\ref{rem:EMCb_theoretical_opt}}	
					\State \textbf{return} $\true$
			\EndIf
			\State \textbf{return} $\false$
\EndIf
\If{$m \in \setof{\mathsf{MPS}}$}   \label{algoline:dx+_opt:if_case_MPS}     \Comment{if $m$ in eq.\ class {\larger\textcircled{\smaller[2]6}} w.r.t.\ Table~\ref{tab:requirements_for_equiv_classes_of_measures_wrt_equiv_mQ}}
			\If{$|\dx{}| = 1$}
					\State $prob_{\max} \gets \max_{\md\in\mD}(p(\md))$
					\If{$p(\dx{}) = prob_{\max}$}
							\State \textbf{return} $\true$
					\EndIf
			\EndIf
			\State \textbf{return} $\false$
\EndIf
\If{$m \in \setof{\mathsf{BME}}$}   \label{algoline:dx+_opt:if_case_BME}
			\If{$p(\dx{}) = 0.5$}   \label{algoline:dx+_opt:BME_if_p(dx)=0.5}
				\State \Return $\false$
			\EndIf
			\If{$p(\dx{}) < 0.5 \land \left||\dx{}| - (|\mD|- 1) \right| \leq t_m$}  \label{algoline:dx+_opt:BME_if_p(dx)<0.5_and_tm_OK}
					\State \textbf{return} $\true$
			\EndIf
			\If{$p(\dnx{}) < 0.5 \land \left||\dnx{}| - (|\mD|- 1) \right| \leq t_m$} \label{algoline:dx+_opt:BME_if_p(dnx)<0.5_and_tm_OK}
					\State \textbf{return} $\true$
			\EndIf
			\State \Return $\false$
\EndIf
\EndProcedure
\end{algorithmic}
\normalsize
\end{algorithm}

\paragraph{Algorithm for the Optimality Check.} 
In the following, we provide some explanations on Algorithm~\ref{algo:dx+_opt} on page~\pageref{algo:dx+_opt}. Given the inputs consisting of a partition (not necessarily q-partition) $\Pt_{\mathsf{best}} = \tuple{\dx{},\dnx{},\emptyset}$ w.r.t.\ the set of leading diagnoses $\mD$, a threshold $t_m$, requirements $r_m$ to an optimal q-partition and a (diagnosis) probability measure $p$, the output is $\true$ iff $\Pt_{\mathsf{best}}$ is an optimal q-partition w.r.t.\ $r_m$ and $t_m$.

First of all, in line~\ref{algoline:dx+_opt:if_dx_or_dnx_emptyset} the algorithm tests if $\Pt_{\mathsf{best}}$ is not a q-partition, i.e.\ whether $\dx{} = \emptyset$ or $\dnx{} = \emptyset$. If so, $\false$ is immediately returned as a partition which is not a q-partition should never be returned by the algorithm. Then, in line~\ref{algoline:dx+_opt:reconstruct_leading_diags} the set of leading diagnoses $\mD$, which is needed in the computations performed by the algorithm, is reconstructed by means of $\dx{}$ and $\dnx{}$ (note that $\mD$ is indeed equal to $\dx{} \cup \dnx{}$ since $\dz{} = \emptyset$ by assumption). What comes next is the check what the used query quality measure $m$ is. Depending on the outcome, the determination of the output is realized in different ways, steered by $r_m$ (cf.\ Table~\ref{tab:requirements_for_equiv_classes_of_measures_wrt_equiv_mQ}). In the case of 
\begin{itemize}
	\item the measure $\mathsf{ENT}$ (line~\ref{algoline:dx+_opt:if_case_ENT}) and equivalent measures as per the section ``$\equiv_{\mQ}$'' of Table~\ref{tab:measure_equiv_classes}, $\Pt_{\mathsf{best}}$ is optimal according to $r_m$ taking into account the tolerance of $t_m$ iff the sum of probabilities of diagnoses in $\dx{}$ does not deviate from $0.5$ by more than $t_m$. 
	\item the measure $\mathsf{SPL}$ (line~\ref{algoline:dx+_opt:if_case_SPL}) and equivalent measures as per the section ``$\equiv_{\mQ}$'' of Table~\ref{tab:measure_equiv_classes}, $\Pt_{\mathsf{best}}$ is optimal as per $r_m$ taking into account the tolerance of $t_m$ iff the number of diagnoses in $\dx{}$ does not deviate from half the number of diagnoses in $\mD$ by more than $t_m$.
	\item the measure $\mathsf{RIO}$ (line~\ref{algoline:dx+_opt:if_case_RIO}) and equivalent measures as per the section ``$\equiv_{\mQ}$'' of Table~\ref{tab:measure_equiv_classes}, whether $\Pt_{\mathsf{best}} = \tuple{\dx{},\dnx{},\dz{}}$ is optimal depends on the current cautiousness parameter $\uc$ pertinent to the $\mathsf{RIO}$ measure and on two thresholds, $t_{\mathsf{card}}$ and $t_{\mathsf{ent}}$, that we assume are prespecified and extractable from $t_m$. The former is related to the second condition (II) (third row of Table~\ref{tab:requirements_for_equiv_classes_of_measures_wrt_equiv_mQ}) and defines the maximum tolerated deviance of $|\dx{}|$ from the least number $\lceil c |\mD|\rceil$ (line~\ref{algoline:dx+_opt:compute_n}) of leading diagnoses postulated to be eliminated by the next query as per $\uc$ (see also Section~\ref{sec:rio}). In other words, $t_{\mathsf{card}}$ denotes the maximal allowed cardinality deviance from the theoretical least cautious query w.r.t.\ $\uc$. 
		The threshold $t_{\mathsf{ent}}$ addresses to the third condition (III) (third row of Table~\ref{tab:requirements_for_equiv_classes_of_measures_wrt_equiv_mQ}) and characterizes the maximum accepted difference of $p(\dx{})$ from the (theoretically) optimal entropy value $0.5$.
So, if 
\begin{enumerate}
	\item there are at least $n$ diagnoses in $\dx{}$ (line~\ref{algoline:dx+_opt:if_|dx|_geq_n}), i.e.\ $\Pt_{\mathsf{best}}$ is a non-high risk q-partition (requirement $(\mathrm{I})$ as per row three of Table~\ref{tab:requirements_for_equiv_classes_of_measures_wrt_equiv_mQ} is met), and
	\item the cardinality of $\dx{}$ does not deviate from $n$ by more than $t_{\mathsf{card}}$ (line~\ref{algoline:dx+_opt:if_|dx|-n_leq_t_card}), i.e.\ $\Pt_{\mathsf{best}}$ is sufficiently cautious as per $\uc$ (requirement $(\mathrm{II})$ as per row three of Table~\ref{tab:requirements_for_equiv_classes_of_measures_wrt_equiv_mQ} is met), and 
	\item the sum of probabilities of diagnoses in $\dx{}$ does not differ from $0.5$ by more than $t_{\mathsf{ent}}$ (line~\ref{algoline:dx+_opt:if_|p(dx)-0.5|_leq_t_ent}), i.e.\ $\Pt_{\mathsf{best}}$ has a sufficiently high information gain (requirement $(\mathrm{III})$ as per row three of Table~\ref{tab:requirements_for_equiv_classes_of_measures_wrt_equiv_mQ} is satisfied),
\end{enumerate}
  then $\Pt_{\mathsf{best}}$ is (considered) optimal. Otherwise it is not. This is reflected by the code lines \ref{algoline:dx+_opt:if_|dx|_geq_n}-\ref{algoline:dx+_opt:RIO_return_false}.
	\item the measure $\mathsf{KL}$ (line~\ref{algoline:dx+_opt:if_case_KL}), we can use $opt_{\mathsf{KL},p,\mD}$ which was derived in Remark~\ref{rem:kullback_leibler_theoretical_opt}. 
	This means that we consider $\Pt_{\mathsf{best}}$ as optimal if $\mathsf{KL}(Q)$ (see Eq.~\eqref{eq:Q_KL_derived}) does not differ from $opt_{\mathsf{KL},p,\mD}$ by more than the specified threshold $t_m$. 
	\item the measure $\mathsf{EMCb}$ (line~\ref{algoline:dx+_opt:if_case_EMCb}), we can use $opt_{\mathsf{EMCb},p,\mD}$ which was derived in Remark~\ref{rem:EMCb_theoretical_opt}. 
	This means that we consider $\Pt_{\mathsf{best}}$ as optimal if $\mathsf{EMCb}(Q)$ (see Eq.~\eqref{eq:EMCb}) does not differ from $opt_{\mathsf{EMCb},p,\mD}$ by more than the specified threshold $t_m$. 
	\item the measure $\mathsf{MPS}$ (line~\ref{algoline:dx+_opt:if_case_MPS}) and equivalent measures as per the section ``$\equiv_{\mQ}$'' of Table~\ref{tab:measure_equiv_classes}, $\Pt_{\mathsf{best}}$ is optimal iff $\dx{}$ includes only the most probable diagnosis $\md^*$ in $\mD$ and the rest of the diagnoses is assigned to $\dnx{}$ (Proposition~\ref{prop:best_query_wrt_MPS_has_q-partition}). Note that $\tuple{\setof{\md^*},\mD\setminus\setof{\md^*},\emptyset}$ is a canonical q-partition due to Corollary~\ref{cor:--di,MD-di,0--_is_canonical_q-partition_for_all_di_in_mD}. Hence, the proposed search which is complete w.r.t.\ canonical q-partitions will definitely generate this q-partition. This implies that we are, for any set of multiple leading diagnoses, able to find \emph{precisely} the theoretically optimal q-partition w.r.t.\ the $\mathsf{MPS}$ measure. For that reason we do not need any $t_m$ when using $\mathsf{MPS}$.
	%
	%
	%
	\item the measure $\mathsf{BME}$ (line~\ref{algoline:dx+_opt:if_case_BME}), if $p(\dx{}) = 0.5$ (line~\ref{algoline:dx+_opt:BME_if_p(dx)=0.5}), then also $p(\dnx{}) = 0.5$ (since $\dz{} = \emptyset$). That is, requirement $(\mathrm{I})$ (last row in Table~\ref{tab:requirements_for_equiv_classes_of_measures_wrt_equiv_mQ}) is not met for any of $\setof{\dx{},\dnx{}}$. Hence, $\false$ is returned. 
	
	Otherwise, if $p(\dx{}) < 0.5$, then $\Pt_{\mathsf{best}}$ is optimal iff $|\dx{}|$ deviates not more than $t_m$ from its maximal possible cardinality $|\mD|-1$ (line~\ref{algoline:dx+_opt:BME_if_p(dx)<0.5_and_tm_OK}). If so, then $\Pt_{\mathsf{best}}$ is (considered) optimal which is why $\true$ is returned. To understand why the maximal possible cardinality is given by $|\mD|-1$, recall that $\dx{}$ as well as $\dnx{}$ must be non-empty sets due to line~\ref{algoline:dx+_opt:if_dx_or_dnx_emptyset}, i.e.\ $1 \leq |\dx{}|\leq |\mD|-1$ and $1 \leq |\dnx{}|\leq |\mD|-1$. 
	
	Otherwise, if $p(\dnx{}) < 0.5$, then $\Pt_{\mathsf{best}}$ is optimal iff $|\dnx{}|$ deviates not more than $t_m$ from its maximal possible cardinality $|\mD|-1$ (line~\ref{algoline:dx+_opt:BME_if_p(dnx)<0.5_and_tm_OK}). In this case, $\true$ is returned as well.
	
	In all other cases, $\Pt_{\mathsf{best}}$ is (considered) non-optimal. Thence, $\false$ is returned.
\end{itemize}

\begin{algorithm}
\small
\caption[Pruning]{Pruning in $\dx{}$-Partitioning}\label{algo:dx+_prune}
\begin{algorithmic}[1]
\Require partitions $\Pt$ and $\Pt_{\mathsf{best}}$ w.r.t.\ the set of leading diagnoses $\mD$ with $\dz{}(\Pt) = \dz{}(\Pt_{\mathsf{best}}) = \emptyset$, 
requirements $r_m$ to an optimal q-partition, a (diagnosis) probability measure $p$
\Ensure $\true$ if exploring successor q-partitions 
of $\Pt$ cannot lead to the discovery of q-partitions that are better w.r.t.\ $r_m$ than $\Pt_{\mathsf{best}}$ 
\Procedure{prune}{$\Pt,\Pt_{\mathsf{best}},p,r_m$}
\If{$\dx{}(\Pt) = \emptyset \lor \dnx{}(\Pt) = \emptyset$}  \label{algoline:dx+_prune:if_dx_or_dnx_emptyset}
	\State \textbf{return} $\false$		\label{algoline:dx+_prune:return_false_if_no_q-partition}			\Comment{partition is not a q-partition, cf.\ Proposition~\ref{prop:properties_of_q-partitions},(\ref{prop:properties_of_q-partitions:enum:for_each_q-partition_dx_is_empty_and_dnx_is_empty}.)}
\EndIf
\State $\mD \gets \dx{}(\Pt) \cup \dnx{}(\Pt)$  \label{algoline:dx+_prune:reconstruct_leading_diags}  \Comment{reconstruct leading diagnoses from partition $\Pt$}
\If{$m \in \setof{\mathsf{ENT}}$}   \label{algoline:dx+_prune:if_case_ENT}   \Comment{if $m$ in eq.\ class {\larger\textcircled{\smaller[2]1}} w.r.t.\ Table~\ref{tab:requirements_for_equiv_classes_of_measures_wrt_equiv_mQ}}
			\If{$p(\dx{}(\Pt)) \geq 0.5$}   \label{algoline:dx+_prune:ENT_test_condition}
					\State \textbf{return} $\true$
			\EndIf
			\State \textbf{return} $\false$
\EndIf
\If{$m \in \setof{\mathsf{SPL}}$}   \label{algoline:dx+_prune:if_case_SPL}   \Comment{if $m$ in eq.\ class {\larger\textcircled{\smaller[2]2}} w.r.t.\ Table~\ref{tab:requirements_for_equiv_classes_of_measures_wrt_equiv_mQ}}
			\If{$|\dx{}(\Pt)| \geq \left\lfloor\frac{|\mD|}{2}\right\rfloor$}   \label{algoline:dx+_prune:SPL_test_condition}
					\State \textbf{return} $\true$
			\EndIf
			\State \textbf{return} $\false$
\EndIf
\If{$m \in \setof{\mathsf{RIO}}$}			\label{algoline:dx+_prune:if_case_RIO}     \Comment{if $m$ in eq.\ class {\larger\textcircled{\smaller[2]3}} w.r.t.\ Table~\ref{tab:requirements_for_equiv_classes_of_measures_wrt_equiv_mQ}}
			\State $\uc \gets \Call{getCaut}{r_m}$   \label{algoline:dx+_prune:getCaut}
			\State $n \gets \lceil \uc |\mD|\rceil$  \label{algoline:dx+_prune:compute_n}
			\If{$|\dx{}(\Pt)| = n$}							\label{algoline:dx+_prune:RIO_if_|dx|=n}
					\State \textbf{return} $\true$  		\label{algoline:dx+_prune:RIO_return_true_1}
			\EndIf
			\If{$|\dx{}(\Pt_{\mathsf{best}})| = n$}  \label{algoline:dx+_prune:RIO_if_|dx(P_best)|=n}
					\If{$|\dx{}(\Pt)| > n$}								\label{algoline:dx+_prune:RIO_if_|dx|>n}
						\State \textbf{return} $\true$    \label{algoline:dx+_prune:RIO_return_true_2}
					\EndIf
					\If{$p(\dx{}(\Pt)) - 0.5 \geq |p(\dx{}(\Pt_{\mathsf{best}})) - 0.5|$}  \label{algoline:dx+_prune:RIO_if_cond1_and_cond2} \Comment{$|\dx{}(\Pt)| < n$ holds}
						\State \Return $\true$  				\label{algoline:dx+_prune:RIO_return_true_3} \Comment{$p(\dx{}(\Pt)) \geq 0.5$ holds}
					\EndIf
			\EndIf
			\State \textbf{return} $\false$   \label{algoline:dx+_prune:RIO_return_false}
\EndIf
\If{$m \in \setof{\mathsf{KL},\mathsf{EMCb}}$}  \label{algoline:dx+_prune:if_case_KL}
	\State \Return $\false$
\EndIf
\If{$m \in \setof{\mathsf{MPS}}$}   \label{algoline:dx+_prune:if_case_MPS}         \Comment{if $m$ in eq.\ class {\larger\textcircled{\smaller[2]6}} w.r.t.\ Table~\ref{tab:requirements_for_equiv_classes_of_measures_wrt_equiv_mQ}} 
			\If{$|\dx{}(\Pt)| \geq 1$}     \label{algoline:dx+_prune:MPS_test_condition}
					\State \textbf{return} $\true$
			\Else
					\State \textbf{return} $\false$
			\EndIf
\EndIf
\If{$m \in \setof{\mathsf{BME}}$}  \label{algoline:dx+_prune:if_case_BME}
	\If{$p(\dnx{}(\Pt)) < 0.5$}    \label{algoline:dx+_prune:BME_p(dx)>0.5}
		\State \Return $\true$			\label{algoline:dx+_prune:BME_return_true_1}
	\EndIf
	\If{$p(\dx{}(\Pt)) < 0.5$}   \label{algoline:dx+_prune:BME_p(dnx)>0.5}
		\If{$|\dnx{}(\Pt)| - 1 \leq |\dx{}(\Pt)|$}    \label{algoline:dx+_prune:BME_|dx|+1>=|dnx|}
			\State $prob_{\min} \gets \min_{\md\in\dnx{}(\Pt)}(p(\md))$
			\If{$p(\dnx{}(\Pt)) - prob_{\min} < 0.5$ }  \label{algoline:dx+_prune:BME_p(dx)-prob_min<=0.5}
				\State \Return $\true$   \label{algoline:dx+_prune:BME_return_true_2}
			\EndIf
		\EndIf
	\EndIf
	\State \Return $\false$  \label{algoline:dx+_prune:BME_return_false}
\EndIf
\EndProcedure
\end{algorithmic}
\normalsize
\end{algorithm}

\paragraph{Algorithm for the Pruning Check.} \label{par:algorithm_description_prune}
In the following, we provide some explanations on Algorithm~\ref{algo:dx+_prune} on page~\pageref{algo:dx+_prune}. Given the inputs consisting of partitions $\Pt$ and $\Pt_{\mathsf{best}}$ w.r.t.\ the set of leading diagnoses $\mD$ where $\dz{}(\Pt) = \dz{}(\Pt_{\mathsf{best}}) = \emptyset$, requirements $r_m$ to an optimal q-partition and a (diagnosis) probability measure $p$, the output is $\true$ if exploring successor q-partitions of $\Pt$ cannot lead to the discovery of q-partitions that are better w.r.t.\ $r_m$ than $\Pt_{\mathsf{best}}$.

As a first step the algorithm returns $\false$ if $\Pt$ is not a q-partition (line~\ref{algoline:dx+_prune:return_false_if_no_q-partition}). In other words, this ensures that the algorithm can never return on the first iteration where $\Pt$ corresponds to the inital state and thence is no q-partition (in all successive interations the current partition $\Pt$ \emph{will be} a q-partition). 
Then, in line~\ref{algoline:dx+_prune:reconstruct_leading_diags} the set of leading diagnoses $\mD$, which is needed in the computations performed by the algorithm, is reconstructed by means of $\Pt$. What comes next is the check what the used query quality measure $m$ is. Depending on the outcome, the determination of the output is realized in different ways, steered by $r_m$ (cf.\ Table~\ref{tab:requirements_for_equiv_classes_of_measures_wrt_equiv_mQ}). Before we analyze the different behavior of the algorithm for the different measures, we point out that, at the time \textsc{prune} (Algorithm~\ref{algo:dx+_prune}) is called in Algorithm~\ref{algo:dx+_part}, $\Pt_{\mathsf{best}}$ must be at least as good w.r.t.\ $r_m$ as $\Pt$. This is due to the fact that \textsc{updateBest} is always executed before \textsc{prune} in Algorithm~\ref{algo:dx+_part} and effectuates the storage of the best canonical q-partition found so far in $\Pt_{\mathsf{best}}$. In the case of 
\begin{itemize}
	\item the measure $\mathsf{ENT}$ (line~\ref{algoline:dx+_prune:if_case_ENT}) and equivalent measures as per the section ``$\equiv_{\mQ}$'' of Table~\ref{tab:measure_equiv_classes}, if the sum of probabilities of diagnoses in $\dx{}(\Pt)$ is already greater than or equal to $0.5$, then
	the quality of $\Pt$ as per $r_m$ can only become worse if additional diagnoses, each with a positive probability, are added to $\dx{}(\Pt)$. The reason for this is that the absolute difference between $p(\dx{}(\Pt))$ and the theoretical optimum of $0.5$ can only increase in this case. Hence, we test in line~\ref{algoline:dx+_prune:ENT_test_condition} whether $p(\dx{}(\Pt)) \geq 0.5$ and return $\true$ if positively evaluated and $\false$ otherwise.
	\item the measure $\mathsf{SPL}$ (line~\ref{algoline:dx+_prune:if_case_SPL}) and equivalent measures as per the section ``$\equiv_{\mQ}$'' of Table~\ref{tab:measure_equiv_classes}, if the number of diagnoses in $\dx{}(\Pt)$ is already greater than or equal to $\left\lfloor \frac{|\mD|}{2}\right\rfloor$, then
	the quality of $\Pt$ as per $r_m$ can only become worse if additional diagnoses are added to $\dx{}(\Pt)$. The reason for this is that the absolute difference between $|\dx{}(\Pt)|$ and the theoretical optimum of $\frac{|\mD|}{2}$ cannot become better than for $\Pt$. To see this, observe that $|\dx{}(\Pt)| = \left\lfloor \frac{|\mD|}{2}\right\rfloor$ means that the absolute difference between $|\dx{}(\Pt)|$ and $\frac{|\mD|}{2}$ is zero in case $|\mD|$ is even and one half otherwise. Adding at least one diagnosis to $\dx{}(\Pt)$ implies a difference of minimally one in the former case and a difference of minimally one half in the latter. The case $|\dx{}(\Pt)| > \left\lfloor \frac{|\mD|}{2}\right\rfloor$ can be analyzed very easily in the same fashion. 
	Therefore, we test in line~\ref{algoline:dx+_prune:SPL_test_condition} whether $|\dx{}(\Pt)| \geq \left\lfloor \frac{|\mD|}{2}\right\rfloor$ and return $\true$ if positively evaluated and $\false$ otherwise.
	\item the measure $\mathsf{RIO}$ (line~\ref{algoline:dx+_prune:if_case_RIO}) and equivalent measures as per the section ``$\equiv_{\mQ}$'' of Table~\ref{tab:measure_equiv_classes}, the pruning check works as follows. First, the minimal number $n$ of diagnoses that must be eliminated by the answer to the next query is computed (lines~\ref{algoline:dx+_prune:getCaut}-\ref{algoline:dx+_prune:compute_n}). Using this parameter, all necessary conditions for pruning can be verified. That is, if the size of $\dx{}(\Pt)$ is equal to $n$ (line~\ref{algoline:dx+_prune:RIO_if_|dx|=n}), the search tree can be pruned below $\Pt$ since the two highest-priority requirements to an optimal q-partition imposed by $\mathsf{RIO}$ (cf.\ $(\mathrm{I})$ and $(\mathrm{II})$ in the third row of Table~\ref{tab:requirements_for_equiv_classes_of_measures_wrt_equiv_mQ}) postulate that the size of the minimum-cardinality set in $\setof{\dx{},\dnx{}}$ (for which we always plug in $\dx{}$ due to Remark~\ref{rem:RIO_only_onesided_search}) is at least $n$ and its deviance from $n$ is minimal. The latter can only worsen if diagnoses are added to $\dx{}(\Pt)$. Consequently, the algorithm returns $\true$ in line~\ref{algoline:dx+_prune:RIO_return_true_1}. 	
	
On the other hand, if the cardinality of $\dx{}(\Pt_{\mathsf{best}})$ equals $n$ (line~\ref{algoline:dx+_prune:RIO_if_|dx(P_best)|=n}), i.e.\ we know that at least one q-partition with a $\dx{}$-set of cardinality $n$ has already been detected, and the cardinality of $\dx{}(\Pt)$ exceeds $n$ (line~\ref{algoline:dx+_prune:RIO_if_|dx|>n}), then $\Pt$ is worse than $\Pt_{\mathsf{best}}$ regarding requirement $(\mathrm{II})$. Hence $\true$ is returned in line~\ref{algoline:dx+_prune:RIO_return_true_2}. 

Finally, if (a) the cardinality of $\dx{}(\Pt_{\mathsf{best}})$ equals $n$ (line~\ref{algoline:dx+_prune:RIO_if_|dx(P_best)|=n}), (b) $|\dx{}(\Pt)| < n$ (by the negative evaluation of the tests in lines~\ref{algoline:dx+_prune:RIO_if_|dx|=n} and \ref{algoline:dx+_prune:RIO_if_|dx|>n}) and (c) $p(\dx{}(\Pt)) - 0.5 \geq |p(\dx{}(\Pt_{\mathsf{best}})) - 0.5|$ (line~\ref{algoline:dx+_prune:RIO_if_cond1_and_cond2}), 
then continuing the search at $\Pt$ cannot lead to a better q-partition than $\Pt_{\mathsf{best}}$ as per $r_m$. 
This holds since (b) and (c) imply that the addition of any diagnoses, each with a non-zero probability (cf.\ page~\pageref{etc:prob_of_each_diag_must_be_greater_zero}), to $\dx{}(\Pt)$ can never lead to a situation where $|\dx{}(\Pt')|=n$ and $\Pt'$ has a better entropy (i.e.\ $\mathsf{ENT}$ or $\mathsf{ENT}_z$ measure) than $\Pt_{\mathsf{best}}$ for any (direct or indirect) successor $\Pt'$ of $\Pt$. 
Because, 
for any successor $\Pt'$ of $\Pt$ with $|\dx{}(\Pt')|=n$ we have $p(\dx{}(\Pt')) > p(\dx{}(\Pt))$ due to (b), i.e.\ $p(\dx{}(\Pt')) - 0.5 > p(\dx{}(\Pt)) - 0.5 \geq |p(\dx{}(\Pt_{\mathsf{best}})) - 0.5| \geq 0$ due to (c). This means however that $|p(\dx{}(\Pt')) - 0.5| > |p(\dx{}(\Pt_{\mathsf{best}})) - 0.5|$, i.e.\ $\Pt'$ and $\Pt_{\mathsf{best}}$ are equally good w.r.t.\ the priority $(\mathrm{I})$ and $(\mathrm{II})$ requirements in $r_m$ (both are least cautious non-high-risk q-partitions), but $\Pt_{\mathsf{best}}$ is better as to the priority $(\mathrm{III})$ requirement in $r_m$ (information gain). All successors $\Pt''$ of $\Pt$ with $|\dx{}(\Pt'')|\neq n$ can never be better than $\Pt_{\mathsf{best}}$ since they already violate the priority $(\mathrm{I})$ requirement (i.e.\ are high-risk q-partitions) in case $|\dx{}(\Pt'')| < n$ or the priority $(\mathrm{II})$ requirement (i.e.\ are not least cautious) in case $|\dx{}(\Pt'')| > n$.

Hence, as an arbitrary direct or indirect successor of $\Pt$ cannot be better than $\Pt_{\mathsf{best}}$ w.r.t.\ $r_m$, the algorithm returns $\true$ in this case (line~\ref{algoline:dx+_prune:RIO_return_true_3}).

Otherwise, we cannot be sure that $\Pt$ might not generate any more favorable successor w.r.t.\ $r_m$ than $\Pt_{\mathsf{best}}$ is. Hence, $\false$ is returned (line~\ref{algoline:dx+_prune:RIO_return_false}).
	\item one of the measures $\mathsf{KL}$ and $\mathsf{EMCb}$ (line~\ref{algoline:dx+_prune:if_case_KL}), in the light of our analyses (cf.\ Section~\ref{sec:NewActiveLearningMeasuresForKBDebugging}) of the goal functions (Equations~\eqref{eq:Q_KL_derived} and \eqref{eq:EMCb}) that must be maximized to find an optimal q-partition for these measures, we are not able to come up with a straightforward general pruning condition based on probabilities and/or the cardinality of the sets in a q-partition. Hence, simply $\false$ is returned. This inability to prune the search tree early can be also a crucial shortcoming of $\mathsf{KL}$ and $\mathsf{EMCb}$ as opposed to other measures, especially in cases involving a high cardinality set $\mD$ of leading diagnoses.
	\item the measure $\mathsf{MPS}$ (see line~\ref{algoline:dx+_prune:if_case_MPS}) and equivalent measures as per the section ``$\equiv_{\mQ}$'' of Table~\ref{tab:measure_equiv_classes}, since for $\mathsf{MPS}$ the set of requirements $r_m$ postulate 
	a q-partition involving either a singleton $\dx{}$ or a singleton $\dnx{}$ where this singleton has maximal probability, the search is supposed to just return the q-partition with a singleton $\dx{} = \setof{\md}$ where $\md$ is the most probable leading diagnosis. This is also the theoretically optimal q-partition w.r.t.\ $\mD$. In this manner we rely on a similar strategy as with $\mathsf{RIO}$ in which we exploit early pruning (cf.\ Remark~\ref{rem:RIO_only_onesided_search}). Given that one wants to generate also the optimal q-partition with a singleton $\dnx{}$, then $\dnx{}$-partitioning should be used for this purpose.
	
	So, in line~\ref{algoline:dx+_prune:MPS_test_condition}, we just check whether the cardinality of $\dx{}$ is at least one. If so, then $\true$ is returned as $\Pt_{\mathsf{best}}$ must already contain the most probable diagnosis in $\mD$ (guaranteed by the implementation of \textsc{bestSuc}, see lines~\ref{algoline:dx+_best_successor:if_case_MPS}-\ref{algoline:dx+_best_successor:MPS_end} in Algorithm~\ref{algo:dx+_best_successor}) which is why $\true$ is returned. Otherwise, $\false$ is returned.
	\item the measure $\mathsf{BME}$ (see line~\ref{algoline:dx+_prune:if_case_BME}), we first recall that the optimal q-partition w.r.t.\ $r_m$ has the property that the cardinality of its set in $\setof{\dx{},\dnx{}}$ with the lower probability is maximal. 
	Hence, if $p(\dnx{}(\Pt))$ is already less than $0.5$ (note that this is equivalent to $p(\dx{}(\Pt)) > p(\dnx{}(\Pt))$ since $\dz{}(\Pt) = \emptyset$), then the deletion of further diagnoses from $\dnx{}(\Pt)$, each with a probability greater than zero (cf.\ page~\pageref{etc:prob_of_each_diag_must_be_greater_zero}), can only involve a deterioration of $\Pt$ w.r.t.\ $r_m$, i.e.\ a reduction of the cardinality of $\dnx{}(\Pt)$ whereas $p(\dnx{}(\Pt))$ remains smaller than $0.5$. Lines~\ref{algoline:dx+_prune:BME_p(dx)>0.5}-\ref{algoline:dx+_prune:BME_return_true_1} in the algorithm account for that. 
	
	If, on the other hand, (a)~$p(\dx{}(\Pt)) < 0.5$ (line~\ref{algoline:dx+_prune:BME_p(dnx)>0.5}), and (b)~deleting (at least) one diagnosis from $\dnx{}(\Pt)$ makes $\dnx{}(\Pt)$ at most as large in size as $\dx{}(\Pt)$ (line~\ref{algoline:dx+_prune:BME_|dx|+1>=|dnx|}), and (c)~the elimination of any diagnosis from $\dnx{}(\Pt)$ makes the sum of probabilities of diagnoses in $\dnx{}(\Pt)$ less than $0.5$ (line~\ref{algoline:dx+_prune:BME_p(dx)-prob_min<=0.5}), then the search tree can be pruned below the node representing $\Pt$. 
	
	To make this apparent, we reason as follows: Let $\mD^*(\Pt)$ be the set in $\setof{\dx{}(\Pt),\dnx{}(\Pt)}$ satisfying $p(\mD^*(\Pt)) < 0.5$. Then, by (a), the value of $\mathsf{BME}$ for a query with q-partition $\Pt$ is $|\mD^*(\Pt)| = |\dx{}(\Pt)|$. Assume an arbitrary (direct or indirect) successor $\Pt'$ of $\Pt$. Then, $\dnx{}(\Pt') \subset \dnx{}(\Pt)$. 
	By (b), this yields $|\dnx{}(\Pt')| \leq |\dx{}(\Pt)|$. Moreover, by (c), it must be the case that $p(\dnx{}(\Pt')) < 0.5$. 
	Consequently, $\mD^*(\Pt') = \dnx{}(\Pt')$, i.e.\ the value of $\mathsf{BME}$ for a query with q-partition $\Pt'$ is $|\mD^*(\Pt')| = |\dnx{}(\Pt')| \leq |\dx{}(\Pt)| = |\mD^*(\Pt)|$. So, since $r_m$ postulates maximal $|\mD^*(\Pt_{\mathsf{opt}})|$ of some optimal q-partition $\Pt_{\mathsf{opt}}$, no successor $\Pt'$ can be better as per $r_m$ than $\Pt$ and the algorithm returns $\true$ in line~\ref{algoline:dx+_prune:BME_return_true_2}. Otherwise, $\false$ is returned in line~\ref{algoline:dx+_prune:BME_return_false}.
\end{itemize}

\begin{algorithm}
\small
\caption[Finding the Best Successor Q-Partition]{Best successor in $\dx{}$-Partitioning}\label{algo:dx+_best_successor}
\begin{algorithmic}[1]
\Require a partition $\Pt$, the set of successor q-partitions $sucs$ of $\Pt$ (each resulting from $\Pt$ by a minimal $\dx{}$-transformation), a (diagnosis) probability measure $p$, requirements $r_m$ to an optimal q-partition 
\Ensure a tuple $\tuple{\Pt',\md}$ consisting of the best q-partition $\Pt' \in sucs$ according to the implemented heuristics (the smaller the heuristics, the better the corresponding q-partitions) and some diagnosis $\md$ that has been added to $\dx{}(\Pt')$ from $\dnx{}(\Pt)$ in the course of the minimal $\dx{}$-transformation that maps $\Pt$ to $\Pt'$ 
\Procedure{bestSuc}{$\Pt,sucs, p, r_m$}
\State $S_{\mathsf{best}} \gets \Call{getFirst}{sucs}$ \label{algoline:dx+_best_successor:get_first}
\For{$S \in sucs$}
	\If{$\Call{heur}{S, p, r_m} < \Call{heur}{S_{\mathsf{best}}, p, r_m}$}
			\State $S_{\mathsf{best}} \gets S$				\label{algoline:dx+_best_successor:update_Sbest}
	\EndIf
\EndFor  \label{algoline:dx+_best_successor:end_for}
\State $\dx{\mathsf{diff}} \gets \dx{}(S_{\mathsf{best}}) \setminus \dx{}(\Pt)$   \label{algoline:dx+_best_successor:compute_dx+Diff}
\State $\md \gets \Call{getFirst}{\dx{\mathsf{diff}}}$ \label{algoline:dx+_best_successor:getFirst_from_dx+Diff}
\State \textbf{return} $\tuple{S_{\mathsf{best}},\md}$
\EndProcedure
\vspace{2pt}
\hrule
\vspace{2pt}
\Procedure{heur}{$\langle \dx{}, \dnx{}, \emptyset \rangle, p, r_m$}    \label{algoline:dx+_best_successor:procedure_heur_start}
\State $\mD \gets \dx{} \cup \dnx{}$	 \label{algoline:dx+_best_successor:reconstruct_leading_diags}				\Comment{reconstruct leading diagnoses from $\dx{}$ and $\dnx{}$}
\If{$m \in \setof{\mathsf{ENT}}$}   \label{algoline:dx+_best_successor:if_case_ENT}  \Comment{if $m$ in eq.\ class {\larger\textcircled{\smaller[2]1}} w.r.t.\ Table~\ref{tab:requirements_for_equiv_classes_of_measures_wrt_equiv_mQ}}
			\State \textbf{return} $|p(\dx{}) - 0.5|$
\EndIf
\If{$m \in \setof{\mathsf{SPL}}$}   \label{algoline:dx+_best_successor:if_case_SPL}   \Comment{if $m$ in eq.\ class {\larger\textcircled{\smaller[2]2}} w.r.t.\ Table~\ref{tab:requirements_for_equiv_classes_of_measures_wrt_equiv_mQ}}
			\State \textbf{return} $\left||\dx{}| - \frac{|\mD|}{2}\right|$
\EndIf
\If{$m \in \setof{\mathsf{RIO}}$}		\label{algoline:dx+_best_successor:if_case_RIO}    \Comment{if $m$ in eq.\ class {\larger\textcircled{\smaller[2]3}} w.r.t.\ Table~\ref{tab:requirements_for_equiv_classes_of_measures_wrt_equiv_mQ}}
	\State $c \gets \Call{getCaut}{r_m}$   \label{algoline:dx+_best_successor:getCaut}
	\State $n \gets \lceil \uc |\mD|\rceil$		 \label{algoline:dx+_best_successor:compute_n}					\Comment{to be accepted by $\mathsf{RIO}$, $\dx{}$ must include at least $n$ diagnoses}
	\State $numDiagsToAdd \gets n - |\dx{}|$		\label{algoline:dx+_best_successor:RIO_compute_numDiagsToAdd}		\Comment{\# of diagnoses to be added to $\dx{}$ to achieve $|\dx{}| = n$}   
	\State $avgProb \gets \frac{p(\dnx{})}{|\dnx{}|}$	\label{algoline:dx+_best_successor:RIO_compute_avgProb}	\Comment{average probability of diagnoses that might be added to $\dx{}$}
			\State \Return $\left|p(\dx{}) + (numDiagsToAdd) * (avgProb) - 0.5\right|$		\label{algoline:dx+_best_successor:RIO_return}	
\EndIf
\If{$m \in \setof{\mathsf{KL},\mathsf{EMCb}}$}	\label{algoline:dx+_best_successor:if_case_KL}	\Comment{$\frac{|\dx{}|}{|\mD|}$ is expected value (E) of sum of prob.\ of $|\dx{}|$ diagnoses in $\mD$}
			\State \textbf{return} $\frac{|\dx{}|}{|\mD| p(\dx{})} $	\label{algoline:dx+_best_successor:KL_return_1}	\Comment{the higher $p(\dx{})$ compared to E, the better the heuristics for $\langle \dx{}, \dnx{}, \emptyset \rangle$}		
\EndIf 
\If{$m \in \setof{\mathsf{MPS}}$}    \label{algoline:dx+_best_successor:if_case_MPS}   \Comment{if $m$ in eq.\ class {\larger\textcircled{\smaller[2]6}} w.r.t.\ Table~\ref{tab:requirements_for_equiv_classes_of_measures_wrt_equiv_mQ}}
			\State \textbf{return} $-p(\dx{})$       \label{algoline:dx+_best_successor:MPS_return}
\EndIf			\label{algoline:dx+_best_successor:MPS_end}
\If{$m \in \setof{\mathsf{BME}}$}    \label{algoline:dx+_best_successor:if_case_BME}
			\If{$p(\dx{}) < 0.5$}  \label{algoline:dx+_best_successor:BME_if_p(dx)>0.5}
				\State \textbf{return} $-|\dx{}| + p(\dx{})$		\label{algoline:dx+_best_successor:BME_return_1}		
			\EndIf
			\If{$p(\dx{}) > 0.5$}	 \label{algoline:dx+_best_successor:BME_if_p(dx)<0.5}		\Comment{$p(\dnx{}) < 0.5$}
				\State \Return $-|\dnx{}| + p(\dnx{})$			 \label{algoline:dx+_best_successor:BME_return_2}	
			\EndIf
			\State \Return $0$					\Comment{$p(\dx{}) = p(\dnx{}) = 0.5$}	
\EndIf
\EndProcedure   \label{algoline:dx+_best_successor:procedure_heur_end}
\end{algorithmic}
\normalsize
\end{algorithm}

\paragraph{Algorithm to Find Best Successor.} 
In the following, we explicate Algorithm~\ref{algo:dx+_best_successor} on page~\pageref{algo:dx+_best_successor}. Given a partition $\Pt$ (the current node in the search tree), a set of q-partitions $sucs$ each element of which is a q-partition resulting from $\Pt$ by a minimal $\dx{}$-transformation (cf.\ Definition~\ref{def:minimal_transformation}), a (diagnosis) probability measure $p$ and some requirements $r_m$ to an optimal q-partition as inputs, the algorithm returns the best q-partition $\Pt$ in $sucs$ according to the implemented heuristics. The heuristics is constructed from $r_m$ where a smaller value for the heuristics indicates a better q-partition.

The algorithm iterates once over the set $sucs$ to extract the q-partition in $sucs$ with minimal heuristics (lines~\ref{algoline:dx+_best_successor:get_first}-\ref{algoline:dx+_best_successor:end_for}) and stores this q-partition in $S_{\mathsf{best}}$. At this, \textsc{getFirst}($X$) returns the first element in the set $X$ given to it as an argument and afterwards deletes this element from $X$. The calculation of the heuristics for a given q-partition $\tuple{\dx{},\dnx{},\emptyset}$ and $r_m$ is realized by the function \textsc{heur} (line~\ref{algoline:dx+_best_successor:procedure_heur_start}-\ref{algoline:dx+_best_successor:procedure_heur_end}) which we describe next. 

First of all, in line~\ref{algoline:dx+_best_successor:reconstruct_leading_diags}, the set of leading diagnoses $\mD$, which is needed in the computations performed by the algorithm, is reconstructed by means of $\dx{}$ and $\dnx{}$. What comes next is the check what the used query quality measure $m$ is. Depending on the outcome, the determination of the output of \textsc{heur} is realized in different ways, steered by $r_m$ (cf.\ Table~\ref{tab:requirements_for_equiv_classes_of_measures_wrt_equiv_mQ}). In the case of 
\begin{itemize}
	\item the measure $\mathsf{ENT}$ (line~\ref{algoline:dx+_best_successor:if_case_ENT}) and equivalent measures as per the section ``$\equiv_{\mQ}$'' of Table~\ref{tab:measure_equiv_classes}, the heuristic evaluates to the absolute difference between $p(\dx{})$ and the theoretical optimum of $0.5$. That is, the closer the probability of the $\dx{}$ set of a q-partition in $sucs$ comes to a half, the better the q-partition is ranked by the heuristics.
	\item the measure $\mathsf{SPL}$ (line~\ref{algoline:dx+_best_successor:if_case_SPL}) and equivalent measures as per the section ``$\equiv_{\mQ}$'' of Table~\ref{tab:measure_equiv_classes}, the heuristic evaluates to the absolute difference between $|\dx{}|$ and the theoretical optimum of $\frac{|\mD|}{2}$. That is, the closer the cardinality of the $\dx{}$ set of a q-partition in $sucs$ comes to half the number of leading diagnoses, the better the q-partition is ranked by the heuristics.
	\item the measure $\mathsf{RIO}$ (line~\ref{algoline:dx+_best_successor:if_case_RIO}) and equivalent measures as per the section ``$\equiv_{\mQ}$'' of Table~\ref{tab:measure_equiv_classes}, the computation of the heuristic value works as follows. First, the minimal number $n$ of diagnoses that must be eliminated by the answer to the next query is computed (lines~\ref{algoline:dx+_best_successor:getCaut}-\ref{algoline:dx+_best_successor:compute_n}). By means of $n$, which is the target value for the cardinality of $\dx{}$ (cf.\ Section~\ref{sec:rio}), the number $numDiagsToAdd$ of diagnoses that must still be added to $\dx{}$ to reach this target value, is computed in line~\ref{algoline:dx+_best_successor:RIO_compute_numDiagsToAdd}. Furthermore, the average probability of a diagnosis in $\dnx{}$ is calculated and stored in $avgProb$ in line~\ref{algoline:dx+_best_successor:RIO_compute_avgProb}. Finally, the returned heuristic value in line~\ref{algoline:dx+_best_successor:RIO_return} gives the deviance of $p(\dx{}(\Pt'))$ from $0.5$ (the optimal value of $p(\dx{})$ w.r.t.\ the $\mathsf{ENT}$ or $\mathsf{ENT}_z$ measure) that would result if $numDiagsToAdd$ diagnoses, each with exactly the average probability $avgProb$ of diagnoses in $\dnx{}$, were added to $\dx{}$ yielding the new q-partition $\Pt'$. That is, comparing with $r_m$ given in the third row of Table~\ref{tab:requirements_for_equiv_classes_of_measures_wrt_equiv_mQ}, a lower heuristic value means higher proximity to optimality w.r.t.\ condition $(\mathrm{III})$ under the assumption that the $\dx{}$-set of the current q-partition is filled up in a way to satisfy exactly the optimality conditions $(\mathrm{I})$ and $(\mathrm{II})$.
	\item the measures $\mathsf{KL}$ and $\mathsf{EMCb}$ (see line~\ref{algoline:dx+_best_successor:if_case_KL}), as per Propositions~\ref{prop:kl_opt},(\ref{prop:kl_opt:properties_of_best_query}.) and \ref{prop:EMCb_opt},(\ref{prop:EMCb_opt:properties_of_best_query}.), 
	$\Pt$ is the more promising a q-partition, the more the sum of diagnoses probabilities in $\dx{}$ (or $\dnx{}$, respectively) exceeds the expected sum of diagnoses probabilities in $\dx{}$ ($\dnx{}$). Under the assumption that all leading diagnoses probabilities satisfy a distribution with a mean of $\frac{1}{|\mD|}$, the expected value of the sum of diagnoses probabilities in $\dx{}$ can be computed as $\frac{|\dx{}|}{|\mD|}$, and as $\frac{|\dnx{}|}{|\mD|}$ for the $\dnx{}$ case. As a consequence, the value of the heuristics returned for $\Pt$ is either $\frac{|\dx{}|}{|\mD| p(\dx{})}$, as shown in the algorithm (line~\ref{algoline:dx+_best_successor:KL_return_1}). That is, a high sum of diagnoses probabilities $p(\dx{})$ compared to the expected probability sum means a better q-partition and hence involves a low heuristic value, as desired, due to the placement of $p(\dx{})$ in the denominator. An alternative would be tu use $\frac{|\dnx{}|}{|\mD| p(\dnx{})}$ as heuristic value. A usage of the latter would imply the exploration of q-partitions resulting from the addition of smallest probabilities to $\dx{}$ first instead of highest ones as in case of the former. Stated in the terminology of Proposition~\ref{prop:kl_opt},(\ref{prop:kl_opt:properties_of_best_query}.), the former approach seeks to explore the q-partitions with $\dx{}$-sets in $\mathbf{MaxP}_k^+$ first while the latter prefers the q-partitions with $\dnx{}$-sets in $\mathbf{MaxP}_m^-$.
	\item the measure $\mathsf{MPS}$ (line~\ref{algoline:dx+_best_successor:if_case_MPS}) and equivalent measures as per the section ``$\equiv_{\mQ}$'' of Table~\ref{tab:measure_equiv_classes}, clearly, the best successor to select is the one with maximal 
	$p(\dx{})$. 
	This is reflected by returning a heuristic value of $-p(\dx{})$ in line~\ref{algoline:dx+_best_successor:MPS_return}.
	\item the measure $\mathsf{BME}$ (see line~\ref{algoline:dx+_best_successor:if_case_BME}), we recall that an optimal q-partition w.r.t.\ $r_m$ satisfies that its set in $\setof{\dx{},\dnx{}}$ with minimal probability has a maximal possible number of diagnoses in it. Hence if $p(\dx{}) < 0.5$ (line~\ref{algoline:dx+_best_successor:BME_if_p(dx)>0.5}), i.e.\ $\dx{}$ has minimal probability, then, the larger $|\dx{}|$ is, the better the heuristic value of the q-partition should be. Furthermore, the lower the probability $p(\dx{})$ is, the larger the number of diagnoses which can be added to $\dx{}$ without exceeding the probability of $0.5$ tends to be. The return value $-|\dx{}| + p(\dx{})$ in line~\ref{algoline:dx+_best_successor:BME_return_1} accounts for this (recall that a lower heuristic value signifies a better q-partition). Note that the maximality of the number of diagnoses in $\dx{}$ has higher priority than the minimality of the probability mass in $\dx{}$. The latter should serve as a tie-breaker between q-partitions that are equally good w.r.t.\ the $\dx{}$-cardinality. This is in fact achieved by the returned value since an improvement or a deterioration w.r.t.\ the cardinality always amounts to at least 1 whereas probabilities $p(\dx{})$ are always smaller than 1 due to the fact that $\dx{} \subset \mD$ for all q-partitions (cf.\ Proposition~\ref{prop:properties_of_q-partitions}).
	
	On the other hand, if $\dnx{}$ has minimal probability (line~\ref{algoline:dx+_best_successor:BME_if_p(dx)<0.5}), then $-|\dnx{}| + p (\dnx{})$ is returned due to an analogue argumentation (line~\ref{algoline:dx+_best_successor:BME_return_2}). In the situation where $p(\dx{}) = 0.5 = p(\dnx{})$, a heuristic value of zero is returned, signifying rejection of this successor q-partition, as it does not comply with the priority $(\mathrm{I})$ requirement in $r_m$, i.e.\ that either $p(\dx{})$ or $p(\dnx{})$ be smaller than $0.5$.
\end{itemize}
In line~\ref{algoline:dx+_best_successor:compute_dx+Diff}, the algorithm computes $\dx{\mathsf{diff}}$ as the set of diagnoses that have been transferred from $\dnx{}$ in the current partition $\Pt$ to $\dx{}$ in the successor q-partition $S_{\mathsf{best}}$ in the course of the minimal $\dx{}$-transformation that maps $\Pt$ to $S_{\mathsf{best}}$. Then, the first diagnosis in the set $\dx{\mathsf{diff}}$ is assigned to the variable $\md$ (line~\ref{algoline:dx+_best_successor:getFirst_from_dx+Diff}). Note, by the definition of a minimal $\dx{}$-transformation (Definition~\ref{def:minimal_transformation}), $\dx{\mathsf{diff}} \neq \emptyset$ must hold. The tuple $\tuple{S_{\mathsf{best}},\md}$ is finally returned. 

The reason of returning one (arbitrary) diagnosis $\md$ from $\dx{\mathsf{diff}}$ in addition to the best successor is the pruning of the search tree. That is, $\md$ is leveraged in all subtrees of $\Pt$ that have not yet been explored in order to avoid the repeated generation of q-partitions that have already been generated and analyzed in explored subtrees of $\Pt$. Instead of the entire set $\dx{\mathsf{diff}}$, only one diagnosis out of it is needed for this purpose since $\dx{\mathsf{diff}}$ represents exactly one equivalence class of diagnoses. That is, if one diagnosis from this set must not be transferred to $\dx{}$ from $\dnx{}$ in the context of a minimal $\dx{}$-transformation, then none of the diagnoses in this set must do so (see also Definition~\ref{def:necessary_follower} and Corollary~\ref{cor:nec_followers_form_equivalence_class_wrt_md^(y)} later).

\subsubsection{Q-Partition Successor Computation for $\dx{}$-Partitioning}
\label{sec:AlgorithmForSuccessorComputation}
%
We first characterize $S_{\mathsf{init}}$ and then $S_{\mathsf{next}}$. 
\paragraph{Definition of $S_{\mathsf{init}}$.} In the case of $\dx{}$-partitioning, we specified the initial state of the q-partition search as $\tuple{\dx{},\dnx{},\emptyset} := \tuple{\emptyset,\mD,\emptyset}$ (cf.\ page~\pageref{etc:initial_state}). Now, $S_{\mathsf{init}}$ can be easily specified by means of the following corollary which is a direct consequence of Proposition~\ref{prop:properties_of_q-partitions}. It states that for any single diagnosis $\md\in\mD$ there is a canonical q-partition with $\dx{}(Q) = \setof{\md}$.
\begin{corollary}\label{cor:--di,MD-di,0--_is_canonical_q-partition_for_all_di_in_mD}
Let $\mD\subseteq\minD_{\langle\mo,\mb,\Tp,\Tn\rangle_\RQ}$ with $|\mD|\geq 2$. Then, $\langle\{\md_i\},\mD \setminus \{\md_i\},\emptyset\rangle$ is a canonical q-partition for all $\md_i \in \mD$.
\end{corollary}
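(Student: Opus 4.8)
The plan is to show that the canonical query associated with the singleton seed $\mathbf{S} := \{\md_i\}$ coincides \emph{exactly} with the query $U_\mD \setminus \md_i$ which Proposition~\ref{prop:properties_of_q-partitions},(\ref{prop:properties_of_q-partitions:enum:D+=d_i_is_q-partition_and_lower_bound_of_queries}.) already certifies to be a query with the claimed q-partition. First I would unfold Definition~\ref{def:canonical_query}: $Q_{\mathsf{can}}(\{\md_i\}) = E_{\mathsf{exp}}(\{\md_i\}) \cap \DiscAx_\mD$. By part~(1) of Proposition~\ref{prop:E_exp} we have $E_{\mathsf{exp}}(\{\md_i\}) = \mo \setminus U_{\{\md_i\}} = \mo \setminus \md_i$, so that $Q_{\mathsf{can}}(\{\md_i\}) = (\mo \setminus \md_i) \cap (U_\mD \setminus I_\mD)$.

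The crux is a short set-algebraic simplification. Since $U_\mD \setminus I_\mD \subseteq U_\mD \subseteq \mo$, the constraint ``$x \in \mo$'' is subsumed by ``$x \in U_\mD$''; and since $I_\mD = \bigcap_{\md \in \mD} \md \subseteq \md_i$, the constraint ``$x \notin I_\mD$'' is implied by ``$x \notin \md_i$''. Hence the intersection collapses, yielding $Q_{\mathsf{can}}(\{\md_i\}) = U_\mD \setminus \md_i$. This identification is exactly what upgrades the statement from ``q-partition'' (Proposition~\ref{prop:properties_of_q-partitions}) to ``\emph{canonical} q-partition'', because it exhibits the canonical query as having the required q-partition.

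Before invoking the earlier result I would confirm that $Q_{\mathsf{can}}(\{\md_i\})$ is in fact \emph{defined}, i.e.\ non-empty, since otherwise no canonical q-partition exists (Definition~\ref{def:canonical_query}). Because $|\mD| \geq 2$, there is some $\md_j \in \mD$ with $\md_j \neq \md_i$; by the subset-minimality of the diagnoses in $\mD$ neither can be a subset of the other, so $\md_j \setminus \md_i \neq \emptyset$, and any $\tax \in \md_j \setminus \md_i$ witnesses $U_\mD \setminus \md_i \neq \emptyset$. Thus the canonical query is defined.

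Finally, with $Q_{\mathsf{can}}(\{\md_i\}) = U_\mD \setminus \md_i$ established, Proposition~\ref{prop:properties_of_q-partitions},(\ref{prop:properties_of_q-partitions:enum:D+=d_i_is_q-partition_and_lower_bound_of_queries}.) (its parts (a) and (b)) gives directly that this query has associated q-partition $\langle \{\md_i\}, \mD \setminus \{\md_i\}, \emptyset \rangle$; that is, $\Pt(Q_{\mathsf{can}}(\{\md_i\})) = \langle \{\md_i\}, \mD \setminus \{\md_i\}, \emptyset \rangle$. By Definition~\ref{def:canonical_q-partition} this is precisely the statement that $\langle \{\md_i\}, \mD \setminus \{\md_i\}, \emptyset \rangle$ is a canonical q-partition, completing the argument. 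The only step requiring genuine care is the identification $Q_{\mathsf{can}}(\{\md_i\}) = U_\mD \setminus \md_i$ — in particular discarding the $I_\mD$ term via $I_\mD \subseteq \md_i$ — together with the non-emptiness check, which is what rules out a degenerate (undefined) canonical query and is where the hypothesis $|\mD| \geq 2$ enters; everything else is immediate from the cited results.
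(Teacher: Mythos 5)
Your proof is correct and takes essentially the same route as the paper's own proof: both reduce $Q_{\mathsf{can}}(\setof{\md_i}) = E_{\mathsf{exp}}(\setof{\md_i}) \cap \DiscAx_\mD$ to $U_\mD \setminus \md_i$ via Proposition~\ref{prop:E_exp} together with the inclusions $U_\mD \subseteq \mo$ and $I_\mD \subseteq \md_i$, and then invoke Proposition~\ref{prop:properties_of_q-partitions},(\ref{prop:properties_of_q-partitions:enum:D+=d_i_is_q-partition_and_lower_bound_of_queries}.) to conclude that this query has exactly the q-partition $\langle\setof{\md_i},\mD\setminus\setof{\md_i},\emptyset\rangle$. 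Your explicit non-emptiness check is sound but strictly redundant, since the identification with $U_\mD\setminus\md_i$, which the cited proposition certifies to be a query, already yields non-emptiness by Definition~\ref{def:query}.
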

\begin{proof}
That $\langle\{\md_i\},\mD \setminus \{\md_i\},\emptyset\rangle$ is a q-partition follows immediately from Proposition~\ref{prop:properties_of_q-partitions},(\ref{prop:properties_of_q-partitions:enum:D+=d_i_is_q-partition_and_lower_bound_of_queries}.). We now show that it is canonical. Also from Proposition~\ref{prop:properties_of_q-partitions},(\ref{prop:properties_of_q-partitions:enum:D+=d_i_is_q-partition_and_lower_bound_of_queries}.), we obtain that $Q := U_\mD \setminus \md_i$ is an (explicit-entailments) query associated with the q-partition $\langle \dx{}(Q),\dnx{}(Q),\dz{}(Q)\rangle = \langle\{\md_i\},\mD \setminus \{\md_i\},\emptyset\rangle$. Now, $E_{\mathsf{exp}}(\dx{}(Q)) = \mo\setminus U_{\dx{}(Q)} = \mo \setminus \md_i$ due to Proposition~\ref{prop:E_exp}. As $U_\mD \subseteq \mo$ and $I_\mD \subseteq \md_i$, we can infer that $Q_{\mathsf{can}}(\dx{}(Q)) = E_{\mathsf{exp}}(\dx{}(Q)) \cap \DiscAx_\mD = U_\mD \setminus \md_i$. Hence, $Q_{\mathsf{can}}(\dx{}(Q)) = Q$ wherefore $\langle \dx{}(Q),\dnx{}(Q),\dz{}(Q)\rangle$ is a canonical q-partition.
%
%
\end{proof}
Since only one diagnosis is transferred from the initial $\dnx{}$-set $\mD$ to the $\dx{}$-set of the q-partition to obtain any q-partition as per Corollary~\ref{cor:--di,MD-di,0--_is_canonical_q-partition_for_all_di_in_mD} from the initial state, it is clear that all these q-partitions indeed result from the application of a minimal $\dx{}$-transformation from the initial state. As for all other (q-)partitions the $\dx{}$-set differs by more than one diagnosis from the initial $\dx{}$-set $\emptyset$, 
it is obvious that all other q-partitions do not result from the initial state by some minimal $\dx{}$-transformation. Hence:
\begin{proposition}\label{prop:S_init_sound+complete}
Given the initial state $\Pt_0 := \tuple{\emptyset,\mD,\emptyset}$, the function 
\[S_{\mathsf{init}}: \tuple{\emptyset,\mD,\emptyset} \mapsto \setof{\tuple{\setof{\md},\mD\setminus\setof{\md},\emptyset}\,|\,\md\in\mD}\] 
is sound and complete, i.e.\ it produces from $\Pt_0$ all and only (canonical) q-partitions resulting from $\Pt_0$ by minimal $\dx{}$-transformations.
\end{proposition}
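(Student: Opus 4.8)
The plan is to split the claim into its two halves---soundness (every q-partition produced by $S_{\mathsf{init}}$ is a canonical q-partition reachable from $\Pt_0$ by a minimal $\dx{}$-transformation) and completeness (every canonical q-partition reachable from $\Pt_0$ by a minimal $\dx{}$-transformation lies in the image of $S_{\mathsf{init}}$)---and to discharge each half by appealing to Corollary~\ref{cor:--di,MD-di,0--_is_canonical_q-partition_for_all_di_in_mD} together with Definition~\ref{def:minimal_transformation}. The single observation that drives the whole argument is that $\dx{}(\Pt_0) = \emptyset$, so that the only sets strictly larger than $\dx{}(\Pt_0)$ but minimal in the relevant sense are exactly the singletons.

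For soundness, I would fix an arbitrary $\md \in \mD$ and consider the candidate successor $\Pt_j := \tuple{\setof{\md},\mD\setminus\setof{\md},\emptyset}$. First, Corollary~\ref{cor:--di,MD-di,0--_is_canonical_q-partition_for_all_di_in_mD} immediately yields that $\Pt_j$ is a canonical q-partition w.r.t.\ $\mD$. Second, to confirm $\Pt_0 \mapsto \Pt_j$ is a minimal $\dx{}$-transformation, I would verify the two conditions of Definition~\ref{def:minimal_transformation}: we have $\dx{}(\Pt_0) = \emptyset \subset \setof{\md} = \dx{}(\Pt_j)$, and there can be no canonical q-partition $\langle \dx{k},\dnx{k},\emptyset\rangle$ with $\emptyset \subset \dx{k} \subset \setof{\md}$ simply because no set lies strictly between $\emptyset$ and a singleton. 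Hence the minimality condition holds vacuously, and each element of the image of $S_{\mathsf{init}}$ is indeed a canonical q-partition reachable by a minimal $\dx{}$-transformation.

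For completeness, I would start from an arbitrary canonical q-partition $\Pt_j = \langle \dx{j},\dnx{j},\emptyset\rangle$ for which $\Pt_0 \mapsto \Pt_j$ is a minimal $\dx{}$-transformation, and argue that $|\dx{j}| = 1$. By the first condition of Definition~\ref{def:minimal_transformation}, $\emptyset = \dx{}(\Pt_0) \subset \dx{j}$, so $\dx{j} \neq \emptyset$. Suppose for contradiction $|\dx{j}| \geq 2$ and pick any $\md \in \dx{j}$; then $\langle \setof{\md}, \mD\setminus\setof{\md}, \emptyset\rangle$ is a canonical q-partition (again Corollary~\ref{cor:--di,MD-di,0--_is_canonical_q-partition_for_all_di_in_mD}) whose $\dx{}$-set satisfies $\emptyset \subset \setof{\md} \subset \dx{j}$, contradicting the minimality of the transformation $\Pt_0 \mapsto \Pt_j$. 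Therefore $\dx{j} = \setof{\md}$ for some $\md \in \mD$, and since a canonical q-partition has an empty $\dz{}$-set by Proposition~\ref{prop:canonical_q-partition_has_empty_dz}, we obtain $\dnx{j} = \mD \setminus \setof{\md}$, i.e.\ $\Pt_j = \tuple{\setof{\md},\mD\setminus\setof{\md},\emptyset} \in S_{\mathsf{init}}(\Pt_0)$.

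I expect this proposition to be essentially routine rather than to present a genuine obstacle; the only point requiring a little care is the treatment of the initial state $\Pt_0$, which is not itself a q-partition (its $\dx{}$-set is empty, violating Proposition~\ref{prop:properties_of_q-partitions},(\ref{prop:properties_of_q-partitions:enum:for_each_q-partition_dx_is_empty_and_dnx_is_empty}.)), so that Definition~\ref{def:minimal_transformation} is being applied with a partition-but-not-q-partition as the source $\Pt_i$. Since that definition only requires $\Pt_i$ to be a partition of $\mD$, this is legitimate, and the crucial leverage is simply that the intermediate canonical q-partitions needed in the completeness argument---the singletons---are guaranteed to exist by Corollary~\ref{cor:--di,MD-di,0--_is_canonical_q-partition_for_all_di_in_mD}. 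Without that existence guarantee the minimality argument would not close, so I would be sure to cite it explicitly at exactly the point where the contradiction is derived.
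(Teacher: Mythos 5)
Your proof is correct and takes essentially the same approach as the paper: the paper justifies the proposition (in the text immediately preceding it) by citing Corollary~\ref{cor:--di,MD-di,0--_is_canonical_q-partition_for_all_di_in_mD} for the singleton successors and then observing that single-diagnosis transfers are minimal while any larger transfer cannot be. Your write-up merely makes explicit what the paper leaves as "obvious" — in particular, that the completeness direction needs the existence of the singleton canonical q-partitions to block minimality of larger transformations — which is a faithful, slightly more careful rendering of the same argument.
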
 

\begin{example}\label{ex:S_init}
Let us consider our example DPI given by Table~\ref{tab:example_dpi_0} and assume that the set of leading diagnoses $\mD$ is the set of all minimal diagnoses $\minD_{\tuple{\mo,\mb,\Tp,\Tn}_\RQ}$. Then, given the initial state
\begin{align}
\tuple{\emptyset, \setof{\md_1,\dots,\md_6}, \emptyset}
\end{align}
the canonical successor q-partitions of this state are
\begin{align}
&\tuple{\setof{\md_1}, \setof{\md_2,\dots,\md_6}, \emptyset} \\
&\tuple{\setof{\md_2}, \setof{\md_1,\md_3,\dots,\md_6}, \emptyset} \\
&\tuple{\setof{\md_3}, \setof{\md_1,\md_2,\md_4,\md_5,\md_6}, \emptyset} \\
&\tuple{\setof{\md_4}, \setof{\md_1,\md_2,\md_3,\md_5,\md_6}, \emptyset} \\
&\tuple{\setof{\md_5}, \setof{\md_1,\dots,\md_4,\md_6}, \emptyset} \\
&\tuple{\setof{\md_6}, \setof{\md_1,\dots,\md_5}, \emptyset} 
\end{align}
There are no other canonical successor q-partitions of the initial state in the sense of Definition~\ref{def:minimal_transformation} (i.e.\ obtainable by a minimal $\dx{}$-transformation).\qed
\end{example}

\paragraph{Definition of $S_{\mathsf{next}}$.} In order to define $S_{\mathsf{next}}$, we utilize Proposition~\ref{prop:suff+nec_criteria_when_partition_is_q-partition} which 
provides sufficient and necessary criteria when a partition of $\mD$ is a canonical q-partition.
\begin{proposition}\label{prop:suff+nec_criteria_when_partition_is_q-partition}
Let $\mD\subseteq\minD_{\langle\mo,\mb,\Tp,\Tn\rangle_\RQ}$ and $\Pt = \langle \dx{},\dnx{},\emptyset\rangle$ be a partition w.r.t.\ $\mD$ with $\dx{}\neq \emptyset$ and $\dnx{}\neq \emptyset$. Then, $\Pt$ is a canonical q-partition iff 
\begin{enumerate}
	\item $U_{\dx{}} \subset U_{\mD}$ and
	\item there is no $\md_j \in \dnx{}$ such that $\md_j \subseteq U_{\dx{}}$.
\end{enumerate}
\end{proposition}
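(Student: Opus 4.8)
The plan is to unfold the two relevant definitions and reduce everything to set inclusions, exploiting that $Q := Q_{\mathsf{can}}(\dx{})$ is an explicit-entailments query so that its q-partition can be read off by set comparison alone. Concretely, by Definition~\ref{def:canonical_query} together with Proposition~\ref{prop:E_exp},(1.) we have $Q = E_{\mathsf{exp}}(\dx{}) \cap \DiscAx_\mD = (\mo \setminus U_{\dx{}}) \cap (U_\mD \setminus I_\mD)$ whenever this set is non-empty. Throughout I would invoke the facts established in the proof of Proposition~\ref{prop:d0}: since $Q \subseteq \mo$, we have $\dz{}(Q) = \emptyset$, and for every $\md \in \mD$ it holds that $\md \in \dx{}(Q)$ iff $\mo \setminus \md \supseteq Q$, and $\md \in \dnx{}(Q)$ otherwise. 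I would also record the auxiliary observation $I_\mD \subseteq U_{\dx{}}$, which holds because $\dx{} \neq \emptyset$ and $I_\mD \subseteq \md$ for every $\md \in \dx{}$; this lets me turn ``$\tax \in U_\mD$ and $\tax \notin U_{\dx{}}$'' into ``$\tax \in \DiscAx_\mD$''.

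For the direction ``conditions $\Rightarrow$ canonical q-partition'', I would assume (1) and (2) and first show $Q$ is defined. By (1) there is an axiom $\tax \in U_\mD \setminus U_{\dx{}}$; this $\tax$ lies in $\mo \setminus U_{\dx{}}$ and, by the auxiliary observation, in $\DiscAx_\mD$, hence $\tax \in Q$, so $Q \neq \emptyset$. Next I would prove $\Pt(Q) = \langle \dx{}, \dnx{}, \emptyset \rangle$. Since $\dz{}(Q) = \emptyset$ and both partitions split the same $\mD$, it suffices to establish $\dx{}(Q) = \dx{}$. The inclusion $\dx{} \subseteq \dx{}(Q)$ is immediate, because $\md_i \in \dx{}$ gives $\mo \setminus \md_i \supseteq \mo \setminus U_{\dx{}} \supseteq Q$. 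The reverse inclusion is where condition (2) enters: for $\md_j \in \dnx{}$, condition (2) supplies an axiom $\tax \in \md_j \setminus U_{\dx{}}$; this $\tax$ is in $\DiscAx_\mD$ and in $\mo \setminus U_{\dx{}}$, hence $\tax \in Q$, while $\tax \in \md_j$ forces $\mo \setminus \md_j \not\supseteq Q$, so $\md_j \in \dnx{}(Q)$. Thus $\dnx{} \subseteq \dnx{}(Q)$, which together with the first inclusion yields $\dx{}(Q) = \dx{}$, and hence $\Pt$ is the q-partition of the canonical query $Q_{\mathsf{can}}(\dx{})$, i.e.\ a canonical q-partition by Definition~\ref{def:canonical_q-partition}.

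For the converse ``canonical q-partition $\Rightarrow$ conditions'', I would assume that $Q := Q_{\mathsf{can}}(\dx{})$ is defined with $\Pt(Q) = \langle \dx{}, \dnx{}, \emptyset \rangle$. Condition (1) is forced by definedness: since $\dx{} \subseteq \mD$ gives $U_{\dx{}} \subseteq U_\mD$ always, it remains to rule out equality; but if $U_{\dx{}} = U_\mD$ then $Q = (\mo \setminus U_\mD) \cap (U_\mD \setminus I_\mD) \subseteq (\mo \setminus U_\mD) \cap U_\mD = \emptyset$, contradicting that $Q_{\mathsf{can}}(\dx{})$ is defined; hence $U_{\dx{}} \subset U_\mD$. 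Condition (2) follows by contradiction: if some $\md_j \in \dnx{} = \dnx{}(Q)$ satisfied $\md_j \subseteq U_{\dx{}}$, then $\mo \setminus \md_j \supseteq \mo \setminus U_{\dx{}} \supseteq Q$, placing $\md_j$ in $\dx{}(Q)$, which contradicts $\dx{}(Q) \cap \dnx{}(Q) = \emptyset$.

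I do not expect a genuine obstacle beyond disciplined bookkeeping. The only point requiring care is to have the set-inclusion characterization of $\dx{}(Q)$ and $\dnx{}(Q)$ (and $\dz{}(Q) = \emptyset$) for explicit-entailments queries available at the outset, which I take directly from the proof of Proposition~\ref{prop:d0}, combined with the auxiliary fact $I_\mD \subseteq U_{\dx{}}$ that converts the membership bookkeeping into statements about $\DiscAx_\mD$.
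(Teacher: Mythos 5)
Your proof is correct and takes essentially the same route as the paper's: both directions rest on the identity $E_{\mathsf{exp}}(\dx{}) = \mo\setminus U_{\dx{}}$, the resulting inclusions $\dx{}\subseteq\dx{}(Q_{\mathsf{can}}(\dx{}))$ and $\dnx{}\subseteq\dnx{}(Q_{\mathsf{can}}(\dx{}))$ (the latter via subset-minimality of the diagnoses), and the disjointness of the q-partition parts to force equality. The only cosmetic deviations are that you obtain necessity of condition (1) directly from emptiness (undefinedness) of $Q_{\mathsf{can}}(\dx{})$ where the paper instead derives the contradiction $\dnx{}=\emptyset$, and that you channel the membership bookkeeping through the set-comparison criterion from the proof of Proposition~\ref{prop:d0} together with the observation $I_\mD\subseteq U_{\dx{}}$ rather than re-deriving these facts inline.
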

\begin{proof}
``$\Rightarrow$'': We prove the ``only-if''-direction by contradiction. That is, we derive a contradiction by assuming that $\Pt$ is a canonical \text{q-partition} and that $\neg (1.)$ or $\neg (2.)$ is true.

By the premise that $\Pt$ is a canonical q-partition, the query $Q_{\mathsf{can}}(\dx{}) := E_{\mathsf{exp}}(\dx{}) \cap \DiscAx_\mD$ must have exactly $\Pt$ as its associated q-partition. Since $\mo_i^{*}$ does not violate any $x \in \RQ \cup \Tn$, but, due to $\dnx{}\neq \emptyset$, there is some $\md_i \in \dnx{}(Q_{\mathsf{can}}(\dx{})) = \dnx{}$ such that $\mo_i^{*}\cup Q_{\mathsf{can}}(\dx{})$ does violate some $x \in \RQ \cup \Tn$, we can conclude that $Q_{\mathsf{can}}(\dx{}) \neq \emptyset$ and thence $E_{\mathsf{exp}}(\dx{}) \neq \emptyset$.

By Proposition~\ref{prop:E_exp}, $E_{\mathsf{exp}}(\dx{}) = \mo \setminus U_{\dx{}}$. So, $(*): \emptyset \subset Q_{\mathsf{can}}(\dx{}) \subseteq \mo \setminus U_{\dx{}} \subseteq (\mo \setminus U_{\dx{}}) \cup \mb \cup U_\Tp 
\subseteq (\mo \setminus \md_i) \cup \mb \cup U_\Tp =: \mo_i^*$ for all $\md_i \subseteq U_{\dx{}}$. 

Now, assuming that $(1)$ is false, i.e.\ $U_{\dx{}} \not\subset U_{\mD}$, we observe that this is equivalent to $U_{\dx{}} = U_{\mD}$ since $U_{\dx{}} \subseteq U_{\mD}$ due to $\dx{} \subseteq \mD$. Due to $U_{\dx{}} = U_\mD$, $(*)$ is true for all $\md_i\in\mD$. It follows that $\mo_i^* \supseteq Q_{\mathsf{can}}(\dx{})$ and, due to the fact that the entailment relation in $\mathcal{L}$ is extensive, that $\mo_i^* \models Q_{\mathsf{can}}(\dx{})$. Therefore, we can conclude using Definition~\ref{def:q-partition} that $\dx{} = \dx{}(Q_{\mathsf{can}}(\dx{})) = \mD$ and thus $\dnx{} = \emptyset$. The latter is a contradiction to $\dnx{} \neq \emptyset$.

Assuming $\neg (2)$, on the other hand, we obtain that there is some diagnosis $\md_j\in\dnx{}$ with $\md_j \subseteq U_{\dx{}}$. By $(*)$, however, we can derive that $\mo_j^* \models Q_{\mathsf{can}}(\dx{})$ and therefore $\md_j \in \dx{}(Q_{\mathsf{can}}(\dx{})) = \dx{}$ which contradicts $\md_j\in\dnx{}$ by the fact that $\Pt$ is a partition w.r.t.\ $\mD$ which implies $\dx{} \cap \dnx{} = \emptyset$.
%
%

``$\Leftarrow$'': To show the ``if''-direction, we must prove that $\Pt$ is a canonical q-partition, i.e.\ that $\Pt$ is a q-partition and that 
$\Pt$ is exactly the q-partition associated with $Q_{\mathsf{can}}(\dx{})$ given that $(1)$ and $(2)$ hold. 

By $\dx{} \neq \emptyset$ and $(1)$, it is true that $\emptyset \subset U_{\dx{}} \subset U_\mD$. So, there is some axiom $\tax \in U_\mD \subseteq \mo$ such that $(**): \tax \notin U_{\dx{}}$. Hence, $\tax \in \mo \setminus U_{\dx{}}$, and, by Proposition~\ref{prop:E_exp}, $\tax\in E_{\mathsf{exp}}(\dx{})$. Now, $Q_{\mathsf{can}}(\dx{}) := E_{\mathsf{exp}}(\dx{}) \cap \DiscAx_\mD = E_{\mathsf{exp}}(\dx{}) \cap (U_\mD \setminus I_\mD)$. Since $\tax \in U_\mD$, in order to show that $\tax \in Q_{\mathsf{can}}(\dx{})$, we must demonstrate that $\tax \notin I_\mD$. Therefore, assume that $\tax \in I_\mD$. This implies that $\tax \in \md_i$ for all $\md_i \in \mD$ and particularly for $\md_i \in \dx{}$ which yields $\tax \in U_{\dx{}}$. This, however, is a contradiction to $(**)$. So, we conclude that $\tax \in Q_{\mathsf{can}}(\dx{})$, i.e.\ (Q1): $Q_{\mathsf{can}}(\dx{}) \neq\emptyset$. 

More precisely, since $\tax$ was an arbitrary axiom in $U_\mD$ with property $(**)$, we have that $U_\mD \setminus U_{\dx{}} \subseteq Q_{\mathsf{can}}(\dx{})$. By $(2)$, for all $\md_j \in \dnx{}$ there is an axiom $\tax_j \in \mo$ such that $\tax_j \in \md_j \subset U_\mD$ and $\tax_j \notin U_{\dx{}}$ which implies $\tax_j \in U_\mD \setminus U_{\dx{}} \subseteq Q_{\mathsf{can}}(\dx{})$. Hence, $\mo_j^* \cup Q_{\mathsf{can}}(\dx{})$ must violate some $x \in \RQ\cup\Tn$ since $\tax_j \in \mo$ and by the subset-minimality of $\md_j \in \dnx{}$. Consequently, $\dnx{} \subseteq \dnx{}(Q_{\mathsf{can}}(\dx{}))$. As $\dnx{} \neq \emptyset$ by assumption, we have that (Q2): $\emptyset \subset \dnx{}(Q_{\mathsf{can}}(\dx{}))$.

That $\mo_i^* \models Q_{\mathsf{can}}(\dx{})$ for $\md_i \in \dx{}$ follows by the same argumentation that was used above in $(*)$. Thus, we obtain $\dx{} \subseteq \dx{}(Q_{\mathsf{can}}(\dx{}))$. As $\dx{} \neq \emptyset$ by assumption, we have that (Q3): $\emptyset \subset \dx{}(Q_{\mathsf{can}}(\dx{}))$.
Note that (Q1) -- (Q3) imply that $Q_{\mathsf{can}}(\dx{})$ is a query w.r.t.\ $\mD$ (cf.\ Definition~\ref{def:query}).

Now, let us assume that at least one of the two derived subset-relations is proper, i.e.\ (a)~$\dnx{} \subset \dnx{}(Q_{\mathsf{can}}(\dx{}))$ or (b)~$\dx{} \subset \dx{}(Q_{\mathsf{can}}(\dx{}))$. If (a) holds, then there is some $\md \in \dnx{}(Q_{\mathsf{can}}(\dx{}))$ which is not in $\dnx{}$. Hence, $\md\in\dx{}$ or $\md\in\dz{}$. The former case is impossible since $\md\in\dx{}$ implies $\md\in\dx{}(Q_{\mathsf{can}}(\dx{}))$ by $\dx{} \subseteq \dx{}(Q_{\mathsf{can}}(\dx{}))$, which yields $\dnx{}(Q_{\mathsf{can}}(\dx{})) \cap \dx{}(Q_{\mathsf{can}}(\dx{})) \supseteq \setof{\md} \supset \emptyset$, a contradiction to the fact that $Q_{\mathsf{can}}(\dx{})$ is a query w.r.t.\ $\mD$ and Proposition~\ref{prop:properties_of_q-partitions},(\ref{prop:properties_of_q-partitions:enum:q-partition_is_partition}.).
The latter case cannot be true either as $\dz{} = \emptyset$ by assumption.
In an analogue way we obtain a contradiction if we assume that case (b) holds. So, it must hold that $\Pt = \tuple{\dx{}(Q_{\mathsf{can}}(\dx{})),\dnx{}(Q_{\mathsf{can}}(\dx{})),\emptyset}$. This finishes the proof.
\end{proof}

\begin{example}\label{ex:Pt_is_q-partition_iff}
Let (by referring by $i$ to $\tax_i$)
\begin{align}
\begin{split} \label{eq:ex:Pt_is_q-partition_iff:Pt_1}
\Pt_1 :=& \tuple{\setof{\md_1,\md_2,\md_3},\setof{\md_4,\md_5,\md_6},\emptyset} \\
			=& \tuple{\setof{\setof{2,3},\setof{2,5},\setof{2,6}},\setof{\setof{2,7},\setof{1,4,7},\setof{3,4,7}},\emptyset} 
\end{split} \\
\begin{split} \label{eq:ex:Pt_is_q-partition_iff:Pt_2}
\Pt_2 :=& \tuple{\setof{\md_1,\md_2,\md_5},\setof{\md_3,\md_4,\md_6},\emptyset} \\
			=& \tuple{\setof{\setof{2,3},\setof{2,5},\setof{1,4,7}},\setof{\setof{2,6},\setof{2,7},\setof{3,4,7}},\emptyset}  
\end{split}
\end{align}
Then $U_{\dx{}(\Pt_1)} = \setof{2,3,5,6}$, $U_{\dx{}(\Pt_2)} = \setof{1,2,3,4,5,7}$ and $U_\mD = \setof{1,\dots,7}$. Since $U_{\dx{}(\Pt_1)} \subset U_\mD$ as well as $U_{\dx{}(\Pt_2)} \subset U_\mD$, the first condition of Proposition~\ref{prop:suff+nec_criteria_when_partition_is_q-partition} is satisfied for both partitions of $\mD$. As to the second condition, given that $7 \in \md_4,\md_5,\md_6$, but $7 \not\in U_{\dx{}(\Pt_1)}$, it holds that $\md_j \not\subseteq U_{\dx{}(\Pt_1)}$ for all $j \in \setof{4,5,6}$. Therefore, $\Pt_1$ is a canonical q-partition. $\Pt_2$, on the other hand, is no canonical q-partition because, e.g., $\md_4 = \setof{2,7} \subset \setof{1,2,3,4,5,7} = U_{\dx{}(\Pt_2)}$ (second condition of Proposition~\ref{prop:suff+nec_criteria_when_partition_is_q-partition} violated).

Intuitively, the reason why $\Pt_2$ fails to be a canonical q-partition is that if $\md = \setof{1,4,7}$ moves from the $\dnx{}$-set to the $\dx{}$-set (which at this point contains only $\setof{2,3}$ and $\setof{2,5}$) in the course of the minimal $\dx{}$-transformation, the diagnoses $\setof{2,7}$ and $\setof{3,4,7}$ must necessarily follow $\md$ to $\dx{}$.
For, after this shift of $\md$, it holds that $\setof{2,7} \subseteq U_{\dx{}(\Pt_2)}$ and $\setof{3,4,7} \subseteq U_{\dx{}(\Pt_2)}$. This means that both $\mo \setminus \setof{2,7}$ and $\mo \setminus \setof{3,4,7}$ entail the canonical query $\setof{6}$ w.r.t.\ the seed $\dx{}(\Pt_2)$ while $\setof{2,7}$ and $\setof{3,4,7}$ are elements of $\dnx{}(\Pt_2)$. In fact, it can be easily verified that this canonical query has the q-partition $\tuple{\mD\setminus\setof{\md_3}, \setof{\md_3}, \emptyset}$ which is different from $\Pt_2$. So, by Definition~\ref{def:q-partition}, $\Pt_2$ is not a q-partition.
 

To sum up, starting from a q-partition, we can only generate a canonical q-partition from it by transferring at least one diagnosis $\md$ from $\dnx{}$ to $\dx{}$ and, together with this diagnosis, all the -- as well will call them later -- \emph{necessary followers} of $\md$.

Finally, we point out that one can verify that there is in fact no query with associated q-partition $\Pt_2$. That is, the partition $\Pt_2$ of $\mD$ is no canonical q-partition \emph{and} no q-partition.
\qed 
\end{example}

Let in the following for a DPI $\langle\mo,\mb,\Tp,\Tn\rangle_\RQ$ and a partition $\Pt_k = \langle \dx{k}, \dnx{k}, \dz{k}\rangle$ of $\mD$ and all $\md_i \in \mD \subseteq \minD_{\langle\mo,\mb,\Tp,\Tn\rangle_\RQ}$ 
\begin{align}
\md^{(k)}_i := \md_i \setminus U_{\dx{k}} \label{eq:md_i^(k)}
\end{align}
The next corollary establishes the relationship between Eq.~\eqref{eq:md_i^(k)} and canonical q-partitions based on Proposition~\ref{prop:suff+nec_criteria_when_partition_is_q-partition}:
\begin{corollary}\label{cor:not_q-partition_iff_md_i^(k)=emptyset_for_md_i_in_dnx_k}
Let $\mD\subseteq\minD_{\langle\mo,\mb,\Tp,\Tn\rangle_\RQ}$, $\Pt_k = \langle \dx{k}, \dnx{k}, \emptyset\rangle$ a partition of $\mD$ with $\dx{k},\dnx{k} \neq \emptyset$ and $U_{\dx{k}} \subset U_\mD$. Then $\Pt := \tuple{\dx{k},\dnx{k},\emptyset}$ is a canonical q-partition iff 
\begin{enumerate}
	\item $\md_i^{(k)} = \emptyset$ for all $\md_i \in \dx{k}$, and
	\item $\md_i^{(k)} \neq \emptyset$ for all $\md_i \in \dnx{k}$.
\end{enumerate}

\end{corollary}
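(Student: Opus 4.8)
The plan is to recognize this corollary as essentially a notational restatement of Proposition~\ref{prop:suff+nec_criteria_when_partition_is_q-partition}, obtained by unfolding the abbreviation $\md_i^{(k)} = \md_i \setminus U_{\dx{k}}$ from Eq.~\eqref{eq:md_i^(k)}. Since the hypotheses of the corollary already guarantee $\dx{k} \neq \emptyset$, $\dnx{k} \neq \emptyset$ and $U_{\dx{k}} \subset U_\mD$, the first criterion of Proposition~\ref{prop:suff+nec_criteria_when_partition_is_q-partition} holds by assumption. That proposition therefore already tells us that $\Pt = \tuple{\dx{k},\dnx{k},\emptyset}$ is a canonical q-partition iff there is no $\md_j \in \dnx{k}$ with $\md_j \subseteq U_{\dx{k}}$. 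The whole task thus reduces to translating this single remaining criterion into the two conditions stated in the corollary.

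First I would dispose of condition~(1). By definition $U_{\dx{k}} = \bigcup_{\md \in \dx{k}} \md$, hence every $\md_i \in \dx{k}$ satisfies $\md_i \subseteq U_{\dx{k}}$ and therefore $\md_i^{(k)} = \md_i \setminus U_{\dx{k}} = \emptyset$. Consequently condition~(1) is a tautology which holds for \emph{every} partition $\Pt_k$ and imposes no restriction whatsoever. I would keep it in the statement only because it makes the characterization read symmetrically against condition~(2) and matches the membership tests used later by the q-partition search algorithms (e.g.\ in \textsc{get$\dx{}$Sucs}).

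The actual content therefore lies in condition~(2). Here I would invoke the elementary set-theoretic equivalence $\md_i^{(k)} = \emptyset \Leftrightarrow \md_i \subseteq U_{\dx{k}}$; negating it yields $\md_i^{(k)} \neq \emptyset \Leftrightarrow \md_i \not\subseteq U_{\dx{k}}$. Hence ``$\md_i^{(k)} \neq \emptyset$ for all $\md_i \in \dnx{k}$'' is precisely ``there is no $\md_j \in \dnx{k}$ with $\md_j \subseteq U_{\dx{k}}$'', i.e.\ exactly the second criterion of Proposition~\ref{prop:suff+nec_criteria_when_partition_is_q-partition}. Combining this with the first criterion of that proposition, which is supplied by the corollary's hypothesis $U_{\dx{k}} \subset U_\mD$, closes the ``iff'' in both directions at once.

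I do not anticipate any genuine obstacle: the corollary is a direct specialization of an already proven proposition, and the only point worth an explicit remark is that condition~(1) is vacuously satisfied, so that the effective characterization is carried entirely by condition~(2). The mild care needed is to verify that all standing hypotheses of Proposition~\ref{prop:suff+nec_criteria_when_partition_is_q-partition} --- the nonemptiness of $\dx{k}$ and $\dnx{k}$ together with the strict inclusion $U_{\dx{k}} \subset U_\mD$ --- are indeed among the corollary's premises, which they are.
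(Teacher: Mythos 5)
Your proposal is correct and follows essentially the same route as the paper's own proof: both reduce the corollary to Proposition~\ref{prop:suff+nec_criteria_when_partition_is_q-partition}, observe that condition~(1) is vacuously true since $\md_i \subseteq U_{\dx{k}}$ for all $\md_i \in \dx{k}$, and identify condition~(2) with the proposition's requirement that no $\md_j \in \dnx{k}$ satisfies $\md_j \subseteq U_{\dx{k}}$. The paper merely phrases the condition~(2) step as a contrapositive, whereas you state the equivalence directly; the logical content is identical.
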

\begin{proof}
Ad 1.: This proposition follows directly from the definition of $\md_i^{(k)} := \md_i \setminus U_{\dx{k}}$ (see Eq.~\eqref{eq:md_i^(k)}) and the trivial fact that $\md_i \subseteq U_{\dx{k}}$ for all $\md_i \in \dx{k}$. Thence, this proposition can never be false.

Ad 2.: We show the contrapositive of (2.), i.e.\ that $\Pt := \tuple{\dx{k},\dnx{k},\emptyset}$ is not a canonical q-partition iff $\md_i^{(k)} = \emptyset$ for some $\md_i \in \dnx{k}$:

``$\Leftarrow$'': 
By Proposition~\ref{prop:suff+nec_criteria_when_partition_is_q-partition}, a partition $\Pt_k = \langle \dx{k},\dnx{k},\emptyset\rangle$ with $\dx{k},\dnx{k} \neq \emptyset$ is a canonical q-partition iff (1)~$U_{\dx{k}} \subset U_{\mD}$ and (2)~there is no $\md_j \in \dnx{k}$ such that $\md_j \subseteq U_{\dx{k}}$. If $\md_j \in \dnx{k}$ and $\md_j^{(k)} := \md_j \setminus U_{\dx{k}} = \emptyset$, then $\md_j \subseteq U_{\dx{k}}$, which violates the necessary condition~(2). Therefore, $\Pt_k$ cannot be a canonical q-partition.

``$\Rightarrow$'': By Proposition~\ref{prop:suff+nec_criteria_when_partition_is_q-partition}, a partition $\Pt_k = \langle \dx{k},\dnx{k},\emptyset\rangle$ with $\dx{k},\dnx{k} \neq \emptyset$ is not a canonical q-partition iff ($\neg 1$)~$U_{\dx{k}} \not\subset U_{\mD}$ or ($\neg 2$)~there is some $\md_i \in \dnx{k}$ such that $\md_i \subseteq U_{\dx{k}}$. Since condition ($\neg 1$) is assumed to be false, condition~($\neg 2$) must be true, which implies that $\md_i^{(k)} = \md_i \setminus U_{\dx{k}} = \emptyset$ for some $\md_i \in \dnx{k}$.
\end{proof}
So, Corollary~\ref{cor:not_q-partition_iff_md_i^(k)=emptyset_for_md_i_in_dnx_k} along with Proposition~\ref{prop:canonical_q-partition_has_empty_dz}, which states that $\dz{}$ must be the empty set for all canonical q-partitions, demonstrate that $U_{\dx{}}$ already defines a canonical q-partition uniquely. Consequently, we can compute the number of canonical q-partitions w.r.t.\ a set of leading minimal diagnoses, as the next corollary shows. Note that the number of canonical q-partitions w.r.t.\ $\mD$ is equal to the number of canonical queries w.r.t.\ $\mD$ which in turn constitutes a lower bound of the number of \emph{all queries} w.r.t.\ $\mD$. 
\begin{corollary}\label{cor:upper_lower_bound_for_canonical_q-partitions}
Let $\mD\subseteq\minD_{\langle\mo,\mb,\Tp,\Tn\rangle_\RQ}$ with $|\mD| \geq 2$. Then, for the number $c$ of canonical q-partitions w.r.t.\ $\mD$ the following holds:
\begin{align}
c = \left|\setof{U_{\dx{}}\,|\, \emptyset \subset \dx{} \subset \mD} \setminus \setof{U_\mD}\right| \geq |\mD|
\label{eq:number_of_canonical_q-partitions}
\end{align} 
\end{corollary}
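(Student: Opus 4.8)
The plan is to establish a bijection between the set of canonical q-partitions w.r.t.\ $\mD$ and the set $\mathcal{U} := \setof{U_{\dx{}}\,|\,\emptyset\subset\dx{}\subset\mD}\setminus\setof{U_\mD}$, which immediately yields the claimed equality $c = |\mathcal{U}|$; the lower bound $|\mathcal{U}|\geq|\mD|$ then follows from an explicit family of canonical q-partitions. The natural candidate for the bijection is the map $\Phi: \Pt \mapsto U_{\dx{}(\Pt)}$. The crucial enabling facts, already available from the excerpt, are that every canonical q-partition has $\dz{}=\emptyset$ (Proposition~\ref{prop:canonical_q-partition_has_empty_dz}) and that, by Corollary~\ref{cor:not_q-partition_iff_md_i^(k)=emptyset_for_md_i_in_dnx_k} (equivalently Proposition~\ref{prop:suff+nec_criteria_when_partition_is_q-partition}), the membership of each diagnosis in $\dx{}$ versus $\dnx{}$ is governed entirely by whether $\md_i\subseteq U_{\dx{}}$ holds or not.

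First I would show that $\Phi$ is well-defined with image contained in $\mathcal{U}$ and is injective. For a canonical q-partition $\Pt=\tuple{\dx{},\dnx{},\emptyset}$ we have $\dx{},\dnx{}\neq\emptyset$, hence $\emptyset\subset\dx{}\subset\mD$, so $U_{\dx{}}\in\setof{U_{\dx{}}\,|\,\emptyset\subset\dx{}\subset\mD}$; moreover $U_{\dx{}}\subset U_\mD$ by condition (1) of Proposition~\ref{prop:suff+nec_criteria_when_partition_is_q-partition}, so $U_{\dx{}}\neq U_\mD$ and thus $\Phi(\Pt)\in\mathcal{U}$. Injectivity amounts to the observation that $U_{\dx{}}$ reconstructs the entire q-partition: since $\dz{}=\emptyset$, each diagnosis lies in $\dx{}$ or $\dnx{}$, and by Corollary~\ref{cor:not_q-partition_iff_md_i^(k)=emptyset_for_md_i_in_dnx_k} precisely those $\md_i$ with $\md_i\subseteq U_{\dx{}}$ belong to $\dx{}$ while those with $\md_i\not\subseteq U_{\dx{}}$ belong to $\dnx{}$. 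Hence two canonical q-partitions with equal $U_{\dx{}}$ coincide.

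Next I would prove that $\Phi$ is onto $\mathcal{U}$. Given any $S\in\mathcal{U}$, write $S=U_{\mathbf{X}}$ for some $\emptyset\subset\mathbf{X}\subset\mD$ with $S\neq U_\mD$, and set $\dx{}:=\setof{\md_i\in\mD\,|\,\md_i\subseteq S}$ and $\dnx{}:=\mD\setminus\dx{}$. The key step, relying on idempotence of set union, is that $U_{\dx{}}=S$ \emph{exactly}: clearly $U_{\dx{}}\subseteq S$, and since every $\md\in\mathbf{X}$ satisfies $\md\subseteq U_{\mathbf{X}}=S$ we get $\mathbf{X}\subseteq\dx{}$ and therefore $S=U_{\mathbf{X}}\subseteq U_{\dx{}}$. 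From $\mathbf{X}\subseteq\dx{}$ with $\mathbf{X}\neq\emptyset$ we obtain $\dx{}\neq\emptyset$, and from the proper inclusion $S=U_{\dx{}}\subset U_\mD$ (proper since $S\neq U_\mD$ and $S\subseteq U_\mD$) there is an axiom in $U_\mD\setminus S$ and hence a diagnosis not contained in $S$, so $\dnx{}\neq\emptyset$. Both conditions of Corollary~\ref{cor:not_q-partition_iff_md_i^(k)=emptyset_for_md_i_in_dnx_k} hold by construction, whence $\tuple{\dx{},\dnx{},\emptyset}$ is a canonical q-partition with $\Phi$-image $S$. Together with the previous paragraph this makes $\Phi$ a bijection onto $\mathcal{U}$, giving $c=|\mathcal{U}|$.

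Finally, for $|\mathcal{U}|\geq|\mD|$ I would invoke Corollary~\ref{cor:--di,MD-di,0--_is_canonical_q-partition_for_all_di_in_mD}, which provides the $|\mD|$ canonical q-partitions $\tuple{\setof{\md_i},\mD\setminus\setof{\md_i},\emptyset}$; their $\dx{}$-sets are pairwise distinct, forcing $c\geq|\mD|$, and by the established equality $|\mathcal{U}|=c\geq|\mD|$. I expect the main obstacle to be the surjectivity argument, specifically the verification that the reconstructed $\dx{}$ has union exactly $S$ rather than a proper subset; the point to get right is that $S$ is presented as a union of diagnoses, so that idempotence guarantees $S\subseteq U_{\dx{}}$. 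Everything else is bookkeeping layered on top of the characterization in Corollary~\ref{cor:not_q-partition_iff_md_i^(k)=emptyset_for_md_i_in_dnx_k}.
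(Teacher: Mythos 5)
Your proof is correct and takes essentially the same route as the paper's: both identify each canonical q-partition with its set $U_{\dx{}}$ and count the distinct such sets properly contained in $U_\mD$, relying on Proposition~\ref{prop:suff+nec_criteria_when_partition_is_q-partition} and Corollary~\ref{cor:not_q-partition_iff_md_i^(k)=emptyset_for_md_i_in_dnx_k} for the one-to-one correspondence and on Corollary~\ref{cor:--di,MD-di,0--_is_canonical_q-partition_for_all_di_in_mD} for the lower bound. The only difference is level of detail: you make the surjectivity check explicit (constructing, for each admissible union $S \subset U_\mD$, the canonical q-partition with $\dx{} = \setof{\md_i \mid \md_i \subseteq S}$ and verifying $U_{\dx{}} = S$), a step the paper's terser proof asserts without elaboration.
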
 
\begin{proof}
The inequality holds due to Corollary~\ref{cor:--di,MD-di,0--_is_canonical_q-partition_for_all_di_in_mD}. 
We know by Proposition~\ref{prop:canonical_q-partition_has_empty_dz} that a canonical q-partition has empty $\dz{}$. Further, since it is a q-partition, $\dx{} \neq \emptyset$ and $\dnx{} \neq \emptyset$ (Proposition~\ref{prop:properties_of_q-partitions}). By Proposition~\ref{prop:suff+nec_criteria_when_partition_is_q-partition}, $U_{\dx{}} \subset U_{\mD}$ must hold as well. By Corollary~\ref{cor:not_q-partition_iff_md_i^(k)=emptyset_for_md_i_in_dnx_k}, there is a unique canonical q-partition for each set $U_{\dx{}}$. Further, different sets $U_{\dx{i}} \neq U_{\dx{j}}$ clearly imply different sets $\dx{i}$ and $\dx{j}$ and thus different canonical q-partitions. Hence, we must count exactly all different $U_{\dx{}}$ such that $U_{\dx{}} \subset U_{\mD}$. This is exactly what Eq.~\eqref{eq:number_of_canonical_q-partitions} specifies.
%
%
%
%
\end{proof}

\begin{example}\label{ex:md_i^(k)}
Let us examine both partitions discussed in Example~\ref{ex:Pt_is_q-partition_iff} by means of Corollary~\ref{cor:not_q-partition_iff_md_i^(k)=emptyset_for_md_i_in_dnx_k}. The first one, $\Pt_1$ (see Eq.~\eqref{eq:ex:Pt_is_q-partition_iff:Pt_1}), written in the form $\tuple{\setof{\md_i^{(k)}\,|\,\md_i \in \dx{}(\Pt_1)},\setof{\md_i^{(k)}\,|\,\md_i \in \dnx{}(\Pt_1)},\emptyset}$ is given by $\tuple{\setof{\emptyset,\emptyset,\emptyset},\setof{\setof{7},\setof{1,4,7},\setof{4,7}},\emptyset}$ where natural numbers $j$ in the sets again refer to the respective sentences $\tax_j$ (as in Example~\ref{ex:Pt_is_q-partition_iff}). Hence, Corollary~\ref{cor:not_q-partition_iff_md_i^(k)=emptyset_for_md_i_in_dnx_k} confirms the result we obtained in Example~\ref{ex:Pt_is_q-partition_iff}, namely that $\Pt_1$ is a canonical q-partition.

On the contrary, $\Pt_2$ (see Eq.~\eqref{eq:ex:Pt_is_q-partition_iff:Pt_2}), is not a canonical q-partition according to Corollary~\ref{cor:not_q-partition_iff_md_i^(k)=emptyset_for_md_i_in_dnx_k} since, represented in the same form as $\Pt_1$ above, $\Pt_2$ evaluates to $\tuple{\setof{\emptyset,\emptyset,\emptyset},\setof{\setof{6},\emptyset,\emptyset},\emptyset}$. Again, the result we got in Example~\ref{ex:Pt_is_q-partition_iff} is successfully verified.

To concretize Corollary~\ref{cor:upper_lower_bound_for_canonical_q-partitions}, let us apply it to our example DPI (see Table~\ref{tab:example_dpi_0}) using the set of leading diagnoses $\mD := \minD_{\tuple{\mo,\mb,\Tp,\Tn}_\RQ} = \setof{\setof{2,3},\setof{2,5},\setof{2,6},\setof{2,7},\setof{1,4,7},\setof{3,4,7}}$ listed by Table~\ref{tab:min_diagnoses_example_dpi_0}. Building all possible unions of sets in $\mD$, i.e.\ all possible $U_{\dx{}}$, such that each union is not equal to (i.e.\ a proper subset of) $U_\mD = \setof{1,\dots,7}$, yields $29$ different $U_{\dx{}}$ sets. These correspond exactly to the canonical q-partitions w.r.t.\ $\mD$, i.e.\ the canonical q-partition associated with $U_{\dx{}}$ is $\tuple{\dx{},\mD\setminus\dx{},\emptyset}$. There are no other canonical q-partitions.

Note the theoretically possible number of canonical q-partitions w.r.t.\ $\mD$ is $2^{|\mD|}-2 = 2^6 - 2 = 62$ because $\dz{} = \emptyset$ (cf.\ Proposition~\ref{prop:canonical_q-partition_has_empty_dz}) and there are $2^{|\mD|}$ partitions into two parts ($\dx{}$ and $\dnx{}$) of $\mD$. From this $2$ must be subtracted since $\dx{} \neq \emptyset$ and $\dnx{} \neq \emptyset$ must hold for any (canonical) q-partition (cf.\ Proposition~\ref{prop:properties_of_q-partitions},(\ref{prop:properties_of_q-partitions:enum:for_each_q-partition_dx_is_empty_and_dnx_is_empty}.)). Hence, in this example about $50\%$ of the theoretically possible canonical q-partitions or, respectively, of all q-partitions with empty $\dz{}$-set are indeed canonical. 

Since there is one and only one canonical query per canonical q-partition (cf.\ Proposition~\ref{prop:canonical_query_unique_for_seed}), we can directly infer from this result that there are $29$ canonical queries w.r.t.\ $\mD$ and $\tuple{\mo,\mb,\Tp,\Tn}_\RQ$.

Given the leading diagnoses $\mD$ considered in Example~\ref{ex:explicit_ents_query} given by Eq.~\eqref{eq:ex_explicit_ents_query:leading_diags}, we obtain a number of $6$ canonical queries and q-partitions. These correspond exactly to the queries and q-partitions given in Eq.~\eqref{eq:canQ+canQP_for_diags1,2,3} and explicated in Example~\ref{ex:disc_ax}. So, in this case $100\%$ of all possible q-partitions with empty $\dz{}$ are indeed canonical.

Concerning the leading diagnoses $\mD$ dealt with in Example~\ref{ex:canonical_queries_q-partitions} (see Eq.~\eqref{eq:ex_canonical_queries_q-partitions:leading_diags}), the application of Corollary~\ref{cor:upper_lower_bound_for_canonical_q-partitions} yields the $5$ different $U_{\dx{}}$ sets $\setof{\setof{2,3},\setof{2,3,4,7},\setof{3,4,7},\setof{1,3,4,7},\setof{1,4,7}}$ that are different from $U_\mD = \setof{1,2,3,4,7}$. These correspond to the following $\dx{}$-sets (given in the same order):  
\[
\setof{\setof{\md_1},\setof{\md_1,\md_6},\setof{\md_6},\setof{\md_5,\md_6},\setof{\md_5}}
\]
and therefore to the q-partitions (given in the same order):
\begin{align*}
\{\tuple{\setof{\md_1},\setof{\md_5,\md_6},\emptyset}, \\
\tuple{\setof{\md_1,\md_6},\setof{\md_5},\emptyset}, \\
\tuple{\setof{\md_6},\setof{\md_1,\md_5},\emptyset}, \\
\tuple{\setof{\md_5,\md_6},\setof{\md_1},\emptyset}, \\
\tuple{\setof{\md_5},\setof{\md_1,\md_6},\emptyset}
\}
\end{align*}
Thus, we have $5$ canonical queries and q-partitions, as we already showed in Example~\ref{ex:canonical_queries_q-partitions} (cf.\ Table~\ref{tab:Qcan+canQPart_for_example_dpi_0}). Here, the actual number of canonical q-partitions amounts to more than $80\%$ of all possible ($2^3-2 = 6$) q-partitions with empty $\dz{}$.
\qed
\end{example}

Exploiting the results obtained by Proposition~\ref{prop:suff+nec_criteria_when_partition_is_q-partition}, the following proposition provides the tools for defining and computing the successor function $S_{\mathsf{next}}$, more concretely a minimal $\dx{}$-transformation from a canonical q-partition to a canonical q-partition, for $\dx{}$-partitioning search. It establishes criteria when such a minimal $\dx{}$-transformation is given, i.e.\ using these criteria to compute successor q-partitions yields a \emph{sound} successor function. And it shows that all canonical q-partitions obtainable by a minimal $\dx{}$-transformation can be computed using these criteria, i.e.\ the successor function based on these criteria is \emph{complete}. Furthermore, it indicates circumstances under which the successor function can be strongly simplified, meaning that all possible successors of a partition are proven to be canonical q-partitions. The latter result can be used to accelerate the search.
\begin{proposition}\label{prop:minimal_transformation_for_D+_partitioning}
Let $\mD\subseteq\minD_{\langle\mo,\mb,\Tp,\Tn\rangle_\RQ}$ and $\Pt_k = \langle \dx{k}, \dnx{k}, \emptyset\rangle$ be a canonical q-partition of $\mD$. 
Then 
\begin{enumerate}
\item (\emph{soundness}) $\Pt_k \mapsto \Pt_s$ for a partition $\Pt_s := \langle \dx{s}, \dnx{s}, \emptyset\rangle$ of $\mD$ with $\dx{s} \supseteq \dx{k}$ is a minimal $\dx{}$-transformation if
\begin{enumerate}
\item \label{prop:minimal_transformation_for_D+_partitioning:enum:dx_y} $\dx{y} := \dx{k} \cup \setof{\md}$ such that $U_{\dx{y}} \subset U_\mD$ for some $\md \in \dnx{k}$ and $U_{\dx{y}}$ is subset-minimal among all $\md\in\dnx{k}$, and
\item \label{prop:minimal_transformation_for_D+_partitioning:enum:dx_s} $\dx{s} := \{\md_i\,|\,\md_i \in \mD,\md_i^{(y)} = \emptyset\}$ and 
\item \label{prop:minimal_transformation_for_D+_partitioning:enum:dnx_s} $\dnx{s} := \{\md_i\,|\,\md_i \in \mD,\md_i^{(y)} \neq \emptyset\}$. 
\end{enumerate} 
\item (\emph{completeness}) the construction of $\Pt_s$ as per (\ref{prop:minimal_transformation_for_D+_partitioning:enum:dx_y}), (\ref{prop:minimal_transformation_for_D+_partitioning:enum:dx_s}) and (\ref{prop:minimal_transformation_for_D+_partitioning:enum:dnx_s}) yields all possible minimal $\dx{}$-transformations $\Pt_k \mapsto \Pt_s$.
\item (\emph{acceleration}) if all elements in $\setof{\md_i^{(k)}\,|\,\md_i^{(k)} \in \dnx{k}}$ are pairwise disjoint, then all possible partitions $\langle \dx{r}, \dnx{r}, \emptyset\rangle$ with $\mD \supset \dx{r} \supseteq \dx{k}$ are canonical q-partitions.
\end{enumerate}
\end{proposition}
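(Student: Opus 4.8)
The plan is to recast the whole proposition in terms of the axiom-union sets $U_{\dx{}}$, exploiting the closure characterization underlying Corollary~\ref{cor:not_q-partition_iff_md_i^(k)=emptyset_for_md_i_in_dnx_k} and Corollary~\ref{cor:upper_lower_bound_for_canonical_q-partitions}: a canonical q-partition $\langle\dx{},\dnx{},\emptyset\rangle$ is determined uniquely by $U_{\dx{}}$ via $\dx{} = \setof{\md_i \in \mD \mid \md_i \subseteq U_{\dx{}}}$, since $\md_i \in \dx{}$ iff $\md_i^{(k)} = \emptyset$ iff $\md_i \subseteq U_{\dx{}}$. Two consequences I would record up front and reuse repeatedly are: (i) for any canonical q-partition, $U_{\dx{}}$ is a fixed point of the closure $W \mapsto \bigcup\setof{\md_i \mid \md_i \subseteq W}$, so in particular $U_{\dx{s}} = U_{\dx{y}}$ for the set $\dx{s}$ built in step~(\ref{prop:minimal_transformation_for_D+_partitioning:enum:dx_s}); and (ii) for two canonical q-partitions, $\dx{} \subseteq \dx{}'$ iff $U_{\dx{}} \subseteq U_{\dx{}'}$, with properness transferring in both directions (equal unions force equal $\dx{}$-sets by Corollary~\ref{cor:upper_lower_bound_for_canonical_q-partitions}). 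This monotone correspondence reduces the entire statement to manipulations of unions under $\subseteq$.

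For soundness (1), I would first confirm $\Pt_s$ is a canonical q-partition through Corollary~\ref{cor:not_q-partition_iff_md_i^(k)=emptyset_for_md_i_in_dnx_k}: steps~(\ref{prop:minimal_transformation_for_D+_partitioning:enum:dx_s}) and (\ref{prop:minimal_transformation_for_D+_partitioning:enum:dnx_s}) split $\mD$ exactly by whether $\md_i^{(y)} = \emptyset$, and since $U_{\dx{s}} = U_{\dx{y}} \subset U_\mD$ (the last inclusion from step~(\ref{prop:minimal_transformation_for_D+_partitioning:enum:dx_y})) both $\dx{s}$ and $\dnx{s}$ are nonempty and the criteria hold. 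Properness $\dx{k} \subset \dx{s}$ follows because $\md \in \dx{y} \subseteq \dx{s}$ while $\md \in \dnx{k}$. The minimality clause — no canonical q-partition $\langle\dx{t},\dnx{t},\emptyset\rangle$ with $\dx{k} \subset \dx{t} \subset \dx{s}$ — I would prove by contradiction: such a $\dx{t}$ contains some $\md' \in \dnx{k}$, and $\dx{t} \subset \dx{s}$ forces $U_{\dx{t}} \subset U_{\dx{s}} = U_{\dx{y}}$ by consequence (ii); then $U_{\dx{k} \cup \setof{\md'}} \subseteq U_{\dx{t}} \subset U_{\dx{y}} \subset U_\mD$ exhibits $\md'$ as a valid choice whose union is strictly smaller than $U_{\dx{y}}$, contradicting the subset-minimality demanded in step~(\ref{prop:minimal_transformation_for_D+_partitioning:enum:dx_y}).

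For completeness (2), I would take an arbitrary minimal $\dx{}$-transformation $\Pt_k \mapsto \Pt_{s'}$, pick any $\md' \in \dx{s'} \setminus \dx{k}$, and show the construction driven by $\md'$ reproduces $\Pt_{s'}$. The key sub-claim is that $U_{\dx{k}\cup\setof{\md'}}$ is a subset-minimal valid choice: it is valid since $U_{\dx{k}\cup\setof{\md'}} \subseteq U_{\dx{s'}} \subset U_\mD$, and if some valid $\md''$ gave $U_{\dx{k}\cup\setof{\md''}} \subset U_{\dx{k}\cup\setof{\md'}}$, then the canonical q-partition carrying that smaller union would, by consequence (ii), sit strictly between $\dx{k}$ and $\dx{s'}$, contradicting minimality of the transformation. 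With $\md'$ thus subset-minimal, the construction outputs $\Pt_s$ with $U_{\dx{s}} = U_{\dx{k}\cup\setof{\md'}} \subseteq U_{\dx{s'}}$; were this inclusion proper, $\Pt_s$ would again be a canonical q-partition strictly between $\Pt_k$ and $\Pt_{s'}$, so equality holds and $\Pt_s = \Pt_{s'}$. Combined with soundness, this yields the exact-match claim.

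Finally, for the acceleration result (3), assuming the remainders $\setof{\md_i^{(k)} \mid \md_i \in \dnx{k}}$ are pairwise disjoint, I would take any $\dx{r}$ with $\dx{k} \subseteq \dx{r} \subset \mD$ and verify the criteria of Corollary~\ref{cor:not_q-partition_iff_md_i^(k)=emptyset_for_md_i_in_dnx_k} directly. Writing $U_{\dx{r}} = U_{\dx{k}} \cup \bigcup_{\md_i \in \dx{r} \cap \dnx{k}} \md_i^{(k)}$, for any $\md_j \in \dnx{r} \subseteq \dnx{k}$ the remainder computes to $\md_j^{(r)} = \md_j^{(k)} \setminus \bigcup_{\md_i \in \dx{r}\cap\dnx{k}} \md_i^{(k)}$; pairwise disjointness together with $\md_j \notin \dx{r}$ makes this subtraction remove nothing, so $\md_j^{(r)} = \md_j^{(k)} \neq \emptyset$, while $U_{\dx{r}} \subset U_\mD$ follows from the existence of such an $\md_j$. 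Hence, since $\dx{r},\dnx{r} \neq \emptyset$ and $\dz{} = \emptyset$ throughout (Proposition~\ref{prop:canonical_q-partition_has_empty_dz}), $\Pt_r$ is a canonical q-partition.

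I expect the main obstacle to be setting up the closure correspondence cleanly enough that the two minimality arguments (the minimality clause of soundness and the completeness argument) collapse to the short contradiction proofs sketched above; in particular, the careful transfer of \emph{strict} inclusions between $\dx{}$-sets and their unions — which silently relies on distinct unions yielding distinct canonical q-partitions (Corollary~\ref{cor:upper_lower_bound_for_canonical_q-partitions}) and on each canonical $\dx{}$ equalling the closure of its union — is where the reasoning could slip if stated loosely.
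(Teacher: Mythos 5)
Your proof is correct and takes essentially the same route as the paper's: both hinge on the identity $U_{\dx{s}} = U_{\dx{y}}$, on the canonicity criteria of Proposition~\ref{prop:suff+nec_criteria_when_partition_is_q-partition} and Corollary~\ref{cor:not_q-partition_iff_md_i^(k)=emptyset_for_md_i_in_dnx_k}, on the fact that a canonical q-partition is uniquely determined by $U_{\dx{}}$, and on deriving contradictions with the subset-minimality of $U_{\dx{y}}$ (for the minimality and completeness claims) and with the pairwise disjointness of traits (for acceleration). The differences are only organizational---you isolate the union/$\dx{}$-set correspondence as a reusable lemma and prove completeness by directly reconstructing $\Pt_{s'}$ from a seed $\md' \in \dx{s'} \setminus \dx{k}$, whereas the paper refutes the negation of each construction condition in turn---plus one small imprecision worth fixing: the fixed-point property you invoke must be stated for the union of an \emph{arbitrary} set of diagnoses (where it holds trivially), since $\dx{y} = \dx{k} \cup \setof{\md}$ is not itself the $\dx{}$-set of a canonical q-partition.
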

\begin{proof}
%

Ad 1.: 
By the definition of a minimal $\dx{}$-transformation (see Definition~\ref{def:minimal_transformation}), we have to show that (i)~$\Pt_s$ is a canonical q-partition where $\dx{s} \supset \dx{k}$ and that (ii)~there is no canonical q-partition $\langle \dx{l},\dnx{l},\emptyset\rangle$ such that $\dx{k} \subset \dx{l} \subset \dx{s}$. 

Ad (i):
To verify that $\Pt_s$ is indeed a canonical q-partition, we check whether it satisfies the premises, $\dx{s} \neq \emptyset$ and $\dnx{s} \neq \emptyset$, and both conditions of Proposition~\ref{prop:suff+nec_criteria_when_partition_is_q-partition}.

Proposition~\ref{prop:suff+nec_criteria_when_partition_is_q-partition},(1.), i.e. $U_{\dx{s}} \subset U_{\mD}$, is met due to the following argumentation. 
First, the inclusion of only diagnoses $\md_i$ with $\md_i^{(y)} = \emptyset$ (and thus $\md_i \subseteq U_{\dx{y}}$) in $\dx{s}$ implies $U_{\dx{s}} \not\supset U_{\dx{y}}$. Further, $\dx{s} \supseteq \dx{y}$ must hold since, trivially, for each $\md_i \in \dx{y}$ it must be true that $\md_i^{(y)} = \emptyset$ wherefore, by (\ref{prop:minimal_transformation_for_D+_partitioning:enum:dx_s}), $\md_i \in \dx{s}$. Hence, $U_{\dx{s}} \supseteq U_{\dx{y}}$ must be given. Combining these findings yields $U_{\dx{s}} = U_{\dx{y}}$. By the postulation of $U_{\dx{y}} \subset U_\mD$ in (\ref{prop:minimal_transformation_for_D+_partitioning:enum:dx_y}), we obtain $U_{\dx{s}} \subset U_\mD$. This finishes the proof of Proposition~\ref{prop:suff+nec_criteria_when_partition_is_q-partition},(1.).

Due to $U_{\dx{s}} \subset U_\mD$, we must have that $\dx{s} \subset \mD$ which implies that $\dnx{s} = \mD \setminus \dx{s} \neq \emptyset$. Moreover, we have seen above that $\dx{s} \supseteq \dx{y}$. By definition of $\dx{y}$ it holds that $\dx{y} \supset \dx{k}$ which is why $\dx{s} \neq \emptyset$. Thence, both premises of Proposition~\ref{prop:suff+nec_criteria_when_partition_is_q-partition} are given.

Proposition~\ref{prop:suff+nec_criteria_when_partition_is_q-partition},(2.), i.e. that there is no $\md_i \in \dnx{s}$ such that $\md_i \subseteq U_{\dx{s}}$, is shown next. Each $\md_i \in \mD$ with $\md_i \subseteq U_{\dx{s}}$ fulfills $\md_i^{(y)} = \emptyset$ by $U_{\dx{s}} = U_{\dx{y}}$ which we derived above. Thus, by the definition of $\dx{s}$ and $\dnx{s}$ in (\ref{prop:minimal_transformation_for_D+_partitioning:enum:dx_s}) and (\ref{prop:minimal_transformation_for_D+_partitioning:enum:dnx_s}), respectively, each $\md_i \in \mD$ with $\md_i \subseteq U_{\dx{s}}$ must be an element of $\dx{s}$ and cannot by an element of $\dnx{s}$. 
This finishes the proof of Proposition~\ref{prop:suff+nec_criteria_when_partition_is_q-partition},(2.).
Hence, $\Pt_s$ is a canonical q-partition.

Moreover, since $\dx{y} := \dx{k} \cup \setof{\md}$ for some diagnosis, we obtain that $\dx{y} \supset \dx{k}$. But, before we argued that $\dx{s} \supseteq \dx{y}$. All in all, this yields $\dx{s} \supset \dx{k}$.
This finishes the proof of (i).

Ad (ii): 
To show the minimality of the transformation $\Pt_k \mapsto \Pt_s$, let us assume that there is some canonical q-partition $\Pt_l := \langle \dx{l},\dnx{l}, \emptyset\rangle$ with $\dx{k} \subset \dx{l} \subset \dx{s}$. 
From this, we immediately obtain that $U_{\dx{l}} \subseteq U_{\dx{s}}$ must hold. Furthermore, we have shown above that $U_{\dx{s}} = U_{\dx{y}}$. Due to the fact that $\Pt_s$ is already uniquely defined as per (\ref{prop:minimal_transformation_for_D+_partitioning:enum:dx_s}) and (\ref{prop:minimal_transformation_for_D+_partitioning:enum:dnx_s}) given $U_{\dx{y}} = U_{\dx{s}}$ and since $\dx{l} \neq \dx{s}$, we conclude that $U_{\dx{l}} \subset U_{\dx{s}}$. Thence, $U_{\dx{l}} \subset U_{\dx{y}}$. Additionally, by (\ref{prop:minimal_transformation_for_D+_partitioning:enum:dx_y}), for all diagnoses $\md\in\dnx{k}$ it must hold that $U_{\dx{k} \cup \setof{\md}} \not\subset U_{\dx{y}}$. However, as $\dx{k} \subset \dx{l}$, there must be at least one diagnosis among those in $\dnx{k}$ which is an element of $\dx{l}$. If there is exactly one such diagnosis $\md^*$, then we obtain a contradiction immediately as $U_{\dx{l}} = U_{\dx{k} \cup \setof{\md^*}} \not\subset U_{\dx{y}}$. Otherwise, we observe that, if there is a set $\mD'\subseteq\dnx{k}$ of multiple such diagnoses, then there is a single diagnosis $\md' \in \mD' \subseteq \dnx{k}$ such that $U_{\dx{l}} = U_{\dx{k} \cup \mD'} \supseteq U_{\dx{k} \cup \setof{\md'}}$ wherefore we can infer that $U_{\dx{l}} \not\subset U_{\dx{y}}$ must hold. Consequently, the transformation $\Pt_k \mapsto \Pt_s$ is indeed minimal and (ii) is proven.
Ad 2.: 
Assume that $\Pt_k \mapsto \Pt_s$ is a minimal $\dx{}$-transformation and that $\Pt_s$ cannot be constructed as per (\ref{prop:minimal_transformation_for_D+_partitioning:enum:dx_y}), (\ref{prop:minimal_transformation_for_D+_partitioning:enum:dx_s}) and (\ref{prop:minimal_transformation_for_D+_partitioning:enum:dnx_s}). 

By Def.~\ref{def:minimal_transformation}, $\Pt_s$ is a canonical q-partition. Since it is a q-partition, we have that $\dx{s} \neq \emptyset$ and $\dnx{s} \neq \emptyset$. Thence, by Proposition~\ref{prop:suff+nec_criteria_when_partition_is_q-partition}, $U_{\dx{s}} \subset U_\mD$ which is why, by Corollary~\ref{cor:not_q-partition_iff_md_i^(k)=emptyset_for_md_i_in_dnx_k}, there must be some $\dx{y}$ (e.g.\ $\dx{s}$) such that $\dx{s} := \{\md_i\,|\,\md_i \in \mD,\md_i^{(y)} = \emptyset\}$ and $\dnx{s} := \{\md_i\,|\,\md_i \in \mD,\md_i^{(y)} \neq \emptyset\}$. Thence, for each minimal $\dx{}$-transformation there is some $\dx{y}$ such that (\ref{prop:minimal_transformation_for_D+_partitioning:enum:dx_s}) $\land$ (\ref{prop:minimal_transformation_for_D+_partitioning:enum:dnx_s}) is true wherefore we obtain that $\lnot$(\ref{prop:minimal_transformation_for_D+_partitioning:enum:dx_y}) must be given. That is, at least one of the following must be false: (i)~there is some $\md \in \dnx{k}$ such that $\dx{y} = \dx{k} \cup \setof{\md}$, (ii)~$U_{\dx{y}}\subset U_\mD$, (iii)~$U_{\dx{y}}$ is $\subseteq$-minimal among all $\md \in \dnx{k}$.

First, we can argue analogously as done in the proof of (1.)\ above that $U_{\dx{y}} = U_{\dx{s}}$ must hold. This along with Proposition~\ref{prop:suff+nec_criteria_when_partition_is_q-partition} entails that $U_{\dx{y}} \subset U_\mD$ cannot be false. So, (ii) cannot be false.

Second, assume that (i) is false. 
That is, no set $\dx{y}$ usable to construct $\dx{s}$ and $\dnx{s}$ as per (\ref{prop:minimal_transformation_for_D+_partitioning:enum:dx_s}) and (\ref{prop:minimal_transformation_for_D+_partitioning:enum:dnx_s}) is defined as $\dx{y} = \dx{k} \cup \setof{\md}$ for any $\md \in \dnx{k}$. But, clearly, $\dx{s}$ is one such set $\dx{y}$. And, $\dx{s} \supset \dx{k}$ as a consequence of $\Pt_k \mapsto \Pt_s$ being a minimal $\dx{}$-transformation. Hence, there is some set $\dx{y} \supset \dx{k}$ usable to construct $\dx{s}$ and $\dnx{s}$ as per (\ref{prop:minimal_transformation_for_D+_partitioning:enum:dx_s}) and (\ref{prop:minimal_transformation_for_D+_partitioning:enum:dnx_s}). Now, if $\dx{y} = \dx{k} \cup \setof{\md}$ for some diagnosis $\md \in \dnx{k}$, then we have a contradiction to $\lnot$(i). Thus, $\dx{y} = \dx{k} \cup \mathbf{S}$ where $\mathbf{S} \subseteq \dnx{k}$ with $|\mathbf{S}| \geq 2$ must hold. In this case, there is some $\md \in \dnx{y}$ such that $\dx{y} \supset \dx{k} \cup \setof{\md}$ and therefore $U_{\dx{y}} \supseteq U_{\dx{k}\cup\setof{\md}}$.  
Let $\Pt_{s'}$ be the partition induced by $\dx{y'} := \dx{k}\cup\setof{\md}$ as per (\ref{prop:minimal_transformation_for_D+_partitioning:enum:dx_s}) and (\ref{prop:minimal_transformation_for_D+_partitioning:enum:dnx_s}). 

This partition $\Pt_{s'}$ is a canonical q-partition due to Corollary~\ref{cor:not_q-partition_iff_md_i^(k)=emptyset_for_md_i_in_dnx_k}. The latter is applicable in this case, first, by reason of $\dnx{s}\neq \emptyset$ (which means that $\mD \supset \dx{s}$) and $\dx{s} \supseteq \dx{s'} \supseteq \dx{y'} \supset \dx{k} \supset \emptyset$ (where the first two superset-relations hold due to (\ref{prop:minimal_transformation_for_D+_partitioning:enum:dx_s}), (\ref{prop:minimal_transformation_for_D+_partitioning:enum:dnx_s}) and Eq.~\eqref{eq:md_i^(k)}, and the last one since $\Pt_k$ is a q-partition by assumption), which lets us derive $\dx{s'} \neq \emptyset$ and $\dnx{s'} \neq \emptyset$. Second, from the said superset-relations and Proposition~\ref{prop:suff+nec_criteria_when_partition_is_q-partition} along with $\Pt_s$ being a canonical q-partition, we get $U_\mD \supset U_{\dx{s}} \supseteq U_{\dx{s'}}$.

But, due to $\dx{s'} \subseteq \dx{s}$ we can conclude that either $\Pt_k \mapsto \Pt_s$ is not a minimal $\dx{}$-transformation (case $\dx{s'} \subset \dx{s}$) or $\Pt_s$ can be constructed by means of $\dx{k}\cup\setof{\md}$ for some $\md \in \dnx{k}$ (case $\dx{s'} = \dx{s}$). The former case is a contradiction to the assumption that $\Pt_k \mapsto \Pt_s$ is a minimal $\dx{}$-transformation. The latter case is a contradiction to 
$|\mathbf{S}| \geq 2$. Consequently, (i) cannot be false.

Third, as (i) and (ii) have been shown to be true, we conclude that (iii) must be false.
Due to the truth of (i), we can assume that $\dx{y}$ used to construct $\Pt_s$ can be written as $\dx{k}\cup\setof{\md}$ for some $\md \in \dnx{k}$. Now, if $U_{\dx{y}}$ is not $\subseteq$-minimal among all $\md \in \dnx{k}$, then there is some $\md' \in \dnx{k}$ such that $U_{\dx{k}\cup\setof{\md'}} \subset U_{\dx{y}}$. 
Further, due to $U_{\dx{y}} = U_{\dx{s}} \subset U_\mD$ (because of Proposition~\ref{prop:suff+nec_criteria_when_partition_is_q-partition} and the fact that $\Pt_s$ is a canonical q-partition), we get $U_{\dx{k}\cup\setof{\md'}} \subset U_\mD$.

Let $\Pt_{s'}$ be the partition induced by $\dx{y'} := \dx{k}\cup\setof{\md'}$ as per (\ref{prop:minimal_transformation_for_D+_partitioning:enum:dx_s}) and (\ref{prop:minimal_transformation_for_D+_partitioning:enum:dnx_s}). It is guaranteed that $\Pt_{s'}$ is a canonical q-partition due to Corollary~\ref{cor:not_q-partition_iff_md_i^(k)=emptyset_for_md_i_in_dnx_k}. The first reason why the latter is applicable here is $U_{\dx{k}\cup\setof{\md'}} \subset U_\mD$. The second one is $\md \notin \dx{s'}$ which implies $\dx{s'} \neq \mD$ and thus $\dnx{s'} \neq \emptyset$, and $\md' \in \dx{s'}$ (due to (\ref{prop:minimal_transformation_for_D+_partitioning:enum:dx_s})) which means that $\dx{s'} \neq \emptyset$. The fact $\md \notin \dx{s'}$ must hold due to $\md \not\subseteq U_{\dx{k}\cup\setof{\md'}}$. To realize that the latter holds, assume the opposite, i.e.\ $\md \subseteq U_{\dx{k}\cup\setof{\md'}}$. Then, since $U_{\dx{k}} \subseteq U_{\dx{k}\cup\setof{\md'}}$ and $U_{\dx{k}\cup\setof{\md}} = U_{\dx{k}} \cup \md$, we obtain that $U_{\dx{k}\cup\setof{\md'}} \supseteq U_{\dx{k}\cup\setof{\md}} = U_{\dx{y}}$, a contradiction. 
So, both $\Pt_{s'}$ and $\Pt_{s}$ are canonical q-partitions. However, since $\Pt_{s'}$ is constructed as per (\ref{prop:minimal_transformation_for_D+_partitioning:enum:dx_s}),(\ref{prop:minimal_transformation_for_D+_partitioning:enum:dnx_s}) by means of $U_{\dx{k}\cup\setof{\md'}}$ and $\Pt_{s}$ as per (\ref{prop:minimal_transformation_for_D+_partitioning:enum:dx_s}),(\ref{prop:minimal_transformation_for_D+_partitioning:enum:dnx_s}) by means of $U_{\dx{y}}$ and since $U_{\dx{k}\cup\setof{\md'}} \subset U_{\dx{y}}$, it must hold that $\dx{s} \supseteq \dx{s'}$. In addition, we observe that $\md \in \dx{s}$ (due to $\md \subseteq U_{\dx{k}\cup\setof{\md}} = U_{\dx{k}} \cup \md$), but $\md \notin \dx{s'}$ (as shown above). Thence, $\md \in \dx{s}\setminus\dx{s'}$ which is why $\dx{s} \supset \dx{s'}$. This, however, constitutes a contradiction to the assumption that $\Pt_k \mapsto \Pt_s$ is a minimal $\dx{}$-transformation. Consequently, (iii) must be true. 

Altogether, we have shown that neither (i) nor (ii) nor (iii) can be false. The conclusion is that (\ref{prop:minimal_transformation_for_D+_partitioning:enum:dx_y}) and (\ref{prop:minimal_transformation_for_D+_partitioning:enum:dx_s}) and (\ref{prop:minimal_transformation_for_D+_partitioning:enum:dnx_s}) must hold for $\Pt_k \mapsto \Pt_s$, a contradiction.

Ad 3.:
By Proposition~\ref{prop:suff+nec_criteria_when_partition_is_q-partition}, we must demonstrate that (i)~$U_{\dx{r}} \subset U_{\mD}$ and (ii)~there is no $\md \in \dnx{r}$ such that $\md \subseteq U_{\dx{r}}$ for any $\langle \dx{r}, \dnx{r}, \emptyset\rangle$ where $\mD \supset \dx{r} \supseteq \dx{k}$ and $\setof{\md_i^{(k)}\,|\,\md_i^{(k)} \in \dnx{k}}$ are pairwise disjoint. 

Ad (i):
Let us first assume that $U_{\dx{r}} = U_{\mD}$. The unions of diagnoses of both sets $\dx{k}$ and $\dnx{k}$ must comprise all axioms occurring in some diagnosis in $\mD$, i.e. $U_{\mD} = U_{\dx{k}}\cup U_{\dnx{k}}$. However, since by definition each $\md_i^{(k)}$ is exactly the subset of $\md_i \in \dnx{k}$ that is disjoint from $U_{\dx{k}}$, we obtain that $U_{\mD} = U_{\dx{k}}\cup U_{\setof{\md_i^{(k)}\,|\,\md_i\in\dnx{k}}}$. Since $\dx{k} \cup \dnx{k} = \mD \supset \dx{r}$ and $\dx{r} \supseteq \dx{k}$, there must be some $S'\subset \dnx{k}$ such that $\dx{r} = \dx{k} \cup S'$. By $U_{\dx{r}} = U_{\mD}$ we can conclude that $U_\mD = U_{\dx{r}} = U_{\dx{k}} \cup U_{\setof{\md_i^{(k)}\,|\,\md_i\in S'}}$.
Further on, it follows from $\dx{r} \subset \mD$ and the definition of $S'$ that $\mD \setminus \dx{r} = \mD \setminus (\dx{k} \cup S') = (\mD \setminus \dx{k}) \setminus S' = \dnx{k} \setminus S' \supset \emptyset$. So, there is at least one diagnosis $\md_m\in \dnx{k}$ such that 
\begin{enumerate}[label=(\Roman*)]
	\item \label{prop:minimal_transformation_for_D+_partitioning:PROOF:enum:1} $\md_m \notin S'$,
	\item \label{prop:minimal_transformation_for_D+_partitioning:PROOF:enum:2} $\md_m^{(k)} \cap U_{\dx{k}} = \emptyset$, and 
	\item \label{prop:minimal_transformation_for_D+_partitioning:PROOF:enum:3} $\md_m^{(k)} \subseteq \md_m \subseteq U_\mD$
\end{enumerate}
whereas the latter two facts hold due to the definition of $\md_m^{(k)}$, see Eq.~(\eqref{eq:md_i^(k)}).
As $\md_m \in \dnx{k}$, by Corollary~\ref{cor:not_q-partition_iff_md_i^(k)=emptyset_for_md_i_in_dnx_k}, $\md_m^{(k)} \neq \emptyset$ must hold. Moreover, \ref{prop:minimal_transformation_for_D+_partitioning:PROOF:enum:2} and \ref{prop:minimal_transformation_for_D+_partitioning:PROOF:enum:3}\ yield that $\md_m^{(k)} \subseteq U_{\setof{\md_i^{(k)}\,|\,\md_i\in S'}}$ which is, by \ref{prop:minimal_transformation_for_D+_partitioning:PROOF:enum:1}, obviously a contradiction to the pairwise disjointness of elements in $\setof{\md_i^{(k)}\,|\,\md_i \in \dnx{k}}$.

Ad (ii):
According to the argumentation of (i), we already know that $U_{\dx{r}} = U_{\setof{\md_i^{(k)}\,|\,\md_i\in S'}} \cup U_{\dx{k}}$ for some $S' \subset \dnx{k}$
and that there is a diagnosis $\md \in \dnx{r} = \dnx{k} \setminus S'$.
Let $\md_m$ be an arbitrary diagnosis in $\dnx{r}$. We observe that $\md_m \notin S'$.
Let us assume that $\md_m \subseteq U_{\dx{r}}$ holds. Then,
$\md_m^{(k)} \subseteq \md_m \subseteq U_{\dx{r}}$. Moreover, since $\md_m^{(k)} \cap U_{\dx{k}} = \emptyset$ by the definition of $\md_m^{(k)}$, see Eq.~(\eqref{eq:md_i^(k)}), $\md_m^{(k)} \subseteq U_{\setof{\md_i^{(k)}\,|\,\md_i\in S'}}$ must be valid. Now, in an analogue way as in (i) above, we obtain a contradiction to the pairwise disjointness of elements in $\setof{\md_i^{(k)}\,|\,\md_i \in \dnx{k}}$. 
%
\end{proof}
Proposition~\ref{prop:minimal_transformation_for_D+_partitioning} shows that the set of minimal diagnoses $\dx{y} := \dx{k} \cup \setof{\md}$ with $\md\in\dnx{k}$ used to construct the canonical q-partition $\Pt_s$ that results from the q-partition $\Pt_k$ by means of a minimal $\dx{}$-transformation is not necessarily equal to the set $\dx{s}$ of $\Pt_s$. In fact, it might be the case that further minimal diagnoses (in addition to $\md$) must be transferred from $\dnx{k}$ to $\dx{s}$ in order to make $\Pt_s$ a canonical q-partition. We call these further diagnoses the \emph{necessary followers} of $\md$ w.r.t.\ $\Pt_k$, formally:
\begin{definition}\label{def:necessary_follower}
Let $\mD\subseteq\minD_{\langle\mo,\mb,\Tp,\Tn\rangle_\RQ}$ and $\Pt_k = \langle \dx{k}, \dnx{k}, \emptyset\rangle$ be a canonical q-partition of $\mD$. Then, we call $\md'\in\dnx{k}$ a \emph{necessary $\dx{}$-follower of $\md\in\dnx{k}$ w.r.t.\ $\Pt_k$}, $\mathsf{NF}^+_{\Pt_k}(\md',\md)$ for short, iff 
for any canonical q-partition $\langle \mathbf{S}', \mD \setminus \mathbf{S}', \emptyset \rangle$ with $\mathbf{S}' \supseteq \dx{k} \cup \setof{\md}$ it holds that $\md' \in \mathbf{S}'$.
\end{definition}
\begin{example}\label{ex:nec_follower} 
We continue discussing our running example DPI $\tuple{\mo,\mb,\Tp,\Tn}_\RQ$ (see Table~\ref{tab:example_dpi_0}). Assume as in Example~\ref{ex:md_i^(k)} that $\mD = \minD_{\tuple{\mo,\mb,\Tp,\Tn}_\RQ}$. Let us consider the canonical q-partition $\Pt_k := \tuple{\setof{\md_1,\md_2},\setof{\md_3,\md_4,\md_5,\md_6},\emptyset}$. Written in the form 
\begin{align}\label{eq:standard_representation_of_can_q-partitions}
\tuple{U_{\dx{k}},\setof{\md_i^{(k)}\,|\,\md_i \in \dnx{k}}} \qquad \text{(standard representation of canonical q-partitions)}
\end{align}
$\Pt_k$ is given by 
\begin{align} \label{eq:ex_necessary_follower:U_D+_and_traits}
\tuple{\setof{2,3,5},\setof{\setof{6},\setof{7},\setof{1,4,7},\setof{4,7}}}
\end{align}
Note by the definition of $\md_i^{(k)}$ given by Eq.~\eqref{eq:md_i^(k)} the sets that are elements of the right-hand set of this tuple are exactly the diagnoses in $\dnx{k}$ reduced by the elements that occur in the left-hand set of this tuple. For instance, $\setof{6}$ results from $\md_3 \setminus U_{\dx{k}} = \setof{2,6} \setminus \setof{2,3,5}$. We now analyze $\Pt_k$ w.r.t.\ necessary followers of the diagnoses in $\dnx{k}$. The set of necessary followers of $\md_3$ is empty. The same holds for $\md_4$. However, $\md_5$ has two necessary followers, namely $\setof{\md_4,\md_6}$, whereas $\md_6$ has one, given by $\md_4$. The intuition is that transferring $\md_3$ with  $\md_3^{(k)} = \setof{6}$ to $\dx{k}$ yields the new set $\dx{k^*} := \setof{\md_1,\md_2,\md_3}$ with $U_{\dx{k^*}} = \setof{2,3,5,6}$. This new set however causes no set $\md_i^{(k^*)}$ for $\md_i$ in $\dnx{k^*}$ to become the empty set. Hence the transfer of $\md_3$ necessitates no relocations of any other elements in $\dnx{k^*}$. 

For $\md_6$, the situation is different. Here the new set $U_{\dx{k^*}} = \setof{2,3,4,5,7}$ which implicates \[\setof{\md_3^{(k^*)},\md_4^{(k^*)},\md_5^{(k^*)}} = \setof{\setof{6},\emptyset,\setof{1}}\] 
for $\md_i$ in $\dnx{k^*}$.
Application of Corollary~\ref{cor:not_q-partition_iff_md_i^(k)=emptyset_for_md_i_in_dnx_k} now yields that $\Pt_{k^*}$ is not a canonical q-partition due to the empty set $\md_4^{(k^*)}$. To make a canonical q-partition out of $\Pt_{k^*}$ requires the transfer of all diagnoses $\md_i$ associated with empty sets $\md_i^{(k^*)}$ to $\dx{k^*}$.

Importantly, notice that the canonical q-partition $\Pt_{s^*} := \tuple{\setof{\md_1,\md_2,\md_4,\md_6},\setof{\md_3,\md_5},\emptyset}$ resulting from this cannot be reached from $\Pt_k$ by means of a minimal $\dx{}$-transformation. The reason is that $\Pt_{s} := \tuple{\setof{\md_1,\md_2,\md_4},\setof{\md_3,\md_5,\md_6},\emptyset}$ is a canonical q-partition as well and results from $\Pt_k$ by fewer changes to $\dx{k}$ than $\Pt_{s^*}$. In fact, only q-partitions resulting from the transfer of diagnoses $\md_i \in \dnx{k}$ with $\subseteq$-minimal $\md_i^{(k)}$ to $\dx{k}$ are reachable from $\Pt_k$ by a minimal $\dx{}$-transformation. As becomes evident from Eq.~\eqref{eq:ex_necessary_follower:U_D+_and_traits}, only the q-partitions created from $\Pt_k$ by means of a shift of $\md_3$ (with $\md_3^{(k)} = \setof{6}$) or $\md_4$ ($\setof{7}$) to $\dx{k}$ are successors of $\Pt_k$ compliant with the definition of a minimal $\dx{}$-transformation.   \qed
\end{example}
The idea is now to define a relation between two elements of $\dnx{k}$ iff both of them lead to the same canonical q-partition if added to $\dx{k}$:
\begin{definition}\label{def:equiv_rel}
Let $\mD\subseteq\minD_{\langle\mo,\mb,\Tp,\Tn\rangle_\RQ}$ and $\Pt_k = \langle \dx{k}, \dnx{k}, \emptyset\rangle$ be a canonical q-partition of $\mD$. Then, we denote by $\sim_k$ the binary relation over $\dnx{k}$ defined as $\md_i \sim_k \md_j$ iff $\md_i^{(k)} = \md_j^{(k)}$. 
\end{definition}
The following proposition is obvious:
\begin{proposition}\label{def:nec_follower_equiv_relation}
$\sim_k$ is an equivalence relation (over $\dnx{k}$). 
\end{proposition}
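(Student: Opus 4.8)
The final statement asserts that $\sim_k$ is an equivalence relation over $\dnx{k}$, where $\md_i \sim_k \md_j$ iff $\md_i^{(k)} = \md_j^{(k)}$ and $\md_i^{(k)} := \md_i \setminus U_{\dx{k}}$ (cf.\ Eq.~\eqref{eq:md_i^(k)}). The plan is to verify the three defining properties of an equivalence relation, namely reflexivity, symmetry and transitivity, and to observe that each of these follows trivially from the fact that $\sim_k$ is defined through the equality relation $=$ on the sets $\md_i^{(k)}$.

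First I would note that the relation $\sim_k$ is essentially the preimage of the (genuine) equivalence relation ``$=$'' on sets under the map $\md_i \mapsto \md_i^{(k)}$ that sends each diagnosis in $\dnx{k}$ to the corresponding set $\md_i^{(k)} = \md_i \setminus U_{\dx{k}}$. Since the pullback of an equivalence relation along any function is again an equivalence relation, the statement follows immediately. If one prefers an elementary argument instead, the three properties can be checked directly as follows. For \emph{reflexivity}, for any $\md_i \in \dnx{k}$ we trivially have $\md_i^{(k)} = \md_i^{(k)}$, hence $\md_i \sim_k \md_i$. For \emph{symmetry}, if $\md_i \sim_k \md_j$, then $\md_i^{(k)} = \md_j^{(k)}$, which by the symmetry of set equality gives $\md_j^{(k)} = \md_i^{(k)}$, i.e.\ $\md_j \sim_k \md_i$. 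For \emph{transitivity}, if $\md_i \sim_k \md_j$ and $\md_j \sim_k \md_l$, then $\md_i^{(k)} = \md_j^{(k)}$ and $\md_j^{(k)} = \md_l^{(k)}$, so by the transitivity of set equality $\md_i^{(k)} = \md_l^{(k)}$, i.e.\ $\md_i \sim_k \md_l$.

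Since all three properties hold, $\sim_k$ is an equivalence relation over $\dnx{k}$, which completes the proof. There is essentially no obstacle here, as the proof is a direct consequence of the reflexivity, symmetry and transitivity of equality on sets; this is precisely why the paper labels the proposition as ``obvious'' and states it without an accompanying proof. The only point worth making explicit is that $\sim_k$ is well-defined on $\dnx{k}$ because each $\md_i^{(k)}$ is a uniquely determined subset of $\md_i$ once $U_{\dx{k}}$ is fixed, so no ambiguity arises in evaluating the defining condition $\md_i^{(k)} = \md_j^{(k)}$.
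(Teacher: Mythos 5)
Your proof is correct and follows exactly the route the paper has in mind when it states the proposition without proof as ``obvious'': since $\md_i \sim_k \md_j$ is defined by equality of the sets $\md_i^{(k)}$, reflexivity, symmetry and transitivity are inherited directly from equality. The additional remark on well-definedness (and the pullback viewpoint) is fine but not needed beyond this observation.
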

The equivalence classes w.r.t.\ $\sim_k$ are denoted by $[\md_i]^{\sim_k}$ where $\md_i \in \dnx{k}$.
\begin{definition}\label{def:trait}
Let $\mD\subseteq\minD_{\langle\mo,\mb,\Tp,\Tn\rangle_\RQ}$ and $\Pt_k = \langle \dx{k}, \dnx{k}, \emptyset\rangle$ be a canonical q-partition of $\mD$. We call $\md_i^{(k)}$ (see Eq.~\eqref{eq:md_i^(k)}) \emph{the trait of the equivalence class} $[\md_i]^{\sim_k}$ w.r.t.\ $\sim_k$ including $\md_i$. Sometimes we will call $\md_i^{(k)}$ for simplicity \emph{the trait of $\md_i$}.
\end{definition}
\begin{example}\label{ex:equiv_rel+traits}
Consider the canonical q-partition $\Pt_k = \setof{\setof{\md_4,\md_5},\setof{\md_1,\md_2,\md_3,\md_6},\emptyset}$ related to our running example DPI (Table~\ref{tab:example_dpi_0}). Using the standard representation of canonical q-partitions (introduced by Eq.~\eqref{eq:standard_representation_of_can_q-partitions}) this q-partition amounts to $\tuple{\setof{1,2,4,7},\setof{\setof{3},\setof{5},\setof{6},\setof{3}}}$. Now, \[
\sim_k \;= \setof{\tuple{\md_1,\md_1},\tuple{\md_2,\md_2},\tuple{\md_3,\md_3},\tuple{\md_6,\md_6},\tuple{\md_1,\md_6},\tuple{\md_6,\md_1}}
\] 
and the equivalence classes w.r.t.\ $\sim_k$ are 
\[
\setof{[\md_1]^{\sim_k},[\md_2]^{\sim_k},[\md_3]^{\sim_k}} = \setof{\setof{\md_1,\md_6},\setof{\md_2},\setof{\md_3}}
\]
Note that $[\md_1]^{\sim_k} = [\md_6]^{\sim_k}$ holds.
The number of the equivalence classes gives an upper bound of the number of successors resulting from a minimal $\dx{}$-transformation from $\Pt_k$. 
The traits of these equivalence classes are given by 
\[
\setof{\setof{3},\setof{5},\setof{6}}
\]
These can be just read from the standard representation above. Since all traits are $\subseteq$-minimal, there are exactly three direct successors of $\Pt_k$ as per Definition~\ref{def:minimal_transformation}. However, in case there is only one equivalence class, e.g.\ when considering $\Pt_k = \setof{\setof{\md_2,\md_3,\md_4,\md_5},\setof{\md_1,\md_6},\emptyset}$ with the standard representation $\tuple{\setof{1,2,4,5,6,7},\setof{\setof{3},\setof{3}}}$ and the single equivalence class $[\md_1]^{\sim_k} = [\md_6]^{\sim_k} = \setof{\md_1,\md_6}$ with trait $\setof{3}$ we find that there are no canonical q-partitions reachable by a minimal $\dx{}$-transformation from $\Pt_k$. The reason behind this is that transferring either diagnosis in this equivalence class to $\dx{k}$ requires the transfer of the other, since both are necessary followers w.r.t.\ $\Pt_k$ of one another. An empty $\dnx{}$-set -- and hence no (canonical) q-partition -- would be the result.
These thoughts are formally captured by Corollary~\ref{cor:nec_followers_form_equivalence_class_wrt_md^(y)} below. \qed
\end{example}
A simple corollary derivable from Proposition~\ref{prop:minimal_transformation_for_D+_partitioning} and Definitions~\ref{def:necessary_follower} and \ref{def:trait} is the following.
It enables to characterize the successors of a canonical q-partition $\Pt_k$ resulting from a minimal $\dx{}$-transformation by means of the traits of the equivalence classes w.r.t.\ $\sim_k$. This is essentially the basis for $S_{\mathsf{next}}$:
\begin{corollary}\label{cor:nec_followers_form_equivalence_class_wrt_md^(y)}
Let $\mD\subseteq\minD_{\langle\mo,\mb,\Tp,\Tn\rangle_\RQ}$ and $\Pt_k$ be a canonical q-partition of $\mD$. Further, let $\Pt_s = \langle \dx{s}, \dnx{s}, \emptyset\rangle$ where $\dx{s}, \dnx{s}$ are as defined in Proposition~\ref{prop:minimal_transformation_for_D+_partitioning},(\ref{prop:minimal_transformation_for_D+_partitioning:enum:dx_s}) and (\ref{prop:minimal_transformation_for_D+_partitioning:enum:dnx_s}). Then, $\md$ is an element of $\dnx{k}$ such that $\dx{y} = \dx{k}\cup\setof{\md}$ is as defined in Proposition~\ref{prop:minimal_transformation_for_D+_partitioning},(\ref{prop:minimal_transformation_for_D+_partitioning:enum:dx_y}) iff either of the following is true:
\begin{enumerate}
	\item There is more than one equivalence class w.r.t.\ $\sim_k$ and $\md$ is in the same equivalence class w.r.t.\ $\sim_k$ as all $\md'$ with $\mathsf{NF}^+_{\Pt_k}(\md',\md)$.
	\item There is more than one equivalence class w.r.t.\ $\sim_k$ and $[\md]^{\sim_k}$ has a subset-minimal trait among all elements $\md\in\dnx{k}$.
\end{enumerate}
\end{corollary}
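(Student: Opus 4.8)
\textbf{Proof plan for Corollary~\ref{cor:nec_followers_form_equivalence_class_wrt_md^(y)}.}
The plan is to prove the biconditional by unpacking the definition of $\dx{y}$ given in Proposition~\ref{prop:minimal_transformation_for_D+_partitioning},(\ref{prop:minimal_transformation_for_D+_partitioning:enum:dx_y}) and translating its two conditions -- that $U_{\dx{y}} \subset U_\mD$ and that $U_{\dx{y}}$ is $\subseteq$-minimal among all $\md \in \dnx{k}$ -- into statements about equivalence classes and traits w.r.t.\ $\sim_k$. The key observation, which I would establish first as a lemma, is that for $\dx{y} := \dx{k} \cup \setof{\md}$ we have $U_{\dx{y}} = U_{\dx{k}} \cup \md_{}^{(k)}$, since $\md = (\md \cap U_{\dx{k}}) \cup \md^{(k)}$ by the definition of the trait in Eq.~\eqref{eq:md_i^(k)}. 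Consequently $U_{\dx{y}}$ depends on $\md$ only through its trait $\md^{(k)}$, so two diagnoses in the same equivalence class w.r.t.\ $\sim_k$ produce the \emph{same} set $U_{\dx{y}}$ and hence the same induced partition $\Pt_s$ via (\ref{prop:minimal_transformation_for_D+_partitioning:enum:dx_s}),(\ref{prop:minimal_transformation_for_D+_partitioning:enum:dnx_s}). This immediately links the two characterizations (1.)\ and (2.)\ in the statement, since necessary $\dx{}$-followers are exactly the members of $[\md]^{\sim_k}$ (see below).

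The forward direction ($\md$ defines $\dx{y}$ as in the proposition $\Rightarrow$ (1.)\ or (2.)\ holds) I would argue as follows. First, $\subseteq$-minimality of $U_{\dx{y}} = U_{\dx{k}} \cup \md^{(k)}$ among all choices $\md \in \dnx{k}$ is, by the lemma above, equivalent to $\subseteq$-minimality of the trait $\md^{(k)}$ among all traits $\setof{\md_i^{(k)}\,|\,\md_i \in \dnx{k}}$; this yields condition (2.). The ``more than one equivalence class'' clause follows from the requirement $U_{\dx{y}} \subset U_\mD$: if there were only a single equivalence class, then by Corollary~\ref{cor:not_q-partition_iff_md_i^(k)=emptyset_for_md_i_in_dnx_k} and the reasoning in Example~\ref{ex:equiv_rel+traits}, transferring $\md$ would force the transfer of \emph{all} of $\dnx{k}$ (every diagnosis would become a necessary follower), so $\dnx{s} = \emptyset$ and $U_{\dx{s}} = U_\mD$, contradicting that $\Pt_s$ is a (canonical) q-partition. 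For the characterization (1.), I would invoke Definition~\ref{def:necessary_follower}: the necessary $\dx{}$-followers of $\md$ w.r.t.\ $\Pt_k$ are precisely those $\md'$ that land in $\dx{s}$ whenever $\md$ does, i.e.\ those with $\md'^{(k)} \subseteq \md^{(k)}$ together with the minimality of $\md^{(k)}$ forcing $\md'^{(k)} = \md^{(k)}$, hence $\md' \in [\md]^{\sim_k}$.

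The converse direction (either (1.)\ or (2.)\ $\Rightarrow$ $\md$ defines a valid $\dx{y}$) I would obtain by reversing each of these equivalences, checking that a $\subseteq$-minimal trait together with the existence of at least one further equivalence class guarantees both $U_{\dx{y}} \subset U_\mD$ (the other class contributes an axiom outside $U_{\dx{y}}$, keeping $\dnx{s} \neq \emptyset$) and $\subseteq$-minimality of $U_{\dx{y}}$, which are exactly the requirements of Proposition~\ref{prop:minimal_transformation_for_D+_partitioning},(\ref{prop:minimal_transformation_for_D+_partitioning:enum:dx_y}). The main obstacle I anticipate is the careful handling of the necessary-follower characterization (1.): I must show rigorously that the set of necessary $\dx{}$-followers of $\md$ coincides with its equivalence class $[\md]^{\sim_k}$ \emph{precisely} when the trait is $\subseteq$-minimal, and that a non-minimal trait would create followers outside the class (so that (1.)\ alone, without minimality, need not produce a valid minimal transformation). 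Establishing the exact correspondence between ``necessary followers,'' ``members of the same $\sim_k$-class,'' and ``$\subseteq$-minimality of the trait'' -- and verifying it is watertight in the boundary case where distinct classes have comparable (nested) traits -- is the delicate bookkeeping that the rest of the argument reduces to.
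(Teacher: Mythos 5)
Your plan is correct, and it is essentially the derivation the paper intends: the paper states this corollary \emph{without} an explicit proof, presenting it as an immediate consequence of Proposition~\ref{prop:minimal_transformation_for_D+_partitioning} and Definitions~\ref{def:necessary_follower} and~\ref{def:trait}, and your unpacking — the identity $U_{\dx{k}\cup\setof{\md}} = U_{\dx{k}} \cup \md^{(k)}$, which (since traits are disjoint from $U_{\dx{k}}$) reduces both requirements of condition~(\ref{prop:minimal_transformation_for_D+_partitioning:enum:dx_y}) to subset-relations among traits, combined with the trait-based characterization of necessary followers — is exactly that consequence spelled out. The boundary case you flag also closes the way you anticipate: if some trait is properly nested below $\md^{(k)}$, then either $U_{\dx{k}\cup\setof{\md}} \subset U_\mD$ and the induced $\Pt_s$ itself witnesses a necessary follower outside $[\md]^{\sim_k}$, or $U_{\dx{k}\cup\setof{\md}} = U_\mD$ and no canonical q-partition extends $\dx{k}\cup\setof{\md}$ at all, so every element of $\dnx{k}$ is vacuously a necessary follower — in both situations conditions (1.)\ and (2.)\ fail in lockstep with condition~(\ref{prop:minimal_transformation_for_D+_partitioning:enum:dx_y}), which is precisely what the biconditional needs.
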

The next corollary which characterizes the successor function $S_{\mathsf{next}}$ is a direct consequence of Corollary~\ref{cor:nec_followers_form_equivalence_class_wrt_md^(y)} and Proposition~\ref{prop:minimal_transformation_for_D+_partitioning}:
\begin{corollary}\label{cor:S_next_sound+complete}
Let $\Pt_k := \tuple{\dx{k},\dnx{k},\emptyset}$ be a canonical q-partition, $\mathbf{EQ}^{\sim_k}$ be the set of all equivalence classes w.r.t.\ $\sim_k$ and
\begin{align*}
\mathbf{EQ}^{\sim_k}_{\subseteq} &:= \setof{[\md_i]^{\sim_k} \mid\, \not\exists j: \md_j^{(k)} \subset \md_i^{(k)} } 
\end{align*}
i.e.\ $\mathbf{EQ}^{\sim_k}_{\subseteq}$ includes all equivalence classes w.r.t.\ $\sim_k$ which have a $\subseteq$-minimal trait.
Then, the function 
\begin{align*}
S_{\mathsf{next}}: \tuple{\dx{k},\dnx{k},\emptyset} \mapsto \begin{cases}
\setof{\tuple{\dx{k}\cup E,\dnx{k}\setminus E,\emptyset}\,|\,E \in \mathbf{EQ}^{\sim_k}_{\subseteq} } & \text{if $|\mathbf{EQ}^{\sim_k}| \geq 2$}\\
\emptyset & \text{otherwise}
\end{cases} 
\end{align*}
is sound and complete, i.e.\ it produces from $\Pt_k$ all and only (canonical) q-partitions resulting from $\Pt_k$ by minimal $\dx{}$-transformations.
\end{corollary}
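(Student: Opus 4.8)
The plan is to derive Corollary~\ref{cor:S_next_sound+complete} as essentially a reformulation of Proposition~\ref{prop:minimal_transformation_for_D+_partitioning} using the equivalence-class machinery of Definitions~\ref{def:equiv_rel} and \ref{def:trait} together with the characterization in Corollary~\ref{cor:nec_followers_form_equivalence_class_wrt_md^(y)}. The target function $S_{\mathsf{next}}$ enumerates, for a given canonical q-partition $\Pt_k$, exactly the set of canonical q-partitions reachable by a minimal $\dx{}$-transformation. By Proposition~\ref{prop:minimal_transformation_for_D+_partitioning},(1.) and (2.), a partition $\Pt_s$ is such a successor if and only if it can be built via items (\ref{prop:minimal_transformation_for_D+_partitioning:enum:dx_y}), (\ref{prop:minimal_transformation_for_D+_partitioning:enum:dx_s}), (\ref{prop:minimal_transformation_for_D+_partitioning:enum:dnx_s}) from some seed $\dx{y} = \dx{k}\cup\setof{\md}$ with $\md\in\dnx{k}$, where $U_{\dx{y}}\subset U_\mD$ and $U_{\dx{y}}$ is subset-minimal among the choices of $\md$. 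So the whole task reduces to showing that the set of these admissible seeds $\md$ is captured precisely by the classes in $\mathbf{EQ}^{\sim_k}_{\subseteq}$, and that each such class produces one and only one distinct successor q-partition via the formula $\tuple{\dx{k}\cup E,\dnx{k}\setminus E,\emptyset}$.

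First I would establish the correspondence between seeds and equivalence classes. The key observation (Corollary~\ref{cor:nec_followers_form_equivalence_class_wrt_md^(y)}) is that a diagnosis $\md\in\dnx{k}$ yields a valid minimal $\dx{}$-transformation iff its class $[\md]^{\sim_k}$ has a subset-minimal trait and there is more than one equivalence class w.r.t.\ $\sim_k$. I would then argue that when we apply items (\ref{prop:minimal_transformation_for_D+_partitioning:enum:dx_s}) and (\ref{prop:minimal_transformation_for_D+_partitioning:enum:dnx_s}) to such a seed $\dx{y}=\dx{k}\cup\setof{\md}$, the resulting $\dx{s}$ is exactly $\dx{k}\cup E$ where $E=[\md]^{\sim_k}$. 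This is because, after adding $\md$ (and hence all axioms of $\md^{(k)}$) to the $\dx{}$-side, the diagnoses $\md_i$ that become covered (i.e.\ satisfy $\md_i^{(y)}=\emptyset$) are precisely those with $\md_i^{(k)}\subseteq\md^{(k)}$; and when $\md^{(k)}$ is subset-minimal, equality $\md_i^{(k)}=\md^{(k)}$ holds, so the covered diagnoses form exactly the class $[\md]^{\sim_k}$ — these are the necessary followers of $\md$ established in Corollary~\ref{cor:nec_followers_form_equivalence_class_wrt_md^(y)},(1.). Hence $\dx{s}=\dx{k}\cup[\md]^{\sim_k}$ and $\dnx{s}=\dnx{k}\setminus[\md]^{\sim_k}$, matching the formula in $S_{\mathsf{next}}$.

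Next I would verify the two boundary conditions in the case split. For the condition $|\mathbf{EQ}^{\sim_k}|\geq 2$: if there is only a single equivalence class, then its trait is necessarily the union of all $\md_i^{(k)}$, and adding its representative to $\dx{k}$ moves \emph{all} of $\dnx{k}$ across (all followers), leaving $\dnx{s}=\emptyset$, which is no q-partition — so no minimal transformation exists and $S_{\mathsf{next}}$ correctly returns $\emptyset$. This matches the intuition exhibited in Example~\ref{ex:equiv_rel+traits}. For soundness I would confirm that each produced tuple is genuinely a canonical q-partition reachable by a minimal transformation (direct application of Proposition~\ref{prop:minimal_transformation_for_D+_partitioning},(1.)), and for completeness that every minimal $\dx{}$-transformation arises this way (Proposition~\ref{prop:minimal_transformation_for_D+_partitioning},(2.), which forces the seed to have subset-minimal $U_{\dx{y}}$, equivalently subset-minimal trait). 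Finally, distinctness of the successors follows because distinct classes in $\mathbf{EQ}^{\sim_k}_{\subseteq}$ have distinct traits, hence distinct sets $U_{\dx{s}}$, and by Corollary~\ref{cor:not_q-partition_iff_md_i^(k)=emptyset_for_md_i_in_dnx_k} a canonical q-partition is uniquely determined by $U_{\dx{s}}$.

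The main obstacle I anticipate is the precise bookkeeping linking subset-minimality of the trait $\md^{(k)}$ to subset-minimality of $U_{\dx{y}}$ among the candidate seeds, and establishing that the followers moved are \emph{exactly} the class $[\md]^{\sim_k}$ rather than a proper superset. The subtle point is that a non-minimal trait could drag in additional classes (creating a non-minimal transformation, as in Example~\ref{ex:nec_follower} with $\Pt_{s^*}$), so one must carefully show that restricting to $\mathbf{EQ}^{\sim_k}_{\subseteq}$ exactly filters out those over-reaching transformations while retaining all genuinely minimal ones. Since most of this work is already discharged by Proposition~\ref{prop:minimal_transformation_for_D+_partitioning} and Corollary~\ref{cor:nec_followers_form_equivalence_class_wrt_md^(y)}, the proof itself should be short and can legitimately be stated as an immediate consequence, with the bulk of the argument being the identification $\dx{s}=\dx{k}\cup E$.
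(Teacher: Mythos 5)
Your proposal is correct and follows exactly the paper's route: the paper states this corollary as a direct consequence of Proposition~\ref{prop:minimal_transformation_for_D+_partitioning} (soundness and completeness of the seed-based construction) together with Corollary~\ref{cor:nec_followers_form_equivalence_class_wrt_md^(y)} (admissible seeds are precisely those in classes with $\subseteq$-minimal traits, given at least two classes), which is precisely the reduction you carry out. Your additional bookkeeping — that $\md_i^{(y)}=\emptyset$ iff $\md_i^{(k)}\subseteq\md^{(k)}$, hence $\dx{s}=\dx{k}\cup[\md]^{\sim_k}$ for a $\subseteq$-minimal trait, and that the single-class case forces $\dnx{s}=\emptyset$ — is exactly the implicit content the paper leaves to the reader.
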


\paragraph{The Computation of Successors.} Our successor function for $\dx{}$-partitioning is specified by Algorithm~\ref{algo:dx+_suc}. The input to it is a partition $\Pt_k$. The output is a set of all and only canonical q-partitions that result from $\Pt_k$ by a minimal $\dx{}$-transformation. First, the algorithm checks in line~\ref{algoline:dx+_suc:is_initial_state} whether $\Pt_k = \tuple{\emptyset, \mD, \emptyset}$, i.e.\ whether $\Pt_k$ is the initial state or, equivalently, the root node of the search tree. 
If so (case $S_{\mathsf{init}}$), then Proposition~\ref{prop:S_init_sound+complete} is directly exploited to generate all successors of $\Pt_k$, i.e.\ one canonical q-partition $\tuple{\setof{\md},\dnx{k}\setminus\setof{\md},\emptyset}$ for each leading diagnosis $\md \in \mD$. 
Otherwise (case $S_{\mathsf{next}}$), $\Pt_k$ must be a canonical q-partition due to Corollary~\ref{cor:--di,MD-di,0--_is_canonical_q-partition_for_all_di_in_mD} and Proposition~\ref{prop:minimal_transformation_for_D+_partitioning}. That is, Corollary~\ref{cor:S_next_sound+complete} is applicable and used to extract all equivalence classes w.r.t.\ $\sim_k$ with subset-minimal traits for $\Pt_k$ (lines~\ref{algoline:dx+_suc:S_next_begin}-\ref{algoline:dx+_suc:S_next_end}) as a first step. 

At this, the traits for the diagnoses' equivalence classes w.r.t.\ $\sim_k$ are computed in lines~\ref{algoline:dx+_suc:S_next_begin}-\ref{algoline:dx+_suc:compute_traits_end} according to Definition~\ref{def:trait}. Note that we assume during the \textbf{while}-loop, without explicitly showing it in the pseudocode, that the trait $t_i$ for the equivalence class of a diagnosis $\md_i$ is extracted from the precalculated set of tuples of the form $\tuple{\md_i,t_i}$ stored in $diagsTraits$. 

Moreover, the $\textbf{for}$-loop in line~\ref{algoline:dx+_suc:for_usedDiag_in_D_used} iterates over $\mD_{\mathsf{used}}$ which includes one representative of all diagnoses equivalence classes which are known to yield an already explored, i.e.\ a non-new, q-partition if deleted from $\dnx{k}$ and added to $\dx{k}$. The test in line~\ref{algoline:dx+_suc:if_trait_of_diag_is_trait_of_already_used_diag} checks whether the trait $t_i$ of the currently analyzed diagnosis $\md_i$ is equal to some trait $t_u$ of a diagnosis $\md_u$ in $\mD_{\mathsf{used}}$. If so, then $\md_i^{(k)} = \md_u^{(k)}$ (cf.\ Definition~\ref{def:trait}) which is why $\md_i$ and $\md_u$ belong to the same equivalence class. By Corollary~\ref{cor:nec_followers_form_equivalence_class_wrt_md^(y)}, an addition of $\md_i$ to $\dx{k}$ would imply the addition of (the entire equivalence class of) $\md_u$ to $\dx{k}$ and hence not yield a new q-partition. Therefore, we neglect the equivalence class of $\md_i$ (which is equal to the equivalence class of $\md_u$) as a basis for constructing a successor q-partition of $\Pt_k$. This is accounted for by setting $diagAlreadyUsed$ to $\true$ for $\md_i$ in line~\ref{algoline:dx+_suc:diagAlreadyUsed_gets_true}. Only if this boolean flag for some diagnosis $\md_i$ is false (line~\ref{algoline:dx+_suc:if_not_diagAlreadyUsed}), the respective equivalence class of $\md_i$ is later (line~\ref{algoline:dx+_suc:update_eqClasses}) added to $eqClasses$, from which the finally output successor q-partitions are generated.

As a second step (lines~\ref{algoline:dx+_suc:generate_successors_begin}-\ref{algoline:dx+_suc:generate_successors_end}), all successors (constructed by means of minimal $\dx{}$-transformations) of $\Pt_k$ -- one for each equivalence class -- are constructed by means of the elements in these equivalence classes. That is, given that one of these equivalence classes consists of the minimal diagnoses in the set $E \subseteq \dnx{k}$, then one successor of $\Pt_k$ is given by $\tuple{\dx{k}\cup E, \dnx{k}\setminus E,\emptyset}$. These successors, stored in $sucs$, are finally returned by the algorithm. Further comments and helpful annotations can be found next to the pseudocode in Algorithm~\ref{algo:dx+_suc}.
\begin{proposition}\label{prop:D+sucs_soundness+completeness}
The function \textsc{get$\dx{}$Sucs} (Algorithm~\ref{algo:dx+_suc}), given 
\begin{itemize}
	\item a partition $\Pt = \tuple{\dx{}, \dnx{}, \dz{}}$ of $\mD$ and
	\item a set of diagnoses $\mD_{\mathsf{used}}$ which must not be in the $\dx{}$-set of any successor q-partition of $\Pt$
\end{itemize} 
as arguments, outputs the set of all (\emph{completeness}) and only (\emph{soundness}) canonical successor q-partitions of $\Pt$ resulting from minimal $\dx{}$-transformations. Furthermore, no returned q-partition includes any diagnosis of $\mD_{\mathsf{used}}$ in its $\dx{}$-set. 
\end{proposition}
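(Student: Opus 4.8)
The plan is to split the proof of Proposition~\ref{prop:D+sucs_soundness+completeness} into two halves matching the branching of the algorithm on line~\ref{algoline:dx+_suc:is_initial_state}, namely the initial-state case ($\Pt = \tuple{\emptyset,\mD,\emptyset}$) and the general case ($\Pt$ a canonical q-partition). For the former, I would simply invoke Proposition~\ref{prop:S_init_sound+complete}, which already establishes that $S_{\mathsf{init}}$ is sound and complete, and observe that the code in this branch produces exactly $\setof{\tuple{\setof{\md},\mD\setminus\setof{\md},\emptyset}\,|\,\md\in\mD}$, so the correspondence is immediate (up to the $\mD_{\mathsf{used}}$ filtering, discussed below). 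The more substantial work is the general case, which I would anchor on Corollary~\ref{cor:S_next_sound+complete}: this corollary already asserts that the function $S_{\mathsf{next}}$, defined via the equivalence classes $\mathbf{EQ}^{\sim_k}_{\subseteq}$ of subset-minimal trait, is sound and complete w.r.t.\ minimal $\dx{}$-transformations. Hence the crux is not to re-derive completeness/soundness from scratch, but to verify that Algorithm~\ref{algo:dx+_suc} \emph{faithfully computes} $S_{\mathsf{next}}$ as characterized by that corollary.

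To that end, I would first argue that, by Corollary~\ref{cor:--di,MD-di,0--_is_canonical_q-partition_for_all_di_in_mD} together with Proposition~\ref{prop:minimal_transformation_for_D+_partitioning}, any $\Pt$ reaching the \textbf{else}-branch is guaranteed to be a canonical q-partition, so Corollary~\ref{cor:S_next_sound+complete} is applicable. Then I would trace the code: lines~\ref{algoline:dx+_suc:S_next_begin}--\ref{algoline:dx+_suc:compute_traits_end} compute, for each $\md_i\in\dnx{}$, its trait $\md_i^{(k)} = \md_i\setminus U_{\dx{}}$ as per Eq.~\eqref{eq:md_i^(k)} and Definition~\ref{def:trait}; grouping diagnoses by equal trait reconstructs precisely the equivalence classes w.r.t.\ $\sim_k$ (Definition~\ref{def:equiv_rel}), and retaining only subset-minimal traits yields exactly $\mathbf{EQ}^{\sim_k}_{\subseteq}$. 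Finally, lines~\ref{algoline:dx+_suc:generate_successors_begin}--\ref{algoline:dx+_suc:generate_successors_end} form, for each retained class $E$, the q-partition $\tuple{\dx{}\cup E,\dnx{}\setminus E,\emptyset}$, matching the definition of $S_{\mathsf{next}}$ verbatim. The guard $|\mathbf{EQ}^{\sim_k}|\geq 2$ that produces the empty successor set in the degenerate case of Corollary~\ref{cor:S_next_sound+complete} corresponds to the situation (illustrated in Example~\ref{ex:equiv_rel+traits}) where a single equivalence class would force $\dnx{}$ to become empty; I would note that the algorithm implicitly realizes this since with one class no proper successor q-partition survives.

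The remaining, and I expect the trickiest, part is the $\mD_{\mathsf{used}}$ filtering in lines~\ref{algoline:dx+_suc:for_usedDiag_in_D_used}--\ref{algoline:dx+_suc:if_not_diagAlreadyUsed}, which is not covered directly by Corollary~\ref{cor:S_next_sound+complete}. Here I would appeal to Corollary~\ref{cor:nec_followers_form_equivalence_class_wrt_md^(y)}: a diagnosis $\md_i$ whose trait coincides with the trait $t_u$ of some $\md_u\in\mD_{\mathsf{used}}$ lies in the \emph{same} equivalence class as $\md_u$, so moving $\md_i$ into $\dx{}$ would drag $\md_u$ along (both are necessary followers of one another). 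Consequently the resulting successor would have a $\dx{}$-set containing a diagnosis of $\mD_{\mathsf{used}}$, which is forbidden by the second output guarantee. I would show that setting $diagAlreadyUsed \gets \true$ (line~\ref{algoline:dx+_suc:diagAlreadyUsed_gets_true}) and thereby excluding that class from $eqClasses$ removes exactly these successors, and \emph{only} these, so that no diagnosis of $\mD_{\mathsf{used}}$ ever appears in a returned $\dx{}$-set while no legitimately new canonical q-partition is lost. The delicate point is to confirm that trait-equality is the correct criterion for ``same equivalence class that was already explored'' — this hinges on the one-to-one correspondence between traits and equivalence classes (Definition~\ref{def:trait}) and on the fact established in Remark~\ref{rem:bestSuc_returned_diagnosis} that $\mD_{\mathsf{used}}$ stores exactly one representative per already-used class.

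The main obstacle, then, is the careful bookkeeping around $\mD_{\mathsf{used}}$: I must rule out both over-pruning (discarding a class that actually yields a genuinely new successor) and under-pruning (returning a successor whose $\dx{}$-set meets $\mD_{\mathsf{used}}$). Both reduce to the trait-versus-equivalence-class identity, so I would state and use that identity explicitly as the pivot of the argument, after which the soundness and completeness claims follow by combining it with Corollaries~\ref{cor:S_next_sound+complete} and~\ref{cor:nec_followers_form_equivalence_class_wrt_md^(y)}.
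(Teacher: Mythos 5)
Your proposal is correct and follows essentially the same route as the paper's (sketched) proof: the same case split on initial vs.\ intermediate state, Proposition~\ref{prop:S_init_sound+complete} for the former, Corollary~\ref{cor:S_next_sound+complete} (after noting $\Pt$ must be a canonical q-partition in the \textbf{else}-branch) for the latter, and Corollary~\ref{cor:nec_followers_form_equivalence_class_wrt_md^(y)} plus the trait/equivalence-class identity for the $\mD_{\mathsf{used}}$ filtering. One small loose end in your plan: in the initial-state branch the algorithm performs \emph{no} $\mD_{\mathsf{used}}$ filtering at all, so the second output guarantee there is not covered by your ``discussed below'' argument (which concerns only the \textbf{else}-branch) but by the contextual observation, made in the paper, that Algorithm~\ref{algo:dx+_part} only ever calls \textsc{get$\dx{}$Sucs} on the initial state with $\mD_{\mathsf{used}} = \emptyset$.
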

\begin{proof} (Sketch)
Given that $\Pt$ is the initial state, then all and only canonical q-partitions are returned due to the fact that Algorithm~\ref{algo:dx+_suc} implements exactly Proposition~\ref{prop:S_init_sound+complete} in this case (lines~\ref{algoline:dx+_suc:S_init_begin}-\ref{algoline:dx+_suc:S_init_end}). All the returned q-partitions in this case satisfy the requirement imposed by $\mD_{\mathsf{used}}$ because if \textsc{get$\dx{}$Sucs} is called given the initial state as argument, then only with the second argument $\mD_{\mathsf{used}} = \emptyset$ (cf.\ Algorithm~\ref{algo:dx+_part}).

 If $\Pt$ on the other hand is an intermediate state, then Algorithm~\ref{algo:dx+_suc} implements Corollary~\ref{cor:S_next_sound+complete} and hence is sound and complete w.r.t.\ canonical q-partitions (lines~\ref{algoline:dx+_suc:S_next_begin}-\ref{algoline:dx+_suc:S_next_end}). At this, no successors with a diagnosis from $\mD_{\mathsf{used}}$ in their $\dx{}$-set are generated due to the check performed for all diagnoses that might be moved from $\dnx{k}$ to $\dx{k}$ in the course of the minimal $\dx{}$-transformation whether they are in the same equivalence class w.r.t.\ $\sim_k$ as any diagnosis in $\mD_{\mathsf{used}}$ (lines~\ref{algoline:dx+_suc:for_usedDiag_in_D_used}-\ref{algoline:dx+_suc:end_for_usedDiag_in_D_used}). If this holds for some diagnosis, then $\mathit{diagAlreadyUsed}$ is set to $\true$. Finally, in line~\ref{algoline:dx+_suc:update_eqClasses}, the equivalence class of a diagnosis is only added to the set $\mathit{eqClasses}$ (each of whose elements produces exactly one successor q-partition) if $\mathit{diagAlreadyUsed}$ is $\false$ for it.
\end{proof}
\begin{proposition}\label{prop:findQPartition_is_sound_and_complete}
Algorithm~\ref{algo:dx+_part} ($\dx{}$-Parititioning) implementing \textsc{findQPartition} terminates and constitutes a \emph{sound}, \emph{complete} and \emph{optimal} method for generating canonical q-partitions. Optimality means that it returns a canonical q-partition $\Pt$ w.r.t.\ $\mD$ that is (a)~optimal w.r.t.\ $m$ and $t_m$, if $isOptimal = \true$ or (b)~the best of all canonical q-partitions w.r.t.\ $\mD$ and $m$, otherwise.
\end{proposition}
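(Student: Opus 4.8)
The plan is to establish the four assertions --- termination, soundness, completeness and optimality --- separately, leaning on the already-proven correctness of the successor function (Proposition~\ref{prop:D+sucs_soundness+completeness}) and on the per-measure behaviour of the auxiliary routines \textsc{updateBest}, \textsc{opt} and \textsc{prune} described in Section~\ref{sec:search_for_q-partitions}. Termination I would dispatch first: every minimal $\dx{}$-transformation strictly enlarges the $\dx{}$-set (Definition~\ref{def:minimal_transformation} requires $\dx{i} \subset \dx{j}$), so along any branch of the recursion the cardinality $|\dx{}|$ strictly increases while remaining bounded above by $|\mD|-1$ (every canonical q-partition has $\dnx{} \neq \emptyset$ by Proposition~\ref{prop:properties_of_q-partitions} and Corollary~\ref{cor:canonical_q-partition_is_q-partition}); hence the recursion depth is at most $|\mD|$. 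Together with the finiteness of $sucs$ at each node (a consequence of the finite number of canonical q-partitions, Corollary~\ref{cor:upper_lower_bound_for_canonical_q-partitions}) and the fact that each pass of the \textbf{while}-loop removes one element from $sucs$ (line~\ref{algoline:dx+_part:update_sucs}), this shows the algorithm halts. Soundness is then immediate from Proposition~\ref{prop:D+sucs_soundness+completeness}, which guarantees that \textsc{get$\dx{}$Sucs} emits only canonical q-partitions; the only value that can be returned is either such a successor or $\Pt_{\mathsf{best}}$, and \textsc{updateBest} assigns $\Pt_{\mathsf{best}}$ a genuine canonical q-partition as soon as one is generated (lines~\ref{algoline:dx+_update_best:if_Pbest_is_no_q-partition_start}--\ref{algoline:dx+_update_best:if_Pbest_is_no_q-partition_end}), which --- since $|\mD|\geq 2$ guarantees the existence of at least one canonical q-partition (Corollary~\ref{cor:--di,MD-di,0--_is_canonical_q-partition_for_all_di_in_mD}) --- happens before the top-level call returns.

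The core of the proof is completeness, for which I would show that \emph{every} canonical q-partition is passed to \textsc{updateBest} at some point, unless an optimal one is detected earlier. I would proceed by induction over the search tree, using as invariant the claim that a call \textsc{$\dx{}$Partition}$(\Pt,\Pt_{\mathsf{b}},\mD_{\mathsf{used}},\dots)$ visits every canonical q-partition $\Pt_x$ with $\dx{}(\Pt) \subseteq \dx{}(\Pt_x)$ none of whose diagnoses in $\dx{}(\Pt_x)\setminus\dx{}(\Pt)$ lies in the $\sim$-equivalence class of a member of $\mD_{\mathsf{used}}$. At the top-level call this condition is vacuous and the invariant yields exactly the desired completeness. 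The base and inductive step both rest on Corollary~\ref{cor:S_next_sound+complete} (the successor function returns \emph{all} minimal-$\dx{}$-transformation successors) together with the reachability fact that, in the finite poset of canonical q-partitions ordered by $\dx{}$-set inclusion, every element is reachable from a singleton-$\dx{}$ q-partition (supplied by $S_{\mathsf{init}}$, Proposition~\ref{prop:S_init_sound+complete}) by a chain of minimal transformations. The delicate point is to show that the update $\mD_{\mathsf{used}}\gets\mD_{\mathsf{used}}\cup\setof{\md}$ (line~\ref{algoline:dx+_part:update_mD_used}) performed after each explored successor neither skips nor revisits any canonical q-partition; here I would invoke Corollary~\ref{cor:nec_followers_form_equivalence_class_wrt_md^(y)}, which shows that the diagnoses moved in a single minimal transformation form exactly one $\sim_k$-equivalence class, so storing one representative $\md$ correctly closes off precisely the canonical q-partitions whose $\dx{}$-sets contain that whole class and have already been exhausted in the subtree rooted at the just-explored successor.

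Optimality I would then split into the two stated cases. If $isOptimal=\true$, the returned q-partition passed the test of \textsc{opt} (line~\ref{algoline:dx+_part:opt}), and I would cite the per-measure correctness of Algorithm~\ref{algo:dx+_opt} --- each branch implements exactly the requirements $r_m$ of Table~\ref{tab:requirements_for_equiv_classes_of_measures_wrt_equiv_mQ} within tolerance $t_m$, justified by the optimality results of Sections~\ref{sec:ExistingActiveLearningMeasuresForKBDebugging}--\ref{sec:rio} --- to conclude the output is optimal w.r.t.\ $m$ and $t_m$. If $isOptimal=\false$, the algorithm has backtracked through the whole tree; by completeness it has therefore submitted every canonical q-partition \emph{not pruned away} to \textsc{updateBest}, which by construction retains the better of its two arguments w.r.t.\ $r_m$. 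It remains to argue that pruning discards nothing better than the running $\Pt_{\mathsf{best}}$: for this I would appeal to the per-measure pruning-soundness arguments given in the paragraph on the pruning check (Algorithm~\ref{algo:dx+_prune}), each of which shows that when \textsc{prune} returns $\true$ at a node $\Pt$, no descendant of $\Pt$ can beat $\Pt_{\mathsf{best}}$ w.r.t.\ $r_m$. Since \textsc{updateBest} is always called before \textsc{prune} in Algorithm~\ref{algo:dx+_part}, the compared $\Pt_{\mathsf{best}}$ dominates the pruned subtrees, and hence the final $\Pt_{\mathsf{best}}$ is the best of all canonical q-partitions.

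I expect the completeness argument --- specifically verifying that the $\mD_{\mathsf{used}}$ bookkeeping, the best-first successor selection and the backtracking interact correctly so that each canonical q-partition is enumerated exactly once and none is lost --- to be the main obstacle, since it is the only part that is not an essentially local appeal to an already-established proposition but instead requires a global invariant maintained across the whole recursion tree. The per-measure verifications of \textsc{opt}, \textsc{prune} and \textsc{updateBest}, although numerous, are routine given the requirements tables and the optimality propositions already proved, and I would note that the heuristics computed by \textsc{bestSuc} (Algorithm~\ref{algo:dx+_best_successor}) affect only the order of exploration, hence efficiency, and not correctness, since every successor is eventually examined via backtracking.
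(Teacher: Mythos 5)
Your proposal is correct and follows essentially the same decomposition as the paper's own (sketch) proof — termination, soundness, completeness and optimality — resting on the same ingredients: the soundness and completeness of \textsc{get$\dx{}$Sucs} (Proposition~\ref{prop:D+sucs_soundness+completeness}), the finiteness of the set of canonical q-partitions (Corollary~\ref{cor:upper_lower_bound_for_canonical_q-partitions}), the backtracking structure of the \textbf{while}-loop, and the per-measure correctness of \textsc{updateBest}, \textsc{opt} and \textsc{prune}. The only differences are ones of emphasis, not of approach: you compress the case analysis over return paths that the paper spells out at length for soundness, and conversely you make explicit — via your recursion invariant and Corollary~\ref{cor:nec_followers_form_equivalence_class_wrt_md^(y)} — the $\mD_{\mathsf{used}}$ bookkeeping that the paper's completeness argument leaves largely implicit.
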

\begin{proof} (Sketch) \\\\
\emph{(Termination):} Obviously, \textsc{findQPartition} must terminate since
$\textsc{$\dx{}$Partition}$ terminates. That $\textsc{$\dx{}$Partition}$ must terminate can be seen by recognizing that all functions \textsc{updateBest}, \textsc{opt}, \textsc{prune}, \textsc{get$\dx{}$Sucs} and \textsc{bestSuc} called in it terminate, that $sucs$ must be finite and that $\textsc{$\dx{}$Partition}$ considers each element of $sucs$ in the worst case exactly once (due to the \textbf{while}-loop and line~\ref{algoline:dx+_part:update_sucs}) before terminating in line~\ref{algoline:dx+_part:return--all_sucs_explored}. That $|sucs| < \infty$ holds (1)~due to Corollary~\ref{cor:upper_lower_bound_for_canonical_q-partitions} which states that the overall number of canonical q-partitions given a finite set of leading diagnoses $\mD$ must be finite and (2)~Proposition~\ref{prop:D+sucs_soundness+completeness} which states that \textsc{get$\dx{}$Sucs} is sound which is why it must always return a subset of all canonical q-partitions, namely $sucs$.\\
	
\noindent\emph{(Soundness):} To show the soundness, we demonstrate that Algorithm~\ref{algo:dx+_part} (\textsc{findQPartition}) can only output a canonical q-partition. To see this, note that \textsc{$\dx{}$Partition} is first called with arguments $\Pt = \Pt_{\mathsf{b}} = \tuple{\emptyset,\mD,\emptyset}$ (line~\ref{algoline:dx+_part:call_D+Partition_from_FindQPartition}). Hence, until the first call of \textsc{get$\dx{}$Sucs} in line~\ref{algoline:dx+_part:getD+Sucs}, the algorithm cannot return. The latter holds since \textsc{updateBest} can only return either $\Pt$ or $\Pt_{\mathsf{b}}$, which are however both equal to $\tuple{\emptyset,\mD,\emptyset}$, i.e.\ $\Pt_{\mathsf{best}} = \tuple{\emptyset,\mD,\emptyset}$ holds in line~\ref{algoline:dx+_part:opt} where \textsc{opt} is called. This implies that \textsc{opt} returns $\false$ due to line~\ref{algoline:dx+_opt:if_dx_or_dnx_emptyset} in Algorithm~\ref{algo:dx+_opt}. Similarly, \textsc{prune} in line~\ref{algoline:dx+_part:prune} returns $\false$ due to line~\ref{algoline:dx+_prune:if_dx_or_dnx_emptyset} in Algorithm~\ref{algo:dx+_prune}. Therefore, \textsc{get$\dx{}$Sucs} must necessarily be called during the execution of \textsc{findQPartition}. Note since $|\mD| \geq 2$, due to the completeness of \textsc{get$\dx{}$Sucs} w.r.t.\ canonical q-partition computation (Proposition~\ref{prop:D+sucs_soundness+completeness}) and the fact that there are exactly $|\mD|$ canonical q-partition successors of the initial state (Proposition~\ref{cor:--di,MD-di,0--_is_canonical_q-partition_for_all_di_in_mD}), $sucs \neq \emptyset$ must hold when \textsc{get$\dx{}$Sucs} is called for the first time.

Each call of \textsc{$\dx{}$Partition} in line~\ref{algoline:dx+_part:D+Partition_recursive_call} the argument $\Pt$ is a canonical q-partition since 
the passed argument $\Pt'$ must be a canonical q-partition due to the soundness of \textsc{$\dx{}$sucs} (Proposition~\ref{prop:D+sucs_soundness+completeness}). In line~\ref{algoline:dx+_part:updateBest} during the execution of a \textsc{$\dx{}$Partition}-call invoked at line~\ref{algoline:dx+_part:D+Partition_recursive_call}, $\Pt_{\mathsf{best}}$ is necessarily a canonical q-partition
due to lines~\ref{algoline:dx+_update_best:if_Pbest_is_no_q-partition_start}-\ref{algoline:dx+_update_best:if_Pbest_is_no_q-partition_end} of Algorithm~\ref{algo:dx+_update_best} 
which involves setting $\Pt_{\mathsf{best}}$ to $\Pt$ (a canonical q-partition!) if $\Pt_{\mathsf{best}}$ is not a canonical q-partition.

So, if an optimal (w.r.t.\ $m$ and $t_m$) canonical q-partition is found during a \textsc{$\dx{}$Partition}-call invoked at line~\ref{algoline:dx+_part:D+Partition_recursive_call} (\textsc{opt} returns $\true$), a canonical q-partition, namely $\Pt_{\mathsf{best}}$, will be returned in line~\ref{algoline:dx+_part:return--opt_found}. If, on the other hand, pruning takes place (\textsc{prune} returns $\true$), also the canonical q-partition $\Pt_{\mathsf{best}}$ will be returned in line~\ref{algoline:dx+_part:return--pruned}. 
In other words, if $\Pt$ 
is a canonical q-partition, then $\Pt_{\mathsf{best}}$ must be so in lines~\ref{algoline:dx+_part:return--opt_found} and \ref{algoline:dx+_part:return--pruned}. 

So far we have proven the following: If the algorithm returns in line~\ref{algoline:dx+_part:return--opt_found} or \ref{algoline:dx+_part:return--pruned}, then the first element of the returned tuple is a canonical q-partition.  

Now, we examine the cases of returning in line~\ref{algoline:dx+_part:return--unwind_recursion_since_opt_found} or line~\ref{algoline:dx+_part:return--all_sucs_explored}. 
A precondition for returning at line~\ref{algoline:dx+_part:return--unwind_recursion_since_opt_found} is that the second parameter $isOpt$ in the tuple returned in line~\ref{algoline:dx+_part:D+Partition_recursive_call} is $\true$. This can however only happen if line~\ref{algoline:dx+_part:return--opt_found} or again line~\ref{algoline:dx+_part:return--unwind_recursion_since_opt_found} led to the return of \textsc{$\dx{}$Partition} in line~\ref{algoline:dx+_part:D+Partition_recursive_call}. Applying this argument recursively yields that algorithm must have executed line~\ref{algoline:dx+_part:return--opt_found} once. Otherwise one could continue this argumentation infinitely often, implying \textsc{$\dx{}$Partition} being called infinitely often. This is clearly a contradiction to the termination of \textsc{findQPartition} shown before. So, the first element of the tuple returned in line~\ref{algoline:dx+_part:return--unwind_recursion_since_opt_found} must be a canonical q-partition as no assignments to $\Pt_{\mathsf{best}}$ can be made between executing line~\ref{algoline:dx+_part:return--opt_found} and line~\ref{algoline:dx+_part:return--unwind_recursion_since_opt_found}. In fact, only lines~\ref{algoline:dx+_part:update_mD_used} and \ref{algoline:dx+_part:if_not_isOpt} are executed in between.

Finally, assume that the first element in the tuple returned in line~\ref{algoline:dx+_part:return--all_sucs_explored} is not a canonical q-partition (let the call of \textsc{$\dx{}$Partition} where this return happens be denoted by $C_0$). Then this might happen (a)~after the \textbf{while}-loop has been entered and processed or (b)~without entering the \textbf{while}-loop. 
Case (b) can only arise if $sucs = \emptyset$, i.e.\ not in the very first execution of \textsc{$\dx{}$Partition} (by the argumentation above). Hence, it must occur in an execution of \textsc{$\dx{}$Partition} which was invoked in line~\ref{algoline:dx+_part:D+Partition_recursive_call}. However, by the above reasoning, we know that $\Pt_{\mathsf{best}}$ in this case must be a canonical q-partition -- contradiction.

In case (a), either $\Pt_{\mathsf{best}}$ is changed at least once during the processing of the \textbf{while}-loop or not. In the latter case, we know -- just as in case (b) -- that $\Pt_{\mathsf{best}}$ is a canonical q-partition -- contradiction. In the former case, $\Pt_{\mathsf{best}}$ is equal to some $\Pt''$ which is the first element of the tuple returned in line~\ref{algoline:dx+_part:D+Partition_recursive_call}. Since $\Pt_{\mathsf{best}}$ is set in line~\ref{algoline:dx+_part:opt_not_found_continue}, $isOpt = \false$ must hold by line~\ref{algoline:dx+_part:if_not_isOpt}. Thence, the \textsc{$\dx{}$Partition}-call returning $\Pt''$ can only have returned at line~\ref{algoline:dx+_part:return--pruned} or \ref{algoline:dx+_part:return--all_sucs_explored}. Moreover, as $\Pt_{\mathsf{best}}$ is no canonical q-partition by assumption, $\Pt''$ cannot be one either. By the line of argument above, we know that the \textsc{$\dx{}$Partition}-call returning $\Pt''$ can neither have returned in line~\ref{algoline:dx+_part:return--opt_found} nor in line~\ref{algoline:dx+_part:return--pruned}. In total, we have that the \textsc{$\dx{}$Partition}-call (let us denote this call by $C_1$) returning $\Pt''$ must have returned in line~\ref{algoline:dx+_part:return--all_sucs_explored}. If during the execution of this call  $C_1$ the \textbf{while}-loop was not entered, then $\Pt''$ must be a canonical q-partition as argued above -- contradiction. Otherwise, again line~\ref{algoline:dx+_part:opt_not_found_continue} must have been executed at least once in $C_1$. Since the $\Pt''$ in $C_1$ must not be a canonical q-partition either, we again deduce that $C_1$ must have terminated by executing line~\ref{algoline:dx+_part:return--all_sucs_explored}.

Continuing this argumentation lets us conclude that there must be a non-empty set $sucs$ in each of these analyzed \textsc{$\dx{}$Partition}-calls $C_1,C_2, \dots$ (otherwise the \textbf{while}-loop would not be entered). Since each of the returns these calls $C_1,C_2, \dots$ execute corresponds to a backtracking step, we know that if we apply the argument $k$ times, the respective \textsc{$\dx{}$Partition}-call $C_k$ analyzes q-partitions with a $\dx{}$-set of cardinality at least $k$ greater than the call $C_0$. This holds due to the soundness of \textsc{get$\dx{}$Sucs} w.r.t.\ minimal $\dx{}$-transformations (Proposition~\ref{prop:D+sucs_soundness+completeness}) and the definition of a minimal $\dx{}$-transformation (Definition~\ref{def:minimal_transformation}) which maps a partition $\Pt$ to a set of q-partitions, each with a greater number of diagnoses in $\dx{}$ than $\Pt$. Hence, there must be one such call that analyzes q-partitions with $|\dx{}| > |\mD|$, contradiction (due to Proposition~\ref{prop:properties_of_q-partitions}).

First and last we have shown that \textsc{findQPartition} is sound because at each return statement in the algorithm a canonical q-partition must be returned.\\

\noindent\emph{(Completeness):} Assuming Algorithm~\ref{algo:dx+_part} to be forced to execute without pruning or returning before all nodes are generated, it cannot miss any canonical q-partitions. The reasons are (1)~the initial state is $\tuple{\emptyset, \mD, \emptyset}$, i.e.\ $\dx{}$ is initially empty, (2)~the definition of a minimal $\dx{}$-transformation as the shift of a $\subseteq$-minimal subset of $\dnx{}$ to $\dx{}$ such that the result is a canonical q-partition (cf.\ Definition~\ref{def:minimal_transformation}), (3)~the completeness of \textsc{get$\dx{}$Sucs} w.r.t.\ minimal $\dx{}$-transformations (cf.\ Proposition~\ref{prop:D+sucs_soundness+completeness}) and (4)~the backtracking strategy of the algorithm (i.e.\ it backs up and goes one level up to the parent node in the search tree if all successors of a node have been explored, see \textbf{while}-loop and line~\ref{algoline:dx+_part:return--all_sucs_explored}). Therefore, Algorithm~\ref{algo:dx+_part} is complete.\\

\noindent\emph{(Optimality):} If Algorithm~\ref{algo:dx+_part} returns $isOptimal = \true$, then \textsc{$\dx{}$Partition} must at some point have executed line~\ref{algoline:dx+_part:return--opt_found}. Because, to reach the only other place in the algorithm where the second returned argument is $\true$, $isOpt$ must be already $\true$, i.e.\ must be already returned by a previous call of \textsc{$\dx{}$Partition} (see line~\ref{algoline:dx+_part:D+Partition_recursive_call}). Repeating this argument yields that line~\ref{algoline:dx+_part:return--opt_found} must in fact be executed for $isOptimal = \true$ to hold. However, this implies that Algorithm~\ref{algo:dx+_opt} (function \textsc{opt}) must return positively given $\Pt_{\mathsf{best}}$, $t_m$, $p$ and $r_m$. Due to the realization of \textsc{opt}, which directly implements the optimality requirements $r_m$ for a measure $m$ proven in Section~\ref{sec:ActiveLearningInInteractiveOntologyDebugging} (see Table~\ref{tab:requirements_for_equiv_classes_of_measures_wrt_equiv_mQ}), the returned canonical q-partition, i.e.\ $\Pt_{\mathsf{best}}$, must be optimal w.r.t.\ $m$ and $t_m$. Note that $\Pt_{\mathsf{best}}$, after being returned in line~\ref{algoline:dx+_part:return--opt_found}, is never modified until Algorithm~\ref{algo:dx+_part} returns since lines~\ref{algoline:dx+_part:if_not_isOpt} and \ref{algoline:dx+_part:return--unwind_recursion_since_opt_found} in this case guarantee that the entire recursion is immediately unwinded. This proves statement~(a) of the proposition.

The assumption of statement~(b) is that $isOptimal = \false$ is returned by Algorithm~\ref{algo:dx+_part}. So, due to the discussion of statement~(a), line~\ref{algoline:dx+_part:return--opt_found} can never be executed in this case. Now, Algorithm~\ref{algo:dx+_prune} (function \textsc{prune}), whose correctness was argued for in Section~\ref{sec:search_for_q-partitions} on page~\pageref{par:algorithm_description_prune}~ff., prunes the search tree at some node only if no (direct or indirect) successor can be better w.r.t.\ $m$ or $r_m$, respectively, than the currect $\Pt_{\mathsf{best}}$. Hence, the same chain of reasoning as in the (Completeness)-part of this proof can be used to establish the correctness of statement~(b).
\end{proof}

\subsubsection{Q-Partition Search Examples}
\label{sec:search_for_q-partitions_examples}

\paragraph{Overall Description and Used Notation.} In the following we give some examples to illustrate how Algorithm~\ref{algo:dx+_part} (i.e.\ the method \textsc{findQPartition} in Algorithm~\ref{algo:query_comp}) works for our running example DPI (Table~\ref{tab:example_dpi_0}) using different query quality measures $m$ discussed in Section~\ref{sec:ActiveLearningInInteractiveOntologyDebugging}. In order to make the search steps more accessible to the reader, we exploit i.a.\ the standard representation of canonical q-partitions (see Eq.~\eqref{eq:standard_representation_of_can_q-partitions}) to represent (canonical q-)partitions in our search tree. More precisely, a node in the search tree representing the canonical q-partition $\Pt_k = \tuple{\dx{k},\dnx{k},\emptyset}$ will be denoted by a frame of the form given at the top of Figure~\ref{fig:node_representation_in_can_q-partition_search_examples}.
\begin{figure}[t]
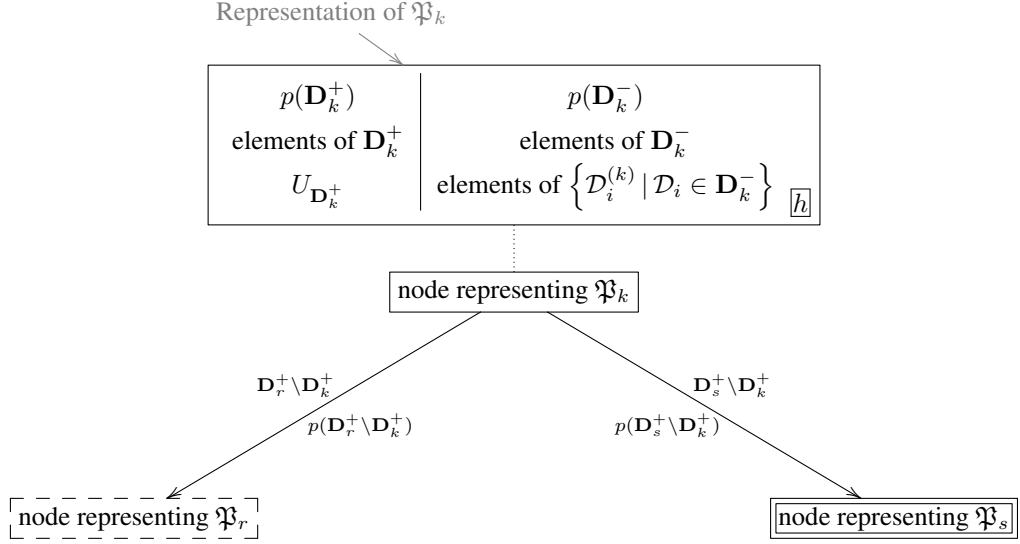

	\setlength{\fboxsep}{1.5pt} 
	\xygraph{
		!{<0cm,0cm>;<1cm,0cm>:<0cm,1.5cm>::}	
		!{(-2.4,0.96)}*+{\color{gray}\text{Representation of }\Pt_k}="Repres_Pt-k"
		!{(0,-0.2)}*+[F]{ {\begin{tabular}{c|c}
					$p(\dx{k})$ & $p(\dnx{k})$ \\
					elements of $\dx{k}$ & elements of $\dnx{k}$ \\
					$U_{\dx{k}}$ & elements of $\setof{\md_i^{(k)}\,|\,\md_i \in \dnx{k}}$
			\end{tabular}}_{\framebox[1.3\width]{$h$}} }="example-node"
		!{(0,-1.5)}*+[F]{\text{node representing } \Pt_k}="Pt-k"
		!{(-5,-3.5)}*+[F--]{\text{node representing } \Pt_r}="Pt-r"
		!{(5,-3.5)}*+[F=]{\text{node representing } \Pt_s}="Pt-s"
		"Repres_Pt-k":@{->}@[gray]"example-node"
		"example-node":@{.}"Pt-k"
		"Pt-k":^{p(\dx{r}\setminus\dx{k})}_{\dx{r}\setminus\dx{k}}"Pt-r"
		"Pt-k":_{p(\dx{s}\setminus\dx{k})}^{\dx{s}\setminus\dx{k}}"Pt-s"
}\caption{Node and edge representation in the q-partition search examples.}
\label{fig:node_representation_in_can_q-partition_search_examples}
\end{figure}
This frame includes a table with three rows where (1)~the topmost row shows the tuple $\tuple{p(\dx{k}),p(\dnx{k})}$ stating the probabilities of the $\dx{k}$ and $\dnx{k}$ sets of the q-partition which are relevant for the computation of some of the query quality measures $m$, (2)~the middle row depicts the tuple $\tuple{\dx{k},\dnx{k}}$ representing the diagnoses sets $\dx{k}$ and $\dnx{k}$ and (3)~the bottommost row constitutes the standard representation of the q-partition (cf.\ Eq.~\eqref{eq:standard_representation_of_can_q-partitions}). The framed $h$ at the bottom right corner of the large frame quotes the heuristic value computed for the canonical q-partition $\Pt_k$ by means of Algorithm~\ref{algo:dx+_best_successor} where smaller values imply (seemingly) better q-partitions w.r.t.\ the measure $m$.  Note that we state no heuristic value for the root node since this node is not a q-partition and hence does not qualify as a solution. Furthermore, no information about the $\dz{k}$-set is provided at any node as this set is always empty in canonical q-partitions.

Beneath the large frame just explained Figure~\ref{fig:node_representation_in_can_q-partition_search_examples} gives a schematic illustration of a (part of a) q-partition search tree where each node is for short just denoted by a small frame with the text ``node representing $\Pt_i$'' for $i \in \setof{k,r,s}$. A complete description would of course involve the explication of a large frame for each node as exemplified only for $\Pt_k$. The possible frame types of the node representations are \emph{dashed}, \emph{continuous line} and \emph{double frame} meaning \emph{generated (but not expanded) node}, \emph{expanded node} and \emph{node of returned q-partition}, respectively. For the given figure, this implies that the node for the q-partition $\Pt_k$ has been (generated and) expanded, the node for $\Pt_r$ has only been generated (and not expanded) and that the q-partition $\Pt_s$ is given back as the search result.

Each arrow has two labels and represents a minimal $\dx{}$-transformation, i.e.\ the arrow's destination q-partition is a result of a minimal $\dx{}$-transformation applied to its source q-partition. The label above the arrow gives the set of diagnoses moved from the $\dnx{}$-set of the (canonical q-)partition at the shaft of the arrow to the $\dx{}$-set of the canonical q-partition at the top of the arrow. The other label indicates the probability mass of the transferred diagnoses, i.e.\ the sum of the single diagnoses probabilities.

For all examples given next, we assume the diagnoses probabilities $p(\md_i)$ for $\md_i \in \mD$ given by Table~\ref{tab:ex:can_q-partition_search_diagnosis_probs}.
\begin{table}
	\centering
\begin{tabular}{lcccccc}
	diagnosis $\md_i$ & $\md_1$ & $\md_2$ & $\md_3$ & $\md_4$ & $\md_5$ & $\md_6$ \\ \hline 
	probability $p(\md_i)$ & 0.01 & 0.33 & 0.14 & 0.07 & 0.41 & 0.04 \\ 
\end{tabular} 
\caption{Diagnoses probabilities used throughout Examples~\ref{ex:canonical_q-partition_search_ENT}--\ref{ex:canonical_q-partition_search_BME}.}
\label{tab:ex:can_q-partition_search_diagnosis_probs}	
\end{table}
 
\begin{example}\label{ex:canonical_q-partition_search_ENT} (Figure~\ref{fig:ex:can_q-part_search_ENT}, $m = \mathsf{ENT}$, $t_{\mathsf{ENT}} := 0.05$) \\
Starting from the root node representing the partition $\tuple{\emptyset,\mD,\emptyset}$ the algorithm generates all possible canonical q-partitions resulting from the transfer of a single diagnosis from the $\dnx{}$-set of the initial state to its $\dx{}$-set (which is the same for all examples to come in this section). Since there are six diagnoses in $\mD$, the initial state has exactly six direct canonical q-partition successors. The theoretical justification for the fact that all so constructed partitions are canonical q-partitions is given by Proposition~\ref{prop:S_init_sound+complete}. From all these generated neighbor nodes of the initial state, the best one according to the heuristic function \textsc{heur} (cf.\ Algorithm~\ref{algo:dx+_best_successor}) is selected for expansion. In the case of $\mathsf{ENT}$, this heuristic value (see framed numbers at the bottom right corner of the nodes) indicates the deviation from the theoretically optimal value of the $\mathsf{ENT}$ measure (cf.\ Section~\ref{sec:ExistingActiveLearningMeasuresForKBDebugging}), i.e.\ the deviation of $p(\dx{})$ from $0.5$. Hence, the canonical q-partition $\Pt_1 := \tuple{\setof{\md_5},\setof{\md_1,\md_2,\md_3,\md_4,\md_6},\emptyset}$ exhibiting the best heuristic value $0.09$ among all the six open nodes is expanded. This is indicated by a continuous line frame of the node representing this q-partition.

This expanded q-partition has exactly two direct successor q-partitions that are canonical and result from it by a minimal $\dx{}$-transformation. This can be seen by considering the traits of the diagnoses in $\dnx{}(\Pt_1)$ shown in the right-hand column of the third row in the table representing $\Pt_1$. Among the five traits there are only two $\subseteq$-minimal ones, i.e.\ $\md_4^{(1)} := \setof{2}$ and $\md_6^{(1)} := \setof{3}$. All the other traits are proper supersets of either of these. This means that all canonical successors can be generated by shifting either $\md_4$ or $\md_6$ from $\dnx{}(\Pt_1)$ to $\dx{}(\Pt_1)$ yielding $\Pt_{21} := \tuple{\setof{\md_4,\md_5},\setof{\md_1,\md_2,\md_3,\md_6},\emptyset}$ and $\Pt_{22} := \tuple{\setof{\md_5,\md_6},\setof{\md_1,\md_2,\md_3,\md_4},\emptyset}$, respectively.  

At this stage, the best among these two direct successors of $\Pt_1$ is determined for expansion by means of the heuristic function. As $p(\dx{}(\Pt_{21}))$ differs by less ($0.02$) from $0.5$ than $p(\dx{}(\Pt_{22}))$ ($0.05$), $\Pt_{21}$ is chosen. However, as $t_{\mathsf{ENT}}$ has been set to $0.05$, meaning that all q-partitions featuring a $\dx{}$ probability deviation from $0.5$ by less than $0.05$ are considered ``optimal'', the optimality test (cf.\ Algorithm~\ref{algo:dx+_opt}) for $\Pt_{21}$ returns positively. Therefore, $\Pt_{21}$ is returned by \textsc{findQPartition} as the solution of the q-partition search. Overall, the count of generated and expanded nodes in this example is $9$ and $2$ (viewing the goal node not as an expanded one), respectively. There are no backtrackings or tree prunings necessary as the used heuristic function guides the search directly towards a goal state. 

Please notice that at the expansion of a node $\Pt$ at any depth of the tree, the best q-partition to be expanded next is selected only from the direct successors of $\Pt$, not among all open nodes. This shows the \emph{local search characteristic} of the algorithm. Further on, observe that the algorithm proceeds in \emph{depth-first manner}, i.e.\ continues with the expansion of a new node at depth $k$ before a second node at depth $k-1$ is expanded. Nevertheless, it is ready to \emph{backtrack} after reaching a ``dead end'', i.e.\ the situation of no successors at node expansion, along a depth-first branch. In this case, other not-yet-expanded open nodes are considered.\qed
\end{example}
\begin{example}\label{ex:canonical_q-partition_search_SPL} (Figure~\ref{fig:ex:can_q-part_search_SPL}, $m = \mathsf{SPL}$, $t_{\mathsf{SPL}} := 0$) \\
Here we have a look at the search tree produced by the $\mathsf{SPL}$ measure. Whereas the initial node and the generated direct successor q-partitions of it are identical to the $\mathsf{ENT}$ example, the heuristic value (framed quantity at the the right bottom side of the node-representing rectangles) of the q-partitions is now the difference between the intended size of the $\dx{}$-set, i.e.\ $\frac{|\mD|}{2}$, and its actual size. Since all successors of the initial state have the same value $2$, we assume the algorithm being implemented so as to queue heuristically equally good nodes in a first-in-first-out (FIFO) order based on their generation time (which is chronological according to diagnosis index). This implies the selection of the q-partition $\Pt_1 := \tuple{\setof{\md_1},\setof{\md_2,\dots,\md_6},\emptyset}$ for expansion. From this, three neighbor nodes result. Among these again all have the same heuristic value ($1$). So, the first node in the FIFO queue representing the q-partition $\Pt_{21} := \tuple{\setof{\md_1,\md_2},\setof{\md_2,\dots,\md_6},\emptyset}$	is chosen for expansion. Due to only two $\subseteq$-minimal traits $\setof{6},\setof{7}$ for equivalence classes $[\md_3]^{\sim_{21}},[\md_4]^{\sim_{21}}$, respectively, (as opposed to the non-$\subseteq$-minimal traits $\setof{1,4,7},\setof{4,7}$), there are only two successors resulting from a minimal $\dx{}$-transformation from $\Pt_{21}$. These are the ones that shift $\md_3$ or $\md_4$, respectively, to the from the $\dnx{}$ to the $\dx{}$-set of $\Pt_{21}$. Since both now have a zero heuristic value, meaning that the optimality test involving the threshold $t_m := 0$ would be positive for both, the first expanded canonical q-partition $\Pt_{31} := \tuple{\setof{\md_1,\md_2,\md_3},\setof{\md_4,\md_5,\md_6},\emptyset}$ is returned by the algorithm due to the used FIFO strategy. Here, we have $12$ generated and $3$ expanded nodes, no prunings and no backtrackings.

Observe that the $\mathsf{SPL}$ measure imposes a relatively coarse grained classification on the q-partitions, i.e.\ there are many q-partitions with the same heuristic value w.r.t.\ $\mathsf{SPL}$. Hence, the $\mathsf{SPL}$ measure is amenable to a usage in a multiple-phase optimization of queries. For instance, in a first stage, a set of q-partitions optimal w.r.t.\ $\mathsf{SPL}$ could be sought in order to find an optimal q-partition among these w.r.t.\ another measure in a second phase. The $\mathsf{RIO}$ measure roughly uses such a multi-phased optimization incorporating $\mathsf{ENT}$ as a second measure besides a measure related to $\mathsf{SPL}$ in terms of considering the number of diagnoses in $\dx{}$. Of course, other combinations of measures are thinkable.

In this example, using $\mathsf{ENT}$ to discriminate between the two equally good q-partitions w.r.t.\ $\mathsf{SPL}$ would lead to a different output. Instead of $\Pt_{31}$, $\Pt_{32} := \tuple{\setof{\md_1,\md_2,\md_4},\setof{\md_3,\md_5,\md_6},\emptyset}$ would be returned since its $\dx{}$ probability ($0.34$) is closer to $0.5$ (which is the theoretical optimum of the $\mathsf{ENT}$ measure, cf.\ Section~\ref{sec:ExistingActiveLearningMeasuresForKBDebugging}) than for $\Pt_{31}$ ($0.15$).
\qed
\end{example}
\begin{example}\label{ex:canonical_q-partition_search_RIO_c=0.4} (Figure~\ref{fig:ex:can_q-part_search_RIO_c=0.4}, $m = \mathsf{RIO}$, $t_{\mathsf{card}} := 0$, $t_{\mathsf{ent}} := 0.05$, $\uc := 0.4$) \\
The $\mathsf{RIO}$ measure chooses $\Pt_1 := \tuple{\setof{\md_3},\setof{\md_1,\md_2,\md_4,\md_5,\md_6},\emptyset}$ at the first tree level since its heuristic value $0.016$ is minimal among all six successors of the root node. This value is computed (see Algorithm~\ref{algo:dx+_best_successor}) by first using the cautiousness parameter $\uc := 0.4$ to compute the minimal number $n$ of diagnoses that must be in the $\dx{}$-set of a goal q-partition as $n := \lceil 0.4 \cdot 6\rceil = 3$. Since $\dx{}(\Pt_1)$ includes only one diagnosis, the number $numDiagsToAdd$ of diagnoses that must be still transferred from $\dnx{}(\Pt_1)$ to $\dx{}(\Pt_1)$ to achieve a $\dx{}$-set of size $n$ is then $3-1 = 2$. Further on, the average probability $avgProb$ of diagnoses in $\dnx{}(\Pt_1)$ amounts to $\frac{p(\dnx{}(\Pt_1))}{|\dnx{}(\Pt_1)|} = \frac{0.86}{5} = 0.172$. Now, the heuristic value computes the probability deviation from $0.5$ of the new $\dx{}$-set resulting from the addition of two diagnoses with probability $avgProb$ to $\dx{}(\Pt_1)$, i.e.\ $|0.14 + 2 * 0.172 - 0.5| = |0.484 - 0.5|$. Note whereas the heuristic value in the case of $m \in \setof{\mathsf{ENT},\mathsf{SPL}}$ was directly compared with the given threshold $t_m$ in the goal check (function \textsc{opt}, cf.\ Algorithms~\ref{algo:dx+_best_successor} and \ref{algo:dx+_opt}), the goal check for $\mathsf{RIO}$ works differently. Namely, a q-partition is regarded optimal iff i.a.\ $|\dx{}| \geq n$ (cf.\ Algorithm~\ref{algo:dx+_opt}) which is not valid for $\Pt_1$ as $|\dx{}(\Pt_1)| = 1 \not\geq 3$, i.e.\ the $\dx{}$-set of $\Pt_1$ has still too small a cardinality.

Hence, the algorithm continues by expanding $\Pt_1$ and at the second tree level its best successor $\Pt_{21} := \langle\{\md_3,\md_4\},\{\md_1, \md_2, \md_5, \md_6\},\emptyset\rangle$ (with a heuristic value of $0.0925$) until it finds that the best q-partition $\Pt_{31} := \tuple{\setof{\md_2,\md_3,\md_4},\setof{\md_1,\md_5,\md_6},\emptyset}$ (heuristic value $0.04$) at tree depth $3$ satisfies the optimality conditions. That is, $\dx{}(\Pt_{31})$ comprises $n = 3$ diagnoses, $|\dx{}(\Pt_{31})|$ differs from $n$ by less than $t_{\mathsf{card}} = 0$ and $p(\dx{}(\Pt_{31})) = 0.54$ differs from $0.5$ by less than $t_{\mathsf{ent}} = 0.05$. So, finally $\Pt_{31}$ is returned. The number of generated and expanded nodes, prunings and backtrackings in this example was, respectively, $13$, $3$, $0$ and $0$.
	\qed
\end{example}
\begin{example}\label{ex:canonical_q-partition_search_RIO_c=0.3} (Figure~\ref{fig:ex:can_q-part_search_RIO_c=0.3}, $m = \mathsf{RIO}$, $t_{\mathsf{card}} := 0$, $t_{\mathsf{ent}} := 0.05$, $\uc := 0.3$) \\
Let us now investigate the q-partition search using the $\mathsf{RIO}$ measure with a cautiousness parameter $\uc := 0.3$ and compare it with the search discussed in Example~\ref{ex:canonical_q-partition_search_RIO_c=0.4} using $\uc := 0.4$. At the first tree level, the q-partition with $\dx{} = \setof{\md_5}$ (instead of $\setof{\md_3}$) is selected for expansion. Note that the heuristic values of the q-partitions are now different to those in Example~\ref{ex:canonical_q-partition_search_RIO_c=0.4} due to the dependence of the heuristic function on $\uc$, i.e.\ now the required number $n$ of diagnoses in the $\dx{}$-set of a goal q-partition is $2$ (instead of $3$). At the second level of the search tree (as opposed to the third in Example~\ref{ex:canonical_q-partition_search_RIO_c=0.4}) an optimal q-partition is already found. We have $9$ generated and $2$ expanded nodes and the search did not involve any prunings or backtrackings.
\qed
\end{example}
\begin{example}\label{ex:canonical_q-partition_search_MPS} (Figure~\ref{fig:ex:can_q-part_search_MPS}, $m = \mathsf{MPS}$) \\
The $\mathsf{MPS}$ measure is pretty simple in that it selects the one successor q-partition
of the initial state whose $\dx{}$-set contains the most probable diagnosis in $\mD$, i.e.\ the q-partition with a singleton $\dx{}$ of maximal probability. The heuristic value -- where a lower value indicates a better q-partition -- is then just the negative probability of the diagnosis in $\dx{}$. Here we count $7$ generated and one expanded nodes.\qed 
\end{example}
\begin{example}\label{ex:canonical_q-partition_search_BME} (Figure~\ref{fig:ex:can_q-part_search_BME}, $m = \mathsf{BME}$) \\
The $\mathsf{BME}$ measure selects the one successor q-partition $\Pt$ whose set $\mD_{Q,p,\min} \in \setof{\dx{}(\Pt),\dnx{}(\Pt)}$ with least probability -- or, more precisely, a probability less than $50\%$ -- has the maximal cardinality. Intuitively, this means that, with a probability of more than $50\%$, the answer will be such that the set $\mD_{Q,p,\min}$ (whose cardinality has been maximized) is invalidated. The (minimization of the) heuristic function $-|\mD_{Q,p,\min}| + p(\mD_{Q,p,\min})$ (see lines~\ref{algoline:dx+_best_successor:if_case_BME} ff.\ in Algorithm~\ref{algo:dx+_best_successor}) achieves exactly this. For it puts higher weight on the cardinality of $\mD_{Q,p,\min}$ (integer) than on the probability of $\mD_{Q,p,\min}$ (which falls into $(0,1)$). In other words, primarily the cardinality it maximized (due to the minus in front of it) and secondarily the probability is minimized. The reason for the probability minimization is the 
larger expected 
number of diagnoses that can be added to $\mD_{Q,p,\min}$ without having $p(\mD_{Q,p,\min})$ exceed $0.5$, i.e.\ the higher chance of finding a very good q-partition w.r.t.\ $\mathsf{BME}$ in the subtree below the successor q-partition. 
In this vein, the algorithm in this example finally returns a q-partition with a bias of $24\%$ towards eliminating four of six leading diagnoses. The stop criterion given by the optimiality threshold $t_{\mathsf{BME}} = 1$ means that it is already sufficient to invalidate the maximum theoretically possible diagnoses number ($5$ in this case) minus $t_{\mathsf{BME}}$ diagnoses. Here we count $16$ generated and four expanded nodes.\qed 
\end{example}
\begin{figure}[tb]
\setlength{\fboxsep}{1.5pt}
	\scriptsize 
	\xygraph{
		!{<0cm,0cm>;<2cm,0cm>:<0cm,1.6cm>::}
		!{(4,5)}*+[F]{\begin{tabular}{c|c}
					$0$ & $1$ \\
					$\emptyset$&$\md_1,\md_2,\md_3,\md_4,\md_5,\md_6$ \\
					$\emptyset$&$\setof{2,3},\setof{2,5},\setof{2,6},\setof{2,7},\setof{1,4,7},\setof{3,4,7}$
					\end{tabular} }="init"
		!{(0,4)}*+[F--]{ {\begin{tabular}{c|c}
				$0.01$ & $0.99$ \\
				$\md_1$&$\md_2,\md_3,\md_4,\md_5,\md_6$ \\
				$\setof{2,3}$&$\setof{5},\setof{6},\setof{7},\setof{1,4,7},\setof{4,7}$
		\end{tabular}}_{\framebox[1.3\width]{0.49}} }="lev1_D1"
		!{(0,3)}*+[F--]{ {\begin{tabular}{c|c}
				$0.33$ & $0.67$ \\
				$\md_2$&$\md_1,\md_3,\md_4,\md_5,\md_6$ \\
				$\setof{2,5}$&$\setof{3},\setof{6},\setof{7},\setof{1,4,7},\setof{3,4,7}$
		\end{tabular}}_{\framebox[1.3\width]{0.17}} }="lev1_D2"
		!{(0,2)}*+[F--]{ {\begin{tabular}{c|c}
				$0.14$ & $0.86$ \\
				$\md_3$&$\md_1,\md_2,\md_4,\md_5,\md_6$ \\
				$\setof{2,6}$&$\setof{3},\setof{5},\setof{7},\setof{1,4,7},\setof{3,4,7}$
		\end{tabular}}_{\framebox[1.3\width]{0.36}} }="lev1_D3"
		!{(0,1)}*+[F--]{ {\begin{tabular}{c|c}
				$0.07$ & $0.93$ \\
				$\md_4$&$\md_1,\md_2,\md_3,\md_5,\md_6$ \\
				$\setof{2,7}$&$\setof{3},\setof{5},\setof{6},\setof{1,4},\setof{3,4}$
		\end{tabular}}_{\framebox[1.3\width]{0.43}} }="lev1_D4"
		!{(0,-1)}*+[F]{ {\begin{tabular}{c|c}
				$0.41$ & $0.59$ \\
				$\md_5$&$\md_1,\md_2,\md_3,\md_4,\md_6$ \\
				$\setof{1,4,7}$&$\setof{2,3},\setof{2,5},\setof{2,6},\setof{2},\setof{3}$
		\end{tabular}}_{\framebox[1.3\width]{0.09}} }="lev1_D5_best"
		!{(0,0)}*+[F--]{ {\begin{tabular}{c|c}
				$0.04$ & $0.96$ \\
				$\md_6$&$\md_1,\md_2,\md_3,\md_4,\md_5$ \\
				$\setof{3,4,7}$&$\setof{2},\setof{2,5},\setof{2,6},\setof{2},\setof{1}$
		\end{tabular}}_{\framebox[1.3\width]{0.46}} }="lev1_D6"
		!{(4,-2)}*+[F=]{ {\begin{tabular}{c|c}
				$0.48$ & $0.52$ \\
				$\md_4,\md_5$&$\md_1,\md_2,\md_3,\md_6$ \\
				$\setof{1,2,4,7}$&$\setof{3},\setof{5},\setof{6},\setof{3}$
		\end{tabular} }_{\framebox[1.3\width]{0.02}}}="lev2_D5,D4"
		!{(4,-3)}*+[F--]{ {\begin{tabular}{c|c}
				$0.45$ & $0.55$ \\
				$\md_5,\md_6$&$\md_1,\md_2,\md_3,\md_4$ \\
				$\setof{1,3,4,7}$&$\setof{2},\setof{2,5},\setof{2,6},\setof{2}$
		\end{tabular} }_{\framebox[1.3\width]{0.05}}}="lev2_D5,D6"
		"init":^{0.01}_{\setof{\md_1}}"lev1_D1"
		"init":@/^2em/^{0.33}_{\setof{\md_2}}"lev1_D2"
		"init":@/^4.6em/^{0.14}_{\setof{\md_3}}"lev1_D3"
		"init":@/^7em/^{0.07}_{\setof{\md_4}}"lev1_D4"
		"init":@/^12em/^{0.41}_{\setof{\md_5}}"lev1_D5_best"
		"init":@/^9em/^{0.04}_{\setof{\md_6}}"lev1_D6"
		"lev1_D5_best":_{0.07}^{\setof{\md_4}}"lev2_D5,D4"
		"lev1_D5_best":@/_2em/_{0.04}^{\setof{\md_6}}"lev2_D5,D6"
	}
	\caption{Search for an optimal canonical q-partition using Algorithm~\ref{algo:dx+_part} (\textsc{findQPartition}) for the example DPI (Table~\ref{tab:example_dpi_0}) w.r.t.\ $m := \mathsf{ENT}$ with threshold $t_{\mathsf{ENT}} := 0.05$.}
	\label{fig:ex:can_q-part_search_ENT}
\end{figure}
\begin{figure}[tb]
\setlength{\fboxsep}{1.5pt}
	\scriptsize 
	\xygraph{
		!{<0cm,0cm>;<2cm,0cm>:<0cm,1.6cm>::}
		!{(4,5)}*+[F]{\begin{tabular}{c|c}
					$0$ & $1$ \\
					$\emptyset$&$\md_1,\md_2,\md_3,\md_4,\md_5,\md_6$ \\
					$\emptyset$&$\setof{2,3},\setof{2,5},\setof{2,6},\setof{2,7},\setof{1,4,7},\setof{3,4,7}$
					\end{tabular} }="init"
		!{(0,-1)}*+[F]{ {\begin{tabular}{c|c}
				$0.01$ & $0.99$ \\
				$\md_1$&$\md_2,\md_3,\md_4,\md_5,\md_6$ \\
				$\setof{2,3}$&$\setof{5},\setof{6},\setof{7},\setof{1,4,7},\setof{4,7}$
		\end{tabular}}_{\framebox[1.7\width]{2}} }="lev1_D1_best"
		!{(0,4)}*+[F--]{ {\begin{tabular}{c|c}
				$0.33$ & $0.67$ \\
				$\md_2$&$\md_1,\md_3,\md_4,\md_5,\md_6$ \\
				$\setof{2,5}$&$\setof{3},\setof{6},\setof{7},\setof{1,4,7},\setof{3,4,7}$
		\end{tabular}}_{\framebox[1.7\width]{2}} }="lev1_D2"
		!{(0,3)}*+[F--]{ {\begin{tabular}{c|c}
				$0.14$ & $0.86$ \\
				$\md_3$&$\md_1,\md_2,\md_4,\md_5,\md_6$ \\
				$\setof{2,6}$&$\setof{3},\setof{5},\setof{7},\setof{1,4,7},\setof{3,4,7}$
		\end{tabular}}_{\framebox[1.7\width]{2}} }="lev1_D3"
		!{(0,2)}*+[F--]{ {\begin{tabular}{c|c}
				$0.07$ & $0.93$ \\
				$\md_4$&$\md_1,\md_2,\md_3,\md_5,\md_6$ \\
				$\setof{2,7}$&$\setof{3},\setof{5},\setof{6},\setof{1,4},\setof{3,4}$
		\end{tabular}}_{\framebox[1.7\width]{2}} }="lev1_D4"
		!{(0,1)}*+[F--]{ {\begin{tabular}{c|c}
				$0.41$ & $0.59$ \\
				$\md_5$&$\md_1,\md_2,\md_3,\md_4,\md_6$ \\
				$\setof{1,4,7}$&$\setof{2,3},\setof{2,5},\setof{2,6},\setof{2},\setof{3}$
		\end{tabular}}_{\framebox[1.7\width]{2}} }="lev1_D5"
		!{(0,0)}*+[F--]{ {\begin{tabular}{c|c}
				$0.04$ & $0.96$ \\
				$\md_6$&$\md_1,\md_2,\md_3,\md_4,\md_5$ \\
				$\setof{3,4,7}$&$\setof{2},\setof{2,5},\setof{2,6},\setof{2},\setof{1}$
		\end{tabular}}_{\framebox[1.7\width]{2}} }="lev1_D6"
		!{(4,-4)}*+[F]{ {\begin{tabular}{c|c}
				$0.34$ & $0.66$ \\
				$\md_1,\md_2$&$\md_3,\md_4,\md_5,\md_6$ \\
				$\setof{2,3,5}$&$\setof{6},\setof{7},\setof{1,4,7},\setof{4,7}$
		\end{tabular} }_{\framebox[1.7\width]{1}}}="lev2_D1,D2_best"
		!{(4,-2)}*+[F--]{ {\begin{tabular}{c|c}
				$0.15$ & $0.85$ \\
				$\md_1,\md_3$&$\md_2,\md_4,\md_5,\md_6$ \\
				$\setof{2,3,6}$&$\setof{5},\setof{7},\setof{1,4,7},\setof{4,7}$
		\end{tabular} }_{\framebox[1.7\width]{1}}}="lev2_D1,D3"
		!{(4,-3)}*+[F--]{ {\begin{tabular}{c|c}
				$0.08$ & $0.92$ \\
				$\md_1,\md_4$&$\md_2,\md_3,\md_5,\md_6$ \\
				$\setof{2,3,7}$&$\setof{5},\setof{6},\setof{1,4},\setof{4}$
		\end{tabular} }_{\framebox[1.7\width]{1}}}="lev2_D1,D4"
		!{(0,-5)}*+[F--]{ {\begin{tabular}{c|c}
				$0.34$ & $0.66$ \\
				$\md_1,\md_2,\md_4$&$\md_3,\md_5,\md_6$ \\
				$\setof{2,3,5,7}$&$\setof{6},\setof{1,4},\setof{4}$
		\end{tabular} }_{\framebox[1.7\width]{0}}}="lev3_D1,D2,D4"
		!{(0,-6)}*+[F=]{ {\begin{tabular}{c|c}
				$0.15$ & $0.85$ \\
				$\md_1,\md_2,\md_3$&$\md_4,\md_5,\md_6$ \\
				$\setof{2,3,5,6}$&$\setof{7},\setof{1,4,7},\setof{4,7}$
		\end{tabular} }_{\framebox[1.7\width]{0}}}="lev3_D1,D2,D3"
		"init":@/^12em/^{0.01}_{\setof{\md_1}}"lev1_D1_best"
		"init":^{0.33}_{\setof{\md_2}}"lev1_D2"
		"init":@/^2em/^{0.14}_{\setof{\md_3}}"lev1_D3"
		"init":@/^4.6em/^{0.07}_{\setof{\md_4}}"lev1_D4"
		"init":@/^7em/^{0.41}_{\setof{\md_5}}"lev1_D5"
		"init":@/^9em/^{0.04}_{\setof{\md_6}}"lev1_D6"
		"lev1_D1_best":@/_4em/_{0.33}^{\setof{\md_2}}"lev2_D1,D2_best"
		"lev1_D1_best":_{0.14}^{\setof{\md_3}}"lev2_D1,D3"
		"lev1_D1_best":@/_2em/_{0.07}^{\setof{\md_4}}"lev2_D1,D4"
		"lev2_D1,D2_best":^{0.07}_{\setof{\md_4}}"lev3_D1,D2,D4"
		"lev2_D1,D2_best":@/^2em/^{0.14}_{\setof{\md_3}}"lev3_D1,D2,D3"
	}
	\caption{Search for an optimal canonical q-partition using Algorithm~\ref{algo:dx+_part} (\textsc{findQPartition}) for the example DPI (Table~\ref{tab:example_dpi_0}) w.r.t.\ $m := \mathsf{SPL}$ with threshold $t_{\mathsf{SPL}} := 0$.}
	\label{fig:ex:can_q-part_search_SPL}
\end{figure}
\begin{figure}[tb]
\setlength{\fboxsep}{1.5pt}
	\scriptsize 
	\xygraph{
		!{<0cm,0cm>;<2cm,0cm>:<0cm,1.6cm>::}
		!{(4,5)}*+[F]{\begin{tabular}{c|c}
					$0$ & $1$ \\
					$\emptyset$&$\md_1,\md_2,\md_3,\md_4,\md_5,\md_6$ \\
					$\emptyset$&$\setof{2,3},\setof{2,5},\setof{2,6},\setof{2,7},\setof{1,4,7},\setof{3,4,7}$
					\end{tabular} }="init"
		!{(0,4)}*+[F--]{ {\begin{tabular}{c|c}
				$0.01$ & $0.99$ \\
				$\md_1$&$\md_2,\md_3,\md_4,\md_5,\md_6$ \\
				$\setof{2,3}$&$\setof{5},\setof{6},\setof{7},\setof{1,4,7},\setof{4,7}$
		\end{tabular}}_{\framebox[1.7\width]{0.094}} }="lev1_D1"
		!{(0,3)}*+[F--]{ {\begin{tabular}{c|c}
				$0.33$ & $0.67$ \\
				$\md_2$&$\md_1,\md_3,\md_4,\md_5,\md_6$ \\
				$\setof{2,5}$&$\setof{3},\setof{6},\setof{7},\setof{1,4,7},\setof{3,4,7}$
		\end{tabular}}_{\framebox[1.7\width]{0.098}} }="lev1_D2"
		!{(0,-1)}*+[F]{ {\begin{tabular}{c|c}
				$0.14$ & $0.86$ \\
				$\md_3$&$\md_1,\md_2,\md_4,\md_5,\md_6$ \\
				$\setof{2,6}$&$\setof{3},\setof{5},\setof{7},\setof{1,4,7},\setof{3,4,7}$
		\end{tabular}}_{\framebox[1.7\width]{0.016}} }="lev1_D3_best"
		!{(0,2)}*+[F--]{ {\begin{tabular}{c|c}
				$0.07$ & $0.93$ \\
				$\md_4$&$\md_1,\md_2,\md_3,\md_5,\md_6$ \\
				$\setof{2,7}$&$\setof{3},\setof{5},\setof{6},\setof{1,4},\setof{3,4}$
		\end{tabular}}_{\framebox[1.7\width]{0.058}} }="lev1_D4"
		!{(0,1)}*+[F--]{ {\begin{tabular}{c|c}
				$0.41$ & $0.59$ \\
				$\md_5$&$\md_1,\md_2,\md_3,\md_4,\md_6$ \\
				$\setof{1,4,7}$&$\setof{2,3},\setof{2,5},\setof{2,6},\setof{2},\setof{3}$
		\end{tabular}}_{\framebox[1.7\width]{0.146}} }="lev1_D5"
		!{(0,0)}*+[F--]{ {\begin{tabular}{c|c}
				$0.04$ & $0.96$ \\
				$\md_6$&$\md_1,\md_2,\md_3,\md_4,\md_5$ \\
				$\setof{3,4,7}$&$\setof{2},\setof{2,5},\setof{2,6},\setof{2},\setof{1}$
		\end{tabular}}_{\framebox[1.7\width]{0.076}} }="lev1_D6"
		!{(4,-1.5)}*+[F--]{ {\begin{tabular}{c|c}
				$0.15$ & $0.85$ \\
				$\md_1,\md_3$&$\md_2,\md_4,\md_5,\md_6$ \\
				$\setof{2,3,6}$&$\setof{5},\setof{7},\setof{1,4,7},\setof{4,7}$
		\end{tabular} }_{\framebox[1.7\width]{0.1375}}}="lev2_D3,D1"
		!{(4,-2.5)}*+[F--]{ {\begin{tabular}{c|c}
				$0.47$ & $0.53$ \\
				$\md_2,\md_3$&$\md_1,\md_4,\md_5,\md_6$ \\
				$\setof{2,5,6}$&$\setof{3},\setof{7},\setof{1,4,7},\setof{3,4,7}$
		\end{tabular} }_{\framebox[1.7\width]{0.1025}}}="lev2_D3,D2"
		!{(4,-3.5)}*+[F]{ {\begin{tabular}{c|c}
				$0.21$ & $0.79$ \\
				$\md_3,\md_4$&$\md_1,\md_2,\md_5,\md_6$ \\
				$\setof{2,6,7}$&$\setof{3},\setof{5},\setof{1,4},\setof{3,4}$
		\end{tabular} }_{\framebox[1.7\width]{0.0925}}}="lev2_D3,D4_best"
		!{(0,-4.5)}*+[F--]{ {\begin{tabular}{c|c}
				$0.22$ & $0.78$ \\
				$\md_1,\md_3,\md_4$&$\md_2,\md_5,\md_6$ \\
				$\setof{2,3,6,7}$&$\setof{5},\setof{1,4},\setof{4}$
		\end{tabular} }_{\framebox[1.7\width]{0.28}}}="lev3_D3,D4,D1"
		!{(0,-6.5)}*+[F=]{ {\begin{tabular}{c|c}
				$0.54$ & $0.46$ \\
				$\md_2,\md_3,\md_4$&$\md_1,\md_5,\md_6$ \\
				$\setof{2,5,6,7}$&$\setof{3},\setof{1,4},\setof{3,4}$
		\end{tabular} }_{\framebox[1.7\width]{0.04}}}="lev3_D3,D4,D2_best"
		!{(0,-5.5)}*+[F--]{ {\begin{tabular}{c|c}
				$0.62$ & $0.38$ \\
				$\md_3,\md_4,\md_5$&$\md_1,\md_2,\md_6$ \\
				$\setof{1,2,4,6,7}$&$\setof{3},\setof{5},\setof{3}$
		\end{tabular} }_{\framebox[1.7\width]{0.12}}}="lev3_D3,D4,D5"
		"init":@/^0em/^{0.01}_{\setof{\md_1}}"lev1_D1"
		"init":@/^2.5em/^{0.33}_{\setof{\md_2}}"lev1_D2"
		"init":@/^13em/^{0.14}_{\setof{\md_3}}"lev1_D3_best"
		"init":@/^5.1em/^{0.07}_{\setof{\md_4}}"lev1_D4"
		"init":@/^7.5em/^{0.41}_{\setof{\md_5}}"lev1_D5"
		"init":@/^10em/^{0.04}_{\setof{\md_6}}"lev1_D6"
		"lev1_D3_best":@/_5em/_{0.07}^{\setof{\md_4}}"lev2_D3,D4_best"
		"lev1_D3_best":_{0.01}^{\setof{\md_1}}"lev2_D3,D1"
		"lev1_D3_best":@/_3em/_{0.33}^{\setof{\md_2}}"lev2_D3,D2"
		"lev2_D3,D4_best":^{0.01}_{\setof{\md_1}}"lev3_D3,D4,D1"
		"lev2_D3,D4_best":@/^4em/^{0.33}_{\setof{\md_2}}"lev3_D3,D4,D2_best"
		"lev2_D3,D4_best":@/^2em/^{0.41}_{\setof{\md_5}}"lev3_D3,D4,D5"
	}
	\caption{Search for an optimal canonical q-partition using Algorithm~\ref{algo:dx+_part} (\textsc{findQPartition}) for the example DPI (Table~\ref{tab:example_dpi_0}) w.r.t.\ $m := \mathsf{RIO}$ with cautiousness parameter $\uc := 0.4$ and thresholds $t_{\mathsf{card}} := 0$ (as $t_{\mathsf{SPL}}$ in Example~\ref{ex:canonical_q-partition_search_SPL}) and $t_{\mathsf{ent}} := 0.05$ (as $t_{\mathsf{ENT}}$ in Example~\ref{ex:canonical_q-partition_search_ENT}).}
	\label{fig:ex:can_q-part_search_RIO_c=0.4}
\end{figure}
\begin{figure}[tb]
	\setlength{\fboxsep}{1.5pt}
	\scriptsize 
	\xygraph{
		!{<0cm,0cm>;<2cm,0cm>:<0cm,1.6cm>::}
		!{(4,5)}*+[F]{\begin{tabular}{c|c}
				$0$ & $1$ \\
				$\emptyset$&$\md_1,\md_2,\md_3,\md_4,\md_5,\md_6$ \\
				$\emptyset$&$\setof{2,3},\setof{2,5},\setof{2,6},\setof{2,7},\setof{1,4,7},\setof{3,4,7}$
		\end{tabular} }="init"
		!{(0,4)}*+[F--]{ {\begin{tabular}{c|c}
					$0.01$ & $0.99$ \\
					$\md_1$&$\md_2,\md_3,\md_4,\md_5,\md_6$ \\
					$\setof{2,3}$&$\setof{5},\setof{6},\setof{7},\setof{1,4,7},\setof{4,7}$
			\end{tabular}}_{\framebox[1.7\width]{0.292}} }="lev1_D1"
		!{(0,3)}*+[F--]{ {\begin{tabular}{c|c}
					$0.33$ & $0.67$ \\
					$\md_2$&$\md_1,\md_3,\md_4,\md_5,\md_6$ \\
					$\setof{2,5}$&$\setof{3},\setof{6},\setof{7},\setof{1,4,7},\setof{3,4,7}$
			\end{tabular}}_{\framebox[1.7\width]{0.036}} }="lev1_D2"
		!{(0,2)}*+[F--]{ {\begin{tabular}{c|c}
					$0.14$ & $0.86$ \\
					$\md_3$&$\md_1,\md_2,\md_4,\md_5,\md_6$ \\
					$\setof{2,6}$&$\setof{3},\setof{5},\setof{7},\setof{1,4,7},\setof{3,4,7}$
			\end{tabular}}_{\framebox[1.7\width]{0.188}} }="lev1_D3"
		!{(0,1)}*+[F--]{ {\begin{tabular}{c|c}
					$0.07$ & $0.93$ \\
					$\md_4$&$\md_1,\md_2,\md_3,\md_5,\md_6$ \\
					$\setof{2,7}$&$\setof{3},\setof{5},\setof{6},\setof{1,4},\setof{3,4}$
			\end{tabular}}_{\framebox[1.7\width]{0.244}} }="lev1_D4"
		!{(0,-1)}*+[F]{ {\begin{tabular}{c|c}
					$0.41$ & $0.59$ \\
					$\md_5$&$\md_1,\md_2,\md_3,\md_4,\md_6$ \\
					$\setof{1,4,7}$&$\setof{2,3},\setof{2,5},\setof{2,6},\setof{2},\setof{3}$
			\end{tabular}}_{\framebox[1.7\width]{0.028}} }="lev1_D5_best"
		!{(0,0)}*+[F--]{ {\begin{tabular}{c|c}
					$0.04$ & $0.96$ \\
					$\md_6$&$\md_1,\md_2,\md_3,\md_4,\md_5$ \\
					$\setof{3,4,7}$&$\setof{2},\setof{2,5},\setof{2,6},\setof{2},\setof{1}$
			\end{tabular}}_{\framebox[1.7\width]{0.268}} }="lev1_D6"
		!{(4,-1.5)}*+[F=]{ {\begin{tabular}{c|c}
					$0.48$ & $0.52$ \\
					$\md_4,\md_5$&$\md_1,\md_2,\md_3,\md_6$ \\
					$\setof{1,2,4,7}$&$\setof{3},\setof{5},\setof{6},\setof{3}$
			\end{tabular} }_{\framebox[1.7\width]{0.02}}}="lev2_D5,D4_best"
		!{(4,-2.5)}*+[F--]{ {\begin{tabular}{c|c}
					$0.45$ & $0.55$ \\
					$\md_5,\md_6$&$\md_1,\md_2,\md_3,\md_4$ \\
					$\setof{1,3,4,7}$&$\setof{2},\setof{2,5},\setof{2,6},\setof{2}$
			\end{tabular} }_{\framebox[1.7\width]{0.05}}}="lev2_D5,D6"
		"init":@/^0em/^{0.01}_{\setof{\md_1}}"lev1_D1"
		"init":@/^2.5em/^{0.33}_{\setof{\md_2}}"lev1_D2"
		"init":@/^5.1em/^{0.14}_{\setof{\md_3}}"lev1_D3"
		"init":@/^7.5em/^{0.07}_{\setof{\md_4}}"lev1_D4"
		"init":@/^13em/^{0.41}_{\setof{\md_5}}"lev1_D5_best"
		"init":@/^10em/^{0.04}_{\setof{\md_6}}"lev1_D6"
		"lev1_D5_best":_{0.07}^{\setof{\md_4}}"lev2_D5,D4_best"
		"lev1_D5_best":@/_3em/_{0.04}^{\setof{\md_6}}"lev2_D5,D6"
	}
	\caption{Search for an optimal canonical q-partition using Algorithm~\ref{algo:dx+_part} (\textsc{findQPartition}) for the example DPI (Table~\ref{tab:example_dpi_0}) w.r.t.\ $m := \mathsf{RIO}$ with cautiousness parameter $\uc := 0.3$ and thresholds $t_{\mathsf{card}} := 0$ (as $t_{\mathsf{SPL}}$ in Example~\ref{ex:canonical_q-partition_search_SPL}) and $t_{\mathsf{ent}} := 0.05$ (as $t_{\mathsf{ENT}}$ in Example~\ref{ex:canonical_q-partition_search_ENT}).}
	\label{fig:ex:can_q-part_search_RIO_c=0.3}
\end{figure}
\begin{figure}[tb]
	\setlength{\fboxsep}{1.5pt}
	\scriptsize 
	\xygraph{
		!{<0cm,0cm>;<2cm,0cm>:<0cm,1.6cm>::}
		!{(4,5)}*+[F]{\begin{tabular}{c|c}
				$0$ & $1$ \\
				$\emptyset$&$\md_1,\md_2,\md_3,\md_4,\md_5,\md_6$ \\
				$\emptyset$&$\setof{2,3},\setof{2,5},\setof{2,6},\setof{2,7},\setof{1,4,7},\setof{3,4,7}$
		\end{tabular} }="init"
		!{(0,4)}*+[F--]{ {\begin{tabular}{c|c}
					$0.01$ & $0.99$ \\
					$\md_1$&$\md_2,\md_3,\md_4,\md_5,\md_6$ \\
					$\setof{2,3}$&$\setof{5},\setof{6},\setof{7},\setof{1,4,7},\setof{4,7}$
			\end{tabular}}_{\framebox[1.3\width]{-0.01}} }="lev1_D1"
		!{(0,3)}*+[F--]{ {\begin{tabular}{c|c}
					$0.33$ & $0.67$ \\
					$\md_2$&$\md_1,\md_3,\md_4,\md_5,\md_6$ \\
					$\setof{2,5}$&$\setof{3},\setof{6},\setof{7},\setof{1,4,7},\setof{3,4,7}$
			\end{tabular}}_{\framebox[1.3\width]{-0.33}} }="lev1_D2"
		!{(0,2)}*+[F--]{ {\begin{tabular}{c|c}
					$0.14$ & $0.86$ \\
					$\md_3$&$\md_1,\md_2,\md_4,\md_5,\md_6$ \\
					$\setof{2,6}$&$\setof{3},\setof{5},\setof{7},\setof{1,4,7},\setof{3,4,7}$
			\end{tabular}}_{\framebox[1.3\width]{-0.14}} }="lev1_D3"
		!{(0,1)}*+[F--]{ {\begin{tabular}{c|c}
					$0.07$ & $0.93$ \\
					$\md_4$&$\md_1,\md_2,\md_3,\md_5,\md_6$ \\
					$\setof{2,7}$&$\setof{3},\setof{5},\setof{6},\setof{1,4},\setof{3,4}$
			\end{tabular}}_{\framebox[1.3\width]{-0.07}} }="lev1_D4"
		!{(0,-1)}*+[F=]{ {\begin{tabular}{c|c}
					$0.41$ & $0.59$ \\
					$\md_5$&$\md_1,\md_2,\md_3,\md_4,\md_6$ \\
					$\setof{1,4,7}$&$\setof{2,3},\setof{2,5},\setof{2,6},\setof{2},\setof{3}$
			\end{tabular}}_{\framebox[1.3\width]{-0.41}} }="lev1_D5_best"
		!{(0,0)}*+[F--]{ {\begin{tabular}{c|c}
					$0.04$ & $0.96$ \\
					$\md_6$&$\md_1,\md_2,\md_3,\md_4,\md_5$ \\
					$\setof{3,4,7}$&$\setof{2},\setof{2,5},\setof{2,6},\setof{2},\setof{1}$
			\end{tabular}}_{\framebox[1.3\width]{-0.04}} }="lev1_D6"
		"init":^{0.01}_{\setof{\md_1}}"lev1_D1"
		"init":@/^2em/^{0.33}_{\setof{\md_2}}"lev1_D2"
		"init":@/^4.6em/^{0.14}_{\setof{\md_3}}"lev1_D3"
		"init":@/^7em/^{0.07}_{\setof{\md_4}}"lev1_D4"
		"init":@/^12em/^{0.41}_{\setof{\md_5}}"lev1_D5_best"
		"init":@/^9em/^{0.04}_{\setof{\md_6}}"lev1_D6"
	}
	\caption{Search for an optimal canonical q-partition using Algorithm~\ref{algo:dx+_part} (\textsc{findQPartition}) for the example DPI (Table~\ref{tab:example_dpi_0}) w.r.t.\ $m := \mathsf{MPS}$. No threshold is needed for $\mathsf{MPS}$.}
	\label{fig:ex:can_q-part_search_MPS}
\end{figure}
\begin{figure}[tb]
	\setlength{\fboxsep}{1.5pt}
	\scriptsize 
	\xygraph{
		!{<0cm,0cm>;<2cm,0cm>:<0cm,1.6cm>::}
		!{(4,5)}*+[F]{\begin{tabular}{c|c}
				$0$ & $1$ \\
				$\emptyset$&$\md_1,\md_2,\md_3,\md_4,\md_5,\md_6$ \\
				$\emptyset$&$\setof{2,3},\setof{2,5},\setof{2,6},\setof{2,7},\setof{1,4,7},\setof{3,4,7}$
		\end{tabular} }="init"
		!{(0,-0.25)}*+[F]{ {\begin{tabular}{c|c}
					$0.01$ & $0.99$ \\
					$\md_1$&$\md_2,\md_3,\md_4,\md_5,\md_6$ \\
					$\setof{2,3}$&$\setof{5},\setof{6},\setof{7},\setof{1,4,7},\setof{4,7}$
			\end{tabular}}_{\framebox[1.3\width]{-0.99}} }="lev1_D1_best"
		!{(0,4.75)}*+[F--]{ {\begin{tabular}{c|c}
					$0.33$ & $0.67$ \\
					$\md_2$&$\md_1,\md_3,\md_4,\md_5,\md_6$ \\
					$\setof{2,5}$&$\setof{3},\setof{6},\setof{7},\setof{1,4,7},\setof{3,4,7}$
			\end{tabular}}_{\framebox[1.3\width]{-0.77}} }="lev1_D2"
		!{(0,3.75)}*+[F--]{ {\begin{tabular}{c|c}
					$0.14$ & $0.86$ \\
					$\md_3$&$\md_1,\md_2,\md_4,\md_5,\md_6$ \\
					$\setof{2,6}$&$\setof{3},\setof{5},\setof{7},\setof{1,4,7},\setof{3,4,7}$
			\end{tabular}}_{\framebox[1.3\width]{-0.86}} }="lev1_D3"
		!{(0,2.75)}*+[F--]{ {\begin{tabular}{c|c}
					$0.07$ & $0.93$ \\
					$\md_4$&$\md_1,\md_2,\md_3,\md_5,\md_6$ \\
					$\setof{2,7}$&$\setof{3},\setof{5},\setof{6},\setof{1,4},\setof{3,4}$
			\end{tabular}}_{\framebox[1.3\width]{-0.93}} }="lev1_D4"
		!{(0,1.75)}*+[F--]{ {\begin{tabular}{c|c}
					$0.41$ & $0.59$ \\
					$\md_5$&$\md_1,\md_2,\md_3,\md_4,\md_6$ \\
					$\setof{1,4,7}$&$\setof{2,3},\setof{2,5},\setof{2,6},\setof{2},\setof{3}$
			\end{tabular}}_{\framebox[1.3\width]{-0.59}} }="lev1_D5"
		!{(0,0.75)}*+[F--]{ {\begin{tabular}{c|c}
					$0.04$ & $0.96$ \\
					$\md_6$&$\md_1,\md_2,\md_3,\md_4,\md_5$ \\
					$\setof{3,4,7}$&$\setof{2},\setof{2,5},\setof{2,6},\setof{2},\setof{1}$
			\end{tabular}}_{\framebox[1.3\width]{-0.96}} }="lev1_D6"
		!{(4,-0.5)}*+[F--]{ {\begin{tabular}{c|c}
					$0.34$ & $0.66$ \\
					$\md_1,\md_2$&$\md_3,\md_4,\md_5,\md_6$ \\
					$\setof{2,3,5}$&$\setof{6},\setof{7},\setof{1,4,7},\setof{4,7}$
			\end{tabular} }_{\framebox[1.7\width]{-1.66}}}="lev2_D1,D2"
		!{(4,-1.5)}*+[F--]{ {\begin{tabular}{c|c}
					$0.15$ & $0.85$ \\
					$\md_1,\md_3$&$\md_2,\md_4,\md_5,\md_6$ \\
					$\setof{2,3,6}$&$\setof{5},\setof{7},\setof{1,4,7},\setof{4,7}$
			\end{tabular} }_{\framebox[1.7\width]{-1.85}}}="lev2_D1,D3"
		!{(4,-2.5)}*+[F]{ {\begin{tabular}{c|c}
					$0.08$ & $0.92$ \\
					$\md_1,\md_4$&$\md_2,\md_3,\md_5,\md_6$ \\
					$\setof{2,3,7}$&$\setof{5},\setof{6},\setof{1,4},\setof{4}$
			\end{tabular} }_{\framebox[1.7\width]{-1.92}}}="lev2_D1,D4_best"
		!{(0,-2.75)}*+[F--]{ {\begin{tabular}{c|c}
					$0.41$ & $0.59$ \\
					$\md_1,\md_2,\md_4$&$\md_3,\md_5,\md_6$ \\
					$\setof{2,3,5,7}$&$\setof{6},\setof{1,4},\setof{4}$
			\end{tabular} }_{\framebox[1.7\width]{-2.59}}}="lev3_D1,D4,D2"
		!{(0,-3.75)}*+[F--]{ {\begin{tabular}{c|c}
					$0.22$ & $0.78$ \\
					$\md_1,\md_3,\md_4$&$\md_2,\md_5,\md_6$ \\
					$\setof{2,3,6,7}$&$\setof{5},\setof{1,4},\setof{4}$
			\end{tabular} }_{\framebox[1.7\width]{-2.78}}}="lev3_D1,D4,D3"
		!{(0,-4.75)}*+[F]{ {\begin{tabular}{c|c}
					$0.12$ & $0.88$ \\
					$\md_1,\md_4,\md_6$&$\md_2,\md_3,\md_5$ \\
					$\setof{2,3,4,7}$&$\setof{5},\setof{6},\setof{1}$
			\end{tabular} }_{\framebox[1.7\width]{-2.88}}}="lev3_D1,D4,D6_best"
		!{(4,-6)}*+[F--]{ {\begin{tabular}{c|c}
					$0.53$ & $0.47$ \\
					$\md_1,\md_4,\md_5,\md_6$&$\md_2,\md_3$ \\
					$\setof{1,2,3,4,7}$&$\setof{5},\setof{6}$
			\end{tabular} }_{\framebox[1.7\width]{-3.47}}}="lev4_D1,D4,D6,D5"
		!{(4,-7)}*+[F=]{ {\begin{tabular}{c|c}
					$0.26$ & $0.74$ \\
					$\md_1,\md_3,\md_4,\md_6$&$\md_2,\md_5$ \\
						$\setof{2,3,4,6,7}$&$\setof{5},\setof{1}$
			\end{tabular} }_{\framebox[1.7\width]{-3.74}}}="lev4_D1,D4,D6,D3_best"
		!{(4,-5)}*+[F--]{ {\begin{tabular}{c|c}
					$0.45$ & $0.55$ \\
					$\md_1,\md_2,\md_4,\md_6$&$\md_3,\md_5$ \\
					$\setof{2,3,4,5,7}$&$\setof{6},\setof{1}$
			\end{tabular} }_{\framebox[1.7\width]{-3.55}}}="lev4_D1,D4,D6,D2"
		"init":@/^13em/^{0.01}_{\setof{\md_1}}"lev1_D1_best"
		"init":@/^0em/^{0.33}_{\setof{\md_2}}"lev1_D2"
		"init":@/^3em/^{0.14}_{\setof{\md_3}}"lev1_D3"
		"init":@/^5.6em/^{0.07}_{\setof{\md_4}}"lev1_D4"
		"init":@/^8em/^{0.41}_{\setof{\md_5}}"lev1_D5"
		"init":@/^10em/^{0.04}_{\setof{\md_6}}"lev1_D6"
		"lev1_D1_best":@/_0em/_{0.33}^{\setof{\md_2}}"lev2_D1,D2"
		"lev1_D1_best":@/_3em/_{0.14}^{\setof{\md_3}}"lev2_D1,D3"
		"lev1_D1_best":@/_5em/_{0.07}^{\setof{\md_4}}"lev2_D1,D4_best"
		"lev2_D1,D4_best":^{0.33}_{\setof{\md_2}}"lev3_D1,D4,D2"
		"lev2_D1,D4_best":@/^2em/^{0.14}_{\setof{\md_3}}"lev3_D1,D4,D3"
		"lev2_D1,D4_best":@/^4em/^{0.04}_{\setof{\md_6}}"lev3_D1,D4,D6_best"
		"lev3_D1,D4,D6_best":@/_0em/_{0.33}^{\setof{\md_2}}"lev4_D1,D4,D6,D2"
		"lev3_D1,D4,D6_best":@/_2em/_{0.41}^{\setof{\md_5}}"lev4_D1,D4,D6,D5"
		"lev3_D1,D4,D6_best":@/_4em/_{0.14}^{\setof{\md_3}}"lev4_D1,D4,D6,D3_best"
	}
	\caption{Search for an optimal canonical q-partition using Algorithm~\ref{algo:dx+_part} (\textsc{findQPartition}) for the example DPI (Table~\ref{tab:example_dpi_0}) w.r.t.\ $m := \mathsf{BME}$ with threshold $t_{\mathsf{BME}} := 1$.}
	\label{fig:ex:can_q-part_search_BME}
\end{figure}

\begin{algorithm}
\small
\caption[Computation of Successor Q-Partitions]{Computing successors in $\dx{}$-Partitioning}\label{algo:dx+_suc}
\begin{algorithmic}[1]
\Require a partition $\Pt_k = \langle \dx{k},\dnx{k}, \emptyset \rangle$ w.r.t.\ $\mD$
\Ensure the set of all canonical q-partitions $sucs$ that result from $\Pt_k$ by a minimal $\dx{}$-transformation
\Procedure{get$\dx{}$Sucs}{$\Pt_k, \mD_{\mathsf{used}}$}
\State $sucs \gets \emptyset$					\Comment{stores successors of $\Pt_k$ that result from minimal $\dx{}$-transformation}
\State $diagsTraits \gets \emptyset$	\Comment{stores tuples including a diagnosis and the trait of the eq.\ class w.r.t.\ $\sim_k$ it belongs to}
\State $eqClasses \gets \emptyset$		\Comment{set of sets of diagnoses, each set is eq.\ class with set-minimal trait, cf.\ Cor.~\ref{cor:nec_followers_form_equivalence_class_wrt_md^(y)}}
\If{$\dx{k} = \emptyset$}	\label{algoline:dx+_suc:is_initial_state}	\Comment{initial State, apply $S_{\mathsf{init}}$, cf.\ Cor.~\ref{cor:--di,MD-di,0--_is_canonical_q-partition_for_all_di_in_mD}}
	\For{$\md \in \dnx{k}$} \label{algoline:dx+_suc:S_init_begin}
		\State $sucs \gets sucs \cup \setof{\tuple{\setof{\md},\dnx{k}\setminus\setof{\md},\emptyset}}$ \label{algoline:dx+_suc:S_init_end}
	\EndFor
\Else				\Comment{$\Pt_k$ is canonical q-partition (due to Cor.~\ref{cor:--di,MD-di,0--_is_canonical_q-partition_for_all_di_in_mD} and Proposition~\ref{prop:minimal_transformation_for_D+_partitioning}), apply $S_{\mathsf{next}}$}
	\For{$\md_i \in \dnx{k}$}		\label{algoline:dx+_suc:S_next_begin}
		\State $t_i \gets \md_i \setminus U_{\dx{k}}$				\Comment{compute trait of the eq.\ class of $\md_i$ (cf.\ Def.~\ref{def:trait})}
		\State $diagsTraits \gets diagsTraits \cup \setof{\tuple{\md_i,t_i}}$   \Comment{enables to retrieve $t_i$ for $\md_i$ in operations below}
	\EndFor \label{algoline:dx+_suc:compute_traits_end}
	\State $diags \gets \dnx{k}$
	\State $minTraitDiags \gets \emptyset$					\Comment{to store one representative of each eq.\ class with set-minimal trait}
	\State $sucsExist \gets \false$									\Comment{will be set to $\true$ if $\Pt_k$ is found to have some canonical successor q-partition}
	\While{$diags \neq \emptyset$}
		\State $\md_i \gets \Call{getFirst}{diags}$		\Comment{$\md_i$ is first element in $diags$ and then $diags \gets diags \setminus \setof{\md_i}$}
		\State $diagAlreadyUsed \gets \false$			\label{algoline:dx+_suc:diagAlreadyUsed_gets_false}   \Comment{will be set to $\true$ if adding $\md_i$ to $\dx{k}$ yields no \emph{new} q-partition}
		\For{$\md_u \in \mD_{\mathsf{used}}$}			\label{algoline:dx+_suc:for_usedDiag_in_D_used}
				\If{$t_u = t_i$}							\label{algoline:dx+_suc:if_trait_of_diag_is_trait_of_already_used_diag}
						\State $diagAlreadyUsed \gets \true$	\label{algoline:dx+_suc:diagAlreadyUsed_gets_true}
						\State $\mathbf{break}$
				\EndIf
		\EndFor		\label{algoline:dx+_suc:end_for_usedDiag_in_D_used}
		\State $necFollowers \gets \emptyset$					\Comment{to store all necessary followers of $\md_i$, cf.\ Def.~\ref{def:necessary_follower}}
		\State $diagOK \gets \true$										\Comment{will be set to $\false$ if $\md_i$ is found to have a non-set-minimal trait}
		\For{$\md_j \in diags \cup minTraitDiags$}
			\If{$t_i \supseteq t_j$}
				\If{$t_i = t_j$}													\Comment{equal trait, $\md_i$ and $\md_j$ are in same eq.\ class}
					\State $necFollowers \gets necFollowers \cup \setof{\md_j}$				\Comment{cf.\ Cor.~\ref{cor:nec_followers_form_equivalence_class_wrt_md^(y)}}
				\Else 					\Comment{$t_i \supset t_j$}
					\State $diagOK \gets \false$					\Comment{eq.\ class of $\md_i$ has a non-set-minimal trait}
				\EndIf
			\EndIf
		\EndFor
		\State $eqCls \gets \setof{\md_i}\cup necFollowers$
		\If{$\lnot sucsExist \land eqCls = \dnx{k}$}					\Comment{test only executed in first iteration of \textbf{while}-loop}
			\State \Return $\emptyset$													\Comment{$\Pt_k$ has single successor partition with $\dnx{} = \emptyset$ which is thus no 
			q-partition}
		\EndIf
		\State $sucsExist \gets \true$												\Comment{existence of $\geq 1$ canonical successor q-partition of $\Pt_k$ guaranteed} 
		\If{$diagOK$} \Comment{$\dnx{y} := \dnx{k}\cup\setof{\md_i}$ satisfies Proposition~\ref{prop:minimal_transformation_for_D+_partitioning},(\ref{prop:minimal_transformation_for_D+_partitioning:enum:dx_y})}
			\If{$\lnot diagAlreadyUsed$}	\label{algoline:dx+_suc:if_not_diagAlreadyUsed}			\Comment{$\tuple{\dx{k}\cup eqCls, \dnx{k}\setminus eqCls, \emptyset}$ is \emph{new} q-partition}
					\State $eqClasses \gets eqClasses \cup \setof{eqCls}$      \label{algoline:dx+_suc:update_eqClasses}
			\EndIf
			\State $minTraitDiags \gets minTraitDiags \cup \setof{\md_i}$ 			\Comment{add one representative for eq.\ class}
		\EndIf
		\State $diags \gets diags \setminus eqCls$				\Comment{delete all representatives for eq.\ class}
	\EndWhile  \label{algoline:dx+_suc:S_next_end}
	\For{$E \in eqClasses$}		\label{algoline:dx+_suc:generate_successors_begin}	\Comment{construct all canonical successor q-partitions by means of eq.\ classes}
		\State $sucs \gets sucs \cup \setof{\tuple{\dx{k}\cup E, \dnx{k}\setminus E, \emptyset}}$   
	\EndFor   \label{algoline:dx+_suc:generate_successors_end}
\EndIf
\State \textbf{return} $sucs$
\EndProcedure
\end{algorithmic}
\normalsize
\end{algorithm}

\subsection{Finding Optimal Queries Given an Optimal Q-Partition}
\label{sec:FindingOptimalQueriesGivenAnOptimalQPartition}
By now, we have demonstrated in Section~\ref{sec:FindingOptimalQPartitions} how a (sufficiently) optimal q-partition w.r.t.\ any measure discussed in Section~\ref{sec:ActiveLearningInInteractiveOntologyDebugging} can be computed. What we also have at hand so far is one particular well-defined query for the identified ``best'' q-partition, namely the canonical query. If we impose no further constraints on the query than an optimal associated q-partition (which already guarantees optimal features of the query w.r.t.\ the used query quality measure), then we are already done and can simply output the canonical query and ask the user to answer it. However, in many practical scenarios, we can expect that a least quality criterion apart from an optimal q-partition is the set-minimality of queries. That means that we will usually want to minimize the number of logical formulas appearing in a query and hence the effort given for the user to walk through them and answer the query. In other words, we consider the consultation of the user a very expensive ``operation''. Another (usually additional) quality criterion might be the understandability of the query for the respective interacting user. In case there are some fault probabilities for this particular user available to the debugging system, one can try to minimize the weight of the returned query in terms of these probabilities. Intuitively, if e.g.\ (a) the sum of fault probabilities of formulas in a query is minimal, then we might rate the probability of the user having understanding difficulties regarding the set of formulas that constitute the query as minimal. Another approach could be to postulate that (b) the maximal fault probability of single formulas appearing in a query must be minimized. This would reflect a situation where each formula should be as easy to understand as possible. In this case (b), queries with a higher cardinality than in (a) might be favored in case all single fault probabilities of formulas in the query for (b) are lower than some fault probability of a formula in the best query for (a). 

The problem we tackle now is how to obtain a set-minimal query associated with a given canonical q-partition (that has been returned by the \textsc{findQPartition} function described in Section~\ref{sec:FindingOptimalQPartitions}) given this q-partition and the related canonical query i.e.\ how to implement the \textsc{selectQueryForQPartition} function in Algorithm~\ref{algo:query_comp}. In other words, the intention is to minimize the canonical query in size while preserving the associated q-partition. At first sight, we might simply choose to employ the same approach that has been exploited in \cite{Rodler2015phd, Rodler2013, Shchekotykhin2012}. This approach involves the usage of a modification of an algorithm called QuickXPlain (QX for short; originally due to \cite{junker04}\footnote{A formal proof of correctness, a detailed description and examples can be found in \cite[Sec.~4.4.1]{Rodler2015phd}.}) which implements a divide-and-conquer strategy with regular calls to a reasoning service to find one set-minimal subset $Q'$ (of generally exponentially many) of a given query $Q$ 
such that $\Pt(Q') = \Pt(Q)$. Although the number of calls to a reasoner required by QX in order to extract a set-minimal query $Q'$ from a query $Q$ is polynomial in $O(|Q'| \log_2 \frac{|Q|}{|Q'|})$, we will learn in this section that we can in fact do without any calls to a reasoner. This is due to the task constituting of a search for a set-minimal explicit-entailments query given an explicit-entailments query.

Subsequently, we give two lemmata that provide essential information about the properties satisfied by all explicit-entailments queries associated with a given q-partition $\Pt$. In particular, if we consider the lattice $(2^{\DiscAx_{\mD}},\subseteq)$ (where $2^{\DiscAx_{\mD}}$ denotes the powerset of $\DiscAx_\mD$) consisting of all subsets of $\DiscAx$ which are partially ordered by $\subseteq$, Lemma~\ref{lem:explicit-ents_query_lower_bound}, given a q-partition $\Pt$, characterizes the set of all lower bounds of the set of explicit-entailments queries $Q \subseteq \DiscAx_\mD$ associated with $\Pt$ in the lattice. All these lower bounds are themselves elements of the set of explicit-entailments queries w.r.t.\ $\Pt$.
\begin{lemma}\label{lem:explicit-ents_query_lower_bound} 
Let $\mD \subseteq \minD_{\tuple{\mo,\mb,\Tp,\Tn}_\RQ}$, $\Pt = \langle \dx{}, \dnx{}, \emptyset\rangle$ be a q-partition w.r.t.\ $\mD$ and $Q \subseteq \DiscAx_\mD$ an (explicit-entailments) query associated with $\Pt$. 
Then $Q' \subseteq Q$ is a query with associated q-partition $\Pt$ iff $Q' \cap \md_i \neq \emptyset$ for each $\md_i \in \dnx{}$.
\end{lemma}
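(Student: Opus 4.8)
The plan is to prove the biconditional in Lemma~\ref{lem:explicit-ents_query_lower_bound} by establishing both directions, making heavy use of the facts that $Q \subseteq \DiscAx_\mD \subseteq \mo$ (so $Q$ is an explicit-entailments query with empty $\dz{}$ by Proposition~\ref{prop:d0}) and that the q-partition of an explicit-entailments query is computable purely by set comparison. The central observation I would exploit throughout is the following characterization, already established in the excerpt (see the proof of Proposition~\ref{prop:d0} and the discussion in Example~\ref{ex:explicit_ents_query}): for any $X \subseteq \mo$ and any $\md_i \in \mD$, we have $\md_i \in \dx{}(X)$ iff $\mo \setminus \md_i \supseteq X$ (equivalently $X \cap \md_i = \emptyset$), and $\md_i \in \dnx{}(X)$ iff $X \cap \md_i \neq \emptyset$. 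Since $Q' \subseteq Q \subseteq \mo$, this same dichotomy applies to $Q'$, and it immediately reduces the whole statement to a bookkeeping argument about which diagnoses $Q'$ intersects.

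For the ``if'' direction, I would assume $Q' \subseteq Q$ satisfies $Q' \cap \md_i \neq \emptyset$ for each $\md_i \in \dnx{}$ and show $\Pt(Q') = \Pt = \langle \dx{}, \dnx{}, \emptyset \rangle$. First, $Q'$ is an explicit-entailments query (as $Q' \subseteq \mo$), so $\dz{}(Q') = \emptyset$ by Proposition~\ref{prop:d0}; hence it suffices to show $\dnx{}(Q') = \dnx{}$. The inclusion $\dnx{} \subseteq \dnx{}(Q')$ follows directly from the hypothesis via the intersection characterization above. For the reverse inclusion $\dnx{}(Q') \subseteq \dnx{}$: take $\md_i \in \dnx{}(Q')$, so $Q' \cap \md_i \neq \emptyset$; since $Q' \subseteq Q$, also $Q \cap \md_i \neq \emptyset$, whence $\md_i \in \dnx{}(Q) = \dnx{}$. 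This shows $\dnx{}(Q') = \dnx{}$, and since both partitions have empty $\dz{}$ over the same $\mD$, the $\dx{}$-sets coincide as well, so $\Pt(Q') = \Pt$. (I should also verify $Q' \neq \emptyset$ so that $Q'$ qualifies as a query: since $\dnx{} \neq \emptyset$ by Proposition~\ref{prop:properties_of_q-partitions},(\ref{prop:properties_of_q-partitions:enum:for_each_q-partition_dx_is_empty_and_dnx_is_empty}.), the hypothesis forces $Q'$ to intersect at least one diagnosis, hence $Q' \neq \emptyset$; and $\dx{}(Q') = \dx{} \neq \emptyset$ confirms it is a genuine query via Proposition~\ref{prop:properties_of_q-partitions},(\ref{prop:properties_of_q-partitions:enum:set_of_logical_formulas_is_query_iff...}.).)

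For the ``only if'' direction, I would assume $Q' \subseteq Q$ is a query with associated q-partition $\Pt$ and derive that $Q'$ hits every $\md_i \in \dnx{}$. Since $\Pt(Q') = \Pt$, we have $\dnx{}(Q') = \dnx{}$; so for each $\md_i \in \dnx{} = \dnx{}(Q')$, the intersection characterization (applicable because $Q' \subseteq \mo$) gives $Q' \cap \md_i \neq \emptyset$, which is exactly the claim. This direction is essentially immediate once the intersection characterization is in hand.

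I do not anticipate a serious obstacle here, as the result is close to a restatement of the hitting-set intuition already visible in Example~\ref{ex:disc_ax} and Example~\ref{ex:explicit_ents_query}. The one point requiring genuine care — and the closest thing to a ``hard part'' — is the clean invocation of the set-comparison characterization of q-partitions for \emph{subsets} $Q'$ of $Q$, rather than for $Q$ itself; this relies on $Q' \subseteq \mo$ holding together with the extensivity and monotonicity of $\mathcal{L}$ exactly as in the proof of Proposition~\ref{prop:d0}. I would make this dependency explicit so that the reader sees that the argument never actually invokes a reasoner and rests solely on $\supseteq$/$\cap$ relations among $Q'$, $\mo$, and the diagnoses, thereby foreshadowing the reasoner-free query minimization (the hitting-set reformulation) developed in the remainder of Section~\ref{sec:FindingOptimalQueriesGivenAnOptimalQPartition}.
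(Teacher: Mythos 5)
Your proposal is correct, and its necessity direction coincides with the paper's argument: if $Q' \cap \md_i = \emptyset$ for some $\md_i \in \dnx{}$, then $Q' \subseteq \DiscAx_\mD \setminus \md_i \subseteq \mo \setminus \md_i$, so $\mo_i^* \models Q'$ by extensiveness, hence $\md_i \in \dx{}(Q')$ and $\Pt(Q') \neq \Pt$. The substantive difference lies in the sufficiency direction, which you prove and the paper, as printed, does not: although the paper's proof carries both labels ``$\Leftarrow$'' and ``$\Rightarrow$'', each of its two parts assumes the negation of the hitting condition and derives that the q-partition of $Q'$ differs from $\Pt$ --- so its ``$\Leftarrow$'' part is in fact the contrapositive of ``$Q'$ is a query with q-partition $\Pt$ implies the hitting condition'', i.e.\ of ``$\Rightarrow$'' --- and the implication from the hitting condition to ``$Q'$ is a query with q-partition $\Pt$'' is never argued there. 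Your ``if'' direction supplies exactly this missing half: $\dnx{} \subseteq \dnx{}(Q')$ from the subset-minimality case of the dichotomy, $\dnx{}(Q') \subseteq \dnx{}$ because $Q' \subseteq Q$ and $\dnx{}(Q) = \dnx{}$, equality of the $\dx{}$-sets because both $\dz{}$-sets are empty, and the non-emptiness checks ($Q' \neq \emptyset$, $\dx{}(Q') = \dx{} \neq \emptyset$, $\dnx{}(Q') = \dnx{} \neq \emptyset$) feeding Proposition~\ref{prop:properties_of_q-partitions},(\ref{prop:properties_of_q-partitions:enum:set_of_logical_formulas_is_query_iff...}.)\ to certify that $Q'$ is a query at all. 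Two points would make your write-up airtight: first, since Proposition~\ref{prop:d0} is stated for queries, apply the dichotomy from its \emph{proof} (which works for any $X \subseteq \mo$) to $Q'$ rather than citing the proposition itself, as the query-ness of $Q'$ is precisely what is being established; second, the halves of your intersection characterization of the form ``$\md_i \in \dnx{}(X)$ implies $X \cap \md_i \neq \emptyset$'' additionally require $\dx{}(X) \cap \dnx{}(X) = \emptyset$ for arbitrary $X \subseteq \mo$, which Proposition~\ref{prop:properties_of_q-partitions},(\ref{prop:properties_of_q-partitions:enum:q-partition_is_partition}.)\ guarantees only for queries, but which follows in one line from idempotence: if $\mo_i^* \models X$ then $\mo_i^* \cup X \equiv \mo_i^*$, and $\mo_i^*$ violates no $x \in \RQ \cup \Tn$ since $\md_i$ is a diagnosis.
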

\begin{proof}
``$\Leftarrow$'': Proof by contraposition. Assume there is a $\md_i \in \dnx{}$ such that $Q' \cap \md_i = \emptyset$. Then $\mo_i^* = (\mo \setminus \md_i)\cup\mb\cup U_\Tp \supseteq Q'$ since $\mo \setminus \md_i \supseteq \DiscAx_\mD \setminus \md_i \supseteq Q'$. From this $\mo_i^* \models Q'$ follows by the fact that the entailment relation in $\mathcal{L}$ is extensive. As a result, we have that $\md_i \in \dx{}(Q')$. Consequently, as $\md_i \in \dnx{}$, the q-partition of $Q'$ must differ from the q-partition $\Pt$ of $Q$.

``$\Rightarrow$'': Proof by contradiction. Assume that $Q' \subseteq Q$ is a query with q-partition $\Pt$ and $Q' \cap \md_i = \emptyset$ for some $\md_i \in \dnx{}$. Then $(\mo \setminus \md_i) \cup Q' = (\mo \setminus \md_i)$ since $Q' \subseteq Q \subseteq \DiscAx_\mD \subseteq \mo$ and $Q'\cap\md_i = \emptyset$. Therefore $\mo_i^* \cup Q' = \mo_i^*$ which implies that $Q' \subseteq \mo_i^*$ and thus $\mo_i^* \models Q'$ due the extensive entailment relation in $\mathcal{L}$. Consequently, $\md_i \in \dx{}(Q')$ must hold. Since $\md_i \in \dnx{}$, we can derive that the q-partition of $Q'$ is not equal to the q-partition $\Pt$ of $Q$, a contradiction.
\end{proof}
Next, Lemma~\ref{lem:explicit-ents_query_upper_bound} defines the set of all upper bounds of the set of explicit-entailments queries $Q \subseteq \DiscAx_\mD$ associated with a given q-partition $\Pt$. In fact, it will turn out that this set is a singleton containing exactly the canonical query associated with $\Pt$. In other words, the canonical query is the unique least superset w.r.t.\ $\subseteq$ (and hence a supremum) of all explicit-entailments queries associated with $\Pt$. 
\begin{lemma}\label{lem:explicit-ents_query_upper_bound}
Let $\mD \subseteq \minD_{\tuple{\mo,\mb,\Tp,\Tn}_\RQ}$, $\Pt = \langle \dx{}, \dnx{}, \emptyset\rangle$ be a q-partition w.r.t.\ $\mD$ and $Q \subseteq \DiscAx_\mD$ an (explicit-entailments) query associated with $\Pt$. 
Then $Q'$ with $\DiscAx_\mD \supseteq Q' \supseteq Q$ is a query with associated q-partition $\Pt$ iff $Q' \subseteq U_\mD \setminus U_{\dx{}}$.
\end{lemma}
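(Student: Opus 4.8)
The plan is to prove both directions of the biconditional, using the characterization of explicit-entailments queries via set operations that we have established in this section, together with the structural results about canonical queries from Section~\ref{sec:can_query}. The key observation to set up first is that, by Proposition~\ref{prop:E_exp}, we have $E_{\mathsf{exp}}(\dx{}) = \mo \setminus U_{\dx{}}$, and by Definition~\ref{def:canonical_query} the canonical query $Q_{\mathsf{can}}(\dx{})$ equals $E_{\mathsf{exp}}(\dx{}) \cap \DiscAx_\mD = (\mo \setminus U_{\dx{}}) \cap \DiscAx_\mD$. Since $\DiscAx_\mD = U_\mD \setminus I_\mD \subseteq U_\mD \subseteq \mo$, a short set-algebra computation shows $(\mo \setminus U_{\dx{}}) \cap \DiscAx_\mD = (U_\mD \setminus U_{\dx{}}) \setminus I_\mD$; and because $I_\mD \subseteq U_{\dx{}}$ (every axiom in all diagnoses of $\mD$ is in particular in all diagnoses of $\dx{} \subseteq \mD$), this simplifies to $U_\mD \setminus U_{\dx{}}$. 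Hence the set $U_\mD \setminus U_{\dx{}}$ appearing in the statement is precisely the canonical query $Q_{\mathsf{can}}(\dx{})$, which is the unique canonical query associated with $\Pt$ (Proposition~\ref{prop:canonical_query_unique_for_seed}). This identification turns the lemma into the assertion that $Q'$ (with $Q \subseteq Q' \subseteq \DiscAx_\mD$) is a query with q-partition $\Pt$ iff $Q' \subseteq Q_{\mathsf{can}}(\dx{})$.

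For the ``$\Leftarrow$'' direction I would assume $Q' \subseteq U_\mD \setminus U_{\dx{}} = Q_{\mathsf{can}}(\dx{})$ and $Q' \supseteq Q$. The goal is to show $\Pt(Q') = \Pt$. First I would argue $\dx{} \subseteq \dx{}(Q')$: for each $\md_i \in \dx{}$ we have $\mo_i^* \supseteq \mo \setminus U_{\dx{}} \supseteq Q'$, so by extensiveness of the entailment relation in $\mathcal{L}$ (condition~\ref{logic:cond3}) and monotonicity, $\mo_i^* \models Q'$, whence $\md_i \in \dx{}(Q')$. Conversely, to get $\dnx{} \subseteq \dnx{}(Q')$, I would use that $Q \subseteq Q'$ and that $Q$ already has q-partition $\Pt$: since $Q$ is a query associated with $\Pt$, Lemma~\ref{lem:explicit-ents_query_lower_bound} (applied to the trivial case $Q \subseteq Q$) or directly the definition of $\dnx{}(Q)$ gives that for each $\md_j \in \dnx{}$, $\mo_j^* \cup Q$ violates some $x \in \RQ \cup \Tn$; by monotonicity of $\mathcal{L}$ and $Q \subseteq Q'$, $\mo_j^* \cup Q'$ violates this $x$ as well, so $\md_j \in \dnx{}(Q')$. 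Since $\dz{} = \emptyset$ and both $\dx{}(Q') \supseteq \dx{}$ and $\dnx{}(Q') \supseteq \dnx{}$ with $\dx{} \cup \dnx{} = \mD$, the two inclusions must in fact be equalities, yielding $\Pt(Q') = \Pt$. (Here it is important that $Q' \subseteq \mo$, so $Q'$ is an explicit-entailments query and Proposition~\ref{prop:d0} guarantees $\dz{}(Q') = \emptyset$.)

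For the ``$\Rightarrow$'' direction I would assume $Q'$ with $\DiscAx_\mD \supseteq Q' \supseteq Q$ is a query with associated q-partition $\Pt$, and argue by contraposition or contradiction that $Q' \subseteq U_\mD \setminus U_{\dx{}}$. Suppose $Q' \not\subseteq U_\mD \setminus U_{\dx{}} = Q_{\mathsf{can}}(\dx{})$. Since $Q' \subseteq \DiscAx_\mD \subseteq U_\mD$, the only way $Q'$ can fail to be a subset of $U_\mD \setminus U_{\dx{}}$ is that some axiom $\tax \in Q'$ lies in $U_{\dx{}}$, i.e.\ $\tax \in \md_i$ for some $\md_i \in \dx{}$. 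Then $\tax \notin \mo \setminus \md_i$, so $Q' \not\subseteq \mo \setminus \md_i$, and I would use Proposition~\ref{prop:d0}/the reasoning in its proof together with the subset-minimality of $\md_i \in \mD$: adding $\tax$ (an element of $\mo$, hence of a conflict-relevant part) to $\mo_i^*$ forces $\mo_i^* \cup Q'$ to violate some $x \in \RQ \cup \Tn$, so $\md_i \in \dnx{}(Q')$, contradicting $\md_i \in \dx{} = \dx{}(Q')$. This contradiction establishes $Q' \subseteq U_\mD \setminus U_{\dx{}}$.

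The main obstacle I anticipate is the careful bookkeeping in the ``$\Rightarrow$'' direction, specifically making rigorous the claim that an axiom $\tax \in Q' \cap U_{\dx{}}$ with $\tax \in \md_i$ forces $\mo_i^* \cup Q'$ to violate a requirement or negative test case. The clean way to handle this is to mirror the case~(b) argument in the proof of Proposition~\ref{prop:d0}: because $\md_i$ is a \emph{minimal} diagnosis, $(\mo \setminus \md_i) \cup \mb \cup U_\Tp$ is valid but $(\mo \setminus (\md_i \setminus \{\tax\})) \cup \mb \cup U_\Tp$ is not, and since $\tax \in Q' \subseteq \mo$ with $\tax \in \md_i$ we have $\mo_i^* \cup Q' \supseteq \mo_i^* \cup \{\tax\} = (\mo \setminus (\md_i \setminus \{\tax\})) \cup \mb \cup U_\Tp$, which violates some $x \in \RQ \cup \Tn$; monotonicity then propagates the violation to $\mo_i^* \cup Q'$. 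The only subtlety to watch is that $\tax$ might be a common element of several diagnoses, but choosing any single $\md_i \in \dx{}$ with $\tax \in \md_i$ suffices, and $I_\mD \subseteq U_{\dx{}}$ ensures no pathology arises from $\tax$ lying in $I_\mD$.
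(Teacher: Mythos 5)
Your proof is correct and takes essentially the same route as the paper's: the same contraposition argument for the ``only if'' direction (a witness axiom $\tax \in Q' \cap U_{\dx{}}$ belonging to some $\md_i \in \dx{}$, whose presence forces $\mo_i^* \cup Q'$ to violate some $x \in \RQ \cup \Tn$ by the subset-minimality of $\md_i$), and the same ``if'' direction (extensiveness/monotonicity giving $\dx{} \subseteq \dx{}(Q')$ and $\dnx{} \subseteq \dnx{}(Q')$, then disjointness of the q-partition sets forcing equality). Your opening identification of $U_\mD \setminus U_{\dx{}}$ with the canonical query $Q_{\mathsf{can}}(\dx{})$ is a reframing the paper itself only makes later, in the proof of Proposition~\ref{prop:explicit-ents_query_lower+upper_bound}, and your explicit expansion of the paper's terse minimality step (via $\mo_i^* \cup \setof{\tax} = (\mo \setminus (\md_i \setminus \setof{\tax})) \cup \mb \cup U_\Tp$) is a welcome but inessential addition.
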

\begin{proof}
``$\Rightarrow$'': Proof by contraposition. If $Q' \not\subseteq U_\mD \setminus U_{\dx{}}$ then there is an axiom $\tax \in Q'$ such that $\tax \notin U_\mD \setminus U_{\dx{}}$. This implies that $\tax \in U_{\dx{}}$ because $\tax \in Q' \subseteq \DiscAx_\mD = U_\mD \setminus I_\mD$ which means in particular that $\tax \in U_\mD$. Consequently, $\tax \in \md_j$ for some diagnosis $\md_j \in \dx{}$ must apply which is why $\mo_j^* \cup Q'$ must violate some $x \in \RQ\cup \Tn$ due to the subset-minimality of $\md_j$. As a result, $\md_j$ must belong to $\dnx{}(Q')$ and since $\md_j \not\in \dnx{}$, we obtain that the q-partition $\Pt(Q')$ of $Q'$ is different from $\Pt$.

``$\Leftarrow$'': Direct proof. If $Q' \supseteq Q$ and $Q' \subseteq U_\mD \setminus U_{\dx{}}$, then for each $\md_i \in \dx{}$ it holds that $\mo_i^* \models Q'$ by the fact that the entailment relation in $\mathcal{L}$ is extensive and as $Q' \subseteq U_\mD \setminus U_{\dx{}} \subseteq \mo \setminus \md_i \subseteq \mo_i^*$. Hence, each $\md_i \in \dx{}$ is an element of $\dx{}(Q')$. 

For each $\md_j \in \dnx{}$, $\mo_j^* \cup Q'$ must violate some $x\in\RQ\cup\Tn$ by the monotonicity of the entailment relation in $\mathcal{L}$ and since $\mo_j^* \cup Q$ violates some $x\in\RQ\cup\Tn$ as well as $Q' \supseteq Q$. Thus, each $\md_j \in \dnx{}$ is an element of $\dnx{}(Q')$. 

So far, we have shown that $\dx{} \subseteq \dx{}(Q')$ as well as $\dnx{} \subseteq \dnx{}(Q')$. To complete the proof, assume that that some of these set-inclusions is proper, e.g.\ $\dx{} \subset \dx{}(Q')$. In this case, by $\dz{} = \emptyset$, we can deduce that there is some $\md \in \dnx{}$ such that $\md \in \dx{}(Q')$. This is clearly a contradiction to the fact that $\dnx{} \subseteq \dnx{}(Q')$ and the disjointness of the sets $\dx{}(Q')$ and $\dnx{}(Q')$ which must hold by Proposition~\ref{prop:properties_of_q-partitions},(\ref{prop:properties_of_q-partitions:enum:q-partition_is_partition}.). The other case $\dnx{} \subset \dnx{}(Q')$ can be led to a contradiction in an analogue way. Hence, we conclude that $\Pt(Q') = \Pt$.
\end{proof}
In order to construct a minimize an explicit-entailments query for a fixed q-partition $\tuple{\dx{},\dnx{},\emptyset}$, one needs to find a minimal hitting set of all diagnoses in $\dnx{}$, as the following proposition states. Let in the following $\mathsf{MHS}(X)$ denote the set of all minimal hitting sets of the collection of sets $X$. 
\begin{proposition}\label{prop:explicit-ents_query_lower+upper_bound}
Let $\mD \subseteq \minD_{\tuple{\mo,\mb,\Tp,\Tn}_\RQ}$ and $\Pt = \langle \dx{}, \dnx{}, \emptyset\rangle$ be a q-partition w.r.t.\ $\mD$. Then $Q \subseteq \DiscAx_\mD$ is a query with q-partition $\Pt$ iff there is some $H \in \mathsf{MHS}(\dnx{})$ such that $H \subseteq Q \subseteq Q_{\mathsf{can}}(\dx{})$. 
\end{proposition}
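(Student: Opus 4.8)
The statement characterizes exactly which subsets of $\DiscAx_\mD$ are queries with a given q-partition $\Pt = \langle \dx{}, \dnx{}, \emptyset\rangle$, namely those sandwiched between some minimal hitting set of $\dnx{}$ and the canonical query $Q_{\mathsf{can}}(\dx{})$. The natural plan is to combine the two preceding lemmata, which already pin down the lower and upper bounds of the relevant interval in the lattice $(2^{\DiscAx_\mD},\subseteq)$. First I would recall from Definition~\ref{def:canonical_query} and Proposition~\ref{prop:E_exp},(1.) that $Q_{\mathsf{can}}(\dx{}) = E_{\mathsf{exp}}(\dx{}) \cap \DiscAx_\mD = (\mo \setminus U_{\dx{}}) \cap \DiscAx_\mD$. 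Since $\DiscAx_\mD = U_\mD \setminus I_\mD$ and $I_\mD \subseteq U_{\dx{}}$, this simplifies to $Q_{\mathsf{can}}(\dx{}) = (U_\mD \setminus U_{\dx{}}) \setminus I_\mD = U_\mD \setminus U_{\dx{}}$ (the last equality using $I_\mD \subseteq U_{\dx{}}$). Hence $Q_{\mathsf{can}}(\dx{})$ is exactly the upper-bound set $U_\mD \setminus U_{\dx{}}$ appearing in Lemma~\ref{lem:explicit-ents_query_upper_bound}.

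For the ``$\Leftarrow$'' direction I would start from some $Q$ with $H \subseteq Q \subseteq Q_{\mathsf{can}}(\dx{})$ for an $H \in \mathsf{MHS}(\dnx{})$. The strategy is to exhibit a ``reference'' explicit-entailments query with q-partition $\Pt$ to which the two lemmata can be applied. A clean choice is $Q_{\mathsf{can}}(\dx{})$ itself: it is a query with associated q-partition $\Pt$ by Proposition~\ref{prop:canonical_query_is_query} and Definition~\ref{def:canonical_q-partition} (the q-partition $\Pt$ being canonical). Then I would argue that $Q$ is a query with q-partition $\Pt$ by verifying the two bounding conditions. The upper-bound condition $Q \subseteq Q_{\mathsf{can}}(\dx{}) = U_\mD \setminus U_{\dx{}}$ is immediate by hypothesis, so Lemma~\ref{lem:explicit-ents_query_upper_bound} (read downward from $Q_{\mathsf{can}}(\dx{})$, or equivalently applied with the canonical query as the reference) keeps $\dx{}$ fixed. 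For the lower-bound condition, since $H$ is a hitting set of $\dnx{}$ and $H \subseteq Q$, we get $Q \cap \md_i \supseteq H \cap \md_i \neq \emptyset$ for each $\md_i \in \dnx{}$; Lemma~\ref{lem:explicit-ents_query_lower_bound} then guarantees that reducing the canonical query down to $Q$ preserves $\dnx{}$. Combining, $\Pt(Q) = \Pt$.

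For the ``$\Rightarrow$'' direction, suppose $Q \subseteq \DiscAx_\mD$ is a query with q-partition $\Pt$. I would first obtain $Q \subseteq Q_{\mathsf{can}}(\dx{})$: by Lemma~\ref{lem:explicit-ents_query_upper_bound} applied to $Q$ itself, $Q \subseteq U_\mD \setminus U_{\dx{}} = Q_{\mathsf{can}}(\dx{})$ (here one uses that $Q$ is a query with $\dx{}$-set $\dx{}$, so its q-partition cannot change, forcing $Q$ into the upper-bound set). Next, Lemma~\ref{lem:explicit-ents_query_lower_bound}, applied to $Q$ as both the ambient query and its own reduction, shows $Q \cap \md_i \neq \emptyset$ for every $\md_i \in \dnx{}$; that is, $Q$ is itself a hitting set of $\dnx{}$. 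Every hitting set contains a minimal one, so there exists $H \in \mathsf{MHS}(\dnx{})$ with $H \subseteq Q$, yielding $H \subseteq Q \subseteq Q_{\mathsf{can}}(\dx{})$ as required.

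\textbf{Main obstacle.} The routine calculations are all contained in the two lemmata, so the only genuine subtlety is the bookkeeping that lets me apply those lemmata in both directions. Lemma~\ref{lem:explicit-ents_query_upper_bound} and Lemma~\ref{lem:explicit-ents_query_lower_bound} are each stated relative to a fixed reference query $Q$ ``associated with $\Pt$'', and I must make sure I always invoke them with a legitimate such reference (the canonical query, whose membership and q-partition are guaranteed by Proposition~\ref{prop:canonical_query_is_query} and the canonicity of $\Pt$). The one point deserving care is confirming that the reductions and extensions I perform stay inside $\DiscAx_\mD$ and that $Q_{\mathsf{can}}(\dx{})$ genuinely equals the upper bound $U_\mD \setminus U_{\dx{}}$; once the simplification $Q_{\mathsf{can}}(\dx{}) = U_\mD \setminus U_{\dx{}}$ is established via Proposition~\ref{prop:E_exp} and $I_\mD \subseteq U_{\dx{}}$, the interval characterization follows by directly chaining the two lemmata, and no independent hitting-set argument is needed beyond ``every hitting set contains a minimal one''.
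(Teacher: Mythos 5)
Your overall route is the paper's: everything is reduced to Lemma~\ref{lem:explicit-ents_query_lower_bound} and Lemma~\ref{lem:explicit-ents_query_upper_bound} plus the computation $Q_{\mathsf{can}}(\dx{}) = U_\mD \setminus U_{\dx{}}$, and your ``$\Rightarrow$'' direction coincides with the paper's proof (which, in fact, spells out essentially only that direction). The problem is in your ``$\Leftarrow$'' direction. There you take $Q_{\mathsf{can}}(\dx{})$ as the reference query the lemmata require, and justify that its associated q-partition is $\Pt$ by ``Proposition~\ref{prop:canonical_query_is_query} and Definition~\ref{def:canonical_q-partition} (the q-partition $\Pt$ being canonical)''. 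But canonicity of $\Pt$ is not a hypothesis: the proposition assumes only that $\Pt$ is a q-partition with empty $\dz{}$-set, and whether every such q-partition is canonical is exactly the question the paper leaves \emph{open} in Section~\ref{sec:SearchCompletenessUsingOnlyCanonicalQPartitions}. Proposition~\ref{prop:canonical_query_is_query} only tells you that $Q_{\mathsf{can}}(\dx{})$ is \emph{some} query; Definition~\ref{def:canonical_q-partition} cannot be invoked without already knowing $\Pt(Q_{\mathsf{can}}(\dx{})) = \Pt$, which is precisely what you need. As written, the step is circular.

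The gap is fixable, because canonicity actually follows from the hypothesis of the ``$\Leftarrow$'' direction, but this requires an argument you never give. From $H \in \mathsf{MHS}(\dnx{})$ with $H \subseteq Q \subseteq Q_{\mathsf{can}}(\dx{}) = U_\mD \setminus U_{\dx{}}$ you obtain: (i)~$H \neq \emptyset$, since $H$ must hit each of the (nonempty) diagnoses in $\dnx{} \neq \emptyset$, and therefore $U_{\dx{}} \subset U_\mD$; and (ii)~every $\md_j \in \dnx{}$ contains some element of $H$, which lies outside $U_{\dx{}}$, so there is no $\md_j \in \dnx{}$ with $\md_j \subseteq U_{\dx{}}$. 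Since additionally $\dx{} \neq \emptyset$ and $\dnx{} \neq \emptyset$ by Proposition~\ref{prop:properties_of_q-partitions},(\ref{prop:properties_of_q-partitions:enum:for_each_q-partition_dx_is_empty_and_dnx_is_empty}.), Proposition~\ref{prop:suff+nec_criteria_when_partition_is_q-partition} yields that $\Pt$ \emph{is} a canonical q-partition, i.e.\ $\Pt(Q_{\mathsf{can}}(\dx{})) = \Pt$. Only now is $Q_{\mathsf{can}}(\dx{})$ a legitimate reference, and Lemma~\ref{lem:explicit-ents_query_lower_bound} (with $Q' := Q$, which hits every $\md_i \in \dnx{}$ because $H \subseteq Q$) gives $\Pt(Q) = \Pt$. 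With this insertion your proof is complete; note that this derivation is also what makes the proposition hold vacuously for a hypothetical non-canonical $\Pt$, where both sides of the equivalence fail.
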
 
\begin{proof}
The left set inclusion follows directly from Lemma~\ref{lem:explicit-ents_query_lower_bound}. The right set inclusion can be derived as follows. First, by Lemma~\ref{lem:explicit-ents_query_upper_bound}, $Q \subseteq U_\mD \setminus U_{\dx{}}$ holds. Second, we have that $Q_{\mathsf{can}}(\dx{}) := \DiscAx_\mD \cap E_{\mathsf{exp}}(\dx{}) = (U_\mD \setminus I_\mD) \cap (\mo \setminus U_{\dx{}}) = (U_\mD \cap \mo) \setminus (I_{\mD} \cup U_{\dx{}}) = U_\mD \setminus U_{\dx{}}$ since $U_{\mD} \subseteq \mo$ ($U_{\mD}$ is a union of diagnoses and diagnoses are subsets of $\mo$, cf.\ Definition~\ref{def:diagnosis}) and $I_{\mD} \subseteq U_{\dx{}}$ ($I_{\mD}$ is the intersection of all diagnoses in $\mD$, hence a subset of all diagnoses in $\mD$ and in particular of the ones in $\dx{} \subset \mD$, hence a subset of the union $U_{\dx{}}$ of diagnoses in $\dx{}$).
\end{proof}
Notice that Proposition~\ref{prop:explicit-ents_query_lower+upper_bound} in particular implies that the canonical query $Q_{\mathsf{can}}(\dx{})$ is the explicit-entailments query of maximal size for a given q-partition $\Pt = \langle \dx{}, \dnx{}, \emptyset\rangle$.

By means of Proposition~\ref{prop:explicit-ents_query_lower+upper_bound}, the search for set-minimal explicit-entailment queries given a fixed (canonical) q-partition $\Pt = \tuple{\dx{},\dnx{},\emptyset}$ is easily accomplished by building a hitting set tree of the collection of sets $\dnx{}$ in breadth-first manner (cf.\ \cite[Sec.~4.5.1]{Rodler2015phd} and originally \cite{Reiter87}). Let the complete hitting set tree be denoted by $T$. Then the set of all set-minimal queries with associated q-partition $\Pt$ is given by 
\begin{align*}
\setof{H(\mathsf{n})\,|\,\mathsf{n} \mbox{ is a node of $T$ labeled by $valid$ ($\checkmark$)}}
\end{align*}
where $H(\mathsf{n})$ denotes the set of edge labels on the path from the root node to the node $\mathsf{n}$ in $T$. We want to make the reader explicitly aware of the fact that the main source of complexity when constructing a hitting set tree is usually the computation of the node labels which is normally very expensive, e.g.\ needing calls to a reasoning service. In our situation, however, all the sets used to label the nodes of the tree are already \emph{explicitly given}. Hence the construction of the hitting set tree will usually be very efficient in the light of the fact that the number of diagnoses in $\dnx{}$ are bounded above by $|\mD|$ which is a predefined fixed parameter (which is normally relatively small, e.g.\ $\approx 10$, cf.\ \cite{Shchekotykhin2012,Rodler2013}). Apart from that, we are usually satisfied with a single set-minimal query which implies that we could stop the tree construction immediately after having found the first node labeled by $valid$.

Despite of this search being already very efficient, it can in fact be even further accelerated. The key observation to this end is that each explicit-entailments query w.r.t.\ $\Pt_k = \tuple{\dx{k},\dnx{k},\emptyset}$, by Lemma~\ref{lem:explicit-ents_query_upper_bound} and Proposition~\ref{prop:explicit-ents_query_lower+upper_bound}, must not include any axioms in $U_{\dx{}}$. This brings us back to Eq.~\eqref{eq:md_i^(k)} which characterizes the trait $\md_i^{(k)} := \md_i \setminus U_{\dx{k}}$ of a diagnosis $\md_i \in \dnx{k}$ (cf.\ Definition~\ref{def:trait}) given $\Pt_k$. Let in the following $\mathsf{Tr}(\Pt_k)$ denote the set of all traits of diagnoses in $\dnx{k}$ w.r.t.\ the q-partition $\Pt_k$.
Actually, we can state the following which is a straightforward consequence of Proposition~\ref{prop:explicit-ents_query_lower+upper_bound}:
\begin{corollary}\label{cor:min_exp-ents_queries_are_minHS_of_all_traits_of_diags_in_Dnx}
Let $\mD \subseteq \minD_{\tuple{\mo,\mb,\Tp,\Tn}_\RQ}$ and $\Pt_k = \langle \dx{k}, \dnx{k}, \emptyset\rangle$ be a q-partition w.r.t.\ $\mD$. Then $Q \subseteq \DiscAx_\mD$ is a set-minimal query with q-partition $\Pt_k$ iff $Q = H$ for some $H \in \mathsf{MHS}(\mathsf{Tr}(\Pt_k))$.
\end{corollary}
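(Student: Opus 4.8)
The plan is to derive Corollary~\ref{cor:min_exp-ents_queries_are_minHS_of_all_traits_of_diags_in_Dnx} directly from Proposition~\ref{prop:explicit-ents_query_lower+upper_bound} by observing that minimal hitting sets of $\dnx{k}$ that are confined to $\DiscAx_\mD$ coincide exactly with minimal hitting sets of the collection of traits $\mathsf{Tr}(\Pt_k) = \setof{\md_i^{(k)}\,|\,\md_i \in \dnx{k}}$. The central idea is that for a query $Q \subseteq \DiscAx_\mD$, hitting a diagnosis $\md_i \in \dnx{k}$ is equivalent to hitting its trait $\md_i^{(k)} := \md_i \setminus U_{\dx{k}}$, since the part of $\md_i$ lying inside $U_{\dx{k}}$ is inaccessible to $Q$ anyway.

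First I would fix the set-up: let $\Pt_k = \tuple{\dx{k},\dnx{k},\emptyset}$ be a q-partition and let $Q \subseteq \DiscAx_\mD$. The forward direction of Proposition~\ref{prop:explicit-ents_query_lower+upper_bound} already tells us that $Q$ is a set-minimal query with q-partition $\Pt_k$ iff $Q$ is a minimal hitting set of $\dnx{k}$ (via Lemma~\ref{lem:explicit-ents_query_lower_bound}) subject to $Q \subseteq Q_{\mathsf{can}}(\dx{k}) = U_\mD \setminus U_{\dx{k}}$. So the real work is to show that, for sets $Q \subseteq U_\mD \setminus U_{\dx{k}} = \DiscAx_\mD \cap (\mo \setminus U_{\dx{k}})$, the condition ``$Q \cap \md_i \neq \emptyset$ for all $\md_i \in \dnx{k}$'' is equivalent to ``$Q \cap \md_i^{(k)} \neq \emptyset$ for all $\md_i \in \dnx{k}$.'' For the ``$\Leftarrow$'' direction this is immediate from $\md_i^{(k)} \subseteq \md_i$. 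For the ``$\Rightarrow$'' direction, suppose $\tax \in Q \cap \md_i$; since $Q \subseteq U_\mD \setminus U_{\dx{k}}$, we have $\tax \notin U_{\dx{k}}$, hence $\tax \in \md_i \setminus U_{\dx{k}} = \md_i^{(k)}$, so $Q \cap \md_i^{(k)} \neq \emptyset$. This establishes that $Q$ hits $\dnx{k}$ iff $Q$ hits $\mathsf{Tr}(\Pt_k)$, whenever $Q$ lives inside $\DiscAx_\mD$.

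Next I would promote this from ordinary hitting sets to \emph{minimal} hitting sets. The subtlety is that minimality must be checked within the same ambient domain. Here I can invoke Proposition~\ref{prop:E_exp} and the computation in the proof of Proposition~\ref{prop:explicit-ents_query_lower+upper_bound} showing $Q_{\mathsf{can}}(\dx{k}) = U_\mD \setminus U_{\dx{k}}$, so that $U_{\mathsf{Tr}(\Pt_k)} = \bigcup_{\md_i\in\dnx{k}} \md_i^{(k)} \subseteq U_\mD \setminus U_{\dx{k}} \subseteq \DiscAx_\mD$; thus every hitting set of the traits is automatically a subset of $\DiscAx_\mD$, and the side constraint $Q \subseteq Q_{\mathsf{can}}(\dx{k})$ from Proposition~\ref{prop:explicit-ents_query_lower+upper_bound} is subsumed. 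Since the hitting-condition over $\dnx{k}$ and over $\mathsf{Tr}(\Pt_k)$ agree on all subsets of $\DiscAx_\mD$, and the elements usable in a hitting set are in both cases exactly those in $U_\mD \setminus U_{\dx{k}}$, the two collections have identical families of minimal hitting sets. Hence $Q \in \mathsf{MHS}(\dnx{k})$ with $Q \subseteq Q_{\mathsf{can}}(\dx{k})$ iff $Q \in \mathsf{MHS}(\mathsf{Tr}(\Pt_k))$, and combining with Proposition~\ref{prop:explicit-ents_query_lower+upper_bound} yields the claim.

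The main obstacle I anticipate is the minimality bookkeeping rather than the hitting condition itself: one must be careful that a hitting set of $\dnx{k}$ could in principle use an axiom in $U_{\dx{k}}$, which would \emph{not} be a legal element of an explicit-entailments query, so the naive claim ``$\mathsf{MHS}(\dnx{k}) = \mathsf{MHS}(\mathsf{Tr}(\Pt_k))$'' is false without restricting to $\DiscAx_\mD$. The careful formulation must therefore lean on the upper-bound constraint $Q \subseteq Q_{\mathsf{can}}(\dx{k})$ supplied by Proposition~\ref{prop:explicit-ents_query_lower+upper_bound} (equivalently Lemma~\ref{lem:explicit-ents_query_upper_bound}), and on the fact that passing from $\md_i$ to $\md_i^{(k)}$ removes precisely the forbidden axioms in $U_{\dx{k}}$. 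Once this alignment of domains is made explicit, the equivalence of minimal hitting sets is a purely set-theoretic restatement and the corollary follows without any reasoner calls.
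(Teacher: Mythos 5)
Your proof is correct and takes essentially the same route the paper intends: the paper states this corollary without an explicit proof, calling it a straightforward consequence of Proposition~\ref{prop:explicit-ents_query_lower+upper_bound}, and your argument fills in precisely those implicit details — the equivalence of hitting $\dnx{k}$ and hitting $\mathsf{Tr}(\Pt_k)$ for sets contained in $U_\mD \setminus U_{\dx{k}} = Q_{\mathsf{can}}(\dx{k})$, together with the observation that minimal hitting sets of the traits automatically lie in that domain (indeed $U_{\mathsf{Tr}(\Pt_k)} = U_\mD \setminus U_{\dx{k}}$ since $\dz{k}=\emptyset$), so the upper-bound constraint is subsumed. Your explicit warning that $\mathsf{MHS}(\dnx{k})$ and $\mathsf{MHS}(\mathsf{Tr}(\Pt_k))$ do \emph{not} coincide without the domain restriction is exactly the right care to take and is the only non-trivial point in the derivation.
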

Contrary to minimal diagnoses, traits of minimal diagnoses might be equal to or proper subsets of one another (see Example~\ref{}). By \cite[Prop.~12.6]{Rodler2015phd} which states 
\begin{quote} If $F$ is a collection of sets, and if $S \in F$ and $S' \in F$ such that $S \subset S'$, then $F_{sub} := F \setminus \setof{S'}$ has the same minimal hitting sets as $F$. 
\end{quote}
we can replace $\mathsf{Tr}(\Pt_k)$ by $\mathsf{Tr}_{setmin}(\Pt_k)$ in Corollary~\ref{cor:min_exp-ents_queries_are_minHS_of_all_traits_of_diags_in_Dnx} where $\mathsf{Tr}_{setmin}(\Pt_k)$ terms the set of all \emph{set-minimal} traits of diagnoses in $\dnx{k}$ w.r.t.\ $\Pt_k$, i.e.\ all traits $t$ in $\mathsf{Tr}(\Pt_k)$ for which there is no trait $t'$ in $\mathsf{Tr}(\Pt_k)$ such that $t' \subset t$. 

In case some axiom preference criteria or axiom fault probabilities 
 $p(\tax)$ for axioms $\tax \in \mo$ are available to the debugging system and should be taken into account as mentioned at the beginning of this section, then the hitting set tree could alternatively be constructed e.g.\ by using a uniform-cost search strategy preferring nodes $\mathsf{n}$ for labeling with 
\begin{itemize}
	\item (\emph{MinSum}) a minimal sum $\sum_{\tax\in H(\mathsf{n})} p(\tax)$ (to realize (a) in the first paragraph of this section) or 
	\item (\emph{MinMax}) a minimal $\max_{\tax\in H(\mathsf{n})} p(\tax)$ (to realize (b) in the first paragraph of this section)
\end{itemize}
instead of breadth-first. In this case, the search would detect subset-minimal queries compliant with these criteria first.

Before we explicate the algorithm resulting from the ideas given in this section, consider the following remark.
\begin{remark}\label{rem:traits_enable_systematic_search_for_queries_with_specified_properties}
We want to underline that the insights gained in this section enable us to \emph{construct} a set-minimal query w.r.t.\ a given q-partition \emph{systematically}. That is, we are capable of detecting minimized queries with particular properties (first). If required, for instance, we can in this way guarantee that we present only queries w.r.t.\ a given q-partition $\Pt$ to the user which have least cardinality among all queries w.r.t.\ $\Pt$. When using QX to minimize a query in a manner its q-partition is preserved (cf.\ \cite{Rodler2015phd, Rodler2013, Shchekotykhin2012}), one has little influence on the properties of a query that is returned. One more or less gets \emph{any} set-minimal query w.r.t.\ the given q-partition or, put another way, one cannot prove in general that properties of interest such as minimum cardinality or minimal sum of probabilities as mentioned above hold for the output of QX. Taking into account the fact that set-minimal subsets with a (monotonic\footnote{A property is \emph{monotonic} iff the binary function that returns $1$ if the property holds for the input set and $0$ otherwise is a monotonic function (cf.\ \cite[p.~46]{Rodler2015phd}).}) property (e.g.: the same q-partition) of a set can have a non-negligible range of cardinalities, the ability to locate minimum-cardinality queries might bring significant savings in terms of the effort for a user to answer them.\qed
\end{remark}

\subsubsection*{The Algorithm}
\label{sec:TheAlgorithm}
Algorithm~\ref{algo:select_query_for_q-partition} presents the pseudocode for the selection of one, some or all set-minimal queries w.r.t.\ a given q-partition based on the discussion in this section. A very similar hitting set algorithm -- intended for the computation of minimal diagnoses from minimal conflict sets -- appeared in \cite[Algorithm~2 on p.~66]{Rodler2015phd}. We solely made minor modifications to adapt it to our setting for query computation. The principle, however, remains exactly the same, and so do the optimality, soundness and correctness proofs given in \cite[Sec.~4.5.2 and Sec.~4.6.3]{Rodler2015phd}. In what follows, we provide a walkthrough of Algorithm~\ref{algo:select_query_for_q-partition} strongly based on the one given in \cite[Sec.~4.5.1]{Rodler2015phd}.

\paragraph{Notation.} 
A \emph{node} $\mathsf{n}$ in Algorithm~\ref{algo:select_query_for_q-partition} is defined as the set of formulas that label the edges on the path from the root node to $\mathsf{n}$. In other words, we associate a node $\mathsf{n}$ with $H(\mathsf{n})$.
In this vein, Algorithm~\ref{algo:select_query_for_q-partition} \emph{internally} does not store a labeled tree, but only ``relevant'' sets of nodes. That is, it does not store any
\begin{itemize}
	\item non-leaf nodes,
	\item labels of non-leaf nodes, i.e.\ it does not store which set-minimal trait labels which node,
	\item edges between nodes, 
	\item labels of edges and
	\item leaf nodes labeled by $closed$.
\end{itemize}
Let $T$ denote the (partial) HS-tree produced by Algorithm~\ref{algo:select_query_for_q-partition} at some point during its execution. Then, Algorithm~\ref{algo:select_query_for_q-partition} only stores 
\begin{itemize}
	\item a set of nodes $\mQ_{calc}$ where each node corresponds to the set of edge labels along a path in $T$ leading to a leaf node that has been labeled by $valid$ (set-minimal queries associated with the q-partition $\Pt_k$ given as an input to the algorithm) and
	\item a list of open (non-closed) nodes $\Queue$ where each node in $\Queue$ corresponds to the edge labels along a path in $T$ leading from the root node to a leaf node that has been generated, but has not yet been labeled.
\end{itemize}

This internal representation of the constructed (partial) HS-tree does not constrain the functionality of the algorithm. 
This holds as queries are paths from the root, i.e.\ nodes in the internal representation, and the goal of a the HS-tree is to determine set-minimal queries. 
The node labels or edge labels along a certain path and their order along this path is completely irrelevant when it comes to finding a label for the leaf node of this path. Instead, only the set of edge labels is required for the computation of the label for a leaf node. Also, to rule out nodes corresponding to non-set-minimal queries, it is sufficient to know the set of already found set-minimal queries $\mQ_{calc}$. No already closed nodes are needed for the correct functionality of Algorithm~\ref{algo:select_query_for_q-partition}.

\paragraph{Inputs.} The algorithm takes as input a q-partition $\Pt_k = \tuple{\dx{k},\dnx{k},\emptyset}$ w.r.t.\ $\mD \subseteq \minD_{\langle\mo,\mb,\Tp,\Tn\rangle_\RQ}$, a probability measure $p$ (assigning probabilities to formulas in $\mo$), some search strategy $strat$, a desired computation timeout $time$, and a desired minimal ($n_{\min}$) as well as maximal ($n_{\max}$) number of set-minimal queries with associated q-partition $\Pt$ to be returned. 
Within the algorithm, $strat$ determines whether breadth-first or either strategy in \{(MinSum),(MinMax)\} as discussed in this section is to be used. It is exploited to keep the queue of open nodes sorted in the respective order to achieve desired properties of the (first) found queries.  
The meaning of $n_{\min}$, $n_{\max}$ and $time$ is that the algorithm computes at least the $n_{\min}$ best (w.r.t.\ $strat$) set-minimal queries with associated q-partition $\Pt_k$ and goes on computing further next best set-minimal queries until either the overall computation time reaches the time limit $time$ or $n_{\max}$ diagnoses have been computed. Note that this feature enabling the computation of an arbitrary number of (existing) set-minimal queries is not needed in the default case where only a single set-minimal query, namely the best w.r.t.\ $strat$, is demanded. However, acting with foresight, one can use this feature to precompute a range of set-minimal queries for the case that a user rejects the query and requests an alternative one. If only a single query should be computed, this can be accomplished by the setting $n_{\min} = n_{\max} := 1$ (value of $time$ is irrelevant in this case). 
 
\paragraph{Initialization.}
First, Algorithm~\ref{algo:select_query_for_q-partition} computes the set of traits of all diagnoses in $\dnx{k}$ and stores it in $traits$ (lines~\ref{algoline:hs:compute_traits_start}-\ref{algoline:hs:compute_traits_end}). Next, it applies the function \textsc{deleteNonSetMinimalSets} to $traits$ in order to clean this set from non-set-minimal traits resulting in the set $setminTraits$ (line~\ref{algoline:hs:compute_setmin_traits}). Then, it calls the HS-tree function \textsc{HS} passing the same parameters as discussed in the \emph{Inputs} paragraph above, except that $\Pt_k$ is replaced by $setminTraits$ (simply denoted by $traits$ within the \textsc{HS} function). 

As first steps in \textsc{HS}, the variable $time_{start}$ is initialized with the current system time (function \textsc{getTime}), the set of calculated set-minimal queries $\mQ_{calc}$ is initialized with the empty set and the ordered queue of open nodes $\Queue$ is set to a list including the empty set only (i.e.\ only the unlabeled root node). 

\paragraph{The Main Loop.}
Within the loop (line~\ref{algoline:hs:repeat}) the algorithm gets the node to be processed next, namely the first node $\mathsf{node}$ (\textsc{getFirst}, line~\ref{algoline:hs:getfirst}) in the list of open nodes $\Queue$ ordered by the search strategy $strat$ and (optionally, if $strat$ requires probabilities) $p$, and removes $\mathsf{node}$ from $\Queue$. 

\paragraph{Computation of Node Labels.}
Then, a label is computed for $\mathsf{node}$ in line~\ref{algoline:hs:label}. Nodes are labeled by $valid$, $closed$ or a set-minimal trait (i.e.\ an element of $traits$) by the procedure \textsc{label} (line~\ref{algoline:hs:procedure_label} ff.). This procedure gets as inputs the current node $\mathsf{node}$, the set $traits$, the already computed set-minimal queries ($\mQ_{calc}$) and the queue $\Queue$ of open nodes, and it returns a label for $\mathsf{node}$. It works as follows:

A node $\mathsf{node}$ is labeled by $closed$ iff 
\begin{itemize}
	\item there is an already computed set-minimal query that is a subset of this node, which means that $\mathsf{node}$ cannot be a set-minimal query (non-minimality criterion, lines~\ref{algoline:hs:non_min_crit_start}-\ref{algoline:hs:non_min_crit_end}) or
	\item there is some node $\mathsf{nd}$ in the queue of open nodes $\Queue$ such that $\mathsf{node} = \mathsf{nd}$ which means that one of the two tree branches with an equal set of edge labels can be closed, i.e.\ removed from $\Queue$ (duplicate criterion, lines~\ref{algoline:hs:duplicate_crit_start}-\ref{algoline:hs:duplicate_crit_end}).
\end{itemize}

If none of these $closed$-criteria is met, the algorithm searches for some $t$ in $traits$ such that $t \cap \mathsf{node} = \emptyset$ and returns the label $t$ for $\mathsf{node}$ (lines~\ref{algoline:hs:reuse_crit_start}-\ref{algoline:hs:reuse_crit_end}). This means that the path represented by $\mathsf{node}$ cannot be a query as there is (at least) one set-minimal trait, namely $t$, that is not hit by $\mathsf{node}$ (cf.\ Corollary~\ref{cor:min_exp-ents_queries_are_minHS_of_all_traits_of_diags_in_Dnx}). 

If none of the mentioned criteria is satisfied, then $\mathsf{node}$ must be a hitting set of all set-minimal traits. Furthermore, is must represent a minimal hitting set. Hence, $valid$ is returned by \textsc{label} in this case.

Using $strat$ prescribing a breath-first search strategy, the minimality of $\mathsf{node}$ is guaranteed by the sorting of $\Queue$ by ascending node cardinality and the non-minimality criterion (see above) which is tested first within the \textsc{label} function. In case $strat$ dictates the usage of (MinSum), then, since all formula probabilities are required to be larger than zero, any subset of a node $\mathsf{n}$ in $\Queue$ must be processed before $\mathsf{n}$ is processed (as this subset must have a strictly smaller sum of formulas' probabilities than $\mathsf{n}$). Minimality of the hitting set $\mathsf{node}$ is then also enforced by the non-minimality criterion. If (MinMax) is used, then the $\Queue$ must be sorted in ascending order by maximal probability of an element (i.e.\ a formula) in a node (as the maximum of a set can only get larger or remain constant if the set grows) and, if two nodes are equal w.r.t.\ this criterion, then the minimum cardinality node has to be given precedence. Then, also in this case, the non-minimality criterion guarantees minimality of the hitting set $\mathsf{node}$. 

\paragraph{Processing of a Node Label.}
Back in the main procedure, the label $L$ returned by \textsc{label} is processed as follows: 

If $L = valid$, then $\mathsf{node}$ is added to the set of calculated set-minimal queries $\mQ_{calc}$. 

If, $L=closed$, then there is either a query in $\mQ_{calc}$ that is a subset of the current node $\mathsf{node}$ or a duplicate of $\mathsf{node}$ is already included in $\Queue$. Consequently, $\mathsf{node}$ must simply be removed from $\Queue$ which has already been executed in line~\ref{algoline:hs:getfirst}.

In the third case, if a set-minimal trait $L$ is returned in line~\ref{algoline:hs:label}, then $L$ is a label for $\mathsf{node}$ meaning that $|L|$ successor nodes of $\mathsf{node}$ need to be inserted into $\Queue$ in a way the order as per $strat$ in $\Queue$ is maintained (\textsc{insertSorted}, line~\ref{algoline:hs:generate_nodes}).



\begin{algorithm}
\small
\caption[Computation of Set-Minimal Queries From Q-Partition]{HS-Tree computation of set-minimal queries from a given q-partition} \label{algo:select_query_for_q-partition}
\begin{algorithmic}[1]
\Require a q-partition $\Pt_k = \tuple{\dx{k},\dnx{k},\emptyset}$ w.r.t.\ $\mD \subseteq \minD_{\langle\mo,\mb,\Tp,\Tn\rangle_\RQ}$, probability measure $p$, query selection parameters $\tuple{strat,time,n_{\min},n_{\max}}$ consisting of search strategy $strat$, desired computation timeout $time$, a desired minimal ($n_{\min}$) and maximal ($n_{\max}$) number of set-minimal queries with associated q-partition $\Pt$ to be returned,  
\Ensure a set $\mQ$ which is \newline
(a)~a set of best (according to $strat$ and possibly $p$) set-minimal queries with associated q-partition $\Pt$ such that $n_{\min} \leq |\mQ| \leq n_{\max}$, if at least $n_{\min}$ such queries exist, or \newline
(b)~the set of all set-minimal queries with associated q-partition $\Pt$ otherwise
\Procedure{selectQueryForQPartition}{$\Pt_k, p, \tuple{strat, time, n_{\min}, n_{\max}}$}
	\State $traits \gets \emptyset$    \label{algoline:hs:compute_traits_start}
	\For{$\md_i \in \dnx{k}$}		\label{algoline:}
		\State $t_i \gets \md_i \setminus U_{\dx{k}}$				\Comment{compute trait $t_i$ of $\md_i$ (cf.\ Def.~\ref{def:trait})}
		\State $traits \gets traits \cup \setof{t_i}$
	\EndFor    \label{algoline:hs:compute_traits_end}
	\State $setminTraits \gets \Call{deleteNonSetMinimalSets}{traits}$   \label{algoline:hs:compute_setmin_traits}
	\State \Return $\Call{HS}{setminTraits, p, strat, time, n_{\min}, n_{\max}}$
\EndProcedure
\vspace{2pt}
\hrule
\vspace{2pt}
\Procedure{$\scHS$}{$traits, p, strat, time, n_{\min}, n_{\max}$}
\State $time_{start} \gets \Call{getTime}{ }$
\State $\mQ_{calc} \gets \emptyset$     \Comment{stores already computed set-minimal queries}
\State $\Queue \gets [\emptyset]$				\Comment{queue of open nodes}						
\Repeat 				\Comment{a node $\mathsf{n}$ represents the set of edge labels $H(\mathsf{n})$ along its branch from the root node}									\label{algoline:hs:repeat}
\State $\mathsf{node} \gets \Call{getFirst}{\Queue}$	\label{algoline:hs:getfirst}  \Comment{$\mathsf{node}$ is first element in $\Queue$ and then $\Queue \gets \Queue \setminus \setof{\mathsf{node}}$}
\State $L \gets \Call{label}{\mathsf{node}, traits, \mQ_{calc}, \Queue}$\label{algoline:hs:label}
\State $\mathbf{C}_{calc} \gets \mathbf{C}$    				\label{algoline:hs:update_Ccalc}
\If{$L = valid$}\label{algoline:hs:L=valid} 			\Comment{$\mathsf{node}$ is a set-minimal query}
	\State $\mQ_{calc} \gets \mQ_{calc} \cup \setof{\mathsf{node}}$		\label{algoline:hs:update_Dcalc}
\ElsIf{$L = closed$}\label{algoline:hs:do_nothing}  \Comment{$\mathsf{node}$ is closed, hence do nothing}
\Else 	\Comment{$L$ must be a set-minimal trait}
	\For{$e \in L$}						
		\State $\Queue \gets \Call{insertSorted}{ \mathsf{node} \cup \setof{e}, \Queue, strat, p}$\label{algoline:hs:generate_nodes}   \Comment{generate successors}
\EndFor
\EndIf
\Until{$\Queue = [] \lor [|\mQ_{calc}| \geq n_{\min} \land ( |\mQ_{calc}| = n_{\max} \lor \Call{getTime}{ } - time_{start} > time)]$}  \label{algoline:hs:until}
\State \Return $\mQ_{calc}$
\EndProcedure
\vspace{2pt}
\hrule
\vspace{2pt}
\Procedure{\textsc{label}}{$\mathsf{node},traits,\mQ_{calc}, \Queue$}    \label{algoline:hs:procedure_label}
\For{$\mathsf{nd} \in \mQ_{calc}$}																								\label{algoline:hs:non_min_crit_start}
	\If{$\mathsf{node} \supseteq \mathsf{nd}$}  \Comment{non-minimality}
			\State \Return $closed$														\label{algoline:hs:non_min_crit_end}
	\EndIf
\EndFor
\For{$\mathsf{nd} \in \Queue$}																										\label{algoline:hs:duplicate_crit_start}
	\If{$\mathsf{node} = \mathsf{nd}$}  \Comment{remove duplicates}
			\State \Return $closed$														\label{algoline:hs:duplicate_crit_end}
	\EndIf
\EndFor
\For{$t \in traits$}																									\label{algoline:hs:reuse_crit_start}
	\If{$t \cap \mathsf{node} = \emptyset$}\label{algoline:hs:test_cs_not_hit}   \Comment{not-yet-hit trait found}
		\State \Return $t$																\label{algoline:hs:reuse_crit_end}
	\EndIf
\EndFor
\State \Return $valid$														\label{algoline:hs:return_valid}
\EndProcedure
\end{algorithmic}
\normalsize
\end{algorithm}

\subsection{Query Enrichment}
\label{sec:EnrichmentOfAQuery}
So far, we have explicated how we can obtain an explicit-entailments query which
\begin{itemize}
	\item is \emph{set-minimal},
	\item has \emph{optimal} properties regarding \emph{user effort} (minimal size) \emph{or comprehensibility} (minimal sum of fault probabilities) and
	\item has \emph{optimal} properties w.r.t.\ \emph{diagnosis discrimination} (optimal value w.r.t.\ some query quality measure $m$).
\end{itemize}
However, sometimes a user might find it hard to directly assess the correctness of axioms formulated by herself or by some other author, i.e.\ by being asked to classify axioms occurring explicitly in the KB as correct or faulty in the intended domain. One reason for this can be the complexity of axioms in the KB. Moreover, especially in case the axiom was specified by the interacting user, the user is usually convinced that the axiom is OK. Probably most of the people tend to work to the best of their knowledge and, being aware of this, have particular problems when it comes to recognizing and realizing their own faults. Instead, it might be easier, less error-prone and more convenient for users to be presented with a query including simple formulas \emph{not} in the KB that need to be assessed regarding their truth in the domain the user intends to model.

Therefore, we deal in this section with the \textsc{enrichQuery} function in Algorithm~\ref{algo:query_comp}. This function realizes the expansion of a given explicit-entailments query $Q \subseteq \mo$ w.r.t.\ a set of leading diagnoses $\mD \subseteq \minD_{\tuple{\mo,\mb,\Tp,\Tn}_\RQ}$ by (finitely many) additional formulas $\alpha_1,\dots,\alpha_r$. At this, we postulate that 

\begin{enumerate}
	\item \label{enum:EQ1} $\alpha_1,\dots,\alpha_r \notin \mo \cup \mb \cup U_{\Tp}$, that
	\item \label{enum:EQ2} $S \models \setof{\alpha_1,\dots,\alpha_r}$ where $S$ is some solution KB $S$ w.r.t.\ $\tuple{\mo,\mb,\Tp,\Tn}_\RQ$ satisfying $Q \subseteq S \subseteq \mo \cup \mb \cup U_{\Tp}$ (note that any solution KB is in particular consistent, cf.\ Definitions~\ref{def:solution_KB} and \ref{def:dpi}), and that
	\item \label{enum:EQ3} no $\alpha_i$ for $i\in\setof{1,\dots,r}$ is an entailment of $S \setminus Q$.
\end{enumerate}
In other words, we want to extract a set of implicit entailments from $S$ that depend on $Q$, i.e.\ which hold only in the presence of (some axioms in) $Q$. This is equivalent to the postulation that $J \cap Q \neq \emptyset$ for each justification $J \in \bigcup_{i = 1}^{r} \Just(\alpha_i,S)$.

To accomplish that, we require a reasoning service that, given a set of formulas $X$ in the logic $\mathcal{L}$, deterministically returns a set $E(X)$ of (not only explicit) entailments of $X$ such that (R1)~$E(X') \subseteq E(X'')$ whenever $X' \subset X''$ and (R2)~if 
$Y \subseteq X' \subset X''$, then for all entailments $e_Y$ of $Y$ it holds that $e_Y \in E(X')$ iff $e_Y \in E(X'')$.
One possibility to realize such a service is to employ a reasoner for the logic $\mathcal{L}$ and use it to extract all entailments of a predefined type it can compute (cf.\ \cite[Remark 2.3 and p.~101]{Rodler2015phd}). For propositional Horn logic, e.g.\ one might extract only all literals that are entailments of $X$. For general Propositional Logic, e.g.\ one might calculate all formulas of the form $A \odot B$ for propositional variables $A,B$ and logical operators $\odot \in \setof{\rightarrow, \leftrightarrow}$, and for Description Logics \cite{Baader2007}, e.g.\ only all subsumption and/or class assertion formulas that are entailments could be computed. An example of entailment types that might be extracted for (decidable fragments of) first-order logic can be found in \cite[Example~8.1]{Rodler2015phd}.

Now, assuming available such a function $E$, we propose to compute an enriched query $Q'$ from an explicit-entailments query $Q$ as 
\begin{align}\label{eq:Q'_enriched_query}
Q' := Q \cup Q_{impl}
\end{align}
 where
\begin{align}\label{eq:Q_impl}
Q_{impl} := \Big[ E\big( (\mo\setminus U_{\mD}) \cup Q \cup \mb \cup U_{\Tp}\big) \;\setminus \; E\big( (\mo\setminus U_{\mD}) \cup \mb \cup U_{\Tp}\big) \Big] \;\setminus\; Q
\end{align}
\begin{remark}\label{rem:only_2_reasoner_calls_to_compute_Q'}
It should be noted that only two calls to a reasoning engine (i.e.\ function $E$) are required to compute $Q'$. The reader is also reminded of the fact that these are the first invocations of a reasoner in the entire query computation process described so far.\qed 
\end{remark}
Next, we prove that the requirements to the formulas $Q_{impl}$ used to expand $Q$ in terms of the enumeration (\ref{enum:EQ1}.), (\ref{enum:EQ2}.)\ and (\ref{enum:EQ3}.)\ above are actually met by Eq.~\eqref{eq:Q_impl}:
\begin{proposition}\label{prop:EQ1-EQ3_hold_for_Q_impl}
Let $\mD \subseteq \minD_{\tuple{\mo,\mb,\Tp,\Tn}_\RQ}$, $Q\subseteq \mo$ be an (explicit-entailments) query w.r.t.\ $\mD$ and $\tuple{\mo,\mb,\Tp,\Tn}_\RQ$ with associated q-partition $\Pt = \tuple{\dx{},\dnx{},\emptyset}$ and let $Q_{impl} = \setof{\alpha_1,\dots,\alpha_r}$ be defined as in Eq.~\eqref{eq:Q_impl}. Then (\ref{enum:EQ1}.)-(\ref{enum:EQ3}.)\ (stated above) hold.
\end{proposition}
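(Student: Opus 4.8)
The plan is to verify the three requirements (\ref{enum:EQ1}.), (\ref{enum:EQ2}.) and (\ref{enum:EQ3}.) in turn, relying on the defining properties (R1) and (R2) of the entailment function $E$ together with the structure of $Q_{impl}$ in Eq.~\eqref{eq:Q_impl}. Throughout, I would set $S := (\mo\setminus U_{\mD}) \cup Q \cup \mb \cup U_{\Tp}$ as the natural candidate for the solution KB mentioned in (\ref{enum:EQ2}.), and write $S_0 := (\mo\setminus U_{\mD}) \cup \mb \cup U_{\Tp}$ for the ``base'' KB without $Q$. Note that $S_0 \subseteq S$, so (R1) and (R2) apply with $X' = S_0$ and $X'' = S$. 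I would first establish that $S$ is indeed a solution KB with $Q \subseteq S \subseteq \mo\cup\mb\cup U_\Tp$: the inclusion $Q \subseteq S$ is immediate since $Q \subseteq \mo$ and $Q$ is added explicitly; the upper inclusion holds since $(\mo\setminus U_\mD) \cup Q \subseteq \mo$. That $S$ is a solution KB should follow because $S = (\mo\setminus\md_i)\cup\mb\cup U_\Tp = \mo_i^*$ for a suitable minimal diagnosis (for instance, by Proposition~\ref{prop:E_exp} and the relationship between $U_\mD$, $\DiscAx_\mD$ and canonical queries), since $\mo\setminus U_\mD$ together with $Q \subseteq \DiscAx_\mD$ corresponds to removing some diagnosis in $\dx{}$; I would pin this down by exhibiting $\md_i \in \dx{}$ with $\mo\setminus\md_i \supseteq (\mo\setminus U_\mD)\cup Q$, so that $S \subseteq \mo_i^*$ and hence $S$ inherits validity from $\mo_i^*$.

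For (\ref{enum:EQ1}.), I would argue that each $\alpha_i \in Q_{impl}$ cannot lie in $\mo\cup\mb\cup U_\Tp$. The key is that $Q_{impl} \subseteq E(S) \setminus E(S_0)$, so each $\alpha_i$ is an entailment of $S$ but \emph{not} of $S_0$. If some $\alpha_i$ were an explicit member of $\mo\cup\mb\cup U_\Tp$, I would split into cases: if $\alpha_i \in S_0$ then by extensivity $S_0 \models \alpha_i$, so $\alpha_i \in E(S_0)$ (using extensivity and that $E$ returns entailments), contradicting $\alpha_i \notin E(S_0)$; if $\alpha_i \in (\mo\cup\mb\cup U_\Tp)\setminus S_0$, then since $S_0 = (\mo\setminus U_\mD)\cup\mb\cup U_\Tp$, the only missing axioms are those in $U_\mD$, so $\alpha_i \in U_\mD$; but then the explicit removal ``$\setminus\,Q$'' at the end of Eq.~\eqref{eq:Q_impl} only removes elements of $Q$, and I would need to check $\alpha_i \notin Q$ already holds by construction, while $\alpha_i \in U_\mD\setminus Q$ would still have to be excluded --- here I would use that $\alpha_i \in E(S)$ with $S\subseteq\mo_i^*$ together with Lemma~\ref{lem:Ki^*_does_not_entail_any_ax_in_Di} to derive a contradiction, since no $\mo_i^*$ entails any axiom of its own diagnosis $\md_i \subseteq U_\mD$.

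For (\ref{enum:EQ3}.), which asserts no $\alpha_i$ is an entailment of $S\setminus Q$, I would observe that $S\setminus Q = S_0$ (since $S = S_0 \cup Q$ and $Q$ is disjoint from $S_0$ --- this disjointness I would verify via Lemma~\ref{lem:U_D_notin_B_and_notin_U_P} and the fact that $Q \subseteq U_\mD$ by Proposition~\ref{prop:each_expl-ents_query_must_incl_ax_in_U_D} or its consequences). Then the claim reduces to: no $\alpha_i$ is an entailment of $S_0$. This is almost immediate since $\alpha_i \notin E(S_0)$; but I must bridge ``not in $E(S_0)$'' and ``not entailed by $S_0$.'' This is exactly where property (R2) is essential: if $\alpha_i$ were entailed by $S_0$, then since $\alpha_i$ is an entailment of the subset $S_0$ of $S$, (R2) (with $Y = S_0$) would force $\alpha_i \in E(S_0) \Leftrightarrow \alpha_i \in E(S)$, and in particular $\alpha_i \in E(S_0)$, a contradiction. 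For (\ref{enum:EQ2}.), that $S\models Q_{impl}$, I would simply use $Q_{impl}\subseteq E(S)$ and that $E$ returns only genuine entailments of its argument, together with the already-established fact that $S$ is a solution KB.

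The main obstacle I anticipate is the careful handling of the distinction between ``being in $E(X)$'' and ``being entailed by $X$'': the function $E$ computes only a restricted, type-dependent set of entailments, so membership in $E(X)$ is strictly stronger than mere entailment. Requirements (\ref{enum:EQ2}.) and (\ref{enum:EQ3}.) pull in opposite directions here, and the clean resolution hinges entirely on invoking (R2) correctly --- it guarantees that for any entailment of a \emph{subset}, membership in $E$ is preserved in both directions between the subset and the superset, which is precisely what lets me convert the set-difference definition of $Q_{impl}$ into genuine non-entailment by $S_0$. I would make sure to state explicitly which of $X', X''$ and $Y$ each application of (R1)/(R2) uses, since misaligning the roles is the easiest place to introduce a gap.
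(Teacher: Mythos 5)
Your proposal takes essentially the same route as the paper's proof: the same sets $S := (\mo\setminus U_{\mD})\cup Q\cup\mb\cup U_{\Tp}$ and $S_0 := (\mo\setminus U_{\mD})\cup\mb\cup U_{\Tp}$, the same construction of the solution KB for (\ref{enum:EQ2}.)\ via a diagnosis $\md_i\in\dx{}$ with $S\subseteq\mo_i^*$, the same identification $S\setminus Q = S_0$, and the same use of (R2) to convert $\alpha\notin E(S_0)$ into $S_0\not\models\alpha$ for (\ref{enum:EQ3}.); these two requirements are handled correctly. In (\ref{enum:EQ1}.), however, your first case needs repair as stated: from $\alpha\in S_0$ you infer $\alpha\in E(S_0)$ ``using extensivity and that $E$ returns entailments,'' which is the wrong direction, since $E$ is only guaranteed to return \emph{some} type-restricted entailments, so $S_0\models\alpha$ alone does not place $\alpha$ in $E(S_0)$. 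The correct bridge is the (R2) argument you yourself use for (\ref{enum:EQ3}.): $\alpha\in Q_{impl}\subseteq E(S)$ and $\alpha$ is an entailment of $S_0\subset S$, so (R2) gives $\alpha\in E(S_0)$ and $\alpha$ cancels in the set difference of Eq.~\eqref{eq:Q_impl}; this is exactly the paper's step, so the fix is cosmetic. (Likewise, $Q\subseteq U_\mD$ comes from Lemma~\ref{lem:explicit-ents_query_upper_bound}, i.e.\ from the implicit assumption $Q\subseteq\DiscAx_\mD$, not from Proposition~\ref{prop:each_expl-ents_query_must_incl_ax_in_U_D}, which only yields $Q\cap U_\mD\neq\emptyset$.)

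The substantive gap is in your case $\alpha\in U_\mD\setminus Q$ --- a case you deserve credit for isolating explicitly, since the paper waves it away with a bare ``clearly.'' Your contradiction via Lemma~\ref{lem:Ki^*_does_not_entail_any_ax_in_Di} only works when $\alpha\in U_{\dx{}}$: then some $\md_i\in\dx{}$ contains $\alpha$, and since $Q\cap\md_i=\emptyset$ for $\md_i\in\dx{}$ (proof of Proposition~\ref{prop:d0}), indeed $S\subseteq\mo_i^*$, so $S\models\alpha$ would force $\mo_i^*\models\alpha$, contradicting the lemma. But if $\alpha$ lies only in $\dnx{}$-diagnoses, i.e.\ $\alpha\in U_{\dnx{}}\setminus(U_{\dx{}}\cup Q)$, the argument collapses: every $\md_j\in\dnx{}$ with $\alpha\in\md_j$ satisfies $Q\cap\md_j\neq\emptyset$ (Lemma~\ref{lem:explicit-ents_query_lower_bound}), so $S\not\subseteq\mo_j^*$ and monotonicity yields no entailment $\mo_j^*\models\alpha$ to contradict; and for $\md_i\in\dx{}$ the lemma is silent because $\alpha\notin\md_i$ --- in fact $\alpha\in\mo\setminus\md_i\subseteq\mo_i^*$ holds trivially. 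So your sketch leaves open whether some axiom in $U_{\dnx{}}\setminus(U_{\dx{}}\cup Q)$ could be entailed by $S_0\cup Q$ but not by $S_0$ and thus land in $Q_{impl}$, violating (\ref{enum:EQ1}.). To be fair, the paper's own proof leaves precisely this sub-case implicit, so your proposal is, if anything, the more careful of the two; but to close it you would need a further argument for this sub-case, or a restriction on the entailment types computed by $E$ guaranteeing its output never coincides with KB axioms.
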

\begin{proof}
Ad \ref{enum:EQ1}.: The function $E$ either does or does not compute explicit entailments (amongst other entailments). In case the function $E$ does not compute explicit entailments, $Q_{impl}$ clearly cannot contain any explicit entailments. Otherwise, we distinguish between explicit entailments in $(\mo\setminus U_{\mD}) \cup \mb \cup U_{\Tp}$ and those in $Q$ (clearly, there cannot be any other explicit entailments in $Q_{impl}$). Note that $Q \subseteq U_{\mD} \setminus U_{\dx{}} \subseteq U_\mD$ due to Lemma~\ref{lem:explicit-ents_query_upper_bound}. Additionally, $Q \cap \mb = \emptyset$ due to $Q \subseteq \mo$ and Definition~\ref{def:dpi}. And, $Q \cap U_{\Tp} = \emptyset$ due to $Q \subseteq U_\mD$ and since no element of any minimal diagnosis $\md$ (in $\mD$), and hence no element in $U_\mD$, can occur in $U_{\Tp}$. The latter holds as in case $\md' \cap U_{\Tp} \neq \emptyset$ for $\md' \in \mD$ we would have that $\md'' := \md' \setminus U_{\Tp} \subset \md'$ is a diagnosis w.r.t.\ $\tuple{\mo,\mb,\Tp,\Tn}_\RQ$, a contradiction to the subset-minimality of $\md'$. All in all, we have derived that $(\mo\setminus U_{\mD}) \cup \mb \cup U_{\Tp}$ and $Q$ are disjoint sets. 

Now, $Q_{impl}$ cannot include any elements of $(\mo\setminus U_{\mD}) \cup \mb \cup U_{\Tp}$. This must be satisfied since, first, $(\mo\setminus U_{\mD}) \cup \mb \cup U_{\Tp}$ is a subset of the left- as well as right-hand $E()$ expression in the definition of $Q_{impl}$ (Eq.~\eqref{eq:Q_impl}) and, second, both $E()$ expressions must return the same set of entailments of 
$(\mo\setminus U_{\mD}) \cup \mb \cup U_{\Tp}$ by assumption (R2) made about the function $E$ above. Therefore, the set defined by the squared brackets in Eq.~\eqref{eq:Q_impl} cannot include any (explicit) entailments of $(\mo\setminus U_{\mD}) \cup \mb \cup U_{\Tp}$. 

Further on, $Q_{impl}$ cannot contain any elements of $Q$. This is guaranteed by the elimination of all elements of $Q$ from the set defined by the squared brackets in Eq.~\eqref{eq:Q_impl}. Finally, we summarize that $Q_{impl} \cap (\mo \cup \mb \cup U_{\Tp}) = \emptyset$.

Ad \ref{enum:EQ2}.: Clearly, by the definition of a diagnosis (Definition~\ref{def:diagnosis}), $(\mo\setminus \md) \cup \mb \cup U_{\Tp}$ is a solution KB w.r.t.\ $\tuple{\mo,\mb,\Tp,\Tn}_\RQ$ for all $\md \in \mD$. In addition, since $\dx{} \neq \emptyset$ (cf.\ Proposition~\ref{prop:properties_of_q-partitions},(\ref{prop:properties_of_q-partitions:enum:for_each_q-partition_dx_is_empty_and_dnx_is_empty}.)), there must be some diagnosis $\md' \in \dx{} \subset \mD$ such that $(\mo\setminus \md') \cup \mb \cup U_{\Tp} \models Q$. This implies that $(\mo\setminus \md') \cup \mb \cup U_{\Tp} \cup Q$ is a solution KB w.r.t.\ $\tuple{\mo,\mb,\Tp,\Tn}_\RQ$. By $U_\mD \supseteq \md'$ and by the monotonicity of the logic $\mathcal{L}$ we conclude that $S := (\mo\setminus U_\mD) \cup \mb \cup U_{\Tp} \cup Q$ is a solution KB w.r.t.\ $\tuple{\mo,\mb,\Tp,\Tn}_\RQ$. Further, by the definition of a DPI (Definition~\ref{def:dpi}) which states that the requirements $\RQ$ specified in the DPI must include \emph{consistency}, $S$ is consistent as well.
%

Obviously, $S \supseteq Q$ and, by the left-hand $E()$ expression in Eq.~\eqref{eq:Q_impl}, $Q_{impl}$ includes entailments of $S$.
Finally, by the proof of (\ref{enum:EQ1}.)\ above, where we have shown that $Q_{impl} \cap [(\mo\setminus U_{\mD}) \cup \mb \cup U_{\Tp} \cup Q] = \emptyset$, we immediately obtain that $Q_{impl} \cap S = \emptyset$. That is, $Q_{impl}$ does not include any explicit entailments of $S$.

Ad \ref{enum:EQ3}.: Assume that $S$ is as defined in the proof of (\ref{enum:EQ2}.)\ above and that there is some $\alpha_i \in Q_{impl}$ such that $S \setminus Q \models \alpha_i$. Then, $(\mo\setminus U_\mD) \cup \mb \cup U_{\Tp} \models \alpha_i$. However, in the proof of (\ref{enum:EQ1}.)\ above we have derived that $Q_{impl}$ cannot comprise any entailments of $(\mo\setminus U_{\mD}) \cup \mb \cup U_{\Tp}$. Hence, $\alpha_i \notin Q_{impl}$, contradiction.

We sum up that (\ref{enum:EQ1}.)-(\ref{enum:EQ3}.)\ holds for $Q_{impl} = \setof{\alpha_1,\dots,\alpha_r}$.
\end{proof}
\begin{remark} \label{rem:Q_impl=emptyset_if_no_implicit_entailments}
Please take a note of the fact that $Q_{impl} = \emptyset$ in case there are no implicit entailments of $(\mo\setminus U_{\mD}) \cup Q \cup \mb \cup U_{\Tp}$ which are not entailed by $(\mo\setminus U_{\mD}) \cup \mb \cup U_{\Tp}$ (cf.\ Eq.~\eqref{eq:Q_impl}). In this case the enriched query $Q'$ constructed from the explicit-entailments query $Q$ is equal to $Q$.\qed
\end{remark}

It is crucial that the expanded query resulting from query enrichment given an input explicit-entail-ments query $Q$ has the same q-partition as $Q$. Recall that we first selected the optimal q-partition for which we then chose $Q$ as a query. Thence, of course, we want to preserve this q-partition throughout all further refinement operations applied to $Q$. The next proposition witnesses that query enrichment does in fact comply with this requirement. Note that the inclusion of $Q$ itself in the enriched query $Q'$ ensures the q-partition preservation.
\begin{proposition}\label{prop:entailment_extraction_is_q-partition_preserving}
Let $\mD \subseteq \minD_{\tuple{\mo,\mb,\Tp,\Tn}_\RQ}$ and $Q\subseteq \mo$ be an (explicit-entailments) query w.r.t.\ $\mD$ and $\tuple{\mo,\mb,\Tp,\Tn}_\RQ$. 
Further, let 
$Q'$ be defined as in Eq.~\eqref{eq:Q'_enriched_query}. Then $Q'$ is a query w.r.t.\ $\mD$ and $\tuple{\mo,\mb,\Tp,\Tn}_\RQ$ and $\Pt(Q') = \Pt(Q)$.
\end{proposition}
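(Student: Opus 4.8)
The plan is to show the two partitions coincide by proving $\dx{}(Q') = \dx{}(Q)$ and $\dnx{}(Q') = \dnx{}(Q)$, which by Proposition~\ref{prop:properties_of_q-partitions},(\ref{prop:properties_of_q-partitions:enum:q-partition_is_partition}.)\ (q-partition is a partition of $\mD$) and the fact that $\dz{}(Q) = \emptyset$ automatically forces $\dz{}(Q') = \emptyset$ as well. The guiding intuition is that adding $Q_{impl}$ to $Q$ cannot change the answer-behavior of any leading diagnosis, because each $\alpha \in Q_{impl}$ is, by Proposition~\ref{prop:EQ1-EQ3_hold_for_Q_impl},(\ref{enum:EQ2}.), entailed by a solution KB $S \supseteq Q$, and, by (\ref{enum:EQ3}.), \emph{only} in the presence of $Q$. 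So $Q_{impl}$ is ``logically redundant given $Q$'' from the point of view of every $\mo_i^*$.

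First I would establish $\dx{}(Q) \subseteq \dx{}(Q')$ is \emph{not} what we want — rather the reverse containments need care, so I would treat each set separately. For the $\dx{}$ sets: if $\md_i \in \dx{}(Q')$ then $\mo_i^* \models Q'$, and since $Q \subseteq Q'$, monotonicity of $\mathcal{L}$ (assumption~\ref{logic:cond1}) gives $\mo_i^* \models Q$, i.e.\ $\md_i \in \dx{}(Q)$. For the converse, suppose $\md_i \in \dx{}(Q)$, so $\mo_i^* \models Q$. I would argue $\mo_i^* \models Q_{impl}$ as well: by Proposition~\ref{prop:EQ1-EQ3_hold_for_Q_impl},(\ref{enum:EQ2}.), each $\alpha \in Q_{impl}$ satisfies $S \models \alpha$ where $S = (\mo\setminus U_\mD) \cup \mb \cup U_\Tp \cup Q$. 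Since $\md_i \subseteq U_\mD$, we have $\mo_i^* = (\mo\setminus\md_i)\cup\mb\cup U_\Tp \supseteq (\mo\setminus U_\mD)\cup\mb\cup U_\Tp$, hence $\mo_i^* \cup Q \supseteq S$, and because $\mo_i^* \models Q$, idempotence (\ref{logic:cond2}) yields $\mo_i^* \equiv \mo_i^* \cup Q \models S \models \alpha$. Thus $\mo_i^* \models Q'$ and $\md_i \in \dx{}(Q')$. This gives $\dx{}(Q') = \dx{}(Q)$.

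For the $\dnx{}$ sets I would use Proposition~\ref{prop:properties_of_q-partitions},(\ref{prop:properties_of_q-partitions:enum:dx_dnx_dz_contain_exactly_those_diags_that_are...}.): $\md_i \in \dnx{}(Q')$ iff $\mo_i^* \cup Q'$ violates some $x \in \RQ \cup \Tn$. Since $Q \subseteq Q'$, monotonicity immediately gives that any violation caused by $\mo_i^* \cup Q$ persists for $\mo_i^* \cup Q'$, so $\dnx{}(Q) \subseteq \dnx{}(Q')$. Conversely, I expect the main obstacle to lie here: showing $\dnx{}(Q') \subseteq \dnx{}(Q)$, i.e.\ that the extra formulas $Q_{impl}$ cannot introduce a \emph{new} violation for a diagnosis $\md_i \notin \dnx{}(Q)$. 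Such an $\md_i$ lies in $\dx{}(Q)$ (as $\dz{}(Q)=\emptyset$), so $\mo_i^* \models Q$, and by the $\dx{}$-argument above $\mo_i^* \models Q'$, whence by idempotence $\mo_i^* \cup Q' \equiv \mo_i^*$; since $\mo_i^*$ satisfies all $r\in\RQ$ and all $\tn\in\Tn$ (it is a solution KB, $\md_i$ being a diagnosis), so does $\mo_i^*\cup Q'$, giving $\md_i \notin \dnx{}(Q')$. This closes $\dnx{}(Q') = \dnx{}(Q)$. Combining the two equalities, $\Pt(Q') = \Pt(Q)$, and since $Q'\supseteq Q\neq\emptyset$ with $\dx{}(Q'),\dnx{}(Q')\neq\emptyset$, Proposition~\ref{prop:properties_of_q-partitions},(\ref{prop:properties_of_q-partitions:enum:set_of_logical_formulas_is_query_iff...}.)\ confirms $Q'$ is a query. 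The only subtlety to watch is ensuring idempotence is invoked correctly (it requires $\mo_i^*\models Q'$ and then equivalence of $\mo_i^*\cup Q'$ with $\mo_i^*$), which the conditions~\ref{logic:cond1}--\ref{logic:cond3} fully support.
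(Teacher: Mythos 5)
Your proof is correct and follows essentially the same route as the paper's: both forward inclusions are obtained by exactly the paper's argument — $\dx{}(Q) \subseteq \dx{}(Q')$ via idempotence plus monotonicity applied to $Q_{impl}$ as a set of entailments of $(\mo\setminus U_\mD)\cup\mb\cup U_\Tp\cup Q$, and $\dnx{}(Q) \subseteq \dnx{}(Q')$ via monotonicity alone. The only cosmetic difference lies in the closing step: you establish the reverse inclusions directly (using $\dz{}(Q)=\emptyset$ and idempotence to show no new violations can arise), whereas the paper derives the equalities from the two forward inclusions together with the disjointness/partition property of q-partitions, exactly as in the last paragraph of the proof of Lemma~\ref{lem:explicit-ents_query_upper_bound}.
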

\begin{proof}
Let $\md \in \dx{}(Q)$. Then, $(\mo\setminus \md) \cup \mb \cup U_{\Tp} \models Q$. Since the entailment relation in $\mathcal{L}$ is idempotent, we have that (*): $(\mo\setminus \md) \cup \mb \cup U_{\Tp} \cup Q \equiv (\mo\setminus \md) \cup \mb \cup U_{\Tp}$. Further, since $Q_{impl}$ is a set of entailments of $(\mo\setminus U_\mD) \cup \mb \cup U_{\Tp} \cup Q$ (see left-hand $E()$ expression in Eq.~\eqref{eq:Q_impl}), by the monotonicity of the entailment relation in $\mathcal{L}$ and because of $(\mo\setminus \md) \cup \mb \cup U_{\Tp} \cup Q \supseteq (\mo\setminus U_\mD) \cup \mb \cup U_{\Tp} \cup Q$ we deduce that $(\mo\setminus \md) \cup \mb \cup U_{\Tp} \cup Q \models Q_{impl}$. 
By (*), $(\mo\setminus \md) \cup \mb \cup U_{\Tp} \models Q_{impl}$ and $(\mo\setminus \md) \cup \mb \cup U_{\Tp} \models Q$ which is why $(\mo\setminus \md) \cup \mb \cup U_{\Tp}$ must entail $Q' = Q_{impl}\cup Q$ as well. Thus, $\md \in \dx{}(Q')$ holds.

Let $\md \in \dnx{}(Q)$. Then, $(\mo\setminus \md) \cup \mb \cup U_{\Tp} \cup Q$ violates some $x \in \RQ \cup \Tn$. Due to the monotonicity of $\mathcal{L}$ and the fact that $Q' = Q_{impl} \cup Q \supseteq Q$, we immediately obtain that $(\mo\setminus \md) \cup \mb \cup U_{\Tp} \cup Q'$ violates some $x \in \RQ \cup \Tn$. Thus, $\md \in \dnx{}(Q')$.

Since $Q$ is an explicit-entailments query, Proposition~\ref{prop:d0} ensures that $\dz{}(Q) = \emptyset$. At this point, an analogue argumentation as we gave in the last paragraph of the proof of Lemma~\ref{lem:explicit-ents_query_upper_bound} can be used to realize that $\dx{}(Q) = \dx{}(Q')$, $\dnx{}(Q) = \dnx{}(Q')$ as well as $\dz{}(Q) = \dz{}(Q')$. Hence, $\Pt(Q) = \Pt(Q')$.
\end{proof}
To sum up, Eq.~\eqref{eq:Q'_enriched_query} specifies
\begin{itemize}
	\item an enrichment of a given explicit-entailments query $Q$
	\item by solely implicit entailments (set $Q_{impl}$) of a consistent part of the extended KB $\mo \cup \mb \cup U_{\Tp}$,
	\item such that each implicit entailment in $Q_{impl}$ is dependent on $Q$, and
	\item such that the enriched query has the same q-partition as $Q$.
\end{itemize}
For the sake of completeness, albeit implementing exactly Eq.~\eqref{eq:Q'_enriched_query}, we present with Algorithm~\ref{algo:enrich_query} the pseudocode illustrating how an enriched query complying with the postulations from the beginning of this section can be computed.

\begin{algorithm*}
\small
\caption{Query Enrichment \normalsize} \label{algo:enrich_query}
\begin{algorithmic}[1]
\Require a DPI $\tuple{\mo,\mb,\Tp,\Tn}_\RQ$, an explicit-entailments query $Q$ w.r.t.\ $\tuple{\mo,\mb,\Tp,\Tn}_\RQ$, the q-partition $\Pt = \tuple{\dx{},\dnx{},\emptyset}$ associated with $Q$, a reasoner $Rsnr$, a set $ET$ of entailment types (w.r.t.\ the used logic $\mathcal{L}$) to be extracted ($ET$ must be supported by the employed reasoner $Rsnr$) 
\Ensure a superset $Q'$ of $Q$ such that the set $Q_{impl} := Q'\setminus Q = \setof{\alpha_1,\dots,\alpha_r}$ satisfies requirements (\ref{enum:EQ1}.)\ - (\ref{enum:EQ3}.)\ stated at the beginning of Section~\ref{sec:EnrichmentOfAQuery}
\Procedure{$\textsc{enrichQuery}$}{$\langle\mo,\mb,\Tp,\Tn\rangle_\RQ,Q,\Pt, Rsnr, ET$}   \label{algoline:query_enrichQuery_start}
	\State $\mD \gets \dx{} \cup \dnx{}$
	\State $E_{+Q} \gets \Call{getEntailments}{ET,Rsnr,(\mo \setminus U_{\mD})\cup Q \cup \mb \cup U_{\Tp}}$
	\State $E_{-Q} \gets \Call{getEntailments}{ET,Rsnr,(\mo \setminus U_{\mD}) \cup \mb \cup U_{\Tp}}$
	\State $Q_{impl} \gets [E_{+Q} \setminus E_{-Q}] \setminus Q$    
	\State $Q' \gets Q \cup Q_{impl}$
	\State \Return $Q'$   
\EndProcedure
\end{algorithmic}
\normalsize
\end{algorithm*}

\subsection{Query Optimization}
\label{sec:QueryOptimization}
The enriched query $Q'$ (Eq.~\eqref{eq:Q'_enriched_query}) returned by the function \textsc{enrichQuery} constitutes the input to the final function in Algorithm~\ref{algo:query_comp}, i.e.\ \textsc{optimizeQuery}. The objective of the latter is 
\begin{enumerate}
	\item \label{enum:OQ1} the q-partition-preserving minimization of $Q'$ to obtain a set-minimal subset $Q^{\min}$ of $Q'$. In addition, we postulate that
	\item \label{enum:OQ2} \textsc{optimizeQuery} must yield some $Q^{\min}$ such that $Q^{\min} \cap Q = \emptyset$, if such a $Q^{\min}$ exists. 
	\item \label{enum:OQ3} Otherwise, if axiom preference criteria or axiom fault probabilities $p(\tax)$ for $\tax\in\mo$ are given, then $Q^{\min}$ is required to be the one query that minimizes the maximum probability over all axioms of $Q$ occurring in it. More precisely,
\begin{align}
Q^{\min} := \argmin_{S\in\mathsf{minQPPS}(Q')} (\mathsf{maxP}(S,Q))  \label{eq:Q^min_requirement_iii_section:QueryOptimization}
\end{align} 
where
\begin{align*}
\mathsf{minQPPS}(Q') &:= \setof{X\,|\, X  \text{ is set-minimal q-partition-preserving subset of } Q'} \\
\mathsf{maxP}(X,Q)  &:= \max_{\tax\in X\cap Q} (p(\tax))
\end{align*}
\end{enumerate}
In that, the set-minimality postulated by (\ref{enum:OQ1}.)\ is required to avoid asking the user queries that comprise formulas which are not necessary in order to achieve the desired discrimination properties (which are determined by the q-partition) of the query. An explanation for demanding (\ref{enum:OQ2}.), i.e.\ disjointness between $Q^{\min}$ and $Q$, is that formulas in the KB (those contained in the explicit-entailments query $Q$) are usually more complex in structure and hence for the interacting user more difficult to understand or interpret, respectively, than formulas that correspond to specific simple predefined entailment types such as, e.g.\, $A \rightarrow B$ for atoms $A$ and $B$ in case of Propositional Logic (cf.\ Section~\ref{sec:EnrichmentOfAQuery}). In a situation where requirement (\ref{enum:OQ2}.)\ is not satisfiable, we require that the maximum complexity (represented by the fault probability) of a formula from $Q$ in $Q^{\min}$ is minimal. That is, assuming that all implicit entailments of predefined types are at least as easy to comprehend as the easiest formula in the KB (which is plausible, see above), (\ref{enum:OQ3}.) corresponds to the requirement that the hardest formula in the query $Q^{\min}$ is easiest to understand for the user. 

To realize (\ref{enum:OQ1}.), as already mentioned in Section~\ref{sec:EnrichmentOfAQuery}, we can apply a modified version of the QX algorithm \cite{junker04}. This modified version of QX called \textsc{minQ} is described, proven correct and illustrated by examples in \cite[Chap.~8]{Rodler2015phd} (the \textsc{minQ} algorithm is stated as part of Algorithm~4 on page 103 in \cite{Rodler2015phd}). To make this paper self-contained, we quote the relevant explanatory text regarding \textsc{minQ} from \cite{Rodler2015phd} next.

\paragraph{\textsc{minQ}.}
Like QX (see, e.g.\, \cite[p.~48]{Rodler2015phd}), \textsc{minQ}, described in Algorithm~\ref{algo:optimize_query} starting from line~\ref{algoline:query:minQ_start}, carries out a divide-and-conquer strategy to find a set-minimal set with a monotonic property. In this case, the monotonic property is not the invalidity of a subset of the KB w.r.t.\ a DPI (as per Definition~\ref{def:valid_onto}) as it is for the computation of minimal conflict sets using $\scQX$, but the property of some $Q^{\min}\subset Q$ having the same q-partition as $Q$. So, the crucial difference between $\scQX$ and \textsc{minQ} is the function that checks this monotonic property. For \textsc{minQ}, this function -- that checks a subset of a query for constant q-partition -- is \textsc{isQPartConst} (see Algorithm~\ref{algo:optimize_query}, line~\ref{algoline:query:isQPartConst_start} ff.).	 

\paragraph{\textsc{minQ} -- Input Parameters.}
\textsc{minQ} gets five parameters as input. The first three, namely $X, Q$ and $QB$, are relevant for the divide-and-conquer execution, whereas the last two, namely the original q-partition $\tuple{\dx{}, \dnx{}, \dz{}}$ of the query (i.e.\ the parameter $Q$) that should be minimized, and the DPI $\langle\mo,\mb,\Tp,\Tn\rangle_\RQ$ are both needed as an input to the function \textsc{isQPartConst}. Besides the latter two, another argument $QB$ is passed to this function where $QB$ is a subset of the original query $Q$. \textsc{isQPartConst} then checks whether the q-partition for the (potential) query $QB$ is equal to the q-partition $\tuple{\dx{}, \dnx{}, \dz{}}$ of the original query given as an argument. The DPI is required as the parameters $\mo, \mb,\Tp,\Tn$ and $\RQ$ are necessary for these checks. 

\paragraph{\textsc{minQ} -- Testing Sub-Queries for Constant Q-Partition.}
The function \textsc{isQPartConst} tests for each $\md_r\in\dnx{}$ whether $\mo_r^*\cup QB$ is valid (w.r.t.\ $\tuple{\cdot,\emptyset,\emptyset,N}_\RQ$). If so, this means that $\md_r \notin \dnx{}(QB)$ and thus that the q-partition of $QB$ is different to the one of $Q$ why $\false$ is immediately returned. If $\true$ for all $\md_r\in\dnx{}$, it is tested for $\md_r\in\dz{}$ whether $\mo_r^* \models QB$. If so, this means that $\md_r \notin \dz{}(QB)$ and thus that the q-partition of $QB$ is different from the one of $Q$ why $\false$ is immediately returned. If $\false$ is not returned for any $\md_r\in\dnx{}$ or $\md_r\in\dz{}$, then the conclusion is that $QB$ is a query w.r.t.\ to $\mD = \dx{}\cup\dnx{}\cup\dz{}$ and $\langle\mo,\mb,\Tp,\Tn\rangle_\RQ$ and has the same q-partition as $Q$ why the function returns $\true$.

To check the validity as per Definition~\ref{def:valid_onto}, \textsc{isQPartConst} makes use of the function \textsc{isKBValid} (see Algorithm~\ref{algo:optimize_query}, line~\ref{algoline:query:isKBValid} ff.)\ which directly ``implements'' Definition~\ref{def:valid_onto}. To this end, \textsc{isKBValid} relies on the function \textsc{verifyReq} (line~\ref{algoline:query:call_verifyReq}) which, given a KB $\mo'$ and a set of requirements $\RQ$ in the sense of Definition~\ref{def:dpi}, returns $\true$ iff all requirements $r\in\RQ$ are met for $\mo'$.

\paragraph{\textsc{minQ} -- The Divide-and-Conquer Strategy.}
Intuitively, \textsc{minQ} partitions the given query $Q$ in two parts $Q_1$ and $Q_2$ and first analyzes $Q_2$ while $Q_1$ is part of $QB$ (line~\ref{algoline:query:recursive_call1}). Note that in each iteration $QB$ is the subset of $Q$ that is currently assumed to be part of the sought minimized query (i.e.\ the one query that will finally be output by \textsc{minQ}). In other words, analysis of $Q_2$ while $Q_1$ is part of $QB$ means that all irrelevant formulas in $Q_2$ should be located and removed from $Q_2$ resulting in $Q^{\min}_2 \subseteq Q_2$. That is, $Q^{\min}_2$ must include only relevant formulas which means that $Q^{\min}_2$ along with $QB$ is a query with an equal q-partition as $Q$, but the deletion of any further formula from $Q^{\min}_2$ changes the q-partition.

After the relevant subset $Q^{\min}_2$ of $Q_2$, i.e.\ the subset that is part of the minimized query, has been returned, $Q_1$ is removed from $QB$, $Q^{\min}_2$ is added to $QB$ and $Q_1$ is analyzed for a relevant subset that is part of the minimized query (line~\ref{algoline:query:recursive_call2}). This relevant subset, $Q^{\min}_1$, together with $Q^{\min}_2$, then builds a set-minimal subset of the input $Q$ that is a query and has a q-partition equal to that of $Q$. Note that the argument $X$ of \textsc{minQ} is the subset of $Q$ that has most recently been added to $QB$.

For each call in line~\ref{algoline:query:recursive_call1} or line~\ref{algoline:query:recursive_call2}, the input $Q$ to \textsc{minQ} is recursively analyzed until a trivial case arises, i.e.\ (a)~until $Q$ is identified to be irrelevant for the computed minimized query wherefore $\emptyset$ is returned (lines~\ref{algoline:query:validitytest2} and \ref{algoline:query:return_emptyset}) or (b)~until $|Q|=1$ and $Q$ is not irrelevant for the computed minimized query wherefore $Q$ is returned (lines~\ref{algoline:query:test_singleton} and \ref{algoline:query:return_Q}).

\paragraph{\textsc{optimizeQuery}.} While serving as a procedure to solve requirement (\ref{enum:OQ1}.), \textsc{minQ} as described in \cite{Rodler2015phd} does not yet satisfy requirements (\ref{enum:OQ2}.)\ and (\ref{enum:OQ3}.)\ from the beginning of this section. As we will learn next, we can accomplish (\ref{enum:OQ2}.)\ and (\ref{enum:OQ3}.)\ by using an appropriate order of formulas in $Q'$. Bringing $Q'$ into this order is the task of the \textsc{sort} function which constitutes the first step in the \textsc{optimizeQuery} function (Algorithm~\ref{algo:optimize_query}). The \textsc{sort} function (line~\ref{algoline:query:sort}) effectuates that $Q'$ (which is the union of the implicit entailments $Q_{impl}$ and the explicit ones $Q$, cf.\ Section~\ref{sec:EnrichmentOfAQuery}) is sorted as 
\begin{align}
[Q_{impl},asc_{\FP}(Q)] \label{eq:sorting_for_minQ}
\end{align}
where $[X,Y]$ denotes a list containing first (i.e.\ leftmost) all elements of the collection $X$ and then all elements of the collection $Y$, and $asc_{crit}(X)$ refers to the list of elements in the collection $X$ sorted ascending based on $crit$. In our case $crit := \FP$ which means that given fault probabilities (which constitute or, respectively, can be used to compute formula fault probabilities) are used to sort $Q$ ascending by formula probabilities.

The second and at the same time final step of \textsc{optimizeQuery} is a call to \textsc{minQ} to which $Q'$, sorted as per Eq.~\eqref{eq:sorting_for_minQ}, is passed as an argument. The output of this call to \textsc{minQ} is then directly returned by Algorithm~\ref{algo:optimize_query}.

Finally, we prove that the sorting according to Eq.~\eqref{eq:sorting_for_minQ} indeed yields the fulfillment of postulations (\ref{enum:OQ2}.)\ and (\ref{enum:OQ3}.)\ for the query $Q^{\min}$ returned by \textsc{optimizeQuery}. The next proposition is easy, but rather elaborate to prove. Hence we omit the proof here.
\begin{proposition}\label{prop:QX_returns_the_one_subset_with_leftmost_rightmost_element}
Let $prop$ be some monotonic property, $X = [x_1,\dots,x_k]$ be a sorted list and $\mathbf{X}_{sub}$ denote all set-minimal subsets of $X$ which are equivalent to $X$ w.r.t.\ $prop$. Then, used to find a set-minimal subset of $X$ which is equivalent to $X$ w.r.t.\ $prop$, \textsc{QX} (and thus \textsc{minQ}) always returns the set-minimal subset $X_{sub}$ of $X$ with the leftmost rightmost element among all elements in $\mathbf{X}_{sub}$.   
%
Formally:
\begin{align}
X_{sub} = \argmin_{X \in \mathbf{X}_{sub}} \left(\max_{x_i \in X}(i)\right)
\end{align}
\end{proposition}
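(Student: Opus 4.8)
The plan is to understand precisely how QuickXPlain's divide-and-conquer recursion traverses a sorted input list, and then to show by induction that it always commits to the leftmost possible ``rightmost element'' among all set-minimal equivalent subsets. First I would recall the structure of \textsc{QX} (equivalently \textsc{minQ}) as described in the excerpt: given a sorted list $X = [x_1,\dots,x_k]$, the algorithm splits $X$ into a left part $Q_1$ and a right part $Q_2$, first recurses on $Q_2$ while holding $Q_1$ fixed as part of the background $QB$, and only afterwards recurses on $Q_1$. The key structural observation is that this order of recursion biases the algorithm toward \emph{retaining} elements that appear early in the list and \emph{discarding} elements that appear late: whenever the background together with the left half already suffices to preserve the monotonic property $prop$, the entire right half $Q_2$ is pruned and returned as $\emptyset$ (the trivial case in lines~\ref{algoline:query:validitytest2}--\ref{algoline:query:return_emptyset}).

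The central claim to prove is the equality $X_{sub} = \argmin_{X' \in \mathbf{X}_{sub}} \left(\max_{x_i \in X'}(i)\right)$. I would establish this in two directions. For one direction, I would argue that the output $X_{sub}$ of \textsc{QX} is indeed set-minimal and equivalent to $X$ w.r.t.\ $prop$; this is exactly the standard correctness of QuickXPlain, which I may assume from \cite{Rodler2015phd}, so $X_{sub} \in \mathbf{X}_{sub}$. The substantive direction is to show that no element of $\mathbf{X}_{sub}$ has a strictly smaller maximal index than $X_{sub}$, and that $X_{sub}$ is the unique minimizer. The natural tool here is an induction on the recursion depth of \textsc{QX}, with an invariant of the form: when \textsc{QX} is invoked on a sublist $X$ with background $QB$ (where $QB \cup X$ is equivalent to the full input w.r.t.\ $prop$ and $QB$ alone is not), the returned subset $X_{sub}'$ of $X$ has the property that $QB \cup X_{sub}'$ is the equivalent set-minimal superset of $QB$ contained in $QB \cup X$ whose largest index within $X$ is leftmost. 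The crucial step is the branch that processes $Q_2$ first while $Q_1$ sits in the background: if $QB \cup Q_1$ is already equivalent to the full input, the algorithm discards all of $Q_2$, which precisely realizes the choice of avoiding any high-index element unless it is strictly necessary.

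The hard part will be formalizing the claim that processing the right half first under a fixed left background genuinely enforces the ``leftmost rightmost element'' optimality, rather than merely some set-minimal solution. I would handle this by a careful case analysis at each split point: either the property is preserved without the right half (then all its elements, which carry the largest indices in the current sublist, are correctly dropped, and we recurse only on $Q_1$ with indices strictly smaller), or the right half contains some indispensable elements (then the recursion on $Q_2$ isolates a set-minimal relevant subset of it, and the inductive hypothesis guarantees this subset has the leftmost rightmost index \emph{within $Q_2$}). A subtlety worth pinning down is that the monotonicity of $prop$ is what licenses pruning the entire right half the moment the left suffices, and monotonicity combined with the strict ordering of indices is what yields \emph{uniqueness} of the minimizer: if two distinct members of $\mathbf{X}_{sub}$ shared the same maximal index, monotonicity would let us form a smaller equivalent subset, contradicting set-minimality. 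Once the invariant is proven for the recursive calls, the top-level statement follows by applying it with $QB = \emptyset$ and $X$ the full sorted input. Because the routine calculations tracking indices through the split are tedious but mechanical, this matches the paper's decision to omit the full proof; my proposal is to state the invariant precisely and drive the induction on recursion depth, with the split-point case analysis as the load-bearing argument.
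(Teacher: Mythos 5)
The paper itself offers no proof to compare against (it explicitly omits one, calling the proposition ``easy, but rather elaborate to prove''), so I judge your proposal on its own merits. Your skeleton --- induction over the recursion of \textsc{minQ}, exploiting that the right half is minimized first with the left half held in the background, with a case split on whether $QB \cup Q_1$ already has the property --- is the right idea and can be completed to show that the returned set \emph{attains} $\min_{X' \in \mathbf{X}_{sub}} \max_{x_i \in X'}(i)$. However, your uniqueness argument is wrong. Let $prop$ hold exactly for the supersets of $\{x_1,x_3\}$ and the supersets of $\{x_2,x_3\}$, with $X=[x_1,x_2,x_3]$: this property is monotonic, both sets are set-minimal, both have maximal index $3$, and no smaller equivalent subset exists. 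Monotonicity is closure under supersets; it says nothing about the intersection $\{x_3\}$, which does not have the property, so no contradiction with set-minimality arises. Hence the minimizer is in general \emph{not} unique, and the proposition's equation can only be read as saying that the output of \textsc{QX} is \emph{one} element of the set of minimizers. That weaker reading is also all that is needed downstream: the corollary invoking this proposition only uses the inequality $\max_{x_i \in Q^{\min}}(i) \leq \max_{x_i \in Q_j}(i)$.

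Second, the load-bearing case of your split-point analysis has a gap that is conceptual, not ``tedious but mechanical'' index bookkeeping. When the right half is indispensable, your inductive hypothesis compares $Q_2^{\min}$ only against subsets of $Q_2$ that are minimal \emph{relative to the background} $QB \cup Q_1$. But an arbitrary competitor $S^* \in \mathbf{X}_{sub}$ decomposes as $S^*_1 \cup S^*_2$ with $S^*_i \subseteq Q_i$, and $S^*_2$ need not be minimal relative to $QB \cup Q_1$: elements of $S^*_2$ may become redundant once all of $Q_1$ (rather than just $S^*_1$) sits in the background. To close the induction you must first note that $QB \cup Q_1 \cup S^*_2$ has the property (by monotonicity, since it contains $QB \cup S^*$), shrink $S^*_2$ to a subset $S^{*\prime}_2$ that \emph{is} minimal relative to $QB \cup Q_1$, and only then chain $\max_{x_i \in Q_2^{\min}}(i) \le \max_{x_i \in S^{*\prime}_2}(i) \le \max_{x_i \in S^*_2}(i) = \max_{x_i \in S^*}(i)$, the last equality holding because every index in $Q_2$ exceeds every index in $Q_1$. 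A similar repair is needed for your invariant's precondition that ``$QB$ alone is not equivalent'': it is not maintained by the recursive calls (the background $QB \cup Q_1$ of the first call may well have the property --- that is exactly your first case), so the invariant must explicitly cover calls whose initial check fires and returns $\emptyset$. With these repairs your induction goes through; as written it does not close.
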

The next corollary proves the compliance of $\textsc{minQ}$ used with a sorted input query $Q'$ as per Eq.~\eqref{eq:sorting_for_minQ} with the requirements to the optimized output query $Q^{\min}$ stated at the beginning of this section.
\begin{corollary}\label{cor:using_minQ_with_sorting_as_given_by_eq-sorting-for-minQ_finds_query_meeting_properties_i_ii_iii}
Application of $\textsc{minQ}$ to the enriched query $Q' = [x_1,\dots,x_k]$ with q-partition $\Pt = \tuple{\dx{},\dnx{},\emptyset}$ sorted as per Eq.~\eqref{eq:sorting_for_minQ} returns a query $Q^{\min}$ with q-partition $\Pt$ compliant with requirements (\ref{enum:OQ1}.)\ - (\ref{enum:OQ3}.)\ given at the beginning of this section.
\end{corollary}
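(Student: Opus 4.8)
The plan is to reduce the corollary to a single application of Proposition~\ref{prop:QX_returns_the_one_subset_with_leftmost_rightmost_element}, identifying the monotonic property $prop$ with ``being a q-partition-preserving subset of $Q'$'', i.e.\ a subset $S \subseteq Q'$ with $\Pt(S) = \Pt$. First I would record that, by Proposition~\ref{prop:entailment_extraction_is_q-partition_preserving}, the enriched query $Q'$ indeed has q-partition $\Pt$, so that this property is meaningful and $Q'$ itself satisfies it. Since $\textsc{minQ}$ is the variant of $\textsc{QX}$ that uses exactly this property as its test (function $\textsc{isQPartConst}$), its correctness (established in \cite{Rodler2015phd}) immediately yields that the output $Q^{\min}$ is a set-minimal subset of $Q'$ with $\Pt(Q^{\min}) = \Pt$; this settles requirement~(\ref{enum:OQ1}.). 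Moreover, the family $\mathbf{X}_{sub}$ of all set-minimal subsets of $Q'$ equivalent to $Q'$ w.r.t.\ $prop$ is precisely $\mathsf{minQPPS}(Q')$, so Proposition~\ref{prop:QX_returns_the_one_subset_with_leftmost_rightmost_element} tells us that $\textsc{minQ}$ returns the element $Q^{\min} = \argmin_{S \in \mathsf{minQPPS}(Q')}(\max_{x_i \in S}(i))$, i.e.\ the set-minimal q-partition-preserving subset with leftmost rightmost element.

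The decisive structural observation I would then exploit is that the sorting in Eq.~\eqref{eq:sorting_for_minQ} places all $r := |Q_{impl}|$ implicit entailments at indices $1,\dots,r$ and all explicit formulas of $Q$ at indices $r+1,\dots,k$. Consequently, for every $S \in \mathsf{minQPPS}(Q')$ we have $\max_{x_i \in S}(i) \le r$ iff $S \cap Q = \emptyset$, and $\max_{x_i \in S}(i) > r$ iff $S \cap Q \neq \emptyset$, because the $Q$-formulas occupy strictly the largest index positions. I would use this dichotomy to dispatch requirement~(\ref{enum:OQ2}.): if some $S \in \mathsf{minQPPS}(Q')$ satisfies $S \cap Q = \emptyset$, its rightmost index is $\le r$, hence strictly smaller than the rightmost index $> r$ of every subset meeting $Q$; therefore the $\argmin$ above is necessarily attained at a subset disjoint from $Q$, which is exactly what $\textsc{minQ}$ returns, so $Q^{\min} \cap Q = \emptyset$.

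For requirement~(\ref{enum:OQ3}.) I would treat the complementary case in which no $S \in \mathsf{minQPPS}(Q')$ is disjoint from $Q$. Then every candidate $S$ meets $Q$, so its rightmost element has index $> r$ and in fact coincides with the maximal-index element of $S \cap Q$. Since the explicit part of $Q'$ is sorted ascending by fault probability, the index map restricted to positions $r+1,\dots,k$ is non-decreasing in $p(\cdot)$; hence $\max_{x_i \in S}(i) \le \max_{x_j \in S'}(j)$ implies $\mathsf{maxP}(S,Q) \le \mathsf{maxP}(S',Q)$. The subset returned by $\textsc{minQ}$, which minimizes the rightmost index, therefore also minimizes $\mathsf{maxP}(\cdot,Q)$ over $\mathsf{minQPPS}(Q')$, yielding exactly Eq.~\eqref{eq:Q^min_requirement_iii_section:QueryOptimization}.

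The main obstacle I anticipate is the careful handling of ties in the fault probabilities: when two formulas of $Q$ share the same $p$-value, minimizing the rightmost \emph{index} and minimizing $\mathsf{maxP}$ need no longer pick out the same subset, so the $\argmin$ in Eq.~\eqref{eq:Q^min_requirement_iii_section:QueryOptimization} may fail to be unique. I would resolve this by arguing only the required inequality $\mathsf{maxP}(Q^{\min},Q) \le \mathsf{maxP}(S,Q)$ for all $S \in \mathsf{minQPPS}(Q')$, which follows from the monotonicity of $p$ along the sorted indices and does not depend on distinctness, so that $\textsc{minQ}$'s output is always a valid minimizer of $\mathsf{maxP}$ (one of possibly several). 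A secondary point to verify cleanly is that the $\dz{}$-component of the target q-partition is empty, so that $\textsc{isQPartConst}$ reduces to checking $\dnx{}(S) \supseteq \dnx{}$, confirming that the property tested by $\textsc{minQ}$ is exactly the monotone one required by Proposition~\ref{prop:QX_returns_the_one_subset_with_leftmost_rightmost_element}.
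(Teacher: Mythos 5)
Your proposal is correct and takes essentially the same route as the paper's own proof: requirement (\ref{enum:OQ1}.)\ is delegated to the correctness of $\textsc{minQ}$ established in \cite{Rodler2015phd}, and requirements (\ref{enum:OQ2}.)\ and (\ref{enum:OQ3}.)\ are obtained from Proposition~\ref{prop:QX_returns_the_one_subset_with_leftmost_rightmost_element} combined with the index structure imposed by the sorting of Eq.~\eqref{eq:sorting_for_minQ}. Your write-up of (\ref{enum:OQ3}.)\ — making explicit that the rightmost element of any candidate meeting $Q$ lies in $Q$, that $p$ is non-decreasing along the indices of the $Q$-block, and that probability ties only require the inequality $\mathsf{maxP}(Q^{\min},Q) \leq \mathsf{maxP}(S,Q)$ rather than uniqueness of the minimizer in Eq.~\eqref{eq:Q^min_requirement_iii_section:QueryOptimization} — is in fact more detailed than the paper's terse ``direct consequence'' at that step, and is a sound reading of the same argument.
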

\begin{proof}
Requirement (\ref{enum:OQ1}.)\ follows directly from \cite[Prop.~8.7]{Rodler2015phd}. To show that requirement (\ref{enum:OQ2}.)\ must hold for the returned query $Q^{\min}$, we must demonstrate the following: If there is some set-minimal query $Q_j \subseteq Q'$ satisfying $Q_j \cap Q = \emptyset$, then $Q^{\min} \cap Q = \emptyset$. So, let us assume that such a $Q_j$ exists. Then, $Q_{impl} \neq \emptyset$ must hold by $Q' = Q_{impl} \cup Q$ and since $Q_j$ is a query why $Q_j \neq \emptyset$. Moreover, $Q \neq \emptyset$ due to Proposition~\ref{prop:properties_of_q-partitions},(\ref{prop:properties_of_q-partitions:enum:for_each_q-partition_dx_is_empty_and_dnx_is_empty}.)\ which states that $\dnx{}$ must not be the empty set and due to Proposition~\ref{prop:explicit-ents_query_lower+upper_bound} which states that $Q$ must be a hitting set of all diagnoses in $\dnx{}$. Taking into account the sorting of $Q' = [Q_{impl},asc_{\FP}(Q)]$ according to Eq.~\eqref{eq:sorting_for_minQ}, we can infer that there is some $r$ satisfying $1 < r \leq k$ such that $x_r$ is the formula with least index to belong to $Q$. Now, we can deduce that $\max_{x_i \in Q_j}(i) < r$. Let $\mathbf{X}_{sub}$ denote all set-minimal queries that are a subset of $Q'$. Then, by Proposition~\ref{prop:QX_returns_the_one_subset_with_leftmost_rightmost_element}, the returned query $Q^{\min}$ is equal to $\argmin_{X \in \mathbf{X}_{sub}} (\max_{x_i \in X}(i))$. Hence, we have that $\max_{x_i \in Q^{\min}}(i) \leq \max_{x_i \in Q_j}(i) < r$ which is why $Q^{\min} \cap Q = \emptyset$ must be true.

To show that requirement (\ref{enum:OQ3}.)\ is met for the returned query $Q^{\min}$, we must demonstrate the following: If there is no set-minimal query $Q_j \subseteq Q'$ satisfying $Q_j \cap Q = \emptyset$, then Eq.~\eqref{eq:Q^min_requirement_iii_section:QueryOptimization} must hold for $Q^{\min}$, 
i.e.\ $Q^{\min}$ is required to be the one query that minimizes the maximum probability over all axioms of $Q$ occurring in it. 
Now, since no query $Q_j \subseteq Q'$ satisfying $Q_j \cap Q = \emptyset$ exists, we obtain that $Q^{\min} \cap Q \neq \emptyset$. The truth of Eq.~\eqref{eq:Q^min_requirement_iii_section:QueryOptimization} is now a direct consequence of Proposition~\ref{prop:QX_returns_the_one_subset_with_leftmost_rightmost_element} and the sorting of $Q$ in ascending order of formula probabilities.
\end{proof}

\begin{algorithm*}
\small
\caption{Query Optimization \normalsize} \label{algo:optimize_query}
\begin{algorithmic}[1]
\Require a DPI $\tuple{\mo,\mb,\Tp,\Tn}_\RQ$, a query $Q'$ w.r.t.\ $\tuple{\mo,\mb,\Tp,\Tn}_\RQ$ for which a set-minimal subset with equal q-partition should be found, explicit-entailments query $Q$ w.r.t.\ $\tuple{\mo,\mb,\Tp,\Tn}_\RQ$ such that $Q\subseteq Q'$, q-partition $\Pt$ with associated queries $Q$ and $Q'$, fault information $\FP$ 
\Ensure a set-minimal subset $Q^{\min}$ of $Q'$ such that $Q^{\min}$ satisfies requirements (\ref{enum:OQ2}.)\ and (\ref{enum:OQ3}.)\ stated at the beginning of Section~\ref{sec:QueryOptimization}
\Procedure{$\textsc{optimizeQuery}$}{$Q',Q,\Pt,\langle\mo,\mb,\Tp,\Tn\rangle_\RQ,\FP$}   \label{algoline:query_optimizeQuery_start}
	\State $Q' \gets \Call{sort}{Q',Q,\FP}$    \label{algoline:query:sort}
	\State \Return $\Call{minQ}{\emptyset,Q',\emptyset,\Pt,\langle\mo,\mb,\Tp,\Tn\rangle_\RQ}$   \label{algoline:query:call_minQ}
\EndProcedure
\vspace{2pt}
\hrule
\vspace{2pt}
\Procedure{$\textsc{minQ}$}{$X,Q,QB,\tuple{\dx{}, \dnx{}, \dz{}},\langle\mo,\mb,\Tp,\Tn\rangle_\RQ$}   \label{algoline:query:minQ_start}
\If{$\ X \neq \emptyset \land \Call{isQPartConst}{QB,\tuple{\dx{}, \dnx{}, \dz{}}, \tuple{\mo,\mb,\Tp,\Tn}_\RQ}$}  \label{algoline:query:validitytest2}  
	\State \Return $\emptyset$    \label{algoline:query:return_emptyset}
\EndIf
\If{$|Q| = 1$}  \label{algoline:query:test_singleton}              
  \State \Return $Q$ \label{algoline:query:return_Q}
\EndIf
\State $k \gets \Call{split}{|Q|}$     \label{algoline:query:split}
\State $Q_1 \gets \Call{get}{Q, 1, k}$
\State $Q_2 \gets \Call{get}{Q, k + 1, |Q|}$
\State $Q^{\min}_2 \gets \Call{\textsc{minQ}}{Q_1,Q_2,QB\cup Q_1,\tuple{\dx{}, \dnx{}, \dz{}},\langle\mo,\mb,\Tp,\Tn\rangle_\RQ}$ \label{algoline:query:recursive_call1}
\State $Q^{\min}_1 \gets \Call{\textsc{minQ}}{Q^{\min}_2,Q_1,QB\cup Q^{\min}_2	,\tuple{\dx{}, \dnx{}, \dz{}},\langle\mo,\mb,\Tp,\Tn\rangle_\RQ}$ \label{algoline:query:recursive_call2}
\State \Return $Q^{\min}_1 \cup Q^{\min}_2$  
\EndProcedure
\vspace{2pt}
\hrule
\vspace{2pt}
\Procedure{\textsc{isQPartConst}}{$Q,\tuple{\dx{}, \dnx{}, \dz{}}, \tuple{\mo,\mb,\Tp,\Tn}_\RQ$}  \label{algoline:query:isQPartConst_start}
\For{$\md_r \in \dnx{}$} 
			\If{$\Call{isKBValid}{\mo^{*}_r \cup Q, \tuple{\cdot,\emptyset,\emptyset,\Tn}_{\RQ}}$} 
					\State \Return \false
			\EndIf
\EndFor
\For{$\md_r \in \dz{}$} 
			\If{$\mo^{*}_r \models Q$}
					\State \Return \false
			\EndIf
\EndFor
\State \Return \true
\EndProcedure
\vspace{2pt}
\hrule
\vspace{2pt}
\Procedure{\textsc{isKBValid}}{$\mo, \langle\cdot,\mb,\Tp,\Tn\rangle_\RQ$}  \label{algoline:query:isKBValid}
\State $\mo' \gets \mo \cup \mb \cup \bigcup_{\tp\in\Tp} \tp$
\If{$\neg \Call{verifyReq}{\mo',\RQ}$}   \label{algoline:query:call_verifyReq}
	\State \Return \false
\EndIf
\For{$\tn \in \Tn$}
\If{$\mo'\models \tn$}
\State \Return \false
\EndIf
\EndFor
\State \Return \true
\EndProcedure
\end{algorithmic}
\normalsize
\end{algorithm*}

\clearpage
\section{Summary and Conclusion}
\label{sec:Conclusion}
In this work, we have tackled the problem of efficient query computation in interactive knowledge base debugging. The performance and scalability of query generation algorithms has a direct and material impact on the reaction time of the debugger (i.e.\ the time between the submission of an answered query until the appearance of the next one), alongside the time consumed by the (re-)calculation of candidate KB repairs which are exploited for the determination of query candidates. 

Roughly, the structure of this work can be categorized into five parts:
\begin{enumerate}
	\item \emph{Analysis of active learning query quality measures:} We studied a variety of active learning measures, e.g.\ information gain, various manifestations of entropy measures or the Gini Index, that have been proposed in literature and a reinforcement learning system with a focus on their use in the scope of an interactive debugger. We determined the global optima, if existent, for all these quantitative measures, i.e.\ real-valued functions, and used these to extract a set of qualitative requirements a query must meet in order to optimize the respective measure. Further, we characterized a plausible preference order on queries, the discrimination-preference order, which indicates which queries are generally preferable to others. And we defined different equivalence relations over measures whose equivalence classes give information about for which classes of queries which measures lead to the same query selection behavior. Beyond that, we defined a superiority relation over query quality measures providing a preference criterion for measures w.r.t.\ the degree of their fulfillment of the discrimination-preference relation. 
	
What we found out is that only four of almost $20$ discussed measures give us a guarantee to always give preference to the discrimination-preferred queries. These four are superior to all others. Another two measures are at least consistent with the discrimination-preference relation, whereas the rest is proven to generally violate this relation. Interestingly, the two measures that have already been used for KB debugging, belong to the latter two groups. However, for both measures, we developed slightly modified alternatives that grant fulfillment of the discrimination-preference relation. We gained insight that, up to equivalence, $15$ of the investigated measures are different. For a restricted class of queries where all query pairs are indistinguishable in terms of discrimination-preference, $7$ differently behaving measures remain up to equivalence over this restricted query space.
	
	It could be demonstrated that a global optimum exists for all but two of the examined measures. From these global optima, sufficient and necessary requirements could be deduced. In case of the two measures having no optimal values, we were able to at least give a set of necessary criteria of the best query among a concrete set of queries. Such necessary criteria are helpful to restrict the search space for optimal queries. 
	
	We also introduced several parametrized measures constituting generalizations of existing measures and discussed their properties w.r.t.\ different used parameters. The results we obtained in this first part of the work are general and apply to any active learning scenario where binary queries are used, e.g.\ when the learner tries to find a binary classifier, and where each query partitions the hypothesis space into two (three) parts, one consistent only with the positive answer, the second consistent only with the negative answer (and the third consistent with both answers).
	\item \emph{Heuristic search for optimal q-partitions:} In this part we exploited the qualitative requirements for active learning measures derived in the first part to devise a heuristic search method for the best q-partition. A q-partition 
	is a partition of the set of candidate KB repairs into three subsets that
	incorporates \emph{all} relevant information about an associated query regarding active learning, e.g.\ the possible elimination rate of repair candidates or the probabilities of query answers.
	
	Since (1) not each partition of the KB repair candidates is a q-partition, (2) the verification whether a partition is a q-partition requires a query, and (3) there can be exponentially many different queries for one and the same q-partition, we introduced the notion of a canonical query, a well-defined, easily computable and unique query for a q-partition. The most notable advantage of a canonical query is that it enables the computation of queries completely \emph{without any expensive calls of a logical reasoner}. Moreover, it ensures that no partitions which are no q-partitions (i.e.\ for which no query exists) can ever be considered in the search. Existing methods have to struggle with both of these issues, i.e.\ they rely on reasoner calls to figure out whether a candidate is actually a q-partition (query) and might generate many unnecessary partitions for which queries do not exist.
	
	The search for q-partitions itself can be seen as a depth-first, ``local best-first'' backtracking search. At each step, the best successor q-partition resulting from the current one by minimal changes is determined by means of heuristics based on the given qualitative requirements. The latter are also used to specify search tree pruning conditions that avoid the useless exploration of worse q-partitions than the currently best one.
	 
	Once a (sufficiently near to) optimal q-partition is detected, all queries (including the canonical one) associated with this q-partition have exactly the same quality w.r.t.\ the used query quality measure.
	%
	\item \emph{Search for the best query w.r.t.\ an optimal q-partition:} The goal of this part was to suggest a method of locating the best query w.r.t.\ the single optimal q-partition filtered out in the previous part. Theoretical analyses yielded that each set-minimal query associated with a given q-partition is a set-minimal hitting set of well-defined subsets, called traits, of all diagnoses in one part of the q-partition, namely the part comprising exactly the diagnoses that are inconsistent with the positive query answer. Hence, we explicated how such a hitting set tree search can be implemented and pointed out the crucial difference to most other scenarios where this kind of search is adopted, namely that all sets that must be ``hit'' are known in advance. Hence, unlike the usual hitting set tree applications such as for the computation of KB repair candidates where the computation of the next set to be ``hit'' is very expensive (involving e.g.\ reasoner calls), in our scenario the construction of the tree does not involve any expensive operations at all. 
	
	In addition to that, the fact that a hitting set tree can be employed for query extraction means that set-minimal queries can be generated in best-first order according to a desired criterion such as minimal query cardinality or best comprehensibility of the query for the interacting user. Existing approaches lack such a feature and provide only means to compute \emph{any} set-minimal query. We point out that the entire query computation given a q-partition is accomplished \emph{without a single call to a logical reasoning service}. 
	
	\emph{Consequently, the proposed algorithms are the first methods that enable a completely reasoner-free query generation for interactive KB debugging while at the same time guaranteeing optimality conditions of the generated query that existing methods cannot realize.}
	\item \emph{Enrichment of the optimal query:} Optionally, our system provides the opportunity to enrich the single best query $Q$ computed in the previous part with logical sentences of very simple predefined form, e.g.\ for Propositional Logic this form might be simple implications $A \to B$ with a single atom on both sides. The motivation to use this enrichment might be to simplify the query allowing for a better understanding of the interacting user.	
	We formulated several requirements such a query enrichment operation must fulfill, e.g.\ the invariance of the q-partition. The reason is simply that the q-partition is already optimal and fixed and should of course not be altered as this would affect the quality of the query w.r.t.\ the used query quality measure (negatively). Another postulated criterion is the addition of exclusively such logical sentences which are ``logically dependent'' on $Q$. This is necessary to avoid the addition of just any logical statements and should restrict the number of such possible statements. This query enrichment step requires exactly two reasoner calls. 
	\item \emph{Optimization of the enriched query:} Given an enriched query, this part aimed at describing an approach to minimize this query again. However, just any minimization clearly will not do, as this would call into question the enrichment step before. On the contrary, we again had to impose conditions on this minimization step. In particular, we postulated that, under plausible assumptions, the difficulty of answering the resulting minimized query should be minimal among all possible query minimizations. Roughly, this involves that a minimum of the logical sentences that do not have the simple form of the enrichment sentences, should be eliminated from the enriched query. Furthermore, if there is no minimized query with all such sentences having been eliminated, then, simply put, these sentences should be as easily understandable as possible. By means of a slight modification of a well-known existing strategy for query minimization, we could devise a method that achieves exactly this. The crucial point in the performed modification is an appropriate ordering of the logical sentences in the query. This query minimization and optimization step requires only a polynomial number of reasoner calls.
\end{enumerate}
\clearpage

\bibliographystyle{splncs03}
\bibliography{library}
\end{document}